\pdfoutput=1
\documentclass[11pt]{article}

\usepackage[utf8]{inputenc}
\usepackage[T1]{fontenc}
\usepackage[margin=1in]{geometry}
\usepackage{url}
\usepackage{booktabs}
\usepackage{amsfonts}
\usepackage{nicefrac}
\usepackage{microtype}
\usepackage{natbib}
\usepackage{algorithm}
\usepackage{algorithmic}
\usepackage{wrapfig}
\usepackage{tikz}
\usepackage{amsmath,amssymb}
\usetikzlibrary{arrows.meta,decorations.markings,calc,angles,quotes,positioning,
                shapes.geometric,backgrounds}

\graphicspath{{figures/}{./}}

\definecolor{teachercol}{RGB}{205,40,40}
\definecolor{student1col}{RGB}{40,90,200}
\definecolor{student2col}{RGB}{230,140,20}
\definecolor{leapcol}{RGB}{110,110,115}
\definecolor{phase1col}{RGB}{45,130,90}
\definecolor{phase2col}{RGB}{150,60,150}

\def\figscale{5.75} 
\def\refscale{5.4}
\pgfmathsetmacro{\sR}{\figscale/\refscale}

\pgfmathsetmacro{\fSm}{9.0*\sR}
\pgfmathsetmacro{\fSmLs}{1.2*\fSm}
\pgfmathsetmacro{\fFt}{8.0*\sR}
\pgfmathsetmacro{\fFtLs}{1.2*\fFt}
\pgfmathsetmacro{\fSc}{7.2*\sR}
\pgfmathsetmacro{\fScLs}{1.2*\fSc}

\newcommand{\figfsmall}{\fontsize{\fSm pt}{\fSmLs pt}\selectfont}
\newcommand{\figffoot}{\fontsize{\fFt pt}{\fFtLs pt}\selectfont}
\newcommand{\figfscript}{\fontsize{\fSc pt}{\fScLs pt}\selectfont}

\pgfmathsetmacro{\lwTraj}{1.4*\sR}
\pgfmathsetmacro{\lwTeacher}{1.5*\sR}
\pgfmathsetmacro{\lwLeap}{0.6*\sR}
\pgfmathsetmacro{\lwBox}{0.4*\sR}

\pgfmathsetmacro{\isepReg}{2.0*\sR}
\pgfmathsetmacro{\isepSm}{1.5*\sR}
\pgfmathsetmacro{\isepXSm}{1.2*\sR}
\pgfmathsetmacro{\saddleSep}{1.6*\sR}
\pgfmathsetmacro{\starSep}{1.2*\sR}
\pgfmathsetmacro{\dotSep}{0.9*\sR}
\pgfmathsetmacro{\legSaddleSep}{1.0*\sR}

\usepackage{macro_wth}

\title{Gradient Descent on Two ReLU Neurons: \\ Global Landscape and Bifurcation Dynamics}
\newcommand{\equalcontrib}{\textsuperscript{*}}
\author{%
  \begin{tabular}{c@{\hspace{2em}}c}
    Binghua Li\equalcontrib & Mengzhe Li\equalcontrib\\
    University of California, Berkeley & \'Ecole Normale Sup\'erieure Paris-Saclay\\
    \small\texttt{binghuali@berkeley.edu} & \small\texttt{mengzhe.li\_mengzhe@ens-paris-saclay.fr}\\[2ex]
    Denny Wu & Tianhao Wang\\
    New York University \& Flatiron Institute & University of California, San Diego\\
    \small\texttt{dwu@flatironinstitute.org} & \small\texttt{tianhaowang@ucsd.edu}
  \end{tabular}
}
\date{}

\begin{document}

\maketitle
\begingroup
\renewcommand{\thefootnote}{\fnsymbol{footnote}}
\footnotetext[1]{Equal contribution.}
\endgroup

\begin{abstract}
Understanding how gradient descent learns features remains a central challenge in neural network theory. We study this question in a minimal multi-index setting that already exhibits rich multi-phase gradient dynamics: a well-specified two-neuron ReLU teacher-student model under isotropic Gaussian inputs, trained by population gradient flow from small initialization. We show that the dynamics are organized by two directions: the ``easy'' bisector direction, which carries the leading signal, and the ``hard'' splitting direction, which governs specialization. To characterize the loss structure and learning behavior, we analyze the population squared-loss landscape and show that every critical point with both student weights nonzero is either a global minimum or a saddle. We then track the gradient-flow trajectory, showing that student neurons first collapse toward a bisector saddle before escaping and specializing to teacher neurons. Our results provide both a landscape and dynamics account of the multi-phase symmetry-breaking behavior that arises in a simple multi-index model and under standard algorithmic and architectural choices.
\end{abstract}

\section{Introduction}

A central promise of gradient-based training of neural networks is \textit{feature learning}: rather than fitting the target function on a fixed feature space, neural networks can extract features by aligning their parameters with the latent structure of the data or target function. A recent line of theoretical works has made this intuition precise by analyzing teacher-student models under Gaussian inputs, where the target function is a \textit{multi-index model} \citep{abbe2022merged,damian2022neural,ba2022high} which depends on a small number of input directions (e.g., a shallow neural network with $O_d(1)$ neurons). In these settings, it has been demonstrated that gradient descent training of a shallow ``student'' neural network discovers the relevant subspace, and achieves a sample complexity superior to fixed features methods (or neural networks in the ``lazy'' regime \citep{chizat2019lazy}).  

However, most existing theory becomes tractable only after imposing simplifying assumptions that deviate from standard algorithmic and architectural choices. Examples include $(i)$ near-orthogonal teacher neurons \citep{li2020learning,ren2024learning}, $(ii)$ local convergence conditions \citep{zhou2021local,zhu2025gradient}, $(iii)$ activation functions with high \textit{information exponent} \citep{oko2024learning,csimcsek2024learning,ren2025emergence}, and $(iv)$ modified algorithms such as loss preprocessing \citep{damian2022neural,zhang2025neural} and spherical/Stiefel gradients \citep{arous2021online,arous2024stochastic,abbe2023sgd}. 
While these assumptions enable clean analyses, they do not capture the complex \textit{multi-phase} learning dynamics present in standard ReLU networks \citep{boursier2022gradient,wang2023understanding,min2025neural}.
A notable multi-index example exhibiting such nontrivial dynamics is the ``well-specified'' setting in which both the student and teacher are two-layer neural networks with matching activation functions and more than one neuron: for commonly used nonlinearities such as ReLU, the optimization trajectory exhibits symmetry-breaking and bifurcation behavior, with neurons initially moving toward an ``average'' direction before specializing to different teacher directions. While this plateau-specialization phenomenon has appeared in the statistical physics literature \citep{saad1995line,goldt2019dynamics}, it is not captured by recent analyses that rely on the above simplifications. To our knowledge, a rigorous and quantitative understanding of the loss landscape and multi-phase training dynamics remains incomplete and largely open.

\subsection{Our contributions}

We study arguably the simplest multi-index model setting that exhibits this bifurcation phenomenon: learning a two-neuron ReLU network under isotropic Gaussian inputs via population gradient flow, starting from small initialization. Specifically, the teacher model is
\begin{equation}
f^*(x)=\mathrm{ReLU}(\inner{v_1}{x})+\mathrm{ReLU}(\inner{v_2}{x}), \qquad x\sim \cN(0,I_d),
\label{eq:target}
\end{equation}
where $\mathrm{ReLU}(z) = \max\{z,0\}$, $\|v_1\|_2=\|v_2\|_2=1$ and $\angle(v_1,v_2)=\alpha\in(0,\frac{\pi}{2}]$, and the student has the same architecture with trainable weights $w_1,w_2$.

\begin{wrapfigure}{r}{0.44\textwidth} 
\vspace{-5.2mm}
\centering
\begin{tikzpicture}[
    scale=\figscale,
    >=Stealth,
    line cap=round,
    line join=round,
    every node/.append style={font=\figfsmall},
    traj1/.style={student1col, line width=\lwTraj pt},
    traj2/.style={student2col, line width=\lwTraj pt},
    arr1/.style={traj1, postaction={decorate},
        decoration={markings,
            mark=at position 0.34 with {\arrow[traj1]{Stealth}},
            mark=at position 0.84 with {\arrow[traj1]{Stealth}}}},
    arr2/.style={traj2, postaction={decorate},
        decoration={markings,
            mark=at position 0.34 with {\arrow[traj2]{Stealth}},
            mark=at position 0.84 with {\arrow[traj2]{Stealth}}}},
]
 
\def\angA{60}
\def\halfA{30}
\def\rs{0.62}
\def\Rteacher{1.0}
\pgfmathsetmacro{\vxtip}{cos(\halfA)*\Rteacher}
 
\def\boxL{-0.04}
\def\boxR{0.96}
\def\boxBot{-0.50}
\def\boxTop{0.50}
 
\coordinate (O)  at (0,0);
\coordinate (v1) at (\halfA:\Rteacher);
\coordinate (v2) at (-\halfA:\Rteacher);
\coordinate (ws) at (\rs,0);
\coordinate (W1) at (0.62, 0.010);
\coordinate (W2) at (0.62,-0.010);
 
\begin{scope}[on background layer]
    \fill[phase1col!9, rounded corners=2pt]
        (\boxL,\boxBot) rectangle (\rs,\boxTop);
    \fill[phase2col!9, rounded corners=2pt]
        (\rs,\boxBot) rectangle (\boxR,\boxTop);
    \draw[phase1col!28, line width=\lwBox pt, rounded corners=2pt]
        (\boxL,\boxBot) rectangle (\rs,\boxTop);
    \draw[phase2col!28, line width=\lwBox pt, rounded corners=2pt]
        (\rs,\boxBot) rectangle (\boxR,\boxTop);
\end{scope}
 
\draw[leapcol, dashed, line width=\lwLeap pt] (0,0) -- (\boxR,0);
\node[leapcol, anchor=south west, inner sep=\isepXSm pt]
    at (\boxR+0.012, 0.005) {$v_1 + v_2$};
\node[leapcol, font=\figffoot\bfseries, anchor=north west, inner sep=\isepXSm pt]
    at (\boxR+0.012,-0.005) {(leap-1)};
 
\draw[leapcol, dashed, line width=\lwLeap pt]
    (\vxtip,\boxBot) -- (\vxtip,\boxTop);
\node[leapcol, anchor=south, inner sep=\isepReg pt]
    at (\vxtip, 0.510) {$v_1 - v_2$\,{\figffoot\bfseries(leap-2)}};
 
\draw[->, teachercol!78, line width=\lwTeacher pt, dotted] (O) -- (v1);
\draw[->, teachercol!78, line width=\lwTeacher pt, dotted] (O) -- (v2);
\node[star, star points=5, fill=teachercol, draw=teachercol,
      inner sep=\starSep pt, scale=0.55] at (v1) {};
\node[star, star points=5, fill=teachercol, draw=teachercol,
      inner sep=\starSep pt, scale=0.55] at (v2) {};
\node[teachercol, anchor=north west, inner sep=\isepSm pt]
    at ($(v1)+(0.020,-0.005)$) {$v_1$};
\node[teachercol, anchor=south west, inner sep=\isepSm pt]
    at ($(v2)+(0.020, 0.005)$) {$v_2$};
 
\node[regular polygon, regular polygon sides=3, fill=black, draw=black,
      inner sep=\saddleSep pt] at (ws) {};
\node[anchor=north, font=\figffoot, inner sep=\isepReg pt]
      at ($(ws)+(0,-0.012)$) {$w_s$};
 
\node[circle, fill=black, inner sep=\dotSep pt] at (O) {};
\node[anchor=south east, font=\figffoot, inner sep=\isepSm pt]
      at ($(O)+(-0.005,0.010)$) {$0$};
 
\draw[arr1] (0,0)
    .. controls (0.20, 0.000) and (0.45, 0.0015) .. (W1)
    .. controls (0.83, 0.0205) and (0.84, 0.30)  .. (v1);
 
\draw[arr2] (0,0)
    .. controls (0.20, 0.000) and (0.45,-0.0015) .. (W2)
    .. controls (0.83,-0.0205) and (0.84,-0.30)  .. (v2);
 
\node[phase1col, font=\figfsmall\bfseries, anchor=south]
      at ({(\boxL+\rs)/2}, -0.405) {Phase 1};
\node[phase1col, font=\figffoot, anchor=south]
      at ({(\boxL+\rs)/2}, -0.470) {(collapse)};
 
\node[phase2col, font=\figfsmall\bfseries, anchor=south]
      at ({(\rs+\vxtip)/2}, -0.405) {Phase 2};
\node[phase2col, font=\figffoot, anchor=south]
      at ({(\rs+\vxtip)/2}, -0.470) {(split)};
 
\begin{scope}[shift={(-0.07,-0.55)}]
    \draw[traj1] (0.02,0) -- (0.08,0);
    \node[font=\figfscript, anchor=west, inner sep=\isepXSm pt]
        at (0.08,0) {student $w_1$};
    \draw[traj2] (0.35,0) -- (0.41,0);
    \node[font=\figfscript, anchor=west, inner sep=\isepXSm pt]
        at (0.41,0) {student $w_2$};
    \draw[->, teachercol, line width=\lwTeacher pt, dotted] (0.69,0) -- (0.77,0);
    \node[font=\figfscript, anchor=west, inner sep=\isepXSm pt]
        at (0.77,0) {teacher $v_i$};
    \node[regular polygon, regular polygon sides=3, fill=black,
          inner sep=\legSaddleSep pt] at (1.03,0) {};
    \node[font=\figfscript, anchor=west, inner sep=\isepSm pt]
        at (1.05,0) {saddle $w_s$};
\end{scope}
 
\end{tikzpicture}
\caption{\small Illustration of bifurcation dynamics. }
\label{fig:intro}
\vspace{-5mm}
\end{wrapfigure}

As we will see, the geometry of this model is organized by two directions in the teacher plane: the bisector ($v_1+v_2$), which carries the dominant signal, and the difference ($v_1-v_2$), which governs specialization. In the language of leap complexity for Gaussian multi-index models \citep{abbe2023sgd,bietti2023learning}, the target function \eqref{eq:target} has \textit{leap exponent} $2$, where the bisector and difference vectors specify the ``easy'' leap-1 direction and the ``hard'' leap-2 direction, respectively. 
This suggests a two-stage picture for gradient-based learning: the student neurons are first attracted toward the leap-1 bisector, and only later separate toward the individual teacher directions along the leap-2 direction. Our goal is to make this collapse-then-splitting picture quantitative by studying both the population squared loss landscape and the gradient-flow dynamics from small initialization. Specifically,

\begin{itemize}[leftmargin=*,topsep=0mm,itemsep=1mm]

    \item \textbf{Landscape analysis.} \Cref{sec:landscape} studies the loss geometry and the critical-point structure for both the correlation loss and the MSE loss. We reduce the stationarity conditions to a three-dimensional problem and isolate the remaining in-plane and out-of-plane critical families. 
    We establish that every critical point of the population loss with both student weights nonzero is either a global minimum or a saddle, so none of these critical points is a spurious local minimum. This resolves a two-neuron conjecture from \citet{tian2017analytical} in the acute angle ($\alpha\le\pi/2$) setting.

    \item \textbf{Small-initialization dynamics.} \Cref{sec:training_dynamics} studies the gradient-flow trajectory from small initialization and characterizes the symmetry breaking and specialization behavior --- see Figure~\ref{fig:intro}. We show that for sufficiently small initialization, both student neurons stay close to the bisector for fixed finite time and approach the symmetric saddle. We then analyze escape from that saddle, including global convergence to a minimum for typical initializations, as well as a quantitative escape-time bound that is logarithmic in the ambient dimension $d$ with a structured initialization.
\end{itemize}

\subsection{Further related work}

\paragraph{Learning multi-index models.} Recent works have shown shallow neural networks can efficiently learn low-dimensional latent structure via gradient descent \citep{abbe2022merged,damian2022neural,ba2022high,bietti2022learning,glasgow2023sgd,dandi2024two}. Specifically, the sample complexity has been shown to be superior to that of kernel methods and to be governed by the \textit{information exponent} \citep{dudeja2018learning,arous2021online} for single-index models and the \textit{leap exponent} \citep{abbe2023sgd,bietti2023learning} for multi-index models. 
Our target function \eqref{eq:target} is an example of a leap-2 multi-index model, but, crucially, it also contains a leap-1 subspace, which complicates the SGD dynamics. In fact, most prior works studying gradient-based feature learning either directly assume link functions with information exponent 2 \citep{li2020learning,martin2024impact,ren2024learning,arous2025learning} or higher \citep{oko2024learning,csimcsek2024learning,ren2025emergence}, or remove the low-leap components through algorithmic modifications such as deflation / projection \citep{ge2021understanding,abbe2023sgd} and loss preprocessing \citep{damian2022neural,zhang2025neural}. Consequently, these analyses do not fully characterize the learning dynamics of two ReLU neurons with standard gradient-based training.

\paragraph{Training dynamics of ReLU networks.} Restricting to shallow neural networks with ReLU activations, prior works have studied landscape properties such as the existence of spurious minima \citep{safran2018spurious,safran2021effects}; the concurrent work of \cite{huang2026sharp} gives a summary-statistics characterization of the population minima and proposes an ansatz to simplify the fixed-point equations. 
For gradient-based training dynamics, while the single-neuron case is relatively well-understood \citep{tian2017analytical,soltanolkotabi2017learning,yehudai2020learning,xu2023over}, when the target network contains more than one neuron, existing dynamical analyses either assume local convergence, thereby bypassing the bifurcation behavior \citep{zhou2021local,zhu2025gradient}, or require exponential width to invoke mean-field global convergence \citep{chizat2022sparse,akiyama2021learnability}. 

The two-phase trajectory illustrated in Figure~\ref{fig:intro} also relates to the ``plateau'' phenomenon observed in the statistical mechanics literature \citep{biehl1996transient,fukumizu2000local}. Specifically, the specialization phenomenon has been investigated in \cite{goldt2019dynamics,oostwal2021hidden} using the ODE analysis originated from \cite{saad1995line,riegler1995line}. 
However, to our knowledge, a complete understanding of the landscape and rigorous characterization of the learning dynamics have not yet been established even in the two-neuron setting \eqref{eq:target} studied here. 

\section{Preliminary}\label{sec:preliminary}

\paragraph{Notation.}
For a nonzero vector $w\in\RR^d$, we denote its direction by $\bar w\coloneqq w/\|w\|_2$, and write orthogonal projection onto the subspace perpendicular to $w$ as $P_w^\perp\coloneqq I-\bar w\bar w^\top\in\RR^{d\times d}$. 
The angle between two nonzero vectors $u,v\in\RR^d$ is
$
\angle(u,v) \coloneqq \arccos({\inner{u}{v}}/{(\|u\|_2\,\|v\|_2)}).
$
For vectors $a_1,\ldots,a_k$, $\operatorname{span}\{a_1,\ldots,a_k\}$ denotes the linear subspace generated by them.
We use $\phi(z) \coloneqq \mathrm{ReLU}(z) = \max\{0,z\}$ for the activation function. 

\subsection{Problem setup}

We use the teacher $f^*$ defined in \eqref{eq:target}. The student has trainable weights $w_1,w_2\in\RR^d$ and output
\begin{align}\label{eq:two_relu_student}
f(x) &= \phi(\inner{w_1}{x}) + \phi(\inner{w_2}{x}).
\end{align}
The student is trained by gradient flow on the population squared loss under isotropic Gaussian inputs:
\begin{equation}\label{eq:population_squared_loss_and_gf}
L(w_1,w_2) \coloneqq \frac{1}{2}\,\EE_{x\sim\cN(0,I_d)}\bigl[(f(x)-f^*(x))^2\bigr],\quad \frac{\diff w_i}{\diff t} = -\nabla_{w_i} L(w_1,w_2), \quad i\in\{1,2\},
\end{equation}
starting from an arbitrary nonzero initialization
$w(0)=(w_1(0),w_2(0))$ with $w_1(0)\ne0$ and $w_2(0)\ne0$.
We define its initialization scale by
\begin{equation}\label{eq:initialization-scale}
r_{\mathrm{init}}
\coloneqq
\frac{1}{2}\bigl(\|w_1(0)\|_2+\|w_2(0)\|_2\bigr).
\end{equation}
For independent Gaussian initialization $w_i(0)\sim\cN(0,\sigma^2I_d)$, $i=1,2$, both norms are of order $\sigma\sqrt d$ with high probability.
Our main focus is the small-initialization regime, which for the Gaussian initialization above corresponds to taking $\sigma$ sufficiently small.

We refer to $\operatorname{span}\{v_1,v_2\}$ as the \emph{teacher plane}. A student weight is \emph{in-plane} if it lies in the teacher plane and \emph{out-of-plane} if it has a nonzero component orthogonal to that plane. 
Within the teacher plane, we highlight two important directions: the \emph{bisector} and the \emph{splitting direction},
\begin{equation}\label{eq:two-directions}
\bar v \coloneqq \frac{v_1+v_2}{\|v_1+v_2\|_2},
\qquad
v_\Delta \coloneqq \frac{v_1-v_2}{\|v_1-v_2\|_2}.
\end{equation}
The pair $(\bar v,v_\Delta)$ forms an orthonormal basis of the teacher plane.
We define the initial planar splitting amplitude by
\begin{equation}\label{eq:initial-planar-splitting-amplitude}
a_{\mathrm{init}}
\coloneqq
\bigl|\langle w_1(0)-w_2(0),v_\Delta\rangle\bigr|.
\end{equation}
For $C\ge1$, we say that the initialization is $C$-balanced if
\begin{equation}\label{eq:c-balanced}
\max_{i=1,2}\|w_i(0)\|_2\le C\min_{i=1,2}\|w_i(0)\|_2.
\end{equation}
We use the angle shorthand
\begin{equation}\label{eq:angle_shorthand}
\theta_{ij}\coloneqq\angle(w_i,v_j),\quad i,j\in\{1,2\},
\qquad
\gamma\coloneqq\angle(w_1,w_2),
\end{equation}
where $\theta_{ij}$ is the angle between student neuron $i$ and teacher neuron $j$, and $\gamma$ is the angle between the two student directions.

As we will see, in the small-initialization regime, the dynamics follow the two-stage collapse-then-splitting picture from the introduction. As illustrated in \cref{fig:intro}, both student directions first collapse toward the bisector direction $\bar v$ and then split along the direction $v_\Delta$, eventually specializing toward the individual teacher directions $v_1$ and $v_2$. The next subsection explains the teacher-side hierarchy behind this behavior.

\subsection{Leap complexity of the teacher}\label{subsec:teacher_leap_complexity}

We use the standard Hermite-support notion of leap complexity for Gaussian multi-index models \citep{abbe2023sgd,bietti2023learning}; here only the following two-coordinate version is needed. 
Let $\{h_k\}_{k\ge0}$ denote the normalized probabilists' Hermite polynomials.
Suppose $U,W$ are independent standard Gaussians and $g\in L^2(\cN(0,I_2))$. Its Hermite expansion in $L^2(\cN(0,I_2))$ is
\begin{equation}\label{eq:two_coordinate_hermite_expansion}
g(U,W)=\sum_{m,n\ge0}\widehat g_{m,n}h_m(U)h_n(W).
\end{equation}
Along the ordering $U$ then $W$, a leap is the first Hermite degree at which a new coordinate appears. Thus the leap to $U$ is the smallest $m>0$ with $\widehat g_{m,0}\neq0$; once $U$ is included, the leap to $W$ is the smallest $n>0$ for which some $\widehat g_{m,n}$ is nonzero.

For the two-ReLU teacher network defined in \eqref{eq:target}, let $z_1\coloneqq\inner{\bar v}{x}$ and $z_2\coloneqq\inner{v_\Delta}{x}$ be the projections of the input onto the bisector and splitting directions defined in \eqref{eq:two-directions}.
Because the teacher lies in $\operatorname{span}\{\bar v,v_\Delta\}$, $f^*(x)$ depends on $x$ only through these two coordinates, and hence we can write $f^*(x)=g(z_1,z_2)$. Also, since $(z_1,z_2)\sim\cN(0,I_2)$, we may Hermite expand $g$ with respect to coordinates $(z_1,z_2)$.
The following proposition shows that although the nonlinearity $\phi$ has information/leap exponent 1, the sum of two ReLU neurons is a leap-2 function. 
\begin{proposition}[Leap complexity of teacher]\label{prop:teacher_leap_profile}
The two-ReLU teacher network $f^*$ in \eqref{eq:target} has leap exponent 2, where the bisector coordinate $z_1$ has leap 1 and the splitting coordinate $z_2$ has leap 2.
\end{proposition}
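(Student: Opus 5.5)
The plan is to write $f^*$ explicitly in the coordinates $(z_1,z_2)$ and then compute every Hermite coefficient in closed form, using the Hermite expansion of $\phi$. Set $c\coloneqq\cos(\alpha/2)$ and $s\coloneqq\sin(\alpha/2)$. Since $\|v_1\|_2=\|v_2\|_2=1$ and $\angle(v_1,v_2)=\alpha$, we have $v_1=c\,\bar v+s\,v_\Delta$ and $v_2=c\,\bar v-s\,v_\Delta$. Hence
\[
g(z_1,z_2)=\phi(c z_1+s z_2)+\phi(c z_1-s z_2).
\]
For $\alpha\in(0,\pi/2]$ both $c>0$ and $s>0$.

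\textbf{Hermite coefficients.} Expand $\phi(u)=\sum_k \widehat\phi_k h_k(u)$ in $L^2(\cN(0,1))$. Each argument $u_\pm=c z_1\pm s z_2$ is a unit-norm linear form in $(z_1,z_2)$. The standard rotation identity for normalized Hermite polynomials,
\[
h_k(c z_1\pm s z_2)=\sum_{m+n=k}\sqrt{\tbinom{k}{m}}\,c^m(\pm s)^n h_m(z_1)h_n(z_2),
\]
gives
\[
\widehat g_{m,n}=\widehat\phi_{m+n}\sqrt{\tbinom{m+n}{m}}\,c^m s^n\bigl(1+(-1)^n\bigr).
\]
This single formula settles both leaps. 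Alternatively, one can compute only the few coefficients needed directly via Stein's lemma and the evenness of $g$ in $z_2$.

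\textbf{Leap to $z_1$.} We have $\widehat g_{1,0}=2c\,\widehat\phi_1$ and $\widehat\phi_1=\EE[\phi(u)u]=\EE[\phi'(u)]=1/2$. So $\widehat g_{1,0}=c\neq0$, and the bisector coordinate has leap $1$.

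\textbf{Leap to $z_2$.} The factor $1+(-1)^n$, which reflects the symmetry $g(z_1,-z_2)=g(z_1,z_2)$ from swapping $v_1\leftrightarrow v_2$, kills every coefficient with $n$ odd; in particular all $\widehat g_{m,1}$ vanish. For $n=2$ we get $\widehat g_{0,2}=2\widehat\phi_2 s^2$. Here $\widehat\phi_2=\EE[\phi(u)(u^2-1)]/\sqrt2=1/(2\sqrt{\pi})\neq0$, since $\EE[\phi(u)u^2]=\EE[\phi(u)]=1/\sqrt{2\pi}$. Thus, once $z_1$ is included, the first new degree in $z_2$ is $2$, and the splitting coordinate has leap $2$.

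\textbf{Leap exponent.} It remains to check that no ordering of the coordinates does better. Taking $z_2$ first, its leap is the smallest $n>0$ with $\widehat g_{0,n}\neq0$, which is again $2$ by the parity argument. So every ordering has a leap of size $2$, the ordering $(z_1,z_2)$ attains maximal leap $2$, and the leap exponent is $2$.

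The only step needing care is the nonvanishing of $\widehat\phi_2$, which is the short explicit Gaussian integral above. This nonvanishing is exactly why ReLU produces a genuine leap-2 signal along $v_\Delta$. If one prefers to avoid the rotation identity, $\widehat g_{0,2}$ can be computed directly: $\EE[\phi(u_\pm)h_2(z_2)]=\widehat\phi_2\,\mathrm{Cov}(u_\pm,z_2)^2=\widehat\phi_2 s^2$.
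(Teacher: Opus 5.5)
Your proof is essentially the paper's argument. You use the same decomposition $v_{1,2}=c\,\bar v\pm s\,v_\Delta$ and the same Hermite addition formula giving $\widehat g_{m,n}=\widehat\phi_{m+n}\sqrt{\binom{m+n}{m}}\,c^ms^n(1+(-1)^n)$. You arrive at the same values $\widehat g_{1,0}=c$ and $\widehat g_{0,2}=s^2/\sqrt\pi$. Your check of the reverse ordering is a harmless extra that the paper omits.

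One slip needs fixing. Your justification for $\widehat\phi_2$, namely $\EE[\phi(u)u^2]=\EE[\phi(u)]$, is false. Taken literally it would give $\widehat\phi_2=0$ and destroy exactly the leap-2 signal you are establishing. The correct identity is $\EE[\phi(u)u^2]=\EE[u^3\ind(u>0)]=2/\sqrt{2\pi}=2\,\EE[\phi(u)]$, by Stein's lemma applied to $u\mapsto u^2\ind(u>0)$. Hence $\EE[\phi(u)(u^2-1)]=1/\sqrt{2\pi}$ and $\widehat\phi_2=1/(2\sqrt\pi)$, which is the value you state.
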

The proof is in \cref{subsec:teacher-leap-proof}. This leap structure gives useful intuition for the two-stage dynamics shown in \cref{fig:intro}. Specifically, the leap at each subspace determines the gradient magnitude and hence the order of learning: the leap-1 bisector $\bar v$ gives a linear signal visible at random initialization, while the leap-2 splitting direction $v_\Delta$ appears in a smaller quadratic signal.
Consequently, near small initialization the student--student interaction is negligible, and each student neuron is driven by its correlation with the teacher. At a random initialization in high dimension, each student direction is nearly orthogonal to the teacher plane, so the degree-one part of this correlation, which points toward $\bar v$, dominates. We show in \cref{thm:corr-loss-global-min} that, over unit-norm student weights, the full correlation term is minimized at the bisector, which results in a collapse toward $\bar v$ before the higher-order signal along $v_\Delta$ drives splitting and specialization. The ensuing sections will make this intuition precise through landscape and gradient-flow analysis.

\section{Landscape analysis}\label{sec:landscape}

We begin with a landscape analysis, with the goal of classifying all stationary points of the population squared loss with both student weights nonzero as global minima or saddles. We first consider the correlation loss landscape, which admits a simple characterization but cannot drive neuron specialization, and then analyze the more challenging squared loss setting.

\subsection{Warm-up: landscape of the correlation loss}

We first work with a more general setting where the teacher is a width-$k$ ReLU network, 
\begin{equation}\label{eq:general_relu_teacher}
f^*(x)=\sum_{i=1}^k \phi(\inner{v_i}{x}), \qquad k\geq2.
\end{equation}
The student has $m$ ReLU neurons, $f(x)=\sum_{j=1}^m \phi(\inner{w_j}{x})$, with trainable weights $w_1,\ldots,w_m\in\RR^d$.
The setup in \Cref{sec:preliminary} corresponds to the special case $k=m=2$.

By the positive homogeneity of ReLU, the population squared loss expands as
\[
\frac12\EE[(f(x)-f^*(x))^2]
=\frac12\EE[f^*(x)^2]
-\sum_{j=1}^m \|w_j\|_2 \cdot \EE[f^*(x)\phi(\inner{\bar w_j}{x})]
 + O\bigg(\sum_{i,j=1}^m \|w_i\|_2 \|w_j\|_2\bigg).
\]
The first term is constant in the student weights, while the last term is second order in their norms. Thus, for small student weights, the first-order decrease of the loss is governed by the correlations between the teacher output $f^*(x)$ and the normalized student activations $\phi(\inner{\bar w_j}{x})$. 
We refer to this leading $L^2$ correlation term as the correlation loss,
\begin{align}\label{eq:correlation-loss}
    L_{\mathrm{corr}}(w_1,\ldots,w_m) \coloneqq - \sum_{j=1}^m \EE_{x\sim \cN(0,I_d)}\bigl[f^*(x)\phi(\inner{w_j}{x})\bigr]. 
\end{align}
For the width-two result below, let
\(\alpha_\star\in(\pi/2,\pi)\) be the unique solution of
\begin{equation}\label{eq:width-two-angle-threshold}
    \tan\frac{\alpha_\star}{2}
    =
    \pi-\frac{\alpha_\star}{2}.
\end{equation}
Numerically, \(\alpha_\star\approx127.521476^\circ\).
While many existing works on gradient-based feature learning employ a correlation-loss approximation to simplify the SGD dynamics \citep{abbe2023sgd,damian2023smoothing,lee2024neural}, the following theorem shows that the collapsed configuration at the normalized average teacher direction is the unique minimizer on the unit sphere, and hence cannot recover the two teacher neurons.
\begin{theorem}\label{thm:corr-loss-global-min}
Consider the teacher model $f^*$ in \eqref{eq:general_relu_teacher}.
Suppose that $v_1,\ldots,v_k$ are unit vectors satisfying
$\angle(v_i,v_j)=\alpha$ for all $i\neq j$, and that either
$0<\alpha\leq\pi/2$, or $k=2$ and
$0<\alpha\leq\alpha_\star$. Let
$\bar{v}=(\sum_{i=1}^k v_i)/\|\sum_{i=1}^k v_i\|_2$.
Then, for every student width $m$, $(\bar v,\ldots,\bar v)$ is the unique
global minimum of $L_{\mathrm{corr}}$ among all unit-norm student weights.
\end{theorem}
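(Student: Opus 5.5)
Since $L_{\mathrm{corr}}$ is a sum of $m$ identical terms, each depending on one $w_j$ only, it suffices to show that $G(w)\coloneqq\EE[f^*(x)\phi(\inner{w}{x})]$ has $\bar v$ as its unique maximizer over the unit sphere. I would use the closed-form ReLU kernel: for unit $u,w$ at angle $\theta$,
\[
\EE[\phi(\inner{u}{x})\phi(\inner{w}{x})]=\kappa(\theta)\coloneqq\frac{\sin\theta+(\pi-\theta)\cos\theta}{2\pi}.
\]
Writing $g(t)\coloneqq\kappa(\arccos t)$, this gives $G(w)=\sum_{i=1}^k g(\inner{w}{v_i})$. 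Differentiating yields $g'(t)=(\pi-\arccos t)/(2\pi)>0$ and $g''(t)=1/(2\pi\sqrt{1-t^2})>0$, so $g$ is increasing and convex. I would also use the dual-activation (Hermite) expansion $g(t)=\sum_n a_n t^n$ with $a_n\ge0$. For ReLU, $a_n=0$ for every odd $n\ge3$.

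\textbf{Step 1: reduction to the teacher span.} Write $w=c\,u+b\,n$ with $u$ a unit vector in $S\coloneqq\operatorname{span}\{v_i\}$, $n\perp S$, and $c^2+b^2=1$. Then
\[
G(w)=k a_0+a_1 c\inner{u}{\textstyle\sum_i v_i}+\sum_{n\ \mathrm{even}\ \ge2}a_n c^n\sum_i\inner{u}{v_i}^n .
\]
Replacing $u$ by $-u$ if necessary (and $c$ by $-c$), I may assume $c\ge0$ and $\inner{u}{\sum v_i}\ge0$; the even terms are unchanged. Every term is then nondecreasing in $c$, and the even ones are strictly increasing whenever some $\inner{u}{v_i}\ne0$. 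Hence the maximum has $b=0$, so it is attained on the unit sphere of $S$. Strictness follows unless $u\perp S$, which is clearly not optimal.

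\textbf{Step 2, case $\alpha\le\pi/2$, general $k$.} For unit $u\in S$, put $t_i=\inner{u}{v_i}$. The plan is a quadratic majorant: find $B,C\ge0$ and $A$ such that $g(t)\le A+Bt+Ct^2$ on $[-1,1]$, with equality at $t_0\coloneqq\inner{\bar v}{v_i}=\sqrt{(1+(k-1)\cos\alpha)/k}$. Then
\[
\sum_i g(t_i)\le kA+B\inner{u}{\textstyle\sum v_i}+C\,u^\top\Bigl(\textstyle\sum_i v_iv_i^\top\Bigr)u .
\]
Both the linear and the quadratic form are maximized at $u=\bar v$. Indeed, the Gram matrix $(1-\cos\alpha)I+\cos\alpha\,\mathbf 1\mathbf 1^\top$ has top eigenvalue $1+(k-1)\cos\alpha$ with eigenvector along $\mathbf 1$, because $\cos\alpha\ge0$. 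This top eigenvalue is simple, which gives uniqueness. At $u=\bar v$ the bound is attained.

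\textbf{Step 3, case $k=2$, $\alpha\le\alpha_\star$.} Here I would argue directly in the plane. Let $u$ make angle $\psi\in[0,\pi]$ with $\bar v$, and set
\[
h(\psi)\coloneqq\kappa(\alpha/2-\psi)+\kappa(\alpha/2+\psi),
\]
interpreting the angles by their absolute value or wrapped into $[0,\pi]$. Using $\kappa'(\theta)=-(\pi-\theta)\sin\theta/(2\pi)$, I would show $h'(\psi)<0$ on $(0,\pi]$, splitting into cases according to whether $\alpha/2+\psi$ exceeds $\pi$. At $\psi=0$ one computes
\[
h''(0)\propto -\bigl[(\pi-\alpha/2)\cos(\alpha/2)-\sin(\alpha/2)\bigr],
\]
which is negative exactly when $\tan(\alpha/2)<\pi-\alpha/2$, that is, when $\alpha<\alpha_\star$; this is where the threshold \eqref{eq:width-two-angle-threshold} comes from. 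The boundary case $\alpha=\alpha_\star$ needs a higher-order check.

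\textbf{Main obstacle.} The hard step is the global part of Step 2: producing the quadratic majorant, or equivalently proving a global sign for $h'$ in Step 3. Because $g''$ blows up at $t=\pm1$, a quadratic touching $g$ at $t_0$ may fail near $t=1$. I would first try $C=\max_t\bigl(g(t)-g(t_0)-g'(t_0)(t-t_0)\bigr)/(t-t_0)^2$ and check numerically that it stays bounded and nonnegative when $t_0\ge1/\sqrt2$. If that fails, I would fall back to a one-variable monotonicity argument along great circles through $\bar v$, as in Step 3.
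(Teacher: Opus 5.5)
Your Step 2 fails when $t_0=\cos\rho=\langle\bar v,v_i\rangle$ is close to $1$, and this happens for every $k$ as $\alpha\to0$. For $k=2$ your Step 3 covers this regime, but for $k\ge3$ nothing else in your proposal does. Tangency at the interior point $t_0$ forces $q(t)=g(t_0)+g'(t_0)(t-t_0)+C(t-t_0)^2$, so $B\ge0$ is equivalent to $C\le g'(t_0)/(2t_0)=(\pi-\rho)/(4\pi\cos\rho)$. The majorant must hold at $t=1$, which is attained by $u=v_i$, and a direct computation gives $g(1)-g(t_0)-g'(t_0)(1-t_0)=(\rho-\sin\rho)/(2\pi)$. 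So an admissible pair $B,C\ge0$ exists only if $2(\rho-\sin\rho)\cos\rho\le(\pi-\rho)(1-\cos\rho)^2$. The two sides behave like $\rho^3/3$ and $\pi\rho^4/4$, so the inequality fails for small $\rho$ (numerically, for $\rho$ below about $0.45$). With $B<0$ the linear term is no longer maximized at $\bar v$, and the argument collapses. Your proposed numerical check targets the wrong quantity: boundedness and nonnegativity of $C$ are automatic, while the binding constraint is $B\ge0$. Your expectation is also reversed. The choice $B=0$, $C=(\pi-\rho)/(4\pi\cos\rho)$ is a valid majorant exactly when $t_0\le1/\sqrt2$, i.e.\ $\rho\ge\pi/4$, and this is how the paper handles that regime.

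For $\rho\le\pi/4$ the paper uses a different two-part argument, which your proposal lacks. It works in coordinates $w=\cos\varphi\,\bar v+\sin\varphi\,u$ with $t_i=\langle u,\tilde v_i\rangle$ for the simplex frame $\tilde v_i$, so that $\sum_it_i=0$, $\sum_it_i^2=k/(k-1)$ and $z_i=\cos\rho\cos\varphi+t_i\sin\rho\sin\varphi$. For $\varphi\ge\rho$, monotonicity in $\varphi$ follows from $\sum_it_ig'(z_i)=\frac1k\sum_{i<j}(t_i-t_j)(g'(z_i)-g'(z_j))\ge0$ together with $\cot\rho\tan\varphi\ge1\ge t_i$. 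For $0\le\varphi<\rho$, all $z_i$ lie in $[0,1)$, where $g'''\ge0$. Hence $t\mapsto g(\cos\rho\cos\varphi+t\sin\rho\sin\varphi)$ lies below its quadratic Hermite interpolant with a double node at $-1/(k-1)$ and a simple node at $1$. The sum of that quadratic over the $t_i$ depends only on the two fixed moments, so the worst direction is $u=\tilde v_1$, and a one-variable derivative computation along that single great circle finishes the case. Your fallback of ``monotonicity along great circles'' does not say how to control the $(k-2)$-dimensional family of directions $u$, and that control is the substance of this step.

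Three smaller points. First, uniqueness via a simple top eigenvalue fails at $\alpha=\pi/2$, where the Gram matrix is the identity; use instead the equality case of the scalar majorant together with invertibility of the Gram matrix, as the paper does. Second, in Step 1 the substitution $(c,u)\mapsto(-c,-u)$ flips both signs at once. Compare $cu+bn$ with $-cu+bn$ using the odd coefficient $a_1$ instead; with that fix, your power-series reduction is a valid alternative to the paper's Lagrange-multiplier argument. Third, Step 3 needs no higher-order check at $\alpha_\star$: for $0<\psi<\alpha/2$ one has $h'(\psi)=\frac{\sin\psi}{\pi}\bigl[\psi\cot\psi\,\sin\frac\alpha2-(\pi-\frac\alpha2)\cos\frac\alpha2\bigr]<0$ because $\psi\cot\psi<1$. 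That step matches the paper's planar argument for $k=2$.
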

The proof is in \cref{app:correlation-loss-proof}.
\begin{remark}\label{rem:corr-angle-range}
For $k=2$, the threshold $\alpha_\star$ is sharp: above it, the bisector
is no longer a local minimizer of \(L_{\mathrm{corr}}\) in
\eqref{eq:correlation-loss} (\cref{lem:corr-two-teachers}). For fixed $k>2$ and $d\geq k$,
$(\bar v,\ldots,\bar v)$ remains the unique global minimizer when
$\cos\alpha$ is negative and sufficiently close to zero. This follows from a
perturbation argument at $\alpha=\pi/2$. There, for a single student, $\bar v$
is a strict minimizer of $L_{\mathrm{corr}}$ on the unit sphere, with
positive definite Hessian on the sphere, and $L_{\mathrm{corr}}$ is bounded
away from its minimum outside any neighborhood of $\bar v$. Both properties persist
under small changes of $\alpha$. The size of this range of $\alpha$ may
depend on $k$ and $d$, and we do not determine how far it extends.
\end{remark}
The following two observations connect the theorem to the training dynamics.
\begin{itemize}
\item This result aligns with the first stage of the dynamics observed in \cref{fig:intro}: near small initialization, the squared loss is dominated by the correlation term, which is minimized when all student directions equal the averaged teacher direction $\bar v$.
In the two-neuron setting of \eqref{eq:target}, this averaged direction is exactly the bisector, explaining why the student neurons initially move toward this direction. This result is close in spirit to \citet{csimcsek2024learning}, which studies correlation-loss gradient in the high-information-exponent setting, whereas our ReLU setting has information exponent 1.
\item On the other hand, since the correlation loss $L_{\mathrm{corr}}$ ignores the student-student interaction terms when $\|w_j\|_2$ becomes non-negligible, it cannot explain why collapsed students eventually repel and specialize.
To capture this effect, we have to characterize the full squared-loss landscape.
\end{itemize}

\subsection{Landscape of the squared loss}

Now we turn to the squared loss \eqref{eq:population_squared_loss_and_gf}. To simplify the analysis, we write the first-order condition in angular variables. Recall the angle shorthand from~\eqref{eq:angle_shorthand}: $\theta_{ij}=\angle(w_i,v_j)$ and $\gamma=\angle(w_1,w_2)$.
\begin{lemma}[Characterization of stationary points]\label{lem:critical_eq_from_flow}
Define $a_1\coloneqq \sin\theta_{11}+\sin\theta_{12}-\pi\|w_1\|_2-\sin\gamma\|w_2\|_2$ and $a_2\coloneqq \sin\theta_{21}+\sin\theta_{22}-\pi\|w_2\|_2-\sin\gamma\|w_1\|_2$.
Then for $w_1,w_2\neq0$, $(w_1,w_2)$ is a stationary point of the squared loss $L$ in \eqref{eq:population_squared_loss_and_gf} if and only if
\begin{equation}\label{eq:critical_eq}
\begin{aligned}
(\pi-\theta_{11})v_1+(\pi-\theta_{12})v_2+a_1\bar w_1-(\pi-\gamma)\|w_2\|_2\bar w_2&=0,\\
(\pi-\theta_{21})v_1+(\pi-\theta_{22})v_2+a_2\bar w_2-(\pi-\gamma)\|w_1\|_2\bar w_1&=0.
\end{aligned}
\end{equation}
\end{lemma}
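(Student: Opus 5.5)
We will compute the population gradient explicitly using the closed-form Gaussian kernel for ReLU (the arc-cosine kernel of degree one) and then clear a common positive factor. The basic identity, for nonzero $u,w\in\RR^d$ with angle $\theta=\angle(u,w)$, is
\[
\EE_{x\sim\cN(0,I_d)}\bigl[\phi(\inner{u}{x})\phi(\inner{w}{x})\bigr]=\frac{\|u\|_2\|w\|_2}{2\pi}\bigl(\sin\theta+(\pi-\theta)\cos\theta\bigr).
\]
Differentiating in $w$ (or directly computing $\EE[\phi(\inner{u}{x})\mathbf 1\{\inner{w}{x}>0\}x]$ by restricting to the plane of $u,w$), we get
\[
\nabla_w\,\EE\bigl[\phi(\inner{u}{x})\phi(\inner{w}{x})\bigr]=\frac{1}{2\pi}\Bigl((\pi-\theta)\,u+\|u\|_2\sin\theta\,\bar w\Bigr).
\]
For $u=w$ this specializes to $\nabla_w\tfrac12\EE[\phi(\inner{w}{x})^2]=\tfrac12 w=\tfrac{1}{2\pi}\pi\|w\|_2\bar w$, consistent with the formula at $\theta=0$. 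The derivative computation is the step needing the most care. We justify it either by differentiating the closed form, using $\nabla_w\theta=-\frac{1}{\sin\theta}P_w^\perp\bar u/\|w\|_2$ together with the cancellation of the $\sin\theta$ singularity, or more robustly by the direct Gaussian integral in the two-dimensional span. The degenerate cases $\theta\in\{0,\pi\}$ are handled by continuity, since the formula remains continuous there.

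Next we expand $L$. We have $L=\frac12\EE[f^2]-\EE[ff^*]+\mathrm{const}$, and
\[
\nabla_{w_1}L=\nabla_{w_1}\Bigl(\tfrac12\EE[\phi(\inner{w_1}{x})^2]+\EE[\phi(\inner{w_1}{x})\phi(\inner{w_2}{x})]-\textstyle\sum_{j}\EE[\phi(\inner{w_1}{x})\phi(\inner{v_j}{x})]\Bigr).
\]
Applying the identity to each term gives
\[
2\pi\nabla_{w_1}L=\pi\|w_1\|_2\bar w_1+(\pi-\gamma)\|w_2\|_2\bar w_2+\|w_2\|_2\sin\gamma\,\bar w_1-\sum_{j=1}^2\bigl((\pi-\theta_{1j})v_j+\sin\theta_{1j}\bar w_1\bigr),
\]
where we used $\|v_j\|_2=1$. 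Collecting the $\bar w_1$ coefficients yields exactly $-a_1$, so
\[
-2\pi\nabla_{w_1}L=(\pi-\theta_{11})v_1+(\pi-\theta_{12})v_2+a_1\bar w_1-(\pi-\gamma)\|w_2\|_2\bar w_2 .
\]
The formula for $w_2$ follows by symmetry, swapping the indices $1\leftrightarrow2$.

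Since $2\pi\neq0$ and $L$ is $C^1$ on $\{w_1\ne0,w_2\ne0\}$ (the angles are continuous there and the expectations are smooth), stationarity $\nabla_{w_1}L=\nabla_{w_2}L=0$ is equivalent to \eqref{eq:critical_eq}. This establishes both directions of the equivalence.
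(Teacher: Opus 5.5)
Your proof is correct and follows essentially the same route as the paper, which also computes the gradient from the ReLU-kernel gradient identity \eqref{eq:relu-kernel-gradient}, packaged there as the gradient-flow equations \eqref{eq:gf-w1}--\eqref{eq:gf-w2}. The paper then multiplies by $2\pi$ and collects the $\bar w_i$ coefficients into $a_i$. Your term-by-term bookkeeping, including the sign of the $\bar w_1$ coefficient $-a_1$ and the interaction term $(\pi-\gamma)\|w_2\|_2\bar w_2$, matches \eqref{eq:critical_eq} exactly.
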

The proof is in \cref{subsubsec:stationary-equations-proof}. With the above
first-order condition in hand, we classify all stationary points with
\(w_1\neq0\) and \(w_2\neq0\), up to permutation of student neurons.

\paragraph{Additional notation.}
Before stating the main landscape theorem, we introduce some quantities that will be used to describe the stationary points. 
Recall that $\alpha=\angle(v_1,v_2)\in(0,\pi/2]$ is the angle between the teacher neurons.
Define
\begin{align}\label{eq:def_r}
    r_+\coloneqq \frac{2}{\pi}\left(\left(\pi-\frac{\alpha}{2}\right)\cos\frac{\alpha}{2}+\sin\frac{\alpha}{2}\right),
    \qquad
    r_-\coloneqq \frac{2}{\pi}\left(\sin\frac{\alpha}{2}-\frac{\alpha}{2}\cos\frac{\alpha}{2}\right).
\end{align}
For \(0<\kappa<\pi/2\), define
\begin{align}
    \theta_\alpha(\kappa)
    &=
    \arccos\left(\cos\kappa \cdot \cos\frac{\alpha}{2}\right),
    \label{eq:out-of-plane-angle}\\
    r_\alpha(\kappa)
    &=
    \frac{2\sin\theta_\alpha(\kappa)}
    {\sin(2\kappa)+2\kappa},
    \label{eq:out-of-plane-radius}\\
    F_\alpha(\kappa)
    &=
    \left(\pi-\theta_\alpha(\kappa)\right)\cos\frac{\alpha}{2}
    -
    (\pi-2\kappa)r_\alpha(\kappa)\cos\kappa.
    \label{eq:out-of-plane-root-function}
\end{align}

\begin{lemma}[Unique out-of-plane root]
\label{lem:out-of-plane-root-uniqueness}
For every \(\alpha\in(0,\pi)\), the equation
\(F_\alpha(\kappa)=0\) has exactly one solution in \((0,\pi/2)\), which we
denote by \(\kappa_\alpha\).
\end{lemma}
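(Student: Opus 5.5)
The plan is to reformulate the equation $F_\alpha(\kappa)=0$ as $H(\kappa)=1$ for a function $H$ that runs from $0$ to $+\infty$ and is strictly increasing on $(0,\pi/2)$. Existence and uniqueness then both follow. Throughout, write $c\coloneqq\cos(\alpha/2)\in(0,1)$ and $\theta=\theta_\alpha(\kappa)$. Then $\cos\theta=c\cos\kappa$ and $\sin\theta=\sqrt{1-c^2\cos^2\kappa}>0$. Since $c\cos\kappa\in(0,1)$, we also have $\theta\in(0,\pi/2)$.

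\emph{Reformulation.} On $(0,\pi/2)$ we have $(\pi-2\kappa)\cos\kappa>0$, $\sin\theta>0$ and $\sin 2\kappa+2\kappa>0$. Substituting \eqref{eq:out-of-plane-radius} shows that $F_\alpha(\kappa)=0$ is equivalent to $H(\kappa)=1$, where
\[
H(\kappa)\coloneqq A(\kappa)B(\kappa),\qquad A(\kappa)\coloneqq\frac{c(\pi-\theta)}{2\sin\theta},\qquad B(\kappa)\coloneqq\frac{\sin2\kappa+2\kappa}{(\pi-2\kappa)\cos\kappa}.
\]
Moreover $F_\alpha$ has the sign of $H-1$. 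As $\kappa\to0$, $A$ tends to the finite positive value $c(\pi-\alpha/2)/(2\sin(\alpha/2))$ and $B\to0$. As $\kappa\to\pi/2$, $A\to c\pi/4>0$ and $B\to+\infty$. Hence $H$ ranges from $0$ to $\infty$, and the intermediate value theorem gives a root. Equivalently, $F_\alpha\to-\infty$ at $0$ and $F_\alpha\to\pi c/2>0$ at $\pi/2$.

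\emph{Monotonicity.} This is the main step. $B$ is increasing, but $A$ is decreasing, because $\theta$ increases in $\kappa$ and $(\pi-\theta)/\sin\theta$ decreases in $\theta$. So I will compare logarithmic derivatives and show $(\log B)'>-(\log A)'$. A direct computation, using $\theta'=c\sin\kappa/\sin\theta$ and $1+\cos2\kappa=2\cos^2\kappa$, gives
\[
(\log B)'=\frac{4\cos^2\kappa}{\sin2\kappa+2\kappa}+\frac{2}{\pi-2\kappa}+\tan\kappa,\qquad
-(\log A)'=\frac{c\sin\kappa}{\sin\theta\,(\pi-\theta)}+\frac{c^2\sin\kappa\cos\kappa}{\sin^2\theta}.
\]
I will bound the two terms of $-(\log A)'$ separately.

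First, $\sin^2\theta=1-c^2\cos^2\kappa\ge\sin^2\kappa$ and $\pi-\theta\ge\pi/2$. Therefore the first term is at most $2c/\pi<2/(\pi-2\kappa)$.

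Second, $\sin2\kappa+2\kappa\le4\tan\kappa$. Both sides vanish at $0$, and the derivatives satisfy $4\cos^2\kappa\le4\sec^2\kappa$. Hence
\[
\frac{4\cos^2\kappa}{\sin2\kappa+2\kappa}+\tan\kappa\ \ge\ \frac{\cos^3\kappa}{\sin\kappa}+\frac{\sin\kappa}{\cos\kappa}.
\]
The map $c\mapsto c^2\sin\kappa\cos\kappa/(1-c^2\cos^2\kappa)$ is increasing in $c$. So the second term of $-(\log A)'$ is strictly below its value at $c=1$, which is $\cos\kappa/\sin\kappa$. Finally,
\[
\frac{\cos^3\kappa}{\sin\kappa}+\frac{\sin\kappa}{\cos\kappa}-\frac{\cos\kappa}{\sin\kappa}=\frac{\sin^4\kappa}{\sin\kappa\cos\kappa}>0 .
\]
Summing the two bounds gives $(\log H)'>0$ on $(0,\pi/2)$.

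Combining the two steps, $H$ is a strictly increasing continuous bijection onto $(0,\infty)$. It therefore hits $1$ exactly once, which yields the unique root $\kappa_\alpha$.

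I expect the main obstacle to be the monotonicity step, specifically controlling the $\cot\theta$-type term $c^2\sin\kappa\cos\kappa/\sin^2\theta$. This term is unbounded near $\kappa=0$ when $c\to1$ and is not dominated by $\tan\kappa$ alone. It has to be absorbed together with the $4\cos^2\kappa/(\sin2\kappa+2\kappa)$ term, and this is where the inequality $\sin2\kappa+2\kappa\le4\tan\kappa$ is needed. The argument never uses $\alpha\le\pi/2$, so it covers all $\alpha\in(0,\pi)$ as stated.
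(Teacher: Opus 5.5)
Your proof is correct and is essentially the paper's argument: your $H$ coincides with the paper's $Q_\alpha$, and your log-derivative comparison, with $\cot\kappa$ as the intermediate term, is exactly the sum of the log-derivatives of the paper's three individually increasing factors $(\pi-\theta)/(\pi-2\kappa)$, $1+2\kappa/\sin(2\kappa)$ and $\sin\kappa/\sin\theta$. The only differences are bookkeeping and the elementary inequality used for the middle factor: you use $\sin 2\kappa+2\kappa\le 4\tan\kappa$, while the paper uses $\sin 2\kappa>2\kappa\cos 2\kappa$.
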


The proof is in \cref{subsubsec:out-of-plane-root-uniqueness-proof}.

Now we are ready to state the main landscape theorem for the squared loss.

\begin{theorem}[Landscape of the squared loss]\label{thm:squared_loss_landscape}
Let $L$ be the population squared loss of the two-neuron ReLU model in
\eqref{eq:population_squared_loss_and_gf}, and let \((w_1,w_2)\) be a
stationary point of \(L\) with \(w_1\neq0\) and \(w_2\neq0\). 
Then, after possibly swapping $w_1$ and $w_2$, exactly one of the following holds.
\begin{enumerate}[leftmargin=*]
    \item \textbf{Global minimum:} $w_1=v_1$ and $w_2=v_2$; this is a global minimum.
    \item \textbf{In-plane saddles:} $w_1,w_2\in \operatorname{span}\{v_1,v_2\}$ satisfy one of the following three forms:
    \begin{align}\label{eq:in-plane-saddle-form}
    \begin{cases}
        w_1=s_1\bar v,\quad w_2=s_2\bar v, & \text{for some $s_1,s_2>0$ with $s_1+s_2=r_+$};\\
        w_1=-s_1\bar v,\quad w_2=-s_2\bar v, & \text{for some $s_1,s_2>0$ with $s_1+s_2=r_-$};\\
        w_1=r_+\bar v,\quad w_2=-r_-\bar v.
    \end{cases}
    \end{align}
    Every such point has a strictly negative Hessian direction.
    \item \textbf{Out-of-plane saddles:} necessarily \(d\geq3\), and for
    some unit vector
    \(\xi\perp\operatorname{span}\{v_1,v_2\}\),
    \begin{equation}\label{eq:out-of-plane-saddle-form}
    \begin{aligned}
        w_1=r_\alpha(\kappa_\alpha)\left(\cos\kappa_\alpha\cdot \bar v+\sin\kappa_\alpha\cdot\xi\right),
        \qquad
        w_2=r_\alpha(\kappa_\alpha)\left(\cos\kappa_\alpha \cdot \bar v-\sin\kappa_\alpha\cdot\xi\right).
    \end{aligned}
    \end{equation}
    Every such point has a strictly negative Hessian direction.
\end{enumerate}
Consequently, the squared loss $L$ has no spurious local minima on $\{(w_1,w_2):w_1\neq0,\ w_2\neq0\}$.
\end{theorem}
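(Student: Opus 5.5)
The plan has three steps: classify stationary points through the vector equations of \cref{lem:critical_eq_from_flow}, then exhibit a negative Hessian direction at every non-global one. The first observation is that each equation in \eqref{eq:critical_eq} is a linear relation among $v_1,v_2,\bar w_1,\bar w_2$. This immediately constrains the geometry.

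\textbf{Step 1: reduce to a three-dimensional problem.} Project each equation onto the orthogonal complement $T^\perp$ of the teacher plane. This gives $a_1 P\bar w_1=(\pi-\gamma)\|w_2\|P\bar w_2$ and $a_2P\bar w_2=(\pi-\gamma)\|w_1\|P\bar w_1$, where $P$ is the projection onto $T^\perp$. Either both out-of-plane components vanish, in which case both neurons are in-plane, or they are parallel and span a single direction $\xi$. So the problem lives in $\operatorname{span}\{v_1,v_2,\xi\}$.

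\textbf{Step 2: in-plane stationary points.} Write $w_i=\|w_i\|(\cos\beta_i,\sin\beta_i)$ in the $(\bar v,v_\Delta)$ basis. The two planar equations then give four scalar equations in $(\|w_1\|,\|w_2\|,\beta_1,\beta_2)$. I will split into cases.

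In the collinear case $\gamma\in\{0,\pi\}$, the terms involving $\sin\gamma$ vanish. Each equation then forces $(\pi-\theta_{i1})v_1+(\pi-\theta_{i2})v_2$ to be parallel to $\bar w_i$. This happens only for $\bar w_i=\pm\bar v$, since the map $\beta\mapsto$ (component along $v_\Delta$) is strictly monotone in the angle. Solving the norm equation along $\bar v$ gives the three families in \eqref{eq:in-plane-saddle-form}, with radii $r_\pm$ read off from $\sin\theta=\sin(\alpha/2)$ and $\pi-\theta=\pi\mp\alpha/2$.

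In the non-collinear case, the first equation together with $\bar w_2\notin\mathbb{R}\bar w_1$ determines $\|w_2\|$ and $a_1$. I then want to show the only solution is $\{w_1,w_2\}=\{v_1,v_2\}$. My first attempt is to subtract the two equations and exploit the symmetry $v_1\leftrightarrow v_2$, reducing to monotonicity in the angles $\theta_{ij}$. The fallback is a direct argument that the student function equals the teacher on the cone, i.e.\ that the residual vanishes. It is convenient that $\alpha\le\pi/2$ keeps all angles in a controlled range.

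\textbf{Step 3: out-of-plane points.} The in-plane components must themselves satisfy the planar equations, now with the $\xi$-coupling present. I will first show by symmetry that the in-plane components are equal and along $\bar v$, and the $\xi$-components are opposite, with $\|w_1\|=\|w_2\|$ forced by the $\xi$-projected equations. Parametrizing by the half-angle $\kappa$ between the neurons gives $\gamma=2\kappa$ and $\theta_{ij}=\theta_\alpha(\kappa)$. The $\xi$-equation fixes $r=r_\alpha(\kappa)$, and the $\bar v$-equation becomes $F_\alpha(\kappa)=0$. \cref{lem:out-of-plane-root-uniqueness} then gives uniqueness. Showing the in-plane parts are symmetric is likely the delicate part; I expect it to follow from an antisymmetric combination of the two equations together with monotonicity.

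\textbf{Saddle verification.} For the first two in-plane families and the out-of-plane family, I will test a second-order direction. For the collapsed families, perturb $w_1\mapsto w_1+\epsilon v_\Delta$ and $w_2\mapsto w_2-\epsilon v_\Delta$, or shift mass between $s_1,s_2$ along the flat family combined with splitting. The loss along this direction has a nonsmooth-but-$C^2$ expansion in the angles, and I expect a negative curvature coefficient coming from the leap-2 signal. For the $(r_+\bar v,-r_-\bar v)$ point, rotating $w_2$ toward the plane direction should decrease the loss. For the out-of-plane point, the test direction is a rotation toward the teacher plane.

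\textbf{Main obstacle.} I expect the main obstacle to be these Hessian sign computations. This is especially true at the collapsed saddle and the out-of-plane saddle, where the negative direction is degenerate at first order and one must compute the $v_\Delta v_\Delta$-block of the Hessian explicitly. The block involves the ReLU arc-cosine kernel's second derivatives evaluated at $\gamma=0$, where the kernel is singular. There I will take a one-sided limit, or perturb the direction so that $\gamma>0$ and the expansion picks up the $\epsilon^2$ term.
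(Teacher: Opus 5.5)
Your Step 1 is the paper's dimension reduction (\cref{prop:3D}), and your final scalar reduction to $r=r_\alpha(\kappa)$, $F_\alpha(\kappa)=0$ matches. The direction $(v_\Delta,-v_\Delta)$ is also the right test direction at the collapsed saddles. However, the two classification steps that carry the theorem are only stated as expectations, and neither suggested mechanism works.

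For non-collinear in-plane points, nothing relates the two students' equations in a generic configuration, so subtracting them yields no monotone quantity. Your fallback is circular: that the residual vanishes at every non-collinear stationary point is exactly what must be shown. Nothing in the first-order conditions forces it, since the collinear saddles are stationary with nonzero residual. This is the heart of Tian's conjecture. The paper first combines the tangential equations with the sign of the tangential teacher force, $\mathcal P'(\varphi)<0$ on $(0,\pi)$ (\cref{lem:planar-scalar-stationarity}). This places the two directions on opposite sides of $\bar v$ at angular distance less than $\pi$. It then runs a four-case analysis with genuinely nontrivial inequalities. Examples are the factorization $\det M=\tfrac12(\gamma^2-\sin^2\gamma)\bigl((2\pi-\gamma)\sin p-p\sin\gamma\bigr)$ when both students lie between the teachers, and a $\tanh$/$\log$ comparison of the ratios $R_s,Q_s$ when one lies outside the cone.

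For out-of-plane points you need two separate facts, not one ``symmetry'' step. First, both $v_\Delta$-components must vanish (\cref{lem:3Dsym}). The paper gets this because the $\xi$-equations force $T(\bar w_2)$ to be a positive multiple of $T(\bar w_1)$ with $\eta_2/\eta_1<0$, while the teacher force lies strictly closer to the bisector than the in-plane part of $\bar w_i$ (\cref{lem:teacher-force-azimuth}). Second, mirror symmetry $p=q$ must hold inside $\operatorname{span}\{\bar v,\xi\}$ (\cref{lem:perp_plane_symmetry}). Reflection invariance of $L$ does not make stationary points symmetric. Adding and subtracting the two equations gives two competing formulas for $\pi-(\theta_1+\theta_2)/2$. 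Showing these are incompatible when $p\neq q$ is the long inequality \cref{lem:scalar_separation_bisector}, not a monotonicity remark.

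Two further corrections. First, in the collinear case the condition to exclude is $T(\bar w_i)\parallel\bar w_i$, i.e.\ a zero of $\mathcal P'$. Monotonicity of the $v_\Delta$-component of $T$ does not address this, because $\bar w_i$ itself has a $v_\Delta$-component. The exclusion genuinely uses $\sin(\alpha/2)\le(\pi-\alpha/2)\cos(\alpha/2)$: for $\alpha\in(\alpha_\star,\pi)$ there are extra collinear stationary families at angle $\pm\varphi_0\in(0,\alpha/2)$ from $\bar v$ (\cref{prop:extra-planar-stationary-families}). Any argument that never uses the angle restriction would therefore prove a false statement.

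Second, the step you call the main obstacle is routine. The kernel is $C^2$ at aligned and antipodal directions (\cref{lem:relu-kernel-regularity}). The block Hessian of \cref{lem:relu-two-neuron-hessian} extends continuously with $Q_1=0$ and $Q_2=\tfrac12 I_d$ at aligned pairs, so no one-sided limits are needed. The resulting curvatures are:
\begin{itemize}
\item on the collapsed families, $(v_\Delta,-v_\Delta)$ gives $-\tfrac{2\sin(\alpha/2)}{\pi}\bigl(\|w_1\|_2^{-1}+\|w_2\|_2^{-1}\bigr)$;
\item at $(r_+\bar v,-r_-\bar v)$, $(0,v_\Delta)$ gives $\tfrac12-\tfrac{2\sin(\alpha/2)}{\pi r_-}<0$;
\item at the out-of-plane point, the same $(v_\Delta,-v_\Delta)$, which is orthogonal to both students, gives $-\tfrac{\sin 2\kappa+2\kappa}{\pi}\cdot\tfrac{\sin^2(\alpha/2)}{\sin^2\theta_\alpha(\kappa)}$ after substituting $r=r_\alpha(\kappa)$.
\end{itemize}
At the out-of-plane point, ``rotation toward the teacher plane'' must mean turning the $\pm\xi$ components into $\pm v_\Delta$, whose first-order part is this direction. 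Decreasing $\kappa$ inside $\operatorname{span}\{\bar v,\xi\}$ cannot work. The gradient of $L$ is tangent to $\operatorname{span}\{\bar v,\xi\}^2$, and the out-of-plane point minimizes $L$ on that subspace. Every other stationary point there has an in-subspace descent direction, and the values with a zero weight are higher. So the curvature along that path is nonnegative.
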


To the best of our knowledge, this is the first complete classification of all stationary points with both student weights nonzero (under Euclidean geometry) in a well-specified shallow ReLU teacher–student model with more than one teacher neuron. We make the following remark on the significance of this landscape result.

\begin{remark}
\cite{tian2017analytical} conjectured that the unconstrained 
population squared loss for $k=2$ ReLU teacher-student under isotropic Gaussian 
inputs has no spurious local minima. Tian's own analysis was conditional on a numerically-verified inequality (Conjecture~1 of \citealp{tian2017analytical}) 
and resolved only the case where both student vectors lie inside or
outside $\mathrm{Cone}(v_1, v_2)$. 
\citet{wu2018no} made progress by proving the absence of spurious local minima 
for \emph{orthogonal} teachers on the \emph{unit-norm manifold}. Our Theorem~\ref{thm:squared_loss_landscape} settles this open problem in the acute angle setting: it provides an unconditional, complete classification of every critical point of the unconstrained squared loss with both student weights nonzero for general $\alpha \in (0, \pi/2]$, showing that each is either the global minimum or a saddle.
\end{remark}

We prove \cref{thm:squared_loss_landscape} in three steps. The first is a dimension reduction: any solution $(w_1,w_2)$ of \eqref{eq:critical_eq} is contained in the teacher plane together with at most one orthogonal direction. The next two steps analyze the resulting three-dimensional problem and the planar cases that remain.

\subsubsection{Step 1: reduce to three dimensions}
The dimension reduction is a simple consequence of the characterization in \cref{lem:critical_eq_from_flow}.
It is useful to decompose each student weight into the part lying in the teacher plane $\operatorname{span}\{v_1,v_2\}$ and the part orthogonal to that plane. For a solution of \eqref{eq:critical_eq}, the two orthogonal parts cannot vary independently: the system \eqref{eq:critical_eq} forces them to be linearly dependent, as summarized in the following proposition.

\begin{proposition}\label{prop:3D}
Let $w_1,w_2\neq0$ satisfy \eqref{eq:critical_eq}. 
For $i=1,2$, decompose
$w_i=w_i^\parallel+w_i^\perp$, where
$w_i^\parallel\in\operatorname{span}\{v_1,v_2\}$ and
$w_i^\perp\perp\operatorname{span}\{v_1,v_2\}$. Then
$w_1^\perp$ and $w_2^\perp$ are linearly dependent.
Consequently,
$
    \dim\operatorname{span}\{v_1,v_2,w_1,w_2\}\leq3.
$
\end{proposition}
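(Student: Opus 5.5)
Let $P^\perp$ denote orthogonal projection onto the complement of the teacher plane $\operatorname{span}\{v_1,v_2\}$. Applying $P^\perp$ to both equations in \eqref{eq:critical_eq} removes the teacher terms, since $P^\perp v_1=P^\perp v_2=0$. Writing $\bar w_i^\perp=w_i^\perp/\|w_i\|_2$, we obtain
\begin{equation*}
\begin{aligned}
a_1\,\bar w_1^\perp-(\pi-\gamma)\|w_2\|_2\,\bar w_2^\perp&=0,\\
a_2\,\bar w_2^\perp-(\pi-\gamma)\|w_1\|_2\,\bar w_1^\perp&=0.
\end{aligned}
\end{equation*}
The plan is to show directly from these two relations that $w_1^\perp$ and $w_2^\perp$ are linearly dependent.

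The key observation is that $\gamma=\angle(w_1,w_2)\in[0,\pi]$, so $\pi-\gamma\ge0$, with equality only when $\bar w_2=-\bar w_1$. We split into two cases.

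\textbf{Case $\gamma<\pi$.} Here $(\pi-\gamma)\|w_2\|_2>0$, so the first relation gives $\bar w_2^\perp=\frac{a_1}{(\pi-\gamma)\|w_2\|_2}\,\bar w_1^\perp$. Thus $w_2^\perp$ is a scalar multiple of $w_1^\perp$ (possibly zero), which is linear dependence.

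\textbf{Case $\gamma=\pi$.} Here $w_2=-c\,w_1$ for some $c>0$, so $w_2^\perp=-c\,w_1^\perp$. These are again linearly dependent, trivially.

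This case split removes any need to track the signs or vanishing of $a_1,a_2$; the only point needing care is that the coefficient $(\pi-\gamma)$ vanishes exactly in the antipodal configuration, which is itself degenerate. For the consequence, the plane $\operatorname{span}\{v_1,v_2\}$ has dimension $2$, and $\operatorname{span}\{v_1,v_2,w_1,w_2\}=\operatorname{span}\{v_1,v_2\}+\operatorname{span}\{w_1^\perp,w_2^\perp\}$. Linear dependence makes the second summand at most one-dimensional, so the total dimension is at most $3$.

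I expect no substantial obstacle. The only subtlety is confirming that projecting \eqref{eq:critical_eq} is legitimate, i.e.\ that the scalar coefficients $a_i$, $\pi-\theta_{ij}$ and $\pi-\gamma$ do not depend on the direction being projected. They are scalars, so this holds immediately.
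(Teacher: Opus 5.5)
Your proposal is correct and matches the paper's proof essentially step for step. Like the paper, you project \eqref{eq:critical_eq} onto the complement of the teacher plane, write $\bar w_2^\perp$ as a multiple of $\bar w_1^\perp$ when $\gamma<\pi$ (using $(\pi-\gamma)\|w_2\|_2>0$), use $\bar w_2=-\bar w_1$ when $\gamma=\pi$, and then count dimensions via $\operatorname{span}\{v_1,v_2\}+\operatorname{span}\{w_1^\perp,w_2^\perp\}$.
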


The proof is in \cref{subsubsec:three-dimensional-reduction-proof}.
Consequently, for any stationary point $(w_1,w_2)$, the vectors $w_1,w_2$ are contained in the teacher plane together with at most one orthogonal direction.

\subsubsection{Step 2: reduce the nonplanar case to two dimensions}

Given the previous reduction, there are two possible types of stationary points: either both $w_1$ and $w_2$ lie in the teacher plane, or at least one of them lies outside the teacher plane.
We next show that for the latter case, the weights lie in the plane spanned by the bisector and an orthogonal direction.

\begin{lemma}[Three-dimensional symmetry reduction]\label{lem:3Dsym}
Let $w_1,w_2\neq0$ satisfy \eqref{eq:critical_eq}. 
If at least one of $w_1,w_2$ lies outside the teacher plane, then $w_1,w_2\in\operatorname{span}\{\bar v,\xi\}$ for some unit vector $\xi\perp\operatorname{span}\{v_1,v_2\}$.
\end{lemma}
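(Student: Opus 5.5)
The plan is to fix a stationary point with some out-of-plane component and show that both in-plane parts are parallel to $\bar v$. By \cref{prop:3D}, there is a unit vector $\xi\perp\operatorname{span}\{v_1,v_2\}$ with $w_i^\perp=c_i\xi$, where $(c_1,c_2)\neq(0,0)$. Write $p_i\coloneqq\inner{w_i}{v_\Delta}$. Since $(\bar v,v_\Delta,\xi)$ is orthonormal, the lemma is equivalent to $p_1=p_2=0$. I will obtain two linear $2\times2$ systems, one for $c=(c_1,c_2)$ and one for $p=(p_1,p_2)$, that share the same matrix. The $p$-system carries an extra diagonal term of fixed sign.

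\emph{Step 1: the $\xi$-system.} Take the inner product of both equations in \eqref{eq:critical_eq} with $\xi$. The teacher terms vanish, and with $A_i\coloneqq a_i/\|w_i\|_2$ and $B\coloneqq\pi-\gamma$ we get
\[
Mc=0,\qquad M\coloneqq\begin{pmatrix}A_1&-B\\-B&A_2\end{pmatrix}.
\]
The degenerate case $\gamma=\pi$ ($w_2=-\lambda w_1$) is handled first. Then $B=0$ and $\bar w_2=-\bar w_1$, so both teacher combinations $(\pi-\theta_{i1})v_1+(\pi-\theta_{i2})v_2$ are parallel to $\bar w_1$. These combinations are nonzero and lie in the teacher plane, which forces $w_1$ into the teacher plane, a contradiction. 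For $B>0$, $c\neq0$ forces both $c_1,c_2\neq0$ and $\det M=0$, that is, $A_1A_2=B^2$. Hence $A_1,A_2$ are nonzero and of the same sign.

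\emph{Step 2: the $v_\Delta$-system.} Take the inner product with $v_\Delta$, using $\inner{v_1}{v_\Delta}=-\inner{v_2}{v_\Delta}=\sin\frac{\alpha}{2}$. This gives
\[
(\theta_{i2}-\theta_{i1})\sin\tfrac{\alpha}{2}+A_ip_i-Bp_j=0,\qquad \{i,j\}=\{1,2\}.
\]
The key monotonicity is that $\theta_{i2}-\theta_{i1}$ has the same sign as $p_i$, and vanishes exactly when $p_i=0$. This holds because $w_i$ is closer to $v_1$ than to $v_2$ iff $\inner{w_i}{v_1-v_2}>0$. So I can write $\theta_{i2}-\theta_{i1}=g_ip_i/\sin\frac{\alpha}{2}$ with $g_i>0$ (any positive value if $p_i=0$), and the system becomes $(M+G)p=0$ with $G=\operatorname{diag}(g_1,g_2)$.

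If exactly one $p_i$ is nonzero, the equation for the other index reads $-Bp_i=0$, which is a contradiction. So it remains to show $\det(M+G)\neq0$. Using $A_1A_2=B^2$,
\[
\det(M+G)=A_1g_2+A_2g_1+g_1g_2 .
\]
When $A_i>0$ this is positive, and we are done; equivalently, $M$ is positive semidefinite and $p^\top(M+G)p>0$.

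\emph{Main obstacle: the case $A_1,A_2<0$.} This is the case realized by the actual out-of-plane saddle, where $c_1c_2<0$. There I need $g_1g_2<|A_1|g_2+|A_2|g_1$, and it suffices to prove $g_i<|A_i|$ for each $i$. I would first bound $g_i$ explicitly. The quantity $\theta_{i2}-\theta_{i1}$, viewed as a function of $p_i$ with the other coordinates fixed, has derivative controlled by $\sin\frac{\alpha}{2}/(\|w_i\|_2\sin\theta)$ for the relevant angles. This should give $g_i\le\sin\frac{\alpha}{2}\,(\theta_{i2}-\theta_{i1})/p_i\lesssim 1/\|w_i\|_2$ times an angular factor.

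I would then compare this with $|A_i|=(\pi\|w_i\|_2+\sin\gamma\|w_j\|_2-\sin\theta_{i1}-\sin\theta_{i2})/\|w_i\|_2$. To lower-bound it, I would use the $\bar w_i$-projection of \eqref{eq:critical_eq}, which expresses $a_i$ through $(\pi-\theta_{i1})\cos\theta_{i1}+(\pi-\theta_{i2})\cos\theta_{i2}-B\|w_j\|_2\cos\gamma$.

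If a direct bound is too crude, the fallback is a symmetrization argument. Reflect across $\operatorname{span}\{\bar v,\xi\}$, which swaps $v_1\leftrightarrow v_2$ and is a symmetry of $L$. Then compare the stationary point with its reflection, and use strict monotonicity of $\theta_{i2}-\theta_{i1}$ in $p_i$ to rule out asymmetric solutions.
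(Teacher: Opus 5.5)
Most of your set-up is correct. The $\xi$-system $Mc=0$, the disposal of $\gamma=\pi$, the $v_\Delta$-system $(M+G)p=0$ with $g_1,g_2>0$, the exclusion of exactly one nonzero $p_i$, and the identity $\det(M+G)=A_1g_2+A_2g_1+g_1g_2$ all check out. The gap is that the case you close never occurs, and the case you leave open is the entire lemma.

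\textbf{Why $A_1,A_2<0$ always.} Write $T_i=(\pi-\theta_{i1})v_1+(\pi-\theta_{i2})v_2$ and $\eta_i=\langle\bar w_i,\xi\rangle$. Using the $\xi$-equations to eliminate $a_1,a_2$ from the remaining components of \eqref{eq:critical_eq} gives $\eta_2T_2=-(\|w_1\|_2/\|w_2\|_2)\,\eta_1T_1$. Since $\langle T_i,\bar v\rangle>0$, this forces $c_1c_2<0$, hence $A_1,A_2<0$ at every out-of-plane stationary point.

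\textbf{Your sufficient condition restates the goal.} With $A_i<0$, your rows read $(g_i-|A_i|)p_i=Bp_j$. So the inequality $g_i<|A_i|$ is \emph{equivalent} to $p_1p_2<0$. Meanwhile $\det(M+G)=0$, i.e.\ $|A_1|/g_1+|A_2|/g_2=1$, forces $g_i>|A_i|$ for both $i$, i.e.\ $p_1p_2>0$. So what you propose to prove is exactly the claim that the two students cannot split to the same side of the bisector.

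\textbf{Your route cannot deliver it.} An estimate built from student $i$'s own equation (an upper bound on $g_i$ plus the $\bar w_i$-projection) cannot work, because that equation alone admits configurations with $A_i<0$ and $p_1p_2>0$. To build one, take $\bar w_i$ near $\bar v$ with small positive $v_\Delta$- and $\xi$-coordinates and $a_i<0$ small. Define $w_j$ by $(\pi-\gamma)\|w_j\|_2\bar w_j=T_i+a_i\bar w_i$, and choose $\|w_i\|_2$ to match the definition of $a_i$. Your sketch also never says where $\alpha\le\pi/2$ would be used. The symmetrization fallback only restates the problem: the reflection of a stationary point is stationary, but showing it coincides with the original point is the whole content.

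\textbf{The missing idea} is a genuinely two-student argument combined with a quantitative property of the teacher force. The same elimination gives
$\bar w_2=t\bar w_1+T_1/((\pi-\gamma)\|w_2\|_2)$ and $T_2=-\frac{\|w_1\|_2}{t\|w_2\|_2}T_1$, with $t=\eta_2/\eta_1<0$. So $T_2$ is a positive multiple of $T_1$, and the splitting coordinates $\langle\bar w_i,v_\Delta\rangle$ are either both zero or of the same sign.

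The paper rules out the same-sign case with \cref{lem:teacher-force-azimuth}. That lemma says: if $\langle x,\bar v\rangle>0$ and $\langle x,v_\Delta\rangle\neq0$, then $T(x)$ lies strictly closer to the bisector than the in-plane part of $x$. This is where the angle restriction enters, through $\sin(\alpha/2)\le(\pi-\alpha/2)\cos(\alpha/2)$. The argument then runs as follows.
\begin{itemize}
\item Reflect and relabel so that both splitting coordinates are positive and $\langle\bar w_1,\bar v\rangle>0$.
\item Apply the lemma to $\bar w_1$ and substitute the formula for $\bar w_2$. This shows $\langle\bar w_2,v_\Delta\rangle\langle T_1,\bar v\rangle-\langle\bar w_2,\bar v\rangle\langle T_1,v_\Delta\rangle$ equals $t$ times a positive number, so it is negative. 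This also yields $\langle\bar w_2,\bar v\rangle>0$.
\item Apply the lemma to $\bar w_2$, using $T_2\parallel T_1$. This shows the same quantity is positive, a contradiction.
\end{itemize}
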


The proof is in \cref{subsubsec:splitting-coordinate-elimination}.
Geometrically, \cref{lem:3Dsym} says that for a stationary point, if one weight is not contained in
the teacher plane, both weights have zero component along the splitting direction \(v_\Delta\).
Thus the
three-dimensional problem reduces to two planar problems, one in
$\operatorname{span}\{v_1,v_2\}$ and one in
$\operatorname{span}\{\bar v,\xi\}$.

\subsubsection{Step 3: solve the 2D cases}

\paragraph{Out-of-plane solutions.}
We first consider solutions in
$\operatorname{span}\{\bar v,\xi\}$ that do not lie in the teacher plane. By
\cref{lem:perp_plane_symmetry}, such a solution is symmetric with respect to the
teacher plane: after possibly replacing $\xi$ by $-\xi$, the two student weights have the same component in $\bar v$ and opposite components in $\xi$.
This corresponds to the out-of-plane saddle form in
\eqref{eq:out-of-plane-saddle-form}. 
The stationarity condition therefore reduces to a scalar equation for the angle away from the teacher plane.

\begin{lemma}[Out-of-plane stationary points]\label{lem:critical_point_perp_plane}
Let \(\alpha\in(0,\pi/2]\), suppose $d\geq 3$, and let \(\xi\) be any unit vector
perpendicular to \(\operatorname{span}\{v_1,v_2\}\).
For any nonzero $w_1,w_2 \in \operatorname{span}\{\bar v,\xi\}$ such that $\langle w_1,\xi\rangle\neq0$ or $\langle w_2,\xi\rangle\neq0$,
the pair \((w_1,w_2)\) is a stationary point of the squared loss \eqref{eq:population_squared_loss_and_gf} if and only if, after possibly swapping
the students or replacing \(\xi\) by \(-\xi\), it has the form
\eqref{eq:out-of-plane-saddle-form}.
Furthermore, every such stationary point has a strictly negative Hessian
direction and thus is a saddle.
\end{lemma}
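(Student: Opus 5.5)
The plan has two parts: first show that stationary points in $\operatorname{span}\{\bar v,\xi\}$ that leave the teacher plane must have the form \eqref{eq:out-of-plane-saddle-form}, and then exhibit a negative Hessian direction at such a point.

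\textbf{Reduction to the symmetric form.} Write $w_i=\rho_i(\cos\kappa_i\,\bar v+\sin\kappa_i\,\xi)$ with $\rho_i>0$. Because $\xi\perp v_1,v_2$ and $\bar v$ bisects them, $\cos\theta_{i1}=\cos\theta_{i2}=\cos\kappa_i\cos\frac{\alpha}{2}$, so $\theta_{i1}=\theta_{i2}=\theta_\alpha(\kappa_i)$ in the notation \eqref{eq:out-of-plane-angle} (extended to all $\kappa_i$). Then $(\pi-\theta_{i1})v_1+(\pi-\theta_{i2})v_2=2(\pi-\theta_{i1})\cos\frac{\alpha}{2}\,\bar v$, and the $v_\Delta$-component of \eqref{eq:critical_eq} vanishes automatically. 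Each equation in \eqref{eq:critical_eq} therefore becomes two scalar equations, one in the $\bar v$ direction and one in the $\xi$ direction, for four unknowns $(\rho_1,\rho_2,\kappa_1,\kappa_2)$.

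The symmetry statement is already available from \cref{lem:perp_plane_symmetry}: after swapping the students or flipping $\xi$, we have $\rho_1=\rho_2=r$ and $\kappa_2=-\kappa_1=-\kappa$ with $\kappa\in(0,\pi/2]$. We need $\cos\kappa>0$, since otherwise the $\bar v$-component equation cannot balance the positive teacher term. Substituting into the $\xi$-component gives $a_1=(\pi-\gamma)r\cos\gamma$ with $\gamma=2\kappa$. Combining this with the $\bar v$-component should reduce to $r=r_\alpha(\kappa)$ and $F_\alpha(\kappa)=0$. The algebra behind this is that $a_1$ contains $-\pi r-\sin(2\kappa)r$, which produces the denominator $\sin 2\kappa+2\kappa$ in \eqref{eq:out-of-plane-radius}. \Cref{lem:out-of-plane-root-uniqueness} then forces $\kappa=\kappa_\alpha$. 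The converse direction is the same computation run in reverse.

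\textbf{Negative curvature.} The natural escape direction is the splitting direction: perturb $w_1\mapsto w_1+\epsilon v_\Delta$ and $w_2\mapsto w_2+\epsilon' v_\Delta$. The antisymmetric choice $\epsilon'=-\epsilon$ is the expected one, matching the specialization picture. By the reflection symmetry $v_\Delta\mapsto -v_\Delta$, which swaps $v_1$ and $v_2$, the gradient is orthogonal to these directions. The second variation then splits into:
\begin{itemize}
\item a teacher-correlation part, coming from $-\EE[f^*\phi(\langle w_i,x\rangle)]$, whose second derivative along $v_\Delta$ is negative because the teacher mass is spread along $v_\Delta$;
\item a student--student interaction part.
\end{itemize}
I would compute these using the closed form $\EE[\phi(\langle u,x\rangle)\phi(\langle w,x\rangle)]=\frac{\|u\|\|w\|}{2\pi}(\sin\angle+(\pi-\angle)\cos\angle)$ and differentiate twice in $\epsilon$.

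A cleaner option, to try first, is to use the stationarity identities for the $\bar v$ and $\xi$ components to eliminate the radial terms. This should reduce the quadratic form to an explicit scalar function of $\kappa_\alpha$ and $\alpha$ whose sign must be shown negative. If the antisymmetric direction fails in some range, fall back to a combination of the $v_\Delta$ perturbation with rotation within $\operatorname{span}\{\bar v,\xi\}$, or to the symmetric perturbation $\epsilon'=\epsilon$.

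\textbf{Main obstacle.} I expect the sign verification of this second variation to be hardest. It only has to hold at the implicitly defined root $\kappa_\alpha$, for all $\alpha\in(0,\pi/2]$. My approach would be to use $F_\alpha(\kappa_\alpha)=0$ to rewrite $\cos\frac{\alpha}{2}$ terms in terms of $r_\alpha$, and then prove a one-variable trigonometric inequality in $\kappa$, possibly together with bounds on where $\kappa_\alpha$ lies.
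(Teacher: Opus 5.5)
Your route is the paper's: \cref{lem:perp_plane_symmetry} gives the mirror form, the $\xi$- and $\bar v$-components of \eqref{eq:critical_eq} yield $r=r_\alpha(\kappa)$ and $F_\alpha(\kappa)=0$, \cref{lem:out-of-plane-root-uniqueness} pins down $\kappa_\alpha$, and the test direction is $(v_\Delta,-v_\Delta)$. Two things need fixing.

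First, a computational slip. The $\xi$-component of the first equation in \eqref{eq:critical_eq} is $\sin\kappa\,\bigl(a_1+(\pi-2\kappa)r\bigr)=0$, so $a_1=-(\pi-\gamma)r$, not $(\pi-\gamma)r\cos\gamma$. Only the former, combined with $a_1=2\sin\theta_\alpha(\kappa)-\pi r-r\sin 2\kappa$, gives $2\sin\theta_\alpha(\kappa)=r(\sin 2\kappa+2\kappa)$. Your version would produce a different denominator. With the correct relation, the $\bar v$-component reads $(\pi-\theta_\alpha(\kappa))\cos\frac{\alpha}{2}=(\pi-2\kappa)r\cos\kappa$, i.e.\ $F_\alpha(\kappa)=0$.

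Second, and more substantively, you never establish the sign of the second variation, so the saddle half of the lemma remains only a plan. The obstacle you anticipate is also not there. Use the blocks of \cref{lem:relu-two-neuron-hessian}, together with $v_\Delta\perp\bar w_1,\bar w_2$, $\bar n_{w_2,w_1}\in\operatorname{span}\{\bar v,\xi\}$, and $\langle\bar n_{v_j,w_i},v_\Delta\rangle^2=\sin^2(\alpha/2)/\sin^2\theta$ with $\theta=\theta_\alpha(\kappa)$. The form in the direction $(v_\Delta,-v_\Delta)$ is then
\[
1+\frac{\sin 2\kappa}{\pi}-\frac{2\sin\theta}{\pi r}\Bigl(1+\frac{\sin^2(\alpha/2)}{\sin^2\theta}\Bigr)-\frac{\pi-2\kappa}{\pi}.
\]
Substituting only the radial relation $2\sin\theta/r=\sin 2\kappa+2\kappa$ collapses it to $-\frac{\sin 2\kappa+2\kappa}{\pi}\cdot\frac{\sin^2(\alpha/2)}{\sin^2\theta}<0$ for every $\kappa\in(0,\pi/2)$. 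So you need neither $F_\alpha(\kappa_\alpha)=0$, nor a trigonometric inequality, nor a localization of $\kappa_\alpha$, nor fallback directions.

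The cancellation is exact for a structural reason. When $d\ge4$ and $u\perp\operatorname{span}\{v_1,v_2,\xi\}$, the same computation gives $0$ for $(u,-u)$. That direction is the infinitesimal rotation of $\xi$ within the saddle family. The direction $(v_\Delta,-v_\Delta)$ differs from it only through the $\bar n\bar n^\top$ part of the teacher blocks, that is, through the teachers' spread along $v_\Delta$.
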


The proof is in \cref{subsec:out-of-plane-stationary-points}. For \(d=2\), the out-of-plane branch is vacuous. For \(d\geq3\), the out-of-plane stationary points form a family of saddles indexed by the unit vectors in
\(\operatorname{span}\{v_1,v_2\}^\perp\).

\paragraph{In-plane solutions.}
We now turn to the other planar case where $w_1,w_2\in\operatorname{span}\{v_1,v_2\}$. 
This case contains the global minima where the student weights match the teacher weights up to permutation; the only other stationary points are the three bisector-supported families listed below.

\begin{lemma}[In-plane stationary points]\label{lem:critical_points_in_plane}
Let $r_+$ and $r_-$ be as defined in \eqref{eq:def_r}.
Let $w_1,w_2\neq0$ satisfy $w_1,w_2\in\operatorname{span}\{v_1,v_2\}$. Then $(w_1,w_2)$ is a stationary point of the squared loss \eqref{eq:population_squared_loss_and_gf} if and only if, after possibly swapping $w_1$ and $w_2$, either $(w_1,w_2)$ is a global minimum with $w_1=v_1,w_2=v_2$ or $(w_1,w_2)$ is a saddle satisfying one of the three forms in \eqref{eq:in-plane-saddle-form}.
\end{lemma}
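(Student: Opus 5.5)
Since $w_1,w_2$ and $v_1,v_2$ all lie in the teacher plane, the vector equations \eqref{eq:critical_eq} of \cref{lem:critical_eq_from_flow} are four scalar equations in polar coordinates. Write $w_i=\rho_i(\cos\beta_i\,\bar v+\sin\beta_i\,v_\Delta)$ and place $v_{1,2}$ at angles $\pm\alpha/2$. Then $\theta_{ij}$ and $\gamma$ are explicit piecewise-linear functions of $(\beta_1,\beta_2)$, and everything reduces to a classification over the angular configuration.

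I would split into cases by where each $\beta_i$ lies relative to the four rays $\pm v_1,\pm v_2$, equivalently which sector of the plane each student occupies. Within a fixed sector, every angle is affine in $\beta_i$. I would project each equation of \eqref{eq:critical_eq} onto $\bar w_i$ and onto $\bar w_i^\perp$.

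The $\bar w_i^\perp$-projections are the key constraints. They contain no $a_i$, and read
\[
(\pi-\theta_{i1})\sin\angle(v_1,w_i)\pm(\pi-\theta_{i2})\sin\angle(v_2,w_i)=\pm(\pi-\gamma)\|w_{j}\|\sin\gamma,
\]
with signs fixed by orientation. The $\bar w_i$-projections then determine the norms.

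\textbf{Symmetric sub-case.} First treat $\gamma=0$ or $\gamma=\pi$, i.e.\ collinear students. The tangential equation with a zero right-hand side forces $w_i\parallel\pm\bar v$. This is because $(\pi-\theta_{i1})\sin\theta_{i1}$ and $(\pi-\theta_{i2})\sin\theta_{i2}$ balance only at the bisector, which follows from the monotonicity of $t\mapsto(\pi-t)\sin t$ on the relevant range (similar to \cref{thm:corr-loss-global-min}). The radial equations then yield the following:
\begin{itemize}
\item $s_1+s_2=r_+$ when both students point along $+\bar v$;
\item $s_1+s_2=r_-$ when both point along $-\bar v$;
\item the mixed pair $(r_+,-r_-)$.
\end{itemize}
This is a direct computation using $\theta=\alpha/2$ or $\pi-\alpha/2$ and $\sin\gamma=0$. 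This recovers \eqref{eq:in-plane-saddle-form}.

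\textbf{Non-collinear sub-case.} Here I would take a linear combination of the two vector equations to eliminate the teacher terms. Subtracting suitably weighted copies expresses $\bar w_1,\bar w_2$ in the basis $v_1,v_2$ with coefficients determined by the $(\pi-\theta)$'s. The aim is to show that the only consistent solution is $\theta_{11}=\theta_{22}=0$ with norms $1$, i.e.\ $w_i=v_i$. Indeed, substituting $w_1=v_1,w_2=v_2$ gives $\gamma=\alpha$, $a_i=\sin\alpha-\pi-\sin\alpha=-\pi$, and the equations hold identically.

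For uniqueness I would argue sector by sector:
\begin{itemize}
\item \textbf{Both students in $\operatorname{Cone}(v_1,v_2)$.} Use an analytic inequality of the kind Tian conjectured. After eliminating norms via the radial equations, show a one-variable function of the angular offset is strictly monotone, so its only zero is at the teacher.
\item \textbf{A student outside the cone or in the opposite half-plane.} Show the tangential equation has a definite sign, since all three terms push the same way, unless $w_i\parallel\pm\bar v$. This returns us to the collinear case.
\end{itemize}
I expect this sector-wise sign/monotonicity analysis to be the main obstacle. Proving the needed transcendental inequalities uniformly in $\alpha\in(0,\pi/2]$ is exactly where Tian's argument stalled. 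I would try reducing each to a statement about $g(t)=(\pi-t)\cos t+\sin t$ and its derivative $-(\pi-t)\sin t$, using concavity on $[0,\pi]$.

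\textbf{Saddle property.} For every bisector-supported point I would exhibit an explicit negative direction. In the first family, I would perturb $w_1\mapsto w_1+\epsilon v_\Delta$ and $w_2\mapsto w_2-\epsilon v_\Delta$. The loss is smooth there because the students are away from each other's kinks, though not collinear with the kinks in a problematic way. The second-order term should be computable from the Gaussian kernel $k(u,w)=\frac{\|u\|\|w\|}{2\pi}\bigl(\sin\angle+(\pi-\angle)\cos\angle\bigr)$, and I expect it to be negative because splitting reduces the residual along the leap-2 direction.

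For the second and third families, a radial or scaling direction, or moving the $-\bar v$ neuron toward the teacher, should give a strict decrease. Finally, I would confirm that $(v_1,v_2)$ achieves $L=0$, so it is a global minimum.
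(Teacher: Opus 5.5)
Your collinear case and your negative-curvature directions agree in substance with the paper. Those directions are the split $(v_\Delta,-v_\Delta)$ at the same-direction families, and moving the $-\bar v$ neuron along $v_\Delta$ at $(r_+\bar v,-r_-\bar v)$; pure scaling directions have nonnegative curvature at those points, so they will not do. There is one inaccuracy in the collinear case. The function $t\mapsto(\pi-t)\sin t$ increases only up to $t\approx1.11$, so for $\alpha$ near $\pi/2$ you need $\theta_{i1}+\theta_{i2}=\alpha$ together with $\mathcal P'(\varphi)=2\sin\varphi\bigl(\varphi\cot\varphi\,\sin(\alpha/2)-(\pi-\alpha/2)\cos(\alpha/2)\bigr)<0$ on $(0,\alpha/2]$.

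The genuine gap is the non-collinear case. Your key claim is that a student outside $\operatorname{Cone}(v_1,v_2)$ has a tangential equation of definite sign because ``all three terms push the same way.'' This is false. Multiplying the two tangential equations shows that any non-collinear stationary point has one student at $\varphi_-\in(-\pi,0)$ and the other at $\varphi_+\in(0,\pi)$, with $0<\varphi_+-\varphi_-<\pi$. In every such configuration, inside the cone or not, the repulsive student term opposes the teacher term in both tangential equations. Both equations are then solved by the positive norms $\|w_{3-i}\|_2=\mathcal P'(\varphi_i)/\bigl((\pi-\gamma)\sin(\varphi_{3-i}-\varphi_i)\bigr)$. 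So the tangential equations by themselves never exclude anything.

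A contradiction appears only after you substitute these norms into the two radial equations. That leaves two transcendental equations in $(\varphi_-,\varphi_+)$, and you must show their only common solution is $(-\alpha/2,\alpha/2)$. Your proposal gives no argument for this.

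\begin{itemize}
\item \textbf{Both students in the cone.} Appealing to ``an inequality of the kind Tian conjectured'' is not a proof. The paper observes that, after eliminating the norms, the radial equations are linear and homogeneous in the nonzero vector $\bigl((\pi-\alpha/2)\cos(\alpha/2),\sin(\alpha/2)\bigr)$. Hence a $2\times2$ determinant vanishes, and it factors as $\tfrac12(\gamma^2-\sin^2\gamma)\bigl((2\pi-\gamma)\sin p-p\sin\gamma\bigr)$ with $p$ the angular asymmetry. This forces $p=0$, and then $\gamma=\alpha$.
\item \textbf{One student in the cone, the other outside.} This is the case you meant to dismiss by sign, with $\varphi_+\in(\alpha/2,\pi-\alpha/2)$. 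It is exactly where Tian's analysis stopped, and it is the hardest part of the paper's proof. The paper writes the eliminated system as two linear relations and deduces an ordering of the angular offsets. It then contradicts the resulting ratio identity using separate upper and lower bounds on two $\tanh$-type functions.
\item \textbf{Remaining configurations.} These are a student in $-\operatorname{Cone}(v_1,v_2)$, or both students outside both cones. They too are excluded only after the radii are eliminated, and one of them also needs $\sin(\alpha/2)\le(\pi-\alpha/2)\cos(\alpha/2)$.
\end{itemize}

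Without these arguments, the ``only if'' direction of the lemma remains unproved.
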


The in-plane classification is proved
in \cref{subsec:in-plane-stationary-points}. 
Combining the above results yields the complete classification of stationary points in \cref{thm:squared_loss_landscape}.
The symmetric point
$w_1=w_2=(r_+/2)\bar v$ in the positive-bisector saddle family is the saddle
shown in \cref{fig:intro}. It remains to analyze how gradient flow moves
through this landscape from small initialization.

\section{Training dynamics}\label{sec:training_dynamics}

Next we study the gradient-flow trajectory from small initialization. Our analysis divides into two phases: \textit{Phase 1} shows that student neurons rapidly align with the bisector $\bar v$ and then their norms grow, and \textit{Phase 2} studies what happens near the saddle. We then present global convergence guarantees and escape-time estimates. For concise notation, we write $r_s=r_+/2$ where $r_+$ is defined in \cref{thm:squared_loss_landscape}, $w_s=r_s\bar v$, and $w_{\mathrm{saddle}}\coloneqq(w_s,w_s)$ for the symmetric positive-bisector saddle.

\subsection{Phase 1: alignment with the bisector}\label{subsec:phase1}
We first study how the gradient flow approaches the saddle point $w_{\mathrm{saddle}}$.
This involves two subphases: The direction dynamics drives both student neurons toward $\bar v$, and then the radial dynamics grows their norms to the saddle radius.

\paragraph{Subphase 1: Directional alignment.}
By direct calculation, the student directions along gradient flow admit the following evolution:
\[
    \begin{aligned}
    \frac{\diff\bar w_i}{\diff t}
    &=
    \frac{1}{2\pi\|w_i\|_2}
    P_{\bar w_i}^{\perp}
    \big(
        \underbrace{(\pi-\theta_{i1})v_1+(\pi-\theta_{i2})v_2}_{\text{teacher force}}
        -
        \underbrace{(\pi-\gamma)w_{3-i}}_{\text{student interaction}}
    \big),
    \qquad i=1,2.
    \end{aligned}
\]
The projected teacher force is the directional part of the teacher-student correlation term analyzed in \cref{thm:corr-loss-global-min}. Note that this term dominates the student interaction when $\|w_{3-i}\|_2$ is very small. Thus the teacher force controls the angular motion while the weights are small.
Moreover, the projected teacher force points toward the bisector by \cref{lem:phase-one-teacher-force-alignment}:
\[
    \textstyle\left\langle
        \bar v,
        P_{\bar w_i}^{\perp}
        \big((\pi-\theta_{i1})v_1+(\pi-\theta_{i2})v_2\big)
    \right\rangle
    \ge 0,
\]
with equality only at $\bar w_i=\bar v$ or $\bar w_i=-\bar v$. Thus, except at these two directions, the teacher force increases the alignment of the student neurons with $\bar v$. 
The factor $1/\|w_i\|_2$ makes this aligning very fast near small initialization, while the norm barely changes.

\paragraph{Subphase 2: Radial growth toward the saddle.}
Subphase 1 brings both student directions rapidly close to $\bar v$ while their norms are still small.
The subsequent radial-growth stage takes only a fixed amount of time. Over this interval, the direction dynamics can amplify the angular errors by only a bounded factor. Therefore, for sufficiently small initialization, the alignment established in Subphase 1 persists until the trajectory reaches the saddle neighborhood.

To understand the radial dynamics in the second subphase, suppose first that $w_i=r_i\bar v$ exactly. The norm equations then reduce to
\[
    \dot r_i=r_s-\frac12(r_1+r_2),
    \qquad i=1,2.
\]
Since $r_1+r_2$ is initially small, both derivatives are positive and the two norms grow. Moreover,
\[
    \frac{\diff}{\diff t}(r_1+r_2)
    =r_+-(r_1+r_2),
    \qquad
    \frac{\diff}{\diff t}(r_1-r_2)=0.
\]
Hence the total radius is attracted to $r_+=2r_s$, while the radial imbalance remains fixed. Small, balanced initialization makes this imbalance small, so both norms grow nearly together toward $r_s$. The alignment and balance estimates show that the full trajectory follows this reduced dynamics up to small errors and therefore enters a neighborhood of $w_{\mathrm{saddle}}$.

Combining the two subphases gives the following finite-time approach result.

\begin{lemma}[Phase 1: finite-time approach to the bisector saddle]
\label{lem:phase1_bisector_saddle}
Fix $d\geq2$ and unit teacher weights $v_1,v_2\in\RR^d$ with
$\alpha\coloneqq\angle(v_1,v_2)\in(0,\pi/2]$. Consider the two-student
population gradient flow \eqref{eq:population_squared_loss_and_gf} under
isotropic Gaussian inputs, and write $w(t)=(w_1(t),w_2(t))$.
Fix an absolute constant $C\geq 1$ and an angular margin
$\eta\in(0,\pi/2)$. For every target radius $\rho>0$, there are
$t_0=t_0(\alpha,\rho)<\infty$ and $r_*=r_*(\alpha,C,\eta,\rho)>0$ such that the
following holds. 
Recall the initialization scale $r_{\mathrm{init}}$ from
\eqref{eq:initialization-scale} and the balance condition
\eqref{eq:c-balanced}. If $0<r_{\mathrm{init}}\leq r_*$,
the initialization is $C$-balanced, and
$\angle(\bar w_i(0),-\bar v)\geq\eta$ for $i=1,2$, then the gradient-flow
solution exists on $[0,t_0]$, both student weights remain nonzero on this interval,
and there is a time $t_{\rm in}\leq t_0$ such that
\[
    \|w(t_{\rm in})-w_{\mathrm{saddle}}\|_2<\rho.
\]
\end{lemma}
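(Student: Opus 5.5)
The proof splits into the two subphases described above. Throughout we write $r_i=\|w_i\|_2$ and use the explicit population gradient. The radial and directional equations are
\[
\dot r_i=\frac{1}{2\pi}\Bigl(\langle \bar w_i,(\pi-\theta_{i1})v_1+(\pi-\theta_{i2})v_2\rangle+\sin\theta_{i1}+\sin\theta_{i2}-\pi r_i-(\pi-\gamma)r_{3-i}\cos\gamma-r_{3-i}\sin\gamma\Bigr),
\]
together with the $\bar w_i$ equation displayed above. Both right-hand sides are Lipschitz in $w$ on compact sets where $r_1,r_2$ are bounded away from $0$. Local existence therefore follows, and existence on $[0,t_0]$ follows once we show the norms stay in a fixed compact interval $[c\,r_{\rm init},R]$.

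\textbf{Subphase 1 (alignment at small radius).} Let $g(\bar w)\coloneqq\langle\bar v,P^\perp_{\bar w}((\pi-\theta_1)v_1+(\pi-\theta_2)v_2)\rangle$. By \cref{lem:phase-one-teacher-force-alignment}, $g\ge0$ with equality only at $\pm\bar v$. By continuity and compactness, on the set $\{\angle(\bar w,-\bar v)\ge\eta/2,\ \angle(\bar w,\bar v)\ge\epsilon\}$ we have $g\ge c(\alpha,\eta,\epsilon)>0$. Hence
\[
\frac{\diff}{\diff t}\langle\bar w_i,\bar v\rangle\ge \frac{c-\pi r_{3-i}}{2\pi r_i}.
\]
Since $|\dot r_i|\le K(1+r_1+r_2)$, on a time window of length $\tau=O(r_{\rm init})$ the norms stay within a factor $2$ of their initial values. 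The $C$-balance condition then gives $r_{3-i}\le 2Cr_i$, which keeps the interaction term a perturbation of relative size $O(C\,r_{\rm init})$. The rate $c/(2\pi r_i)\gtrsim 1/(C r_{\rm init})$ pushes $\langle\bar w_i,\bar v\rangle$ monotonically up. In particular the trajectory never approaches $-\bar v$, which keeps the margin $\eta/2$. Within time $O(C r_{\rm init}/c)$, both directions therefore satisfy $\angle(\bar w_i,\bar v)\le\epsilon$, while $r_i\in[r_i(0)/2,2r_i(0)]$. Near $\bar v$, the linearization of the teacher force is a strict contraction on the sphere (Hessian of $L_{\rm corr}$ positive definite, \cref{thm:corr-loss-global-min}). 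We can therefore run this subphase a little longer and reach $\epsilon'\ll\epsilon$, with $\epsilon'\to0$ as $r_{\rm init}\to0$.

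\textbf{Subphase 2 (radial growth).} The reduced system at $\bar w_1=\bar w_2=\bar v$ is $\dot r_i=r_s-\tfrac12(r_1+r_2)$. Starting from $(r_1,r_2)$ with $|r_1-r_2|\le 4Cr_{\rm init}$ and $r_1+r_2\le 4r_{\rm init}$, this system reaches within $\rho/2$ of $(r_s,r_s)$ by a time $t_0'=O(\log(r_s/\rho)+\log(1/r_{\rm init}))$. This time depends on $r_{\rm init}$, which is a problem because the lemma wants $t_0=t_0(\alpha,\rho)$. To avoid the dependence, I use the fact that the full linear growth rate near zero is $\dot r_i\approx r_s>0$ (not exponential). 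The reduced solution is explicit: $r_1+r_2=r_+-(r_+-S_0)e^{-t}$, so the time to reach within $\rho$ of $r_+$ is $\log(r_+/\rho)$, uniformly as $S_0\to0$. Hence $t_0=\log(2r_+/\rho)+1$ works.

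The key estimate is a Grönwall comparison between the true flow and the reduced flow on $[t_1,t_0]$. Let $\delta(t)=\max_i\angle(\bar w_i,\bar v)$. In the directional equation the teacher force contracts $\delta$ (the $1/r_i$ factor only helps while $\delta$ is small). The student interaction contributes $\frac{(\pi-\gamma)r_{3-i}}{2\pi r_i}\|P^\perp_{\bar w_i}\bar w_{3-i}\|\lesssim C\gamma\lesssim C\delta$, because $r_{3-i}/r_i$ stays $O(C)$: the imbalance $r_1-r_2$ has derivative $O(\delta)(r_1+r_2)$. Hence $\dot\delta\le K\delta$ and $\delta(t)\le e^{Kt_0}\epsilon'$. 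The radial equations differ from the reduced ones by $O(\delta)$, so $|r_i(t)-r_i^{\rm red}(t)|\le e^{Kt_0}\epsilon' t_0$. Choosing $r_*$ small enough that $\epsilon'$, $r_{\rm init}$ and the resulting errors are below $\rho/4$ yields $\|w(t_{\rm in})-w_{\rm saddle}\|<\rho$ for some $t_{\rm in}\le t_0$. Nonvanishing of the weights also follows, since $\dot r_i\ge r_s/2$ while $r_i$ is small and aligned.

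\textbf{Main obstacle.} The delicate step is the uniformity of the angular Grönwall bound near $r=0$. There the $1/r_i$ factor makes the directional vector field singular, so a naive Lipschitz constant blows up. I would handle this by not comparing trajectories directly but bounding $\delta$ through a differential inequality in which the singular $1/r_i$ factor multiplies only the favorable (contracting) teacher term. The unfavorable interaction term carries the ratio $r_{3-i}/r_i$, which the balance and imbalance control keep $O(C)$. This is exactly why the $C$-balanced hypothesis is needed.
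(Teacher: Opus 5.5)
Your architecture matches the paper's. Directions align on a time scale $O(r_{\mathrm{init}})$, driven by the $1/r_i$-amplified teacher force with $\psi\ge0$ (\cref{lem:phase-one-teacher-force-alignment}). The direction errors then obey a Gr\"onwall bound whose rate is the norm ratio; your estimate $D^+\angle(\bar w_i,\bar v)\le\frac{r_{3-i}}{2r_i}\sin\gamma\le\frac{r_{3-i}}{r_i}\delta$ is a valid substitute for \eqref{eq:phase-one-direction-error-sum}. Finally, $r_1+r_2$ relaxes to $r_+$ in time $\log(1/\rho)+O(1)$, uniformly in $r_{\mathrm{init}}$.

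The step that fails is Subphase~1. Your drift constant $c(\alpha,\eta,\epsilon)$ is small. Near $\bar v$ one has $\psi(q)\asymp\|q-\bar v\|_2^2$, so $c=O(\epsilon^2)$, and $c\to0$ also as $\eta\to0$. The alignment time $\gtrsim r_{\mathrm{init}}/c$ is therefore much longer than the time in which the norms double. While $r_1+r_2\le r_-/2$, \cref{lem:radial-velocity-bounds} gives $\dot r_i\ge r_-/4$, and $r_i(0)$ may be as small as $2r_{\mathrm{init}}/(C+1)$. So the claim $r_i\in[r_i(0)/2,2r_i(0)]$ is false on the window you need. This is not just a proof artifact: near $\bar v$ the angular contraction rate is of order $1/r_i$, so reaching angle $\epsilon$ requires the norms to grow by a factor polynomial in $1/\epsilon$. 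Your lower bound on the drift rate $c/(2\pi r_i)$ and your ratio bound $r_{3-i}\le 2Cr_i$ both rest on the false window.

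The repair is \cref{lem:phase-one-initial-alignment}. Drop the factor-$2$ window and use the linear sandwich $r_i(0)+\tfrac{r_-}{4}t\le r_i(t)\le r_i(0)+t$ from \cref{lem:phase-one-short-time-radial}. This keeps $r_{3-i}/r_i$ bounded (cf.\ \eqref{eq:phase-one-early-ratio}). The drift then integrates to $\frac{c}{2\pi}\log\bigl(1+t/r_i(0)\bigr)$, which is unbounded. So a window $\Lambda r_{\mathrm{init}}$ with $\Lambda\approx e^{O(1/c)}$ suffices, and the interaction costs only $O(\Lambda r_{\mathrm{init}})$, which is small once $r_*$ is small.

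Three smaller corrections are also needed.

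\emph{Imbalance identity.} The correct identity is $\frac{\mathrm{d}}{\mathrm{d}t}(r_1-r_2)=\sum_j\bigl(l(\theta_{1j})-l(\theta_{2j})\bigr)-\bigl(\tfrac12-l(\gamma)\bigr)(r_1-r_2)$. The teacher mismatch is $O(\gamma)$ with no factor $r_1+r_2$, and it is $O(1)$ before alignment. So $|r_1-r_2|$ is only $O(\Lambda r_{\mathrm{init}})+O(\delta t_0)$. This cannot by itself give $r_{3-i}/r_i=O(C)$ at radii $\sim r_{\mathrm{init}}$; that bound must come from the sandwich above.

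\emph{Non-vanishing in Subphase~2.} This is not automatic. For aligned weights, $\dot r_i\approx r_s-\tfrac12(r_1+r_2)$, which is $\ge r_s/2$ only while $r_1+r_2\le r_s$. An aligned weight does reach zero if the imbalance exceeds $r_+$. Your constant $K$ in $\dot\delta\le K\delta$ needs a lower bound on $r_i$. That lower bound needs the balance estimate, and the balance estimate needs $\delta$ small. So you need a continuity argument that closes all three at once, as in \cref{lem:phase-one-alignment-persistence} with the threshold $r_-^2/64$.

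\emph{The contraction step.} \cref{thm:corr-loss-global-min} gives only uniqueness of the minimizer, not a nondegenerate Hessian. The step producing $\epsilon'\to0$ is also unnecessary. On $[t_1,t_0]$, $\dot\delta\le K\delta$ only needs $\delta(t_1)\le\epsilon'$ for a fixed $\epsilon'=\epsilon'(\alpha,C,\rho)$. The corrected Subphase~1 delivers this directly with $\epsilon=\epsilon'$, at the cost of a smaller $r_*$.
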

The proof is in \cref{proof:phase-one}. The dependence on
$C$ and $\eta$ enters the allowed initialization scale; the time bound is
independent of these two parameters. For Gaussian initialization, the
following corollary separates the small-scale condition from the
dimension-dependent probability of balance and angular separation.

\begin{corollary}[Phase 1 under Gaussian initialization]
\label{cor:phase-one-gaussian-initialization}
In the setting of \cref{lem:phase1_bisector_saddle}, fix $C>1$,
$\eta\in(0,\pi/2)$, $\rho>0$, and let $t_0$, $r_*$ be the constants
in that lemma. Suppose
$w_1(0),w_2(0)\overset{\mathrm{iid}}{\sim}\cN(0,\sigma^2I_d)$, where $0< \sigma\le r_*/(2\sqrt d)$.
There exists $c=c(C,\eta)>0$ such that, with probability at least
$1-6e^{-cd}$, the gradient-flow solution exists on $[0,t_0]$, both student
weights remain nonzero on this interval, and
\[
    \|w(t_{\rm in})-w_{\mathrm{saddle}}\|_2<\rho
    \qquad\text{for some }t_{\rm in}\in[0,t_0].
\]
\end{corollary}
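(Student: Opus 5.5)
The plan is to reduce the corollary directly to \cref{lem:phase1_bisector_saddle}: it suffices to show that, with probability at least $1-6e^{-cd}$, the three hypotheses of that lemma hold. These are $0<r_{\mathrm{init}}\le r_*$, $C$-balance, and $\angle(\bar w_i(0),-\bar v)\ge\eta$ for $i=1,2$. The conclusion then follows deterministically on that event. The constants $t_0,r_*$ are those of the lemma, so no new dynamical argument is needed.

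\textbf{Step 1 (scale).} Write $w_i(0)=\sigma g_i$ with $g_i\sim\cN(0,I_d)$. By Gaussian norm concentration (e.g.\ Laurent--Massart or Lipschitz concentration of $\|g\|_2$), $\PP(\|g_i\|_2\ge 2\sqrt d)\le e^{-c_1 d}$. On the complement,
\[
r_{\mathrm{init}}\le \tfrac12\,\sigma(2\sqrt d+2\sqrt d)=2\sigma\sqrt d\le r_*
\]
by the assumption $\sigma\le r_*/(2\sqrt d)$. Nonzeroness of $w_i(0)$ holds almost surely. This costs two exceptional events.

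\textbf{Step 2 (balance).} Since $C>1$, choose $\delta=\delta(C)\in(0,1)$ with $(1+\delta)/(1-\delta)\le C$. Concentration gives $\PP\bigl(|\|g_i\|_2-\sqrt d|>\delta\sqrt d\bigr)\le 2e^{-c_2\delta^2 d}$. On the complement for both $i$, the ratio of the norms is at most $C$. This costs two more events, which can be merged with Step 1 if one prefers to count differently. This is where $C>1$ strictly is needed, and the constant $c$ picks up its $C$-dependence here.

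\textbf{Step 3 (angular separation).} The direction $\bar w_i(0)$ is uniform on the sphere. The event $\angle(\bar w_i(0),-\bar v)<\eta$ is the event $\langle \bar g_i,-\bar v\rangle>\cos\eta$, a spherical cap of fixed angular radius $\eta<\pi/2$. Its measure is at most $e^{-d\cos^2\eta/2}$ by the standard cap bound (or via $\langle g_i,\bar v\rangle\sim\cN(0,1)$ together with the lower bound $\|g_i\|_2\ge\sqrt d/2$). This gives two more events, each of probability $e^{-c_3(\eta)d}$.

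A union bound over the six events (two per step) gives failure probability at most $6e^{-cd}$ with $c=\min(c_1,c_2\delta^2,c_3)$. Some care may be needed to absorb the factor $2$ in the two-sided concentration bound into the exponent, for instance by using one-sided bounds or shrinking $c$ for $d\ge1$. After that, \cref{lem:phase1_bisector_saddle} applies. The only mildly delicate point is the bookkeeping of the count ``6'' and making $c$ depend only on $(C,\eta)$; nothing here is a genuine obstacle.
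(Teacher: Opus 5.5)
Your proof is correct and essentially matches the paper's: the paper also reduces to \cref{lem:phase1_bisector_saddle} via a union bound over four one-sided norm tails and two angular tails. For the norm tails it uses explicit $\chi^2$ Chernoff bounds with $\lambda=\min\{1/2,(C^2-1)/(C^2+1)\}$, so that $\sqrt{(1+\lambda)/(1-\lambda)}\le C$ and $\sqrt{1+\lambda}\le 2$; for the angular tails it uses the Gaussian tail of $\langle w_i(0),\bar v\rangle$ on the lower-norm event. Your bookkeeping worry disappears if you fold Step 1 into the upper tails of Step 2, since $(1+\delta)\sqrt d\le 2\sqrt d$, which leaves exactly the six one-sided events behind the constant $6$.
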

The proof is in \cref{proof:phase-one}.
The next subsection studies how gradient flow escapes this saddle and separates the student neurons.

\subsection{Phase 2: escape from the bisector saddle}

Before turning to the local escape mechanism, we establish a qualitative
global convergence result for Lebesgue-almost every initialization, with no
restriction on its scale or structure.

\begin{theorem}[Unrestricted almost-everywhere global convergence]
\label{thm:generic_global_convergence}
Fix $d\ge2$ and unit teachers with
$\alpha=\angle(v_1,v_2)\in(0,\pi/2]$. For Lebesgue-almost every
initialization $(w_1(0),w_2(0))\in(\RR^d)^2$, the population
gradient-flow trajectory \eqref{eq:population_squared_loss_and_gf} is defined
for every $t\ge0$, both student weights remain nonzero at every finite time,
and
\[
    (w_1(t),w_2(t))
    \longrightarrow
    (v_1,v_2)
    \quad\text{or}\quad
    (v_2,v_1).
\]
\end{theorem}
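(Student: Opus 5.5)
The natural route is the standard ``gradient flow avoids strict saddles'' argument, combined with the landscape classification of \cref{thm:squared_loss_landscape}. I would organize it in four steps.

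\textbf{Step 1: global existence and boundedness.} Along the flow, $\frac{\diff}{\diff t}L=-\|\nabla L\|^2\le0$. Since $L(w)\ge c\,(\|w_1\|_2+\|w_2\|_2)^2-C'$, every sublevel set is bounded. Hence trajectories are precompact, and solutions exist for all $t\ge0$ as long as they stay in the region where $L$ is smooth.

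\textbf{Step 2: the singular set $\{w_i=0\}$.} The loss is $C^1$ everywhere but fails to be $C^2$ there. Near $w_i=0$, the radial equation reads
\[
\frac{\diff}{\diff t}\|w_i\|_2=\tfrac{1}{2\pi}\bigl((\sin\theta_{i1}+\sin\theta_{i2})+\cos\text{-terms}\bigr)-O(\|w_i\|_2+\|w_{3-i}\|_2).
\]
From this, $\frac{\diff}{\diff t}\log\|w_i\|_2$ is bounded below by $-K$ on bounded sets. So $\|w_i(t)\|_2\ge\|w_i(0)\|_2e^{-Kt}>0$, and both weights stay nonzero at every finite time.

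\textbf{Step 3: convergence to a single critical point.} Away from $w_i=0$, $L$ is real-analytic in the angle and norm variables, because the arccos and sine expressions are analytic where $\gamma,\theta_{ij}\notin\{0,\pi\}$. I would obtain a \L{}ojasiewicz inequality near each critical point: either directly on the explicit families in \cref{thm:squared_loss_landscape}, or by noting that $L$ is semialgebraic-like (globally subanalytic) and applying Kurdyka--\L{}ojasiewicz. Then each bounded trajectory has finite length and converges to a single critical point. The one danger is that the $\omega$-limit contains points with $w_i=0$, which must be excluded separately. Near $w_i=0$ with $w_{3-i}$ arbitrary, the effective landscape for $w_i$ is the correlation term against the residual $f^*-\phi(\langle w_{3-i},\cdot\rangle)$. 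Generically, this has a direction of first-order decrease, so $\|w_i\|_2$ grows; nongeneric configurations form a measure-zero set I would handle with the same stable-manifold argument as in Step 4.

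\textbf{Step 4: the limit is a global minimum for almost every initialization.} By \cref{thm:squared_loss_landscape}, every nonzero critical point other than $(v_1,v_2)$ and $(v_2,v_1)$ is a saddle with a strictly negative Hessian direction. The flow map is a diffeomorphism on the open set where the flow is defined. By the center--stable manifold theorem, at each saddle the set of points converging to it lies in a $C^1$ manifold of positive codimension. The saddles form finitely many families (segments, and spheres in $\xi$), each contained in a $C^1$ submanifold. The stable sets must be handled uniformly along these nonisolated families: I would use a local center--stable manifold at each point, extract a countable subcover, and pull back by the flow at rational times to conclude that the set of initializations converging to saddles is a countable union of null sets. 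The zero-weight limits from Step 3 are handled the same way. The main obstacle is making this rigorous given the $C^2$ failure on the singular set and the degenerate (non-isolated) saddle families; the local argument only requires smoothness near the saddles themselves, where all $w_i\ne0$ and all angles stay away from $0$ and $\pi$ except along $\gamma=0$ in the collinear bisector families. There I would verify $C^2$ regularity directly, since $\gamma=0$ contributes terms like $\sin\gamma$ and $(\pi-\gamma)$ that remain smooth as functions of $w$ only up to a $|\cdot|$-type singularity, which needs care.
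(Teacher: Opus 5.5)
The genuine gap is the boundary $\partial\Omega=\{w_1=0\}\cup\{w_2=0\}$, and it starts with Step 2, whose estimate is false. In $\frac{\diff}{\diff t}\|w_i\|_2=l(\theta_{i1})+l(\theta_{i2})-\frac12\|w_i\|_2-l(\gamma)\|w_{3-i}\|_2$ the interaction term does not vanish as $w_i\to0$. So $\frac{\diff}{\diff t}\|w_i\|_2$ can stay below a negative constant near $w_i=0$, and $\frac{\diff}{\diff t}\log\|w_i\|_2$ is unbounded below.

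Concretely, the set $\{w_1=r_1\bar v,\ w_2=r_2\bar v,\ r_1,r_2>0\}$ is invariant, with $\dot r_1=\dot r_2=\frac12(r_+-r_1-r_2)$. Thus $r_2-r_1$ is conserved while $r_1+r_2\to r_+$. If $r_2(0)-r_1(0)>r_+$, then $w_1$ reaches $0$ in finite time. Finite-time exit from $\Omega$ really occurs, and the substance of the theorem is that it occurs only on a null set of initializations. Your outline never proves this.

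The tools you invoke are not available on $\partial\Omega$. The flow cannot be continued through it. Contrary to your claim that $L$ is $C^1$ everywhere, $L$ is not even differentiable there: $L(r_1\bar w_1,w_2)=L(0,w_2)-r_1\Psi_{w_2}(\bar w_1)+r_1^2/4$ is positively homogeneous in $w_1$, while $\Psi_{w_2}$ is in general not odd. So the plan in Step 3 to dispose of boundary $\omega$-limits ``with the same stable-manifold argument'' does not work. It fails even after desingularization. At the limits $w_1\to0$, $\bar w_1\to-v_1$, $w_2\to v_2$ (and the symmetric one), the desingularized vector field described below has a nilpotent linearization, so a center--stable manifold gives no information.

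The missing ingredient is the paper's boundary analysis (\cref{prop:finite_zero_weight_collision_basin_null,prop:asymptotic_zero_face_basin_null}). In the coordinates $(r_1,\bar w_1,w_2)$ with rescaled time $\diff\tau=\diff t/r_1$, the flow becomes a $C^1$ vector field $X$ that extends across $r_1=0$. A trajectory along which $w_1\to0$ (with finite length if this happens as $t\to\infty$) accumulates on equilibria $(0,u,w_2)$ with $\nabla_{\mathbb S^{d-1}}\Psi_{w_2}(u)=0$ and $\Psi_{w_2}(u)\le0$ (\cref{lem:finite_collision_limit_constraints}).

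A sine inequality (\cref{lem:nonpositive_boundary_sine_inequality}) shows $\operatorname{div}X>0$ at all such equilibria except $(0,-v_1,v_2)$ and $(0,-v_2,v_1)$. By Liouville's formula, the orbits that stay near them form a null set. At the two exceptional points, a weighted blow-up with explicit approach rates (\cref{lem:missing_teacher_cusp_rates,prop:missing_teacher_cusp_collision_null}) confines the tails of colliding trajectories to the image of a $C^1$ manifold of dimension $d+1<2d$ (a curve when $\alpha=\pi/2$).

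The limit of a bounded trajectory may a priori lie on $\partial\Omega$. So convergence to a single point also needs the nonsmooth \L ojasiewicz inequality for continuous subanalytic functions, not the smooth one.

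Your Step 4 is essentially the paper's route: the paper applies a strict-saddle avoidance theorem for $C^2$ gradient flows (Bah et al.) to compactly supported cutoffs of $L$. Your worry about $\gamma=0$ is settled by \cref{lem:relu-kernel-regularity}: the singular part of the kernel behaves like $\|z\|_2^3$, so $L$ is $C^2$ on $\Omega=\{w_1\neq0,\ w_2\neq0\}$. But Step 4 finishes the proof only once you know that almost every trajectory stays in $\Omega$ and converges to a point of $\Omega$.
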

The proof is in \cref{proof:thm-generic-global-convergence}. Note that while
\cref{thm:generic_global_convergence} establishes global convergence, it does
not quantify the convergence rate or the escape behavior.
The following subsections characterize how the trajectory escapes the saddle under additional assumptions. 

\subsubsection{Escape directions near the bisector saddle}\label{subsubsec:without_init}
Near $w_{\mathrm{saddle}}$, Taylor expansion of gradient flow gives
\begin{align}\label{eq:taylor_expansion_near_saddle}
    \frac{\diff}{\diff t}\big(w(t)-w_{\mathrm{saddle}}\big)
    &=
    -\nabla^2L(w_{\mathrm{saddle}})\big(w(t)-w_{\mathrm{saddle}}\big)
    +o\big(\|w(t)-w_{\mathrm{saddle}}\|_2\big).
\end{align}
Thus negative Hessian directions are pushed away from the saddle, positive directions shrink, and the kernel remains flat.
Recall the landscape theorem \Cref{thm:squared_loss_landscape} identifies the positive-bisector saddle family; here we focus on $w_{\mathrm{saddle}}=r_s(\bar v,\bar v)$.
Complete the orthonormal pair $(\bar v,v_\Delta)$ from \eqref{eq:two-directions} to an orthonormal basis $\bar v,v_\Delta,u_3,\ldots,u_d$ of $\mathbb R^d$.
The Hessian directions at this saddle are as follows. 
\begin{lemma}[Hessian eigenspaces at the bisector saddle]\label{lem:hessian_cal}
    In the setting of \cref{thm:squared_loss_landscape}, the Hessian of the loss at the bisector saddle $w_{\mathrm{saddle}}$ has the following eigenspace structure:
    \begin{itemize}[leftmargin=*,itemsep=0.5mm,topsep=0.4mm]
        \item The negative eigenspace of $\nabla^2L(w_{\mathrm{saddle}})$ is
            $
                \operatorname{span}\{(v_\Delta,-v_\Delta),(u_j,-u_j):3\leq j\leq d\}.
            $
            Moreover, $(v_\Delta,-v_\Delta)$ has eigenvalue
            $
                -\frac{2\sin(\alpha/2)}{\pi r_s},
            $
            while each $(u_j,-u_j)$ has eigenvalue
            $
                -\frac{\sin(\alpha/2)}{\pi r_s}.
            $
        
        \item The positive eigenspace of $\nabla^2L(w_{\mathrm{saddle}})$ is
            $
                \operatorname{span}\{(\bar v,\bar v),(v_\Delta,v_\Delta),(u_j,u_j):3\leq j\leq d\}.
            $
        \item The kernel of $\nabla^2L(w_{\mathrm{saddle}})$ is $\operatorname{span}\{(\bar v,-\bar v)\}$.
    \end{itemize}
\end{lemma}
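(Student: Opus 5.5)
The plan is to compute the Hessian of $L$ at $w_{\mathrm{saddle}}=(w_s,w_s)$ by exploiting symmetry, so that the listed vectors are forced to be eigenvectors. After that, each eigenvalue is a one-variable second derivative.

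\textbf{Kernel form of the loss.} Write
\[
L=\tfrac12\sum_{i,j}k(w_i,w_j)-\sum_{i,j}k(w_i,v_j)+\mathrm{const},
\qquad
k(u,w)=\frac{\|u\|_2\|w\|_2}{2\pi}\bigl(\sin\theta+(\pi-\theta)\cos\theta\bigr),
\]
where $\theta=\angle(u,w)$. The only delicate point is the cross term $k(w_1,w_2)$ at $w_1=w_2$, where $\theta=0$. Expanding in the angle gives $\sin\theta+(\pi-\theta)\cos\theta=\pi-\tfrac{\pi}{2}\theta^2+\tfrac13\theta^3+O(\theta^4)$. Hence $k$ is $C^2$ near the diagonal, and its Hessian there is read off from the quadratic term. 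All other terms are smooth near $w_{\mathrm{saddle}}$, because $w_s$ is not parallel to $\pm v_j$.

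\textbf{Symmetry reduction.} The loss is invariant under three groups of maps:
\begin{itemize}
\item the student swap $(w_1,w_2)\mapsto(w_2,w_1)$;
\item the reflection exchanging $v_1$ and $v_2$ (which fixes $\bar v$ and sends $v_\Delta\mapsto-v_\Delta$);
\item all orthogonal maps of $\operatorname{span}\{v_1,v_2\}^\perp$.
\end{itemize}
Each of these fixes $w_{\mathrm{saddle}}$, so the Hessian commutes with it.

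By the swap, the Hessian preserves the symmetric subspace $\{(h,h)\}$ and the antisymmetric subspace $\{(h,-h)\}$. Within each, the reflection separates the $v_\Delta$ direction from the $\bar v$ direction. The orthogonal group forces the action on $\operatorname{span}\{u_j\}$ to be a scalar multiple of the identity, and it cannot mix with the plane directions. Hence the $2d$ vectors $(\bar v,\pm\bar v)$, $(v_\Delta,\pm v_\Delta)$ and $(u_j,\pm u_j)$ are eigenvectors, and it remains to compute their Rayleigh quotients $\frac{d^2}{ds^2}L(w_{\mathrm{saddle}}+s\,e)/\|e\|^2$.

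\textbf{Kernel direction.} On collinear configurations $w_i=r_i\bar v$ with $r_i>0$, the loss depends only on $r_1+r_2$. Indeed, $f=(r_1+r_2)\phi(\langle\bar v,x\rangle)$ there. So $(\bar v,-\bar v)$ gives second derivative $0$.

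\textbf{Symmetric directions.} Along $(h,h)$ the student is $2\phi(\langle w_s+sh,x\rangle)$, which is a single-neuron problem. The direction $(\bar v,\bar v)$ is radial, and the reduced radial dynamics of Phase~1 ($\dot r=r_s-r$) give eigenvalue $1/2$ after normalisation. For tangential $h\in\{v_\Delta,u_j\}$, the second derivative is
\[
2\,\partial^2_h\bigl[\tfrac{1}{2\pi}\|w\|_2^2\pi-2\textstyle\sum_j k(w,v_j)\bigr].
\]
Here the first part gives $2\|h\|^2$. The teacher part is $-2\sum_j\partial_h^2k(w,v_j)$. By degree-one homogeneity, $\partial_h^2 k(w,v_j)$ equals $\|w\|^{-1}$ times the second derivative of the angular function along the sphere, which is explicit in $\theta=\alpha/2$. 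I expect this to reduce to a positive multiple of the identity using $r_s=\frac1\pi\bigl((\pi-\tfrac\alpha2)\cos\tfrac\alpha2+\sin\tfrac\alpha2\bigr)$, because the sphere-Hessian of the correlation at $\bar v$ is positive by \cref{thm:corr-loss-global-min}.

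\textbf{Antisymmetric tangential directions (main obstacle).} Along $(h,-h)$ with $h\perp\bar v$, the diagonal terms $\tfrac12\|w_s\pm sh\|^2$ contribute $2\|h\|^2$. The teacher terms contribute the same tangential quantity $T(h)$ as in the symmetric case. The cross term $k(w_s+sh,w_s-sh)$ has $\theta\approx 2s\|h\|/r_s$ and norms $\approx r_s$. Its expansion gives
\[
\tfrac{r_s^2}{2\pi}\bigl(\pi-\tfrac\pi2\theta^2\bigr)+\tfrac{r_s^2 s^2\|h\|^2}{2\pi}\cdot\pi\cdot\tfrac{1}{r_s^2}\cdot(\dots).
\]
Carefully collecting the $s^2$ coefficient, the cross term flips the sign relative to the symmetric case and yields net eigenvalue $-T(h)/(\text{const})$.

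For $h=u_j$, the teacher angular second derivative should be $\sin(\alpha/2)/(\pi r_s)$ up to normalisation. For $h=v_\Delta$ there is an extra in-plane curvature, since $\theta_{i1}$ and $\theta_{i2}$ both move linearly in $s$. Via $\partial^2_s\bigl[(\pi-\theta)\cos\theta+\sin\theta\bigr]$ with $\theta=\alpha/2\pm s/r_s$, this doubles the value to $2\sin(\alpha/2)/(\pi r_s)$. Verifying these exact constants, and the exact cancellation of the $\|h\|^2$ terms between the diagonal and cross parts, is the bookkeeping I expect to be hardest. The $\frac13\theta^3$ term in the expansion of $k$ does not enter the Hessian.
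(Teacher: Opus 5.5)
Your route differs from the paper's. The paper reads the Hessian off the Safran--Shamir block formula (\cref{lem:relu-two-neuron-hessian}). At $w_{\mathrm{saddle}}$ that formula gives $Q_1(w_s,w_s)=0$, $Q_2(w_s,w_s)=\tfrac12I_d$ and $Q_1(w_s,v_j)=\tfrac{\mu_\alpha}{4}(I_d-\bar v\bar v^\top+v_\Delta v_\Delta^\top)$. Hence $\nabla^2L(w_{\mathrm{saddle}})=\begin{pmatrix}A&\tfrac12I_d\\ \tfrac12I_d&A\end{pmatrix}$ with $A$ diagonal in $\{\bar v,v_\Delta,u_j\}$, and every eigenpair comes from $A\pm\tfrac12I_d$. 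You instead get the eigenvectors from invariance under the student swap, the reflection $v_1\leftrightarrow v_2$, and $O(\operatorname{span}\{v_1,v_2\}^\perp)$. That argument is correct, and it explains without the cited formula why exactly these $2d$ vectors diagonalize the Hessian. The cost is that everything then rests on one-dimensional second derivatives, and as written those contain errors in exactly the step that carries the lemma's content.

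\textbf{The antisymmetric eigenvalues.} In $L=\tfrac12\sum_{i,j}k(w_i,w_j)-\sum_{i,j}k(w_i,v_j)+\mathrm{const}$, the diagonal part is $\tfrac12k(w_1,w_1)+\tfrac12k(w_2,w_2)=\tfrac14(\|w_1\|_2^2+\|w_2\|_2^2)$. Along $(h,\pm h)$ with $h\perp\bar v$ it contributes $\|h\|_2^2$, not $2\|h\|_2^2$. The cross term $\|w_1\|_2\|w_2\|_2\,l(\gamma)$ contributes $+\|h\|_2^2$ along $(h,h)$. Along $(h,-h)$, for unit $h$, it equals $(r_s^2+s^2)\bigl(\tfrac12-s^2/r_s^2\bigr)+O(s^3)$, so it contributes $-\|h\|_2^2$.
\begin{itemize}
\item With your value $2\|h\|_2^2$ the cancellation you anticipate does not occur. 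You would get $\tfrac12-\mu_\alpha$ for $(v_\Delta,-v_\Delta)$, which is positive for small $\alpha$.
\item Your symmetric-case display also doubles the teacher part; it should be $\partial_h^2[\|w\|_2^2-2\sum_jk(w,v_j)]$.
\item Correct bookkeeping: for unit $h\perp\bar v$ and $\kappa_h\coloneqq\sum_j\partial_h^2k(w_s,v_j)$, the eigenvalue is $1-\kappa_h$ for $(h,h)$ and $-\kappa_h$ for $(h,-h)$.
\item Your homogeneity step drops a term. With $G(u)=\sum_jl(\angle(u,v_j))$ and $\varphi$ the angle from $\bar v$ toward $h$, one has $\kappa_h=r_s^{-1}\bigl(G(\bar v)+\partial_\varphi^2G\bigr)$, and $G(\bar v)=2l(\alpha/2)=r_s$ contributes exactly $1$.
\item With that term, $\theta_{1,2}=\alpha/2\mp\varphi$ gives $\kappa_{v_\Delta}=2(l+l'')(\alpha/2)/r_s=2\sin(\alpha/2)/(\pi r_s)$. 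Likewise $\cos\theta_j=\cos(\alpha/2)\cos\varphi$ gives $\kappa_{u_j}=2\bigl(l(\alpha/2)+l'(\alpha/2)\cot(\alpha/2)\bigr)/r_s=\sin(\alpha/2)/(\pi r_s)$.
\item The angular part alone, $2l''(\alpha/2)/r_s$, is what your $\partial_s^2[(\pi-\theta)\cos\theta+\sin\theta]$ sentence computes. It is not $2\sin(\alpha/2)/(\pi r_s)$.
\end{itemize}
Your final constants are right, but they do not follow from the computation you set up.

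\textbf{Positivity of $(v_\Delta,v_\Delta)$.} This is not justified by \cref{thm:corr-loss-global-min}. The eigenvalue is $1-\kappa_{v_\Delta}=-r_s^{-1}\partial_\varphi^2G$, which is $r_s^{-1}$ times the spherical second derivative of $L_{\mathrm{corr}}$ at $\bar v$ along $v_\Delta$. So your intuition is right. However, a unique minimizer only gives a nonnegative second derivative, which would not keep $(v_\Delta,v_\Delta)$ out of the kernel. You need the explicit value $1-\mu_\alpha$ and the inequality $\mu_\alpha<1$, i.e.\ $\tan(\alpha/2)<\pi-\alpha/2$, which holds for $\alpha\le\pi/2$ (this is \eqref{eq:structured-escape-mu-range}).

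A minor slip: $(\bar v,\bar v)$ has eigenvalue $1$, not $\tfrac12$. The quantity $r-r_s$ is already the coordinate along $(\bar v,\bar v)$, and $\dot r=r_s-r$ relaxes at rate $1$; only the sign matters for the lemma.

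With these corrections your argument is complete.
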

The proof is in \cref{subsec:bisector-saddle-hessian}.
We observe that the planar antisymmetric mode $(v_\Delta,-v_\Delta)$ is the most unstable direction: its eigenvalue is twice as negative as the out-of-plane antisymmetric eigenvalues. 
Thus, if a trajectory enters a sufficiently small saddle neighborhood with homogeneous coordinates in the Hessian eigenspace, the planar split would grow fastest along $(v_\Delta,-v_\Delta)$ and dominate the escape.
This gives the geometric intuition for the observed in-plane splitting in Figure~\ref{fig:intro}.

Write $w_-(t)$ for the projection of $w(t)-w_{\mathrm{saddle}}$ onto the negative eigenspace in the lemma, and let $\epsilon$ be the \textit{smallest value of $\|w_-(t)\|_2$ over the gradient flow trajectory}.
Near the saddle, keeping only the linear term in the Taylor expansion \eqref{eq:taylor_expansion_near_saddle} gives the approximation
\[
    \frac{\diff}{\diff t}\|w_-(t)\|_2^2
    \approx
    2\inner{w_-(t)}{-\nabla^2L(w_{\mathrm{saddle}})w_-(t)}
    \asymp
    \|w_-(t)\|_2^2 .
\]
Thus, after $\|w_-(t)\|_2$ reaches its minimum, it will grow exponentially.
This suggests that the escape time from the saddle is on the order of $\log(1/\epsilon)$ (see a rigorous version in \cref{thm:componentwise_thin_escape}).
However, this calculation does not fully describe the escape path.
In particular, it leaves two questions open:
\begin{enumerate}
    \item \textit{Which negative direction carries the escape?} The component $w_-(t)$ includes both the planar split $(v_\Delta,-v_\Delta)$ and the out-of-plane splits $(u_j,-u_j)$, $j\geq3$.
    \item \textit{How large is $\epsilon$?} Can this quantity be decided by properties of the initialization?
\end{enumerate}
The next subsection answers the first question in a deterministic way: a condition on the signs and sizes of the initial planar and out-of-plane coordinates keeps the trajectory away from the out-of-plane saddles.

\subsubsection{Exclusion of out-of-plane saddles}\label{subsubsec:homo_init}
Recall the orthonormal basis $\{\bar v,v_\Delta,u_3,\ldots,u_d\}$ extending the teacher-plane directions.
For each student weight $i\in\{1,2\}$, we decompose
\[
    w_i(t)=z_{i,1}(t)\bar v+z_{i,2}(t)v_\Delta
    +\sum_{j=3}^d z_{i,j}(t)u_j.
\]
In words, $z_{i,1}$ is the bisector coordinate, $z_{i,2}$ is the planar splitting coordinate, and $z_{i,j}$, $j\geq3$, are the out-of-plane coordinates.

\begin{assumption}\label{assumption:homogeneous_initialization}
Fix $C_\perp>0$. The planar splitting coordinates have opposite signs, $z_{1,2}(0)z_{2,2}(0)<0$, and every out-of-plane coordinate is dominated by the corresponding planar splitting coordinate: $|z_{i,j}(0)|\le C_\perp|z_{i,2}(0)|$ for every $i\in\{1,2\}$ and $3\leq j\leq d$.
\end{assumption}

Notice that only the split inside the teacher plane receives an additive force from the teacher directions. The out-of-plane coordinates have no such source: each is multiplied by a scalar and coupled to the corresponding coordinate of the other student. Hence the conditions of \cref{assumption:homogeneous_initialization} continue to hold along the flow, so the trajectory stays away from saddles with zero planar split and nonzero out-of-plane split. The theorem below makes this comparison precise. Since the loss is differentiable only where both student weights are nonzero, the solution is defined up to the first time a weight reaches zero; we write $[0,T_{\max})$ for this maximal interval, with $T_{\max}=\infty$ if no weight reaches zero. 
By \cref{thm:generic_global_convergence}, $T_{\max}=\infty$ for Lebesgue-almost every initialization.

\begin{theorem}[Separation from out-of-plane saddles]\label{thm:homogeneous_excludes_out_of_plane}
    Under \cref{assumption:homogeneous_initialization}, $|z_{1,j}(t)-z_{2,j}(t)|\leq C_\perp|z_{1,2}(t)-z_{2,2}(t)|$ for every $j\geq3$ and $0\leq t<T_{\max}$.
    Let $\mathcal S_{\perp}$ denote the out-of-plane saddle family in \cref{thm:squared_loss_landscape}.
    Then there exists a constant $c>0$ (depending on $\alpha$, $C_\perp$, and $d$) such that $\|w(t)-\widehat w\|_2\geq c$ for every $\widehat w\in\mathcal S_{\perp}$ and $0\leq t<T_{\max}$.
\end{theorem}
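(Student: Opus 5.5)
The plan is to write the gradient flow in the coordinates $z_{i,j}$ and observe that every non-bisector coordinate $j\ge2$ obeys the same linear, time-dependent $2\times2$ system. The only difference is that the planar splitting coordinate $j=2$ also receives a teacher source term. A comparison argument for cooperative linear systems then propagates \cref{assumption:homogeneous_initialization} along the flow. The distance bound from $\mathcal S_\perp$ follows from a short geometric argument.

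\textbf{Step 1 (coordinate equations).} By \cref{lem:critical_eq_from_flow}, $-\nabla_{w_i}L$ is proportional (factor $1/(2\pi)$) to
\[
(\pi-\theta_{i1})v_1+(\pi-\theta_{i2})v_2+a_i\bar w_i-(\pi-\gamma)\|w_{3-i}\|_2\bar w_{3-i}.
\]
Taking inner products with $u_j$ for $j\ge3$ kills the teacher terms, because $u_j\perp v_1,v_2$. This gives
\[
2\pi\dot z_{i,j}=\frac{a_i}{\|w_i\|_2}z_{i,j}-(\pi-\gamma)z_{3-i,j}.
\]
For $j=2$ we get the same linear part plus a source $T_i=\tfrac{\|v_1-v_2\|_2}{2}(\theta_{i2}-\theta_{i1})$. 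Since $\langle w_i,v_1\rangle-\langle w_i,v_2\rangle=\|v_1-v_2\|_2 z_{i,2}$, the sign of $T_i$ equals the sign of $z_{i,2}$, with $T_i=0$ exactly when $z_{i,2}=0$.

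\textbf{Step 2 (invariance via a cooperative system).} Assume without loss of generality that $z_{1,2}(0)>0>z_{2,2}(0)$. Flip the sign of the second student's coordinate, setting $y_j=(z_{1,j},-z_{2,j})$. Then $\dot y_j=M(t)y_j$ for $j\ge3$ and $\dot y_2=M(t)y_2+S(t)$, where
\[
M(t)=\frac{1}{2\pi}\begin{pmatrix} a_1/\|w_1\|_2 & \pi-\gamma\\ \pi-\gamma & a_2/\|w_2\|_2\end{pmatrix}.
\]
This matrix has nonnegative off-diagonal entries, i.e.\ the system is cooperative, and $M$ is continuous on $[0,T_{\max})$. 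The source is $S=(T_1,-T_2)/(2\pi)$.

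First show that $y_2$ stays in the open positive orthant. At a first exit time one coordinate vanishes, say $z_{1,2}=0$. Then $T_1=0$ and $2\pi\dot z_{1,2}=-(\pi-\gamma)z_{2,2}\ge0$. Strictness follows by writing $y_2(t)=\Phi(t)y_2(0)+\int\Phi S$: the fundamental matrix $\Phi$ of a cooperative system is entrywise nonnegative, and its diagonal entries are strictly positive. So $S\ge0$ as long as the opposite signs persist, and $y_2(0)>0$ keeps $y_2$ positive.

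Next apply the same argument to $C_\perp y_2\pm y_j$. These satisfy $\dot{(\cdot)}=M(\cdot)+C_\perp S$ with $C_\perp S\ge0$, and their initial values are $\ge0$ by assumption. Hence they stay $\ge0$, which gives $|z_{i,j}(t)|\le C_\perp|z_{i,2}(t)|$ for each $i$. Because the $z_{i,2}$ have opposite signs,
\[
|z_{1,j}-z_{2,j}|\le C_\perp\bigl(|z_{1,2}|+|z_{2,2}|\bigr)=C_\perp|z_{1,2}-z_{2,2}|.
\]

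\textbf{Step 3 (distance from $\mathcal S_\perp$).} Take a saddle $\widehat w\in\mathcal S_\perp$ given by \eqref{eq:out-of-plane-saddle-form} with unit vector $\xi=\sum_{j\ge3}\xi_ju_j$. Its planar split coordinates vanish. Its out-of-plane differences are $2r_\alpha(\kappa_\alpha)\sin\kappa_\alpha\,\xi_j$, and some $j$ has $|\xi_j|\ge1/\sqrt{d-2}$; set $\delta\coloneqq 2r_\alpha(\kappa_\alpha)\sin\kappa_\alpha/\sqrt{d-2}>0$.

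Suppose $\|w(t)-\widehat w\|_2<c$. Then $|z_{1,2}-z_{2,2}|<2c$, so by Step 2 $|z_{1,j}-z_{2,j}|<2C_\perp c$. On the other hand, closeness to $\widehat w$ forces $|z_{1,j}-z_{2,j}|>\delta-2c$. Choosing $c=\delta/(2(1+C_\perp))$ gives a contradiction, with $c$ depending only on $\alpha$, $C_\perp$ and $d$. (The swapped labelling of $\mathcal S_\perp$ is handled by the same computation.)

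The main obstacle is making Step 2 rigorous without a circular argument. Positivity of the source $S$ relies on the sign pattern of $y_2$, which is itself being propagated. I would handle this with a first-exit-time argument on a maximal interval, using the variation-of-constants formula together with entrywise nonnegativity of $\Phi$ and strict positivity of its diagonal entries.
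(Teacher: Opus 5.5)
Your proposal is correct and follows essentially the paper's route. Your sign-flipped combinations $C_\perp y_2\pm y_j$ are exactly the paper's pairs $(p_1^{\mp},p_2^{\pm})$ with $p_i^\pm=C_\perp|z_{i,2}|\mp z_{i,j}$, and the second estimate again comes from the opposite signs of $z_{1,2}$ and $z_{2,2}$.

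The distance bound is the same projection argument. You use a single coordinate $j$ with $|\xi_j|\ge1/\sqrt{d-2}$ instead of the full norm $\|w_1^\perp-w_2^\perp\|_2$, which gives a slightly smaller but still valid $c$; for $d=2$ the family $\mathcal S_\perp$ is empty.

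The one real difference is how you get positivity. You use variation of constants together with the entrywise nonnegativity and positive diagonal of the fundamental matrix of the cooperative system. This handles simultaneous vanishing of $z_{1,2},z_{2,2}$ without the paper's invariant-subspace and uniqueness step, and it does not need the teacher source to be strictly positive. The paper instead uses a first-contact derivative argument.
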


\begin{remark}
The theorem first gives the comparison estimate
\[
    |z_{1,j}(t)-z_{2,j}(t)|
    \leq C_\perp|z_{1,2}(t)-z_{2,2}(t)|,
    \qquad j\geq3,\ 0\leq t<T_{\max}.
\]
This estimate keeps the trajectory away from the out-of-plane saddle family,
whose saddles have zero planar split and nonzero out-of-plane split. The
opposite signs of the planar splitting coordinates also rule out convergence
to the in-plane saddle families; see \cref{lem:signed-in-plane-saddle-exclusion}. Combining this deterministic saddle exclusion with the
zero-weight avoidance and trajectory-convergence results used in
\cref{thm:generic_global_convergence} gives global convergence for
Lebesgue-almost every initialization satisfying
\cref{assumption:homogeneous_initialization}, without invoking the
stable-manifold saddle-avoidance step; see
\cref{cor:signed-transverse-global-convergence}.

For independent Gaussian initialization $w_i(0)\sim\cN(0,\sigma^2I_d)$,
$i=1,2$, the coordinates $z_{1,2}(0)$ and $z_{2,2}(0)$ are independent
centered Gaussians, so the sign condition of
\cref{assumption:homogeneous_initialization} holds with probability $1/2$.
Since $z_{1,2}(0)$ and $z_{2,2}(0)$ are almost surely nonzero, the
domination condition holds almost surely with
$C_\perp=\max\{|z_{i,j}(0)|/|z_{i,2}(0)|:i=1,2,\ 3\le j\le d\}$ when
$d\ge3$; when $d=2$ there are no out-of-plane coordinates and any
$C_\perp>0$ works. Thus the theorem applies with
probability $1/2$, with a constant $C_\perp$ that depends on the
initialization.
\end{remark}

\subsubsection{Logarithmic escape under structured initialization}

Phase 1 explains how small initializations reach a neighborhood of the
bisector saddle. The results above then describe the subsequent dynamics
qualitatively: almost every trajectory converges to a global minimum, while
under \cref{assumption:homogeneous_initialization} the out-of-plane coordinates remain
controlled by the planar splitting coordinate, ruling out convergence to
out-of-plane saddles. Neither result, however, quantifies how long symmetry
breaking takes. We now study this local time scale under a structured
initialization condition, which holds with constant probability at small random initialization.

We use the coordinates \(z_{i,j}\) of \cref{subsubsec:homo_init} and write
$\mu_\alpha\coloneqq2\sin(\alpha/2)/(\pi r_s)$ for the planar escape rate
from \cref{lem:hessian_cal}. For $\varepsilon>0$, let
\[
    T_\varepsilon
    \coloneqq\inf\bigg\{
      t\ge0:\sum_{i=1}^2\sum_{j=2}^d z_{i,j}(t)^2
      \ge\varepsilon^2
    \bigg\}
\]
be the first time the total squared distance of the student weights from
the bisector axis reaches $\varepsilon^2$. The next theorem shows that
$T_\varepsilon$ is governed by the initial planar splitting and, up to a
bounded error, depends logarithmically on $a_{\rm init}$ and $r_{\rm init}$.

\begin{theorem}[Logarithmic escape governed by the initial planar splitting]
\label{thm:componentwise_thin_escape}
Fix $\alpha\in(0,\pi/2]$ and an absolute constant $C>1$. For every
sufficiently small $\varepsilon>0$, there are constants $c\in(0,1)$ and
$K<\infty$, depending only on $\alpha,C,\varepsilon$, such that the
following holds for every $d\ge2$.
Consider the gradient flow in \eqref{eq:population_squared_loss_and_gf} from
a $C$-balanced initialization (see \eqref{eq:c-balanced}) satisfying
\[
    r_{\rm init}\sqrt d\le c,
    \qquad
    r_{\rm init}\sqrt d
    \left(
      1+\log\frac{\sqrt d}{r_{\rm init}}+\log d
    \right)
    \le c.
\]
Assume also that
\[
    a_{\rm init}\ge\frac{r_{\rm init}}{C\sqrt d},
    \qquad
    z_{1,1}(0),z_{2,1}(0)>0,
    \qquad
    z_{1,2}(0)z_{2,2}(0)<0.
\]
Then $T_\varepsilon$ is finite and satisfies
\[
    \left|
      T_\varepsilon-
      \frac1{\mu_\alpha}
      \log\frac{r_{\rm init}^{\mu_\alpha}}{a_{\rm init}}
    \right|
    \le K.
\]
\end{theorem}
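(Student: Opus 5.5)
We split the trajectory into three time windows and track the transverse coordinates $z_{i,j}$, $j\ge2$, in each. The target formula
\[
\frac1{\mu_\alpha}\log\frac{r_{\rm init}^{\mu_\alpha}}{a_{\rm init}}=\log r_{\rm init}+\frac1{\mu_\alpha}\log\frac1{a_{\rm init}}
\]
suggests the bookkeeping. A first window of length $\approx\log(1/r_{\rm init})+O(1)$ brings the norms from scale $r_{\rm init}$ up to the saddle radius; this is the radial equation $\frac{\diff}{\diff t}(r_1+r_2)=r_+-(r_1+r_2)$ from Phase 1. During it the planar split $a(t)\coloneqq|z_{1,2}-z_{2,2}|$ should grow roughly proportionally to the norm. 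A second window of length $\frac1{\mu_\alpha}\log(\varepsilon/a)$ is the linear escape along $(v_\Delta,-v_\Delta)$ at rate $\mu_\alpha$ from \cref{lem:hessian_cal}.

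\textbf{Phase 1 split tracking.} Write the ODE for $D_j\coloneqq z_{1,j}-z_{2,j}$, $j\ge2$, and the symmetric parts $S_j\coloneqq z_{1,j}+z_{2,j}$. In the small-norm regime the teacher force on $w_i$ is $\|w_i\|$-independent and, projected off $\bar w_i$, has a $v_\Delta$-component that is linear in $z_{i,2}/\|w_i\|$. Its coefficient is the second derivative of the correlation along the splitting direction, which is positive at the bisector (the leap-2 signal, cf.\ \cref{prop:teacher_leap_profile}).

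Combined with the radial growth, this should yield $D_2(t)\asymp a_{\rm init}e^{t}\cdot(\text{bounded factor})$ until the norm is $\Theta(1)$. The out-of-plane $D_j$ grow no faster, since they receive no teacher source; this is the comparison from \cref{thm:homogeneous_excludes_out_of_plane}. The opposite-sign assumption makes $D_2$ carry the full planar split, and $z_{i,1}(0)>0$ together with balancedness lets us use the Phase 1 analysis (\cref{lem:phase1_bisector_saddle}) uniformly.

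The smallness conditions on $r_{\rm init}\sqrt d$, with their $\log d$ and $\log(\sqrt d/r_{\rm init})$ factors, should be exactly what is needed to absorb two things: the $d$ out-of-plane coordinates, each of initial size at most $r_{\rm init}$ with total mass at most $r_{\rm init}\sqrt d$, and the nonlinear interaction errors accumulated over the time $\log(1/r_{\rm init})$. The condition $a_{\rm init}\ge r_{\rm init}/(C\sqrt d)$ ensures that the planar split dominates the aggregated out-of-plane split by a bounded factor when the saddle neighbourhood is entered. Concretely, at the entrance time $t_{\rm in}=\log(1/r_{\rm init})+O(1)$ we aim to show that the $(\bar v,\bar v)$ and $(v_\Delta,v_\Delta)$ deviations are $O(\rho)$, that $D_2(t_{\rm in})\asymp a_{\rm init}/r_{\rm init}$, and that $\sum_{j\ge3}D_j^2\lesssim D_2^2$.

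\textbf{Phase 2 linearization.} In a $\rho$-neighbourhood of $w_{\rm saddle}$ we use \eqref{eq:taylor_expansion_near_saddle}. The negative modes grow at rates $\mu_\alpha$ (planar) and $\mu_\alpha/2$ (out-of-plane), the positive modes contract, and the kernel direction $(\bar v,-\bar v)$ is controlled by the conserved-type imbalance. A Grönwall and bootstrap argument with the quadratic remainder gives $D_2(t)=D_2(t_{\rm in})e^{\mu_\alpha(t-t_{\rm in})}(1+O(\varepsilon))$ until the transverse mass reaches $\varepsilon$. So $T_\varepsilon-t_{\rm in}=\frac1{\mu_\alpha}\log(\varepsilon r_{\rm init}/a_{\rm init})+O(1)$, and adding $t_{\rm in}$ gives the claim with $K$ depending on $\alpha,C,\varepsilon$.

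\textbf{Main obstacle.} The hardest part is the matching at the transition between windows, for two reasons. First, near the saddle we only have the approximation $D_2\propto$ norm $\times$ angle, while the Hessian picture requires the positive modes to have already decayed. Second, Phase 2 begins with a deviation of size $\rho$ in the stable directions, and the quadratic coupling of those directions into $D_2$ must not change $D_2$ by more than a constant factor. Our first attempt would be to choose $\rho$ small relative to $\mu_\alpha$ and $\varepsilon$. We would then show that $D_2$ satisfies a differential inequality
\[
\dot D_2=(\mu(t)+O(\|\text{stable part}\|))D_2 ,
\]
in which the stable part decays exponentially, so its time integral is $O(\rho)$. The two-sided error in the log therefore stays bounded, uniformly in $d$.
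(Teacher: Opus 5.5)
There is a genuine gap. The Phase~1 bookkeeping that your whole time count rests on is wrong, and the formula you end up with is not the one in the statement.

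\textbf{Phase 1: wrong time scale and wrong amplification.} The radial equation $\frac{d}{dt}(r_1+r_2)=r_+-(r_1+r_2)$ is affine, not multiplicative. It takes $r_1+r_2$ from $2r_{\rm init}$ to within $\rho$ of $r_+$ in time at most $\log(r_+/\rho)$, independently of $r_{\rm init}$; this is why $t_0=t_0(\alpha,\rho)$ in \cref{lem:phase1_bisector_saddle}. So there is no window of length $\log(1/r_{\rm init})$.

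Nor does the planar split grow proportionally to the norm. Near the bisector, $\dot a\approx\frac{2\sin(\alpha/2)}{\pi R}\,a=\mu_\alpha\frac{r_s}{R}\,a$ while $\dot R\approx r_s-R$. Hence $\mathrm{d}\log a/\mathrm{d}\log R\approx\mu_\alpha r_s/(r_s-R)$, and $a$ is amplified by a factor $\asymp r_{\rm init}^{-\mu_\alpha}$, not $r_{\rm init}^{-1}$. Since $\mu_\alpha<1$, the ratio $a/R$ actually shrinks. This large amplification accumulates in bounded time because the rate blows up like $1/R$.

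Adding your two windows gives $T_\varepsilon=(\mu_\alpha^{-1}-1)\log r_{\rm init}+\mu_\alpha^{-1}\log(1/a_{\rm init})+O(1)$. The target is $\log r_{\rm init}+\mu_\alpha^{-1}\log(1/a_{\rm init})$. The difference is $(\mu_\alpha^{-1}-2)\log r_{\rm init}$, which is unbounded as $r_{\rm init}\to0$ unless $\mu_\alpha=1/2$. Note also that the target's term $\log r_{\rm init}$ is negative, yet you matched it with a window of positive length.

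The correct accounting is as follows. The trajectory enters a fixed neighbourhood of $w_{\rm saddle}$ after time $O(1)$, with $a_{\rm e}\asymp a_{\rm init}r_{\rm init}^{-\mu_\alpha}$. All the logarithmic time is then spent at the saddle: $\mu_\alpha^{-1}\log(\varepsilon/a_{\rm e})=\mu_\alpha^{-1}\log(r_{\rm init}^{\mu_\alpha}/a_{\rm init})+O(1)$. \Cref{lem:phase1_bisector_saddle} only gives $\rho$-closeness, which says nothing about $a$ at the scale $a_{\rm e}\ll\rho$. You need a separate analysis, e.g.\ in the coordinates $R$, $\delta=(z_{1,1}-z_{2,1})/2$, $a$ and the out-of-plane norm $Y$, that integrates the $1/R$ rates while $R$ grows from order $r_{\rm init}$ to order $r_s$.

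\textbf{Phase 2: what makes $K$ bounded is missing.} The escape window has length $\mu_\alpha^{-1}\log(1/a_{\rm e})$, which is unbounded in $r_{\rm init}$ and $d$. The error in the growth rate of $a$ must therefore be integrable over it.
\begin{itemize}
\item The stable modes decay, and errors of size $a+Y$ integrate because $a$ grows exponentially.
\item The kernel coordinate $\delta$ along $(\bar v,-\bar v)$ is different. It is only nearly conserved ($\dot\delta=O(a^2+Y^2)$) and enters a direct estimate as $|\dot a-\mu_\alpha a|\lesssim(|R-r_s|+|\delta|+a+Y)\,a$. Even on the saddle line, the unstable rate depends on $\delta$ at second order.
\end{itemize}
Saying the imbalance is ``controlled'' is not enough. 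If you only know you enter $\rho$-close to $w_{\rm saddle}$, then $\delta\sim\rho$, and the exponent picks up an error growing with $\log(1/a_{\rm e})$, however small you choose $\rho$. You must show that at entry $|\delta|\lesssim r_{\rm init}\sqrt d+a_{\rm e}^2(1+\log(1/r_{\rm init}))$, and that this times $\log(1/a_{\rm e})$ is small. This is exactly where the second smallness hypothesis, with its $\log(\sqrt d/r_{\rm init})$ and $\log d$, is used; it is not about a long Phase~1.

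\textbf{Out-of-plane control.} \Cref{thm:homogeneous_excludes_out_of_plane} needs \cref{assumption:homogeneous_initialization}, which is not assumed here, and its constant carries a factor $\sqrt{d-2}$. Your bound $\sum_{j\ge3}D_j^2\lesssim D_2^2$ at entry also does not follow. The out-of-plane part grows like $R^{\mu_\alpha/2}$ against $R^{\mu_\alpha}$, so the ratio can be of order $\sqrt d\,r_{\rm init}^{\mu_\alpha/2}$, which is large in the regime of interest. What does hold, and suffices, is $Y^2\lesssim r_{\rm init}\sqrt d\,a$. It is propagated by Gr\"onwall for $Y^2/a$ using the rate gap $\mu_\alpha$ versus $\mu_\alpha/2$, and it keeps $Y\ll\varepsilon$ when $a$ reaches $\varepsilon$.
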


The proof is in \cref{subsubsec:structured-escape-local-passage}.

\begin{remark}[Dimension dependence and i.i.d.~initialization]
When the initial planar splitting is of the size of a single coordinate,
that is, $a_{\rm init}\asymp r_{\rm init}/\sqrt d$, the theorem gives
\[
    T_\varepsilon
    =\left(\mu_\alpha^{-1}-1\right)\log\frac1{r_{\rm init}}
      +\frac1{2\mu_\alpha}\log d+O(1).
\]
For example, if $r_{\rm init}=d^{-\gamma}$ for a fixed $\gamma>1/2$, then
the two upper bounds on $r_{\rm init}$ in the theorem hold for all
sufficiently large $d$, and $T_\varepsilon=\Theta(\log d)$.

Independent Gaussian initialization is a natural example. If
$w_i(0)\sim\cN(0,\sigma^2I_d)$ independently for $i=1,2$, then with
constant probability the initialization satisfies the other conditions of
\cref{thm:componentwise_thin_escape}, namely $C$-balance,
$a_{\rm init}\ge r_{\rm init}/(C\sqrt d)$ and the sign conditions, and
\[
    r_{\rm init}\asymp\sigma\sqrt d,
    \qquad
    a_{\rm init}\asymp\sigma
      \asymp\frac{r_{\rm init}}{\sqrt d}.
\]
On this event, for $0<\sigma\le1$, the two upper bounds on $r_{\rm init}$
hold once
$
    \sigma d\bigl(1+\log(1/\sigma)+\log d\bigr)
$
is sufficiently small, and the theorem gives
\[
    T_\varepsilon
    =\left(\mu_\alpha^{-1}-1\right)\log\frac1\sigma
      +\frac12\log d+O_{\alpha,C,\varepsilon}(1).
\]
If $\sigma=d^{-\beta}$ for a fixed $\beta>1$, then these
bounds hold for all sufficiently large $d$, and
$T_\varepsilon=\Theta(\log d)$.
\end{remark}

\paragraph{Empirical validation.}
The calculation above shows that a planar splitting of the size of a single
coordinate contributes an explicit logarithmic term in $d$; for Gaussian
initialization at fixed $\sigma$, the formula suggests a slope of $1/2$ with
respect to $\log d$.
The random-initialization experiments in
\cref{fig:escape_and_global_per_alpha} support this prediction: across the
tested values of $\alpha$, $T_\varepsilon$ grows approximately linearly with
$\log d$, while
$T_{\mathrm{global}}-T_\varepsilon$ remains nearly constant.
This suggests a two-stage picture in which the ambient dimension primarily
affects the time needed to break the bisector symmetry, whereas the subsequent
descent to a global minimizer occurs on an essentially dimension-independent
time scale.

\begin{figure}[!htb]
    \centering
    \includegraphics[width=\linewidth]{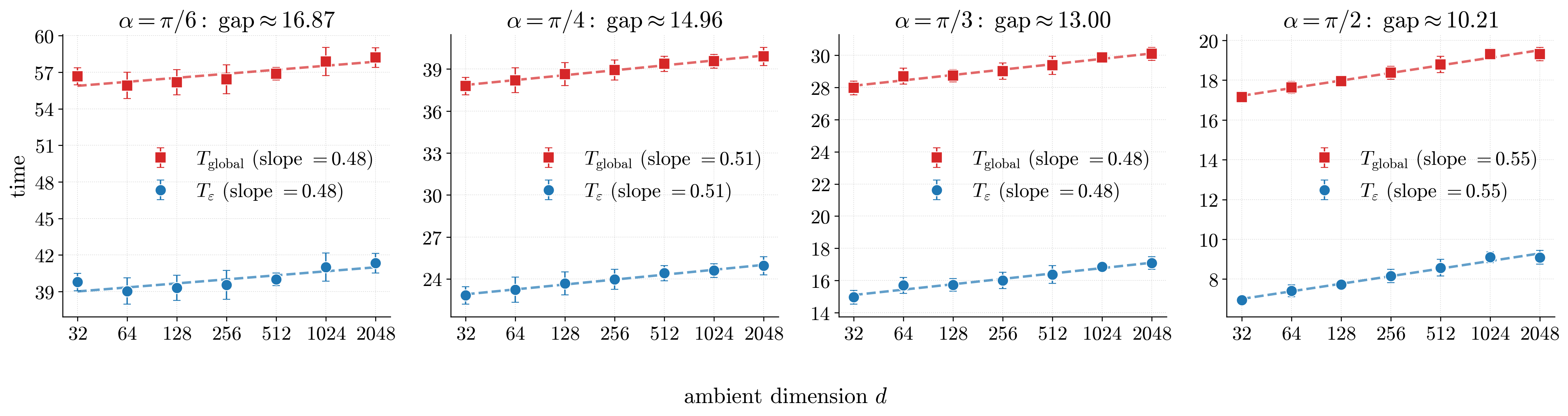}
    \vspace{-5mm}
    \caption{\small Escape time \(T_\varepsilon\) and global-convergence time
    \(T_{\mathrm{global}}\) vs.\ ambient dimension \(d\), one panel per
    \(\alpha\), for random initialization at fixed \(\sigma=10^{-4}\). Each
    point is the mean over \(5\) seeds, and error bars are \(\pm 1\) s.d.
    The time \(T_\varepsilon\) is when the total squared distance of the two
    student weights from the bisector axis,
    \(\sum_{i=1}^2\sum_{j=2}^d z_{i,j}(t)^2\), first reaches
    \(\varepsilon^2\) (\(\varepsilon=0.1\)); \(T_{\mathrm{global}}\) is when
    the student state first reaches distance \(0.05\) from the global minimizer
    set. Dashed lines are log-\(d\) fits; fitted slopes \(\in[0.48,0.55]\)
    match the \(\tfrac12\log d\) prediction above, and the gap
    \(T_{\mathrm{global}}-T_\varepsilon\) shown in the panel titles is almost
    $d$-independent.
    }
    \label{fig:escape_and_global_per_alpha}
    \vspace{-1.8mm}
\end{figure}

\section{Conclusion}

We studied population gradient flow for a two-neuron ReLU teacher-student model, a minimal multi-index setting that already shows staged high-dimensional feature learning. The loss has no spurious local minima with both student weights nonzero, but it contains a bisector saddle that organizes training. From small initialization, gradient flow first learns the easy bisector direction, then escapes along the harder splitting direction. The logarithmic escape time identifies where dimension enters: in the leap-2 symmetry-breaking phase, rather than in the final descent after specialization begins.
Several directions remain open, including: $(i)$ obtaining an end-to-end escape-time analysis from random initialization with high probability; $(ii)$ extending the landscape and dynamical characterization to other teacher-student settings and high-leap multi-index functions.

\bigskip

\subsection*{Acknowledgements}

The authors thank Margalit Glasgow, Jason D.~Lee, and Theodor Misiakiewicz for discussions and feedback. ChatGPT and Claude were used to polish the writing, to check and reorganize proofs, and to produce figures from the numerical experiments.

\bigskip

\bibliography{reference}

\begin{thebibliography}{48}
\expandafter\ifx\csname natexlab\endcsname\relax\def\natexlab#1{#1}\fi
\expandafter\ifx\csname url\endcsname\relax
  \def\url#1{\texttt{#1}}\fi
\expandafter\ifx\csname urlprefix\endcsname\relax\def\urlprefix{URL }\fi

\bibitem[{Abbe et~al.(2022)Abbe, Boix-Adser{\`a} and
  Misiakiewicz}]{abbe2022merged}
\textsc{Abbe, E.}, \textsc{Boix-Adser{\`a}, E.} and \textsc{Misiakiewicz, T.}
  (2022).
\newblock The merged-staircase property: a necessary and nearly sufficient
  condition for {SGD} learning of sparse functions on two-layer neural
  networks.
\newblock In \textit{Conference on Learning Theory}. PMLR.

\bibitem[{Abbe et~al.(2023)Abbe, Boix-Adser{\`a} and
  Misiakiewicz}]{abbe2023sgd}
\textsc{Abbe, E.}, \textsc{Boix-Adser{\`a}, E.} and \textsc{Misiakiewicz, T.}
  (2023).
\newblock {SGD} learning on neural networks: leap complexity and
  saddle-to-saddle dynamics.
\newblock In \textit{The Thirty Sixth Annual Conference on Learning Theory}.
  PMLR.

\bibitem[{Akiyama and Suzuki(2021)}]{akiyama2021learnability}
\textsc{Akiyama, S.} and \textsc{Suzuki, T.} (2021).
\newblock On learnability via gradient method for two-layer {ReLU} neural
  networks in teacher-student setting.
\newblock In \textit{International Conference on Machine Learning}.

\bibitem[{Ba et~al.(2022)Ba, Erdogdu, Suzuki, Wang, Wu and Yang}]{ba2022high}
\textsc{Ba, J.}, \textsc{Erdogdu, M.~A.}, \textsc{Suzuki, T.}, \textsc{Wang,
  Z.}, \textsc{Wu, D.} and \textsc{Yang, G.} (2022).
\newblock High-dimensional asymptotics of feature learning: How one gradient
  step improves the representation.
\newblock \textit{Advances in Neural Information Processing Systems}
  \textbf{35} 37932--37946.

\bibitem[{Bah et~al.(2022)Bah, Rauhut, Terstiege and
  Westdickenberg}]{bah2022learning}
\textsc{Bah, B.}, \textsc{Rauhut, H.}, \textsc{Terstiege, U.} and
  \textsc{Westdickenberg, M.} (2022).
\newblock Learning deep linear neural networks: Riemannian gradient flows and
  convergence to global minimizers.
\newblock \textit{Information and Inference: A Journal of the IMA} \textbf{11}
  307--353.

\bibitem[{Ben~Arous et~al.(2025)Ben~Arous, Erdogdu, Vural and
  Wu}]{arous2025learning}
\textsc{Ben~Arous, G.}, \textsc{Erdogdu, M.~A.}, \textsc{Vural, N.~M.} and
  \textsc{Wu, D.} (2025).
\newblock Learning quadratic neural networks in high dimensions: {SGD} dynamics
  and scaling laws.
\newblock \textit{Advances in Neural Information Processing Systems}
  \textbf{38} 162874--162961.

\bibitem[{Ben~Arous et~al.(2026)Ben~Arous, Gerbelot and
  Piccolo}]{arous2024stochastic}
\textsc{Ben~Arous, G.}, \textsc{Gerbelot, C.} and \textsc{Piccolo, V.} (2026).
\newblock Stochastic gradient descent in high dimensions for multi-spiked
  tensor {PCA}.
\newblock \textit{Communications on Pure and Applied Mathematics} \textbf{79}
  2291--2369.

\bibitem[{Ben~Arous et~al.(2021)Ben~Arous, Gheissari and
  Jagannath}]{arous2021online}
\textsc{Ben~Arous, G.}, \textsc{Gheissari, R.} and \textsc{Jagannath, A.}
  (2021).
\newblock Online stochastic gradient descent on non-convex losses from
  high-dimensional inference.
\newblock \textit{Journal of Machine Learning Research} \textbf{22} 1--51.

\bibitem[{Biehl et~al.(1996)Biehl, Riegler and W{\"o}hler}]{biehl1996transient}
\textsc{Biehl, M.}, \textsc{Riegler, P.} and \textsc{W{\"o}hler, C.} (1996).
\newblock Transient dynamics of on-line learning in two-layered neural
  networks.
\newblock \textit{Journal of Physics A: Mathematical and General} \textbf{29}
  4769--4780.

\bibitem[{Bietti et~al.(2025)Bietti, Bruna and
  Pillaud-Vivien}]{bietti2023learning}
\textsc{Bietti, A.}, \textsc{Bruna, J.} and \textsc{Pillaud-Vivien, L.} (2025).
\newblock On learning {Gaussian} multi-index models with gradient flow part
  {I}: General properties and two-timescale learning.
\newblock \textit{Communications on Pure and Applied Mathematics} \textbf{78}
  2354--2435.

\bibitem[{Bietti et~al.(2022)Bietti, Bruna, Sanford and
  Song}]{bietti2022learning}
\textsc{Bietti, A.}, \textsc{Bruna, J.}, \textsc{Sanford, C.} and \textsc{Song,
  M.~J.} (2022).
\newblock Learning single-index models with shallow neural networks.
\newblock \textit{Advances in neural information processing systems}
  \textbf{35} 9768--9783.

\bibitem[{Bolte et~al.(2007)Bolte, Daniilidis and Lewis}]{bolte2007lojasiewicz}
\textsc{Bolte, J.}, \textsc{Daniilidis, A.} and \textsc{Lewis, A.} (2007).
\newblock The {{\L}ojasiewicz} inequality for nonsmooth subanalytic functions
  with applications to subgradient dynamical systems.
\newblock \textit{SIAM Journal on Optimization} \textbf{17} 1205--1223.

\bibitem[{Boursier et~al.(2022)Boursier, Pillaud-Vivien and
  Flammarion}]{boursier2022gradient}
\textsc{Boursier, E.}, \textsc{Pillaud-Vivien, L.} and \textsc{Flammarion, N.}
  (2022).
\newblock Gradient flow dynamics of shallow {ReLU} networks for square loss and
  orthogonal inputs.
\newblock \textit{Advances in Neural Information Processing Systems}
  \textbf{35} 20105--20118.

\bibitem[{Chizat(2022)}]{chizat2022sparse}
\textsc{Chizat, L.} (2022).
\newblock Sparse optimization on measures with over-parameterized gradient
  descent.
\newblock \textit{Mathematical Programming} \textbf{194} 487--532.

\bibitem[{Chizat et~al.(2019)Chizat, Oyallon and Bach}]{chizat2019lazy}
\textsc{Chizat, L.}, \textsc{Oyallon, E.} and \textsc{Bach, F.} (2019).
\newblock On lazy training in differentiable programming.
\newblock \textit{Advances in neural information processing systems}
  \textbf{32}.

\bibitem[{Damian et~al.(2022)Damian, Lee and Soltanolkotabi}]{damian2022neural}
\textsc{Damian, A.}, \textsc{Lee, J.} and \textsc{Soltanolkotabi, M.} (2022).
\newblock Neural networks can learn representations with gradient descent.
\newblock In \textit{Conference on Learning Theory}. PMLR.

\bibitem[{Damian et~al.(2023)Damian, Nichani, Ge and Lee}]{damian2023smoothing}
\textsc{Damian, A.}, \textsc{Nichani, E.}, \textsc{Ge, R.} and \textsc{Lee,
  J.~D.} (2023).
\newblock Smoothing the landscape boosts the signal for {SGD}: Optimal sample
  complexity for learning single index models.
\newblock \textit{Advances in Neural Information Processing Systems}
  \textbf{36} 752--784.

\bibitem[{Dandi et~al.(2024)Dandi, Krzakala, Loureiro, Pesce and
  Stephan}]{dandi2024two}
\textsc{Dandi, Y.}, \textsc{Krzakala, F.}, \textsc{Loureiro, B.},
  \textsc{Pesce, L.} and \textsc{Stephan, L.} (2024).
\newblock How two-layer neural networks learn, one (giant) step at a time.
\newblock \textit{Journal of Machine Learning Research} \textbf{25} 1--65.

\bibitem[{Dudeja and Hsu(2018)}]{dudeja2018learning}
\textsc{Dudeja, R.} and \textsc{Hsu, D.} (2018).
\newblock Learning single-index models in {Gaussian} space.
\newblock In \textit{Conference On Learning Theory}. PMLR.

\bibitem[{Fukumizu and Amari(2000)}]{fukumizu2000local}
\textsc{Fukumizu, K.} and \textsc{Amari, S.-i.} (2000).
\newblock Local minima and plateaus in hierarchical structures of multilayer
  perceptrons.
\newblock \textit{Neural networks} \textbf{13} 317--327.

\bibitem[{Ge et~al.(2021)Ge, Ren, Wang and Zhou}]{ge2021understanding}
\textsc{Ge, R.}, \textsc{Ren, Y.}, \textsc{Wang, X.} and \textsc{Zhou, M.}
  (2021).
\newblock Understanding deflation process in over-parametrized tensor
  decomposition.
\newblock \textit{Advances in Neural Information Processing Systems}
  \textbf{34} 1299--1311.

\bibitem[{Glasgow(2024)}]{glasgow2023sgd}
\textsc{Glasgow, M.} (2024).
\newblock {SGD} finds then tunes features in two-layer neural networks with
  near-optimal sample complexity: A case study in the {XOR} problem.
\newblock In \textit{The Twelfth International Conference on Learning
  Representations}.

\bibitem[{Goldt et~al.(2020)Goldt, Advani, Saxe, Krzakala and
  Zdeborov{\'a}}]{goldt2019dynamics}
\textsc{Goldt, S.}, \textsc{Advani, M.~S.}, \textsc{Saxe, A.~M.},
  \textsc{Krzakala, F.} and \textsc{Zdeborov{\'a}, L.} (2020).
\newblock Dynamics of stochastic gradient descent for two-layer neural networks
  in the teacher-student setup.
\newblock \textit{Journal of Statistical Mechanics: Theory and Experiment}
  124010.

\bibitem[{Hirsch et~al.(1977)Hirsch, Pugh and Shub}]{hirsch1977invariant}
\textsc{Hirsch, M.~W.}, \textsc{Pugh, C.~C.} and \textsc{Shub, M.} (1977).
\newblock \textit{Invariant Manifolds}, vol. 583 of \textit{Lecture Notes in
  Mathematics}.
\newblock Springer.

\bibitem[{Huang et~al.(2026)Huang, Loureiro and {Sarao
  Mannelli}}]{huang2026sharp}
\textsc{Huang, J.}, \textsc{Loureiro, B.} and \textsc{{Sarao Mannelli}, S.}
  (2026).
\newblock Sharp description of local minima in the loss landscape of
  high-dimensional two-layer {ReLU} neural networks.
\newblock In \textit{Proceedings of the 43rd International Conference on
  Machine Learning}.

\bibitem[{Lee et~al.(2024)Lee, Oko, Suzuki and Wu}]{lee2024neural}
\textsc{Lee, J.~D.}, \textsc{Oko, K.}, \textsc{Suzuki, T.} and \textsc{Wu, D.}
  (2024).
\newblock Neural network learns low-dimensional polynomials with {SGD} near the
  information-theoretic limit.
\newblock \textit{Advances in Neural Information Processing Systems}
  \textbf{37} 58716--58756.

\bibitem[{Li et~al.(2020)Li, Ma and Zhang}]{li2020learning}
\textsc{Li, Y.}, \textsc{Ma, T.} and \textsc{Zhang, H.~R.} (2020).
\newblock Learning over-parametrized two-layer neural networks beyond {NTK}.
\newblock In \textit{Conference on learning theory}. PMLR.

\bibitem[{Martin et~al.(2024)Martin, Bach and Biroli}]{martin2024impact}
\textsc{Martin, S.}, \textsc{Bach, F.} and \textsc{Biroli, G.} (2024).
\newblock On the impact of overparameterization on the training of a shallow
  neural network in high dimensions.
\newblock In \textit{International Conference on Artificial Intelligence and
  Statistics}. PMLR.

\bibitem[{Min et~al.(2025)Min, Zhu and Vidal}]{min2025neural}
\textsc{Min, H.}, \textsc{Zhu, Z.} and \textsc{Vidal, R.} (2025).
\newblock Neural collapse under gradient flow on shallow {ReLU} networks for
  orthogonally separable data.
\newblock \textit{Advances in Neural Information Processing Systems}
  \textbf{38} 105551--105590.

\bibitem[{Oko et~al.(2024)Oko, Song, Suzuki and Wu}]{oko2024learning}
\textsc{Oko, K.}, \textsc{Song, Y.}, \textsc{Suzuki, T.} and \textsc{Wu, D.}
  (2024).
\newblock Learning sum of diverse features: computational hardness and
  efficient gradient-based training for ridge combinations.
\newblock In \textit{The Thirty Seventh Annual Conference on Learning Theory}.
  PMLR.

\bibitem[{Oostwal et~al.(2021)Oostwal, Straat and Biehl}]{oostwal2021hidden}
\textsc{Oostwal, E.}, \textsc{Straat, M.} and \textsc{Biehl, M.} (2021).
\newblock Hidden unit specialization in layered neural networks: {ReLU} vs.
  sigmoidal activation.
\newblock \textit{Physica A: Statistical Mechanics and its Applications}
  \textbf{564} 125517.

\bibitem[{Ren and Lee(2025)}]{ren2024learning}
\textsc{Ren, Y.} and \textsc{Lee, J.~D.} (2025).
\newblock Learning orthogonal multi-index models: A fine-grained information
  exponent analysis.
\newblock \textit{Advances in Neural Information Processing Systems}
  \textbf{38} 183843--183894.

\bibitem[{Ren et~al.(2025)Ren, Nichani, Wu and Lee}]{ren2025emergence}
\textsc{Ren, Y.}, \textsc{Nichani, E.}, \textsc{Wu, D.} and \textsc{Lee, J.~D.}
  (2025).
\newblock Emergence and scaling laws in {SGD} learning of shallow neural
  networks.
\newblock \textit{Advances in Neural Information Processing Systems}
  \textbf{38} 42829--42911.

\bibitem[{Riegler and Biehl(1995)}]{riegler1995line}
\textsc{Riegler, P.} and \textsc{Biehl, M.} (1995).
\newblock On-line backpropagation in two-layered neural networks.
\newblock \textit{Journal of Physics A: Mathematical and General} \textbf{28}
  L507--L513.

\bibitem[{Saad and Solla(1995)}]{saad1995line}
\textsc{Saad, D.} and \textsc{Solla, S.~A.} (1995).
\newblock On-line learning in soft committee machines.
\newblock \textit{Physical Review E} \textbf{52} 4225--4243.

\bibitem[{Safran and Shamir(2018)}]{safran2018spurious}
\textsc{Safran, I.} and \textsc{Shamir, O.} (2018).
\newblock Spurious local minima are common in two-layer {ReLU} neural networks.
\newblock In \textit{International conference on machine learning}. PMLR.

\bibitem[{Safran et~al.(2021)Safran, Yehudai and Shamir}]{safran2021effects}
\textsc{Safran, I.~M.}, \textsc{Yehudai, G.} and \textsc{Shamir, O.} (2021).
\newblock The effects of mild over-parameterization on the optimization
  landscape of shallow {ReLU} neural networks.
\newblock In \textit{Conference on Learning Theory}. PMLR.

\bibitem[{{\c{S}}im{\c{s}}ek et~al.(2025){\c{S}}im{\c{s}}ek, Bendjeddou and
  Hsu}]{csimcsek2024learning}
\textsc{{\c{S}}im{\c{s}}ek, B.}, \textsc{Bendjeddou, A.} and \textsc{Hsu, D.}
  (2025).
\newblock Learning {Gaussian} multi-index models with gradient flow: Time
  complexity and directional convergence.
\newblock In \textit{Proceedings of The 28th International Conference on
  Artificial Intelligence and Statistics}, vol. 258 of \textit{Proceedings of
  Machine Learning Research}. PMLR.

\bibitem[{Soltanolkotabi(2017)}]{soltanolkotabi2017learning}
\textsc{Soltanolkotabi, M.} (2017).
\newblock Learning {ReLU}s via gradient descent.
\newblock \textit{Advances in neural information processing systems}
  \textbf{30}.

\bibitem[{Sumner et~al.(1991)Sumner, Jagers, Vowe and
  Anglesio}]{sumner1991inequalities}
\textsc{Sumner, J.~S.}, \textsc{Jagers, A.~A.}, \textsc{Vowe, M.} and
  \textsc{Anglesio, J.} (1991).
\newblock Inequalities involving trigonometric functions.
\newblock \textit{The American Mathematical Monthly} \textbf{98} 264--267.

\bibitem[{Tian(2017)}]{tian2017analytical}
\textsc{Tian, Y.} (2017).
\newblock An analytical formula of population gradient for two-layered {ReLU}
  network and its applications in convergence and critical point analysis.
\newblock In \textit{International Conference on Machine Learning}.

\bibitem[{Wang and Ma(2023)}]{wang2023understanding}
\textsc{Wang, M.} and \textsc{Ma, C.} (2023).
\newblock Understanding multi-phase optimization dynamics and rich nonlinear
  behaviors of {ReLU} networks.
\newblock \textit{Advances in Neural Information Processing Systems}
  \textbf{36} 35654--35747.

\bibitem[{Wu et~al.(2018)Wu, Luo and Lee}]{wu2018no}
\textsc{Wu, C.}, \textsc{Luo, J.} and \textsc{Lee, J.~D.} (2018).
\newblock No spurious local minima in a two hidden unit {ReLU} network.
\newblock In \textit{International Conference on Learning Representations,
  Workshop Track}.

\bibitem[{Xu and Du(2023)}]{xu2023over}
\textsc{Xu, W.} and \textsc{Du, S.} (2023).
\newblock Over-parameterization exponentially slows down gradient descent for
  learning a single neuron.
\newblock In \textit{The Thirty Sixth Annual Conference on Learning Theory}.
  PMLR.

\bibitem[{Yehudai and Shamir(2020)}]{yehudai2020learning}
\textsc{Yehudai, G.} and \textsc{Shamir, O.} (2020).
\newblock Learning a single neuron with gradient methods.
\newblock In \textit{Conference on Learning Theory}. PMLR.

\bibitem[{Zhang et~al.(2026)Zhang, Wang, Fu and Lee}]{zhang2025neural}
\textsc{Zhang, B.}, \textsc{Wang, Z.}, \textsc{Fu, H.} and \textsc{Lee, J.~D.}
  (2026).
\newblock Neural networks learn generic multi-index models near
  information-theoretic limit.
\newblock In \textit{The Fourteenth International Conference on Learning
  Representations}.

\bibitem[{Zhou et~al.(2021)Zhou, Ge and Jin}]{zhou2021local}
\textsc{Zhou, M.}, \textsc{Ge, R.} and \textsc{Jin, C.} (2021).
\newblock A local convergence theory for mildly over-parameterized two-layer
  neural network.
\newblock In \textit{Conference on Learning Theory}. PMLR.

\bibitem[{Zhu et~al.(2025)Zhu, Liu and Cevher}]{zhu2025gradient}
\textsc{Zhu, Z.}, \textsc{Liu, F.} and \textsc{Cevher, V.} (2025).
\newblock How gradient descent balances features: A dynamical analysis for
  two-layer neural networks.
\newblock In \textit{The Thirteenth International Conference on Learning
  Representations}.

\end{thebibliography}
\bibliographystyle{ims}

\newpage
\allowdisplaybreaks
\appendix
\crefalias{section}{appendix}
\crefalias{subsection}{subappendix}
\crefalias{subsubsection}{subsubappendix}
\tableofcontents
\newpage

\section{Technical lemmas}\label{app:technical-lemmas}

This appendix collects the lemmas used in the landscape and training-dynamics proofs. Throughout this appendix, unless stated otherwise, $v_1,v_2\in\RR^d$ are unit teacher directions with $\alpha=\angle(v_1,v_2)\in(0,\pi/2]$, and the angle notation $\theta_{ij}$ and $\gamma$ follows the convention in \eqref{eq:angle_shorthand}. Whenever directions or angles of student weights are used, the corresponding weights are assumed to be nonzero.

\subsection{Gaussian ReLU kernel}\label{subsec:relu-kernel}

We first recall the Gaussian ReLU kernel identity from \citet{safran2018spurious}. For nonzero $u,v\in\RR^d$, let $\theta_{u,v}\coloneqq\angle(u,v)$. Then
\begin{subequations}
\begin{align}
    \EE_{x\sim\cN(0,I_d)} \big[\phi(\inner{u}{x})\phi(\inner{v}{x})\big] &= \|u\|_2\|v\|_2\,l(\theta_{u,v}),
    \label{eq:relu-gaussian-kernel}\\
    l(\theta) &\coloneqq \frac{(\pi-\theta)\cos\theta+\sin\theta}{2\pi}, \qquad \theta\in[0,\pi].
    \label{eq:relu-kernel-profile}
\end{align}
\end{subequations}

\begin{lemma}[Regularity of $L$ and continuation of gradient flow]\label{lem:relu-kernel-regularity}
The Gaussian ReLU kernel in \eqref{eq:relu-gaussian-kernel} is jointly $C^2$ in its two nonzero vector arguments, including at aligned and antipodal directions. Consequently, the population loss $L$ is $C^2$ on the open set where both student weights are nonzero. On this set, gradient flow has a unique local solution, and any solution that stays in a compact subset of this set on $[0,T)$, with $T<\infty$, extends beyond $T$.
\end{lemma}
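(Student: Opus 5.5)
Since the kernel is positively bihomogeneous, $K(u,v)=\|u\|_2\|v\|_2\,l(\theta_{u,v})$, I will write it in terms of the inner product. Set $\rho=\langle u,v\rangle/(\|u\|_2\|v\|_2)\in[-1,1]$ and $\theta=\arccos\rho$. Then
\[
K(u,v)=\frac{1}{2\pi}\Bigl((\pi-\arccos\rho)\,\langle u,v\rangle+\sqrt{\|u\|_2^2\|v\|_2^2-\langle u,v\rangle^2}\Bigr).
\]
The maps $(u,v)\mapsto\|u\|_2,\|v\|_2,\langle u,v\rangle$ are smooth on the set of pairs of nonzero vectors, and so is $\rho$. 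Hence $K$ is smooth wherever $|\rho|<1$. The whole difficulty lies at $\rho=\pm1$, where $\arccos$ and the square root are both singular.

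To handle $\rho=\pm1$, I will avoid the closed form and use a derivative identity instead. Differentiating under the expectation is legitimate because $\phi$ is Lipschitz and Gaussians have all moments. This gives
\[
\nabla_u K(u,v)=\EE[\mathbf 1\{\langle u,x\rangle>0\}\phi(\langle v,x\rangle)\,x].
\]
The standard computation, which is the same one behind the gradient formula in \cref{lem:critical_eq_from_flow}, evaluates this as
\[
\nabla_u K(u,v)=\frac{1}{2\pi}\bigl((\pi-\theta)v+\sin\theta\,\|v\|_2\,\bar u\bigr).
\]
This expression is continuous in $(u,v)$ even at $\theta\in\{0,\pi\}$, because $\theta=\arccos\rho$ is continuous, $\sin\theta$ is continuous, and $\bar u$ is smooth away from $u=0$. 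The same holds for $\nabla_v K$, so $K$ is $C^1$.

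For the second derivatives, the only delicate term is $\sin\theta\,\bar u$, since $\pi-\theta$ enters multiplied by $v$. I will compute $\nabla_u$ of $\theta$ and of $\sin\theta$ off the singular set:
\[
\nabla_u\theta=-\frac{P_u^\perp \bar v}{\|u\|_2\sin\theta},\qquad
\nabla_u\sin\theta=\cos\theta\,\nabla_u\theta .
\]
The factor $\sin\theta$ then cancels in the $\sin\theta$ term, leaving $-\cos\theta\,P_u^\perp\bar v/\|u\|_2$, which extends continuously. The $(\pi-\theta)v$ term, however, produces $v\,(P_u^\perp\bar v)^\top/(\|u\|_2\sin\theta)$. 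This term is not obviously continuous, because $|P_u^\perp\bar v|=\sin\theta$: the ratio stays bounded, but its direction is a unit vector whose limit depends on the direction of approach. This is the step I expect to be the main obstacle.

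My plan there is to not treat the two pieces separately. Instead I will collect the full Hessian as
\[
\EE[\delta(\langle u,x\rangle)\,\phi(\langle v,x\rangle)\,xx^\top]
\]
together with the other block terms. I will evaluate this explicitly by conditioning on $\langle u,x\rangle=0$: the result is $\frac{1}{\|u\|_2}$ times the density at $0$, times a second-moment matrix of $\phi(\langle v,x\rangle)x$ restricted to the hyperplane $u^\perp$. On that hyperplane, $\langle v,x\rangle=\langle P_u^\perp v,x\rangle$, which is a continuous function of $(u,v)$ and tends to $0$ when $\theta\to0,\pi$. The resulting matrix is $\|P_u^\perp v\|_2$ times a term of the form $(I-\bar u\bar u^\top)+\bar p\bar p^\top$ with bounded coefficients, where $\bar p$ is the direction of $P_u^\perp v$. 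Written this way, it is continuous and vanishes as $\|P_u^\perp v\|_2\to0$, so the apparent direction-dependence is harmless.

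If that bookkeeping gets messy, my fallback is a direct argument. The Hessian is continuous off the singular set, and I will show it converges uniformly along all approaches to a singular point. I will check this in the 2D reduction $\mathrm{span}\{u,v\}$ plus its orthogonal complement, using Taylor expansions in $\theta$: $\pi-\theta$ against $\theta$ near $0$, and $\theta$ near $\pi$.

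Given $C^2$ regularity of $K$, the loss is a finite sum of terms $K(w_i,w_j)$, $K(w_i,v_j)$, $K(v_i,v_j)$, so $L$ is $C^2$ on $\{w_1\ne0,w_2\ne0\}$. Then $\nabla L$ is $C^1$, hence locally Lipschitz, and Picard–Lindelöf gives a unique local solution. For continuation, suppose the solution stays in a compact set $\mathcal K$ inside the open set on $[0,T)$ with $T<\infty$. Then $\nabla L$ is bounded on $\mathcal K$, so $w(t)$ is Lipschitz in $t$ and $w(T^-)$ exists. This limit lies in $\mathcal K$ because $\mathcal K$ is closed, and restarting the flow from $w(T^-)$ extends the solution past $T$.
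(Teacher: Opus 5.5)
Your route is viable and genuinely different from the paper's, but one intermediate claim is wrong, and it misplaces the difficulty.

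\textbf{The claimed cancellation in $\nabla_u\sin\theta$ does not happen.} You write that ``the factor $\sin\theta$ cancels'' in the $\sin\theta$ term. In fact
\[
\nabla_u\sin\theta=\cos\theta\,\nabla_u\theta=-\frac{\cos\theta\,P_u^\perp\bar v}{\|u\|_2\sin\theta}.
\]
Since $\sin\theta=\|P_u^\perp\bar v\|_2$ has a norm-type kink where it vanishes, $\sin\theta\,\|v\|_2\bar u$ carries exactly the same bounded, direction-dependent singularity as $(\pi-\theta)v$. Neither piece is $C^1$ on its own; only their sum is. Because $v-\|v\|_2\cos\theta\,\bar u=\|v\|_2P_u^\perp\bar v$, the two singular contributions combine into
\[
\frac{1}{2\pi\|u\|_2\sin\theta}\bigl(v-\|v\|_2\cos\theta\,\bar u\bigr)\bigl(P_u^\perp\bar v\bigr)^\top
=\frac{\|v\|_2\sin\theta}{2\pi\|u\|_2}\,\bar p\bar p^\top,
\qquad \bar p=\frac{P_u^\perp\bar v}{\sin\theta}.
\]
Adding the harmless term $\frac{\|v\|_2\sin\theta}{2\pi\|u\|_2}P_u^\perp$ that comes from $\nabla_u\bar u$ gives $Q_1(u,v)$ of \cref{lem:relu-two-neuron-hessian}. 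This is the same matrix your conditioning computation produces, and it tends to $0$ at collinear pairs. So your ``main obstacle'' dissolves in one line off the collinear set, and your instinct to combine the pieces was the right one.

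\textbf{What still has to be justified.} The conditioning on $\langle u,x\rangle=0$ is formal as written. To turn it into a proof at collinear pairs, you can take either of two routes.
\begin{itemize}
\item Differentiate $u\mapsto\EE[\mathbf 1\{\langle u,x\rangle>0\}g(x)]$ directly for the continuous integrand $g(x)=\phi(\langle v,x\rangle)x$. Write $x=s\bar u+y$ with $y\perp\bar u$. A tilt $u+th$ with $h\perp u$ moves the boundary to $s=-t\langle h,y\rangle/\|u\|_2$. Continuity of $g$ then gives the hyperplane formula at every $u$, whether or not $u$ is collinear with $v$.
\item Use the standard extension fact: a $C^1$ function that is $C^2$ off the relatively closed null set $\{u\parallel v\}$, and whose Hessian extends continuously, is $C^2$. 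To prove it, integrate the gradient along segments that meet the set in a null set of times, then pass to the limit.
\end{itemize}
Your fallback (uniform convergence along all approaches) needs the second route to reach a conclusion. You should also state the mixed block
\[
\nabla_v\nabla_uK=\EE\bigl[\mathbf 1\{\langle u,x\rangle>0\}\,\mathbf 1\{\langle v,x\rangle>0\}\,xx^\top\bigr],
\]
which is continuous by dominated convergence, since the indicators converge off a null set.

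\textbf{Comparison with the paper.} The paper never computes second derivatives. Near $c=\pm1$ it sets $z=\bar u-c\bar v$ and $\tau=\|z\|_2$, and writes $l(\theta)=c/2+F(\tau)$ (respectively $F(\tau)$). Here $F(\tau)=\tau^3\psi(\tau^2)$ with $\psi$ analytic, so the only non-smooth ingredient is $\|z\|_2^3$, which is $C^2$ with vanishing Hessian at $0$. That route is shorter and needs neither an extension argument nor a delta-function computation.

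Your route is longer, but it produces the explicit blocks $Q_1,Q_2$ of \cref{lem:relu-two-neuron-hessian} together with their continuous extensions at collinear pairs, which the paper instead imports from \citet{safran2018spurious}. Your Picard--Lindel\"of and continuation step is the same as the paper's.
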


\begin{proof}[Proof of \cref{lem:relu-kernel-regularity}]
Write
\[
    \mathcal U=(\RR^d\setminus\{0\})^2, \qquad K(u,v)=\|u\|_2\|v\|_2\,l(\angle(u,v)).
\]
For $(u,v)\in\mathcal U$, write
\[
    c=\inner{\bar u}{\bar v},\quad z=\bar u-c\bar v,\quad \tau=\|z\|_2=\sqrt{1-c^2}.
\]
When $|c|<1$, $K(u,v)=\|u\|_2\|v\|_2\,l(\arccos c)$ is smooth. Near $c=\pm1$, the sign of $c$ is fixed and
\[
\begin{aligned}
    \theta=\angle(u,v) &=\begin{cases}
        \arcsin\tau,&c>0,\\
        \pi-\arcsin\tau,&c<0,
      \end{cases}\\[1mm]
    l(\theta) &=\begin{cases}
        c/2+F(\tau),&c>0,\\
        F(\tau),&c<0,
      \end{cases}
    \qquad F(\tau)=\frac{\tau-\sqrt{1-\tau^2}\arcsin\tau}{2\pi}.
\end{aligned}
\]
The function $F$ is odd and analytic near zero, with
\[
\begin{aligned}
    \arcsin\tau&=\tau+\frac{\tau^3}{6} +\frac{3\tau^5}{40}+O(\tau^7),\\
    \sqrt{1-\tau^2}&=1-\frac{\tau^2}{2} -\frac{\tau^4}{8}+O(\tau^6),\\
    F(\tau)&=\frac{\tau^3}{6\pi}+\frac{\tau^5}{15\pi} +O(\tau^7) =\tau^3\psi(\tau^2),
\end{aligned}
\]
where $\psi$ is analytic near zero and $\psi(0)=1/(6\pi)$.

For $H(z)=\|z\|_2^3$ and $z\ne0$,
\[
\begin{aligned}
    \nabla H(z)&=3\|z\|_2z,\\
    \nabla^2H(z)&=3\|z\|_2I_d+\frac{3zz^\top}{\|z\|_2}, &\|\nabla^2H(z)\|_{\rm op}&\le6\|z\|_2.
\end{aligned}
\]
At zero,
\[
    \frac{|H(z)-H(0)|}{\|z\|_2}=\|z\|_2^2\longrightarrow0, \qquad \frac{\|\nabla H(z)\|_2}{\|z\|_2}=3\|z\|_2\longrightarrow0.
\]
Thus $\nabla H(0)=0$ and $\nabla^2H(0)=0$, and $\nabla^2H(z)\to\nabla^2H(0)$ as $z\to0$. Hence $H\in C^2(\RR^d)$, and $F(\|z\|_2)=H(z)\psi(\|z\|_2^2)$ is $C^2$ near $0$. Since $c$, $z$, $\|u\|_2$, and $\|v\|_2$ depend smoothly on $(u,v)\in\mathcal U$, the two endpoint formulas give $K\in C^2(\mathcal U)$. Since
\[
    L(w_1,w_2) =\frac12\sum_{i,j=1}^2K(w_i,w_j) -\sum_{i,j=1}^2K(w_i,v_j) +\frac12\sum_{i,j=1}^2K(v_i,v_j),
\]
it follows that $L\in C^2(\mathcal U)$ and $-\nabla L\in C^1(\mathcal U)$.
Local existence and uniqueness follow from the Picard--Lindel\"of theorem. For continuation, suppose that $w([0,T))\subset\mathcal Q$ for some $T<\infty$ and some compact set $\mathcal Q\subset\mathcal U$, and let $B_{\mathcal Q}=\max_{w\in\mathcal Q}\|\nabla L(w)\|_2<\infty$. For $0\le s<t<T$,
\[
    \|w(t)-w(s)\|_2 \le\int_s^t\|\nabla L(w(a))\|_2\,\diff a \le B_{\mathcal Q}(t-s).
\]
Therefore the limit $w_T=\lim_{t\uparrow T}w(t)$ exists and lies in $\mathcal Q\subset\mathcal U$. The local solution starting at $w_T$ extends $w$ beyond $T$.
\end{proof}

\begin{lemma}[Monotonicity and bounds for $l$]\label{lem:l-monotone-bounds}
The function $l$ defined in \eqref{eq:relu-kernel-profile} is strictly decreasing on $[0,\pi]$ and satisfies $0\leq l(\theta)\leq 1/2$ for every $\theta\in[0,\pi]$.
\end{lemma}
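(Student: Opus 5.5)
The plan is to differentiate $l$ once and read off both claims from the sign of the derivative and the endpoint values. For $\theta\in(0,\pi)$ the product rule gives
\[
    2\pi\, l'(\theta) = -\cos\theta-(\pi-\theta)\sin\theta+\cos\theta = -(\pi-\theta)\sin\theta .
\]
On the open interval $(0,\pi)$ both factors $\pi-\theta$ and $\sin\theta$ are strictly positive, so $l'(\theta)<0$ there. The function $l$ is continuous on the closed interval $[0,\pi]$, and its derivative vanishes only at the two endpoints. By the mean value theorem, $l$ is therefore strictly decreasing on all of $[0,\pi]$.

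For the bounds, monotonicity reduces everything to the endpoint values. At the left endpoint,
\[
    l(0)=\frac{\pi\cdot 1+0}{2\pi}=\frac12,
\]
and at the right endpoint,
\[
    l(\pi)=\frac{0\cdot(-1)+0}{2\pi}=0 .
\]
Since $l$ is decreasing, $l(\pi)\le l(\theta)\le l(0)$ for every $\theta\in[0,\pi]$, which gives $0\le l(\theta)\le\frac12$.

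There is essentially no obstacle here. The only point needing a moment's care is that $l'$ vanishes at $\theta=0$ and $\theta=\pi$, so strict decrease must be argued from strict negativity on the interior together with continuity on the closed interval, rather than from $l'<0$ everywhere.
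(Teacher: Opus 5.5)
Your proposal is correct and follows essentially the same route as the paper: compute $l'(\theta)=-(\pi-\theta)\sin\theta/(2\pi)$, note it is negative on $(0,\pi)$ to get strict decrease on $[0,\pi]$, and read the bounds off the endpoint values $l(0)=1/2$ and $l(\pi)=0$. Your explicit mean value theorem remark for the endpoints is a small extra detail that the paper leaves implicit.
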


\begin{proof}[Proof of \cref{lem:l-monotone-bounds}]
Differentiating gives
\begin{equation}\label{eq:l-first-derivative}
    l'(\theta) = -\frac{(\pi-\theta)\sin\theta}{2\pi},
\end{equation}
which is negative for $\theta\in(0,\pi)$. Hence $l$ is strictly decreasing on $[0,\pi]$. The endpoint values $l(0)=1/2$ and $l(\pi)=0$ give the claimed bounds.
\end{proof}

\begin{lemma}[Bounds on $l(\theta_1)+l(\theta_2)$]\label{lem:l-two-teacher-angle-bounds}
Let $u\in\RR^d$ be nonzero and write $\theta_j\coloneqq\angle(u,v_j)$ for $j=1,2$. Then
\[
    2l\left(\pi-\frac{\alpha}{2}\right) \leq l(\theta_1)+l(\theta_2) \leq 2l\left(\frac{\alpha}{2}\right).
\]
Moreover, equality in the upper bound holds if and only if $\theta_1=\theta_2=\alpha/2$.
\end{lemma}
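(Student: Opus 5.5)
The plan is to reduce the problem to the constraint set of attainable angle pairs $(\theta_1,\theta_2)$ and then use monotonicity and concavity of $l$.

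\emph{Constraint set.} By the triangle inequality for angles on the sphere, any nonzero $u$ satisfies $|\theta_1-\theta_2|\le\alpha\le\theta_1+\theta_2$. Applying the same triangle inequality to $-u$, whose angles are $\pi-\theta_j$, gives $\theta_1+\theta_2\le 2\pi-\alpha$.

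\emph{Upper bound.} Since $l$ is decreasing (\cref{lem:l-monotone-bounds}), we may decrease $\theta_1+\theta_2$ toward $\alpha$ while keeping the constraints. So it suffices to maximize $l(\theta_1)+l(\theta_2)$ on the segment $\theta_1+\theta_2=\alpha$ with $\theta_j\ge0$. On $[0,\alpha]\subset[0,\pi/2]$ we have
\[
l''(\theta)=-\frac{(\pi-\theta)\cos\theta-\sin\theta}{2\pi}.
\]
This quantity is not obviously of one sign, so instead of relying on convexity we use a more robust route. Write $\theta_1=\alpha/2+s$ and $\theta_2=\alpha/2-s$, and set $g(s)=l(\alpha/2+s)+l(\alpha/2-s)$. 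Then
\[
g'(s)=l'(\alpha/2+s)-l'(\alpha/2-s).
\]
The function $-l'(\theta)=(\pi-\theta)\sin\theta/(2\pi)$ has derivative $\bigl((\pi-\theta)\cos\theta-\sin\theta\bigr)/(2\pi)$, which is positive for $\theta<\theta_0$ where $\tan\theta_0=\pi-\theta_0$ ($\theta_0\approx1.1$). Hence $-l'$ is strictly increasing on $[0,\theta_0]$. If $\alpha\le\theta_0$, this gives $g'(s)<0$ for $s>0$, so the unique maximum is at $s=0$.

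For larger $\alpha\le\pi/2$ we compare values directly. The key fact is that $(\pi-\theta)\sin\theta$ is symmetric-like in the sense that $h(\theta)=(\pi-\theta)\sin\theta$ satisfies $h(\alpha/2+s)\ge h(\alpha/2-s)$ for $0\le s\le\alpha/2$. This holds because $h$ is unimodal on $[0,\pi]$ with its peak at $\theta_0>\pi/4\ge\alpha/2$, so the point farther from $0$ has larger $h$ as long as $h(\alpha/2+s)\ge h(\alpha/2-s)$. The endpoint comparison $h(\alpha)\ge h(0)=0$ holds, and unimodality gives it on the whole range. I expect this one-variable inequality to be the main technical point; if unimodality alone does not suffice, I would verify it by checking the sign of
\[
h(a+s)-h(a-s)=2\cos a\sin s\,(\pi-a)-2s\sin a\cos s,
\]
which is nonnegative because $\tan s\,(\pi-a)\ge s\tan a$ holds for $a\le\pi/4$ (since $\tan s\ge s$ and $\pi-a\ge\tan a$). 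Strictness for $s>0$ gives the equality case. Points with $\theta_1+\theta_2>\alpha$ give strictly smaller values by strict monotonicity of $l$. Therefore equality holds only at $\theta_1=\theta_2=\alpha/2$.

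\emph{Lower bound.} Use $l(\theta)-l(\pi-\theta)=\cos\theta/2$, which follows directly from the formula for $l$. Applying the upper bound to $-u$, whose angles are $\pi-\theta_j$, yields
\[
l(\pi-\theta_1)+l(\pi-\theta_2)\le 2l(\alpha/2).
\]
Now
\[
l(\theta_1)+l(\theta_2)=l(\pi-\theta_1)+l(\pi-\theta_2)+\tfrac12(\cos\theta_1+\cos\theta_2),
\]
so this route bounds the wrong side. Instead, argue directly. By monotonicity we may increase $\theta_1+\theta_2$ to $2\pi-\alpha$, and on that segment the substitution $\phi_j=\pi-\theta_j$ gives $\phi_1+\phi_2=\alpha$ with
\[
l(\theta_j)=l(\pi-\phi_j)=l(\phi_j)-\cos\phi_j/2.
\]
The function $k(\phi)=l(\phi)-\cos\phi/2$ has $k'(\phi)=\bigl(\sin\phi-(\pi-\phi)\sin\phi\bigr)/(2\pi)+\sin\phi/2=\phi\sin\phi/(2\pi)$. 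This is increasing in $\phi$ on $[0,\pi/2]$, so $k$ is convex there. Convexity gives $k(\phi_1)+k(\phi_2)\ge2k(\alpha/2)=2l(\pi-\alpha/2)$, which is the claimed lower bound.
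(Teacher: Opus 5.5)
Your proof is correct. The lower bound is the paper's argument in different notation: your $k(\phi)=l(\pi-\phi)$ is the paper's $q(\phi)/(2\pi)$ with $q(x)=\sin x-x\cos x$. Both proofs use the same monotone reduction to $\phi_1+\phi_2=\alpha$ followed by Jensen for a convex function on $[0,\alpha]$. Your displayed intermediate expression for $k'$ contains a spurious $\sin\phi$ term; the correct line is $k'(\phi)=l'(\phi)+\tfrac12\sin\phi=\phi\sin\phi/(2\pi)$, which is the value you go on to use.

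For the upper bound you take a different route.
\begin{itemize}
\item The paper splits into three cases according to whether $\theta_1,\theta_2$ lie in $[0,\alpha]$. In the main case it proves that $l_1(x)=l(x)+l(\alpha-x)$ is strictly concave on $[0,\alpha]$, using an explicit second-derivative formula. It disposes of the other two cases with the reverse triangle inequality and by evaluating $l_1$ at $x=0$.
\item You apply the mirror image of the lower-bound reduction: lower each $\theta_j$ until $\theta_1+\theta_2=\alpha$, which is possible because $\theta_1+\theta_2\ge\alpha$. This absorbs the paper's second and third cases, since e.g.\ $\theta_1>\alpha$ just lands at the endpoint $(\alpha,0)$. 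It also gives the strict inequality off the segment, which handles the equality case.
\item You then show that $g(s)=l(\alpha/2+s)+l(\alpha/2-s)$ is strictly decreasing on $[0,\alpha/2]$. This follows from the identity $h(a+s)-h(a-s)=2\cos a\cos s\,\bigl((\pi-a)\tan s-s\tan a\bigr)$, where $h(\theta)=(\pi-\theta)\sin\theta$.
\end{itemize}
Both arguments rest on the same inequality $\tan(\alpha/2)<\pi-\alpha/2$. Yours needs only first derivatives and no case split; the paper's gives the slightly stronger structural fact that $l_1$ is strictly concave.

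Your explicit identity already covers every $\alpha\le\pi/2$, so you can drop both the split at $\theta_0$ and the unimodality heuristic. On its own, that heuristic does not compare $h$ at two points lying on opposite sides of its peak.
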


\begin{proof}[Proof of \cref{lem:l-two-teacher-angle-bounds}]
We begin with the lower bound. Set
\[
    \vartheta_j\coloneqq \pi-\theta_j=\angle(-\bar u,v_j), \qquad j=1,2.
\]
The triangle inequality for angles gives
\[
    \vartheta_1+\vartheta_2\geq \alpha.
\]
Write
\[
    q(x)\coloneqq \sin x-x\cos x, \qquad l(\pi-x)=\frac{q(x)}{2\pi}, \qquad x\in[0,\pi].
\]
Since $q'(x)=x\sin x\geq0$, the function $q$ is increasing. Let
\[
    a\coloneqq\min\{\vartheta_1,\alpha\}, \qquad b\coloneqq\alpha-a.
\]
Then $a,b\in[0,\alpha]$, $a+b=\alpha$, $a\leq\vartheta_1$, and $b\leq\vartheta_2$. Indeed, if $\vartheta_1<\alpha$, then $b=\alpha-\vartheta_1\leq\vartheta_2$; otherwise $b=0$. Therefore,
\begin{equation}\label{eq:angular-lower-monotonicity}
    q(\vartheta_1)+q(\vartheta_2)\geq q(a)+q(b).
\end{equation}
Moreover,
\[
    q''(x)=\sin x+x\cos x\geq0, \qquad x\in[0,\alpha],
\]
because $\alpha\leq\pi/2$. Thus $q$ is convex on $[0,\alpha]$, and Jensen's inequality yields
\begin{equation}\label{eq:angular-lower-jensen}
    q(a)+q(b) \geq 2q\left(\frac{a+b}{2}\right) = 2q\left(\frac{\alpha}{2}\right).
\end{equation}
Combining \eqref{eq:angular-lower-monotonicity} and \eqref{eq:angular-lower-jensen} gives
\begin{equation}\label{eq:angular-lower-bound}
    l(\theta_1)+l(\theta_2) = l(\pi-\vartheta_1)+l(\pi-\vartheta_2) \geq 2l\left(\pi-\frac{\alpha}{2}\right).
\end{equation}

We now prove the upper bound and identify its equality case.

\paragraph{Case 1:} $\theta_1,\theta_2\in[0,\alpha]$.

Define $l_1$ on $[0,\alpha]$ by
\[
    l_1(x)\coloneqq l(x)+l(\alpha-x).
\]
The triangle inequality gives $\theta_2\geq\alpha-\theta_1$. Since $l$ is decreasing,
\begin{equation}\label{eq:case-one-reduction}
    l(\theta_1)+l(\theta_2) \leq l(\theta_1)+l(\alpha-\theta_1) =l_1(\theta_1).
\end{equation}
Differentiating \eqref{eq:l-first-derivative} gives
\begin{align}
    2\pi l_1''(x) &= \sin x-(\pi-x)\cos x +\sin(\alpha-x)-(\pi-\alpha+x)\cos(\alpha-x)\notag\\
    &= 2\left( \sin\frac{\alpha}{2} -\left(\pi-\frac{\alpha}{2}\right)\cos\frac{\alpha}{2} \right) \cos\left(x-\frac{\alpha}{2}\right) -2\left(x-\frac{\alpha}{2}\right)\sin\frac{\alpha}{2} \sin\left(x-\frac{\alpha}{2}\right).
    \label{eq:l-one-second-derivative}
\end{align}
Because $0<\alpha/2\leq\pi/4$,
\[
    \tan\frac{\alpha}{2} \leq 1 <\frac{3\pi}{4} \leq \pi-\frac{\alpha}{2}.
\]
Hence
\[
    \sin\frac{\alpha}{2} -\left(\pi-\frac{\alpha}{2}\right)\cos\frac{\alpha}{2} <0.
\]
For $x\in[0,\alpha]$, $|x-\alpha/2|\leq\alpha/2\leq\pi/4$, so
\[
    \cos\left(x-\frac{\alpha}{2}\right)>0, \qquad \left(x-\frac{\alpha}{2}\right) \sin\left(x-\frac{\alpha}{2}\right)\geq0.
\]
It follows from \eqref{eq:l-one-second-derivative} that $l_1''(x)<0$ on $[0,\alpha]$.

The symmetry $l_1(x)=l_1(\alpha-x)$ and strict concavity show that $l_1$ has a unique maximum at $x=\alpha/2$. Therefore,
\begin{equation}\label{eq:l-one-maximum}
    l_1(x) \leq l_1\left(\frac{\alpha}{2}\right) = 2l\left(\frac{\alpha}{2}\right), \qquad x\in[0,\alpha],
\end{equation}
with equality only at $x=\alpha/2$. Combining \eqref{eq:case-one-reduction} and \eqref{eq:l-one-maximum} yields
\[
    l(\theta_1)+l(\theta_2) \leq 2l\left(\frac{\alpha}{2}\right).
\]
If equality holds, \eqref{eq:l-one-maximum} forces $\theta_1=\alpha/2$. Equality in \eqref{eq:case-one-reduction}, together with the strict monotonicity of $l$, then forces $\theta_2=\alpha-\theta_1=\alpha/2$. Conversely, $\theta_1=\theta_2=\alpha/2$ attains equality.

\paragraph{Case 2:} Exactly one of $\theta_1,\theta_2$ lies in $(\alpha,\pi]$. By symmetry, assume $\theta_1\in(\alpha,\pi]$ and $\theta_2\in[0,\alpha]$.

The reverse triangle inequality gives $\theta_1-\theta_2\leq\alpha$, so $\theta_2\geq\theta_1-\alpha$. Monotonicity of $l$ then gives
\[
    l(\theta_1)+l(\theta_2) \leq l(\theta_1)+l(\theta_1-\alpha) \leq l(\alpha)+l(0) \leq 2l\left(\frac{\alpha}{2}\right).
\]
The last inequality is \eqref{eq:l-one-maximum} evaluated at $x=0$. Since $\alpha/2>0$, the equality condition in \eqref{eq:l-one-maximum} shows that this inequality is strict.

\paragraph{Case 3:} $\theta_1,\theta_2\in(\alpha,\pi]$.
Monotonicity gives
\[
    l(\theta_1)+l(\theta_2) \leq 2l(\alpha) \leq 2l\left(\frac{\alpha}{2}\right).
\]
The second inequality is strict because $l$ is strictly decreasing and $\alpha>0$.

Thus the upper bound holds in all three cases, with equality if and only if $\theta_1=\theta_2=\alpha/2$. Together with the lower bound \eqref{eq:angular-lower-bound}, this completes the proof.
\end{proof}

\subsection{Closed-form loss and gradient flow}\label{subsec:closed-form-loss}

The identities in this subsection hold for the wider range $\alpha=\angle(v_1,v_2)\in(0,\pi)$. We recall the angles $\theta_{ij}$ and $\gamma$ from \eqref{eq:angle_shorthand}, the kernel profile $l$ from \eqref{eq:relu-kernel-profile}, and the shorthand $\theta_{u,v}=\angle(u,v)$ from \cref{subsec:relu-kernel}. For nonzero $u,v\in\RR^d$, the derivative of the Gaussian ReLU kernel with respect to its first argument is
\begin{align}
    \nabla_u \EE_{x\sim\cN(0,I_d)} \big[\phi(\inner{u}{x})\phi(\inner{v}{x})\big] &= \EE_{x\sim\cN(0,I_d)} \big[\ind(u^\top x\geq 0)\phi(\inner{v}{x})x\big]\notag\\
    &= \frac{\|v\|_2\sin\theta_{u,v}}{2\pi}\bar u + \frac{\pi-\theta_{u,v}}{2\pi}v.
    \label{eq:relu-kernel-gradient}
\end{align}
The Lipschitz continuity of ReLU and Gaussian integrability justify differentiating under the expectation. The hyperplane $\{x:u^\top x=0\}$ has Gaussian measure zero, so using either $>$ or $\geq$ in the indicator gives the same expectation.

Expanding the squared residual and applying \eqref{eq:relu-gaussian-kernel} to each student--student, teacher--teacher, and student--teacher pair gives, for nonzero $w_1,w_2$,
\begin{align}
    L(w_1,w_2) &= \frac{1}{2}\EE_{x\sim\cN(0,I_d)} \left[ \big( \phi(\inner{w_1}{x})+\phi(\inner{w_2}{x}) -\phi(\inner{v_1}{x})-\phi(\inner{v_2}{x}) \big)^2 \right]\notag\\
    &= \frac{1}{4}\big(\|w_1\|_2^2+\|w_2\|_2^2\big) +\|w_1\|_2\|w_2\|_2l(\gamma) -\sum_{i=1}^2\|w_i\|_2\sum_{j=1}^2l(\theta_{ij}) +l(\alpha)+\frac{1}{2}.
    \label{eq:closed-form-loss}
\end{align}
Since $l$ is bounded by \cref{lem:l-monotone-bounds}, the right-hand side of \eqref{eq:closed-form-loss} extends continuously to all $w_1,w_2\in\RR^d$ if every term carrying a factor $\|w_i\|_2$ with $w_i=0$ is set to zero. We use \eqref{eq:closed-form-loss} with this convention whenever one or both student weights vanish.

On the domain $w_1,w_2\ne0$, applying \eqref{eq:relu-kernel-gradient} to the student--teacher and student--student terms in the gradient flow \eqref{eq:population_squared_loss_and_gf} gives
\begin{align}
    \frac{\diff w_1}{\diff t} &= \frac{1}{2\pi} \big( (\pi-\theta_{11})v_1+(\pi-\theta_{12})v_2 +(\sin \theta_{11}+\sin \theta_{12})\bar w_1 \big) - \left( \frac{1}{2} +\frac{\sin\gamma}{2\pi}\frac{\|w_2\|_2}{\|w_1\|_2} \right)w_1 -\frac{\pi-\gamma}{2\pi}w_2,
    \label{eq:gf-w1}\\
    \frac{\diff w_2}{\diff t} &= \frac{1}{2\pi} \big( (\pi-\theta_{21})v_1+(\pi-\theta_{22})v_2 +(\sin \theta_{21}+\sin \theta_{22})\bar w_2 \big) -\frac{\pi-\gamma}{2\pi}w_1 - \left( \frac{1}{2} +\frac{\sin\gamma}{2\pi}\frac{\|w_1\|_2}{\|w_2\|_2} \right)w_2.
    \label{eq:gf-w2}
\end{align}

\begin{lemma}[Direction and norm equations]\label{lem:direction-norm-equations}
Let $w(t)=(w_1(t),w_2(t))$ solve the gradient flow \eqref{eq:population_squared_loss_and_gf}. At each time $t$ for which $w_1(t),w_2(t)\ne0$, the following identities hold:
\begin{align}
    \frac{\diff\bar w_1}{\diff t} &= \frac{1}{2\pi\|w_1\|_2}(I-\bar w_1\bar w_1^\top) \big( (\pi-\theta_{11})v_1+(\pi-\theta_{12})v_2 -(\pi-\gamma)w_2 \big),
    \label{eq:direction-w1}\\
    \frac{\diff\bar w_2}{\diff t} &= \frac{1}{2\pi\|w_2\|_2}(I-\bar w_2\bar w_2^\top) \big( (\pi-\theta_{21})v_1+(\pi-\theta_{22})v_2 -(\pi-\gamma)w_1 \big),
    \label{eq:direction-w2}\\
    \frac{\diff\|w_1\|_2}{\diff t} &= \frac{1}{2\pi} \big( \sin\theta_{11}+\sin\theta_{12} +(\pi-\theta_{11})\cos\theta_{11} +(\pi-\theta_{12})\cos\theta_{12} \big)\notag\\
    &\quad -\frac{1}{2}\|w_1\|_2 -\frac{\sin\gamma+(\pi-\gamma)\cos\gamma}{2\pi}\|w_2\|_2,
    \label{eq:norm-w1}\\
    \frac{\diff\|w_2\|_2}{\diff t} &= \frac{1}{2\pi} \big( \sin\theta_{21}+\sin\theta_{22} +(\pi-\theta_{21})\cos\theta_{21} +(\pi-\theta_{22})\cos\theta_{22} \big)\notag\\
    &\quad -\frac{\sin\gamma+(\pi-\gamma)\cos\gamma}{2\pi}\|w_1\|_2 -\frac{1}{2}\|w_2\|_2.
    \label{eq:norm-w2}
\end{align}
\end{lemma}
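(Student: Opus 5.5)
The plan is to derive all four identities directly from the explicit gradient-flow equations \eqref{eq:gf-w1}--\eqref{eq:gf-w2}, using the elementary chain-rule formulas for the direction and norm of a nonzero curve. For $w\ne0$ differentiable in $t$,
\[
    \frac{\diff\|w\|_2}{\diff t}=\inner{\bar w}{\dot w},
    \qquad
    \frac{\diff\bar w}{\diff t}=\frac{1}{\|w\|_2}(I-\bar w\bar w^\top)\dot w.
\]
Both hold at every time where the weights are nonzero, because $w\mapsto\|w\|_2$ and $w\mapsto\bar w$ are smooth away from the origin. By symmetry it suffices to treat $w_1$. The $w_2$ identities follow by swapping indices, since the flow equations are symmetric under $1\leftrightarrow2$.

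\textbf{Direction equation.} Apply $P^\perp_{\bar w_1}=I-\bar w_1\bar w_1^\top$ to \eqref{eq:gf-w1}. The terms proportional to $\bar w_1$ or $w_1$ are annihilated: the $(\sin\theta_{11}+\sin\theta_{12})\bar w_1$ term and the whole $-\bigl(\tfrac12+\tfrac{\sin\gamma}{2\pi}\|w_2\|_2/\|w_1\|_2\bigr)w_1$ term. What remains is
\[
    \frac{1}{2\pi}P^\perp_{\bar w_1}\bigl((\pi-\theta_{11})v_1+(\pi-\theta_{12})v_2-(\pi-\gamma)w_2\bigr).
\]
Dividing by $\|w_1\|_2$ gives \eqref{eq:direction-w1}.

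\textbf{Norm equation.} Take the inner product of \eqref{eq:gf-w1} with $\bar w_1$, using $\inner{\bar w_1}{v_j}=\cos\theta_{1j}$ (since $\|v_j\|_2=1$), $\inner{\bar w_1}{w_2}=\|w_2\|_2\cos\gamma$, and $\inner{\bar w_1}{w_1}=\|w_1\|_2$. The teacher part contributes
\[
    \frac{1}{2\pi}\bigl((\pi-\theta_{11})\cos\theta_{11}+(\pi-\theta_{12})\cos\theta_{12}+\sin\theta_{11}+\sin\theta_{12}\bigr).
\]
The self term contributes $-\tfrac12\|w_1\|_2-\tfrac{\sin\gamma}{2\pi}\|w_2\|_2$. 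The cross term contributes $-\tfrac{\pi-\gamma}{2\pi}\|w_2\|_2\cos\gamma$. Grouping the two $\|w_2\|_2$ contributions gives $-\tfrac{\sin\gamma+(\pi-\gamma)\cos\gamma}{2\pi}\|w_2\|_2$, which is exactly \eqref{eq:norm-w1}.

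\textbf{Main obstacle.} There is no real obstacle; the only point needing care is that the flow is differentiable at such times. This is guaranteed by \cref{lem:relu-kernel-regularity}, which gives $L\in C^2$ on the set where both weights are nonzero, so $w$ is $C^1$ there and the chain rule applies. The bookkeeping of the cross term is also where a sign or factor slip could occur, so I would double-check that the $\sin\gamma$ from the self term combines with the cross term as stated.
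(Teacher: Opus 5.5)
Your proposal is correct and is the same argument as the paper's: project \eqref{eq:gf-w1}--\eqref{eq:gf-w2} onto the orthogonal complement of $\bar w_i$ for the direction equations and onto $\bar w_i$ for the norm equations. Your bookkeeping checks out, including the combination of the $\sin\gamma$ self term with the $(\pi-\gamma)\cos\gamma$ cross term.
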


\begin{proof}[Proof of \cref{lem:direction-norm-equations}]
Project \eqref{eq:gf-w1} and \eqref{eq:gf-w2} onto the orthogonal complement of each $\bar w_i$ for the direction equations and onto each $\bar w_i$ for the norm equations.
\end{proof}

\begin{lemma}[Bounds on the radial velocity]\label{lem:radial-velocity-bounds}
Let $w(t)=(w_1(t),w_2(t))$ solve the gradient flow \eqref{eq:population_squared_loss_and_gf}. At each time $t$ for which $w_1(t),w_2(t)\ne0$,
\[
    \frac{r_-}{2}-\frac{\|w_1\|_2+\|w_2\|_2}{2} \le \frac{\diff\|w_i\|_2}{\diff t} \le 1-\frac{\|w_i\|_2}{2}, \qquad i=1,2,
\]
where $r_-$ is defined in \eqref{eq:def_r}.
\end{lemma}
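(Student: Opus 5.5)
The plan is to rewrite the norm equations \eqref{eq:norm-w1}--\eqref{eq:norm-w2} of \cref{lem:direction-norm-equations} in terms of the kernel profile $l$. Then the two-sided estimate follows directly from \cref{lem:l-monotone-bounds} and \cref{lem:l-two-teacher-angle-bounds}. By symmetry it suffices to treat $i=1$; the case $i=2$ is identical with the indices swapped.

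First, I note from \eqref{eq:relu-kernel-profile} that $\sin\theta+(\pi-\theta)\cos\theta=2\pi\, l(\theta)$. Hence \eqref{eq:norm-w1} reads
\[
    \frac{\diff\|w_1\|_2}{\diff t}
    = l(\theta_{11})+l(\theta_{12}) - \frac12\|w_1\|_2 - l(\gamma)\,\|w_2\|_2 .
\]
Next, I evaluate the two extremal values appearing in \cref{lem:l-two-teacher-angle-bounds} against \eqref{eq:def_r}. A direct substitution gives
\[
    2l\!\left(\tfrac{\alpha}{2}\right)=\tfrac{1}{\pi}\Bigl(\bigl(\pi-\tfrac{\alpha}{2}\bigr)\cos\tfrac{\alpha}{2}+\sin\tfrac{\alpha}{2}\Bigr)=\tfrac{r_+}{2}.
\]
Using $\cos(\pi-\alpha/2)=-\cos(\alpha/2)$, it also gives
\[
    2l\!\left(\pi-\tfrac{\alpha}{2}\right)=\tfrac1\pi\Bigl(\sin\tfrac{\alpha}{2}-\tfrac{\alpha}{2}\cos\tfrac{\alpha}{2}\Bigr)=\tfrac{r_-}{2}.
\]
Therefore \cref{lem:l-two-teacher-angle-bounds}, applied with $u=w_1$, yields $r_-/2\le l(\theta_{11})+l(\theta_{12})\le r_+/2$.

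For the upper bound, I use $l(\gamma)\ge0$ from \cref{lem:l-monotone-bounds} to drop the interaction term. I then bound $r_+/2=2l(\alpha/2)\le 2\cdot\tfrac12=1$, again by \cref{lem:l-monotone-bounds}. This gives $\frac{\diff\|w_1\|_2}{\diff t}\le 1-\|w_1\|_2/2$.

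For the lower bound, I use $l(\gamma)\le 1/2$ to get $-l(\gamma)\|w_2\|_2\ge -\|w_2\|_2/2$. Combined with the teacher lower bound $r_-/2$, this gives
\[
    \frac{\diff\|w_1\|_2}{\diff t}\ge \frac{r_-}{2}-\frac{\|w_1\|_2+\|w_2\|_2}{2}.
\]

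There is no real obstacle. The only step requiring care is the trigonometric identification of $2l(\alpha/2)$ and $2l(\pi-\alpha/2)$ with $r_\pm/2$, in particular the sign flip of the cosine at $\pi-\alpha/2$. All the substantive work was already done in \cref{lem:l-two-teacher-angle-bounds}.
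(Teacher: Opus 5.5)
Your proof is correct and matches the paper's: rewrite \eqref{eq:norm-w1} as $l(\theta_{11})+l(\theta_{12})-\tfrac12\|w_1\|_2-l(\gamma)\|w_2\|_2$, then use the lower bound $2l(\pi-\alpha/2)=r_-/2$ from \cref{lem:l-two-teacher-angle-bounds} together with $0\le l\le 1/2$ from \cref{lem:l-monotone-bounds}. The only difference is cosmetic: the paper gets $l(\theta_{11})+l(\theta_{12})\le 1$ directly from $l\le 1/2$ rather than via $r_+/2\le 1$, so it does not need the upper half of \cref{lem:l-two-teacher-angle-bounds}.
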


\begin{proof}[Proof of \cref{lem:radial-velocity-bounds}]
By \eqref{eq:relu-kernel-profile}, the norm equation \eqref{eq:norm-w1} reads
\[
    \frac{\diff\|w_1\|_2}{\diff t} =l(\theta_{11})+l(\theta_{12})-\frac12\|w_1\|_2-l(\gamma)\|w_2\|_2 .
\]
By \cref{lem:l-two-teacher-angle-bounds} and \eqref{eq:def_r}, $l(\theta_{11})+l(\theta_{12})\ge2l(\pi-\alpha/2)=r_-/2$, and \cref{lem:l-monotone-bounds} gives $l(\theta_{11})+l(\theta_{12})\le1$ and $0\le l(\gamma)\le1/2$. These bounds give the claim for $i=1$; the case $i=2$ follows in the same way from \eqref{eq:norm-w2}.
\end{proof}

\subsection{Hessian of the population loss}

As in \cref{subsec:closed-form-loss}, we consider $\alpha\in(0,\pi)$. We recall $\theta_{w,v}=\angle(w,v)$ and write
\[
    n_{v,w}\coloneqq \bar v-\cos\theta_{w,v}\bar w, \qquad \bar n_{v,w}\coloneqq \frac{n_{v,w}}{\|n_{v,w}\|_2}
\]
for nonzero $w,v\in\RR^d$ with $\theta_{w,v}\in(0,\pi)$.

\begin{lemma}[Hessian of the population loss]\label{lem:relu-two-neuron-hessian}
For nonzero $w_1,w_2$, the Hessian of the population loss $L(w_1,w_2)$ has diagonal blocks
\[
    \frac{\partial^2 L}{\partial w_i^2} = \frac{1}{2}I_d+Q_1(w_i,w_{3-i})-Q_1(w_i,v_1)-Q_1(w_i,v_2), \qquad i=1,2,
\]
and off-diagonal blocks
\[
    \frac{\partial^2 L}{\partial w_i\partial w_{3-i}} = Q_2(w_i,w_{3-i}), \qquad i=1,2,
\]
where
\begin{align*}
    Q_1(w,v) &= \frac{\sin\theta_{w,v}\|v\|_2}{2\pi\|w\|_2} \left( I_d-\bar w\bar w^\top+\bar n_{v,w}\bar n_{v,w}^\top \right),\\
    Q_2(w,v) &= \frac{1}{2\pi} \left( (\pi-\theta_{w,v})I_d +\bar n_{v,w}\bar w^\top+\bar n_{w,v}\bar v^\top \right).
\end{align*}
At collinear pairs, $Q_1$ and $Q_2$ are defined by continuous extension: $Q_1(w,v)=0$ when $\bar w=\pm\bar v$, while $Q_2(w,v)=\frac12 I_d$ when $\bar w=\bar v$ and $Q_2(w,v)=0$ when $\bar w=-\bar v$.
\end{lemma}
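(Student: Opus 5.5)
The plan is to differentiate the first-order formulas already in hand. The gradient of $L$ in $w_i$ follows from \eqref{eq:relu-kernel-gradient} and \eqref{eq:closed-form-loss}:
\[
\nabla_{w_i}L=\tfrac12 w_i+G(w_i,w_{3-i})-G(w_i,v_1)-G(w_i,v_2),\qquad G(u,v)\coloneqq\frac{\|v\|_2\sin\theta_{u,v}}{2\pi}\bar u+\frac{\pi-\theta_{u,v}}{2\pi}v .
\]
Here the self-term $\tfrac14\|w_i\|_2^2$ contributes $\tfrac12 w_i$, and the cross term $\|w_1\|_2\|w_2\|_2l(\gamma)=K(w_1,w_2)$ appears once in $L$ with coefficient one, which gives exactly $G(w_i,w_{3-i})$. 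The whole Hessian then reduces to two Jacobians of $G$: $\nabla_u G(u,v)$, which produces $Q_1$, and $\nabla_v G(u,v)$, which produces $Q_2$. For the diagonal block I will use $\nabla_u$ in all three $G$-terms; for the off-diagonal block I will use $\nabla_v$ of $G(w_i,\cdot)$ evaluated at $v=w_{3-i}$.

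The work rests on two elementary derivative formulas, valid for $\theta_{u,v}\in(0,\pi)$. The first is $\nabla_u\bar u=\frac{1}{\|u\|_2}(I-\bar u\bar u^\top)$. The second comes from $\cos\theta=\inner{\bar u}{\bar v}$ via $-\sin\theta\,\nabla_u\theta=\frac{1}{\|u\|_2}(I-\bar u\bar u^\top)\bar v=\frac{n_{v,u}}{\|u\|_2}$. Since $\|n_{v,u}\|_2=\sin\theta$, this gives $\nabla_u\theta=-\bar n_{v,u}/\|u\|_2$, and by symmetry $\nabla_v\theta=-\bar n_{u,v}/\|v\|_2$.

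For the diagonal block, I differentiate $G$ in $u$. The term $\sin\theta\,\bar u$ contributes
\[
\tfrac{\|v\|_2}{2\pi}\Bigl(\sin\theta\,\tfrac{1}{\|u\|_2}(I-\bar u\bar u^\top)+\cos\theta\,\bar u\,(\nabla_u\theta)^\top\Bigr),
\]
and the term $(\pi-\theta)v$ contributes $-\tfrac{1}{2\pi}v(\nabla_u\theta)^\top$. I then substitute $v=\|v\|_2(\cos\theta\,\bar u+\sin\theta\,\bar n_{v,u})$. The $\bar u\bar n^\top$ pieces cancel between the two contributions, leaving $\frac{\|v\|_2\sin\theta}{2\pi\|u\|_2}(I-\bar u\bar u^\top+\bar n\bar n^\top)=Q_1(u,v)$. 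Combining this with the sign pattern in $\nabla_{w_i}L$ yields the stated diagonal block.

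For the off-diagonal block, I differentiate $G(u,\cdot)$ in $v$. I expect this to be the main obstacle, because the result has to come out in exactly the stated non-symmetric form. The pieces are as follows. The factor $\|v\|_2$ gives $\frac{\sin\theta}{2\pi}\bar u\bar v^\top$. The angle inside $\sin\theta$ gives $-\frac{\|v\|_2\cos\theta}{2\pi}\bar u\,\bar n_{u,v}^\top/\|v\|_2$. The term $(\pi-\theta)v$ gives $\frac{\pi-\theta}{2\pi}I$ plus $\frac{1}{2\pi}v\,\bar n_{u,v}^\top/\|v\|_2$. Writing $\bar v=\cos\theta\,\bar u+\sin\theta\,\bar n_{v,u}$ and $\bar u=\cos\theta\,\bar v+\sin\theta\,\bar n_{u,v}$, and using the orthonormality identities that relate the pairs $\{\bar u,\bar n_{v,u}\}$ and $\{\bar v,\bar n_{u,v}\}$ in the same plane, I expect the rank-two part to simplify to $\bar n_{v,u}\bar u^\top+\bar n_{u,v}\bar v^\top$, up to the transpose convention for the block. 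I will check this simplification explicitly in the 2D plane spanned by $\bar u,\bar v$, and include a consistency check against the symmetry $\partial^2L/\partial w_1\partial w_2=(\partial^2L/\partial w_2\partial w_1)^\top$.

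Finally, I will handle the collinear cases by continuity, using \cref{lem:relu-kernel-regularity}. Since the Hessian is continuous there, the values at collinear pairs are the limits of the generic formula. As $\theta\to0$ or $\theta\to\pi$, the prefactor $\sin\theta$ in $Q_1$ tends to zero while the bracket stays bounded, so $Q_1\to0$. For $Q_2$, I will compute the limit along $\theta\to0$ and $\theta\to\pi$ in the plane: the rank-two part either cancels or combines with $(\pi-\theta)I$, and this should produce $\tfrac12I_d$ and $0$ respectively, matching the values quoted from the $C^2$ expansion $l=c/2+F(\tau)$ near aligned directions and $l=F(\tau)$ near antipodal ones.
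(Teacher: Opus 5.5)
Your proposal is correct, but it takes a different route from the paper. The paper does not derive the formula: its proof is a one-line citation of the block Hessian in Theorem~5 of \citet{safran2018spurious}, specialized to two students and two teachers. You instead differentiate the gradient formula \eqref{eq:relu-kernel-gradient} directly, and that computation closes.

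For the diagonal block, the $\bar u\,\bar n_{v,u}^\top$ terms cancel as you say, leaving exactly $Q_1(u,v)$. For the off-diagonal block, work in the orthonormal frame $\{\bar v,\bar n_{u,v}\}$, where $\bar u=\cos\theta\,\bar v+\sin\theta\,\bar n_{u,v}$ and $\bar n_{v,u}=\sin\theta\,\bar v-\cos\theta\,\bar n_{u,v}$. Both your three rank-one pieces and $\bar n_{v,u}\bar u^\top+\bar n_{u,v}\bar v^\top$ reduce to
\[
\sin\theta\cos\theta\,\bigl(\bar v\bar v^\top-\bar n_{u,v}\bar n_{u,v}^\top\bigr)+\sin^2\theta\,\bigl(\bar v\bar n_{u,v}^\top+\bar n_{u,v}\bar v^\top\bigr).
\]
This matrix is symmetric, and the expression is also invariant under $u\leftrightarrow v$. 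So the transpose convention you were worried about is moot, and $Q_2(w_1,w_2)^\top=Q_2(w_2,w_1)$ holds automatically.

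The same expression settles the collinear limits more precisely than your sketch does. The rank-two part is $O(\sin\theta)$, so it vanishes as $\theta\to0$ and as $\theta\to\pi$; it never combines with $(\pi-\theta)I_d$. The values $\tfrac12 I_d$ and $0$ therefore come entirely from $\frac{\pi-\theta}{2\pi}I_d$.

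Passing to collinear pairs by continuity is legitimate. \cref{lem:relu-kernel-regularity} makes each kernel term $K(u,v)$ individually $C^2$, so $Q_1=\nabla_u^2K$ and $Q_2=D_vD_uK$ each have a limit, not just their sum in the Hessian.

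What your route buys is a self-contained derivation in the paper's own notation. It exposes the non-obvious rank-two form of $Q_2$ and gets the collinear values for free. The citation buys brevity, but it leaves the reader to reconcile conventions (orientation of $\bar n$, transpose of the off-diagonal block) with an external source.
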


\begin{proof}[Proof of \cref{lem:relu-two-neuron-hessian}]
This is the block Hessian formula from Theorem~5 of \citet{safran2018spurious}, specialized to the two-student, two-teacher loss \eqref{eq:population_squared_loss_and_gf}.
\end{proof}

\subsection{Proof of \texorpdfstring{\cref{prop:teacher_leap_profile}}{the teacher leap proposition}}\label{subsec:teacher-leap-proof}

We use the normalized probabilists' Hermite polynomials from \cref{subsec:teacher_leap_complexity},
\[
    h_k(x) \coloneqq \frac{(-1)^k}{\sqrt{k!}}e^{x^2/2} \frac{\diff^k}{\diff x^k}e^{-x^2/2}, \qquad k\geq0.
\]

\begin{proof}[Proof of \cref{prop:teacher_leap_profile}]
Let $a\coloneqq\cos(\alpha/2)$ and $b\coloneqq\sin(\alpha/2)$. Since $\|v_1\|_2=\|v_2\|_2=1$, the directions in \eqref{eq:two-directions} satisfy
\[
    v_1=a\bar v+bv_\Delta,\qquad v_2=a\bar v-bv_\Delta.
\]
With $z_1=\inner{\bar v}{x}$ and $z_2=\inner{v_\Delta}{x}$ as in \cref{subsec:teacher_leap_complexity},
\[
    f^*(x)=g(z_1,z_2)\coloneqq\phi(az_1+bz_2)+\phi(az_1-bz_2).
\]
We expand $g$ as in \eqref{eq:two_coordinate_hermite_expansion} with $(U,W)=(z_1,z_2)$, and write
\[
    \widehat\phi_k\coloneqq \EE_{G\sim\cN(0,1)}[\phi(G)h_k(G)].
\]
Since $a^2+b^2=1$, the Hermite addition formula
\[
    h_k(az_1\pm bz_2) = \sum_{m+n=k}\sqrt{\frac{k!}{m!\,n!}}\,a^m(\pm b)^n h_m(z_1)h_n(z_2)
\]
and the orthonormality of $\{h_m(z_1)h_n(z_2)\}_{m,n\ge0}$ give
\[
    \EE[\phi(az_1\pm bz_2)h_m(z_1)h_n(z_2)] = \widehat\phi_{m+n}\sqrt{\frac{(m+n)!}{m!\,n!}}\,a^m(\pm b)^n.
\]
Therefore
\[
    \widehat g_{m,n} = \widehat\phi_{m+n}\sqrt{\frac{(m+n)!}{m!\,n!}}\,a^m b^n\bigl(1+(-1)^n\bigr).
\]
In particular, $\widehat g_{m,1}=0$ for every $m\ge0$.

It remains to compute $\widehat g_{1,0}$ and $\widehat g_{0,2}$. Since $h_1(x)=x$ and $h_2(x)=(x^2-1)/\sqrt2$,
\[
    \widehat\phi_1=\EE[G^2\ind(G>0)]=\frac12, \qquad \widehat\phi_2 =\frac{1}{\sqrt2}\EE[(G^3-G)\ind(G>0)] =\frac{1}{2\sqrt\pi}.
\]
Thus
\[
    \widehat g_{1,0}=2\widehat\phi_1 a=a\neq0, \qquad \widehat g_{0,2}=2\widehat\phi_2 b^2=\frac{b^2}{\sqrt\pi}\neq0,
\]
where $a>0$ and $b>0$ because $\alpha\in(0,\pi/2]$. Hence, along the ordering $z_1$ then $z_2$, the leap to $z_1$ is $1$, and the leap to $z_2$ is $2$ because $\widehat g_{m,1}=0$ for every $m$ while $\widehat g_{0,2}\neq0$. The leap exponent is therefore $2$.
\end{proof}

\begin{remark}[Unequal output weights]
The argument above uses that the two teacher neurons have equal output weights. If instead
\[
    y=c_1\phi(\inner{v_1}{x})+c_2\phi(\inner{v_2}{x}), \qquad c_1\neq c_2,
\]
then $y=c_1\phi(az_1+bz_2)+c_2\phi(az_1-bz_2)$, and Stein's lemma gives
\[
    \EE[z_2y]=\frac{b}{2}(c_1-c_2)\neq 0.
\]
The splitting coordinate then appears at degree $1$, so the leap-$2$ structure is specific to the symmetric teacher considered here.
\end{remark}

\section{Correlation-loss landscape}\label{app:correlation-loss-proof}

\subsection{Reduction to one student in \texorpdfstring{$\operatorname{span}\{v_1,\ldots,v_k\}$}{the span of the teachers}}

The correlation loss \eqref{eq:correlation-loss} is separable across student neurons, so it is enough to analyze one student on the unit sphere. For a single student $w\in\mathbb S^{d-1}$, \eqref{eq:relu-gaussian-kernel} gives, with $l$ as in \eqref{eq:relu-kernel-profile},
\[
    L_{\mathrm{corr}}(w)=-\sum_{i=1}^k l(\angle(w,v_i)).
\]
We also write \(g(t)=l(\arccos t)\) for \(t\in[-1,1]\), so that \(L_{\mathrm{corr}}(w)=-\sum_{i=1}^k g(\inner{w}{v_i})\) and
\begin{equation}\label{eq:corr-g-def}
    g(t) = \frac{t(\pi-\arccos t)+\sqrt{1-t^2}}{2\pi}.
\end{equation}
A direct computation gives
\begin{equation}\label{eq:corr-g-derivatives}
    g'(t)=\frac{\pi-\arccos t}{2\pi}, \qquad g''(t)=\frac{1}{2\pi\sqrt{1-t^2}}>0 \quad (t\in(-1,1)).
\end{equation}

We next compute \(\inner{\bar v}{v_i}\), where \(\bar v=\sum_i v_i/\|\sum_i v_i\|_2\) is the direction in \cref{thm:corr-loss-global-min}. Since \(\inner{v_i}{v_j}=\cos\alpha\) for \(i\ne j\),
\[
    \bigg\|\sum_{i=1}^k v_i\bigg\|_2^2 = k+k(k-1)\cos\alpha = k(1+(k-1)\cos\alpha), \qquad \bigg\langle v_i,\sum_{j=1}^k v_j\bigg\rangle = 1+(k-1)\cos\alpha.
\]
Therefore the angle \(\rho\coloneqq\angle(\bar v,v_i)\) is the same for every \(i\), with
\begin{equation}\label{eq:corr-rho}
    \cos\rho=\sqrt{\frac{1+(k-1)\cos\alpha}{k}}.
\end{equation}
Under the setting of \cref{thm:corr-loss-global-min}, we have \(1+(k-1)\cos\alpha>0\): this is immediate when \(\alpha\le\pi/2\), and for \(k=2\) it follows from \(\alpha<\pi\). Since also \(\alpha>0\), we have \(\rho\in(0,\pi/2)\). Moreover, \(L_{\mathrm{corr}}(\bar v)=-kg(\cos\rho)\).

\begin{lemma}[Minimizers lie in \texorpdfstring{$\operatorname{span}\{v_1,\ldots,v_k\}$}{the span of the teachers}]\label{lem:corr-teacher-span}
Under the setting of \cref{thm:corr-loss-global-min}, every minimizer of \(L_{\mathrm{corr}}\) on \(\mathbb S^{d-1}\) lies in \(\operatorname{span}\{v_1,\ldots,v_k\}\).
\end{lemma}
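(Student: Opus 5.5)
Let \(V\coloneqq\operatorname{span}\{v_1,\ldots,v_k\}\). For a unit vector \(w\), write \(w=w^\parallel+w^\perp\) with \(w^\parallel\in V\) and \(w^\perp\perp V\). The strategy is to show that pushing the \(V^\perp\) part of \(w\) back into \(V\) strictly decreases \(L_{\mathrm{corr}}\) unless \(w^\perp=0\). The basic observation is that \(L_{\mathrm{corr}}(w)=-\sum_i g(\inner{w}{v_i})\) depends on \(w\) only through the inner products \(t_i=\inner{w^\parallel}{v_i}\). These inner products are linear in \(w^\parallel\), and \(g\) is strictly increasing on \([-1,1]\), since \(g'(t)=(\pi-\arccos t)/(2\pi)>0\) for \(t>-1\) by \eqref{eq:corr-g-derivatives}.

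Suppose \(w\) is a minimizer with \(w^\perp\neq0\). We split into two cases according to whether \(w^\parallel\) vanishes.

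\textbf{Case \(w^\parallel\neq0\).} Consider \(\tilde w\coloneqq w^\parallel/\|w^\parallel\|_2\), which is a unit vector in \(V\). Since \(\|w^\parallel\|_2<1\), we have \(\inner{\tilde w}{v_i}=t_i/\|w^\parallel\|_2\). This is a scaling of every \(t_i\) by a factor strictly greater than \(1\). It strictly increases the positive \(t_i\) and strictly decreases the negative ones, so monotonicity alone is not enough. To handle this, I would define \(h(s)\coloneqq\sum_i g(s t_i)\) for \(s\in[1,1/\|w^\parallel\|_2]\). Its derivative is
\[
h'(s)=\sum_i t_i\, g'(st_i),
\]
and because \(g'\) is increasing (\(g''>0\)), each term satisfies \(t_i g'(st_i)\ge t_i g'(0)=t_i/4\). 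So it suffices that \(\sum_i t_i\ge0\), which does not hold for arbitrary \(w\).

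The cleaner route is to use first-order optimality at the minimizer instead of a direct comparison. The Riemannian stationarity condition on the sphere reads
\[
\sum_i g'(t_i)\,v_i=\lambda w\quad\text{for some }\lambda\in\RR.
\]
The left-hand side lies in \(V\). Since \(w^\perp\neq0\), this forces \(\lambda=0\), and hence \(\sum_i g'(t_i)v_i=0\). When \(\alpha\le\pi/2\), the Gram matrix of the \(v_i\) has nonnegative entries. Pairing the identity with \(\sum_j v_j\) gives
\[
\sum_i g'(t_i)\,\bigl(1+(k-1)\cos\alpha\bigr)=0.
\]
In the case \(k=2\) with \(\alpha<\pi\), the \(v_i\) are linearly independent, so the identity directly forces each \(g'(t_i)=0\).

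Every \(g'(t_i)\ge0\), and \(1+(k-1)\cos\alpha>0\) as noted after \eqref{eq:corr-rho}. Therefore \(g'(t_i)=0\) for all \(i\), which means \(t_i=-1\), i.e. \(w=-v_i\) for every \(i\). This contradicts \(w^\perp\neq0\), and also contradicts \(\alpha>0\).

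\textbf{Case \(w^\parallel=0\).} Then \(L_{\mathrm{corr}}(w)=-kg(0)=-k/(2\pi)\). But \(L_{\mathrm{corr}}(\bar v)=-kg(\cos\rho)<-kg(0)\), because \(\cos\rho>0\) and \(g\) is strictly increasing. So such a \(w\) is not a minimizer. Note that the stationarity argument of the first case already covers this case too, since \(\lambda=0\) is still forced.

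The main obstacle is justifying the stationarity equation. \(L_{\mathrm{corr}}\) is \(C^1\) on the sphere because \(g'\) is continuous on \([-1,1]\); only \(g''\) blows up at the endpoints. Minimizers exist by compactness. The Lagrange condition therefore applies to any minimizer, and the argument above closes the proof.
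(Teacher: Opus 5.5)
Your final argument is correct and is essentially the paper's proof: you use the Lagrange condition $\sum_i g'(\inner{w}{v_i})v_i=\lambda w$ and rule out $\lambda=0$ by pairing with $\sum_j v_j$, using $1+(k-1)\cos\alpha>0$, $g'\ge0$, and the fact that $g'(t)=0$ only at $t=-1$. The scaling detour and the separate case $w^\parallel=0$ can be dropped, since the stationarity argument alone shows $\lambda\neq0$ and hence $w\in\operatorname{span}\{v_1,\ldots,v_k\}$.
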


\begin{proof}[Proof of \cref{lem:corr-teacher-span}]
Let $w_*$ be a minimizer of $L_{\mathrm{corr}}$ on the unit sphere. At a unit vector \(w\), the gradient of \eqref{eq:correlation-loss} is \(-\sum_{i=1}^k g'(\inner{w}{v_i})v_i\) plus a multiple of \(w\), so the Lagrange multiplier condition gives, for some \(\lambda\in\RR\),
\[
    \sum_{i=1}^k g'(\inner{w_*}{v_i})v_i=\lambda w_*.
\]
The left-hand side lies in \(\operatorname{span}\{v_1,\ldots,v_k\}\), so it suffices to show that \(\lambda\ne0\). If \(\lambda=0\), taking the inner product with \(\sum_{j=1}^k v_j\) gives
\[
    0= \sum_{i=1}^k g'(\inner{w_*}{v_i}) \bigg\langle v_i,\sum_{j=1}^k v_j\bigg\rangle = (1+(k-1)\cos\alpha)\sum_{i=1}^k g'(\inner{w_*}{v_i}).
\]
Because \(1+(k-1)\cos\alpha>0\) and \(g'\ge0\), this forces \(g'(\inner{w_*}{v_i})=0\) for every \(i\). Since \(g'(t)=0\) only at \(t=-1\), we get \(w_*=-v_i\) for all \(i\), which contradicts \(\alpha>0\). Hence \(\lambda\ne0\) and \(w_*\in\operatorname{span}\{v_1,\ldots,v_k\}\).
\end{proof}

It remains to compare \(L_{\mathrm{corr}}(w)\) with \(L_{\mathrm{corr}}(\bar v)\) for unit vectors \(w\in\operatorname{span}\{v_1,\ldots,v_k\}\). We do this for \(0<\alpha\le\pi/2\) in \cref{lem:corr-case-large-rho,lem:corr-case-small-rho}, and for \(k=2\) and \(\pi/2<\alpha<\pi\) in \cref{lem:corr-two-teachers}.

\subsection{The case \texorpdfstring{$\alpha\le\pi/2$}{alpha <= pi/2}}

For \(0<\alpha\le\pi/2\) and general \(k\), we treat \(\rho\ge\pi/4\) and \(\rho\le\pi/4\) separately. The first case relies on the following scalar inequality for \(g\).

\begin{lemma}\label{lem:corr-scalar-ineq}
Let \(\rho\in[\pi/4,\pi/2)\), and let \(g\) be as in \eqref{eq:corr-g-def}. Then
\[
    g(y)\le g(\cos\rho)+\frac{\pi-\rho}{4\pi\cos\rho}(y^2-\cos^2\rho), \qquad y\in[-1,1],
\]
with equality only at \(y=\cos\rho\).
\end{lemma}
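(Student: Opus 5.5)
The plan is to compare $g$ with the even quadratic on the right-hand side through their difference. Write $c\coloneqq\cos\rho\in(0,1/\sqrt2]$ and $s\coloneqq\sin\rho\ge c$. By \eqref{eq:corr-g-derivatives}, $g'(c)=(\pi-\rho)/(2\pi)$, so the coefficient equals $\beta\coloneqq g'(c)/(2c)$. The quadratic $h(y)\coloneqq g(c)+\beta(y^2-c^2)$ is therefore tangent to $g$ at $y=c$. Set $D\coloneqq h-g$ on $[-1,1]$. Then $D(c)=0$ and $D'(c)=0$, and the claim is that $D>0$ on $[-1,1]\setminus\{c\}$. I will study the sign of
\[
D'(y)=2\beta y-g'(y),\qquad D''(y)=2\beta-\frac{1}{2\pi\sqrt{1-y^2}}\quad(|y|<1).
\]

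\emph{Step 1: the half $y\le 0$.} For $y\in[-1,0]$ we have $2\beta y\le 0$ and $g'(y)\ge0$, and at least one of these is strict everywhere, including at $y=-1$ where $2\beta y<0$. Hence $D'<0$ on $[-1,0]$, so $D(y)>D(0)$ for $y<0$. It therefore suffices to prove $D>0$ on $[0,1]\setminus\{c\}$.

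\emph{Step 2: shape of $D'$ on $[0,1]$.} The condition $D''(y)>0$ is equivalent to $\sqrt{1-y^2}>c/(\pi-\rho)$, that is, $y<y_0\coloneqq\sqrt{1-c^2/(\pi-\rho)^2}$. So $D'$ is strictly increasing on $[0,y_0]$ and strictly decreasing on $[y_0,1]$. We have $c<y_0$, which is equivalent to $c^2\bigl(1+(\pi-\rho)^{-2}\bigr)<1$; this holds since $c^2\le1/2$ and $\pi-\rho>1$. Combined with $D'(0)=-g'(0)=-1/4<0$ and $D'(c)=0$, this gives:
\begin{itemize}
\item $D'<0$ on $[0,c)$;
\item $D'>0$ on $(c,y_1)$, where $y_1\in(y_0,1]$ is the possible second zero of $D'$;
\item $D'<0$ on $(y_1,1]$ if that second zero exists.
\end{itemize}
Consequently $D$ decreases to $0$ at $c$, increases, and then possibly decreases to $D(1)$. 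Hence $D>0$ on $[0,1]\setminus\{c\}$ as soon as $D(1)>0$.

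\emph{Step 3: the endpoint $D(1)>0$, the main obstacle.} Using $g(1)=1/2$ and $g(c)=((\pi-\rho)c+s)/(2\pi)$,
\[
2\pi D(1)=E(\rho)\coloneqq(\pi-\rho)\frac{1+c^2}{2c}+s-\pi .
\]
I would show that $E$ is increasing on $[\pi/4,\pi/2)$. Since $\frac{d}{d\rho}\frac{1+c^2}{2c}=\frac{s^3}{2c^2}$, a short computation gives
\[
E'(\rho)=\frac{c^2-1}{2c}+c+(\pi-\rho)\frac{s^3}{2c^2}=\frac{s^2}{2c^2}\bigl((\pi-\rho)s-c\bigr),
\]
which is positive because $s\ge c$ and $\pi-\rho>1$. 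At the left endpoint,
\[
E(\pi/4)=\tfrac{3\pi}{4}\cdot\tfrac{3}{2\sqrt2}+\tfrac{1}{\sqrt2}-\pi=\tfrac{1}{\sqrt2}\bigl(\tfrac{9\pi}{8}+1\bigr)-\pi>0,
\]
since $9\pi/8+1>\sqrt2\,\pi$ is equivalent to $1>(\sqrt2-9/8)\pi\approx0.91$. Hence $E>0$ on $[\pi/4,\pi/2)$, so $D(1)>0$.

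\emph{Conclusion and the delicate points.} The blow-up of $g''$ at $y=\pm1$ is harmless: $D$ is continuous on $[-1,1]$, and all monotonicity statements are made on the open interval and extended to the endpoints by continuity. Combining Steps 1–3, $D>0$ on $[-1,1]\setminus\{c\}$ and $D(c)=0$. This is the claimed inequality, with equality only at $y=\cos\rho$. I expect the sign computation of $E'$ in Step 3 to be the main point needing care; if it failed, the fallback would be a direct numerical lower bound on $E$ over $[\pi/4,\pi/2)$.
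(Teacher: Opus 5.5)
Your proof is correct and is essentially the paper's argument written in the variable $y$ rather than $\theta=\arccos y$. The paper also analyzes the derivative of the difference, via the convexity of $H_\rho(\theta)=(\pi-\theta)-\frac{\pi-\rho}{\cos\rho}\cos\theta$; your condition $c<y_0$ is exactly its $H_\rho'(\rho)>0$. The only variation is that the paper checks the endpoint derivative $D'(1)>0$, i.e.\ $\pi-\rho>\pi\cos\rho$, which rules out your possible second zero $y_1$ outright, whereas you check the weaker (but sufficient) value condition $D(1)>0$.

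One small slip: in Step 3 the first term of $E'(\rho)$ should be $-\frac{1+c^2}{2c}$, so that $-\frac{1+c^2}{2c}+c=\frac{c^2-1}{2c}$. You instead wrote $\frac{c^2-1}{2c}$ followed by an extra $+c$. Your final factored form $\frac{s^2}{2c^2}\bigl((\pi-\rho)s-c\bigr)$ is nevertheless correct.
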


\begin{proof}[Proof of \cref{lem:corr-scalar-ineq}]
Write \(y=\cos\theta\) with \(\theta\in[0,\pi]\). Since \(g(\cos\theta)=l(\theta)\), where \(l\) is defined in \eqref{eq:relu-kernel-profile}, the inequality is equivalent to \(R_\rho(\theta)\ge0\), where
\[
    R_\rho(\theta) \coloneqq l(\rho) + \frac{\pi-\rho}{4\pi\cos\rho} (\cos^2\theta-\cos^2\rho) - l(\theta).
\]
Note that \(R_\rho(\rho)=0\). By \eqref{eq:l-first-derivative},
\[
    R_\rho'(\theta) = \frac{\sin\theta}{2\pi}\,H_\rho(\theta), \qquad H_\rho(\theta) \coloneqq (\pi-\theta)-\frac{\pi-\rho}{\cos\rho}\cos\theta.
\]
We show that \(H_\rho<0\) on \((0,\rho)\) and \(H_\rho>0\) on \((\rho,\pi)\).

On \([0,\rho]\), we have \(H_\rho(\rho)=0\) and \(H_\rho(0)=\pi-(\pi-\rho)/\cos\rho\), which is negative if and only if \(\pi\cos\rho<\pi-\rho\). The function \(\rho\mapsto\pi-\rho-\pi\cos\rho\) is positive at \(\rho=\pi/4\), where it equals \(3\pi/4-\pi/\sqrt2\), and its derivative \(-1+\pi\sin\rho\ge-1+\pi/\sqrt2\) is positive on \([\pi/4,\pi/2)\). Hence \(H_\rho(0)<0\). Moreover,
\[
    H_\rho'(\theta)=-1+\frac{\pi-\rho}{\cos\rho}\sin\theta, \qquad H_\rho''(\theta)=\frac{\pi-\rho}{\cos\rho}\cos\theta,
\]
so \(H_\rho\) is convex on \([0,\pi/2]\). Since \(H_\rho(0)<0=H_\rho(\rho)\), convexity gives \(H_\rho<0\) on \((0,\rho)\).

On \((\rho,\pi/2]\), we use \(H_\rho'(\rho)=-1+(\pi-\rho)\tan\rho\). The function \(\rho\mapsto(\pi-\rho)\tan\rho\) equals \(3\pi/4>1\) at \(\rho=\pi/4\) and is increasing on \([\pi/4,\pi/2)\), since
\[
    \frac{\diff}{\diff\rho}\bigl((\pi-\rho)\tan\rho\bigr) = \frac{(\pi-\rho)-\sin\rho\cos\rho}{\cos^2\rho}>0
\]
because \(\pi-\rho>\pi/2>\sin\rho\cos\rho\). Hence \(H_\rho'(\rho)>0\). By convexity, \(H_\rho'\) is nondecreasing on \([0,\pi/2]\), so \(H_\rho'>0\) on \([\rho,\pi/2]\) and \(H_\rho(\theta)>H_\rho(\rho)=0\) for \(\rho<\theta\le\pi/2\).

On \([\pi/2,\pi)\), we have \(\cos\theta\le0\) and \(\pi-\theta>0\), so \(H_\rho(\theta)>0\).

Since \(\sin\theta>0\) on \((0,\pi)\), the function \(R_\rho\) is strictly decreasing on \([0,\rho]\) and strictly increasing on \([\rho,\pi]\). Thus \(R_\rho(\theta)>R_\rho(\rho)=0\) for \(\theta\ne\rho\).
\end{proof}

The case \(\rho\ge\pi/4\) now follows from \cref{lem:corr-scalar-ineq}.

\begin{lemma}\label{lem:corr-case-large-rho}
Suppose that \(0<\alpha\le\pi/2\) and \(\rho\ge\pi/4\), with \(\rho\) as in \eqref{eq:corr-rho}. Then \(L_{\mathrm{corr}}(w)\ge L_{\mathrm{corr}}(\bar v)\) for every unit vector \(w\in\operatorname{span}\{v_1,\ldots,v_k\}\), with equality only at \(w=\bar v\).
\end{lemma}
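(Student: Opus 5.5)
Apply \cref{lem:corr-scalar-ineq} termwise with $y=y_i\coloneqq\inner{w}{v_i}$ and sum over $i$. This gives
\[
    \sum_{i=1}^k g(y_i)\le k\,g(\cos\rho)+\frac{\pi-\rho}{4\pi\cos\rho}\Bigl(\sum_{i=1}^k y_i^2-k\cos^2\rho\Bigr).
\]
Since $L_{\mathrm{corr}}(w)=-\sum_i g(y_i)$ and $L_{\mathrm{corr}}(\bar v)=-kg(\cos\rho)$, and the coefficient $(\pi-\rho)/(4\pi\cos\rho)$ is positive for $\rho\in[\pi/4,\pi/2)$, it suffices to show that $\sum_i y_i^2\le k\cos^2\rho$ for every unit vector $w\in\operatorname{span}\{v_1,\ldots,v_k\}$.

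For this quadratic bound, observe that $\sum_i y_i^2=w^\top Gw$ with $G\coloneqq\sum_i v_iv_i^\top$. The matrix $G$ has the same nonzero eigenvalues as the Gram matrix $V^\top V=(1-\cos\alpha)I_k+\cos\alpha\,\mathbf 1\mathbf 1^\top$, namely $1+(k-1)\cos\alpha$ (eigenvector $\mathbf 1$) and $1-\cos\alpha$ with multiplicity $k-1$. Because $\cos\alpha\ge0$, the largest eigenvalue is $1+(k-1)\cos\alpha=k\cos^2\rho$ by \eqref{eq:corr-rho}, and its eigenvector in the teacher span is $\sum_i v_i$, i.e.\ $\bar v$. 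So $\sum_i y_i^2\le k\cos^2\rho$ for unit $w$, which gives $L_{\mathrm{corr}}(w)\ge L_{\mathrm{corr}}(\bar v)$.

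For the equality case, equality in the summed inequality forces equality in \cref{lem:corr-scalar-ineq} for every $i$. Hence $y_i=\cos\rho$ for all $i$, so $\inner{w}{v_i}=\inner{\bar v}{v_i}$ for all $i$. Since $w$ and $\bar v$ both lie in $\operatorname{span}\{v_1,\ldots,v_k\}$, this gives $w=\bar v$. (When $\cos\alpha=0$ the top eigenvalue is degenerate, but the termwise equality already pins $w$ down, so no separate eigenvalue argument is needed.)

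The main obstacle was \cref{lem:corr-scalar-ineq} itself, which is already established. What remains is routine, and the only point needing care is the sign of the coefficient, which requires $\rho<\pi/2$ (true since $1+(k-1)\cos\alpha>0$).
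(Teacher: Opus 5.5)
Your proof is correct and follows the paper's argument essentially step for step: you apply \cref{lem:corr-scalar-ineq} termwise, bound $\sum_i\inner{w}{v_i}^2\le k\cos^2\rho$ via the spectrum of $\sum_i v_iv_i^\top$ using $\cos\alpha\ge0$, and settle the equality case through $\inner{w}{v_i}=\inner{\bar v}{v_i}$ for all $i$. Your final step (that $w-\bar v$ lies in the span and is orthogonal to every $v_i$) is a slightly more direct substitute for the paper's appeal to positive definiteness of the Gram matrix.
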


\begin{proof}[Proof of \cref{lem:corr-case-large-rho}]
Applying \cref{lem:corr-scalar-ineq} to each \(\inner{w}{v_i}\) and summing over \(i\) gives
\[
    L_{\mathrm{corr}}(w)-L_{\mathrm{corr}}(\bar v) = kg(\cos\rho)-\sum_i g(\inner{w}{v_i}) \ge \frac{\pi-\rho}{4\pi\cos\rho}\left(k\cos^2\rho-\sum_i \inner{w}{v_i}^2\right),
\]
with equality only if \(\inner{w}{v_i}=\cos\rho\) for every \(i\). The coefficient \((\pi-\rho)/(4\pi\cos\rho)\) is positive. Moreover, \(\sum_i \inner{w}{v_i}^2=w^\top(\sum_i v_iv_i^\top)w\). The matrix \(\sum_i v_iv_i^\top\) has eigenvalue \(1+(k-1)\cos\alpha=k\cos^2\rho\) in the direction \(\bar v\) and eigenvalue \(1-\cos\alpha\) on the orthogonal complement of \(\bar v\) inside \(\operatorname{span}\{v_1,\ldots,v_k\}\). Since \(\cos\alpha\ge0\), we have \(1-\cos\alpha\le k\cos^2\rho\), and hence \(\sum_i \inner{w}{v_i}^2\le k\cos^2\rho\). Therefore \(L_{\mathrm{corr}}(w)\ge L_{\mathrm{corr}}(\bar v)\).

If \(L_{\mathrm{corr}}(w)=L_{\mathrm{corr}}(\bar v)\), the inequality in the display is an equality, so \(\inner{w}{v_i}=\cos\rho=\inner{\bar v}{v_i}\) for all \(i\). The Gram matrix
\[
    (\inner{v_i}{v_j})_{i,j}=(1-\cos\alpha)I+\cos\alpha\,\mathbf 1\mathbf 1^\top
\]
is positive definite because \(0\le\cos\alpha<1\), and both \(w\) and \(\bar v\) lie in \(\operatorname{span}\{v_1,\ldots,v_k\}\). Hence \(w=\bar v\).
\end{proof}

When \(\rho\le\pi/4\), we use a different argument: we write \(w\) in terms of its angle \(\varphi\) from \(\bar v\) and bound \(L_{\mathrm{corr}}(w)\) separately for \(\varphi\ge\rho\) and \(\varphi\le\rho\).

\begin{lemma}\label{lem:corr-case-small-rho}
Suppose that \(0<\alpha\le\pi/2\) and \(\rho\le\pi/4\), with \(\rho\) as in \eqref{eq:corr-rho}. Then \(L_{\mathrm{corr}}(w)\ge L_{\mathrm{corr}}(\bar v)\) for every unit vector \(w\in\operatorname{span}\{v_1,\ldots,v_k\}\), with equality only at \(w=\bar v\).
\end{lemma}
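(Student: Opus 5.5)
Write $\varphi\coloneqq\angle(w,\bar v)$ and decompose $w=\cos\varphi\,\bar v+\sin\varphi\,u$, with $u$ a unit vector in $\operatorname{span}\{v_1,\ldots,v_k\}$ orthogonal to $\bar v$. Then
\[
\inner{w}{v_i}=\cos\varphi\cos\rho+\sin\varphi\,\inner{u}{v_i}.
\]
Since $\sum_i v_i$ is parallel to $\bar v$, the coefficients $c_i\coloneqq\inner{u}{v_i}$ satisfy $\sum_i c_i=0$. Also $\sum_i c_i^2=1-\cos\alpha=1-\cos^2\rho\cdot\frac{k}{k}\cdot\ldots$; concretely, $\sum_i c_i^2=1-\cos\alpha$, which is at most $k\sin^2\rho$ by the eigenvalue description used in \cref{lem:corr-case-large-rho}. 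The goal is to show
\[
\sum_i g(\cos\varphi\cos\rho+\sin\varphi\, c_i)<k\,g(\cos\rho)\qquad\text{for }\varphi\neq0 .
\]

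\textbf{Case $\varphi\ge\rho$.} Here I use monotonicity of $l$ directly. Each angle satisfies $\theta_i\coloneqq\angle(w,v_i)\ge\varphi-\rho\ge0$ by the triangle inequality, but this is too weak on its own. Instead I use concavity in $\theta$. On $[0,\pi]$, $l$ is decreasing. The $\theta_i$ satisfy $\sum_i\cos\theta_i=k\cos\varphi\cos\rho\le k\cos^2\rho$.

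I would bound via a supporting line of $g$ in the variable $t=\cos\theta$. The tangent line at $\cos\rho$ is not an upper bound, since $g$ is convex. So instead I use the chord/upper bound $g(t)\le g(\cos\rho)+g'(1)(t-\cos\rho)$ for $t\ge\cos\rho$, and $g(t)\le g(\cos\rho)+g'(\cos\rho)(t-\cos\rho)$ for $t\le\cos\rho$. Combining these with $\sum_i t_i\le k\cos\rho\cos\varphi$ and the bound on the spread $\sum_i(t_i-\bar t)^2\le\sin^2\varphi\,(1-\cos\alpha)$ gives an upper bound. That bound is less than $k\,g(\cos\rho)$ provided the linear loss $g'(\cos\rho)\,k\cos\rho(1-\cos\varphi)$ dominates the convexity gain $(g'(1)-g'(\cos\rho))\sum_i(t_i-\cos\rho)_+$. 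I plan to control the positive parts by Cauchy--Schwarz using $\sum_i c_i=0$.

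\textbf{Case $\varphi\le\rho$.} Here every $t_i$ lies in $[\cos(2\rho),1]\subset[0,1]$, since $\rho\le\pi/4$. So I expand to second order. Define $G(\varphi)\coloneqq\sum_i g(t_i(\varphi))$. Then
\[
G'(\varphi)=\sum_i g'(t_i)\bigl(-\sin\varphi\cos\rho+\cos\varphi\, c_i\bigr).
\]
Using $\sum_i c_i=0$, this equals $-k\,\overline{g'}\sin\varphi\cos\rho+\cos\varphi\sum_i\bigl(g'(t_i)-\overline{g'}\bigr)c_i$. By the mean value theorem and $g''(t)=1/(2\pi\sqrt{1-t^2})$, the second term is at most $\cos\varphi\sin\varphi\,\max g''\sum_i c_i^2$. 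Since $t\le1$ gives unbounded $g''$ near $t=1$, I would instead use the angle form: $g'(\cos\theta)=(\pi-\theta)/(2\pi)$, which is $1/(2\pi)$-Lipschitz in $\theta$. Angles relate to $c_i$ with a factor $1/\sin\theta$, which is bounded below away from zero only when $\theta_i\ge\rho-\varphi$. The goal is $G'(\varphi)<0$ on $(0,\rho]$, which with the previous case gives uniqueness.

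\textbf{Main obstacle.} The delicate step is the interaction term near $\varphi\approx\rho$, where some $t_i\to1$ and $g''$ blows up. I expect to handle it by working in angles and using $(\pi-\theta_i)\ge3\pi/4$ together with $\rho\le\pi/4$ to beat the spread term. The analogue of the numerical inequality $(\pi-\rho)\tan\rho$ versus $1$ from \cref{lem:corr-scalar-ineq} should supply the needed margin.
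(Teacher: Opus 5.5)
Your overall structure (fix $u$, split at $\varphi=\rho$, study $G(\varphi)=\sum_i g(z_i(\varphi))$) matches the paper's. However, neither case is actually proved, and the first contains a false step.

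\textbf{Case $\varphi\ge\rho$.} The bound $g(t)\le g(\cos\rho)+g'(\cos\rho)(t-\cos\rho)$ for $t\le\cos\rho$ is wrong. The function $g$ is strictly convex, so its tangent line at $\cos\rho$ lies \emph{below} $g$ on both sides, as you note yourself one line earlier. The comparison of the linear loss against the convexity gain is also never carried out. In fact no estimate is needed here, only monotonicity in $\varphi$. Since $u\perp\bar v$, you have $c_i=\langle u,P_{\bar v}^\perp v_i\rangle\le\sin\rho$. Hence for $\rho\le\varphi<\pi/2$ each $z_i'(\varphi)=c_i\cos\varphi-\cos\rho\sin\varphi\le-\sin(\varphi-\rho)\le0$. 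For $\pi/2\le\varphi\le\pi$ you have $\cos\varphi\le0$ and $\sum_i g'(z_i)c_i\ge0$ by Chebyshev's sum inequality: $g'$ is increasing, $z_i$ is increasing in $c_i$, and $\sum_i c_i=0$. So $G$ is nonincreasing on $[\rho,\pi]$, and this case reduces to $\varphi=\rho$, which is exactly how the paper argues.

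\textbf{Case $\varphi\le\rho$.} Here you stop at precisely the step that needs an idea. The mean-value bound fails because $g''$ is unbounded on the relevant range; indeed $z_i=1$ when $\varphi=\rho$ and $w=v_i$. Your remark about working in angles is not turned into an inequality. The paper resolves this by a different mechanism. It uses $g'''\ge0$ on $[0,1)$ together with Hermite interpolation (double node at $-1/(k-1)$, simple node at $1$). This shows that for each fixed $\varphi\in[0,\rho]$ the worst $u$ is the normalized component of a single teacher orthogonal to $\bar v$, and the derivative along that one curve is then computed explicitly.

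Your own derivative formula can in fact be closed more directly.
\begin{itemize}
\item Because $\sum_i c_i=0$, you may replace $g'(z_i)=(\pi-\theta_i)/(2\pi)$ by $(\rho-\theta_i)/(2\pi)$ inside $\sum_i g'(z_i)c_i$.
\item The angular triangle inequality $|\theta_i-\rho|\le\varphi$ then gives $|\sum_i g'(z_i)c_i|\le\frac{\varphi}{2\pi}\sum_i|c_i|\le\frac{\varphi\sqrt k}{2\pi}$, using Cauchy--Schwarz with $\sum_i c_i^2=1-\cos\alpha\le1$.
\item Since $\theta_i\le2\rho\le\pi/2$, you also get $\sum_i g'(z_i)\ge k/4$.
\item With $\varphi\cos\varphi<\sin\varphi$ and $\cos\rho\ge1/\sqrt2$, this yields $G'(\varphi)<\sin\varphi\,\bigl(\sqrt k/(2\pi)-k/(4\sqrt2)\bigr)<0$ on $(0,\rho]$.
\end{itemize}
As submitted, however, the proposal leaves both cases open.
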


\begin{proof}[Proof of \cref{lem:corr-case-small-rho}]
Let \(\tilde v_i=(v_i-\cos\rho\,\bar v)/\sin\rho\) be the unit vector in the direction of the component of \(v_i\) orthogonal to \(\bar v\). These vectors satisfy
\begin{equation}\label{eq:corr-case2-simplex-frame}
    v_i=\cos\rho\,\bar v+\sin\rho\,\tilde v_i, \qquad \tilde v_i\perp \bar v,\qquad \|\tilde v_i\|_2=1, \qquad \sum_i \tilde v_i=0, \qquad \inner{\tilde v_i}{\tilde v_j}=-\frac1{k-1}\quad (i\ne j).
\end{equation}
We write
\[
    w=\cos\varphi\,\bar v+\sin\varphi\,u,
\]
where \(\varphi\in[0,\pi]\), \(u\perp\bar v\), \(\|u\|_2=1\), and \(u\in\operatorname{span}\{\tilde v_1,\ldots,\tilde v_k\}\). When \(\sin\varphi=0\), the vector \(u\) is arbitrary, and we fix any such \(u\). Let \(t_i=\inner{u}{\tilde v_i}\) and
\[
    z_i = \inner{w}{v_i} = \cos\rho\cos\varphi+t_i\sin\rho\sin\varphi.
\]
With a slight abuse of notation, we write
\[
    L_{\mathrm{corr}}(\varphi,u) \coloneqq L_{\mathrm{corr}}(\cos\varphi\,\bar v+\sin\varphi\,u) = -\sum_{i=1}^k g(z_i),
\]
where \(g\) is defined in \eqref{eq:corr-g-def}. In particular, \(L_{\mathrm{corr}}(0,u)=L_{\mathrm{corr}}(\bar v)\).

We note that the numbers \(t_i\) satisfy
\[
    \sum_i t_i=0, \qquad \sum_i t_i^2=\frac{k}{k-1}, \qquad -1\le t_i\le1.
\]
Indeed, \(\sum_i t_i=\inner{u}{\sum_i \tilde v_i}=0\), and \(|t_i|\le1\) by Cauchy--Schwarz. For every \(j\), \eqref{eq:corr-case2-simplex-frame} gives
\[
    \bigg\langle\sum_i t_i \tilde v_i,\tilde v_j\bigg\rangle =t_j-\frac{1}{k-1}\sum_{i\ne j}t_i =\frac{k}{k-1}t_j =\bigg\langle\frac{k}{k-1}u,\tilde v_j\bigg\rangle.
\]
Both \(\sum_i t_i \tilde v_i\) and \(u\) lie in \(\operatorname{span}\{\tilde v_1,\ldots,\tilde v_k\}\), so \(\sum_i t_i \tilde v_i=\frac{k}{k-1}u\). Taking the inner product with \(u\) gives \(\sum_i t_i^2=k/(k-1)\).

We first show that \(L_{\mathrm{corr}}(\varphi,u)\) is nondecreasing in \(\varphi\) on \([\rho,\pi]\). We have
\begin{equation}\label{eq:corr-case2-derivative}
    \frac{\partial}{\partial\varphi}L_{\mathrm{corr}}(\varphi,u) = \cos\rho\sin\varphi\sum_i g'(z_i) - \sin\rho\cos\varphi\sum_i t_i g'(z_i).
\end{equation}
Since \(\sum_i t_i=0\), the identity \(\sum_{i<j}(t_i-t_j)\bigl(g'(z_i)-g'(z_j)\bigr) =k\sum_i t_ig'(z_i)-\bigl(\sum_i t_i\bigr)\bigl(\sum_i g'(z_i)\bigr)\) gives
\begin{equation}\label{eq:corr-case2-monotone-sum}
    \sum_i t_i g'(z_i) = \frac1k\sum_{i<j}(t_i-t_j) \bigl(g'(z_i)-g'(z_j)\bigr)\ge0,
\end{equation}
because \(g'\) is increasing by \eqref{eq:corr-g-derivatives} and \(z_i\) is nondecreasing in \(t_i\) (recall \(\sin\rho>0\) and \(\sin\varphi\ge0\)).

If \(\varphi\ge\pi/2\), then \(\cos\varphi\le0\), so both terms in \eqref{eq:corr-case2-derivative} are nonnegative, by \(g'\ge0\) and \eqref{eq:corr-case2-monotone-sum}. If \(\rho\le\varphi<\pi/2\), then \(\cos\varphi>0\) and \(\tan\varphi\ge\tan\rho\), so \(\cot\rho\tan\varphi\ge1\ge t_i\) and
\[
    \frac{\partial}{\partial\varphi}L_{\mathrm{corr}}(\varphi,u) = \sin\rho\cos\varphi \sum_i \left(\cot\rho\tan\varphi-t_i\right) g'(z_i) \ge0.
\]
Hence \(L_{\mathrm{corr}}(\varphi,u)\) is nondecreasing in \(\varphi\) on \([\rho,\pi]\), and
\[
    L_{\mathrm{corr}}(\varphi,u)\geq L_{\mathrm{corr}}(\rho,u), \qquad \varphi\ge\rho.
\]

It remains to consider \(0\le\varphi\le\rho\). We first take \(0\le\varphi<\rho\); the bound \eqref{eq:corr-case2-worst-direction} obtained below then extends to \(\varphi=\rho\) by continuity. Since \(\rho\le\pi/4\), we have \(\rho+\varphi<\pi/2\), and for every \(t\in[-1,1]\),
\[
    0 \le \cos(\rho+\varphi) \le \cos\rho\cos\varphi+t\sin\rho\sin\varphi \le \cos(\rho-\varphi) <1.
\]
Hence the function \(h_\varphi(t)=g(\cos\rho\cos\varphi+t\sin\rho\sin\varphi)\) is \(C^3\) on \([-1,1]\). Differentiating \eqref{eq:corr-g-derivatives} once more gives \(g'''(s)=s/(2\pi(1-s^2)^{3/2})\), which is nonnegative for \(s\in[0,1)\). By the chain rule,
\[
    h_\varphi'''(t) = (\sin\rho\sin\varphi)^3\, g'''(\cos\rho\cos\varphi+t\sin\rho\sin\varphi) \ge0, \qquad t\in[-1,1].
\]

Let \(s_0=-1/(k-1)\in[-1,0)\), and let \(Q_\varphi\) be the quadratic polynomial
\[
    Q_\varphi(t) = h_\varphi(s_0) +h_\varphi'(s_0)(t-s_0) + \frac{ h_\varphi(1)-h_\varphi(s_0)-h_\varphi'(s_0)(1-s_0) }{(1-s_0)^2} (t-s_0)^2,
\]
which satisfies \(Q_\varphi(1)=h_\varphi(1)\), \(Q_\varphi(s_0)=h_\varphi(s_0)\), and \(Q_\varphi'(s_0)=h_\varphi'(s_0)\). For \(t\in[-1,1]\), the Hermite interpolation remainder with a double node at \(s_0\) and a simple node at \(1\) gives
\[
    h_\varphi(t)-Q_\varphi(t) = \frac{h_\varphi'''(\xi)}{6}(t-s_0)^2(t-1)
\]
for some \(\xi\in[-1,1]\). Since \(h_\varphi'''(\xi)\ge0\) and \(t-1\le0\), we have \(h_\varphi\le Q_\varphi\) on \([-1,1]\), and hence \(\sum_i h_\varphi(t_i)\le\sum_i Q_\varphi(t_i)\).

Since \(Q_\varphi\) is quadratic, \(\sum_i Q_\varphi(t_i)\) depends on the \(t_i\) only through \(\sum_i t_i=0\) and \(\sum_i t_i^2=k/(k-1)\). The \(k\) numbers \(1,s_0,\ldots,s_0\) have the same two sums, because \(1+(k-1)s_0=0\) and \(1+(k-1)s_0^2=k/(k-1)\). Therefore
\begin{equation}\label{eq:corr-case2-worst-direction}
    -L_{\mathrm{corr}}(\varphi,u) =\sum_i h_\varphi(t_i) \le Q_\varphi(1)+(k-1)Q_\varphi(s_0) =h_\varphi(1)+(k-1)h_\varphi(s_0) =-L_{\mathrm{corr}}(\varphi,\tilde v_1),
\end{equation}
where the last equality holds because \(u=\tilde v_1\) gives \(t_1=1\) and \(t_j=s_0\) for \(j\ne1\), by \eqref{eq:corr-case2-simplex-frame}. Thus, for each \(\varphi\in[0,\rho]\), \(L_{\mathrm{corr}}(\varphi,\cdot)\) is minimized at \(u=\tilde v_1\).

For \(u=\tilde v_1\), we have \(z_1=\cos(\rho-\varphi)\), and \(z_j=z_2\) for every \(j\ge2\), where
\begin{equation}\label{eq:corr-case2-worst-loss}
    z_2=\cos\rho\cos\varphi-\frac{\sin\rho\sin\varphi}{k-1}, \qquad L_{\mathrm{corr}}(\varphi,\tilde v_1) = -g\bigl(\cos(\rho-\varphi)\bigr)-(k-1)g(z_2).
\end{equation}
We now show that \(L_{\mathrm{corr}}(\varphi,\tilde v_1)>L_{\mathrm{corr}}(\bar v)\) for \(0<\varphi\le\rho\). Since \(0\le\varphi\le\rho\le\pi/4\),
\begin{equation}\label{eq:corr-case2-z2-bounds}
    0\le \cos(\rho+\varphi) \le z_2 \le \cos\rho.
\end{equation}
Since the arccosine is decreasing, \eqref{eq:corr-case2-z2-bounds} gives \(\rho\le\arccos z_2\le\rho+\varphi\). We write \(\delta(\varphi)=\arccos z_2-\rho\), so that \(0\le\delta(\varphi)\le\varphi\). By \eqref{eq:corr-g-derivatives},
\[
    g'\bigl(\cos(\rho-\varphi)\bigr) = \frac{\pi-\rho+\varphi}{2\pi}, \qquad g'(z_2) = \frac{\pi-\rho-\delta(\varphi)}{2\pi}.
\]
Differentiating \eqref{eq:corr-case2-worst-loss} therefore gives
\begin{align}
    \frac{\partial}{\partial\varphi}L_{\mathrm{corr}}(\varphi,\tilde v_1) &= \frac{1}{2\pi}\Big[ -(\pi-\rho+\varphi)\sin(\rho-\varphi) \nonumber\\
    &\qquad+(k-1)\bigl(\pi-\rho-\delta(\varphi)\bigr) \left(\cos\rho\sin\varphi+\frac{\sin\rho\cos\varphi}{k-1}\right) \Big] \nonumber\\
    &= \frac{1}{2\pi}\Big[ \cos\rho\sin\varphi\left[k(\pi-\rho)+\varphi-(k-1)\delta(\varphi)\right] - \sin\rho\cos\varphi\bigl(\varphi+\delta(\varphi)\bigr) \Big].
    \label{eq:corr-case2-worst-derivative}
\end{align}
The bracket in \eqref{eq:corr-case2-worst-derivative} is an affine function of \(\delta(\varphi)\) with slope \(-(k-1)\cos\rho\sin\varphi-\sin\rho\cos\varphi<0\). Since \(\delta(\varphi)\le\varphi\), replacing \(\delta(\varphi)\) by \(\varphi\) gives
\begin{equation}\label{eq:corr-case2-worst-derivative-lower}
    \frac{\partial}{\partial\varphi}L_{\mathrm{corr}}(\varphi,\tilde v_1) \ge \frac{1}{2\pi}\left[ \cos\rho\sin\varphi\left[k(\pi-\rho)-(k-2)\varphi\right] - 2\varphi\sin\rho\cos\varphi \right].
\end{equation}
For \(0<\varphi\le\rho\), dividing by \(\cos\rho\sin\varphi>0\) shows that the right-hand side of \eqref{eq:corr-case2-worst-derivative-lower} is positive whenever
\[
    2\tan\rho\cdot \varphi\cot\varphi < k(\pi-\rho)-(k-2)\varphi.
\]
This inequality holds: since \(\varphi\cot\varphi\le1\) and \(\rho\le\pi/4\), the left-hand side is at most \(2\tan\rho\le2\), while the right-hand side is at least \(k(\pi-\rho)-(k-2)\rho=k\pi-2(k-1)\rho\ge(k+1)\pi/2\ge3\pi/2>2\). Hence \(L_{\mathrm{corr}}(\varphi,\tilde v_1)\) is strictly increasing on \([0,\rho]\), and
\[
    L_{\mathrm{corr}}(\varphi,\tilde v_1) > L_{\mathrm{corr}}(0,\tilde v_1) = L_{\mathrm{corr}}(\bar v), \qquad 0<\varphi\le\rho.
\]

Combining the two ranges with \eqref{eq:corr-case2-worst-direction}, \(L_{\mathrm{corr}}(\varphi,u)\ge L_{\mathrm{corr}}(\varphi,\tilde v_1)>L_{\mathrm{corr}}(\bar v)\) for \(0<\varphi\le\rho\), and \(L_{\mathrm{corr}}(\varphi,u)\ge L_{\mathrm{corr}}(\rho,u)\ge L_{\mathrm{corr}}(\rho,\tilde v_1)>L_{\mathrm{corr}}(\bar v)\) for \(\varphi\ge\rho\). Thus \(L_{\mathrm{corr}}(w)\ge L_{\mathrm{corr}}(\bar v)\), with equality only when \(\varphi=0\), that is, \(w=\bar v\).
\end{proof}

\subsection{The case \texorpdfstring{$k=2$ and $\alpha>\pi/2$}{k = 2 and alpha > pi/2}}

For \(k=2\), we have \(\rho=\alpha/2\). An obtuse \(\alpha\) therefore gives \(\cos\alpha<0\), which is outside the range of \cref{lem:corr-case-large-rho}. In this case \(\operatorname{span}\{v_1,v_2\}=\operatorname{span}\{\bar v,v_\Delta\}\), and we compare \(L_{\mathrm{corr}}\) along its unit circle directly.

\begin{lemma}\label{lem:corr-two-teachers}
Let \(k=2\) and \(\pi/2<\alpha<\pi\), and let \(\alpha_\star\) be as in \eqref{eq:width-two-angle-threshold}.
\begin{enumerate}[label=(\alph*)]
    \item If \(\alpha\le\alpha_\star\), then \(\bar v\) is the unique minimizer of \(L_{\mathrm{corr}}\) on the unit sphere.
    \item If \(\alpha>\alpha_\star\), then \(\bar v\) is not a local minimizer of \(L_{\mathrm{corr}}\) on the unit sphere.
\end{enumerate}
\end{lemma}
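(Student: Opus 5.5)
By \cref{lem:corr-teacher-span}, every minimizer lies in $\operatorname{span}\{v_1,v_2\}=\operatorname{span}\{\bar v,v_\Delta\}$. So I would restrict to the unit circle $w(\varphi)=\cos\varphi\,\bar v+\sin\varphi\,v_\Delta$ and set $\rho=\alpha/2\in(\pi/4,\pi/2)$. Then $\angle(w,v_1)$ and $\angle(w,v_2)$ are the circular distances from $\varphi$ to $\pm\rho$. The function $L(\varphi)\coloneqq L_{\mathrm{corr}}(w(\varphi))$ is even in $\varphi$, so it suffices to take $\varphi\in[0,\pi]$.

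The key one-variable object is $f(x)\coloneqq(\pi-x)\sin x$ on $[0,\pi]$. By \eqref{eq:l-first-derivative}, $l'=-f/(2\pi)$. Differentiating gives $f'(x)=\cos x\,\bigl((\pi-x)-\tan x\bigr)$ for $x\neq\pi/2$. Hence $f$ increases on $[0,x_\star]$ and decreases on $[x_\star,\pi]$, where $x_\star=\alpha_\star/2$ is the unique root of $\tan x=\pi-x$ in $(0,\pi/2)$; this matches \eqref{eq:width-two-angle-threshold}. Also $f(0)=f(\pi)=0$.

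\textbf{Part (b).} For small $\varphi$, $L(\varphi)=-l(\rho-\varphi)-l(\rho+\varphi)$, so
\[
L''(0)=-2l''(\rho)=\frac{f'(\rho)}{\pi}.
\]
If $\alpha>\alpha_\star$, then $\rho>x_\star$, so $f'(\rho)<0$ and $L''(0)<0$. Thus $\bar v$ is a strict local maximum along the circle, hence not a local minimizer on the sphere. This part is a routine computation.

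\textbf{Part (a).} Here $\rho\le x_\star$, and I would prove $L(\varphi)>L(0)$ for all $\varphi\in(0,\pi]$ in three ranges, always writing differences of $l$ as integrals of $f$.

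\emph{Range $0<\varphi\le\rho$.} Here
\[
L(\varphi)-L(0)=\frac{1}{2\pi}\Bigl(\int_\rho^{\rho+\varphi}f-\int_{\rho-\varphi}^{\rho}f\Bigr),
\]
and I need this to be strictly positive. Substituting $x=\rho+s$ and $x=\rho-s$, it suffices to show $f(\rho+s)\ge f(\rho-s)$ for $s\in[0,\rho]$, with strict inequality on a set of positive measure. For $\rho+s\le x_\star$ this follows from monotonicity of $f$. For $\rho+s>x_\star$, I would use $f(\rho+s)\ge f(2\rho)$, which holds because $f$ is unimodal and $\rho+s\le 2\rho<\pi$. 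I would also use $f(\rho-s)\le f(\rho)$. This reduces the claim to $f(2\rho)\ge f(\rho)$ for $\rho\in(\pi/4,x_\star]$, a single-variable inequality to be checked by calculus, possibly in the sharper form $\int_\rho^{2\rho}f>\int_0^\rho f$ if the pointwise bound fails near $x_\star$. I expect this comparison to be the main obstacle: strictness must survive the borderline case $\alpha=\alpha_\star$, where $L''(0)=0$ and the local behaviour is only quartic. There I would rely on the integral form rather than the second derivative.

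\emph{Range $\rho\le\varphi\le\pi-\rho$.} Both angles $\varphi-\rho$ and $\varphi+\rho$ are increasing in $\varphi$. Hence $L'(\varphi)=\bigl(f(\varphi-\rho)+f(\varphi+\rho)\bigr)/(2\pi)>0$, and $L(\varphi)\ge L(\rho)>L(0)$ by the first range.

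\emph{Range $\pi-\rho\le\varphi\le\pi$.} The angles are $\varphi-\rho$ and $2\pi-\varphi-\rho$, so
\[
L'(\varphi)=\frac{f(\varphi-\rho)-f(2\pi-\varphi-\rho)}{2\pi}.
\]
Rather than sign-checking this, I would compare $L(\varphi)$ directly with $L(0)$ using \cref{lem:l-monotone-bounds}. Both angles lie in $[\pi-2\rho,\pi]\cup[\pi-\rho,\pi]$, so $-L(\varphi)\le l(\pi-2\rho)+l(\pi-\rho)$. It then remains to show $l(\pi-2\rho)+l(\pi-\rho)<2l(\rho)$, which is an explicit elementary inequality for $\rho\in(\pi/4,x_\star]$.

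Combining the three ranges gives uniqueness of the minimizer at $\varphi=0$, that is, at $w=\bar v$.
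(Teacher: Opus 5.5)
Your first range, $0<\varphi\le\rho$, is the crux of part (a), and the route you propose for it does not work. You correctly reduce $L(\varphi)>L(0)$ to the pointwise bound $f(\rho+s)\ge f(\rho-s)$ for $0\le s\le\rho$, strict on a set of positive measure. The trouble is the case $\rho+s>x_\star$, which always occurs because $2\rho>\pi/2>x_\star$. There you use $f(\rho+s)\ge f(2\rho)$ and $f(\rho-s)\le f(\rho)$, which leaves you needing $f(2\rho)\ge f(\rho)$.

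That inequality fails for \emph{every} $\rho\in(\pi/4,x_\star]$, not only near $x_\star$. By the unimodality you established,
\[
f(2\rho)<f(\pi/2)=\pi/2<3\sqrt2\,\pi/8=f(\pi/4)<f(\rho).
\]
Your fallback $\int_\rho^{2\rho}f>\int_0^\rho f$ does not rescue this. It only says $L(\rho)>L(0)$, i.e.\ it handles the endpoint $\varphi=\rho$. It gives nothing for $0<\varphi<\rho$, since your crude bounds carry no sign information on $f(\rho+s)-f(\rho-s)$ once $\rho+s>x_\star$. So part (a) is not proved as written. Range 2, which invokes $L(\rho)>L(0)$ ``by the first range'', inherits the gap.

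The missing idea is that the pointwise bound follows from a sum-to-product identity, not from monotonicity:
\[
f(\rho+s)-f(\rho-s)=(\pi-\rho)\bigl(\sin(\rho+s)-\sin(\rho-s)\bigr)-s\bigl(\sin(\rho+s)+\sin(\rho-s)\bigr)=2\cos\rho\,\sin s\,\bigl[(\pi-\rho)-\tan\rho\cdot s\cot s\bigr].
\]
This is strictly positive for $0<s\le\rho<\pi/2$, because $s\cot s<1$ and $\tan\rho\le\pi-\rho$ when $\rho\le x_\star$. Since $L'(\varphi)=\bigl(f(\rho+\varphi)-f(\rho-\varphi)\bigr)/(2\pi)$, this is exactly the derivative formula \eqref{eq:corr-two-teachers-derivative} the paper uses. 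It also covers the borderline $\alpha=\alpha_\star$ with no quartic analysis.

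With that fix, the rest of your argument goes through. Part (b) is the same second-derivative computation as the paper's. Your ranges 2 and 3 are a valid substitute for the paper's shortcut. The paper uses $g(t)-g(-t)=t/2$ to get $L_{\mathrm{corr}}(-w)-L_{\mathrm{corr}}(w)=\cos(\alpha/2)\langle w,\bar v\rangle$, which restricts attention to $\varphi\in[0,\pi/2]$ and makes range 3 unnecessary.

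If you keep range 3, you still owe the check $l(\pi-2\rho)+l(\pi-\rho)<2l(\rho)$. It is true: the left side increases and the right side decreases in $\rho$, and at $\rho=x_\star$ they are about $0.41$ and $0.57$. But you only asserted it.
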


\begin{proof}[Proof of \cref{lem:corr-two-teachers}]
Recall from \eqref{eq:two-directions} that \(\bar v\) and \(v_\Delta\) form an orthonormal basis of the teacher plane, with
\[
    v_1=\cos(\alpha/2)\,\bar v+\sin(\alpha/2)\,v_\Delta, \qquad v_2=\cos(\alpha/2)\,\bar v-\sin(\alpha/2)\,v_\Delta.
\]
As in the proof of \cref{lem:corr-case-small-rho}, we write \(L_{\mathrm{corr}}(\varphi,v_\Delta)=L_{\mathrm{corr}}(\cos\varphi\,\bar v+\sin\varphi\,v_\Delta)\). For \(0\le\varphi\le\pi/2\), with \(l\) as in \eqref{eq:relu-kernel-profile},
\[
    L_{\mathrm{corr}}(\varphi,v_\Delta) = -l(|\alpha/2-\varphi|)-l(\alpha/2+\varphi).
\]
For \(0<\varphi<\alpha/2\), direct differentiation gives
\begin{equation}\label{eq:corr-two-teachers-derivative}
    \frac{\partial}{\partial\varphi}L_{\mathrm{corr}}(\varphi,v_\Delta) = \frac{\cos(\alpha/2)\,\sin\varphi}{\pi} \left[ (\pi-\alpha/2)-\tan(\alpha/2)\cdot\varphi\cot\varphi \right].
\end{equation}

We first prove~(a). Since \(\arccos(-t)=\pi-\arccos t\), the formula \eqref{eq:corr-g-def} gives \(g(t)-g(-t)=t/2\) for \(t\in[-1,1]\). Consequently,
\[
    L_{\mathrm{corr}}(-w)-L_{\mathrm{corr}}(w) = \frac12\inner{w}{v_1+v_2} = \cos(\alpha/2)\,\inner{w}{\bar v}.
\]
By \cref{lem:corr-teacher-span}, every minimizer lies in \(\operatorname{span}\{\bar v,v_\Delta\}\). By the identity above, a minimizer cannot have \(\inner{w}{\bar v}<0\), and reflection across \(\bar v\) leaves \(L_{\mathrm{corr}}\) unchanged while changing the sign of \(\inner{w}{v_\Delta}\). It therefore suffices to show that \(L_{\mathrm{corr}}(\varphi,v_\Delta)>L_{\mathrm{corr}}(\bar v)\) for \(0<\varphi\le\pi/2\). The definition \eqref{eq:width-two-angle-threshold} of \(\alpha_\star\), together with the monotonicity of \(x\mapsto\tan x+x\), gives \(\tan(\alpha/2)\leq\pi-\alpha/2\). Since \(\varphi\cot\varphi<1\), the derivative \eqref{eq:corr-two-teachers-derivative} is positive for \(0<\varphi<\alpha/2\). For \(\alpha/2<\varphi\leq\pi/2\),
\[
    \frac{\partial}{\partial\varphi}L_{\mathrm{corr}}(\varphi,v_\Delta) = -l'(\varphi-\alpha/2)-l'(\varphi+\alpha/2)>0
\]
by \eqref{eq:l-first-derivative}, since \(0<\varphi+\alpha/2<\pi\). The one-sided derivatives agree at \(\varphi=\alpha/2\) because \(l'(0)=0\). Hence \(L_{\mathrm{corr}}(\varphi,v_\Delta)>L_{\mathrm{corr}}(\bar v)\) for every \(\varphi\in(0,\pi/2]\).

We now prove~(b). Since \(\alpha>\alpha_\star\), we have \(\tan(\alpha/2)>\pi-\alpha/2\). Differentiating \eqref{eq:corr-two-teachers-derivative} at \(\varphi=0\) gives
\[
    \frac{\partial^2}{\partial\varphi^2}L_{\mathrm{corr}}(\varphi,v_\Delta)\Big|_{\varphi=0} = \frac{(\pi-\alpha/2)\cos(\alpha/2)-\sin(\alpha/2)}{\pi} <0,
\]
while the first derivative vanishes at \(\varphi=0\). Hence \(L_{\mathrm{corr}}(\varphi,v_\Delta)<L_{\mathrm{corr}}(\bar v)\) for small \(\varphi>0\), so \(\bar v\) is not a local minimizer of \(L_{\mathrm{corr}}\).
\end{proof}

\subsection{Proof of \texorpdfstring{\cref{thm:corr-loss-global-min}}{the correlation-loss theorem}}

\begin{proof}[Proof of \cref{thm:corr-loss-global-min}]
Since \(L_{\mathrm{corr}}\) is continuous on the compact unit sphere, it has a minimizer, and by \cref{lem:corr-teacher-span} every minimizer lies in \(\operatorname{span}\{v_1,\ldots,v_k\}\). If \(0<\alpha\le\pi/2\), then \cref{lem:corr-case-large-rho} (when \(\rho\ge\pi/4\)) or \cref{lem:corr-case-small-rho} (when \(\rho\le\pi/4\)) shows that \(\bar v\) is the only unit vector in \(\operatorname{span}\{v_1,\ldots,v_k\}\) attaining the minimum of \(L_{\mathrm{corr}}\). If \(k=2\) and \(\pi/2<\alpha\le\alpha_\star\), the same conclusion is part~(a) of \cref{lem:corr-two-teachers}. In either case, \(\bar v\) is the unique minimizer of \(L_{\mathrm{corr}}\) on the unit sphere. For unit-norm \(w_1,\ldots,w_m\),
\[
    L_{\mathrm{corr}}(w_1,\ldots,w_m) = \sum_{j=1}^m L_{\mathrm{corr}}(w_j) \geq mL_{\mathrm{corr}}(\bar v),
\]
with equality if and only if \(w_j=\bar v\) for every \(j\). Thus \((\bar v,\ldots,\bar v)\) is the unique global minimum.
\end{proof}

\section{Squared-loss landscape}

\subsection{Stationarity and dimension reduction}

The proofs below use the condition \(\alpha\leq\alpha_\star\) in the form \(\sin(\alpha/2)\leq(\pi-\alpha/2)\cos(\alpha/2)\). The following lemma shows that the two are equivalent.

\begin{lemma}[Equivalent angle threshold]\label{lem:squared-loss-angle-threshold}
The equation \(\tan(\alpha/2)=\pi-\alpha/2\) has exactly one solution in \((\pi/2,\pi)\), namely \(\alpha_\star\) in \eqref{eq:width-two-angle-threshold}. Moreover, for every \(\alpha\in(0,\pi)\), the following conditions are equivalent: (a) \(\alpha\leq\alpha_\star\); (b) \(\tan(\alpha/2)\leq\pi-\alpha/2\); (c) \(\sin(\alpha/2)\leq(\pi-\alpha/2)\cos(\alpha/2)\).
\end{lemma}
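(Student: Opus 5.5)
The plan is to study the single-variable function $h(x)\coloneqq\tan x+x$ on $x=\alpha/2\in(0,\pi/2)$. The equation $\tan(\alpha/2)=\pi-\alpha/2$ is equivalent to $h(\alpha/2)=\pi$. Since $h'(x)=\sec^2x+1>0$, $h$ is strictly increasing and continuous on $(0,\pi/2)$. Its limits are $h(0^+)=0$ and $h(x)\to+\infty$ as $x\uparrow\pi/2$. Hence there is exactly one $x_\star\in(0,\pi/2)$ with $h(x_\star)=\pi$, i.e.\ exactly one $\alpha\in(0,\pi)$ solving the equation.

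To place this root in $(\pi/2,\pi)$, I will evaluate at $\alpha=\pi/2$. There $h(\pi/4)=1+\pi/4<\pi$, so monotonicity forces $x_\star>\pi/4$, i.e.\ $\alpha_\star>\pi/2$. This shows the unique root in $(0,\pi)$ lies in $(\pi/2,\pi)$. It is therefore also the unique root in $(\pi/2,\pi)$, and it coincides with the $\alpha_\star$ defined in \eqref{eq:width-two-angle-threshold}.

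The equivalences then follow quickly. Condition (a) $\alpha\le\alpha_\star$ is equivalent to $\alpha/2\le x_\star$, which by strict monotonicity of $h$ is equivalent to $h(\alpha/2)\le\pi$. That is exactly (b), $\tan(\alpha/2)\le\pi-\alpha/2$. For (b)$\Leftrightarrow$(c), I multiply (b) by $\cos(\alpha/2)$, which is strictly positive for $\alpha\in(0,\pi)$; this preserves the inequality in both directions and gives $\sin(\alpha/2)\le(\pi-\alpha/2)\cos(\alpha/2)$.

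There is no real obstacle here. The only point needing care is the boundary behaviour of $h$ and the check $1+\pi/4<\pi$, which together justify both existence and the location of the root in $(\pi/2,\pi)$.
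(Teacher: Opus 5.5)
Your proof is correct and matches the paper's argument. The paper uses the strictly increasing function $\alpha\mapsto\tan(\alpha/2)+\alpha/2-\pi$ on $(0,\pi)$, checks that it is negative at $\alpha=\pi/2$ and tends to $+\infty$ as $\alpha\uparrow\pi$, and obtains (b)$\Leftrightarrow$(c) by multiplying by $\cos(\alpha/2)>0$; this is exactly your argument after the substitution $x=\alpha/2$.
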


\begin{proof}[Proof of \cref{lem:squared-loss-angle-threshold}]
The function \(\alpha\mapsto\tan(\alpha/2)+\alpha/2-\pi\) is strictly increasing on \((0,\pi)\), since its derivative is \(\bigl(\sec^2(\alpha/2)+1\bigr)/2>0\). It equals \(1-3\pi/4<0\) at \(\alpha=\pi/2\) and tends to \(+\infty\) as \(\alpha\uparrow\pi\). Thus it has a unique root \(\alpha_\star\in(\pi/2,\pi)\) and is nonpositive exactly on \((0,\alpha_\star]\), so (a) and (b) are equivalent. Since \(\cos(\alpha/2)>0\), multiplying (b) by \(\cos(\alpha/2)\) shows that (b) and (c) are equivalent.
\end{proof}

\subsubsection{Stationary equations}\label{subsubsec:stationary-equations-proof}

We first derive the stationarity equations \eqref{eq:critical_eq} from the gradient-flow equations \eqref{eq:gf-w1}--\eqref{eq:gf-w2}.

\begin{proof}[Proof of \cref{lem:critical_eq_from_flow}]
Since \(\frac{\diff w_i}{\diff t}=-\nabla_{w_i}L\), the pair \((w_1,w_2)\) is stationary if and only if \(\frac{\diff w_1}{\diff t}=\frac{\diff w_2}{\diff t}=0\) in \eqref{eq:gf-w1}--\eqref{eq:gf-w2}. Setting \(\frac{\diff w_1}{\diff t}=0\) in \eqref{eq:gf-w1}, multiplying by \(2\pi\), and using \(w_i=\|w_i\|_2\bar w_i\) gives
\[
    (\pi-\theta_{11})v_1+(\pi-\theta_{12})v_2 +a_1\bar w_1-(\pi-\gamma)\|w_2\|_2\bar w_2=0,
\]
which is the first identity in \eqref{eq:critical_eq}. Setting \(\frac{\diff w_2}{\diff t}=0\) in \eqref{eq:gf-w2} similarly yields the second identity. These steps can be reversed, so \eqref{eq:critical_eq} holds if and only if \((w_1,w_2)\) is stationary.
\end{proof}

\subsubsection{Uniqueness of the out-of-plane root}\label{subsubsec:out-of-plane-root-uniqueness-proof}

The out-of-plane saddles in \cref{thm:squared_loss_landscape} are determined by a root of \(F_\alpha\) in \((0,\pi/2)\), so we next show that this root exists and is unique.

\begin{proof}[Proof of \cref{lem:out-of-plane-root-uniqueness}]
Fix \(\alpha\in(0,\pi)\). By \eqref{eq:out-of-plane-angle}, \(\theta_\alpha(\kappa)=\arccos\bigl(\cos(\alpha/2)\cos\kappa\bigr)\), so for \(0<\kappa<\pi/2\) we have \(\theta_\alpha(\kappa)<\pi/2\) and \(0<\theta_\alpha'(\kappa)=\cos(\alpha/2)\sin\kappa/\sin\theta_\alpha(\kappa)<1\), where the last inequality follows from \(\sin^2\theta_\alpha(\kappa)-\cos^2(\alpha/2)\sin^2\kappa =\sin^2(\alpha/2)>0\). Using \(\sin(2\kappa)=2\sin\kappa\cos\kappa\), we first rewrite \eqref{eq:out-of-plane-radius} as
\[
    r_\alpha(\kappa) =\frac{2\sin\theta_\alpha(\kappa)}{\sin(2\kappa)+2\kappa} =\frac{\sin\theta_\alpha(\kappa)}{\kappa+\sin\kappa\cos\kappa}.
\]
Substituting this expression into \eqref{eq:out-of-plane-root-function} yields
\[
    F_\alpha(\kappa) =\bigl(\pi-\theta_\alpha(\kappa)\bigr)\cos(\alpha/2) -\frac{(\pi-2\kappa)\sin\theta_\alpha(\kappa)\cos\kappa} {\kappa+\sin\kappa\cos\kappa}.
\]
The fraction in the second term is strictly positive because \(0<\kappa<\pi/2\) and \(0<\theta_\alpha(\kappa)<\pi/2\). Dividing by this fraction shows that \(F_\alpha(\kappa)=0\) if and only if \(Q_\alpha(\kappa)=1\), where
\begin{equation}\label{eq:out-of-plane-Q}
\begin{aligned}
    Q_\alpha(\kappa) &= \frac{\cos(\alpha/2)\,\bigl(\pi-\theta_\alpha(\kappa)\bigr)(\kappa+\sin\kappa\cos\kappa)} {(\pi-2\kappa)\sin\theta_\alpha(\kappa)\cos\kappa}\\
    &= \cos(\alpha/2)\, \frac{\pi-\theta_\alpha(\kappa)}{\pi-2\kappa} \frac{\kappa+\sin\kappa\cos\kappa}{\sin\kappa\cos\kappa} \frac{\sin\kappa}{\sin\theta_\alpha(\kappa)}\\
    &= \cos(\alpha/2)\, \frac{\pi-\theta_\alpha(\kappa)}{\pi-2\kappa} \left(1+\frac{2\kappa}{\sin(2\kappa)}\right) \frac{\sin\kappa}{\sin\theta_\alpha(\kappa)}.
\end{aligned}
\end{equation}
We can show that each of the three positive factors after \(\cos(\alpha/2)\) in the last line is strictly increasing in \(\kappa\). For the first factor,
\[
    \frac{\diff}{\diff \kappa}\frac{\pi-\theta_\alpha(\kappa)}{\pi-2\kappa} = \frac{2\bigl(\pi-\theta_\alpha(\kappa)\bigr)-(\pi-2\kappa)\theta_\alpha'(\kappa)} {(\pi-2\kappa)^2} >\frac{2\kappa}{(\pi-2\kappa)^2}>0.
\]
Here \(\pi-2\kappa>0\), \(\theta_\alpha'(\kappa)<1\), and \(\theta_\alpha(\kappa)<\pi/2\) give
\[
\begin{aligned}
    2\bigl(\pi-\theta_\alpha(\kappa)\bigr)-(\pi-2\kappa)\theta_\alpha'(\kappa) >2\bigl(\pi-\theta_\alpha(\kappa)\bigr)-(\pi-2\kappa) =\pi-2\theta_\alpha(\kappa)+2\kappa >2\kappa.
\end{aligned}
\]
For the second factor, \(0<2\kappa<\pi\) implies \(t\sin t>0\) for \(0<t<2\kappa\), so integration by parts gives
\[
    0<\int_0^{2\kappa}t\sin t\,\diff t =\left.(-t\cos t+\sin t)\right|_0^{2\kappa} =\sin(2\kappa)-2\kappa\cos(2\kappa).
\]
Therefore
\[
    \frac{\diff}{\diff \kappa} \left(1+\frac{2\kappa}{\sin(2\kappa)}\right) =\frac{2\bigl(\sin(2\kappa)-2\kappa\cos(2\kappa)\bigr)} {\sin^2(2\kappa)}>0.
\]
For the third factor, the identity \(\sin\kappa/\sin\theta_\alpha(\kappa)=\bigl(1+\sin^2(\alpha/2)\cot^2\kappa\bigr)^{-1/2}\) and \(\sin^2(\alpha/2)>0\) give strict monotonicity. Hence \(Q_\alpha\) in \eqref{eq:out-of-plane-Q} is continuous and strictly increasing on \((0,\pi/2)\).

Finally, as \(\kappa\downarrow0\), we have \(\theta_\alpha(\kappa)\to\alpha/2\), so the third factor tends to \(0\) while the first two stay bounded. As \(\kappa\uparrow\pi/2\), we have \(\theta_\alpha(\kappa)\to\pi/2\), so the first two factors tend to \(+\infty\) while the third tends to \(1\). Hence
\[
    \lim_{\kappa\downarrow0}Q_\alpha(\kappa)=0, \qquad \lim_{\kappa\uparrow\pi/2}Q_\alpha(\kappa)=+\infty.
\]
The intermediate value theorem and strict monotonicity therefore give exactly one solution of \(Q_\alpha(\kappa)=1\), and thus exactly one root of \(F_\alpha\) in \((0,\pi/2)\).
\end{proof}

\subsubsection{Reduction to three dimensions}\label{subsubsec:three-dimensional-reduction-proof}

The teacher terms in \eqref{eq:critical_eq} lie in the teacher plane, so the components of \eqref{eq:critical_eq} orthogonal to this plane involve only the students. These components give the dimension reduction in \cref{prop:3D}.

\begin{proof}[Proof of \cref{prop:3D}]
Projecting the two equations in \eqref{eq:critical_eq} onto the orthogonal complement of $\operatorname{span}\{v_1,v_2\}$ gives $a_1\bar w_1^\perp-(\pi-\gamma)\|w_2\|_2\bar w_2^\perp=0$ and $a_2\bar w_2^\perp-(\pi-\gamma)\|w_1\|_2\bar w_1^\perp=0.$ If $\gamma<\pi$, then $(\pi-\gamma)\|w_2\|_2>0$, so the first identity expresses $\bar w_2^\perp$ as a multiple of $\bar w_1^\perp$, possibly zero. If $\gamma=\pi$, then $\bar w_2=-\bar w_1$, so $\bar w_2^\perp=-\bar w_1^\perp$. In either case $\bar w_1^\perp$ and $\bar w_2^\perp$ are linearly dependent, and hence so are $w_1^\perp$ and $w_2^\perp$. Thus $w_1$ and $w_2$ lie in the span of the teacher plane and at most one further direction, which gives $\dim\operatorname{span}\{v_1,v_2,w_1,w_2\}\leq3$.
\end{proof}

\subsection{Three-dimensional symmetry reduction}

\subsubsection{The teacher force}

The angle between \(\bar v\) and each teacher is \(\alpha/2\), so
\[
    v_1=\cos(\alpha/2)\,\bar v+\sin(\alpha/2)\,v_\Delta, \qquad v_2=\cos(\alpha/2)\,\bar v-\sin(\alpha/2)\,v_\Delta.
\]
For a unit vector \(x\), we define its teacher force by
\[
    T(x) = \bigl(\pi-\angle(x,v_1)\bigr)v_1 + \bigl(\pi-\angle(x,v_2)\bigr)v_2.
\]
The teacher terms in the \(i\)th equation of \eqref{eq:critical_eq} are \(T(\bar w_i)\). The proof of \cref{lem:3Dsym} relies on the following comparison between the directions of \(T(x)\) and \(x\).

\begin{lemma}[Direction of the teacher force]\label{lem:teacher-force-azimuth}
Assume \(0<\alpha\leq\alpha_\star\), and let \(x\in\RR^d\) satisfy \(\|x\|_2=1\). Write $p=\langle x, \bar v\rangle$, $q=\langle x,v_\Delta\rangle$, and $x_P=p\bar v+qv_\Delta.$ Then \(\langle T(x),\bar v\rangle>0\) and \(\operatorname{sgn}\langle T(x),v_\Delta\rangle=\operatorname{sgn}q\). If \(p>0\) and \(q\neq0\), the teacher force lies strictly closer to the bisector than \(x_P\):
\[
    0< \frac{\langle T(x),v_\Delta\rangle}{\langle T(x),\bar v\rangle} <\frac{q}{p} \quad\text{if }q>0, \qquad \frac{q}{p}< \frac{\langle T(x),v_\Delta\rangle}{\langle T(x),\bar v\rangle} <0 \quad\text{if }q<0.
\]
\end{lemma}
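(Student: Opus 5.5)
Throughout, write $a=\cos(\alpha/2)>0$ and $b=\sin(\alpha/2)>0$, so that $v_1=a\bar v+bv_\Delta$ and $v_2=a\bar v-bv_\Delta$. Set $\theta_j=\angle(x,v_j)$. Then $\cos\theta_1=ap+bq$ and $\cos\theta_2=ap-bq$. Taking inner products in the definition of $T$ gives
\[
    \langle T(x),\bar v\rangle=a\bigl(2\pi-\theta_1-\theta_2\bigr),\qquad
    \langle T(x),v_\Delta\rangle=b\bigl(\theta_2-\theta_1\bigr).
\]

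\emph{Sign claims.} For the first claim, $\theta_1=\theta_2=\pi$ would force $x=-v_1=-v_2$, which is impossible since $\alpha>0$. Hence $\theta_1+\theta_2<2\pi$ and $\langle T(x),\bar v\rangle>0$. For the second claim, $\arccos$ is strictly decreasing and $\cos\theta_1-\cos\theta_2=2bq$. Therefore $\operatorname{sgn}(\theta_2-\theta_1)=\operatorname{sgn}q$, which gives the sign of $\langle T(x),v_\Delta\rangle$. In particular, when $q\ne0$ the ratio has the sign of $q$, which is the outer inequality in each case.

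\emph{Reduction to $q>0$.} Swapping $v_1$ and $v_2$ fixes $\bar v$ and sends $v_\Delta\mapsto-v_\Delta$. This maps $q\mapsto-q$ and negates the ratio. So it suffices to treat $p>0$, $q>0$, i.e.\ to prove $p\,b(\theta_2-\theta_1)<q\,a(2\pi-\theta_1-\theta_2)$. I will substitute $ap=(\cos\theta_1+\cos\theta_2)/2$ and $bq=(\cos\theta_1-\cos\theta_2)/2$ and multiply through by $ab>0$. The target becomes
\[
    b^2(\cos\theta_1+\cos\theta_2)(\theta_2-\theta_1)<a^2(\cos\theta_1-\cos\theta_2)(2\pi-\theta_1-\theta_2).
\]
Introduce $\sigma=(\theta_1+\theta_2)/2$ and $\delta=(\theta_2-\theta_1)/2>0$, and apply sum-to-product. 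Since $p>0$ gives $\cos\sigma\cos\delta>0$, and $\delta\in(0,\pi/2)$, we get $\cos\sigma>0$, i.e.\ $\sigma<\pi/2$. Dividing by $2a^2\cos\sigma\sin\delta>0$ turns the target into
\[
    \tan^2\tfrac{\alpha}{2}\cdot\delta\cot\delta<(\pi-\sigma)\tan\sigma .
\]

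\emph{Closing the estimate.} The triangle inequality for angles gives $\theta_1+\theta_2\ge\alpha$, so $\sigma\ge\alpha/2$. Since $\delta\in(0,\pi/2)$, we have $\delta\cot\delta<1$ strictly. The map $\sigma\mapsto(\pi-\sigma)\tan\sigma$ is increasing on $(0,\pi/2)$: its derivative is $\bigl((\pi-\sigma)-\sin\sigma\cos\sigma\bigr)/\cos^2\sigma>0$, as already computed in the proof of \cref{lem:corr-scalar-ineq}. Hence the right-hand side is at least $(\pi-\alpha/2)\tan(\alpha/2)$. By \cref{lem:squared-loss-angle-threshold}(b), the hypothesis $\alpha\le\alpha_\star$ gives $\pi-\alpha/2\ge\tan(\alpha/2)$, so the right-hand side is at least $\tan^2(\alpha/2)$. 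Combining this with the strict bound $\delta\cot\delta<1$ gives the strict inequality.

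The main obstacle is spotting the half-sum/half-difference substitution. It decouples the inequality into a factor controlled by $\delta$ (at most $1$) and a factor controlled by $\sigma$ (monotone, bounded below at $\sigma=\alpha/2$). At that point the threshold $\alpha_\star$ enters exactly through condition (b). The remaining work is checking the sign conditions ($\cos\sigma>0$, $\sin\delta>0$) needed to divide.
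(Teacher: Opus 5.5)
Your argument is correct, and it takes a different route from the paper's.

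\textbf{The paper's route.} The paper writes $x_P=\rho(\cos\varphi\,\bar v+\sin\varphi\,v_\Delta)$ and reduces the claim to positivity of $H(\rho\cos(\varphi-\alpha/2))\sin(\varphi-\alpha/2)+H(\rho\cos(\varphi+\alpha/2))\sin(\varphi+\alpha/2)$, with $H(t)=\pi-\arccos t$. The case $\varphi\ge\alpha/2$ is immediate. For $\varphi<\alpha/2$, set $x_0=\alpha/2-\varphi$ and $y_0=\alpha/2+\varphi$. The paper first removes the out-of-plane part by showing that $\rho\mapsto H(\rho\cos x_0)/H(\rho\cos y_0)$ is increasing, using the monotone function $u/(\sqrt{1-u^2}\,H(u))$. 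This reduces to $\rho=1$, where the threshold gives $(\pi-x_0)\sin x_0<(\pi-y_0)\sin y_0$.

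\textbf{Your route.} You use the teacher angles $\theta_1,\theta_2$ themselves as coordinates. Since $p$, $q$ and $T(x)$ depend on $x$ only through $\cos\theta_1$ and $\cos\theta_2$, the out-of-plane component is absorbed at no cost. The half-sum/half-difference substitution then factors the claim as $\tan^2(\alpha/2)\,\delta\cot\delta<(\pi-\sigma)\tan\sigma$. You prove this on the whole box $\alpha/2\le\sigma<\pi/2$, $0<\delta<\pi/2$, using only the triangle inequality and $p>0$.

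\textbf{Comparison.} Your version is shorter and needs neither the radial-monotonicity step nor the case split. It also shows that the threshold is exactly what is required: with $\sigma=\alpha/2$ and $\delta\downarrow 0$ (in-plane $x$ approaching $\bar v$), the inequality fails whenever $\tan(\alpha/2)>\pi-\alpha/2$. The paper's version is more geometric, tracking the polar angle of $x_P$ directly, but pays for this with the reduction in $\rho$.

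\textbf{One detail to add.} Justify $\delta<\pi/2$, for instance via $\theta_2-\theta_1\le\angle(v_1,v_2)=\alpha<\pi$. You need it because you divide by $\sin\delta$ and use $\cos\delta>0$ to conclude $\cos\sigma>0$.
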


\begin{proof}[Proof of \cref{lem:teacher-force-azimuth}]
We write \(H(t)=\pi-\arccos t\) for \(-1\leq t\leq1\). This function is strictly increasing and nonnegative, and it vanishes only at \(t=-1\). Since \(x\) and \(v_j\) are unit vectors, \(\pi-\angle(x,v_j)=H(\langle x,v_j\rangle)\) for \(j=1,2\), and the decomposition of \(v_1,v_2\) in the basis \(\bar v,v_\Delta\) gives
\[
    \langle x,v_1\rangle=p\cos(\alpha/2)+q\sin(\alpha/2), \qquad \langle x,v_2\rangle=p\cos(\alpha/2)-q\sin(\alpha/2).
\]
Substituting into the definition of \(T(x)\) and taking inner products with \(\bar v\) and \(v_\Delta\) yields
\[
\begin{aligned}
    \langle T(x),\bar v\rangle &= \cos(\alpha/2)\,\bigl(H(p\cos(\alpha/2)+q\sin(\alpha/2))+H(p\cos(\alpha/2)-q\sin(\alpha/2))\bigr),\\
    \langle T(x),v_\Delta\rangle &= \sin(\alpha/2)\,\bigl(H(p\cos(\alpha/2)+q\sin(\alpha/2))-H(p\cos(\alpha/2)-q\sin(\alpha/2))\bigr).
\end{aligned}
\]
Since \(\cos(\alpha/2)>0\) and \(H\geq0\), the first quantity is nonnegative. It vanishes only if \(\langle x,v_1\rangle=\langle x,v_2\rangle=-1\), that is, only if \(x=-v_1=-v_2\), which is impossible because \(v_1\neq v_2\). Hence \(\langle T(x),\bar v\rangle>0\). In the second quantity, the two arguments of \(H\) differ by \(2q\sin(\alpha/2)\), and \(\sin(\alpha/2)>0\), so the strict monotonicity of \(H\) shows that \(\langle T(x),v_\Delta\rangle\) has the sign of \(q\).

It remains to prove the bounds on the ratio when \(p>0\) and \(q\neq0\). By reflection across the \(\bar v\)-axis, we may assume \(q>0\). The lower bound then follows from the sign of \(\langle T(x),v_\Delta\rangle\), and since \(\langle T(x),\bar v\rangle>0\) and \(p>0\), the upper bound is equivalent to \(q\langle T(x),\bar v\rangle-p\langle T(x),v_\Delta\rangle>0\). We write \(x_P\) in polar form, \(x_P=\rho(\cos\varphi\,\bar v+\sin\varphi\,v_\Delta)\) with \(0<\rho\leq1\) and \(0<\varphi<\pi/2\), so that \(p=\rho\cos\varphi\) and \(q=\rho\sin\varphi\). Then
\[
    p\cos(\alpha/2)+q\sin(\alpha/2)=\rho\cos(\varphi-\alpha/2), \qquad p\cos(\alpha/2)-q\sin(\alpha/2)=\rho\cos(\varphi+\alpha/2).
\]
Substituting these identities into the formulas for \(\langle T(x),\bar v\rangle\) and \(\langle T(x),v_\Delta\rangle\) and using the angle-addition formulas, we obtain
\begin{align*}
    \frac{q\langle T(x),\bar v\rangle-p\langle T(x),v_\Delta\rangle}{\rho} &=\cos(\alpha/2)\Bigl[ H\bigl(\rho\cos(\varphi-\alpha/2)\bigr) +H\bigl(\rho\cos(\varphi+\alpha/2)\bigr) \Bigr]\sin\varphi \\
    &\quad-\sin(\alpha/2)\Bigl[ H\bigl(\rho\cos(\varphi-\alpha/2)\bigr) -H\bigl(\rho\cos(\varphi+\alpha/2)\bigr) \Bigr]\cos\varphi \\
    &=H\bigl(\rho\cos(\varphi-\alpha/2)\bigr) (\cos(\alpha/2)\sin\varphi-\sin(\alpha/2)\cos\varphi) \\
    &\quad+H\bigl(\rho\cos(\varphi+\alpha/2)\bigr) (\cos(\alpha/2)\sin\varphi+\sin(\alpha/2)\cos\varphi) \\
    &=H\bigl(\rho\cos(\varphi-\alpha/2)\bigr)\sin(\varphi-\alpha/2) + H\bigl(\rho\cos(\varphi+\alpha/2)\bigr)\sin(\varphi+\alpha/2).
\end{align*}
If $\varphi\geq\alpha/2$, both terms on the right are nonnegative and the second is positive, so the expression is positive.

Suppose now that $0<\varphi<\alpha/2$, and set $x_0=\alpha/2-\varphi$ and $y_0=\alpha/2+\varphi$, so that $0<x_0<y_0<\alpha<\pi$. Since $\sin(\varphi-\alpha/2)=-\sin x_0$ and $\sin(\varphi+\alpha/2)=\sin y_0$, the expression is positive exactly when
\begin{equation}
    H(\rho\cos x_0)\sin x_0 < H(\rho\cos y_0)\sin y_0.
    \label{eq:teacher-force-scalar-goal}
\end{equation}
To compare the two sides, we write \(R(\rho)=H(\rho\cos x_0)/H(\rho\cos y_0)\) and \(K(u)=u/\bigl(\sqrt{1-u^2}\,H(u)\bigr)\). For $-1<u<1$,
\[
    K'(u) = \frac{H(u)-u\sqrt{1-u^2}} {H(u)^2(1-u^2)^{3/2}}.
\]
The numerator is positive: for \(u<0\) both of its terms are positive, and for \(u\geq0\) we have \(H(u)\geq\pi/2>1/2\geq u\sqrt{1-u^2}\). Thus \(K\) is strictly increasing. Since \(\cos x_0>\cos y_0\), it follows that
\[
    \frac{\diff}{\diff\rho}\log R(\rho) = \frac{K(\rho\cos x_0)-K(\rho\cos y_0)}{\rho}>0,
\]
and therefore
\begin{equation}
    R(\rho) \leq R(1) = \frac{\pi-x_0}{\pi-y_0}.
    \label{eq:teacher-force-ratio-bound}
\end{equation}

It therefore suffices to show that \((\pi-x_0)\sin x_0<(\pi-y_0)\sin y_0\). Using \(x_0=\alpha/2-\varphi\) and \(y_0=\alpha/2+\varphi\), we have
\[
    (\pi-y_0)\sin y_0-(\pi-x_0)\sin x_0 = 2\bigl( (\pi-\alpha/2)\cos(\alpha/2)\sin\varphi - \varphi\sin(\alpha/2)\cos\varphi \bigr).
\]
By \cref{lem:squared-loss-angle-threshold}, \((\pi-\alpha/2)\cos(\alpha/2)\geq\sin(\alpha/2)\), so the right-hand side is at least \(2\sin(\alpha/2)(\sin\varphi-\varphi\cos\varphi)\), which is positive because \(\tan\varphi>\varphi\). Combined with \eqref{eq:teacher-force-ratio-bound}, this gives \eqref{eq:teacher-force-scalar-goal}, and hence the upper bound for \(q>0\). The case \(q<0\) follows by reflection.
\end{proof}

\subsubsection{Elimination of the splitting coordinate}\label{subsubsec:splitting-coordinate-elimination}

\begin{proof}[Proof of \cref{lem:3Dsym}]
By \cref{prop:3D}, there exists a unit vector \(\xi\perp\operatorname{span}\{\bar v,v_\Delta\}\) such that \(\bar w_1,\bar w_2\in\operatorname{span}\{\bar v,v_\Delta,\xi\}\). We write \(\bar w_i=p_i\bar v+q_iv_\Delta+\eta_i\xi\) and \(T_i=T(\bar w_i)\) for \(i=1,2\). Since one of the students lies outside the teacher plane, at least one \(\eta_i\) is nonzero. By \cref{lem:teacher-force-azimuth},
\begin{equation}
    \langle T_i,\bar v\rangle>0, \qquad \operatorname{sgn}\langle T_i,v_\Delta\rangle=\operatorname{sgn}q_i.
    \label{eq:three-dimensional-force-signs}
\end{equation}
Each \(T_i\) lies in the teacher plane, so it has no \(\xi\)-component.

We first show that \(0<\gamma<\pi\). If \(\gamma=0\), then \(\bar w_1=\bar w_2\), and the first equation in \eqref{eq:critical_eq} becomes \(T_1+(a_1-\pi\|w_2\|_2)\bar w_1=0\). Its \(\xi\)-coordinate shows that the coefficient of \(\bar w_1\) vanishes, so \(T_1=0\), which contradicts \eqref{eq:three-dimensional-force-signs}. If \(\gamma=\pi\), then \(\bar w_2=-\bar w_1\), both \(\eta_i\) are nonzero, and the equations become \(T_i+a_i\bar w_i=0\). Their \(\xi\)-coordinates give \(a_i=0\), and hence again \(T_i=0\). Thus
\begin{equation}
    0<\gamma<\pi.
    \label{eq:three-dimensional-gamma-range}
\end{equation}

We next show that \(\eta_1\eta_2\neq0\). If, for instance, \(\eta_1=0\), then \(\eta_2\neq0\), and the \(\xi\)-coordinate of the first equation in \eqref{eq:critical_eq} gives \((\pi-\gamma)\|w_2\|_2\eta_2=0\), which contradicts \eqref{eq:three-dimensional-gamma-range}. The case \(\eta_2=0\) is the same.

Let \(t=\eta_2/\eta_1\). The \(\xi\)-coordinates of the two equations in \eqref{eq:critical_eq} give \(a_1=(\pi-\gamma)\|w_2\|_2t\) and \(a_2=(\pi-\gamma)\|w_1\|_2/t\), and substituting these values back into \eqref{eq:critical_eq} yields
\begin{align}
    \bar w_2&=t \bar w_1+\frac{1}{(\pi-\gamma)\|w_2\|_2}T_1,\qquad T_2=-\frac{\|w_1\|_2}{t\|w_2\|_2}T_1.
\label{eq:three-dimensional-collinearity}
\end{align}
Since \(T_1\) and \(T_2\) both have positive \(\bar v\)-coordinate, the coefficient in the second identity is positive. Hence \(t<0\), and \(T_2\) is a positive multiple of \(T_1\). By \eqref{eq:three-dimensional-force-signs}, the numbers \(q_1\) and \(q_2\) are then either both zero or both nonzero with the same sign.

It remains to rule out the second possibility. Suppose that \(q_1\) and \(q_2\) are nonzero with the same sign. By reflection across the \(\bar v\)-axis, we may assume \(q_1,q_2>0\), so that \(\langle T_1,v_\Delta\rangle>0\). Taking the \(\bar v\)- and \(v_\Delta\)-coordinates of the first identity in \eqref{eq:three-dimensional-collinearity} gives
\begin{align}
    p_2&=t p_1+\frac{\langle T_1,\bar v\rangle}{(\pi-\gamma)\|w_2\|_2},\quad q_2=t q_1+\frac{\langle T_1,v_\Delta\rangle}{(\pi-\gamma)\|w_2\|_2}.
\label{eq:three-dimensional-coordinate-identities}
\end{align}
If \(p_1\leq0\), the first identity gives \(p_2>0\), because \(t<0\) and \(\langle T_1,\bar v\rangle>0\). After exchanging the students if necessary, we may therefore assume \(p_1>0\). Then \cref{lem:teacher-force-azimuth}, applied to \(\bar w_1\), gives \(\langle T_1,v_\Delta\rangle/\langle T_1,\bar v\rangle<q_1/p_1\), that is, \(q_1\langle T_1,\bar v\rangle-p_1\langle T_1,v_\Delta\rangle>0\). Substituting \eqref{eq:three-dimensional-coordinate-identities}, we obtain
\begin{align}
    q_2\langle T_1,\bar v\rangle-p_2\langle T_1,v_\Delta\rangle &=\langle T_1,\bar v\rangle\left(tq_1+\frac{\langle T_1,v_\Delta\rangle}{(\pi-\gamma)\|w_2\|_2}\right) -\langle T_1,v_\Delta\rangle\left(tp_1+\frac{\langle T_1,\bar v\rangle}{(\pi-\gamma)\|w_2\|_2}\right) \notag\\
    &=t(q_1\langle T_1,\bar v\rangle-p_1\langle T_1,v_\Delta\rangle)<0,
    \label{eq:three-dimensional-second-cross-negative}
\end{align}
where the last inequality uses \(t<0\). Since \(\langle T_1,\bar v\rangle\), \(\langle T_1,v_\Delta\rangle\), and \(q_2\) are positive, this inequality forces \(p_2>0\). We may then apply \cref{lem:teacher-force-azimuth} to \(\bar w_2\). Since \(T_2\) is a positive multiple of \(T_1\), it gives
\[
    \frac{\langle T_1,v_\Delta\rangle}{\langle T_1,\bar v\rangle} =\frac{\langle T_2,v_\Delta\rangle}{\langle T_2,\bar v\rangle} <\frac{q_2}{p_2},
\]
that is, \(q_2\langle T_1,\bar v\rangle-p_2\langle T_1,v_\Delta\rangle>0\). This contradicts \eqref{eq:three-dimensional-second-cross-negative}. Hence \(q_1=q_2=0\), and \(w_1,w_2\in\operatorname{span}\{\bar v,\xi\}\).
\end{proof}

\subsection{Out-of-plane stationary points}\label{subsec:out-of-plane-stationary-points}

By \cref{lem:3Dsym}, a stationary point with a student outside the teacher plane lies in \(\operatorname{span}\{\bar v,\xi\}\). For such points, we first show that the two students are mirror images across the teacher plane, then reduce stationarity to the equation \(F_\alpha(\kappa)=0\), and finally find a direction of negative curvature. Together these steps prove \cref{lem:critical_point_perp_plane}.

\subsubsection{A scalar inequality}

In the proof of \cref{lem:perp_plane_symmetry} below, an out-of-plane stationary point that is not symmetric across the teacher plane leads to two scalar identities. The following inequality shows that these identities are incompatible.

\begin{lemma}[A scalar inequality]\label{lem:scalar_separation_bisector}
Let \(0<t<m<\pi/2\), and put
\[
    X=t+\frac{2(\pi-2m)}{2m-\sin 2m}\bigl(t-\cos^2m\tan t\bigr), \qquad Y=t+\frac{2(\pi-2m)}{2m+\sin 2m}\bigl(t+\sin^2m\cot t\bigr).
\]
If \(\lambda>0\) satisfies \(\lambda X=\pi/2+\arctan(\lambda\tan t/\tan^2m)\), then \(\lambda X>Y/\lambda\).
\end{lemma}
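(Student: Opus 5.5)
Since $\lambda>0$, the target inequality $\lambda X>Y/\lambda$ is equivalent to $\lambda^2X>Y$. The plan is to use the defining equation to turn this into a one-variable inequality in $\lambda$ (with $t,m$ fixed) and then compare. Three observations come first.

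First, $Y>0$: every term is positive for $0<t<m<\pi/2$, since $2m+\sin2m>0$ and $\pi-2m>0$.

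Second, the right-hand side of the constraint lies in $(\pi/2,\pi)$. Hence $\lambda X>\pi/2>0$, so $X>0$, and $\lambda X<\pi$ forces $\lambda<\pi/X$.

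Third, write $s\coloneqq\lambda\tan t/\tan^2m$. Then the constraint reads
\[
\lambda X=\pi-\arctan\frac{1}{s}.
\]
Using $\arctan u<u$, this gives $\lambda X>\pi-\tan^2m/(\lambda\tan t)$, that is,
\[
\lambda^2X>\pi\lambda-\frac{\tan^2m}{\tan t}.
\]
So it suffices to prove
\[
\pi\lambda-\frac{\tan^2 m}{\tan t}\ge Y,
\]
or, if this crude route fails, to keep the exact arctangent.

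A cleaner route is to eliminate the constraint by monotonicity. Define $\Phi(\lambda)\coloneqq\lambda X-\pi/2-\arctan(\lambda\tan t/\tan^2m)$. The solution is a root of $\Phi$, and $\lambda^2X$ is increasing in $\lambda$. It therefore suffices to find an explicit $\lambda_0$ with $\lambda_0^2X\ge Y$ and to show that the root satisfies $\lambda\ge\lambda_0$. Equivalently, one shows $\Phi(\lambda_0)\le0$ together with a sign argument on $\Phi$, for instance convexity of $\Phi$ on $\lambda>0$ (the arctangent is concave there). The natural choice is $\lambda_0=\sqrt{Y/X}$.

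With this choice, the claim reduces to the single inequality
\[
\sqrt{XY}\le\frac{\pi}{2}+\arctan\!\Big(\sqrt{Y/X}\,\frac{\tan t}{\tan^2m}\Big),
\]
to be proved on the triangle $0<t<m<\pi/2$. For uniqueness of the root, note that $\Phi(0)<0$. If $X>0$, $\Phi$ is convex and tends to $+\infty$, so it has a unique positive root and is negative before it. If $X\le0$, there is no root at all, which contradicts the hypothesis, so $X>0$ automatically. Hence $\Phi(\lambda_0)<0$ implies $\lambda>\lambda_0$, and strictness follows.

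\textbf{Main obstacle.} The hard step is the two-variable trigonometric inequality $\Phi(\sqrt{Y/X})<0$ on the triangle. I would try the following in order.
\begin{enumerate}
\item Apply AM--GM, $\sqrt{XY}\le(X+Y)/2$, which gives a rational-trigonometric expression linear in $\tan t$, $\cot t$ and $t$.
\item Bound the arctangent below by $\pi/2$, so that it would be enough to show $X+Y\le\pi$. This is likely false near the corner $m\to\pi/2$, where the coefficients $\pi-2m$ vanish, yet $X\to t$ and $Y\to t$ stay small. The bad regime is instead $t\to0$, where $Y\sim\sin^2m\cot t$ blows up.
\item In that regime, use that $\lambda$ itself grows: from $\lambda X\approx\pi$ and $X\approx t$ one gets $\lambda\sim\pi/t$, so $\lambda^2X\sim\pi^2/t$. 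Meanwhile $Y\sim2(\pi-2m)\sin^2m/\bigl((2m+\sin2m)\,t\bigr)$. The comparison then reduces to $\pi^2(2m+\sin2m)>2(\pi-2m)\sin^2m$, which holds easily.
\end{enumerate}
So I expect a case split: $t$ small relative to $m$, handled by the asymptotic comparison made quantitative, and $t$ comparable to $m$, handled by the AM--GM bound plus elementary estimates such as $\tan t\le\tan m$ and $t\le m$. Numerical verification on a grid would guide where to split.
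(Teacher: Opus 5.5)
Your reduction is correct and matches the last step of the paper's proof. Since $\Phi(0)<0$ and $\Phi$ is convex on $(0,\infty)$, the root satisfies $\lambda>\sqrt{Y/X}$, and hence $\lambda X>\sqrt{XY}>Y/\lambda$, as soon as
\[
\sqrt{XY}<\frac{\pi}{2}+\arctan\Bigl(\sqrt{Y/X}\,\frac{\tan t}{\tan^2 m}\Bigr)
\]
holds strictly. The paper phrases this step as strict decrease of $r\mapsto\bigl(\pi/2+\arctan(r\tan t/\tan^2m)\bigr)/r$. With your ``$\le$'' you would only get $\lambda X\ge Y/\lambda$.

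\textbf{The gap.} This inequality is the whole content of the lemma, and you do not prove it; you list strategies, and they do not survive the feature that makes it delicate. As $t\uparrow m$ one has $X,Y\to\pi-m$ and $\lambda\to1$, so both sides tend to $\pi-m$. Since $X'(m)=Y'(m)=1$, their $t$-derivatives at $t=m$ also agree (both equal $1$). So the inequality is tight to second order along the whole diagonal $t=m$, which is precisely your regime ``$t$ comparable to $m$''. Any estimate that loses at zeroth or first order in $m-t$, in the unfavourable direction, destroys it there. A numerical grid cannot certify a strict inequality that degenerates on a boundary edge.

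Concretely, your $\arctan u<u$ route reduces, as $t\uparrow m$, to $\pi-\tan m\ge\pi-m$, which is false. Lower-bounding the arctangent by a constant fails already at zeroth order.

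Your small-$t$ heuristic is also wrong. In fact $X\approx t\bigl(1+(\pi-2m)\sin^2m/(m-\sin m\cos m)\bigr)$, not $t$, and this coefficient blows up like $3\pi/(2m)$. Also $\lambda X\to\pi/2+\arctan(L/\tan^2m)$, where $L=\lim\lambda t$ solves a transcendental equation, not $\pi$ in general. So the limiting comparison is not the one you wrote, and its margin is only a bounded factor ($\lambda^2X/Y\to4/3$ as $m\to0$), not ``easy''.

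\textbf{What the paper does instead.} Its argument is exact on the diagonal. Set $a=m-\sin m\cos m$, $b=m+\sin m\cos m$, $P=t-\cos^2m\tan t$, $Q=t+\sin^2m\cot t$. Then $X=t+(\pi-2m)P/a$ and $Y=t+(\pi-2m)Q/b$. By monotonicity, $0<P<a$, $Q>b$, $PQ<ab$ and $t(P/a+Q/b)<2m$, each an equality at $t=m$; expanding gives $\sqrt{XY}<\pi-m$.

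If $\sqrt{XY}\le\pi/2$ the target is immediate. Otherwise the tangent-line bound for $\tan$ at $m$, again exact at $t=m$, reduces it to $XQ^2\le Yb^2$. This is proved by replacing $\pi-2m$ with a parameter $\mu$: $Y_\mu b^2-X_\mu Q^2$ is affine and increasing in $\mu$. If it were negative at $\mu=\pi-2m$, its zero $\mu_*>\pi-2m$ would give $\sqrt{X_{\mu_*}Y_{\mu_*}}=t(aQ-bP)/(ab-PQ)<\pi/2$, a bound proved via Wilker's inequality. That forces $\sqrt{XY}<\pi/2$, a contradiction.

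Without these ingredients, or substitutes of comparable sharpness, your proposal is a correct reduction plus a plan, not a proof.
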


\begin{proof}[Proof of \cref{lem:scalar_separation_bisector}]
We fix \(m\) and write \(a=m-\sin m\cos m\), \(b=m+\sin m\cos m\), \(P(t)=t-\cos^2m\tan t\), and \(Q(t)=t+\sin^2m\cot t\). Then \(a,b>0\), \(X=t+(\pi-2m)P(t)/a\), and \(Y=t+(\pi-2m)Q(t)/b\). We have \(X>0\) and \(Y>0\), since \(P(t)>t-\cos^2t\tan t=t-\sin t\cos t>0\) and \(Q(t)>0\).

We first collect three bounds on \(P\) and \(Q\). Since \(P'(t)=1-\cos^2m\sec^2t>0\) and \(Q'(t)=1-\sin^2m\csc^2t<0\) on \(0<t<m\), and since \(P(m)=a\) and \(Q(m)=b\), we have \(0<P(t)<a\) and \(Q(t)>b\). The product \(P(t)Q(t)\) is also increasing on \((0,m)\), because
\begin{align*}
    \frac{\diff}{\diff t}\bigl(P(t)Q(t)\bigr) &= \frac{\tan^2 m-\tan^2 t}{1+\tan^2 m} \left[ Q(t)-\frac{P(t)}{\tan^2 t} \right] = \frac{\tan^2 m-\tan^2 t}{1+\tan^2 m} \frac{\tan t-t+t\tan^2 t}{\tan^2 t} >0.
\end{align*}
Hence \(P(t)Q(t)<P(m)Q(m)=ab\). Finally, the function \(t(P(t)/a+Q(t)/b)\) equals \(2m\) at \(t=m\), and we claim that it is increasing on \((0,m)\), so that \(t(P(t)/a+Q(t)/b)<2m\). Its derivative is
\begin{equation}\label{eq:scalar_bisector_tPQ_derivative}
    \frac{\diff}{\diff t}\left[t\left(\frac{P(t)}{a}+\frac{Q(t)}{b}\right)\right] = \frac{2t-\cos^2m(\tan t+t\sec^2t)}{a} + \frac{2t+\sin^2m(\cot t-t\csc^2t)}{b}.
\end{equation}
For the first numerator in \eqref{eq:scalar_bisector_tPQ_derivative}, the inequalities $0<t<m<\pi/2$ give $\cos^2m<\cos^2t$. Since $\tan t+t\sec^2t>0$, it follows that
\[
\begin{aligned}
    2t-\cos^2m(\tan t+t\sec^2t) &>2t-\cos^2t(\tan t+t\sec^2t)=t-\sin t\cos t>0.
\end{aligned}
\]

For the second numerator, the inequality $t-\sin t\cos t>0$ gives \(\cot t-t\csc^2t=(\sin t\cos t-t)/\sin^2t<0\). Since $0<\sin^2m<1$, multiplying this negative quantity by $\sin^2m$ makes it strictly larger. Therefore
\[
    2t+\sin^2m(\cot t-t\csc^2t) > 2t+\cot t-t\csc^2t = \frac{\sin t\cos t-t\cos 2t}{\sin^2t} >0.
\]
For the last inequality, the numerator vanishes at $t=0$ and its derivative is $2t\sin 2t>0$ on $(0,\pi/2)$. Both numerators in \eqref{eq:scalar_bisector_tPQ_derivative} are therefore positive, which proves the claim.

Expanding \(XY\) and using \(P(t)Q(t)<ab\) and \(t(P(t)/a+Q(t)/b)<2m\), we obtain
\begin{align*}
    (\pi-m)^2-XY &= (\pi-m)^2 - \left(t+(\pi-2m)\frac{P(t)}{a}\right) \left(t+(\pi-2m)\frac{Q(t)}{b}\right)\\
    &= m^2-t^2 + (\pi-2m)\left[ 2m - t\left(\frac{P(t)}{a}+\frac{Q(t)}{b}\right) \right]+ (\pi-2m)^2\left(1-\frac{P(t)Q(t)}{ab}\right)\\
    &>0.
\end{align*}
Therefore \(\sqrt{XY}<\pi-m\).

We also need the bound
\begin{equation}\label{eq:scalar_bisector_ratio_bound}
    \frac{t(aQ(t)-bP(t))}{ab-P(t)Q(t)}<\frac{\pi}{2},
\end{equation}
in which the denominator is positive and so is the numerator, since \(aQ(t)-bP(t)>ab-bP(t)=b(a-P(t))>0\). Because \(b=m+\frac{1}{2}\sin 2m<\pi/2\), it is enough to show that the left-hand side is less than \(b\), that is,
\[
    b(ab-P(t)Q(t))-t(aQ(t)-bP(t))>0.
\]
A direct expansion gives
\begin{equation}\label{eq:scalar_bisector_positive_expansion}
    b(ab-P(t)Q(t))-t(aQ(t)-bP(t)) = bm^2-at^2-2m\sin^2m\,t\cot t.
\end{equation}
At \(t=m\), the right-hand side of \eqref{eq:scalar_bisector_positive_expansion} is zero. Its derivative with respect to \(t\) is
\begin{equation}\label{eq:scalar_bisector_positive_derivative}
    2t\left[ m\sin^2m\,\frac{t-\sin t\cos t}{t\sin^2t} - a \right].
\end{equation}
For \(0<u<\pi/2\), the function \(u\mapsto (u-\sin u\cos u)/(u\sin^2u)\) is strictly increasing, since
\[
    \frac{\diff}{\diff u} \left[ \frac{u-\sin u\cos u}{u\sin^2u} \right] = \frac{\cos u}{\sin^3u} \left[ \frac{\tan u}{u} + \left(\frac{\sin u}{u}\right)^2 -2 \right] >0
\]
by Wilker's inequality \citep{sumner1991inequalities}. Since \(t<m\), the bracketed factor \(m\sin^2m\,(t-\sin t\cos t)/(t\sin^2t)-a\) in \eqref{eq:scalar_bisector_positive_derivative} is strictly less than \(m\sin^2m\,\frac{m-\sin m\cos m}{m\sin^2m}-a=0\). Thus the derivative in \eqref{eq:scalar_bisector_positive_derivative} is negative on \((0,m)\). Since the right-hand side of \eqref{eq:scalar_bisector_positive_expansion} vanishes at \(t=m\), it is positive for \(0<t<m\), which proves \eqref{eq:scalar_bisector_ratio_bound}.

The conclusion of the lemma will follow from the inequality
\begin{equation}\label{eq:scalar_bisector_arctan_goal}
    \sqrt{XY} + \arctan\left( \frac{\tan^2 m}{\tan t}\sqrt{\frac{X}{Y}} \right) <\pi .
\end{equation}
If \(\sqrt{XY}\le \pi/2\), this holds because the arctangent is less than \(\pi/2\). We may therefore assume \(\sqrt{XY}>\pi/2\). Since \(\sqrt{XY}<\pi-m\), we then have \(m<\pi-\sqrt{XY}<\pi/2\). The tangent function is strictly convex on \((0,\pi/2)\), so \(\tan u>\tan m+\sec^2m\,(u-m)\) for every \(u\in(0,\pi/2)\) with \(u\neq m\). Taking \(u=\pi-\sqrt{XY}\) and using \(\sec^2m=1+\tan^2 m\) gives \(\tan(\pi-\sqrt{XY})>\tan m+(1+\tan^2 m)(\pi-\sqrt{XY}-m)\). Hence \eqref{eq:scalar_bisector_arctan_goal} follows once we show
\begin{equation}\label{eq:scalar_bisector_tangent_target}
    \frac{\tan^2 m}{\tan t}\sqrt{\frac{X}{Y}} \le \tan m+(1+\tan^2 m)(\pi-\sqrt{XY}-m).
\end{equation}
Multiplying this inequality by \(\cos^2m\) and using \(\sqrt{XY}=Y\sqrt{X/Y}\), we can rewrite it as \(\sqrt{X/Y}\,(Y+\sin^2m\cot t)\le\pi-m+\sin m\cos m\). Since \(Y+\sin^2m\cot t\) equals \(t+\sin^2m\cot t\) times \((\pi-m+\sin m\cos m)/(m+\sin m\cos m)\), this inequality is equivalent to
\begin{equation}\label{eq:scalar_bisector_core_ineq}
    X\bigl(t+\sin^2m\cot t\bigr)^2 \le Y(m+\sin m\cos m)^2.
\end{equation}
To prove \eqref{eq:scalar_bisector_core_ineq} when \(\sqrt{XY}>\pi/2\), we fix \(t\) and \(m\) and set, for \(\mu\ge \pi-2m\), \(X_\mu=t+\mu P(t)/a\) and \(Y_\mu=t+\mu Q(t)/b\). At \(\mu=\pi-2m\), these are the original quantities \(X\) and \(Y\), and \eqref{eq:scalar_bisector_core_ineq} reads \(Y_{\pi-2m}b^2-X_{\pi-2m}Q(t)^2\geq0\). The expression \(Y_\mu b^2-X_\mu Q(t)^2\) is affine in \(\mu\), with slope
\[
    \frac{\diff}{\diff \mu} \left( Y_\mu b^2-X_\mu Q(t)^2 \right) = \frac{Q(t)(ab-P(t)Q(t))}{a} >0.
\]
Suppose that \eqref{eq:scalar_bisector_core_ineq} fails. Then this affine expression is negative at \(\mu=\pi-2m\), so it vanishes at exactly one \(\mu_*>\pi-2m\), where \(X_{\mu_*}Q(t)^2=Y_{\mu_*}b^2\). Solving this equation gives \(\mu_*=ta\bigl(Q(t)^2-b^2\bigr)/\bigl(Q(t)(ab-P(t)Q(t))\bigr)\), and therefore
\[
    \sqrt{X_{\mu_*}Y_{\mu_*}} = \frac{b}{Q(t)}\,Y_{\mu_*} = \frac{tb}{Q(t)}+\mu_* = t\,\frac{aQ(t)-bP(t)}{ab-P(t)Q(t)} <\frac{\pi}{2}
\]
by \eqref{eq:scalar_bisector_ratio_bound}. Since \(X_\mu\) and \(Y_\mu\) increase with \(\mu\), this gives \(\sqrt{XY}<\sqrt{X_{\mu_*}Y_{\mu_*}}<\pi/2\), contrary to our assumption. Hence \eqref{eq:scalar_bisector_core_ineq} holds, and so does \eqref{eq:scalar_bisector_tangent_target}. Together with the tangent-line bound, it gives \((\tan^2m/\tan t)\sqrt{X/Y}<\tan(\pi-\sqrt{XY})\), and since \(0<\pi-\sqrt{XY}<\pi/2\), taking arctangents proves \eqref{eq:scalar_bisector_arctan_goal}.

We now derive the lemma from \eqref{eq:scalar_bisector_arctan_goal}. Since \(\arctan r+\arctan(1/r)=\pi/2\) for \(r>0\), the inequality \eqref{eq:scalar_bisector_arctan_goal} can be written as
\[
    \sqrt{XY} < \frac{\pi}{2} + \arctan\left(\frac{\tan t}{\tan^2 m}\sqrt{\frac{Y}{X}}\right),
\]
and dividing by \(\sqrt{Y/X}\) gives
\begin{equation}\label{eq:scalar_bisector_quotient_lower}
    \frac{ \frac{\pi}{2} + \arctan\big(\frac{\tan t}{\tan^2 m}\sqrt{\frac{Y}{X}}\big) }{ \sqrt{Y/X} } >X.
\end{equation}
For fixed \(t\) and \(m\), the function \(r\mapsto\bigl(\pi/2+\arctan(r\tan t/\tan^2 m)\bigr)/r\) on \(r>0\) is strictly decreasing, because its derivative is
\[
    \frac{ \frac{r\tan t/\tan^2 m}{1+(r\tan t/\tan^2 m)^2} - \left(\frac{\pi}{2}+\arctan(r\tan t/\tan^2 m)\right) }{r^2}<0.
\]
By the hypothesis of the lemma, this function takes the value \(X\) at \(r=\lambda\), while by \eqref{eq:scalar_bisector_quotient_lower} its value at \(r=\sqrt{Y/X}\) is larger than \(X\). Hence \(\lambda>\sqrt{Y/X}\), and consequently \(\lambda X>\sqrt{XY}>Y/\lambda\).

\end{proof}

\subsubsection{Mirror symmetry}

We now use \cref{lem:scalar_separation_bisector} to show that the two students are mirror images across the teacher plane.

\begin{lemma}[Out-of-plane symmetry]\label{lem:perp_plane_symmetry}
Let $\xi$ be any unit vector orthogonal to $\operatorname{span}\{v_1,v_2\}$. Assume $w_1,w_2\neq0$ satisfy \eqref{eq:critical_eq}, lie in $\operatorname{span}\{\bar v,\xi\}$, and do not both lie in the teacher plane. Then, after possibly replacing $\xi$ by $-\xi$, there exist $r>0$ and $\kappa\in(0,\pi/2)$ such that
\[
    w_1=r(\cos\kappa\,\bar v+\sin\kappa\,\xi), \qquad w_2=r(\cos\kappa\,\bar v-\sin\kappa\,\xi).
\]
\end{lemma}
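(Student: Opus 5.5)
The plan is to reuse the structure already extracted in the proof of \cref{lem:3Dsym}, then reduce the remaining asymmetric possibility to \cref{lem:scalar_separation_bisector}.

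\textbf{Step 1: the ansatz.} Write $\bar w_i=\cos\kappa_i\,\bar v+\sin\kappa_i\,\xi$ with $\kappa_i\in(-\pi,\pi]$. As in that proof, $0<\gamma<\pi$ and $\eta_1\eta_2\neq0$. With $t=\eta_2/\eta_1<0$ the $\xi$-coordinates give
\[
a_1=(\pi-\gamma)\|w_2\|_2t,\qquad a_2=(\pi-\gamma)\|w_1\|_2/t .
\]
Since $q_i=0$, the teacher angles are $\theta_{i1}=\theta_{i2}=\arccos(\cos(\alpha/2)\cos\kappa_i)$. Each $T_i$ is therefore a positive multiple of $\bar v$, namely $T_i=2\cos(\alpha/2)(\pi-\theta_i)\bar v$, where $\theta_i$ denotes this common angle. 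The $\xi$-signs have opposite orientation, so after replacing $\xi$ by $-\xi$ we may take $\kappa_1\in(0,\pi)$ and $\kappa_2\in(-\pi,0)$.

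\textbf{Step 2: fix the $\bar v$-signs.} The first identity in \eqref{eq:three-dimensional-coordinate-identities} shows that at least one $p_i>0$. I would then rule out $p_i\le 0$ for the other student using the norm equation (the $\bar w_i$-component of \eqref{eq:critical_eq}). The aim is to reduce to $\kappa_1\in(0,\pi/2)$ and $\kappa_2=-\kappa_2'$ with $\kappa_2'\in(0,\pi/2)$, so that $\gamma=\kappa_1+\kappa_2'$. This will require some case work, plausibly using the bound $\pi-\gamma>0$ and the sign of $a_i$.

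\textbf{Step 3: reduce to two scalar equations.} Take the components of \eqref{eq:critical_eq} along $\bar w_i$ and $\bar w_i^\perp$ inside $\operatorname{span}\{\bar v,\xi\}$. This turns each equation into a radial relation and an angular relation between $\|w_1\|_2,\|w_2\|_2,\kappa_1,\kappa_2'$. Eliminating the radii should leave two identities in $(\kappa_1,\kappa_2')$ and the norm ratio $\lambda=\|w_1\|_2/\|w_2\|_2$ (or its square root).

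Suppose the configuration is not symmetric. Without loss of generality, set $t=\kappa_2'<m=\kappa_1$, possibly after exchanging students and using the midpoint angle. The two identities should take exactly the form of the hypothesis of \cref{lem:scalar_separation_bisector} together with an equality $\lambda X=Y/\lambda$ coming from the other equation. Here the $\arctan$ term arises from $\pi-\gamma$ written through the angle data, and the quantities $2m\mp\sin 2m$ arise from the ReLU kernel radial terms. \cref{lem:scalar_separation_bisector} then gives $\lambda X>Y/\lambda$, a contradiction.

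So $\kappa_1=\kappa_2'=\kappa$, and hence $\theta_1=\theta_2$. The equations for $i=1,2$ then become interchangeable under $\xi\mapsto-\xi$, which forces $\|w_1\|_2=\|w_2\|_2=r$ (for instance, subtracting the two radial equations gives $(\|w_1\|_2-\|w_2\|_2)(\pi-\sin\gamma)=0$). Finally, $\kappa\in(0,\pi/2)$ follows from Step 2.

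\textbf{Main obstacle.} The hardest step is matching the algebra in Step 3 precisely to the variables $X,Y,\lambda$ of \cref{lem:scalar_separation_bisector}. I would first try parametrizing by the half-sum $m$ and the tilt of the bisector of $\bar w_1,\bar w_2$ relative to $\bar v$, because the $\tan t/\tan^2 m$ ratio suggests a rotation about the students' own bisector rather than about $\bar v$.
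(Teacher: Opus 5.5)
Your outline has the same skeleton as the paper's proof: opposite $\xi$-signs, a contradiction from \cref{lem:scalar_separation_bisector} in the asymmetric case, then equal norms. But Step~3, where essentially all the work lies, is only asserted, and the dictionary you guess for it is wrong. So there is as yet no argument connecting \eqref{eq:critical_eq} to that lemma.

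Write $\bar w_1=\cos p\,\bar v+\sin p\,\xi$ and $\bar w_2=\cos q\,\bar v-\sin q\,\xi$, so that $\cos\theta_1=\cos(\alpha/2)\cos p$ and $\cos\theta_2=\cos(\alpha/2)\cos q$.

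\begin{itemize}
\item The projections orthogonal to $\bar w_i$ (inside $\operatorname{span}\{\bar v,\xi\}$) give $\|w_2\|_2$ and $\|w_1\|_2$ explicitly in terms of angles. Substituting them into the two $\bar w_i$-projections eliminates the radii entirely, so no norm ratio survives.
\item If $p>q$, subtracting and adding the resulting two identities gives $\pi-\frac{\theta_1+\theta_2}{2}=\lambda X$ and $\pi-\frac{\theta_1+\theta_2}{2}=Y/\lambda$. Here $t=\frac{\theta_1-\theta_2}{2}$, $m=\frac{p+q}{2}$ and $\lambda=\tan\frac{p+q}{2}/\tan\frac{p-q}{2}$.
\item In particular the factors $2m\mp\sin 2m$ must equal $\gamma\mp\sin\gamma$. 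So $m$ is $\gamma/2$, not $\kappa_1$, and $t$ is a half-difference of teacher angles, not $\kappa_2'$.
\item The $\arctan$ in the lemma's hypothesis does not come from $\pi-\gamma$. Adding and subtracting the two cosine relations gives $\tan\frac{\theta_1+\theta_2}{2}\tan\frac{\theta_1-\theta_2}{2}=\tan\frac{p+q}{2}\tan\frac{p-q}{2}$. Hence $\pi-\frac{\theta_1+\theta_2}{2}=\frac{\pi}{2}+\arctan(\lambda\tan t/\tan^2 m)$.
\item You also need $0<t<m<\pi/2$, which is not automatic. It uses that $\kappa\mapsto\arccos(\cos(\alpha/2)\cos\kappa)$ has derivative in $(0,1)$, so the mean value theorem gives $0<\theta_1-\theta_2<p-q$.
\end{itemize}

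Your closing instinct to use the half-sum and the tilt $(p-q)/2$ points the right way. The missing idea is the half-angle identity that ties these to the teacher angles.

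Step~2 is also left open, and it aims at more than is needed. You do not have to prove $|\kappa_i|<\pi/2$ in advance. The argument only needs $\gamma=p+q$, i.e.\ $p+q<\pi$, and this follows at once from your Step~1. The first identity in \eqref{eq:three-dimensional-collinearity} reads $\bar w_2=(\eta_2/\eta_1)\bar w_1+c\,\bar v$ with $\eta_2/\eta_1<0<c$. Comparing coordinates gives $\sin(p+q)=c\sin p>0$. The paper gets the same sign by projecting each equation onto the direction orthogonal to $\bar w_i$ in $\operatorname{span}\{\bar v,\xi\}$. Then $p<\pi/2$ drops out at the end from $p=q$ and $p+q<\pi$.

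One small slip in the last step: subtracting the $\bar w_i$-components gives the coefficient $\pi-\sin\gamma-(\pi-\gamma)\cos\gamma$, not $\pi-\sin\gamma$. It is still positive for $0<\gamma<\pi$, so the equal-norm conclusion stands.
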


\begin{proof}[Proof of \cref{lem:perp_plane_symmetry}]
We use coordinates with respect to the orthonormal basis \(\bar v,v_\Delta,\xi\), so that \(v_1=(\cos(\alpha/2),\sin(\alpha/2),0)\) and \(v_2=(\cos(\alpha/2),-\sin(\alpha/2),0)\). Since \(w_1,w_2\in\operatorname{span}\{\bar v,\xi\}\), we can write \(\bar w_i=(x_i,0,z_i)\). Then \(\langle\bar w_i,v_1\rangle=\langle\bar w_i,v_2\rangle\), so each student makes the same angle with both teachers; we denote this angle by \(\theta_i\). The system \eqref{eq:critical_eq} becomes
\begin{equation}\label{eq:perp_analytic_critical_eq}
    (\pi-\theta_i)(v_1+v_2) + \bigl(2\sin\theta_i-\pi \|w_i\|_2-\sin\gamma\,\|w_j\|_2\bigr)\bar w_i - (\pi-\gamma)\|w_j\|_2\bar w_j=0, \qquad \{i,j\}=\{1,2\}.
\end{equation}
We first show that \(z_1z_2\neq0\). By hypothesis, \(z_1\) and \(z_2\) are not both zero. If \(z_1=0\) and \(z_2\neq0\), then \(\bar w_1\) and \(\bar w_2\) are neither parallel nor antiparallel, so \(0<\gamma<\pi\), and the third coordinate of \eqref{eq:perp_analytic_critical_eq} for \(i=1\) gives \((\pi-\gamma)\|w_2\|_2z_2=0\), a contradiction. The case \(z_2=0\) is the same.

We next show that \(0<\gamma<\pi\). If \(\gamma\) were \(0\) or \(\pi\), then \(\bar w_2=\pm\bar w_1\), and \eqref{eq:perp_analytic_critical_eq} for \(i=1\) would express \((\pi-\theta_1)(v_1+v_2)\), a nonzero multiple of \(\bar v\), as a multiple of \(\bar w_1\). This is impossible because \(z_1\neq0\).

To remove the term in \(\bar w_i\) from the \(i\)th equation, we take its inner product with \((z_i,0,-x_i)\), which is orthogonal to \(\bar w_i\). This gives
\begin{align}
    2\cos(\alpha/2)(\pi-\theta_1)z_1 +(\pi-\gamma)\|w_2\|_2(x_1z_2-z_1x_2)&=0,\notag\\
    2\cos(\alpha/2)(\pi-\theta_2)z_2 -(\pi-\gamma)\|w_1\|_2(x_1z_2-z_1x_2)&=0.
\label{eq:perp_analytic_determinants}
\end{align}
The coefficients \(2\cos(\alpha/2)(\pi-\theta_i)\) and \((\pi-\gamma)\|w_j\|_2\) are positive. The first identity therefore gives \(\operatorname{sgn}(x_1z_2-z_1x_2)=-\operatorname{sgn}z_1\), and the second gives \(\operatorname{sgn}(x_1z_2-z_1x_2)=\operatorname{sgn}z_2\), so \(z_1\) and \(z_2\) have opposite signs. Replacing \(\xi\) by \(-\xi\) if necessary, we may assume \(z_1>0>z_2\) and write \(\bar w_1=(\cos p,0,\sin p)\) and \(\bar w_2=(\cos q,0,-\sin q)\) with \(p,q\in(0,\pi)\). Then \(x_1z_2-z_1x_2=-\sin(p+q)\), which is negative by the first identity in \eqref{eq:perp_analytic_determinants}. Hence \(p+q<\pi\) and \(\gamma=p+q\), and \eqref{eq:perp_analytic_determinants} becomes
\begin{align}
    (\pi-p-q)\sin(p+q)\|w_2\|_2 &= 2\cos(\alpha/2) (\pi-\theta_1)\sin p,\notag\\
    (\pi-p-q)\sin(p+q)\|w_1\|_2 &= 2\cos(\alpha/2) (\pi-\theta_2)\sin q.
\label{eq:perp_analytic_tangent_projection}
\end{align}

The teacher angles satisfy \(\cos\theta_1=\cos(\alpha/2)\cos p\) and \(\cos\theta_2=\cos(\alpha/2)\cos q\). Taking the inner product of the \(i\)th equation in \eqref{eq:perp_analytic_critical_eq} with \(\bar w_i\) gives
\begin{align}
    \pi \|w_1\|_2 +\bigl(\sin(p+q)+(\pi-p-q)\cos(p+q)\bigr)\|w_2\|_2 &= 2\bigl(\sin\theta_1+(\pi-\theta_1)\cos\theta_1\bigr),\notag\\
    \bigl(\sin(p+q)+(\pi-p-q)\cos(p+q)\bigr)\|w_1\|_2+\pi \|w_2\|_2 &= 2\bigl(\sin\theta_2+(\pi-\theta_2)\cos\theta_2\bigr).
\label{eq:perp_analytic_inner_products}
\end{align}
Eliminating \(\|w_1\|_2,\|w_2\|_2\) from \eqref{eq:perp_analytic_inner_products} and \eqref{eq:perp_analytic_tangent_projection} gives
\begin{align}
&(\pi-p-q)\sin(p+q) \bigl(\sin\theta_1+(\pi-\theta_1)\cos\theta_1\bigr) \notag\\
&\qquad= \bigl(\sin(p+q)+(\pi-p-q)\cos(p+q)\bigr) \cos(\alpha/2)(\pi-\theta_1)\sin p +\pi\cos(\alpha/2)(\pi-\theta_2)\sin q,
\label{eq:perp_analytic_scalar_1}
\end{align}
\begin{align}
&(\pi-p-q)\sin(p+q) \bigl(\sin\theta_2+(\pi-\theta_2)\cos\theta_2\bigr) \notag\\
&\qquad= \pi\cos(\alpha/2)(\pi-\theta_1)\sin p +\bigl(\sin(p+q)+(\pi-p-q)\cos(p+q)\bigr) \cos(\alpha/2)(\pi-\theta_2)\sin q.
\label{eq:perp_analytic_scalar_2}
\end{align}

We now show that \eqref{eq:perp_analytic_scalar_1} and \eqref{eq:perp_analytic_scalar_2} force \(p=q\). Suppose instead that \(p\neq q\); by symmetry we may assume \(p>q\), so that
\begin{equation}\label{eq:perp_analytic_pq_range}
    0<\frac{p-q}{2}<\frac{p+q}{2}<\frac{\pi}{2}.
\end{equation}
The formula \eqref{eq:out-of-plane-angle} for \(\theta_\alpha(\kappa)\) makes sense for every \(\kappa\in(0,\pi)\), and \(\theta_1=\theta_\alpha(p)\), \(\theta_2=\theta_\alpha(q)\). Direct differentiation gives
\[
    \theta_\alpha'(\kappa) = \frac{\cos(\alpha/2)\sin\kappa} {\sqrt{1-\cos^2(\alpha/2)\cos^2\kappa}}>0, \qquad 1-\theta_\alpha'(\kappa)^2 = \frac{\sin^2(\alpha/2)} {1-\cos^2(\alpha/2)\cos^2\kappa}>0,
\]
so \(0<\theta_\alpha'(\kappa)<1\) on \((0,\pi)\). By the mean value theorem, there exists $\zeta\in(q,p)$ such that \(\theta_1-\theta_2=\theta_\alpha(p)-\theta_\alpha(q)=\theta_\alpha'(\zeta)(p-q)\). Since $0<\theta_\alpha'(\zeta)<1$ and $p>q$, it follows that
\begin{equation}\label{eq:perp_analytic_teacher_angle_difference_range}
    0<\frac{\theta_1-\theta_2}{2}<\frac{p-q}{2}.
\end{equation}
Moreover, \(\cos\theta_1+\cos\theta_2=\cos(\alpha/2)(\cos p+\cos q)>0\) because \(p+q<\pi\), so \((\theta_1+\theta_2)/2<\pi/2\).

Adding and subtracting the identities \(\cos\theta_1=\cos(\alpha/2)\cos p\) and \(\cos\theta_2=\cos(\alpha/2)\cos q\), we obtain
\begin{equation}\label{eq:perp_analytic_cos_sin_split}
    \begin{aligned}
    \cos\frac{\theta_1+\theta_2}{2} \cos\frac{\theta_1-\theta_2}{2} &= \cos(\alpha/2) \cos\frac{p+q}{2}\cos\frac{p-q}{2},\\
    \sin\frac{\theta_1+\theta_2}{2} \sin\frac{\theta_1-\theta_2}{2} &= \cos(\alpha/2) \sin\frac{p+q}{2}\sin\frac{p-q}{2}.
    \end{aligned}
\end{equation}
Hence
\[
    \tan\frac{\theta_1+\theta_2}{2} \tan\frac{\theta_1-\theta_2}{2} = \tan\frac{p+q}{2}\tan\frac{p-q}{2}.
\]
Since \(0<(\theta_1+\theta_2)/2<\pi/2\), this gives
\begin{equation}\label{eq:perp_analytic_E_geometric}
    \pi-\frac{\theta_1+\theta_2}{2} = \frac{\pi}{2} + \arctan\left( \frac{\tan\frac{\theta_1-\theta_2}{2}} {\tan\frac{p+q}{2}\tan\frac{p-q}{2}} \right).
\end{equation}

We now subtract \eqref{eq:perp_analytic_scalar_2} from \eqref{eq:perp_analytic_scalar_1} and write \(p,q,\theta_1,\theta_2\) in terms of the half-sums \((p+q)/2\), \((\theta_1+\theta_2)/2\) and the half-differences \((p-q)/2\), \((\theta_1-\theta_2)/2\). This gives
\begin{align*}
&(\pi-p-q)\sin(p+q) \Bigl[ 2\cos\frac{\theta_1+\theta_2}{2} \sin\frac{\theta_1-\theta_2}{2} -2\left(\pi-\frac{\theta_1+\theta_2}{2}\right) \sin\frac{\theta_1+\theta_2}{2} \sin\frac{\theta_1-\theta_2}{2} \\
&\qquad -(\theta_1-\theta_2) \cos\frac{\theta_1+\theta_2}{2} \cos\frac{\theta_1-\theta_2}{2} \Bigr] \\
&\quad = \cos(\alpha/2) \bigl(\sin(p+q)+(\pi-p-q)\cos(p+q)-\pi\bigr) \Bigl[ 2\left(\pi-\frac{\theta_1+\theta_2}{2}\right) \cos\frac{p+q}{2}\sin\frac{p-q}{2} \\
&\qquad -(\theta_1-\theta_2) \sin\frac{p+q}{2}\cos\frac{p-q}{2} \Bigr].
\end{align*}
Using \eqref{eq:perp_analytic_cos_sin_split} in this identity and collecting the terms containing \(\pi-(\theta_1+\theta_2)/2\), we get
\[
\begin{aligned}
&\left(\pi-\frac{\theta_1+\theta_2}{2}\right) \bigl(p+q-\sin(p+q)\bigr) \cos\frac{p+q}{2}\tan\frac{p-q}{2} \\
&\quad = \sin\frac{p+q}{2} \biggl[ \bigl(p+q-\sin(p+q)\bigr) \frac{\theta_1-\theta_2}{2} +2(\pi-p-q) \left( \frac{\theta_1-\theta_2}{2} -\cos^2\frac{p+q}{2}\tan\frac{\theta_1-\theta_2}{2} \right) \biggr].
\end{aligned}
\]
Therefore
\begin{equation}\label{eq:perp_analytic_E_difference}
\begin{aligned}
    \pi-\frac{\theta_1+\theta_2}{2} &= \tan\frac{p+q}{2}\cot\frac{p-q}{2} \biggl[ \frac{\theta_1-\theta_2}{2} + \frac{2(\pi-p-q)}{p+q-\sin(p+q)} \left( \frac{\theta_1-\theta_2}{2} -\cos^2\frac{p+q}{2}\tan\frac{\theta_1-\theta_2}{2} \right) \biggr].
\end{aligned}
\end{equation}
Adding \eqref{eq:perp_analytic_scalar_1} and \eqref{eq:perp_analytic_scalar_2} instead, we obtain in the same way
\begin{align*}
&(\pi-p-q)\sin(p+q) \Bigl[ 2\sin\frac{\theta_1+\theta_2}{2} \cos\frac{\theta_1-\theta_2}{2} +2\left(\pi-\frac{\theta_1+\theta_2}{2}\right) \cos\frac{\theta_1+\theta_2}{2} \cos\frac{\theta_1-\theta_2}{2} \\
&\qquad +(\theta_1-\theta_2) \sin\frac{\theta_1+\theta_2}{2} \sin\frac{\theta_1-\theta_2}{2} \Bigr] \\
&\quad = \cos(\alpha/2) \bigl(\sin(p+q)+(\pi-p-q)\cos(p+q)+\pi\bigr) \Bigl[ 2\left(\pi-\frac{\theta_1+\theta_2}{2}\right) \sin\frac{p+q}{2}\cos\frac{p-q}{2} \\
&\qquad -(\theta_1-\theta_2) \cos\frac{p+q}{2}\sin\frac{p-q}{2} \Bigr].
\end{align*}
Using \eqref{eq:perp_analytic_cos_sin_split} again and collecting \(\pi-(\theta_1+\theta_2)/2\), we obtain
\[
\begin{aligned}
&\left(\pi-\frac{\theta_1+\theta_2}{2}\right) \bigl(p+q+\sin(p+q)\bigr) \sin\frac{p+q}{2}\cot\frac{p-q}{2} \\
&\quad = \cos\frac{p+q}{2} \biggl[ \bigl(p+q+\sin(p+q)\bigr) \frac{\theta_1-\theta_2}{2} +2(\pi-p-q) \left( \frac{\theta_1-\theta_2}{2} +\sin^2\frac{p+q}{2}\cot\frac{\theta_1-\theta_2}{2} \right) \biggr].
\end{aligned}
\]
Thus
\begin{equation}\label{eq:perp_analytic_E_sum}
\begin{aligned}
    \pi-\frac{\theta_1+\theta_2}{2} &= \cot\frac{p+q}{2}\tan\frac{p-q}{2} \biggl[ \frac{\theta_1-\theta_2}{2} + \frac{2(\pi-p-q)}{p+q+\sin(p+q)} \\
    &\qquad\qquad\qquad\quad \times \left( \frac{\theta_1-\theta_2}{2} +\sin^2\frac{p+q}{2}\cot\frac{\theta_1-\theta_2}{2} \right) \biggr].
\end{aligned}
\end{equation}

We now apply \cref{lem:scalar_separation_bisector} with \(t=(\theta_1-\theta_2)/2\), \(m=(p+q)/2\), and \(\lambda=\tan\frac{p+q}{2}\big/\tan\frac{p-q}{2}\). By \eqref{eq:perp_analytic_pq_range} and \eqref{eq:perp_analytic_teacher_angle_difference_range}, we have \(0<t<m<\pi/2\). In the notation of the lemma, \eqref{eq:perp_analytic_E_difference} says that \(\pi-(\theta_1+\theta_2)/2=\lambda X\), and \eqref{eq:perp_analytic_E_sum} says that \(\pi-(\theta_1+\theta_2)/2=Y/\lambda\). Combined with \eqref{eq:perp_analytic_E_geometric}, the first identity shows that \(\lambda\) satisfies the hypothesis of the lemma, which then gives \(\lambda X>Y/\lambda\). This contradicts \(\lambda X=\pi-(\theta_1+\theta_2)/2=Y/\lambda\). Hence \(p=q\).

Thus \(\bar w_1=(\cos p,0,\sin p)\) and \(\bar w_2=(\cos p,0,-\sin p)\) with \(0<p<\pi/2\), so that \(\theta_1=\theta_2\) and \(\gamma=2p\). It remains to show that \(\|w_1\|_2=\|w_2\|_2\). Dividing the third coordinates of \eqref{eq:perp_analytic_critical_eq} by \(\sin p\) gives
\[
    \begin{aligned}
    0 &= 2\sin\theta_1-\pi \|w_1\|_2+\bigl((\pi-\gamma)-\sin\gamma\bigr)\|w_2\|_2,\\
    0 &= 2\sin\theta_1-\pi \|w_2\|_2+\bigl((\pi-\gamma)-\sin\gamma\bigr)\|w_1\|_2.
    \end{aligned}
\]
Subtracting these two equations yields \((2\pi-\gamma-\sin\gamma)(\|w_1\|_2-\|w_2\|_2)=0\), and the coefficient is positive because \(0<\gamma<\pi\). Hence \(\|w_1\|_2=\|w_2\|_2\), and the lemma follows with \(r=\|w_1\|_2\) and \(\kappa=p\).

\end{proof}

\subsubsection{Exact scalar characterization}

By \cref{lem:perp_plane_symmetry}, an out-of-plane stationary point is determined by a common norm \(r\) and an angle \(\kappa\) from \(\bar v\). It remains to find the pairs \((r,\kappa)\) that give stationary points. We recall \(\theta_\alpha\), \(r_\alpha\), and \(F_\alpha\) from \eqref{eq:out-of-plane-angle}--\eqref{eq:out-of-plane-root-function}, and that \(\kappa_\alpha\) is the unique root of \(F_\alpha\) in \((0,\pi/2)\) by \cref{lem:out-of-plane-root-uniqueness}.

\begin{proof}[Proof of the characterization in \cref{lem:critical_point_perp_plane}]
By \cref{lem:perp_plane_symmetry}, after possibly swapping the students or replacing \(\xi\) by \(-\xi\), every pair in question has the form
\begin{equation}\label{eq:out-of-plane-mirror-form}
    w_1=r(\cos\kappa\,\bar v+\sin\kappa\,\xi), \qquad w_2=r(\cos\kappa\,\bar v-\sin\kappa\,\xi)
\end{equation}
for some \(r>0\) and \(0<\kappa<\pi/2\), and every such pair lies in \(\operatorname{span}\{\bar v,\xi\}\) outside the teacher plane. For the pair \eqref{eq:out-of-plane-mirror-form}, all four teacher--student angles equal \(\theta_\alpha(\kappa)\) by \eqref{eq:out-of-plane-angle}, and the angle between the students is \(\gamma=2\kappa\). We write
\[
\begin{aligned}
    R(r,\kappa) &= 2\sin\theta_\alpha(\kappa)-r\bigl(\sin(2\kappa)+2\kappa\bigr),\\
    G_\alpha(r,\kappa) &= \bigl(\pi-\theta_\alpha(\kappa)\bigr)\cos(\alpha/2) -(\pi-2\kappa)r\cos\kappa.
\end{aligned}
\]
Substituting \eqref{eq:out-of-plane-mirror-form} into \eqref{eq:critical_eq} reduces its two vector equations to
\begin{align*}
    \bigl(2G_\alpha(r,\kappa)+\cos\kappa\,R(r,\kappa)\bigr)\bar v + \sin\kappa\,R(r,\kappa)\xi &=0, \\
    \bigl(2G_\alpha(r,\kappa)+\cos\kappa\,R(r,\kappa)\bigr)\bar v - \sin\kappa\,R(r,\kappa)\xi &=0.
\end{align*}
Since \(\sin\kappa>0\), the pair is stationary if and only if
\[
    R(r,\kappa)=0, \qquad G_\alpha(r,\kappa)=0.
\]
Since \(\sin(2\kappa)+2\kappa>0\), the first equation means \(r=2\sin\theta_\alpha(\kappa)/(\sin(2\kappa)+2\kappa)=r_\alpha(\kappa)\). With this value of \(r\), the second equation reads
\[
    0 =G_\alpha(r_\alpha(\kappa),\kappa) =\bigl(\pi-\theta_\alpha(\kappa)\bigr)\cos(\alpha/2) -(\pi-2\kappa)r_\alpha(\kappa)\cos\kappa =F_\alpha(\kappa).
\]
By \cref{lem:out-of-plane-root-uniqueness}, the pair is therefore stationary if and only if \(\kappa=\kappa_\alpha\) and \(r=r_\alpha(\kappa_\alpha)\).
\end{proof}

\subsubsection{Negative curvature}

To finish the proof of \cref{lem:critical_point_perp_plane}, we move the two students toward different teachers, along the direction \((v_\Delta,-v_\Delta)\), and show that the loss has negative curvature in this direction.

\begin{proof}[Proof of the strict-saddle claim in \cref{lem:critical_point_perp_plane}]
Fix a stationary point \eqref{eq:out-of-plane-mirror-form} with \(\kappa=\kappa_\alpha\) and \(r=r_\alpha(\kappa_\alpha)\), and consider the perturbation \((v_\Delta,-v_\Delta)\). Since \(v_\Delta\) is orthogonal to \(\bar w_1\) and \(\bar w_2\), the blocks in \cref{lem:relu-two-neuron-hessian} give, for \(i=1,2\),
\begin{align*}
    v_\Delta^\top Q_1(w_i,w_{3-i})v_\Delta &= \frac{\sin(2\kappa)}{2\pi},\\
    v_\Delta^\top Q_1(w_i,v_j)v_\Delta &= \frac{\sin\theta_\alpha(\kappa)}{2\pi r} \left( 1+\frac{\sin^2(\alpha/2)}{\sin^2\theta_\alpha(\kappa)} \right), \qquad j\in\{1,2\},\\
    v_\Delta^\top Q_2(w_1,w_2)v_\Delta &= \frac{\pi-2\kappa}{2\pi}.
\end{align*}
Consequently,
\begin{align*}
    &(v_\Delta,-v_\Delta)^\top \nabla^2L(w_1,w_2) (v_\Delta,-v_\Delta)\\
    &\quad= 1+\frac{\sin(2\kappa)}{\pi} - \frac{2\sin\theta_\alpha(\kappa)}{\pi r} \left( 1+\frac{\sin^2(\alpha/2)}{\sin^2\theta_\alpha(\kappa)} \right) - \frac{\pi-2\kappa}{\pi}.
\end{align*}
Since \(r=r_\alpha(\kappa)\), \eqref{eq:out-of-plane-radius} gives \(2\sin\theta_\alpha(\kappa)/r=\sin(2\kappa)+2\kappa\), and therefore
\[
    (v_\Delta,-v_\Delta)^\top \nabla^2L(w_1,w_2) (v_\Delta,-v_\Delta) = - \frac{\sin(2\kappa)+2\kappa}{\pi} \frac{\sin^2(\alpha/2)}{\sin^2\theta_\alpha(\kappa)} <0.
\]
Thus every point of the out-of-plane family has a strictly negative Hessian direction.
\end{proof}
\subsection{In-plane stationary points}\label{subsec:in-plane-stationary-points}

In the teacher plane, each student is described by its norm and its angle from \(\bar v\), and stationarity reduces to four scalar equations. Solving them for \(0<\alpha\leq\alpha_\star\) proves \cref{lem:critical_points_in_plane}.

\subsubsection{Coordinates in the teacher plane}

For \(\varphi\in[-\pi,\pi)\), write \(e(\varphi)=\cos\varphi\,\bar v+\sin\varphi\,v_\Delta\). Its derivative is the positively oriented unit tangent \(e'(\varphi)=-\sin\varphi\,\bar v+\cos\varphi\,v_\Delta\). For $\varphi,\vartheta\in\RR$, define their circular distance by
\[
    d_{\mathbb S^1}(\varphi,\vartheta) \coloneqq \min_{k\in\mathbb{Z}}|\varphi-\vartheta+2\pi k| =\arccos\bigl(\cos(\varphi-\vartheta)\bigr) \in[0,\pi].
\]
With \(l\) as in \eqref{eq:relu-kernel-profile}, define
\begin{align*}
    \mathcal P(\varphi) &= 2\pi\Bigl( l\bigl(d_{\mathbb S^1}(\varphi,\alpha/2)\bigr) + l\bigl(d_{\mathbb S^1}(\varphi,-\alpha/2)\bigr) \Bigr), \\
    \mathcal P'(\varphi) &=-\bigl(\pi-d_{\mathbb S^1}(\varphi,\alpha/2)\bigr)\sin(\varphi-\alpha/2) -\bigl(\pi-d_{\mathbb S^1}(\varphi,-\alpha/2)\bigr)\sin(\varphi+\alpha/2).
\end{align*}
The function \(\mathcal P\) is \(2\pi\)-periodic, even, and continuously differentiable. In these coordinates, \eqref{eq:critical_eq} splits into a radial and a tangential equation for each student.

\begin{lemma}[Planar stationarity equations]\label{lem:planar-scalar-stationarity}
Let \(0<\alpha<\pi\), let \(w_1,w_2\in\operatorname{span}\{v_1,v_2\}\) be nonzero, and write
\[
    w_i=\|w_i\|_2\, e(\varphi_i), \qquad \varphi_i\in[-\pi,\pi), \quad i=1,2,
\]
and let \(\gamma=d_{\mathbb S^1}(\varphi_1,\varphi_2)\). Then \((w_1,w_2)\) is stationary if and only if
\begin{align}
    \pi \|w_1\|_2+2\pi l(\gamma)\|w_2\|_2 &= \mathcal P(\varphi_1),
    \label{eq:planar-radial-one}\\
    \pi \|w_2\|_2+2\pi l(\gamma)\|w_1\|_2 &= \mathcal P(\varphi_2),
    \label{eq:planar-radial-two}\\
    \mathcal P'(\varphi_1) &= (\pi-\gamma)\|w_2\|_2\sin(\varphi_2-\varphi_1),
    \label{eq:planar-tangential-one}\\
    \mathcal P'(\varphi_2) &= (\pi-\gamma)\|w_1\|_2\sin(\varphi_1-\varphi_2).
    \label{eq:planar-tangential-two}
\end{align}
If in addition \(0<\alpha\leq\alpha_\star\), then \(\mathcal P'(\varphi)<0\) for \(0<\varphi<\pi\) and \(\mathcal P'(\varphi)>0\) for \(-\pi<\varphi<0\). In particular, the zeros of \(\mathcal P'\) are exactly the integer multiples of \(\pi\).
\end{lemma}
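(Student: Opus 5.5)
The plan is to project the stationarity system \eqref{eq:critical_eq} onto the moving planar frame $\{e(\varphi_i),e'(\varphi_i)\}$ of each student. Since all vectors lie in $\operatorname{span}\{v_1,v_2\}=\operatorname{span}\{\bar v,v_\Delta\}$, each vector equation in \eqref{eq:critical_eq} is equivalent to its two scalar components along $e(\varphi_i)$ and $e'(\varphi_i)$.

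\textbf{Step 1 (the coordinate identities).} Write $v_1=e(\alpha/2)$ and $v_2=e(-\alpha/2)$. Then $\theta_{i1}=d_{\mathbb S^1}(\varphi_i,\alpha/2)$ and $\theta_{i2}=d_{\mathbb S^1}(\varphi_i,-\alpha/2)$. Using $\langle e(\varphi),e(\psi)\rangle=\cos(\varphi-\psi)$ and $\langle e'(\varphi),e(\psi)\rangle=\sin(\psi-\varphi)$, the teacher force $T(e(\varphi))$ has radial component $(\pi-\theta_{1})\cos\theta_1+(\pi-\theta_2)\cos\theta_2$, where $\theta_j=\angle(e(\varphi),v_j)$. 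Adding $\sin\theta_1+\sin\theta_2$ from $a_i$ gives exactly $2\pi(l(\theta_1)+l(\theta_2))=\mathcal P(\varphi)$. The tangential component of the teacher force is
\[
(\pi-\theta_1)\sin(\alpha/2-\varphi)+(\pi-\theta_2)\sin(-\alpha/2-\varphi)=\mathcal P'(\varphi).
\]
This also confirms that the displayed $\mathcal P'$ is the true derivative of $\mathcal P$. That follows from $l'(\theta)=-(\pi-\theta)\sin\theta/(2\pi)$ and $\frac{\diff}{\diff\varphi}d_{\mathbb S^1}(\varphi,\psi)\cdot\sin d_{\mathbb S^1}(\varphi,\psi)=\sin(\varphi-\psi)$ away from $d=0,\pi$. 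At those points $l'$ vanishes, which gives continuity and $C^1$.

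\textbf{Step 2 (the remaining terms).} The student terms are handled the same way. The radial component of $a_1\bar w_1-(\pi-\gamma)\|w_2\|_2\bar w_2$ is
\[
-\pi\|w_1\|_2-\bigl(\sin\gamma+(\pi-\gamma)\cos\gamma\bigr)\|w_2\|_2=-\pi\|w_1\|_2-2\pi l(\gamma)\|w_2\|_2,
\]
and its tangential component is $-(\pi-\gamma)\|w_2\|_2\sin(\varphi_2-\varphi_1)$. Here I use $\cos(\varphi_1-\varphi_2)=\cos\gamma$ from the definition of $d_{\mathbb S^1}$. Setting both components to zero yields \eqref{eq:planar-radial-one} and \eqref{eq:planar-tangential-one}, and symmetrically \eqref{eq:planar-radial-two} and \eqref{eq:planar-tangential-two}. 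This step is a routine calculation.

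\textbf{Step 3 (the sign of $\mathcal P'$).} I expect this to be the main step. By evenness of $\mathcal P$, $\mathcal P'$ is odd, so it suffices to treat $0<\varphi<\pi$. There, $\mathcal P'(\varphi)=-\langle T(e(\varphi)),e'(\varphi)\rangle$. For $0<\varphi<\pi/2$, Lemma~\ref{lem:teacher-force-azimuth} applies with $p=\cos\varphi>0$ and $q=\sin\varphi>0$, and gives $0<\langle T,v_\Delta\rangle/\langle T,\bar v\rangle<\tan\varphi$. Hence
\[
\langle T,e'(\varphi)\rangle=\cos\varphi\,\langle T,v_\Delta\rangle-\sin\varphi\,\langle T,\bar v\rangle<0.
\]
For $\pi/2\le\varphi<\pi$, the same lemma gives $\langle T,\bar v\rangle>0$ and $\langle T,v_\Delta\rangle>0$. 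Since $-\sin\varphi<0$ and $\cos\varphi\le0$, both terms of $\langle T,e'(\varphi)\rangle$ are nonpositive and the second is strictly negative. So in both cases $\mathcal P'(\varphi)>0$ fails; precisely, $\mathcal P'(\varphi)=-\langle T,e'\rangle$ has a fixed strict sign on $(0,\pi)$.

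The point needing care is this sign convention. I would double-check it at $\varphi$ slightly above $0$, where $T$ is close to $2(\pi-\alpha/2)\cos(\alpha/2)\bar v$. There $-\langle T,e'(\varphi)\rangle\approx \sin\varphi\cdot(\text{positive})>0$, which would give $\mathcal P'>0$ and contradict the claimed sign. I therefore expect a sign slip either in the formula for $e'$ or in the displayed $\mathcal P'$. Direct differentiation of $\mathcal P$ (the bisector maximizes $\mathcal P$ by Lemma~\ref{lem:l-two-teacher-angle-bounds}) gives $\mathcal P'<0$ on $(0,\pi)$. That is the stated conclusion, so I would establish the sign via $\mathcal P'(\varphi)=\langle T,e'(\varphi)\rangle$ with the orientation fixed accordingly.

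Finally, $\mathcal P'(0)=\mathcal P'(-\pi)=0$ holds by oddness and $2\pi$-periodicity. Hence the zeros of $\mathcal P'$ are exactly the integer multiples of $\pi$.
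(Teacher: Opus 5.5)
Your argument is correct, and for the sign of $\mathcal P'$ it takes a different route from the paper. First, remove the detour in Step~3: there is no sign slip in the paper's $e'(\varphi)=-\sin\varphi\,\bar v+\cos\varphi\,v_\Delta$ or in the displayed $\mathcal P'$. Your own Step~1 shows $\mathcal P'(\varphi)=+\langle T(e(\varphi)),e'(\varphi)\rangle$ with exactly this $e'$. The line $\mathcal P'(\varphi)=-\langle T,e'\rangle$ is your slip. With it corrected, your two computations give $\mathcal P'<0$ on $(0,\pi)$ with no reorientation: on $(0,\pi/2)$ you use the ratio bound of \cref{lem:teacher-force-azimuth}, and on $[\pi/2,\pi)$ you use the sign statements of that lemma.

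Also discard both sanity checks you offered in that detour. Near $\varphi=0$ the $v_\Delta$-component of $T$ equals $2\varphi\sin(\alpha/2)$ for $0\le\varphi\le\alpha/2$. This is of the same order as the $\bar v$-contribution $-\sin\varphi\,\langle T,\bar v\rangle$. So treating $T$ as a multiple of $\bar v$ misses exactly the competition that the condition $\alpha\le\alpha_\star$ decides; above $\alpha_\star$ the sign really does flip near $0$, cf.\ \cref{prop:extra-planar-stationary-families}. Likewise, maximality of $\mathcal P$ at the bisector is only local information at $\varphi=0$. Moreover, \cref{lem:l-two-teacher-angle-bounds} is proved only for $\alpha\le\pi/2$.

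The paper expands $\mathcal P'$ in closed form on $[0,\alpha/2]$, $[\alpha/2,\pi-\alpha/2]$ and $[\pi-\alpha/2,\pi]$ and checks each piece; the threshold inequality $\sin(\alpha/2)\le(\pi-\alpha/2)\cos(\alpha/2)$ enters only on the first piece. You instead note that in the plane $x=x_P=e(\varphi)$. Then the azimuth statement of \cref{lem:teacher-force-azimuth}, that the teacher force lies strictly closer to the bisector than $x$, is literally $\langle T,e'(\varphi)\rangle<0$.

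Your route avoids the case split, has the same range $0<\alpha\le\alpha_\star$, and makes the geometry transparent. However, for $\rho=1$ that lemma's proof reduces to the same trigonometric inequality the paper checks on $[0,\alpha/2]$. The paper's direct computation is self-contained. Its piecewise formulas are reused later, notably $\mathcal P'(\varphi)=2\sin\varphi\bigl(\varphi\cot\varphi\sin(\alpha/2)-(\pi-\alpha/2)\cos(\alpha/2)\bigr)$ on $[0,\alpha/2]$, which appears in Case~1 of \cref{lem:critical_points_in_plane} and in \cref{prop:extra-planar-stationary-families}.

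For the four stationarity equations, your projection of \eqref{eq:critical_eq} onto $e(\varphi_i),e'(\varphi_i)$ is essentially the paper's argument. Because you identify the tangential component with the displayed formula algebraically, the non-differentiability of $d_{\mathbb S^1}$ matters only for the claim that this formula is the true derivative of $\mathcal P$. You use that claim for oddness, but oddness can also be read off the formula directly.
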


\begin{proof}[Proof of \cref{lem:planar-scalar-stationarity}]
The closed-form loss \eqref{eq:closed-form-loss} gives
\[
    2\pi L = \frac{\pi}{2}(\|w_1\|_2^2+\|w_2\|_2^2) +2\pi\|w_1\|_2\|w_2\|_2\,l(\gamma) -\|w_1\|_2\mathcal P(\varphi_1) -\|w_2\|_2\mathcal P(\varphi_2) +\text{constant}.
\]
Differentiating with respect to \(\|w_1\|_2\) and \(\|w_2\|_2\) gives \eqref{eq:planar-radial-one} and \eqref{eq:planar-radial-two}.

For the tangential equations, we write \(B_i=(\pi-\theta_{i1})v_1+(\pi-\theta_{i2})v_2\). The direction equations \eqref{eq:direction-w1}--\eqref{eq:direction-w2} state that \((I-e(\varphi_i)e(\varphi_i)^\top)\bigl(B_i-(\pi-\gamma)w_{3-i}\bigr)=0\). Since \(e'(\varphi_i)\perp e(\varphi_i)\), taking the inner product with \(e'(\varphi_i)\) removes the projection and gives \(\langle e'(\varphi_i),B_i\rangle=(\pi-\gamma)\langle e'(\varphi_i),w_{3-i}\rangle\). To identify the left-hand side with \(\mathcal P'(\varphi_i)\), let \(a=\pm\alpha/2\). If \(\varphi\not\equiv a\) and \(\varphi\not\equiv a+\pi\) modulo \(2\pi\), then \(\cos d_{\mathbb S^1}(\varphi,a)=\cos(\varphi-a)\), and differentiating gives \(\sin d_{\mathbb S^1}(\varphi,a)\,\partial_\varphi d_{\mathbb S^1}(\varphi,a)=\sin(\varphi-a)\). Since \(2\pi l'(t)=-(\pi-t)\sin t\) by \eqref{eq:l-first-derivative} and \(\langle e'(\varphi),e(a)\rangle=-\sin(\varphi-a)\), the chain rule gives
\begin{align*}
    \frac{\partial}{\partial\varphi} 2\pi l\bigl(d_{\mathbb S^1}(\varphi,a)\bigr) &=2\pi l'\bigl(d_{\mathbb S^1}(\varphi,a)\bigr) \frac{\partial}{\partial\varphi}d_{\mathbb S^1}(\varphi,a) =-\bigl(\pi-d_{\mathbb S^1}(\varphi,a)\bigr)\sin(\varphi-a)\\
    &=\bigl(\pi-d_{\mathbb S^1}(\varphi,a)\bigr) \langle e'(\varphi),e(a)\rangle.
\end{align*}
With \(v_1=e(\alpha/2)\), \(v_2=e(-\alpha/2)\), \(\theta_{i1}=d_{\mathbb S^1}(\varphi_i,\alpha/2)\), and \(\theta_{i2}=d_{\mathbb S^1}(\varphi_i,-\alpha/2)\), it follows that
\[
    \mathcal P'(\varphi_i) =(\pi-\theta_{i1})\langle e'(\varphi_i),v_1\rangle +(\pi-\theta_{i2})\langle e'(\varphi_i),v_2\rangle =\langle e'(\varphi_i),B_i\rangle.
\]
Since \(\langle e'(\varphi_i),e(\varphi_{3-i})\rangle=\sin(\varphi_{3-i}-\varphi_i)\), this gives \eqref{eq:planar-tangential-one}--\eqref{eq:planar-tangential-two}.

The computation above assumes that \(e(\varphi_i)\neq\pm v_1,\pm v_2\), because \(\varphi\mapsto d_{\mathbb S^1}(\varphi,a)\) is not differentiable at \(\varphi\equiv a\) and \(\varphi\equiv a+\pi\). The composition \(\varphi\mapsto l(d_{\mathbb S^1}(\varphi,a))\) is nevertheless differentiable at these points. Near \(\varphi\equiv a\) the circular distance is \(|\varphi-a|\), and near \(\varphi\equiv a+\pi\) it is \(\pi-|\varphi-a-\pi|\). Since the formula \eqref{eq:relu-kernel-profile} for \(l\) extends smoothly to \(\RR\) and \(l'(0)=l'(\pi)=0\) by \eqref{eq:l-first-derivative}, both one-sided derivatives vanish. Hence the formula for \(\mathcal P'(\varphi_i)\) holds for every \(\varphi_i\), and the tangential equations remain valid also when \(\gamma=0\) or \(\gamma=\pi\). Conversely, the gradient with respect to each student lies in the teacher plane, and the four equations say that its radial and tangential components vanish. Hence the four equations are equivalent to stationarity.

It remains to determine the sign of \(\mathcal P'\) when \(0<\alpha\leq\alpha_\star\). Since \(\mathcal P\) is even, \(\mathcal P'\) is odd, so it suffices to consider \(0<\varphi<\pi\). For \(0\leq\varphi\leq\alpha/2\), direct expansion gives
\[
    \mathcal P'(\varphi) = 2\sin\varphi\bigl(\varphi\cot\varphi\sin(\alpha/2)-(\pi-\alpha/2)\cos(\alpha/2)\bigr).
\]
For \(0<\varphi\leq\alpha/2\), this is negative, because \(\varphi\cot\varphi<1\) and \(\sin(\alpha/2)\leq(\pi-\alpha/2)\cos(\alpha/2)\) by \cref{lem:squared-loss-angle-threshold}. For \(\alpha/2\leq\varphi\leq\pi-\alpha/2\),
\[
    \mathcal P'(\varphi) = 2\bigl((\alpha/2)\sin(\alpha/2)\cos\varphi-(\pi-\varphi)\cos(\alpha/2)\sin\varphi\bigr).
\]
If \(\varphi\geq\pi/2\), both terms are nonpositive and the second is negative. If \(\alpha/2\leq\varphi<\pi/2\), then \((\pi-\varphi)\tan\varphi>(\alpha/2)\tan(\alpha/2)\) because \(\pi-\varphi>\alpha/2\) and \(\tan\varphi\geq\tan(\alpha/2)\); multiplying by \(\cos(\alpha/2)\cos\varphi>0\) shows again that \(\mathcal P'(\varphi)<0\). Finally, for \(\pi-\alpha/2\leq\varphi\leq\pi\), we put \(x=\pi-\varphi\in[0,\alpha/2]\) and find
\[
    \mathcal P'(\varphi) = -2\bigl((\alpha/2)\cos(\alpha/2)\sin x+x\sin(\alpha/2)\cos x\bigr),
\]
which is negative for \(x>0\) and vanishes at \(x=0\), that is, at \(\varphi=\pi\). Together with \(\mathcal P'(0)=0\) and the oddness of \(\mathcal P'\), this proves the claims about the sign and the zeros of \(\mathcal P'\).
\end{proof}

\subsubsection{Stationary points above the angle threshold}

The assumption \(\alpha\leq\alpha_\star\) in the sign condition cannot be dropped. Above the threshold, \(\mathcal P'\) vanishes inside \((0,\alpha/2)\), which gives the following stationary points.

\begin{proposition}[Additional stationary families above the threshold]\label{prop:extra-planar-stationary-families}
Suppose \(\alpha\in(\alpha_\star,\pi)\). There is a unique \(\varphi_0\in(0,\alpha/2)\) satisfying
\begin{equation}
    \varphi_0\cot\varphi_0=(\pi-\alpha/2)\cot(\alpha/2).
    \label{eq:extra-planar-stationary-angle}
\end{equation}
For \(\sigma=\pm1\) and every \(s_1,s_2>0\) with \(s_1+s_2=\mathcal P(\varphi_0)/\pi\), the pair \((w_1,w_2)=\bigl(s_1e(\sigma\varphi_0),s_2e(\sigma\varphi_0)\bigr)\) is stationary. These families are not among the configurations listed in \cref{lem:critical_points_in_plane}.
\end{proposition}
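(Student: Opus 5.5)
Our plan is to reduce everything to the planar scalar equations of \cref{lem:planar-scalar-stationarity}, which hold for every \(0<\alpha<\pi\) (only the sign statement about \(\mathcal P'\) needs \(\alpha\le\alpha_\star\)).

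\textbf{Step 1: the angle \(\varphi_0\).} On \([0,\alpha/2]\) the expansion in the proof of \cref{lem:planar-scalar-stationarity} gives
\[
\mathcal P'(\varphi)=2\sin\varphi\,\bigl(\varphi\cot\varphi\,\sin(\alpha/2)-(\pi-\alpha/2)\cos(\alpha/2)\bigr).
\]
This expansion is valid for all \(\alpha\in(0,\pi)\). So for \(\varphi\in(0,\alpha/2)\), \(\mathcal P'(\varphi)=0\) exactly when \(\varphi\cot\varphi=(\pi-\alpha/2)\cot(\alpha/2)\). The map \(\varphi\mapsto\varphi\cot\varphi\) is continuous and strictly decreasing on \((0,\pi)\), since its derivative is \((\sin\varphi\cos\varphi-\varphi)/\sin^2\varphi<0\). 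It tends to \(1\) as \(\varphi\downarrow0\) and equals \((\alpha/2)\cot(\alpha/2)\) at \(\varphi=\alpha/2\). The right-hand side satisfies:
\begin{itemize}
\item it is \(<1\), because \(\alpha>\alpha_\star\) and \cref{lem:squared-loss-angle-threshold} give \(\tan(\alpha/2)>\pi-\alpha/2\);
\item it is \(>(\alpha/2)\cot(\alpha/2)\), because \(\pi-\alpha/2>\alpha/2\) and \(\cot(\alpha/2)>0\).
\end{itemize}
The intermediate value theorem and strict monotonicity then give a unique \(\varphi_0\in(0,\alpha/2)\), and \(\mathcal P'(\varphi_0)=0\). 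By oddness of \(\mathcal P'\), also \(\mathcal P'(-\varphi_0)=0\).

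\textbf{Step 2: verify stationarity.} Take \(w_i=s_ie(\sigma\varphi_0)\), so \(\varphi_1=\varphi_2=\sigma\varphi_0\) and \(\gamma=0\).
\begin{itemize}
\item The tangential equations \eqref{eq:planar-tangential-one}--\eqref{eq:planar-tangential-two} reduce to \(\mathcal P'(\sigma\varphi_0)=0\), since \(\sin(\varphi_2-\varphi_1)=0\). This holds by Step 1.
\item Since \(2\pi l(0)=\pi\), both radial equations \eqref{eq:planar-radial-one}--\eqref{eq:planar-radial-two} read \(\pi(s_1+s_2)=\mathcal P(\sigma\varphi_0)=\mathcal P(\varphi_0)\), using that \(\mathcal P\) is even. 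This holds by the choice \(s_1+s_2=\mathcal P(\varphi_0)/\pi\).
\item We need \(\mathcal P(\varphi_0)>0\) so that such positive \(s_i\) exist. This follows because \(\mathcal P\) is \(2\pi\) times a sum of values of \(l\ge0\) (\cref{lem:l-monotone-bounds}), and \(l(d)>0\) for \(d<\pi\).
\end{itemize}
The lemma then gives stationarity. We also note that both weights are nonzero and lie in the teacher plane.

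\textbf{Step 3: these are new stationary points.} The configurations in \cref{lem:critical_points_in_plane} are of three kinds:
\begin{itemize}
\item the global minimum \((v_1,v_2)\), whose students have directions \(e(\pm\alpha/2)\) and are not parallel;
\item the three saddle forms in \eqref{eq:in-plane-saddle-form}, whose students are all supported on \(\pm\bar v\), i.e. at angles \(0\) or \(\pi\).
\end{itemize}
Our students share the angle \(\sigma\varphi_0\in(-\alpha/2,\alpha/2)\setminus\{0\}\), which is neither \(0\), \(\pi\), nor \(\pm\alpha/2\). So they coincide with none of these, including after swapping the students.

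\textbf{Main obstacle.} The only delicate point is checking that the closed-form expression for \(\mathcal P'\) on \([0,\alpha/2]\) does not itself use \(\alpha\le\alpha_\star\). It does not: it uses only \(d_{\mathbb S^1}(\varphi,\pm\alpha/2)=\alpha/2\mp\varphi\), which is valid whenever \(\alpha/2+\varphi\le\pi\).
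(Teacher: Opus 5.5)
Your proof is correct and follows the paper's argument essentially step by step. The shared steps are the strict monotonicity of $\varphi\cot\varphi$ with the two endpoint comparisons (the upper one from the threshold lemma), the closed form of $\mathcal P'$ on $[0,\alpha/2]$, the reduction of the planar radial and tangential equations at $\gamma=0$, the positivity of $\mathcal P(\varphi_0)$, and the angle comparison for distinctness. Your explicit handling of $\sigma=-1$ through the oddness of $\mathcal P'$ and evenness of $\mathcal P$, and your check that the formula for $\mathcal P'$ holds for all $\alpha\in(0,\pi)$, make explicit points the paper leaves implicit.
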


\begin{proof}[Proof of \cref{prop:extra-planar-stationary-families}]
The function \(\varphi\mapsto\varphi\cot\varphi\) is strictly decreasing on \((0,\alpha/2)\), since its derivative \((\sin\varphi\cos\varphi-\varphi)/\sin^2\varphi\) is negative. Moreover, \(\varphi\cot\varphi\to1\) as \(\varphi\downarrow0\), and \((\alpha/2)\cot(\alpha/2)<(\pi-\alpha/2)\cot(\alpha/2)<1\). Here the first inequality uses \(\alpha<\pi\), and the second is equivalent to \(\tan(\alpha/2)>\pi-\alpha/2\), which holds by \cref{lem:squared-loss-angle-threshold} because \(\alpha>\alpha_\star\). Thus \eqref{eq:extra-planar-stationary-angle} has a unique root \(\varphi_0\in(0,\alpha/2)\).

The distances from \(\varphi_0\) to the two teachers, \(\alpha/2-\varphi_0\) and \(\alpha/2+\varphi_0\), lie in \((0,\pi)\). Since \(l\) is strictly decreasing on \([0,\pi]\) by \eqref{eq:l-first-derivative} and \(l(\pi)=0\), it follows that \(\mathcal P(\varphi_0)>0\), so there are positive \(s_1,s_2\) with the stated sum.

By the formula for \(\mathcal P'\) on \([0,\alpha/2]\) in the proof of \cref{lem:planar-scalar-stationarity},
\[
    \mathcal P'(\varphi) = 2\sin\varphi\sin(\alpha/2) \left(\varphi\cot\varphi-(\pi-\alpha/2)\cot(\alpha/2)\right)
\]
for \(0<\varphi<\alpha/2\), so \(\mathcal P'(\varphi_0)=0\). For the pair in the proposition, \(\gamma=0\), so the tangential equations reduce to \(\mathcal P'(\varphi_0)=0\) and both radial equations reduce to \(\pi(s_1+s_2)=\mathcal P(\varphi_0)\). By \cref{lem:planar-scalar-stationarity}, the pair is stationary. Since \(0<\varphi_0<\alpha/2\), these points are not parallel to \(\pm\bar v\) and differ from \((v_1,v_2)\), so they are not among the configurations in \cref{lem:critical_points_in_plane}.
\end{proof}

\subsubsection{Classification up to the angle threshold}

We now return to \(0<\alpha\leq\alpha_\star\), which we assume for the rest of this subsection. The points \(0\), \(\pm \alpha/2\), and \(\pm(\pi-\alpha/2)\) split \((-\pi,\pi)\) into the open intervals
\[
\begin{array}{lll}
    J_2=(-\pi,-\pi+\alpha/2),& J_1=(-\pi+\alpha/2,-\alpha/2),& J_0=(-\alpha/2,0),\\
    I_0=(0,\alpha/2),& I_1=(\alpha/2,\pi-\alpha/2),& I_2=(\pi-\alpha/2,\pi).
\end{array}
\]
The only symmetries of the loss that we use are the exchange of the two students and the reflection \(\varphi\mapsto-\varphi\) across \(\bar v\); the map \(\varphi\mapsto\varphi+\pi\) is not a symmetry of the loss.

\begin{proof}[Proof of \cref{lem:critical_points_in_plane}]
As in \cref{lem:planar-scalar-stationarity}, we write \(w_i=\|w_i\|_2\,e(\varphi_i)\) with \(\varphi_i\in[-\pi,\pi)\). Both \(r_+\) and \(r_-\) in \eqref{eq:def_r} are positive; for \(r_-\), this uses \(\tan(\alpha/2)>\alpha/2\).

We first treat collinear directions. If \(\gamma=0\), the two directions coincide, and the tangential equations \eqref{eq:planar-tangential-one}--\eqref{eq:planar-tangential-two} give \(\mathcal P'(\varphi_1)=0\), so by \cref{lem:planar-scalar-stationarity} the common direction is \(\bar v\) or \(-\bar v\). Since \(2\pi l(0)=\pi\) and
\[
\begin{aligned}
    \mathcal P(0)&=2\bigl((\pi-\alpha/2)\cos(\alpha/2)+\sin(\alpha/2)\bigr)=\pi r_+,\\
    \mathcal P(\pi)&=2\bigl(\sin(\alpha/2)-(\alpha/2)\cos(\alpha/2)\bigr)=\pi r_-,
\end{aligned}
\]
the radial equations \eqref{eq:planar-radial-one}--\eqref{eq:planar-radial-two} give \(\|w_1\|_2+\|w_2\|_2=r_+\) in the first case and \(\|w_1\|_2+\|w_2\|_2=r_-\) in the second. If \(\gamma=\pi\), the right-hand sides of the tangential equations vanish, so \(\mathcal P'(\varphi_1)=\mathcal P'(\varphi_2)=0\) and the two directions are \(\bar v\) and \(-\bar v\). After exchanging the students, \((\varphi_1,\varphi_2)=(0,-\pi)\), and since \(l(\pi)=0\), the radial equations give \(\|w_1\|_2=r_+\) and \(\|w_2\|_2=r_-\). This determines all stationary points with collinear directions.

\paragraph{Noncollinear directions.} Assume now that \(0<\gamma<\pi\). Multiplying the two tangential equations gives
\[
    \mathcal P'(\varphi_1)\mathcal P'(\varphi_2) = -(\pi-\gamma)^2\|w_1\|_2\|w_2\|_2\sin^2(\varphi_2-\varphi_1)<0.
\]
By \cref{lem:planar-scalar-stationarity}, one direction therefore lies in \((-\pi,0)\) and the other in \((0,\pi)\). After relabeling, we write \(\varphi_-=\varphi_1\in(-\pi,0)\) and \(\varphi_+=\varphi_2\in(0,\pi)\). Since \(\mathcal P'(\varphi_-)>0\), the tangential equation \eqref{eq:planar-tangential-one} gives \((\pi-\gamma)\|w_2\|_2\sin(\varphi_+-\varphi_-)>0\), so \(\sin(\varphi_+-\varphi_-)>0\). As \(0<\varphi_+-\varphi_-<2\pi\), it follows that
\begin{equation}\label{eq:planar-short-gap}
    0<\varphi_+-\varphi_-<\pi.
\end{equation}

Up to the exchange of the students and the reflection \(\varphi\mapsto-\varphi\), \eqref{eq:planar-short-gap} leaves four cases for the intervals containing \(\varphi_-\) and \(\varphi_+\):
\[
\begin{array}{c|c}
    \text{case}&(\varphi_-,\varphi_+)\\ \hline
    1&J_0\times I_0,\\
    2&J_0\times I_1 \quad\text{or, by reflection, }J_1\times I_0,\\
    3&J_2\times I_0 \quad\text{or, by reflection, }J_0\times I_2,\\
    4&J_1\times I_1.
\end{array}
\]
The products \(J_1\times I_2\), \(J_2\times I_1\), and \(J_2\times I_2\) are excluded by \eqref{eq:planar-short-gap}, since on them \(\varphi_+-\varphi_-\geq\pi\). Directions at the endpoints \(\pm\alpha/2\) and \(\pm(\pi-\alpha/2)\) are included in the adjacent cases. We now show that these cases contain no stationary point other than \((w_1,w_2)=(v_1,v_2)\), which lies on the boundary of Case~1.

\paragraph{Case 1: \(\varphi_-\in J_0\) and \(\varphi_+\in I_0\).} We write \(\varphi_+=x\) and \(\varphi_-=-y\) with \(0<x,y\leq\alpha/2\), so that \(\gamma=x+y\), and set \(p=x-y\). Then
\begin{equation}\label{eq:planar-case-one-p-range}
    |p|\leq\gamma\leq\alpha<\pi.
\end{equation}
On \([0,\alpha/2]\),
\[
    \mathcal P(\varphi) = 2\Bigl(\bigl((\pi-\alpha/2)\cos(\alpha/2)+\sin(\alpha/2)\bigr)\cos\varphi +\varphi\sin(\alpha/2)\sin\varphi\Bigr),
\]
and \(\mathcal P'(\varphi)\) is given by the first formula in the proof of \cref{lem:planar-scalar-stationarity}. Here \((\varphi_1,\varphi_2)=(-y,x)\). Since \(\mathcal P\) is even and \((\pi-\gamma)\sin\gamma>0\), the tangential equations give
\[
    \|w_1\|_2=-\frac{\mathcal P'(x)}{(\pi-\gamma)\sin\gamma}, \qquad \|w_2\|_2=-\frac{\mathcal P'(y)}{(\pi-\gamma)\sin\gamma},
\]
and the radial equations read \(\pi\|w_2\|_2+2\pi l(\gamma)\|w_1\|_2=\mathcal P(x)\) and \(\pi\|w_1\|_2+2\pi l(\gamma)\|w_2\|_2=\mathcal P(y)\). Substituting the norms and multiplying by \((\pi-\gamma)\sin\gamma\), we obtain
\begin{align*}
    -\pi\mathcal P'(y)-2\pi l(\gamma)\mathcal P'(x) &= (\pi-\gamma)\sin\gamma\,\mathcal P(x), \\
    -\pi\mathcal P'(x)-2\pi l(\gamma)\mathcal P'(y) &= (\pi-\gamma)\sin\gamma\,\mathcal P(y).
\end{align*}
After division by \(2\), both equations are linear and homogeneous in the nonzero vector \(\bigl((\pi-\alpha/2)\cos(\alpha/2),\sin(\alpha/2)\bigr)\), with coefficient matrix
\[
    M=
    \begin{pmatrix}
        \gamma\sin y+\sin\gamma\sin x & -\gamma(\pi-x)\cos y-\sin\gamma(\pi-y)\cos x\\
        \gamma\sin x+\sin\gamma\sin y & -\gamma(\pi-y)\cos x-\sin\gamma(\pi-x)\cos y
    \end{pmatrix}.
\]
Hence \(\det M=0\). Expanding the determinant and using \(x+y=\gamma\) and \(x-y=p\), we find
\begin{align*}
    \det M &= (\gamma^2-\sin^2\gamma) \bigl((\pi-x)\sin x\cos y-(\pi-y)\sin y\cos x\bigr) \\
    &= \frac{\gamma^2-\sin^2\gamma}{2} \bigl((2\pi-\gamma)\sin p-p\sin\gamma\bigr).
\end{align*}
Since \(\gamma>\sin\gamma>0\), it follows that \((2\pi-\gamma)\sin p=p\sin\gamma\). If \(p\neq0\), this says that \(\sin|p|/|p|=\sin\gamma/(2\pi-\gamma)\). But \(|p|\leq\gamma<\pi\) by \eqref{eq:planar-case-one-p-range}, and \(\sin t/t\) is strictly decreasing on \((0,\pi)\), so \(\sin|p|/|p|\geq\sin\gamma/\gamma>\sin\gamma/(2\pi-\gamma)\), a contradiction. Hence \(p=0\).

Thus \(x=y=\gamma/2\), and either of the two equations above becomes
\[
    (\gamma+\sin\gamma) \left( (\pi-\alpha/2)\cos(\alpha/2)\sin\frac{\gamma}{2} -\sin(\alpha/2)\left(\pi-\frac{\gamma}{2}\right) \cos\frac{\gamma}{2} \right)=0.
\]
Since \(\gamma+\sin\gamma>0\), it follows that \((\pi-\gamma/2)\cot(\gamma/2)=(\pi-\alpha/2)\cot(\alpha/2)\). The function \(\gamma\mapsto(\pi-\gamma/2)\cot(\gamma/2)\) is strictly decreasing on \((0,\pi)\), because its derivative is
\[
    -\frac{ \left(\pi-\frac{\gamma}{2}\right) +\sin\frac{\gamma}{2}\cos\frac{\gamma}{2} }{2\sin^2\frac{\gamma}{2}} <0.
\]
Hence \(\gamma=\alpha\) and \((\varphi_-,\varphi_+)=(-\alpha/2,\alpha/2)\). Since \(-\mathcal P'(\alpha/2)=(\pi-\alpha)\sin\alpha\), the formulas for the norms give \(\|w_1\|_2=\|w_2\|_2=1\). Thus the only stationary point in Case~1, including its boundary, is \((v_1,v_2)\), up to the exchange of the students.

\paragraph{Case 2: \(\varphi_-\in J_0\) and \(\varphi_+\in I_1\).} By the reflection \(\varphi\mapsto-\varphi\), we may assume that \(\varphi_-\in J_0\) and \(\varphi_+\in I_1\), including the endpoints except \(\varphi_+=\pi-\alpha/2\), which is treated in Case~3. We write \(\varphi_1=\varphi_-=\alpha/2-b_0\) and \(\varphi_2=\varphi_+=\alpha/2+a_0\), and set \(s=a_0+b_0\) and \(\rho=\alpha-b_0\). Then \(a_0\geq0\), \(\alpha/2<b_0\leq\alpha\), \(s<\pi\) by \eqref{eq:planar-short-gap}, and \(a_0+\alpha<\pi\) because \(\varphi_+<\pi-\alpha/2\).

First suppose that \(a_0>0\) and \(\rho>0\). The teacher--student angles are \(\theta_{11}=b_0\), \(\theta_{12}=\rho\), \(\theta_{21}=a_0\), and \(\theta_{22}=a_0+\alpha\). Substituting them into \eqref{eq:planar-radial-one}--\eqref{eq:planar-tangential-two}, using the tangential equations to eliminate the radii, and collecting the coefficients of \(\sin b_0\), \(\sin a_0\), \(\sin\rho\), and \(\sin(a_0+\alpha)\), we obtain
\begin{align*}
    0 &= -a_0(2\pi-s)\sin b_0 +s(2\pi-a_0-\alpha)\sin\rho+\sin s\bigl(\rho\sin(a_0+\alpha)-b_0\sin a_0\bigr), \\
    0 &= -b_0(2\pi-s)\sin a_0 +s\rho\sin(a_0+\alpha)+\sin s\bigl((2\pi-a_0-\alpha)\sin\rho-a_0\sin b_0\bigr).
\end{align*}
We write \(U=a_0\sin b_0\), \(V=b_0\sin a_0\), \(M=(2\pi-a_0-\alpha)\sin\rho\), and \(N=\rho\sin(a_0+\alpha)\), which are all positive. Since \(2\pi-a_0-\alpha=2\pi-s-\rho\) and \(a_0+\alpha=s+\rho\), these equations can be written as
\begin{align}
    (2\pi-s)U+\sin s\,V &= sM+\sin s\,N,
    \label{eq:planar-case-two-linear-one}\\
    \sin s\,U+(2\pi-s)V &= \sin s\,M+sN.
    \label{eq:planar-case-two-linear-two}
\end{align}
and adding and subtracting them gives
\begin{align}
    (2\pi-s+\sin s)(U+V) &= (s+\sin s)(M+N),
    \label{eq:planar-case-two-sum}\\
    (2\pi-s-\sin s)(U-V) &= (s-\sin s)(M-N).
    \label{eq:planar-case-two-difference}
\end{align}

Since \(0<\rho<s+\rho<\pi\), we have \(2\pi-s-\rho>s+\rho\), and since \(\sin t/t\) is strictly decreasing on \((0,\pi)\), \(M>(s+\rho)\sin\rho>\rho\sin(s+\rho)=N\). The coefficients in \eqref{eq:planar-case-two-difference} are positive, so \(U>V\), that is, \(\sin b_0/b_0>\sin a_0/a_0\), and hence \(a_0>b_0\). Together with \(b_0>\alpha/2\) and \(s+\rho=a_0+\alpha<\pi\), this gives
\begin{equation}\label{eq:planar-case-two-range}
    0<\rho=\alpha-b_0<b_0<a_0, \qquad b_0<\frac{s}{2}, \qquad \rho<\min\left\{\frac{s}{2},\pi-s\right\}.
\end{equation}

To compare \(U-V\) with \(M-N\), we fix \(s\in(0,\pi)\) and introduce
\[
    R_s(x) = \frac{(s-x)\sin x-x\sin(s-x)} {(s-x)\sin x+x\sin(s-x)}, \qquad Q_s(x) = \frac{(2\pi-s-x)\sin x-x\sin(s+x)} {(2\pi-s-x)\sin x+x\sin(s+x)},
\]
for \(0<x<s/2\) and \(0<x<\min\{s/2,\pi-s\}\), respectively. Both denominators are positive, the second because \(0<s+x<\pi\). By \eqref{eq:planar-case-two-range}, \((U-V)/(U+V)=R_s(b_0)\) and \((M-N)/(M+N)=Q_s(\rho)\), so dividing \eqref{eq:planar-case-two-difference} by \eqref{eq:planar-case-two-sum} gives
\begin{equation}\label{eq:planar-case-two-ratio}
    R_s(b_0)=\Lambda_s Q_s(\rho), \qquad \Lambda_s = \frac{s-\sin s}{s+\sin s} \frac{2\pi-s+\sin s}{2\pi-s-\sin s}.
\end{equation}
As \(x\downarrow0\), \(R_s(x)\) and \(Q_s(x)\) tend to \(R_s(0)=(s-\sin s)/(s+\sin s)\) and \(Q_s(0)=(2\pi-s-\sin s)/(2\pi-s+\sin s)\), so \(\Lambda_s=R_s(0)/Q_s(0)\).

We next show that
\begin{align}
    R_s(x) &< \frac{s-x}{s}R_s(0), &&0<x<s/2,
    \label{eq:planar-case-two-R-bound}\\
    Q_s(x) &\geq \frac{s-x}{s}Q_s(0), &&0<x<\min\{s/2,\pi-s\}.
    \label{eq:planar-case-two-Q-bound}
\end{align}
For the first bound, we write \(A(t)=\log(t/\sin t)\), which is positive for \(0<t<\pi\), and \(\psi(t)=(t-\sin t)/\bigl(t(t+\sin t)\bigr)\). Dividing the numerator and the denominator of \(R_s(x)\) by \(\sin x\sin(s-x)\) gives
\[
    R_s(x) = \tanh\left(\frac{A(s-x)-A(x)}{2}\right) < \tanh\left(\frac{A(s-x)}{2}\right) = (s-x)\psi(s-x).
\]
To see that \(\psi\) is nondecreasing on \((0,\pi)\), we compute
\[
    \psi'(t) = \frac{\sin^2t+2t\sin t-t^2(1+2\cos t)}{t^2(t+\sin t)^2}.
\]
The numerator vanishes at \(t=0\) and has derivative \(2\bigl(t^2\sin t-(1+\cos t)(t-\sin t)\bigr)\geq0\), because \(t-\sin t=\int_0^t(1-\cos u)\,\diff u\leq t(1-\cos t)\) implies \((1+\cos t)(t-\sin t)\leq t\sin^2t\leq t^2\sin t\). Hence \(R_s(x)<(s-x)\psi(s)=\frac{s-x}{s}R_s(0)\), which is \eqref{eq:planar-case-two-R-bound}.

For the second bound, we write \(B(t)=\log\bigl((2\pi-t)/\sin t\bigr)\) for \(0<t<\pi\). In the same way, \(Q_s(x)=\tanh\bigl((B(s+x)-A(x))/2\bigr)\), and we first show that
\begin{equation}\label{eq:planar-case-two-log-comparison}
    B(s+x)-A(x)\geq\frac{s-x}{s}B(s).
\end{equation}
With \(A(0)=0\), the function \(H_s(x)=B(s+x)-A(x)-\frac{s-x}{s}B(s)\) vanishes at \(x=0\), and
\[
    H_s'(x) = \frac{B(s)}{s} - \left(\frac1x-\cot x\right) - \left(\cot(s+x)+\frac{1}{2\pi-s-x}\right).
\]
The function \(t\mapsto1/t-\cot t\) is increasing on \((0,\pi)\), since its derivative \(1/\sin^2t-1/t^2\) is positive, and \(t\mapsto\cot t+1/(2\pi-t)\) is decreasing, since its derivative \(-1/\sin^2t+1/(2\pi-t)^2\) is negative. Because \(x<s/2\) and \(s+x>s\),
\[
    H_s'(x) \geq \frac{B(s)}{s} - \left(\frac2s-\cot\frac{s}{2}\right) - \left(\cot s+\frac1{2\pi-s}\right).
\]
Since \(\cot(s/2)=1/\sin s+\cot s\), the right-hand side is nonnegative as soon as
\begin{equation}\label{eq:planar-case-two-B-lower}
    B(s)\geq2-\frac{s}{\sin s}+\frac{s}{2\pi-s}.
\end{equation}
Applying \(\log z\geq2(z-1)/(z+1)\) for \(z>1\) with \(z=(2\pi-s)/\sin s\), we see that \eqref{eq:planar-case-two-B-lower} follows from
\begin{equation}\label{eq:planar-case-two-polynomial}
    s\bigl((2\pi-s)^2-\sin^2s\bigr) \geq 4(2\pi-s)\sin^2s.
\end{equation}
If \(0<s\leq1\), then \(\sin s\leq s\) and \((2\pi-s)/s\geq2\pi-1>2+\sqrt5\), which implies \((2\pi-s)^2-s^2\geq4(2\pi-s)s\) and hence \eqref{eq:planar-case-two-polynomial}. If \(1\leq s<\pi\), then \(\sin^2s\leq1\), and it is enough to show that \(\chi(s)=s\bigl((2\pi-s)^2-1\bigr)-4(2\pi-s)\) is positive. Since \(\chi''(s)=6s-8\pi<0\) on \([1,\pi]\), this follows from \(\chi(1)=4(\pi^2-3\pi+1)>0\) and \(\chi(\pi)=\pi(\pi^2-5)>0\). Thus \eqref{eq:planar-case-two-B-lower} holds, so \(H_s'\geq0\), and \eqref{eq:planar-case-two-log-comparison} follows from \(H_s(0)=0\). Since \(\tanh\) is increasing, and concave on \([0,\infty)\) with \(\tanh0=0\),
\[
    Q_s(x) \geq \tanh\left(\frac{s-x}{s}\frac{B(s)}{2}\right) \geq \frac{s-x}{s}\tanh\left(\frac{B(s)}{2}\right) = \frac{s-x}{s}Q_s(0),
\]
which is \eqref{eq:planar-case-two-Q-bound}.

Since \(0<\rho<b_0\) by \eqref{eq:planar-case-two-range}, the two bounds give
\[
    R_s(b_0) < \frac{s-b_0}{s}R_s(0) < \frac{s-\rho}{s}R_s(0) = \Lambda_s\frac{s-\rho}{s}Q_s(0) \leq \Lambda_s Q_s(\rho),
\]
which contradicts \eqref{eq:planar-case-two-ratio}. Hence there is no stationary point with \(a_0>0\) and \(\rho>0\).

It remains to treat the endpoints. If \(\rho=0\) and \(a_0>0\), then \(M=N=0\), and \eqref{eq:planar-case-two-linear-one}--%
\eqref{eq:planar-case-two-linear-two} say that \((U,V)\) lies in the kernel of the matrix with diagonal entries \(2\pi-s\) and off-diagonal entries \(\sin s\). This matrix is invertible because \(2\pi-s>\sin s\), so \(U=V=0\), which contradicts \(a_0>0\). If \(a_0=0\), then \(\varphi_+=\alpha/2\) lies in the closure of Case~1, where the only stationary point is \((v_1,v_2)\). Thus Case~2 contributes no further stationary point.

\paragraph{Case 3: \(\varphi_-\in J_2\) and \(\varphi_+\in I_0\).} By reflection, we may assume that \(\varphi_-\in J_2\) and \(\varphi_+\in I_0\), and we write \(\varphi_+=p\) and \(\varphi_-=-\pi+q\) with \(0\leq p,q\leq\alpha/2\). Then \(\varphi_+-\varphi_-=\pi-(q-p)\), so \eqref{eq:planar-short-gap} gives \(p<q\), and \(\gamma=\pi-\delta\) with \(\delta=q-p\in(0,\alpha/2]\). Substituting these directions into \eqref{eq:planar-radial-one}--\eqref{eq:planar-tangential-two} and eliminating the two radii, we obtain
\begin{equation}\label{eq:planar-case-three-residual}
    0=\sin\delta\,A_p+\delta B_{p,q}-\pi C_q,
\end{equation}
where
\[
\begin{aligned}
    A_p &= 2p\sin(\alpha/2)\cos p - 2(\pi-\alpha/2)\cos(\alpha/2)\sin p,\\
    B_{p,q} &= 2(\pi-\alpha/2)\cos(\alpha/2)\sin q - 2p\sin(\alpha/2)\cos q + 2\sin(\alpha/2)\cos p\sin\delta,\\
    C_q &= 2q\sin(\alpha/2)\cos q + \alpha\cos(\alpha/2)\sin q.
\end{aligned}
\]
By \cref{lem:squared-loss-angle-threshold}, \(\sin(\alpha/2)\leq(\pi-\alpha/2)\cos(\alpha/2)\). Since \(\tan p>p\) for \(0<p<\pi/2\), it follows that \(A_p=2p\cos p\bigl(\sin(\alpha/2)-(\pi-\alpha/2)\cos(\alpha/2)\tan p/p\bigr)<0\) for \(p>0\), while \(A_0=0\). Thus \(A_p\leq0\). Next, the middle term of \(B_{p,q}\) is nonpositive. Dropping it and using \(\delta\leq q\leq\alpha/2\), \(\cos p\leq1\), and \(\sin\delta\leq\sin q<\pi\cos q\), where the last inequality holds because \(\tan q\leq\tan(\alpha/2)\leq\pi-\alpha/2<\pi\), we obtain
\[
\begin{aligned}
    \delta B_{p,q} &\leq 2\delta(\pi-\alpha/2)\cos(\alpha/2)\sin q + 2\delta\sin(\alpha/2)\cos p\sin\delta\\
    &< \pi\alpha\cos(\alpha/2)\sin q + 2\pi q\sin(\alpha/2)\cos q = \pi C_q.
\end{aligned}
\]
Hence the right-hand side of \eqref{eq:planar-case-three-residual} is negative, a contradiction. Thus Case~3 contains no stationary point.

\paragraph{Case 4: \(\varphi_-\in J_1\) and \(\varphi_+\in I_1\).} We write \(\varphi_+=\alpha/2+a_0\) and \(\varphi_-=-\alpha/2-b_0\) with \(a_0,b_0\geq0\), so that \(\gamma=\alpha+a_0+b_0<\pi\) and the teacher--student angles are \(\theta_{11}=b_0+\alpha\), \(\theta_{12}=b_0\), \(\theta_{21}=a_0\), and \(\theta_{22}=a_0+\alpha\). Substituting them into the equations of \cref{lem:planar-scalar-stationarity} and eliminating the two radii, we obtain
\begin{align*}
    0 &= -\sin\gamma \bigl( (\gamma-a_0)\sin a_0 + (\gamma-a_0-\alpha)\sin(a_0+\alpha) \bigr) \\
    &\quad -(2\pi-\gamma) \bigl( a_0\sin(b_0+\alpha) + (a_0+\alpha)\sin b_0 \bigr).
\end{align*}
Both terms on the right are nonpositive, and the second is negative when \(a_0,b_0>0\). Hence there is no stationary point with \(a_0,b_0>0\). If exactly one of \(a_0\) and \(b_0\) vanishes, the point lies in the closure of Case~2, and if both vanish, it is \((v_1,v_2)\).

Together with the collinear directions, Cases~1--4 show that every stationary point is one of the configurations listed in \cref{lem:critical_points_in_plane}.

\paragraph{Stationarity and curvature of the listed points.} Conversely, every listed configuration is stationary. The pair \((v_1,v_2)\) reproduces the teacher, so the loss, which is nonnegative, vanishes there; it is therefore a global minimum. For two students along \(\bar v\), we have \(\gamma=0\) and \(\mathcal P'(0)=0\), and since \(2\pi l(0)=\pi\) and \(\mathcal P(0)=\pi r_+\), the equations of \cref{lem:planar-scalar-stationarity} hold exactly when \(\|w_1\|_2+\|w_2\|_2=r_+\). The same computation for two students along \(-\bar v\) gives \(\|w_1\|_2+\|w_2\|_2=r_-\). For the pair \((r_+\bar v,-r_-\bar v)\), we have \(\mathcal P'(0)=\mathcal P'(\pi)=0\) and \(l(\pi)=0\), and the radial equations hold with the norms \(r_+\) and \(r_-\).

It remains to find a direction of negative curvature at each of these saddles. If both students are parallel to \(\bar v\), or both are parallel to \(-\bar v\), the Hessian in \cref{lem:relu-two-neuron-hessian}, restricted to perturbations \((t_1v_\Delta,t_2v_\Delta)\), is
\[
    H_{v_\Delta} =
    \begin{pmatrix}
        \frac12-\frac{2\sin(\alpha/2)}{\pi \|w_1\|_2}&\frac12\\[1mm] \frac12&\frac12-\frac{2\sin(\alpha/2)}{\pi \|w_2\|_2}
    \end{pmatrix},
\]
The direction \((1,-1)\), which moves the two students apart along \(v_\Delta\), gives
\[
    (1,-1)H_{v_\Delta}(1,-1)^\top = -\frac{2\sin(\alpha/2)}{\pi} \left(\frac1{\|w_1\|_2}+\frac1{\|w_2\|_2}\right)<0.
\]
At \((r_+\bar v,-r_-\bar v)\), the off-diagonal Hessian block vanishes by the continuous extension in \cref{lem:relu-two-neuron-hessian}, and the perturbation \((0,v_\Delta)\) of the second student gives
\[
    \frac12-\frac{2\sin(\alpha/2)}{\pi r_-} = -\frac{\sin(\alpha/2)+(\alpha/2)\cos(\alpha/2)}{2\bigl(\sin(\alpha/2)-(\alpha/2)\cos(\alpha/2)\bigr)} <0.
\]
Thus every listed configuration other than \((v_1,v_2)\) has a strictly negative Hessian direction.
\end{proof}

\subsection{Completion of the squared-loss landscape theorem}

The statements of \cref{prop:3D,lem:3Dsym,lem:critical_point_perp_plane,lem:critical_points_in_plane} assume \(0<\alpha\leq\pi/2\), but their proofs in the preceding subsections use only \(0<\alpha\leq\alpha_\star\), with \(\alpha_\star\) as in \eqref{eq:width-two-angle-threshold}. These results therefore hold for every \(\alpha\in(0,\alpha_\star]\), and they give the following extension of \cref{thm:squared_loss_landscape}.

\begin{theorem}[Squared-loss landscape up to the angle threshold]\label{thm:squared-loss-landscape-nonacute}
Let \(0<\alpha\leq\alpha_\star\), and let \((w_1,w_2)\) with \(w_1,w_2\neq0\) be a stationary point of the squared loss \eqref{eq:population_squared_loss_and_gf}. After possibly swapping the students, exactly one of the following holds:
\begin{enumerate}[label=(\arabic*)]
    \item \(w_1=v_1\) and \(w_2=v_2\). This is a global minimum.
    \item The pair belongs to one of the three in-plane families
    \[
    \begin{gathered}
        w_1=s_1\bar v,\quad w_2=s_2\bar v,\quad s_1,s_2>0,\quad s_1+s_2=r_+,\\
        w_1=-s_1\bar v,\quad w_2=-s_2\bar v,\quad s_1,s_2>0,\quad s_1+s_2=r_-,\\
        w_1=r_+\bar v,\quad w_2=-r_-\bar v.
    \end{gathered}
    \]
    Every such point has a strictly negative Hessian direction.
    \item Necessarily \(d\geq3\), and there is a unit vector \(\xi\perp\operatorname{span}\{v_1,v_2\}\) such that
    \[
    \begin{aligned}
        w_1 &= r_\alpha(\kappa_\alpha) (\cos\kappa_\alpha\,\bar v+\sin\kappa_\alpha\,\xi),\\
        w_2 &= r_\alpha(\kappa_\alpha) (\cos\kappa_\alpha\,\bar v-\sin\kappa_\alpha\,\xi).
    \end{aligned}
    \]
    Every such point has a strictly negative Hessian direction.
\end{enumerate}
Consequently, every local minimum of the squared loss with \(w_1\neq0\) and \(w_2\neq0\) is a global minimum.
\end{theorem}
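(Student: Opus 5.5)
The proof assembles the pieces already established, noting that each of them was proved under the weaker hypothesis \(0<\alpha\leq\alpha_\star\) rather than \(\alpha\leq\pi/2\). Let \((w_1,w_2)\) be stationary with both weights nonzero. First apply \cref{prop:3D}. Its proof uses only the orthogonal projection of \eqref{eq:critical_eq} and no hypothesis on \(\alpha\). It gives \(\dim\operatorname{span}\{v_1,v_2,w_1,w_2\}\leq3\). This splits the argument into two exhaustive cases: both weights lie in the teacher plane, or at least one does not.

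\emph{Out-of-plane case.} Invoke \cref{lem:3Dsym} to get \(w_1,w_2\in\operatorname{span}\{\bar v,\xi\}\). Its proof relies on \cref{lem:teacher-force-azimuth}, which is stated for \(\alpha\leq\alpha_\star\) and whose only angular input is \cref{lem:squared-loss-angle-threshold}(c). Next apply \cref{lem:perp_plane_symmetry} to reduce to the mirror form. That lemma uses \cref{lem:scalar_separation_bisector} and the monotonicity of \(\theta_\alpha\), both valid for all \(\alpha\in(0,\pi)\). The scalar characterization in \cref{lem:critical_point_perp_plane} together with \cref{lem:out-of-plane-root-uniqueness}, which holds on \((0,\pi)\), then pins down \(\kappa=\kappa_\alpha\) and \(r=r_\alpha(\kappa_\alpha)\). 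The negative-curvature computation along \((v_\Delta,-v_\Delta)\) gives a strictly negative value for every \(\alpha\in(0,\pi)\). Since \(\xi\) must exist, \(d\geq3\) in this case.

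\emph{In-plane case.} Apply \cref{lem:critical_points_in_plane}. Its proof relies on the sign pattern of \(\mathcal P'\) from \cref{lem:planar-scalar-stationarity}, which is exactly where \(\alpha\leq\alpha_\star\) enters. Cases 1--4 then use only \(\alpha<\pi\), \(s<\pi\), and \cref{lem:squared-loss-angle-threshold}; Case 3, for example, uses \(\tan(\alpha/2)\leq\pi-\alpha/2<\pi\). The Hessian computations at the bisector families hold for all \(\alpha\), since \(r_->0\) whenever \(\tan(\alpha/2)>\alpha/2\). The only real obstacle is auditing the in-plane case analysis for hidden uses of \(\alpha\leq\pi/2\). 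I would recheck the places where \(\alpha\) is bounded, such as the interval decomposition \(J_i,I_i\) (which needs \(\alpha/2<\pi-\alpha/2\), true since \(\alpha<\pi\)) and Case 2's requirement \(a_0+\alpha<\pi\), and confirm that each follows from \(\alpha<\pi\) alone.

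The three outcomes are mutually exclusive. Configurations (1) and (2) are in-plane and (3) is not, and within the plane the listed forms are distinct. Finally, every point in (2) and (3) has a strictly negative Hessian direction, so it is not a local minimum. Hence every local minimum with both weights nonzero is \((v_1,v_2)\) up to swapping, and this point has zero loss, so it is a global minimum.
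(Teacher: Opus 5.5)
Your proposal is correct and follows the paper's proof essentially step for step. You reduce to at most three dimensions with \cref{prop:3D}, then handle the in-plane case by \cref{lem:critical_points_in_plane} and the out-of-plane case by \cref{lem:3Dsym} followed by \cref{lem:critical_point_perp_plane}; your line-by-line check that these proofs use only $0<\alpha\le\alpha_\star$ and $\alpha<\pi$ (via \cref{lem:squared-loss-angle-threshold}, the sign pattern of $\mathcal P'$, and \cref{lem:teacher-force-azimuth}) is more explicit than the paper, which simply asserts this before the theorem, and your disjointness and local-minimum arguments match the paper's.
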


\begin{proof}[Proof of \cref{thm:squared-loss-landscape-nonacute}]
By \cref{lem:critical_eq_from_flow}, the pair satisfies \eqref{eq:critical_eq}, so by \cref{prop:3D} the vectors \(v_1,v_2,w_1,w_2\) span a space of dimension at most three. If \(w_1,w_2\in\operatorname{span}\{v_1,v_2\}\), \cref{lem:critical_points_in_plane} shows that the pair is \((v_1,v_2)\) or belongs to one of the three families in~(2), and that each point of these families has a strictly negative Hessian direction.

Otherwise, \(d\geq3\), and there is a unit vector \(\xi\perp\operatorname{span}\{v_1,v_2\}\) such that \(w_1,w_2\in\operatorname{span}\{\bar v,v_\Delta,\xi\}\). By \cref{lem:3Dsym}, \(w_1,w_2\in\operatorname{span}\{\bar v,\xi\}\), and \cref{lem:critical_point_perp_plane} shows that the pair has the form in~(3) and has a strictly negative Hessian direction. Since this pair does not lie in \(\operatorname{span}\{v_1,v_2\}\), the three cases are disjoint.

Conversely, the same lemmas show that every configuration in (1)--(3) is stationary. The pair \((v_1,v_2)\) has zero loss, and every other stationary point has a strictly negative Hessian direction. Since a local minimum with \(w_1\neq0\) and \(w_2\neq0\) is stationary, it is a global minimum.
\end{proof}

\begin{proof}[Proof of \cref{thm:squared_loss_landscape}]
Since \(\pi/2<\alpha_\star\), the theorem is the restriction of \cref{thm:squared-loss-landscape-nonacute} to \(0<\alpha\leq\pi/2\).
\end{proof}

\section{Global convergence and boundary avoidance}\label{sec:global-convergence}

For the proof of \cref{thm:generic_global_convergence}, we fix \(d\ge2\) and unit teachers with \(\alpha=\angle(v_1,v_2)\in(0,\pi/2]\), and we write \(\Omega=\{(w_1,w_2)\in(\RR^d)^2:w_1\ne0,\ w_2\ne0\}\), its boundary \(\partial\Omega=\{(w_1,w_2):w_1=0\text{ or }w_2=0\}\), \(\mathcal G=\{(v_1,v_2),(v_2,v_1)\}\) and \(r_i=\|w_i\|_2\). A set is null if it has Lebesgue measure zero; on a manifold, this is understood in local coordinates. For the student output \(f\) in \eqref{eq:two_relu_student}, the cross term \(2\EE[\phi(\langle w_1,x\rangle)\phi(\langle w_2,x\rangle)]\) in \(\EE[f(x)^2]\) is nonnegative, and \(\EE[\phi(\langle w_i,x\rangle)^2]=r_i^2/2\) by \eqref{eq:relu-gaussian-kernel}. Since \(f^2\le2(f^*)^2+2(f-f^*)^2\), we obtain, for all \(w_1,w_2\in\RR^d\),
\begin{equation}\label{eq:coercive-loss-bound}
    \frac12\bigl(r_1^2+r_2^2\bigr) \le \EE[f(x)^2] \le 2\EE[f^*(x)^2]+4L(w_1,w_2).
\end{equation}
Since \(L\) does not increase along gradient flow, every trajectory stays in a bounded set determined by its initial loss.

\subsection{Almost-everywhere avoidance of zero weights}

For \(w(0)\in\Omega\), let \([0,T)\) be the maximal interval on which the solution of gradient flow exists in \(\Omega\); it is well defined by \cref{lem:relu-kernel-regularity}, and \(T\in(0,\infty]\) depends on \(w(0)\). A trajectory may leave \(\Omega\) in finite time, and it may approach \(\partial\Omega\) as \(t\to\infty\). We show that both happen only for a null set of initializations. The first lemma shows that the two weights cannot tend to zero together.

\begin{lemma}[Lower bound on \(\|w_1\|_2+\|w_2\|_2\)]\label{lem:coercive_bound_and_no_double_zero}
For every \(w(0)\in\Omega\) and \(0\le t<T\),
\[
    r_1(t)+r_2(t) \ge r_-+\bigl(r_1(0)+r_2(0)-r_-\bigr)e^{-t} \ge\min\{r_1(0)+r_2(0),\,r_-\},
\]
where \(r_-\) is defined in \eqref{eq:def_r}.
\end{lemma}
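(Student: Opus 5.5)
The plan is a scalar comparison argument for $S(t)\coloneqq r_1(t)+r_2(t)$ on $[0,T)$. Both weights are nonzero there, so each $r_i$ is $C^1$. Summing the two lower bounds of \cref{lem:radial-velocity-bounds} gives
\[
    \frac{\diff S}{\diff t}
    \ge 2\left(\frac{r_-}{2}-\frac{S}{2}\right)
    = r_- - S .
\]
This is a linear differential inequality whose comparison solution is $r_-+(S(0)-r_-)e^{-t}$.

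To make the comparison rigorous, I set $D(t)\coloneqq e^{t}\bigl(S(t)-r_-\bigr)$. Then
\[
    D'(t)=e^{t}\bigl(S'(t)+S(t)-r_-\bigr)\ge0 .
\]
Hence $D$ is nondecreasing, so $D(t)\ge D(0)=S(0)-r_-$. Multiplying back by $e^{-t}$ yields the first inequality in the statement.

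The second inequality comes from viewing $r_-+(S(0)-r_-)e^{-t}$ as a convex combination $(1-e^{-t})r_-+e^{-t}S(0)$ with weight $e^{-t}\in(0,1]$. Any such combination is at least $\min\{S(0),r_-\}$.

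There is no real obstacle; the content sits in \cref{lem:radial-velocity-bounds}. I only need to check two things. First, its lower bound uses the sum $S$ and not the individual $r_i$, which is exactly what lets the two inequalities add into a closed inequality for $S$. Second, $r_->0$, which the paper notes follows from $\tan(\alpha/2)>\alpha/2$; this is what makes the bound a positive lower bound on $S$.
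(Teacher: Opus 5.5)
Your proposal is correct and follows the paper's proof: you sum the two lower bounds of \cref{lem:radial-velocity-bounds} to get $\frac{\diff}{\diff t}(r_1+r_2)\ge r_--(r_1+r_2)$, apply Gr\"onwall's inequality (your integrating factor $e^{t}(S-r_-)$ is exactly that step written out), and note that the middle expression is a convex combination of $r_1(0)+r_2(0)$ and $r_-$. The remark that $r_->0$ is not needed for the inequality itself, but it is what makes the bound useful later.
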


\begin{proof}[Proof of \cref{lem:coercive_bound_and_no_double_zero}]
By \cref{lem:radial-velocity-bounds}, \(\dot r_i\ge(r_--r_1-r_2)/2\) for \(i=1,2\). The sum therefore satisfies \(\frac{\diff}{\diff t}(r_1+r_2)\ge r_--(r_1+r_2)\), and Gr\"onwall's inequality gives the first inequality. The second holds because the middle expression is a convex combination of \(r_1(0)+r_2(0)\) and \(r_-\).
\end{proof}

\subsubsection{Polar coordinates near \texorpdfstring{\(\{w_1=0\}\)}{w1=0}}

By symmetry it suffices to study trajectories along which \(w_1\) tends to zero. We use the coordinates \((r_1,\bar w_1,w_2)\), where \(w_1=r_1\bar w_1\), and set
\[
    \Psi_{w_2}(\bar w_1) =l(\theta_{11})+l(\theta_{12})-r_2\,l(\gamma),
\]
with \(\theta_{11},\theta_{12},\gamma\) as in \eqref{eq:angle_shorthand}. Substituting \(w_1=r_1\bar w_1\) into \eqref{eq:closed-form-loss}, with its convention at \(w_1=0\), gives
\begin{equation}\label{eq:zero-face-exact-expansion}
    L(r_1\bar w_1,w_2)=L(0,w_2)-r_1\Psi_{w_2}(\bar w_1)+\frac{r_1^2}{4}.
\end{equation}
For a nonzero vector \(b\), \eqref{eq:l-first-derivative} shows that the spherical gradient of \(u\mapsto l(\angle(u,b))\) at \(\bar w_1\) is
\begin{equation}\label{eq:sphere-gradient-l}
    \nabla_{\mathbb S^{d-1}}\bigl[l(\angle(\cdot,b))\bigr](\bar w_1) =p\bigl(\angle(\bar w_1,b)\bigr)P_{\bar w_1}^\perp\bar b, \qquad p(\theta)=\frac{\pi-\theta}{2\pi}.
\end{equation}
With this formula, \eqref{eq:norm-w1} and \eqref{eq:direction-w1} become the first two equations below, and differentiating \eqref{eq:zero-face-exact-expansion} in \(w_2\) gives the third:
\begin{equation}\label{eq:zero-face-polar-flow}
\begin{aligned}
    \dot r_1&=\Psi_{w_2}(\bar w_1)-\frac{r_1}{2},\\
    \dot{\bar w}_1&=\frac{1}{r_1}\nabla_{\mathbb S^{d-1}}\Psi_{w_2}(\bar w_1),\\
    \dot w_2&=F_b(w_2)+r_1\nabla_{w_2}\Psi_{w_2}(\bar w_1),
\end{aligned}
\end{equation}
where \(F_b(w_2)=-\nabla_{w_2}L(0,w_2)\); thus \(\dot w_2=F_b(w_2)\) is the gradient flow of \(L\) restricted to \(\{w_1=0\}\).

The factor \(1/r_1\) in the second equation is removed by the rescaled time \(\tau(t)=\int_0^t\diff t'/r_1(t')\). With \({}'=\diff/\diff\tau\), the triple \((r_1,\bar w_1,w_2)\) solves \((r_1,\bar w_1,w_2)'=X(r_1,\bar w_1,w_2)\), where the vector field \(X\) is \(r_1\) times the right-hand side of \eqref{eq:zero-face-polar-flow}; its components are \(r_1(\Psi_{w_2}(\bar w_1)-r_1/2)\), \(\nabla_{\mathbb S^{d-1}}\Psi_{w_2}(\bar w_1)\) and \(r_1(F_b(w_2)+r_1\nabla_{w_2}\Psi_{w_2}(\bar w_1))\). These formulas make sense for every \(r_1\in\RR\) and every \(\bar w_1\in\mathbb S^{d-1}\). By \cref{lem:relu-kernel-regularity}, the kernel in \eqref{eq:relu-gaussian-kernel} is \(C^2\) in its two nonzero arguments, so \(\Psi_{w_2}(\bar w_1)\) and \(L(0,w_2)\) are \(C^2\) in \((\bar w_1,w_2)\) for \(w_2\ne0\). Hence \(X\) is a \(C^1\) vector field on \(\RR\times\mathbb S^{d-1}\times(\RR^d\setminus\{0\})\), and \(X(0,\bar w_1,w_2)=(0,\nabla_{\mathbb S^{d-1}}\Psi_{w_2}(\bar w_1),0)\).

The next lemma describes a trajectory whose first weight tends to zero.

\begin{lemma}[Behavior as \(w_1\) tends to zero]\label{lem:finite_collision_limit_constraints}
Let \(w(0)\in\Omega\), and suppose that \(r_1(t)\to0\) as \(t\uparrow T\). If \(T=\infty\), suppose also that \(\int_0^\infty\|\dot w(t)\|_2\,\diff t<\infty\). Then \(w_2(t)\) converges to some \(w_{2,*}\ne0\), \(\tau(t)\to\infty\), and \(\Psi_{w_2(t)}(\bar w_1(t))\) converges to some \(\psi_*\le0\), with \(\psi_*=0\) if \(T=\infty\). Every limit point \(u_*\) of \(\bar w_1(t)\) as \(t\uparrow T\) satisfies \(\nabla_{\mathbb S^{d-1}}\Psi_{w_{2,*}}(u_*)=0\) and \(\Psi_{w_{2,*}}(u_*)=\psi_*\).
\end{lemma}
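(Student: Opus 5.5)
We take the four conclusions in order: convergence of $w_2$, divergence of $\tau$, convergence of $\Psi$, and finally the structure of limit points of $\bar w_1$. The last step uses the rescaled vector field $X$.

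\emph{Convergence of $w_2$.} By \eqref{eq:coercive-loss-bound} the trajectory stays bounded, so $\|\nabla L\|_2$ is bounded on it. The $C^2$ regularity of \cref{lem:relu-kernel-regularity} gives this away from $\partial\Omega$. Near $\{w_1=0\}$ it follows from \eqref{eq:zero-face-polar-flow}, whose right-hand side involves only bounded angular quantities, with the single exception of $\dot{\bar w}_1$, which is not part of $\dot w$. Hence $\dot w_2$ is bounded.
\begin{itemize}
\item If $T<\infty$, then $\int_0^T\|\dot w_2\|_2\,\diff t<\infty$.
\item If $T=\infty$, the same finiteness is the added hypothesis.
\end{itemize}
In either case $w_2(t)$ is Cauchy and converges to some $w_{2,*}$. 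Since $r_1\to0$, \cref{lem:coercive_bound_and_no_double_zero} gives $r_2\ge\min\{r_1(0)+r_2(0),r_-\}/2>0$ eventually, so $w_{2,*}\neq0$.

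\emph{Divergence of $\tau$ and convergence of $\Psi$.} Along the flow, $\frac{\diff}{\diff t}\log r_1=\Psi/r_1-\tfrac12$, which is bounded above by $C/r_1-\tfrac12$. Integrating, $\log r_1(t)\ge\log r_1(0)-C\tau(t)-t/2$. If $T<\infty$, then $r_1\to0$ forces $\tau\to\infty$. If $T=\infty$, then $\tau(t)\ge t/\sup r_1\to\infty$ directly.

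For $\Psi$, we use the expansion \eqref{eq:zero-face-exact-expansion}: $L(t)=L(0,w_2(t))-r_1\Psi+r_1^2/4$. The left side $L(t)$ is monotone and therefore has a limit, and $L(0,w_2(t))\to L(0,w_{2,*})$. This only yields $r_1\Psi\to\text{const}$, which is too weak, so we also use the rescaled dynamics. In $\tau$-time, $\frac{\diff}{\diff\tau}\Psi=\|\nabla_{\mathbb S}\Psi\|^2+O(r_1)$, where the $O(r_1)$ term collects the $w_2$-motion and satisfies $\int O(r_1)\,\diff\tau=\int O(1)\,\diff t$.
\begin{itemize}
\item If $T<\infty$, this error has finite total integral, so $\Psi$ is increasing up to an integrable error, is bounded, and converges to some $\psi_*$.
\item If $T=\infty$, the error is $\int\|\dot w_2\|\,\diff t$ together with the $r_1^2$ terms. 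The first is finite by hypothesis. The second we handle through $\int r_1\,\diff t\le\int\|\dot w\|\,\diff t+\cdots$, although this needs care.
\end{itemize}

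\emph{Sign of $\psi_*$.} If $\psi_*>0$, then $\dot r_1\ge\psi_*/2-r_1/2>0$ once $r_1$ is small, which contradicts $r_1\to0$. Hence $\psi_*\le0$. When $T=\infty$, suppose $\psi_*<0$. Then $\dot r_1\le\psi_*/2$ eventually, so $r_1$ would reach $0$ in finite time, contradicting $T=\infty$. Hence $\psi_*=0$ in that case.

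\emph{Limit points of $\bar w_1$.} Since $\frac{\diff}{\diff\tau}\Psi$ differs from $\|\nabla_{\mathbb S}\Psi\|^2$ by an integrable error and $\Psi$ converges, we get $\int^\infty\|\nabla_{\mathbb S}\Psi\|^2\,\diff\tau<\infty$. Standard $\omega$-limit reasoning for the $C^1$ field $X$, whose restriction to $r_1=0$ is the spherical gradient flow of $\Psi_{w_{2,*}}$ with $w_2$ frozen, then shows that every limit point $u_*$ satisfies $\nabla_{\mathbb S}\Psi_{w_{2,*}}(u_*)=0$. The key ingredient is that $\nabla_{\mathbb S}\Psi$ is uniformly Lipschitz, so a gradient bounded away from zero near $u_*$ would persist over a $\tau$-interval of fixed length and contribute a fixed positive amount to the integral infinitely often. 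Continuity then gives $\Psi_{w_{2,*}}(u_*)=\psi_*$.

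The main obstacle is controlling the error terms in $\frac{\diff}{\diff\tau}\Psi$ uniformly in the $T=\infty$ case, where the integrability of $r_1$ over infinite time is exactly what the hypothesis on $\int\|\dot w\|$ has to supply.
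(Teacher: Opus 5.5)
Your outline follows the paper's proof. You pass to rescaled time and write
\(\frac{\diff}{\diff\tau}\Psi_{w_2}(\bar w_1)=\|\nabla_{\mathbb S^{d-1}}\Psi_{w_2}(\bar w_1)\|_2^2+D_{w_2}\Psi_{w_2}(\bar w_1)[w_2']\). Once the second term is integrable in \(\tau\), \(\psi\) converges and \(\|\nabla_{\mathbb S^{d-1}}\Psi\|_2^2\) is integrable in \(\tau\). A Barbalat-type uniform-continuity argument then gives the critical-point condition at limit points, and the sign of \(\psi_*\) comes from \(\dot r_1=\Psi-r_1/2\). The limit-point and sign steps are fine, and so is the \(T<\infty\) case. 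The gap is the integrability of the error term when \(T=\infty\). You leave this open and suggest closing it via \(\int r_1\,\diff t<\infty\). That bound is not available. Finite path length only gives \(r_1(t)\le\int_t^\infty\|\dot w_1(s)\|_2\,\diff s\), which is compatible with non-integrable decay such as \(r_1\sim1/t\), and nothing in the hypotheses rules that out. So the route you name as the main obstacle would fail.

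The route is also unnecessary. The apparent need for \(\int r_1\,\diff t\) comes from splitting \(w_2'=r_1F_b(w_2)+r_1^2\nabla_{w_2}\Psi\). Do not split. Since \(w_2'=r_1\dot w_2\) and \(\diff\tau=\diff t/r_1\), you have \(\int\|w_2'\|_2\,\diff\tau=\int\|\dot w_2\|_2\,\diff t\). Hence \(|D_{w_2}\Psi[w_2']|\le C\|w_2'\|_2\), where \(C\) bounds \(D_{w_2}\Psi\) on the compact set where eventually \(r_1\le C_0\) and \(c_0\le r_2\le C_0\). Its \(\tau\)-integral is at most \(C\int\|\dot w_2\|_2\,\diff t\). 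This is finite by hypothesis when \(T=\infty\), and because \(\dot w_2\) is bounded when \(T<\infty\), so one line handles both cases. This is exactly how the paper proceeds, and with it the rest of your argument goes through.

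There is also a minor slip in the divergence of \(\tau\). To obtain \(\log r_1(t)\ge\log r_1(0)-C\tau(t)-t/2\) you need the lower bound \(\frac{\diff}{\diff t}\log r_1\ge -C/r_1-\tfrac12\), which follows from \(\Psi\ge -r_2/2\ge -C\), not an upper bound. With that correction your argument is a valid alternative to the paper's. The paper instead uses boundedness of \(\dot r_1\) to get \(r_1\le C(T-t)\), and hence \(\tau\gtrsim\log\frac{1}{T-t}\).
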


\begin{proof}[Proof of \cref{lem:finite_collision_limit_constraints}]
By \eqref{eq:coercive-loss-bound} and \cref{lem:coercive_bound_and_no_double_zero}, there are \(0<c_0\le C_0\) and \(t_0<T\) such that \(r_1(t)\le C_0\) and \(c_0\le r_2(t)\le C_0\) for \(t_0\le t<T\). On the compact set \(\{|r_1|\le C_0,\ c_0\le\|w_2\|_2\le C_0\}\), the field \(X\), its derivative, and the right-hand sides of the first and third equations in \eqref{eq:zero-face-polar-flow} are bounded.

If \(T<\infty\), then \(\dot w_2\) and \(\dot r_1\) are bounded on \([t_0,T)\). Hence \(w_2(t)\) converges, and \(r_1(t)\le C(T-t)\) for some \(C\), so that \(\tau(t)-\tau(t_0)\ge C^{-1}\log\frac{T-t_0}{T-t}\to\infty\). If \(T=\infty\), then \(w_2(t)\) converges because its path has finite length, and \(\tau(t)-\tau(t_0)\ge(t-t_0)/C_0\to\infty\). In both cases \(\|w_{2,*}\|_2\ge c_0\), and \(\int_{\tau(t_0)}^\infty\|w_2'(\tau)\|_2\,\diff\tau =\int_{t_0}^T\|\dot w_2(t)\|_2\,\diff t<\infty\).

Let \(\psi(\tau)=\Psi_{w_2(\tau)}(\bar w_1(\tau))\). Since \(\bar w_1'=\nabla_{\mathbb S^{d-1}}\Psi_{w_2}(\bar w_1)\), we have \(\psi'=\|\nabla_{\mathbb S^{d-1}}\Psi_{w_2}(\bar w_1)\|_2^2 +D_{w_2}\Psi_{w_2}(\bar w_1)[w_2']\). The last term is integrable, because \(D_{w_2}\Psi\) is bounded on the compact set above and \(w_2'\) is integrable. Since \(\psi\) is bounded, \(\psi\) minus the integral of this term is bounded and nondecreasing. Hence \(\psi\) has a limit \(\psi_*\), and \(\int^\infty\|\nabla_{\mathbb S^{d-1}}\Psi_{w_2}(\bar w_1)\|_2^2\,\diff\tau<\infty\). The derivative of \(\tau\mapsto\|\nabla_{\mathbb S^{d-1}}\Psi_{w_2}(\bar w_1)\|_2^2\) is bounded, because \(\bar w_1'\) and \(w_2'\) are bounded and \(\nabla_{\mathbb S^{d-1}}\Psi\) is \(C^1\). Barbalat's lemma therefore gives \(\nabla_{\mathbb S^{d-1}}\Psi_{w_2}(\bar w_1)\to0\), and continuity gives the two conditions at every limit point \(u_*\).

Finally, \(\dot r_1=\Psi_{w_2}(\bar w_1)-r_1/2\) by \eqref{eq:zero-face-polar-flow}. If \(\psi_*>0\), then \(\dot r_1\ge\psi_*/4\) for \(t\) close to \(T\), which contradicts \(r_1(t)\to0\). If \(T=\infty\) and \(\psi_*<0\), then \(\dot r_1\le\psi_*/2\) for all large \(t\), so \(r_1\) reaches zero in finite time, which contradicts \(T=\infty\).
\end{proof}

\subsubsection{Equilibria of \texorpdfstring{\(X\)}{X} with \texorpdfstring{\(r_1=0\)}{r1=0}}

Let \(\mathcal Z_0\) be the set of points \((0,\bar w_1,w_2)\) with \(w_2\ne0\), \(\nabla_{\mathbb S^{d-1}}\Psi_{w_2}(\bar w_1)=0\) and \(\Psi_{w_2}(\bar w_1)\le0\). These points are equilibria of \(X\), and by \cref{lem:finite_collision_limit_constraints}, for a trajectory as in that lemma, the limit points of \((r_1,\bar w_1,w_2)\) lie in \(\mathcal Z_0\). We show that the divergence of \(X\) is positive on \(\mathcal Z_0\) except at two points, so that trajectories staying near the other points form a null set.

\begin{lemma}[A sine inequality at equilibria of \(X\)]\label{lem:nonpositive_boundary_sine_inequality}
Let \((0,\bar w_1,w_2)\in\mathcal Z_0\). Then \(r_2\sin\gamma\le\sin\theta_{11}+\sin\theta_{12}\), with equality if and only if \(\Psi_{w_2}(\bar w_1)=0\) and
\begin{equation}\label{eq:missing-teacher-cusps}
    (\bar w_1,w_2)=(-v_1,v_2) \quad\text{or}\quad (\bar w_1,w_2)=(-v_2,v_1).
\end{equation}
\end{lemma}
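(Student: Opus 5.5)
The plan is to turn the two conditions defining \(\mathcal Z_0\) into two scalar relations among \(r_2\), \(\gamma\), \(\theta_{11}\), \(\theta_{12}\), and then compare them. We write \(T(\bar w_1)=(\pi-\theta_{11})v_1+(\pi-\theta_{12})v_2\) for the teacher force and \(S=\sin\theta_{11}+\sin\theta_{12}\).

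\emph{Step 1 (tangential condition).} By \eqref{eq:sphere-gradient-l}, \(2\pi\nabla_{\mathbb S^{d-1}}\Psi_{w_2}(\bar w_1)=P^\perp_{\bar w_1}\bigl(T(\bar w_1)-(\pi-\gamma)w_2\bigr)\). This is the same tangential expression as in \eqref{eq:direction-w1}. Vanishing of this gradient gives \(T(\bar w_1)-(\pi-\gamma)w_2=\lambda\bar w_1\) with \(\lambda=\langle T(\bar w_1)-(\pi-\gamma)w_2,\bar w_1\rangle\). Taking the component orthogonal to \(\bar w_1\) and its norm, and using \(\|P^\perp_{\bar w_1}v_j\|_2=\sin\theta_{1j}\) and \(\|P^\perp_{\bar w_1}w_2\|_2=r_2\sin\gamma\), we get
\[
(\pi-\gamma)r_2\sin\gamma=\bigl\|P^\perp_{\bar w_1}T(\bar w_1)\bigr\|_2\le(\pi-\theta_{11})\sin\theta_{11}+(\pi-\theta_{12})\sin\theta_{12}.
\]
Equality requires \(P^\perp_{\bar w_1}v_1\) and \(P^\perp_{\bar w_1}v_2\) to be positively parallel whenever both are nonzero.

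\emph{Step 2 (radial condition).} Use \(2\pi l(\theta)=(\pi-\theta)\cos\theta+\sin\theta\) in each term of \(\Psi\). This gives the exact identity
\[
2\pi\Psi_{w_2}(\bar w_1)=\lambda+S-r_2\sin\gamma .
\]
Hence \(S-r_2\sin\gamma=2\pi\Psi-\lambda\), and the claim reduces to showing \(\lambda\le2\pi\Psi\). Since \(\Psi\le0\), a sufficient target is \(\lambda\le0\) together with a quantitative comparison of \(\lambda\) and \(\Psi\). I would combine this with the other form of \(\Psi\le0\), namely \(r_2\,l(\gamma)\ge l(\theta_{11})+l(\theta_{12})\). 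Dividing by sines and using the strictly decreasing profile \(g(\theta)=2\pi l(\theta)/\sin\theta=(\pi-\theta)\cot\theta+1\) rewrites it as \(r_2\sin\gamma\,g(\gamma)\ge\sum_j\sin\theta_{1j}\,g(\theta_{1j})\).

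\emph{Step 3 (contradiction argument).} Suppose \(r_2\sin\gamma>S\). Step 1 then forces \(\pi-\gamma<\sum_j(\pi-\theta_{1j})\sin\theta_{1j}/S\). So \(\gamma\) exceeds the \(\sin\)-weighted mean of \(\theta_{11},\theta_{12}\). Since \(g\) is decreasing, this should contradict the inequality of Step 2 once the geometry is used. The geometric input is the following. By the collinearity in Step 1, \(w_2\) lies in \(\operatorname{span}\{\bar w_1,T(\bar w_1)\}\), so \(\gamma\) is determined by the angle between \(\bar w_1\) and \(T(\bar w_1)\) and by \(\lambda\). Then \cref{lem:teacher-force-azimuth} controls the direction of \(T(\bar w_1)\) relative to the teacher plane, and the triangle inequality for angles (as in \cref{lem:l-two-teacher-angle-bounds}) relates \(\gamma\) to \(\theta_{11},\theta_{12}\).

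\emph{Step 4 (equality case).} Equality forces \(\Psi=0\) and equality in the norm bound of Step 1. Equality in the norm bound means the projections of \(v_1,v_2\) orthogonal to \(\bar w_1\) are aligned, which happens only when \(\bar w_1\) is in the teacher plane outside the cone of \(v_1,v_2\) or one projection vanishes. I expect the remaining conditions to pin down \(\bar w_1=-v_j\) and \(w_2=v_{3-j}\). Conversely, a direct check at \((\bar w_1,w_2)=(-v_1,v_2)\) gives \(\theta_{11}=\pi\), \(\theta_{12}=\gamma=\pi-\alpha\), and \(r_2=1\), so \(r_2\sin\gamma=\sin\alpha=S\), \(\Psi=l(\pi-\alpha)-l(\pi-\alpha)=0\), and the stationarity condition holds.

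The main obstacle is Step 3: it needs a genuine two-variable trigonometric inequality combining the tangential bound with \(\Psi\le0\), and I am not certain it closes. I would first try to prove it in the planar case, \(\bar w_1\) in the teacher plane, by parametrizing by the azimuth as in \cref{lem:planar-scalar-stationarity}. I would then reduce the out-of-plane case to it by a monotonicity argument in the out-of-plane angle, in the spirit of \cref{lem:3Dsym}.
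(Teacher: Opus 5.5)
Your Steps 1 and 2 already contain the two inequalities the proof needs. These are the tangential bound $(\pi-\gamma)r_2\sin\gamma\le\sum_j(\pi-\theta_{1j})\sin\theta_{1j}$ and the form $r_2\sin\gamma\,g(\gamma)\ge\sum_j\sin\theta_{1j}\,g(\theta_{1j})$ of $\Psi\le0$. But Step~3, which is the whole content of the lemma, is not carried out, and the route you sketch would not close it. Monotonicity of $g$ cannot give a contradiction by itself: assuming $r_2\sin\gamma>S$ enlarges the left side of the Step~2 inequality, which works against you. The geometric inputs you list (\cref{lem:teacher-force-azimuth}, a planar parametrization, monotonicity in the out-of-plane angle) also do not supply what is missing, namely a comparison between the weighted sum $\sum_j\sin\theta_{1j}\,g(\theta_{1j})$ and $g$ at the weighted mean angle.

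That comparison is a convexity statement. Set $x=\pi-\theta$ and $G(x)=g(\pi-x)=1-x\cot x$; this is $2\pi Q$ in the paper's notation, with $p=x/(2\pi)$. On $(0,\pi)$, $G$ is positive and strictly convex, since $G''(x)=2(\sin x-x\cos x)/\sin^3x>0$, and $G(x)/x=1/x-\cot x$ is strictly increasing. Let $\omega_j=\sin\theta_{1j}/S$, $\bar x=\sum_j\omega_j(\pi-\theta_{1j})$ and $y=\pi-\gamma\in(0,\pi)$. Jensen turns your Step~2 inequality into $r_2\sin\gamma\,G(y)\ge S\,G(\bar x)$, and Step~1 reads $y\,r_2\sin\gamma\le S\bar x$. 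Hence
\[
    S\,G(\bar x)\le r_2\sin\gamma\,G(y)\le S\bar x\,\frac{G(y)}{y},
\]
so $\bar x\le y$ and $r_2\sin\gamma\le S\bar x/y\le S$. No geometry is needed beyond the triangle inequality in Step~1. This is the paper's argument; it uses the exact normalized norm $\hat p$ of the projected teacher force instead of its triangle-inequality bound, but the chain is the same.

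Your sketch also leaves two points unhandled. First, if $\bar w_1=v_j$, the weight $\sin\theta_{1j}$ vanishes while $\sin\theta_{1j}\,g(\theta_{1j})=2\pi l(0)=\pi$ does not. That term therefore cannot be absorbed into the Jensen average, and the case must be treated separately; there, monotonicity of $G(x)/x$ alone gives a strict inequality. Second, in Step~4, equality in the triangle inequality still allows $\bar w_1$ to lie in the teacher plane outside the cone of $v_1,v_2$. What excludes this is equality in the strictly convex Jensen step, which forces $\theta_{11}=\theta_{12}$ when both sines are positive. Combined with positively parallel projections of equal norm and $\langle\bar w_1,v_1\rangle=\langle\bar w_1,v_2\rangle$, this gives $v_1=v_2$, a contradiction. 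So one sine vanishes and $\bar w_1=-v_j$. Equality then forces $\gamma=\theta_{1,3-j}$ and $r_2=1$, and the tangential equation gives $w_2=v_{3-j}$.
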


\begin{proof}[Proof of \cref{lem:nonpositive_boundary_sine_inequality}]
For \(\theta\in(0,\pi)\), define \(Q\) by \(Q(p(\theta))=l(\theta)/\sin\theta\), with \(p\) as in \eqref{eq:sphere-gradient-l}. By \eqref{eq:relu-kernel-profile}, \(Q(p(\theta))/p(\theta)=\cot\theta+1/(\pi-\theta)\). The function \(l(\theta)/\sin\theta\) has derivative \(-((\pi-\theta)+\sin\theta\cos\theta)/(2\pi\sin^2\theta)\), which is negative because \(\frac12|\sin2\theta|=\frac12|\sin2(\pi-\theta)|<\pi-\theta\), and second derivative \((1+(\pi-\theta)\cot\theta)/(\pi\sin^2\theta)=2l(\theta)/\sin^3\theta\), which is positive because \(l>0\) on \([0,\pi)\) by \cref{lem:l-monotone-bounds}. The derivative of \(\cot\theta+1/(\pi-\theta)\) is \(1/(\pi-\theta)^2-1/\sin^2\theta\), which is negative because \(\sin\theta=\sin(\pi-\theta)<\pi-\theta\). Since \(p\) is affine and decreasing, \(Q\) is strictly increasing and strictly convex on \((0,1/2)\), and \(Q(p)/p\) is strictly increasing there.

By \eqref{eq:sphere-gradient-l}, the condition \(\nabla_{\mathbb S^{d-1}}\Psi_{w_2}(\bar w_1)=0\) reads
\begin{equation}\label{eq:boundary-critical-tangent-balance}
    p(\theta_{11})P_{\bar w_1}^\perp v_1+p(\theta_{12})P_{\bar w_1}^\perp v_2 =r_2p(\gamma)P_{\bar w_1}^\perp\bar w_2,
\end{equation}
and the norm of the right-hand side is \(r_2p(\gamma)\sin\gamma\).

Since \(Q\) is defined only on \((0,1/2)\), we first treat the case in which \(\bar w_1\) is aligned with a teacher, say \(\theta_{11}=0\), so that \(\theta_{12}=\alpha\). Taking norms in \eqref{eq:boundary-critical-tangent-balance} gives \(p(\alpha)\sin\alpha=r_2p(\gamma)\sin\gamma\), so \(\gamma\in(0,\pi)\). Since \(\Psi_{w_2}(\bar w_1)\le0\) and \(l(0)=1/2\),
\[
    \frac12+l(\alpha) \le r_2l(\gamma) =p(\alpha)\sin\alpha\,\frac{Q(p(\gamma))}{p(\gamma)}, \qquad l(\alpha)=p(\alpha)\sin\alpha\,\frac{Q(p(\alpha))}{p(\alpha)}.
\]
Hence \(Q(p(\gamma))/p(\gamma)>Q(p(\alpha))/p(\alpha)\), so \(p(\gamma)>p(\alpha)\) and \(r_2\sin\gamma=p(\alpha)\sin\alpha/p(\gamma)<\sin\alpha =\sin\theta_{11}+\sin\theta_{12}\).

We may therefore assume that \(\theta_{11},\theta_{12}\ne0\). Let \(S=\sin\theta_{11}+\sin\theta_{12}\). Then \(S>0\), since \(S=0\) would force \(\theta_{11}=\theta_{12}=\pi\) and hence \(v_1=v_2\). A teacher with \(\theta_{1j}=\pi\) contributes nothing to \(\Psi_{w_2}(\bar w_1)\) or to \eqref{eq:boundary-critical-tangent-balance}, because \(l(\pi)=p(\pi)=0\), and we omit it from the sums below.

If the left-hand side of \eqref{eq:boundary-critical-tangent-balance} vanishes, so does the right-hand side, and \(\gamma\in\{0,\pi\}\). The case \(\gamma=\pi\) is impossible, because then \(\Psi_{w_2}(\bar w_1)=l(\theta_{11})+l(\theta_{12})>0\). Hence \(\gamma=0\) and \(r_2\sin\gamma=0<S\).

Otherwise, let \(\hat p\) be the norm of the left-hand side of \eqref{eq:boundary-critical-tangent-balance} divided by \(S\). The triangle inequality gives \(0<\hat p\le(p(\theta_{11})\sin\theta_{11}+p(\theta_{12})\sin\theta_{12})/S<1/2\), where the last inequality holds because \(p(\theta_{1j})<1/2\) for \(\theta_{1j}\ne0\). Taking norms in \eqref{eq:boundary-critical-tangent-balance} gives \(r_2p(\gamma)\sin\gamma=S\hat p>0\), so \(\gamma\in(0,\pi)\). By Jensen's inequality for the convex function \(Q\) with weights \(\sin\theta_{1j}/S\), and then by the monotonicity of \(Q\),
\begin{equation}\label{eq:boundary-jensen-chain}
    l(\theta_{11})+l(\theta_{12}) =S\sum_{j=1}^2\frac{\sin\theta_{1j}}{S}\,Q(p(\theta_{1j})) \ge SQ\Bigl(\sum_{j=1}^2\frac{\sin\theta_{1j}}{S}\,p(\theta_{1j})\Bigr) \ge SQ(\hat p).
\end{equation}
On the other hand, \(\Psi_{w_2}(\bar w_1)\le0\) gives
\[
    l(\theta_{11})+l(\theta_{12}) \le r_2l(\gamma) =r_2p(\gamma)\sin\gamma\,\frac{Q(p(\gamma))}{p(\gamma)} =S\hat p\,\frac{Q(p(\gamma))}{p(\gamma)}.
\]
Combining the two bounds, \(Q(\hat p)/\hat p\le Q(p(\gamma))/p(\gamma)\). Hence \(\hat p\le p(\gamma)\), and \(r_2\sin\gamma=S\hat p/p(\gamma)\le S\), which proves the inequality.

It remains to identify the case of equality. If \(r_2\sin\gamma=S\), then \(\hat p=p(\gamma)\), so equality holds in \eqref{eq:boundary-jensen-chain}, in the triangle inequality, and in \(\Psi_{w_2}(\bar w_1)\le0\). If both \(\sin\theta_{11}\) and \(\sin\theta_{12}\) were positive, strict convexity of \(Q\) would give \(\theta_{11}=\theta_{12}\), and equality in the triangle inequality would make \(P_{\bar w_1}^\perp v_1\) and \(P_{\bar w_1}^\perp v_2\) positive multiples of each other. These projections have the same norm, and \(\langle\bar w_1,v_1\rangle=\langle\bar w_1,v_2\rangle\), so \(v_1=v_2\), a contradiction. Hence one sine vanishes, say \(\sin\theta_{11}=0\); since \(\theta_{11}\ne0\), we have \(\theta_{11}=\pi\) and \(\bar w_1=-v_1\). Then \(S=\sin\theta_{12}>0\) and \(\hat p=p(\theta_{12})\), so \(p(\gamma)=p(\theta_{12})\) gives \(\gamma=\theta_{12}\), and \(r_2\sin\gamma=S\) gives \(r_2=1\). Now \eqref{eq:boundary-critical-tangent-balance} gives \(P_{\bar w_1}^\perp v_2=P_{\bar w_1}^\perp\bar w_2\), and \(\langle\bar w_1,v_2\rangle=\cos\theta_{12}=\cos\gamma=\langle\bar w_1,\bar w_2\rangle\), so \(w_2=v_2\). The case \(\sin\theta_{12}=0\) gives \((\bar w_1,w_2)=(-v_2,v_1)\).

Conversely, \(\Psi_{v_2}(u)=l(\angle(u,v_1))\) for every \(u\in\mathbb S^{d-1}\), because the two terms involving \(v_2\) cancel. This function is nonnegative and vanishes at \(u=-v_1\), so both \(\Psi_{v_2}(-v_1)\) and its spherical gradient vanish there. At \((\bar w_1,w_2)=(-v_1,v_2)\), we have \(r_2\sin\gamma=\sin(\pi-\alpha) =\sin\theta_{11}+\sin\theta_{12}\). The point \((-v_2,v_1)\) is symmetric.
\end{proof}

\begin{lemma}[Positive divergence of \(X\)]\label{lem:nondegenerate_boundary_volume_expansion}
Let \(\mathcal Z\) be the set \(\mathcal Z_0\) without the two points with \((\bar w_1,w_2)\) as in \eqref{eq:missing-teacher-cusps}. Then \(\operatorname{div}X>0\) on \(\mathcal Z\), where the divergence and the volume on \(\RR\times\mathbb S^{d-1}\times(\RR^d\setminus\{0\})\) are those of the product metric. Consequently, every compact \(\mathcal K\subset\mathcal Z\) has a relatively compact open neighborhood \(U\) such that the set \(\mathcal T(U)\) of points of \(U\) whose forward orbit under \(X\) stays in \(U\) is null.
\end{lemma}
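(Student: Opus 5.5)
The plan is to compute \(\operatorname{div}X\) explicitly on the invariant face \(\{r_1=0\}\), show that it equals a nonnegative multiple of \(-\Psi\) plus a positive multiple of the sine gap from \cref{lem:nonpositive_boundary_sine_inequality}, and then get the null-set statement from Liouville's formula.

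\emph{Computing the divergence on the face.} Write \(X=(X_r,X_S,X_2)\) on \(\RR\times\mathbb S^{d-1}\times(\RR^d\setminus\{0\})\). In the product metric,
\[
    \operatorname{div}X=\partial_{r_1}X_r+\operatorname{div}_{\mathbb S^{d-1}}X_S+\operatorname{div}_{w_2}X_2 .
\]
Each term at \(r_1=0\) is as follows.
\begin{itemize}
    \item Since \(X_r=r_1(\Psi_{w_2}(\bar w_1)-r_1/2)\), we get \(\partial_{r_1}X_r=\Psi_{w_2}(\bar w_1)\) at \(r_1=0\).
    \item Since \(X_2\) carries an overall factor \(r_1\) and is \(C^1\), its \(w_2\)-divergence vanishes at \(r_1=0\).
    \item Since \(X_S=\nabla_{\mathbb S^{d-1}}\Psi_{w_2}\), its spherical divergence is the Laplace--Beltrami operator \(\Delta_{\mathbb S^{d-1}}\Psi_{w_2}(\bar w_1)\).
\end{itemize}
Hence on \(\{r_1=0\}\) we have \(\operatorname{div}X=\Psi+\Delta_{\mathbb S^{d-1}}\Psi\).

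\emph{Evaluating the Laplacian.} I would use the standard zonal formula: for \(f(u)=g(\langle u,\bar b\rangle)\),
\[
    \Delta_{\mathbb S^{d-1}}f=(1-t^2)g''(t)-(d-1)t\,g'(t),\qquad t=\langle u,\bar b\rangle .
\]
Take \(g(t)=l(\arccos t)\), whose derivatives are recorded in \eqref{eq:corr-g-derivatives}. This gives
\[
    \Delta_{\mathbb S^{d-1}}\bigl[l(\angle(\cdot,b))\bigr]=\frac{\sin\theta}{2\pi}-(d-1)\frac{(\pi-\theta)\cos\theta}{2\pi}.
\]
Substitute \((\pi-\theta)\cos\theta=2\pi l(\theta)-\sin\theta\) from \eqref{eq:relu-kernel-profile}. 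Each angle term \(\theta\) then contributes
\[
    l(\theta)+\Delta\bigl[l(\angle(\cdot,b))\bigr]=-(d-2)\,l(\theta)+\frac{d}{2\pi}\sin\theta .
\]
The \(w_2\)-term carries the factor \(-r_2\). Summing the three terms of \(\Psi\) yields
\[
    \operatorname{div}X=-(d-2)\,\Psi_{w_2}(\bar w_1)+\frac{d}{2\pi}\bigl(\sin\theta_{11}+\sin\theta_{12}-r_2\sin\gamma\bigr).
\]
At the angles \(0\) and \(\pi\) the zonal formula is singular. There I would argue by continuity: \(\Psi\) is \(C^2\) by \cref{lem:relu-kernel-regularity}, so \(\Delta\Psi\) is continuous, and the right-hand side above is continuous, so the identity extends from the generic set.

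\emph{Positivity on \(\mathcal Z\).} On \(\mathcal Z_0\) we have \(\Psi\le0\), and \(d\ge2\), so the first term is nonnegative. By \cref{lem:nonpositive_boundary_sine_inequality}, the second term is nonnegative on \(\mathcal Z_0\). It is strictly positive on \(\mathcal Z\), because equality in that lemma occurs only at the two excluded points \eqref{eq:missing-teacher-cusps}. Hence \(\operatorname{div}X>0\) on \(\mathcal Z\).

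\emph{Null-set consequence.} Let \(\mathcal K\subset\mathcal Z\) be compact. Since \(X\) is \(C^1\), \(\operatorname{div}X\) is continuous. So there are \(c>0\) and a relatively compact open \(U\supset\mathcal K\), with \(\overline U\) inside the domain, such that \(\operatorname{div}X\ge c\) on \(U\).

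The set \(\mathcal T(U)\) is the intersection over \(n\) of the sets of points whose orbit stays in \(U\) on \([0,n]\). It is therefore Borel. It is also forward invariant: \(\varphi_\tau(\mathcal T(U))\subset\mathcal T(U)\subset U\).

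The flow is well defined on these points, since they stay in the compact \(\overline U\). Liouville's formula then gives
\[
    \operatorname{vol}\bigl(\varphi_\tau(\mathcal T(U))\bigr)\ge e^{c\tau}\operatorname{vol}(\mathcal T(U)).
\]
The left side is bounded by \(\operatorname{vol}(U)<\infty\), and letting \(\tau\to\infty\) forces \(\operatorname{vol}(\mathcal T(U))=0\).

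The main obstacle is the divergence computation: getting the zonal Laplacian right and checking that the final identity survives at the degenerate angles. Everything after that is routine.
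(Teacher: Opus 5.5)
Your proposal is correct and follows the paper's proof essentially step for step. It uses the same splitting of \(\operatorname{div}X\) at \(r_1=0\) into \(\Psi_{w_2}(\bar w_1)+\Delta_{\mathbb S^{d-1}}\Psi_{w_2}(\bar w_1)\), and the same per-angle identity \(\frac{d\sin\theta}{2\pi}-(d-1)l(\theta)\); your zonal Laplacian is just the trace of the spherical Hessian the paper writes out. It then uses the same continuity extension to \(\theta\in\{0,\pi\}\) and the same Liouville volume-expansion argument for \(\mathcal T(U)\), where each set of points staying in \(U\) on \([0,n]\) is open by continuous dependence, which is why \(\mathcal T(U)\) is Borel.
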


\begin{proof}[Proof of \cref{lem:nondegenerate_boundary_volume_expansion}]
At \(r_1=0\), the derivative of the first component of \(X\) with respect to \(r_1\) is \(\Psi_{w_2}(\bar w_1)\), the divergence of the second component on the sphere is \(\tr\Hess_{\mathbb S^{d-1}}\Psi_{w_2}(\bar w_1)\), and the third component has the factor \(r_1\), so its divergence in \(w_2\) vanishes. To compute the trace, let \(b\) be a unit vector, \(u\in\mathbb S^{d-1}\) and \(\theta=\angle(u,b)\). By \eqref{eq:l-first-derivative}, the first and second derivatives of \(l(\arccos(\cdot))\) at \(\cos\theta\) are \((\pi-\theta)/(2\pi)\) and \(1/(2\pi\sin\theta)\). Hence the spherical Hessian of \(u\mapsto l(\angle(u,b))\) is \(\frac{1}{2\pi\sin\theta}P_u^\perp bb^\top P_u^\perp -\frac{(\pi-\theta)\cos\theta}{2\pi}P_u^\perp\), and its trace is \(\frac{\sin\theta}{2\pi}-(d-1)\frac{(\pi-\theta)\cos\theta}{2\pi} =\frac{d\sin\theta}{2\pi}-(d-1)l(\theta)\). This identity extends to \(\theta\in\{0,\pi\}\) by continuity, since \(\Psi\) is \(C^2\). Applying it to the three terms of \(\Psi_{w_2}\), we obtain
\begin{equation}\label{eq:boundary-resolved-divergence}
    \operatorname{div}X(0,\bar w_1,w_2) =\frac d{2\pi}\bigl(\sin\theta_{11}+\sin\theta_{12}-r_2\sin\gamma\bigr) -(d-2)\Psi_{w_2}(\bar w_1).
\end{equation}
On \(\mathcal Z\), the last term is nonnegative because \(d\ge2\) and \(\Psi_{w_2}(\bar w_1)\le0\), and the first term is positive by \cref{lem:nonpositive_boundary_sine_inequality}, since the points of equality are excluded. Hence \(\operatorname{div}X>0\) on \(\mathcal Z\).

Let \(\mathcal K\subset\mathcal Z\) be compact. By continuity, there are a relatively compact open neighborhood \(U\) of \(\mathcal K\) and \(c>0\) such that \(\operatorname{div}X\ge c\) on \(U\). Orbits that stay in \(U\) exist for all \(\tau\ge0\), because \(U\) is relatively compact. For each integer \(N\ge1\), the set of points of \(U\) whose orbit stays in \(U\) on \([0,N]\) is open, by continuous dependence on initial conditions, and \(\mathcal T(U)\) is the intersection of these sets. Hence \(\mathcal T(U)\) is Borel. Let \(\Phi_\tau\) be the flow of \(X\). For \(\tau>0\), \(\Phi_\tau\) maps \(\mathcal T(U)\) injectively into \(U\), and by Liouville's formula its Jacobian determinant on \(\mathcal T(U)\) is \(\exp\int_0^\tau(\operatorname{div}X)\circ\Phi_{\tau'}\,\diff\tau'\ge e^{c\tau}\). Hence \(\operatorname{vol}(U)\ge e^{c\tau}\operatorname{vol}(\mathcal T(U))\), and letting \(\tau\to\infty\) gives \(\operatorname{vol}(\mathcal T(U))=0\).
\end{proof}

We now turn this lemma into a statement about the original variables.

\begin{lemma}\label{lem:nondegenerate-approach-null}
There are null sets \(\mathcal B_1,\mathcal B_2,\ldots\subset\Omega\) with the following property. Let \(w\) be as in \cref{lem:finite_collision_limit_constraints}, and suppose that no limit point of \((\bar w_1(t),w_2(t))\) as \(t\uparrow T\) is one of the two points in \eqref{eq:missing-teacher-cusps}. Then there is \(n\) such that \(w(t)\in\mathcal B_n\) for all \(t\) sufficiently close to \(T\).
\end{lemma}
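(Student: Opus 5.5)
We will build the null sets from \cref{lem:nondegenerate_boundary_volume_expansion} by pulling the sets \(\mathcal T(U)\) back to the original variables. First, the set \(\mathcal Z\) is \(\sigma\)-compact. It is closed in \(\{r_1=0\}\times\mathbb S^{d-1}\times(\RR^d\setminus\{0\})\) minus two points. The defining conditions \(\nabla_{\mathbb S^{d-1}}\Psi=0\) and \(\Psi\le0\) are closed, and removing the two cusps leaves a locally compact set. We therefore exhaust it by compacts
\[
\mathcal K_m=\{(0,\bar w_1,w_2)\in\mathcal Z:\ 1/m\le\|w_2\|_2\le m,\ \mathrm{dist}\bigl((\bar w_1,w_2),\text{cusps}\bigr)\ge1/m\}.
\]
For each \(m\) we take the neighborhood \(U_m\) given by \cref{lem:nondegenerate_boundary_volume_expansion}. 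We also shrink \(U_m\) to be a finite union of small open sets \(U_{m,k}\) covering \(\mathcal K_m\), and apply the lemma to each piece covering compact pieces of \(\mathcal K_m\). This yields a countable family of open sets \(U_n\), each with null trapped set \(\mathcal T(U_n)\). We define \(\mathcal B_n\) as the image of \(\mathcal T(U_n)\cap\{r_1>0\}\) under the map \((r_1,\bar w_1,w_2)\mapsto(r_1\bar w_1,w_2)\). This polar map is a \(C^1\) diffeomorphism from \(\{r_1>0\}\) onto \(\Omega\), so it sends null sets to null sets, and each \(\mathcal B_n\) is null in \(\Omega\).

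Next we check that the trajectory is eventually trapped in a single \(U_n\). By \cref{lem:finite_collision_limit_constraints}, the rescaled time satisfies \(\tau\to\infty\) as \(t\uparrow T\), so the whole tail \(t\uparrow T\) corresponds to the forward \(X\)-orbit on \([\tau(t_0),\infty)\). That lemma also gives \(r_1\to0\), \(w_2\to w_{2,*}\ne0\), and every limit point of \(\bar w_1\) a critical point with \(\Psi\le0\). Hence the \(\omega\)-limit set \(\Lambda\) of \((r_1,\bar w_1,w_2)\) lies in \(\mathcal Z_0\), and by hypothesis it avoids the cusps, so \(\Lambda\subset\mathcal Z\). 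The set \(\Lambda\) is compact, being a closed subset of \(\{0\}\times\mathbb S^{d-1}\times\{w_{2,*}\}\). The required trapping neighborhood therefore has to depend on \(\Lambda\), and it is safest to choose it compactly. Concretely, \(\Lambda\subset\mathcal K_m\) for some \(m\), and \(\Lambda\) is covered by finitely many of the pieces. The trajectory converges to \(\Lambda\), so its tail eventually lies in any fixed open neighborhood of \(\Lambda\).

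\textbf{Main obstacle.} The trajectory must eventually lie in one fixed \(U\) from a countable list, not merely in some neighborhood depending on \(\Lambda\). Pieces indexed only by \(\mathcal K_m\) may not suffice, because \(U_m\) could fail to contain the tail if the orbit wanders. The fix is to index by a countable basis. Let \(\{V_j\}\) be a countable basis of the product space, and consider every finite union \(W\) of basis elements whose closure is a compact subset of \(\mathcal Z\). The open set \(W\) then lies in the relatively compact neighborhood supplied by \cref{lem:nondegenerate_boundary_volume_expansion} for \(\overline W\), after intersecting. Since \(\mathcal T\) is monotone under restriction (orbits trapped in a smaller set are trapped in the larger one), \(\mathcal T(W\cap U)\) is null. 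These sets form a countable family. For our \(\Lambda\), we choose such a finite union \(W\supset\Lambda\) with \(\overline W\subset\) the positive-divergence neighborhood. The tail of the orbit lies in \(W\) from some \(\tau_1\) on, so the orbit point at every later time belongs to \(\mathcal T(W)\). Mapping back, \(w(t)\in\mathcal B_n\) for all \(t\) close to \(T\).

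A remaining technicality is that \(X\) is defined at \(r_1<0\) too; the trapped set is taken in the full space and then intersected with \(r_1>0\), which is harmless.
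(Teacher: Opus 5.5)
Your construction and the first half of your trapping argument are the paper's proof. You exhaust \(\mathcal Z\) by the compacts \(\mathcal K_m\), take the neighborhoods \(U_m\) from \cref{lem:nondegenerate_boundary_volume_expansion}, push \(\mathcal T(U_m)\) forward by the polar map, and observe that the limit set \(\Lambda\) is a compact subset of \(\mathcal Z\), hence of some \(\mathcal K_m\).

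At that point you are already done. \(U_m\) is an open set containing \(\mathcal K_m\supseteq\Lambda\), and you note yourself that the precompact tail of the orbit eventually lies in every open neighborhood of \(\Lambda\). So the tail lies in \(U_m\) for all large \(\tau\), each such point belongs to \(\mathcal T(U_m)\), and \(w(t)\in\mathcal B_m\) for \(t\) near \(T\). The ``main obstacle'' you raise does not exist: once \(U_m\) is an open neighborhood of \(\Lambda\), the orbit cannot wander out of it for large \(\tau\). Splitting \(U_m\) into pieces \(U_{m,k}\) is likewise unnecessary.

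You should delete the countable-basis detour, because as written it is broken. \(\mathcal Z\) lies in the hyperplane \(\{r_1=0\}\), so no nonempty open \(W\) in the product space has \(\overline W\subset\mathcal Z\), and your family is empty. Moreover, \cref{lem:nondegenerate_boundary_volume_expansion} applies only to compact subsets of \(\mathcal Z\), not to \(\overline W\). You could repair it by taking all finite unions \(W\) of basis elements with \(\overline W\) compact and \(\operatorname{div}X>0\) on \(\overline W\), and rerunning the Liouville estimate for each. That buys nothing over the direct argument you had already written.
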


\begin{proof}[Proof of \cref{lem:nondegenerate-approach-null}]
For \(n\ge1\), let \(\mathcal K_n\) be the set of points \((0,\bar w_1,w_2)\in\mathcal Z\) with \(1/n\le\|w_2\|_2\le n\) whose distance from the two excluded points is at least \(1/n\). Each \(\mathcal K_n\) is closed and bounded, hence compact. Let \(U_n\) be the neighborhood of \(\mathcal K_n\) given by \cref{lem:nondegenerate_boundary_volume_expansion}. The map \((r_1,\bar w_1,w_2)\mapsto(r_1\bar w_1,w_2)\) is smooth between manifolds of dimension \(2d\), so it maps the null set \(\mathcal T(U_n)\) onto a null set; let \(\mathcal B_n\) be its intersection with \(\Omega\).

Let \(w\) be as in the lemma. By \cref{lem:finite_collision_limit_constraints}, the set of limit points of \((r_1,\bar w_1,w_2)\) as \(\tau\to\infty\) is a compact subset of \(\mathcal Z\). It has positive distance from the two excluded points, so it lies in some \(\mathcal K_n\). Since \((r_1,\bar w_1,w_2)\) stays bounded, it approaches this limit set as \(\tau\to\infty\), and hence lies in \(U_n\) for all large \(\tau\). It therefore lies in \(\mathcal T(U_n)\) for all large \(\tau\), and \(w(t)\in\mathcal B_n\) for all \(t\) close to \(T\).
\end{proof}

\subsubsection{The equilibria \texorpdfstring{\((0,-v_1,v_2)\) and \((0,-v_2,v_1)\)}{(0,-v1,v2) and (0,-v2,v1)}}\label{subsubsec:boundary-cusps}

At the two points in \eqref{eq:missing-teacher-cusps}, both terms in \eqref{eq:boundary-resolved-divergence} vanish, so \cref{lem:nondegenerate_boundary_volume_expansion} does not apply. We treat \((\bar w_1,w_2)=(-v_1,v_2)\); exchanging \(v_1\) and \(v_2\) gives the other point. We show that a trajectory reaching this point in finite time does so at explicit rates, and that, after a blow-up of the variables by these rates, it lies on a manifold of dimension less than \(2d\).

We write \(\bar w_1=\exp_{-v_1}z\) with \(z\in v_1^\perp\), where \(\exp_{-v_1}\) is the exponential map of the sphere at \(-v_1\), whose tangent space there is \(v_1^\perp\), and we let \(\mathsf G(z)\) be the matrix of the round metric in these normal coordinates. Then \(\mathsf G(0)=I\), the first derivatives of \(\mathsf G\) vanish at \(z=0\), and \(\mathsf G(z)z=z\) by the Gauss lemma. In rescaled time, the angular equation reads \(z'=\mathsf G(z)^{-1}\nabla_z\Psi_{w_2}(\exp_{-v_1}z)\), and therefore \(\langle z,z'\rangle=\langle z,\nabla_z\Psi_{w_2}(\exp_{-v_1}z)\rangle\). We write \(K(u,v)=\|u\|_2\|v\|_2l(\angle(u,v))\) for the kernel in \eqref{eq:relu-gaussian-kernel}, so that \(\Psi_{w_2}(\bar w_1)=K(\bar w_1,v_1)+K(\bar w_1,v_2)-K(\bar w_1,w_2)\). Since \(\angle(\exp_{-v_1}z,v_1)=\pi-\|z\|_2\), the first term equals \(l(\pi-\|z\|_2)=(\sin\|z\|_2-\|z\|_2\cos\|z\|_2)/(2\pi) =\|z\|_2^3a_0(\|z\|_2^2)\), and its gradient is \(\|z\|_2z\,a_1(\|z\|_2^2)\), where \(a_0\) and \(a_1\) are real analytic with \(a_0(0)=1/(6\pi)\) and \(a_1(0)=1/(2\pi)\). By \eqref{eq:relu-kernel-gradient},
\[
    G_0\coloneqq\nabla_{w_2}K(-v_1,w_2)\big|_{w_2=v_2} =\frac{\sin\alpha\,v_2-\alpha v_1}{2\pi}, \qquad F_b(v_2)=\frac{\sin\alpha\,v_2+(\pi-\alpha)v_1}{2\pi},
\]
and a direct computation gives \(G_0^\top F_b(v_2)=-\nu_\alpha\), where
\[
    \nu_\alpha =\frac{\alpha(\pi-\alpha)-\sin^2\alpha -(\pi-2\alpha)\sin\alpha\cos\alpha}{4\pi^2}.
\]
The numerator vanishes at \(\alpha=0\) and has derivative \(2(\pi-2\alpha)\sin^2\alpha\), which is positive on \((0,\pi/2)\); hence \(\nu_\alpha>0\) for \(\alpha\in(0,\pi/2]\). Since \(F_b(v_2)\ne0\), the flow \(\dot w_2=F_b(w_2)\) passes through \(v_2\) with nonzero velocity, and we first straighten its orbit near \(v_2\).

\begin{lemma}[Coordinates near \(v_2\)]\label{lem:cusp-polynomial-boundary-coordinates}
There are a cubic polynomial curve \(\Gamma\) with \(\Gamma(0)=v_2\), \(\Gamma'(0)=-F_b(v_2)\), \(\Gamma''(0)=DF_b(v_2)F_b(v_2)\) and \(\Gamma'(s)=-F_b(\Gamma(s))+O(s^3)\) as \(s\downarrow0\), and a linear isometry \(J:\RR^{d-1}\to F_b(v_2)^\perp\), such that \(\chi(s,y)=\Gamma(s)+Jy\) is a real-analytic diffeomorphism from a neighborhood of \((0,0)\) onto a neighborhood of \(v_2\). Writing \(w_2=\chi(s,y)\) along the trajectory, the third equation in \eqref{eq:zero-face-polar-flow} reads
\begin{equation}\label{eq:missing-teacher-flow-box}
    \dot s=-1+R_s(s,y)+r_1B_s(s,y,z), \qquad \dot y=R_y(s,y)+r_1B_y(s,y,z),
\end{equation}
where \(B=(B_s,B_y)\) is real analytic near the origin, \(R=(R_s,R_y)\) is \(C^1\), and, for some \(C,\delta>0\), \(\|R(s,y)\|_2\le C(s^3+\|y\|_2)\) whenever \(0\le s\le\delta\) and \(\|y\|_2\le\delta\). Moreover, for every \(\Lambda<\infty\), after decreasing \(\delta\), there is a real-analytic function \(\widehat R\) on a neighborhood of \(\{0\}\times\{\hat y:\|\hat y\|_2\le\Lambda\}\) such that \(R(s,s^3\hat y)=s^3\widehat R(s,\hat y)\) for \(0\le s\le\delta\) and \(\|\hat y\|_2\le\Lambda\).
\end{lemma}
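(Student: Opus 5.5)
The plan is to exploit the fact that, near \(w_2=v_2\), the only non-analytic ingredient of \(F_b(w_2)=-\nabla_{w_2}L(0,w_2)\) is the aligned-pair kernel term \(K(w_2,v_2)\). Recall \(L(0,w_2)=\tfrac14\|w_2\|_2^2-K(w_2,v_1)-K(w_2,v_2)+\text{const}\). Near \(v_2\), the angle \(\angle(w_2,v_1)\) stays close to \(\alpha\in(0,\pi)\), so \(K(w_2,v_1)\) is real analytic. For the other term, the proof of \cref{lem:relu-kernel-regularity} gives \(K(w_2,v_2)=\|w_2\|_2\bigl(c/2+F(\tau)\bigr)\) with \(F(\tau)=\tau^3\psi(\tau^2)\). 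Here \(\tau=\|P_{v_2}^\perp\bar w_2\|_2\), and \(c/2\), \(\psi\), \(\|w_2\|_2\) are analytic. I will therefore split \(F_b=F_{\rm an}+F_{\rm sing}\), where \(F_{\rm sing}=-\nabla_{w_2}\bigl(\|w_2\|_2F(\tau)\bigr)\). This term is \(C^1\), vanishes with its derivative at \(v_2\), and has the form \(\tau\cdot(\text{analytic in }(w_2,\zeta))\), where \(\zeta=P_{v_2}^\perp\bar w_2\) and \(\tau=\|\zeta\|_2\). In particular \(DF_b(v_2)=DF_{\rm an}(v_2)\) exists.

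\textbf{Constructing \(\Gamma\).} I take \(\Gamma(s)=v_2-sF_b(v_2)+\tfrac{s^2}{2}DF_b(v_2)F_b(v_2)+s^3c_3\). The key observation is that \(F_b(v_2)=\bigl(\sin\alpha\,v_2+(\pi-\alpha)v_1\bigr)/(2\pi)\) has a nonzero component orthogonal to \(v_2\). Hence along \(\Gamma\) we have \(\zeta(s)=s\,\eta(s)\) with \(\eta\) analytic and \(\eta(0)=-(\pi-\alpha)P_{v_2}^\perp v_1/(2\pi)\ne0\). For \(s\ge0\) this gives \(\tau(s)=s\|\eta(s)\|_2\), which is analytic in \(s\). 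Therefore \(s\mapsto F_b(\Gamma(s))\) agrees on \([0,\delta]\) with an analytic function, and in particular has a Taylor expansion up to order \(s^2\) with \(O(s^3)\) remainder. Matching \(\Gamma'(s)=-F_b(\Gamma(s))\) at orders \(s^0\) and \(s^1\) is automatic from the first two coefficients. Matching at order \(s^2\) fixes \(c_3\), which only involves the order-\(s^2\) coefficient of \(F_b(\Gamma(s))\) and does not feed back into lower orders. This is exactly why the statement is one-sided (\(s\downarrow0\)).

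\textbf{Coordinates and the flow.} Let \(J\) be any isometry onto \(F_b(v_2)^\perp\). Then \(\chi(s,y)=\Gamma(s)+Jy\) is polynomial, and \(D\chi(0,0)=[-F_b(v_2),J]\) is invertible, so the inverse function theorem gives a real-analytic diffeomorphism near \((0,0)\). By the chain rule, the third equation of \eqref{eq:zero-face-polar-flow} becomes
\[
(\dot s,\dot y)=D\chi(s,y)^{-1}\bigl(F_b(\chi(s,y))+r_1\nabla_{w_2}\Psi_{w_2}(\exp_{-v_1}z)\bigr).
\]
I set \(B=D\chi^{-1}\nabla_{w_2}\Psi\). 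This is real analytic because \(\nabla_{w_2}\Psi_{w_2}(\bar w_1)=-\nabla_{w_2}K(\bar w_1,w_2)\) is evaluated near the pair \((-v_1,v_2)\), whose angle is \(\pi-\alpha\in(0,\pi)\), away from the non-smooth endpoints. I set \(R=D\chi^{-1}F_b\circ\chi-(-1,0)\), which is \(C^1\). Since \(D\chi(s,0)^{-1}\Gamma'(s)=(1,0)\) and \(F_b(\Gamma(s))=-\Gamma'(s)+O(s^3)\), we get \(R(s,0)=O(s^3)\) for \(s\ge0\). A \(C^1\) (Lipschitz) bound in \(y\) then gives \(\|R(s,y)\|_2\le C(s^3+\|y\|_2)\).

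\textbf{Analytic blow-up, the main obstacle.} The last claim is the delicate step, because \(R\) itself is not analytic at \(y\ne0\), \(s=0\). Substitute \(y=s^3\hat y\) with \(\|\hat y\|_2\le\Lambda\). Then \(\zeta(\chi(s,s^3\hat y))=s\,\eta(s)+O(s^3)=s\,\tilde\eta(s,\hat y)\) with \(\tilde\eta\) analytic and \(\tilde\eta(0,\hat y)=\eta(0)\ne0\). After decreasing \(\delta\) so that \(\tilde\eta\) stays nonzero on the compact \(\hat y\)-ball, \(\tau=s\|\tilde\eta\|_2\) is analytic in \((s,\hat y)\) for \(s\ge0\). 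Consequently \(F_{\rm sing}\), and hence all of \(R(s,s^3\hat y)\), coincides for \(s\in[0,\delta]\) with a function \(\widetilde R(s,\hat y)\) that is analytic near \(\{0\}\times\{\|\hat y\|_2\le\Lambda\}\). Moreover \(\|\widetilde R(s,\hat y)\|_2\le C(1+\Lambda)s^3\) for \(s\ge0\). An analytic function of \(s\) obeying this bound on \([0,\delta]\) must have vanishing coefficients of \(s^0,s^1,s^2\), so \(\widetilde R=s^3\widehat R\) with \(\widehat R\) analytic, which is the claim. The step I expect to require the most care is verifying that the \(\tau\cdot(\text{analytic})\) structure of \(F_{\rm sing}\) persists uniformly under this substitution. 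That amounts to checking that \(\|\tilde\eta\|_2\) is bounded away from zero on the whole \(\hat y\)-ball.
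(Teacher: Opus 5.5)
Your argument is correct and rests on the same mechanism as the paper's. The only non-analytic part of $F_b$ near $v_2$ comes from $K(w_2,v_2)$, and it becomes analytic after the one-sided blow-up $w_2=v_2+s\omega$, $s>0$, because $P_{v_2}^\perp F_b(v_2)\neq0$; the paper packages this as an analytic $\widetilde F(s,\omega)$ through an explicit $\arctan$ formula rather than your $\tau\cdot(\text{analytic})$ splitting.

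The one real variation is the last step. You get divisibility by $s^3$ from analyticity plus the a priori bound $\|R(s,y)\|_2\le C(s^3+\|y\|_2)$. The paper instead shows that the coefficients of $1,s,s^2$ do not depend on $\hat y$, because $D_\omega\widetilde F(0,\cdot)$ and $D_\omega^2\widetilde F(0,\cdot)$ vanish. Both routes work.

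Your claim that $c_3$ does not affect the $s^2$-coefficient of $F_b(\Gamma(s))$ needs a justification, and it is simply the Lipschitz continuity of $F_b$ near $v_2$: changing $\Gamma$ by $O(s^3)$ changes $F_b(\Gamma(s))$ by $O(s^3)$.
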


\begin{proof}[Proof of \cref{lem:cusp-polynomial-boundary-coordinates}]
For \(w_2=v_2+s\omega\) with \(s>0\) and \(P_{v_2}^\perp\omega\ne0\), \eqref{eq:relu-kernel-gradient} gives
\[
    \nabla_{w_2}K(w_2,v_2) =\frac{v_2}{2} +\frac1{2\pi}\left( \frac{s\|P_{v_2}^\perp\omega\|_2}{\|w_2\|_2^2}\,w_2 -\arctan\!\left(\frac{s\|P_{v_2}^\perp\omega\|_2}{1+s\langle v_2,\omega\rangle}\right)v_2 \right).
\]
The right-hand side is real analytic in \((s,\omega)\) near \((0,-F_b(v_2))\), because \(P_{v_2}^\perp F_b(v_2)=\frac{\pi-\alpha}{2\pi}P_{v_2}^\perp v_1\ne0\). Replacing \(\nabla_{w_2}K(w_2,v_2)\) by it in \(F_b(w_2)=-w_2/2+\nabla_{w_2}K(w_2,v_1)+\nabla_{w_2}K(w_2,v_2)\) defines a real-analytic function \(\widetilde F(s,\omega)\) near \((0,-F_b(v_2))\) with \(\widetilde F(s,\omega)=F_b(v_2+s\omega)\) for \(s>0\) and \(\widetilde F(0,\omega)=F_b(v_2)\) for all \(\omega\). In particular, \(D_\omega\widetilde F(0,\cdot)\) and \(D_\omega^2\widetilde F(0,\cdot)\) vanish, and \(\partial_s\widetilde F(0,\omega)=DF_b(v_2)\omega\), since \(F_b\) is \(C^1\) by \cref{lem:relu-kernel-regularity}.

Define
\[
    \omega_1=-\frac12\partial_s\widetilde F(0,-F_b(v_2)), \qquad \omega_2=-\frac13\Bigl(\frac12\partial_s^2\widetilde F(0,-F_b(v_2)) +D_\omega\partial_s\widetilde F(0,-F_b(v_2))[\omega_1]\Bigr),
\]
and \(\Gamma(s)=v_2+s\,\omega(s)\) with \(\omega(s)=-F_b(v_2)+s\omega_1+s^2\omega_2\). Expanding \(\widetilde F(s,\omega(s))\) in \(s\), and using that \(D_\omega\widetilde F(0,\cdot)\) and \(D_\omega^2\widetilde F(0,\cdot)\) vanish, we find that the coefficients of \(1\), \(s\) and \(s^2\) in \(\widetilde F(s,\omega(s))+\Gamma'(s)\) vanish. This proves \(\Gamma'(s)=-F_b(\Gamma(s))+O(s^3)\), and \(\Gamma''(0)=2\omega_1=DF_b(v_2)F_b(v_2)\).

Fix a linear isometry \(J:\RR^{d-1}\to F_b(v_2)^\perp\). The matrix \([\Gamma'(s)\ J]\) is invertible for small \(s\), because \(\Gamma'(0)\) is orthogonal to the range of \(J\). By the inverse function theorem, \(\chi\) is a real-analytic diffeomorphism near \((0,0)\). Define
\[
\begin{aligned}
    R(s,y)&=[\Gamma'(s)\ J]^{-1}\bigl(F_b(\chi(s,y))+\Gamma'(s)\bigr),\\
    B(s,y,z)&=[\Gamma'(s)\ J]^{-1} \nabla_{w_2}\Psi_{w_2}(\exp_{-v_1}z)\big|_{w_2=\chi(s,y)}.
\end{aligned}
\]
Since \(\dot w_2=\Gamma'(s)\dot s+J\dot y\), the third equation in \eqref{eq:zero-face-polar-flow} becomes \eqref{eq:missing-teacher-flow-box}. The function \(R\) is \(C^1\) because \(F_b\) is. The function \(B\) is real analytic, because \(\nabla_{w_2}\Psi_{w_2}(\bar w_1)=-\nabla_{w_2}K(\bar w_1,w_2)\) and the angle between \(\exp_{-v_1}z\) and \(\chi(s,y)\) stays near \(\pi-\alpha\in(0,\pi)\), where \(K\) is analytic. By the property of \(\Gamma\), \(R(s,0)=O(s^3)\), and \(R\) is Lipschitz in \(y\); this gives the bound on \(R\).

Finally, for bounded \(\hat y\) we have \(\chi(s,s^3\hat y)=v_2+s\bigl(\omega(s)+s^2J\hat y\bigr)\). Since \(D_\omega\widetilde F(0,\cdot)\) and \(D_\omega^2\widetilde F(0,\cdot)\) vanish, the coefficients of \(1\), \(s\) and \(s^2\) in \(\widetilde F(s,\omega(s)+s^2J\hat y)+\Gamma'(s)\) do not depend on \(\hat y\), and so they vanish. Dividing this analytic function by \(s^3\) and multiplying by \([\Gamma'(s)\ J]^{-1}\) defines \(\widehat R(s,\hat y)\), which satisfies \(R(s,s^3\hat y)=s^3\widehat R(s,\hat y)\) for \(s>0\).
\end{proof}

Along a trajectory that reaches \((0,-v_1,v_2)\) at time \(T\), we will see in \cref{lem:missing_teacher_cusp_rates} that \(\Psi_{w_2}(\bar w_1)\) behaves like \(-\nu_\alpha(T-t)\) and \(r_1\) like \(\nu_\alpha(T-t)^2/2\), that \(s\) is close to \(T-t\), and that \(\|z\|_2\) and \(\|y\|_2\) are at most of order \((T-t)^{1/2}\) and \((T-t)^3\). We therefore blow up the variables by these powers. When \(\alpha=\pi/2\), the constant term in the angular equation of the blow-up vanishes and \(\|z\|_2\) is at most of order \(T-t\), so this case uses different powers. The next lemma gives the expansions and the equations in the blow-up variables.

\begin{lemma}[Blow-up near \((0,-v_1,v_2)\)]\label{lem:cusp-weighted-c1-extension}
Let \(h(z)=D_{w_2}K(\exp_{-v_1}z,w_2)[F_b(v_2)]\big|_{w_2=v_2}\) and \(\beta_0=B_y(0,0,0)\). Then \(\nabla h(0)=-\frac{(\pi-2\alpha)\sin\alpha}{4\pi^2}(v_2-\cos\alpha\,v_1)\), and if \(\alpha=\pi/2\), then \(\nabla^2h(0)=\frac1{16}I+\frac1{4\pi^2}v_2v_2^\top\), where \(I\) is the identity on \(v_1^\perp\). For bounded \(\hat y\) and \((z,s)\) near \((0,0)\),
\begin{equation}\label{eq:cusp-scalar-vector-analytic-decomposition}
    \Psi_{\chi(s,s^3\hat y)}(\exp_{-v_1}z)=l(\pi-\|z\|_2)+s\,g(z,s,\hat y),
\end{equation}
where \(g\) is real analytic, \(g(z,0,\hat y)=h(z)\), and, if \(\alpha=\pi/2\), \(\partial_s\nabla_zg(0,0,\hat y)=v_2/(16\pi^2)\).

Fix \(\Lambda<\infty\), and let \(\|\hat z\|_2,|\hat r|,\|\hat y\|_2\le\Lambda\). Suppose first that \(0<\alpha<\pi/2\). If we set \(s=\zeta^2\), \(z=\zeta\hat z\), \(r_1=\zeta^4\hat r\) and \(y=\zeta^6\hat y\) with \(\zeta>0\), and change time by \(\diff\sigma=\zeta\,\diff\tau\), the rescaled-time equations for \((s,z,r_1,y)\) become
\begin{equation}\label{eq:acute-cusp-weighted-system}
\begin{aligned}
    \frac{\diff\zeta}{\diff\sigma}&=-\tfrac12\zeta^2\hat r+\zeta^6E_1, &\frac{\diff\hat z}{\diff\sigma}&=q_1(\hat z)+\zeta E_2,\\
    \frac{\diff\hat r}{\diff\sigma}&=\zeta\hat r(2\hat r-\nu_\alpha)+\zeta^2E_3, &\frac{\diff\hat y}{\diff\sigma}&=\zeta(\hat r^2\beta_0+3\hat r\hat y)+\zeta^2E_4,
\end{aligned}
\end{equation}
where \(q_1(\hat z)=\|\hat z\|_2\hat z/(2\pi)+\nabla h(0)\). If \(\alpha=\pi/2\), we set instead \(z=s\hat z\), \(r_1=s^2\hat r\) and \(y=s^3\hat y\) with \(s>0\), and \(\diff\sigma=s\,\diff\tau\); this gives
\begin{equation}\label{eq:right-angle-cusp-weighted-system}
\begin{aligned}
    \frac{\diff s}{\diff\sigma}&=-s\hat r+s^3E_1, &\frac{\diff\hat z}{\diff\sigma}&=q_2(\hat z,\hat r)+sE_2,\\
    \frac{\diff\hat r}{\diff\sigma}&=\hat r(2\hat r-\nu_\alpha)+sE_3, &\frac{\diff\hat y}{\diff\sigma}&=\hat r^2\beta_0+3\hat r\hat y+sE_4,
\end{aligned}
\end{equation}
where \(q_2(\hat z,\hat r)=\|\hat z\|_2\hat z/(2\pi)+(\nabla^2h(0)+\hat rI)\hat z +v_2/(16\pi^2)\). In the first case, the remainders \(E_1,\dots,E_4\) extend to \(C^1\) functions on a neighborhood of \(\{\zeta=0\}\), including negative values of \(\zeta\); in the second case, the same holds with \(s\) in place of \(\zeta\).
\end{lemma}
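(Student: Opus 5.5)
The plan has three parts. First compute the derivatives of \(h\) directly from the kernel gradient formula. Then obtain \eqref{eq:cusp-scalar-vector-analytic-decomposition} by splitting \(\Psi\) into its three kernel terms. Finally, substitute the weighted variables into the rescaled-time field \(X\), written in the coordinates \((r_1,z,s,y)\) of \cref{lem:cusp-polynomial-boundary-coordinates}.

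\emph{Derivatives of \(h\).} For \(u=\exp_{-v_1}z\) and \(\theta=\angle(u,v_2)\), \eqref{eq:relu-kernel-gradient} with the roles of the arguments exchanged gives
\[
\nabla_{w_2}K(u,w_2)\big|_{w_2=v_2}=\frac{\sin\theta}{2\pi}v_2+\frac{\pi-\theta}{2\pi}u .
\]
Hence \(h(z)=\frac{\sin\theta}{2\pi}\langle v_2,F_b(v_2)\rangle+\frac{\pi-\theta}{2\pi}\langle u,F_b(v_2)\rangle\). This is real analytic near \(z=0\), since \(\theta\) stays near \(\pi-\alpha\in(0,\pi)\). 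I would expand \(u=-v_1\cos\|z\|_2+\frac{\sin\|z\|_2}{\|z\|_2}z\) and \(\cos\theta=\langle u,v_2\rangle\) to second order in \(z\in v_1^\perp\), and substitute the explicit \(F_b(v_2)=\frac{\sin\alpha\,v_2+(\pi-\alpha)v_1}{2\pi}\). Collecting the linear term should give \(-\frac{(\pi-2\alpha)\sin\alpha}{4\pi^2}(v_2-\cos\alpha\,v_1)\). At \(\alpha=\pi/2\), collecting the quadratic term should give \(\frac1{16}I+\frac{1}{4\pi^2}v_2v_2^\top\). As a consistency check, \(h(0)=G_0^\top F_b(v_2)=-\nu_\alpha\).

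\emph{The decomposition.} Write \(\Psi_{w_2}(u)=K(u,v_1)+K(u,v_2)-K(u,w_2)\). The first term is \(l(\pi-\|z\|_2)\). For \(w_2=\chi(s,s^3\hat y)=\Gamma(s)+s^3J\hat y\), the difference \(K(u,v_2)-K(u,w_2)\) is analytic in \((z,s,\hat y)\) and vanishes at \(s=0\), so it equals \(s\,g\) with \(g\) analytic. The \(s\)-derivative at \(s=0\) is \(-D_{w_2}K[\Gamma'(0)]=D_{w_2}K[F_b(v_2)]\), which gives \(g(z,0,\hat y)=h(z)\). Next, \(\partial_sg(z,0,\hat y)=-\tfrac12\bigl(D^2_{w_2}K[F_b,F_b]+D_{w_2}K[\Gamma''(0)]\bigr)\), with \(\Gamma''(0)=DF_b(v_2)F_b(v_2)\) and no \(\hat y\)-dependence because of the \(s^3\) factor. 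Taking \(\nabla_z\) at \(z=0\) when \(\alpha=\pi/2\) is then an explicit computation whose target is \(v_2/(16\pi^2)\).

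\emph{The weighted system.} In rescaled time the equations are
\[
r_1'=r_1(\Psi-r_1/2),\qquad z'=\mathsf G(z)^{-1}\nabla_z\Psi,\qquad s'=r_1(-1+R_s+r_1B_s),\qquad y'=r_1(R_y+r_1B_y).
\]
Here \(\nabla_z\Psi=\|z\|_2z\,a_1(\|z\|_2^2)+s\nabla_zg\) and \(\Psi=\|z\|_2^3a_0+s\,g\).
\begin{itemize}
\item For \(0<\alpha<\pi/2\), substitute \(s=\zeta^2\), \(z=\zeta\hat z\), \(r_1=\zeta^4\hat r\), \(y=\zeta^6\hat y\). Then \(\zeta'=s'/(2\zeta)=-\tfrac12\zeta^3\hat r+O(\zeta^7)\), using \(R_s=O(s^3+\|y\|_2)\) and \(r_1B_s=O(\zeta^4)\).
\item Next, \(\hat z'=\zeta^{-1}(z'-\hat z\zeta')=\zeta q_1(\hat z)+O(\zeta^2)\), using \(\mathsf G(z)=I+O(\|z\|_2^2)\), \(a_1(0)=1/(2\pi)\) and \(g=h(0)+\nabla h(0)\cdot z+O(s+\|z\|_2^2)\).
\item For \(\hat r\), compute \(\hat r'=\zeta^{-4}r_1'-4\hat r\zeta'/\zeta\). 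This equals \(\hat r(\zeta^2h(0)+O(\zeta^3))+2\zeta^2\hat r^2+O(\zeta^3)=\zeta^2\hat r(2\hat r-\nu_\alpha)+O(\zeta^3)\).
\item For \(\hat y\), use the identity \(R(s,s^3\hat y)=s^3\widehat R(s,\hat y)\) from \cref{lem:cusp-polynomial-boundary-coordinates}. This turns \(r_1R_y\) into \(\zeta^{10}\hat r\widehat R_y\); together with \(r_1^2B_y=\zeta^8\hat r^2(\beta_0+O(\zeta))\) and \(-6\hat y\zeta'/\zeta=3\zeta^2\hat r\hat y+\dots\), it produces the \(\hat y\) equation.
\end{itemize}
Dividing by \(\zeta\) (that is, \(\diff\sigma=\zeta\,\diff\tau\)) gives \eqref{eq:acute-cusp-weighted-system}. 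The right-angle case uses the weights \(z=s\hat z\), \(r_1=s^2\hat r\), \(y=s^3\hat y\). There the constant \(\nabla h(0)\) vanishes, so the \(\hat z\)-equation picks up \(\nabla^2h(0)\hat z+\partial_s\nabla_zg\), and the term \(-\hat z\,s'/s\) contributes \(\hat r\hat z\). This produces \(q_2\).

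The step I expect to be hardest is the \(C^1\) extension of \(E_1,\dots,E_4\) across \(\zeta=0\), or \(s=0\), including negative values. Every ingredient is analytic in \((\zeta,\hat z,\hat r,\hat y)\): \(B\), \(g\), \(\widehat R\), \(\mathsf G\), \(a_0\), \(a_1\). The exceptions are factors \(\|\zeta\hat z\|_2\), which I would replace by \(\zeta\|\hat z\|_2\) in the formulas; this is the desired continuation and is \(C^1\) in \(\hat z\). The raw remainder \(R\) is only \(C^1\), and the scaling identity for \(\widehat R\) is exactly what avoids dividing \(R\) by powers of \(\zeta\). So I would verify term by term that each remainder is a polynomial in \(\zeta\) with coefficients built from these analytic or \(C^1\) pieces. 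The other laborious point is the explicit Hessian and \(\partial_s\nabla_zg\) computations at \(\alpha=\pi/2\).
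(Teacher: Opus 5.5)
Your proposal is correct and follows essentially the same route as the paper. It computes the derivatives of \(h\) from the kernel-gradient formula and factors \(K(u,v_2)-K(u,\chi(s,s^3\hat y))=s\,g\), with \(\partial_s g\) read off from \(\Gamma''(0)=DF_b(v_2)F_b(v_2)\); it then substitutes the weighted variables into the rescaled-time equations, and uses \(R(s,s^3\hat y)=s^3\widehat R(s,\hat y)\) together with the factorization of analytic remainders vanishing at \(\zeta=0\) (or \(s=0\)) to get the \(C^1\) extension. The second-order computations you defer at \(\alpha=\pi/2\) are also only summarized in the paper, and they do yield the stated \(\nabla^2h(0)\) and \(\partial_s\nabla_zg(0,0,\hat y)\).
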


\begin{proof}[Proof of \cref{lem:cusp-weighted-c1-extension}]
By \eqref{eq:relu-kernel-gradient}, \(h(z)=\bigl(\sin\theta_{12}\langle v_2,F_b(v_2)\rangle +(\pi-\theta_{12})\langle\bar w_1,F_b(v_2)\rangle\bigr)/(2\pi)\), where \(\bar w_1=\exp_{-v_1}z\). At \(z=0\), \(\bar w_1=-v_1\) and \(\theta_{12}=\pi-\alpha\), so \(h(0)=G_0^\top F_b(v_2)=-\nu_\alpha\). Since \(\bar w_1=-v_1+z+O(\|z\|_2^2)\) and \(\cos\theta_{12}=-\cos\alpha+\langle z,v_2\rangle+O(\|z\|_2^2)\), the differential of \(\theta_{12}\) at \(z=0\) is \(-\langle v_2,\diff z\rangle/\sin\alpha\). Differentiating \(h\) at \(z=0\), with \(\langle v_2,F_b(v_2)\rangle=(\sin\alpha+(\pi-\alpha)\cos\alpha)/(2\pi)\) and \(\langle v_1,F_b(v_2)\rangle=((\pi-\alpha)+\sin\alpha\cos\alpha)/(2\pi)\), gives the formula for \(\nabla h(0)\). When \(\alpha=\pi/2\), expanding \(h\) to second order in the same way gives \(h(z)=-\nu_\alpha+\|z\|_2^2/32+\langle v_2,z\rangle^2/(8\pi^2)+O(\|z\|_2^3)\), and hence the formula for \(\nabla^2h(0)\).

The terms \(K(\exp_{-v_1}z,v_2)-K(\exp_{-v_1}z,\chi(s,s^3\hat y))\) of \(\Psi_{\chi(s,s^3\hat y)}(\exp_{-v_1}z)\) are real analytic near \((z,s)=(0,0)\), because both angles stay near \(\pi-\alpha\), and they vanish at \(s=0\). Hence they equal \(s\,g(z,s,\hat y)\) with \(g\) real analytic, and \(g(z,0,\hat y)=-D_{w_2}K(\exp_{-v_1}z,v_2)[\Gamma'(0)]=h(z)\). Since \(\hat y\) enters \(\chi(s,s^3\hat y)\) only at order \(s^3\), a second-order expansion in \(s\) gives
\[
    \partial_sg(z,0,\hat y) =-\frac12\Bigl(D_{w_2}K(\exp_{-v_1}z,v_2)[\Gamma''(0)] +D_{w_2}^2K(\exp_{-v_1}z,v_2)[F_b(v_2),F_b(v_2)]\Bigr).
\]
When \(\alpha=\pi/2\), we have \(\Gamma''(0)=DF_b(v_2)F_b(v_2)=(\frac1{4\pi}-\frac18)v_1-\frac1{4\pi}v_2\), and at \(z=0\) the gradients in \(z\) of the two terms in the bracket are \(-v_2/(8\pi^2)\) and \(0\); hence \(\partial_s\nabla_zg(0,0,\hat y)=v_2/(16\pi^2)\).

By the expansion of \(l(\pi-\|z\|_2)\) given at the beginning of this subsection and \(\mathsf G(z)^{-1}z=z\), the angular equation is \(z'=\|z\|_2z\,a_1(\|z\|_2^2) +s\,\mathsf G(z)^{-1}\nabla_zg(z,s,\hat y)\), with \(y=s^3\hat y\). Because \(\mathsf G(0)=I\) and \(D\mathsf G(0)=0\), the function \(\mathsf G^{-1}\nabla_zg\) equals \(\nabla h(0)\) at \((0,0,\hat y)\), and its derivatives in \(z\) and \(s\) there are \(\nabla^2h(0)\) and \(\partial_s\nabla_zg(0,0,\hat y)\).

Suppose that \(0<\alpha<\pi/2\), and substitute \(z=\zeta\hat z\) and \(s=\zeta^2\). The functions \(g(\zeta\hat z,\zeta^2,\hat y)-g(0,0,\hat y)\) and \(\mathsf G^{-1}\nabla_zg(\zeta\hat z,\zeta^2,\hat y)-\nabla h(0)\) are analytic and vanish at \(\zeta=0\), so they are \(\zeta\) times analytic functions; also, \(a_1(\zeta^2\|\hat z\|_2^2)-1/(2\pi)\) is \(\zeta^2\) times an analytic function, and \(\hat z\mapsto\|\hat z\|_2^3\) and \(\hat z\mapsto\|\hat z\|_2\hat z\) are \(C^1\). Hence \(\Psi_{w_2}(\bar w_1)=\zeta^2(-\nu_\alpha+\zeta k_1)\) and \(z'=\zeta^2(q_1(\hat z)+\zeta k_2)\), where \(k_1\) and \(k_2\) are \(C^1\) functions of \((\zeta,\hat z,\hat y)\) on a neighborhood of \(\zeta=0\). Suppose now that \(\alpha=\pi/2\), and substitute \(z=s\hat z\). Then \(\mathsf G^{-1}\nabla_zg(s\hat z,s,\hat y)\) vanishes at \(s=0\), and its derivative in \(s\) there is \(\nabla^2h(0)\hat z+v_2/(16\pi^2)\). Taylor's formula with integral remainder gives, in the same way, \(\Psi_{w_2}(\bar w_1)=s(-\nu_\alpha+sk_1)\) and \(z'=s^2(q_2(\hat z,0)+sk_2)\), with \(k_1\) and \(k_2\) \(C^1\) on a neighborhood of \(s=0\).

By \eqref{eq:zero-face-polar-flow}, \eqref{eq:missing-teacher-flow-box} and \cref{lem:cusp-polynomial-boundary-coordinates}, the remaining rescaled-time equations are
\[
    s'=r_1\bigl(-1+s^3\widehat R_s+r_1B_s\bigr), \qquad y'=r_1\bigl(s^3\widehat R_y+r_1B_y\bigr), \qquad r_1'=r_1\bigl(\Psi_{w_2}(\bar w_1)-r_1/2\bigr).
\]
We substitute the blow-up variables and divide by \(\zeta\) to pass to the time \(\sigma\) (by \(s\) when \(\alpha=\pi/2\)), and we write \(B_y=\beta_0+\zeta\cdot(\text{analytic})\) (or \(B_y=\beta_0+s\cdot(\text{analytic})\) when \(\alpha=\pi/2\)). For example, \(s=\zeta^2\) gives \(2\zeta\zeta'=-\zeta^4\hat r(1-\zeta^6\widehat R_s-\zeta^4\hat rB_s)\), so \(\diff\zeta/\diff\sigma=\zeta'/\zeta=-\frac12\zeta^2\hat r+\zeta^6E_1\). The other equations follow in the same way from \(\hat z'=(z'-\zeta'\hat z)/\zeta\), \(\hat r'=(r_1'-4\zeta^3\zeta'\hat r)/\zeta^4\) and \(\hat y'=(y'-6\zeta^5\zeta'\hat y)/\zeta^6\), and their analogues for \(\alpha=\pi/2\); the term \(\hat r\hat z\) in \(q_2\) comes from \(-s'\hat z/s\). Each \(E_i\) is a sum of products of analytic functions, \(k_1\), \(k_2\), and polynomials in the blow-up variables. These expressions remain defined and \(C^1\) for \(\zeta\le0\) (for \(s\le0\) when \(\alpha=\pi/2\)), which gives the stated extension.
\end{proof}

\begin{lemma}[Rates of approach to \((0,-v_1,v_2)\)]\label{lem:missing_teacher_cusp_rates}
Let \(w(0)\in\Omega\) with \(T<\infty\), and suppose that \(r_1(t)\to0\), \(\bar w_1(t)\to-v_1\) and \(w_2(t)\to v_2\) as \(t\uparrow T\). Then, as \(t\uparrow T\),
\[
    \frac{s(t)}{T-t}\to1, \qquad \frac{r_1(t)}{(T-t)^2}\to\frac{\nu_\alpha}{2}, \qquad \frac{y(t)}{(T-t)^3}\to-\frac{\nu_\alpha}{6}\beta_0,
\]
and there are \(M<\infty\) and \(t_0<T\) such that \(\|z(t)\|_2\le M\sqrt{T-t}\) for \(t_0\le t<T\). If \(\alpha=\pi/2\), then also \(\|z(t)\|_2\le M(T-t)\) for \(t_0\le t<T\).
\end{lemma}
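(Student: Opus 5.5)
The plan is to work first in the original time \(t\) to get crude a priori control, and then pass to the blow-up variables of \cref{lem:cusp-weighted-c1-extension} to identify the exact limits.

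\textbf{Step 1: the coordinate \(s\).} Since \(w_2(t)\to v_2\), the coordinates \((s,y)=\chi^{-1}(w_2)\) of \cref{lem:cusp-polynomial-boundary-coordinates} tend to \((0,0)\). Moreover \(r_1\to0\) and \(B\) is bounded near the origin. The equation \(\dot s=-1+R_s+r_1B_s\) in \eqref{eq:missing-teacher-flow-box}, together with \(\|R\|_2\le C(s^3+\|y\|_2)\), then gives \(\dot s\to-1\). As \(s(T)=0\), integrating backward yields \(s(t)/(T-t)\to1\), and in particular \(s>0\) near \(T\).

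\textbf{Step 2: the bound on \(\|z\|_2\).} I would use a barrier argument in the rescaled time \(\tau\), in which \(z'=\mathsf G(z)^{-1}\nabla_z\Psi\). By \eqref{eq:cusp-scalar-vector-analytic-decomposition} and \(\langle z,\mathsf G^{-1}\nabla\rangle=\langle z,\nabla\rangle\),
\[
    \tfrac12(\|z\|_2^2)'=\|z\|_2^3a_1(\|z\|_2^2)+s\langle z,\nabla_z g\rangle\ge\tfrac{1}{4\pi}\|z\|_2^3-Cs\|z\|_2 .
\]
The right-hand side is positive as soon as \(\|z\|_2^2\ge K s\) with \(K=8\pi C\). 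Suppose that at some time \(\|z\|_2^2\ge Ks\) holds near \(T\). Then \(\|z\|_2\) increases from that time on, while \(s\) decreases by Step~1, so the inequality persists. Hence \(\|z\|_2\) is nondecreasing and bounded away from zero, contradicting \(z\to0\). This proves \(\|z\|_2\le M\sqrt{T-t}\).

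When \(\alpha=\pi/2\), \cref{lem:cusp-weighted-c1-extension} gives \(\nabla h(0)=0\). Hence \(\nabla_z g(z,s,\hat y)=O(\|z\|_2+s)\), and the same argument with
\[
    \tfrac12(\|z\|_2^2)'\ge\tfrac1{4\pi}\|z\|_2^3-Cs\|z\|_2^2-Cs^2\|z\|_2
\]
and the barrier \(\|z\|_2\ge Ks\) gives \(\|z\|_2\le M(T-t)\).

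\textbf{Step 3: the rate of \(r_1\).} With these bounds, \eqref{eq:cusp-scalar-vector-analytic-decomposition} and \(g(0,0,\hat y)=h(0)=-\nu_\alpha\) give
\[
    \Psi=O(\|z\|_2^3)+s\bigl(-\nu_\alpha+O(\|z\|_2+s)\bigr)=-\nu_\alpha(T-t)\bigl(1+o(1)\bigr).
\]
I need \(\hat y=y/s^3\) bounded to use the analytic form of \(g\); I would first get a crude bound \(\|y\|_2=O((T-t)^2)\) from \(\dot y=O(\|y\|_2+s^3+r_1)\) with \(y(T)=0\) and Gr\"onwall, and then bootstrap. Integrating \(\dot r_1=\Psi-r_1/2\) backward from \(r_1(T)=0\) gives \(r_1(t)/(T-t)^2\to\nu_\alpha/2\).

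\textbf{Step 4: the limit of \(y\).} A naive integration of \(\dot y=R_y+r_1B_y\) would need \(R_y(s,y)=o(s^3)\) together with a finer Lipschitz control, which \cref{lem:cusp-polynomial-boundary-coordinates} does not directly give. I would therefore work in the weighted system \eqref{eq:acute-cusp-weighted-system}, or \eqref{eq:right-angle-cusp-weighted-system} when \(\alpha=\pi/2\). By Steps~1--3 the quantities \(\hat r\) and \(\hat z\) stay bounded and \(\hat r\to\nu_\alpha/2\). The equation for \(\hat y\) is then, up to a factor \(\zeta\) (or a time change), a linear ODE with coefficient \(3\hat r\to3\nu_\alpha/2>0\) and forcing \(\hat r^2\beta_0\). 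Its only solution that remains bounded as \(\sigma\to\infty\) tends to the equilibrium
\[
    -\frac{\hat r\beta_0}{3}\longrightarrow-\frac{\nu_\alpha\beta_0}{6}.
\]
Every other solution is repelled, since \(\int\zeta\,\diff\sigma=\int\diff\tau\cdot\zeta^2\) diverges in the appropriate sense. Combined with \(s/(T-t)\to1\), this gives \(y/(T-t)^3\to-\nu_\alpha\beta_0/6\).

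The main obstacle is this last step, together with the bootstrap that establishes boundedness of \(\hat y\) in the first place. Here the \(C^1\) remainders \(E_i\) and the divergence of the \(\sigma\)-integral of the coefficient have to be controlled carefully. My first attempt would be a comparison argument on \(\hat y+\hat r\beta_0/3\), showing that it satisfies a differential inequality that forces it to zero.
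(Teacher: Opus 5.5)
Your argument is correct, but it is organized differently from the paper's proof, and one of your detours rests on a mistaken worry.

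\textbf{Ordering of the a priori bounds.} The paper first uses $\dot w_2\to F_b(v_2)$ to get $\|w_2-v_2\|_2\le C(T-t)$. It then observes that $\Psi\ge-C(T-t)$ holds with no control of $z$, because $l(\pi-\|z\|_2)\ge0$. The terminal formula $r_1(t)=-\int_t^Te^{(t'-t)/2}\Psi\,\mathrm dt'$ therefore gives $0<r_1\le C(T-t)^2$ at once. Gr\"onwall on the $(s,y)$ equations then gives $\|y\|_2\le C(T-t)^3$ without any bootstrap.

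\textbf{The bound on $\|z\|_2$.} The paper runs a Riccati blow-up argument on $m_1=\|z\|_2/\sqrt s$, which needs $\int\sqrt s/r_1\,\mathrm dt=\infty$ and hence the bound on $r_1$. Your barrier argument uses the hypothesis $z\to0$ directly: in the region $\|z\|_2^2\ge Ks$ the norm $\|z\|_2$ itself increases. This is shorter and independent of $r_1$.

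\textbf{Your Step 4 is unnecessary.} You do not need $R_y=o(s^3)$. Since $\|R_y\|_2\le C(s^3+\|y\|_2)=O((T-t)^3)$ pointwise, you get $\int_t^TR_y\,\mathrm dt'=O((T-t)^4)=o((T-t)^3)$. So integrate $\dot y=R_y+r_1B_y$ directly and use $r_1/(T-t)^2\to\nu_\alpha/2$, $B_y\to\beta_0$ and dominated convergence. This gives $-\frac{\nu_\alpha}{2}\beta_0\int_0^1(1-\lambda)^2\,\mathrm d\lambda=-\frac{\nu_\alpha}{6}\beta_0$, which is exactly the paper's route. Your repelling-equilibrium argument is valid but heavier. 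It sets $u=\hat y+\beta_0\hat r/3$, so that $\frac{\mathrm du}{\mathrm d\sigma}=\zeta\bigl(3\hat r u+o(1)\bigr)$, and uses $\int\zeta\,\mathrm d\sigma=\int s/r_1\,\mathrm dt=\infty$.

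\textbf{An ordering fix in Step 2.} There you invoke \eqref{eq:cusp-scalar-vector-analytic-decomposition}, which presupposes that $\hat y=y/s^3$ is bounded, before you have shown it.
\begin{itemize}
\item For the $\sqrt{T-t}$ bound, use instead the direct estimate $\bigl\|\nabla_z\Psi-\|z\|_2z\,a_1(\|z\|_2^2)\bigr\|_2\le C\|w_2-v_2\|_2\le C(T-t)$.
\item Do the $\alpha=\pi/2$ refinement only after the bound on $y$. There $\|y\|_2=O(s^2)$ already suffices, because $w_2-v_2=-sF_b(v_2)+O(s^2+\|y\|_2)$ and $\nabla h(0)=0$.
\end{itemize}
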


\begin{proof}[Proof of \cref{lem:missing_teacher_cusp_rates}]
Throughout the proof, \(t\) is close to \(T\), and \(C\) denotes a constant that may change from line to line. By the third equation in \eqref{eq:zero-face-polar-flow}, \(\dot w_2\to F_b(v_2)\), because \(r_1\to0\) and \(\nabla_{w_2}\Psi\) is bounded near \((-v_1,v_2)\); so \((w_2(t)-v_2)/(T-t)\to-F_b(v_2)\) and \(\|w_2(t)-v_2\|_2\le C(T-t)\).

The difference \(K(\exp_{-v_1}z,v_2)-K(\exp_{-v_1}z,w_2)\) is real analytic near \((z,w_2)=(0,v_2)\), vanishes at \(w_2=v_2\), and has gradient \(-G_0\) in \(w_2\) at \((0,v_2)\). Together with the expansion of \(l(\pi-\|z\|_2)\) given at the beginning of this subsection, this gives \(\delta>0\) and \(C_1<\infty\) such that, for \(\|z\|_2\le\delta\) and \(\|w_2-v_2\|_2\le\delta\),
\begin{equation}\label{eq:missing-teacher-coarse-expansion}
\begin{aligned}
    \Bigl|\Psi_{w_2}(\exp_{-v_1}z)-\frac{\|z\|_2^3}{6\pi} +G_0^\top(w_2-v_2)\Bigr| &\le C_1\bigl(\|z\|_2^5+\|z\|_2\|w_2-v_2\|_2+\|w_2-v_2\|_2^2\bigr),\\
    \Bigl\|\nabla_z\Psi_{w_2}(\exp_{-v_1}z)-\frac{\|z\|_2z}{2\pi}\Bigr\|_2 &\le C_1\bigl(\|w_2-v_2\|_2+\|z\|_2^3\bigr).
\end{aligned}
\end{equation}

By the first inequality, \(\Psi_{w_2(t)}(\bar w_1(t))\ge-C(T-t)\), since \(\|z\|_2^3/(6\pi)-C_1\|z\|_2^5\ge0\) for small \(\|z\|_2\). Solving the first equation in \eqref{eq:zero-face-polar-flow} backward from \(r_1(T)=0\) gives
\begin{equation}\label{eq:missing-teacher-terminal-radial-identity}
    r_1(t)=-\int_t^Te^{(t'-t)/2}\Psi_{w_2(t')}(\bar w_1(t'))\,\diff t',
\end{equation}
and hence \(0<r_1(t)\le C(T-t)^2\). By \eqref{eq:missing-teacher-flow-box}, \(\dot s\to-1\), and \(s(t)\to0\) because \(w_2\to v_2=\chi(0,0)\). Integrating \eqref{eq:missing-teacher-flow-box} from \(t\) to \(T\), and using the bound on \(R\) in \cref{lem:cusp-polynomial-boundary-coordinates} and the boundedness of \(B\), we obtain
\[
    |s(t)-(T-t)|+\|y(t)\|_2 \le C\int_t^T\bigl(r_1(t')+(T-t')^3+\|y(t')\|_2\bigr)\,\diff t'.
\]
Since \(r_1(t')\le C(T-t')^2\), Gr\"onwall's inequality gives \(|s(t)-(T-t)|+\|y(t)\|_2\le C(T-t)^3\). In particular, \(s(t)/(T-t)\to1\), and \((T-t)/2\le s(t)\le2(T-t)\).

Next we bound \(\|z\|_2\). By the second inequality in \eqref{eq:missing-teacher-coarse-expansion} and the identity for \(\langle z,z'\rangle\) given at the beginning of this subsection, the upper right Dini derivative of \(\|z\|_2\) in rescaled time satisfies \(D^+_\tau\|z\|_2\ge c_2\|z\|_2^2-C_2s\) for some \(c_2,C_2>0\). Let \(m_1=\|z\|_2/\sqrt{s}\). Since \(s'=-r_1(1-R_s-r_1B_s)\le0\), the factor \(s^{-1/2}\) does not decrease, so \(D^+_\tau m_1\ge\sqrt{s}\,(c_2m_1^2-C_2)\). Choose \(M_1\) with \(c_2M_1^2\ge2C_2\). If \(m_1\ge M_1\) at some time \(t_1\), then \(m_1\) stays above \(M_1\), and \(D^+_\tau m_1\ge(c_2/2)\sqrt{s}\,m_1^2\) gives
\[
    \frac1{m_1(t)} \le\frac1{m_1(t_1)}-\frac{c_2}{2}\int_{t_1}^t\frac{\sqrt{s(t')}}{r_1(t')}\,\diff t'.
\]
Since \(\sqrt{s}/r_1\ge C^{-1}(T-t)^{-3/2}\), the integral tends to infinity as \(t\uparrow T\), and the right-hand side becomes negative, which is impossible. Hence \(m_1<M_1\), and \(\|z\|_2\le M_1\sqrt{s}\le M_1\sqrt{2(T-t)}\).

Dividing the first inequality in \eqref{eq:missing-teacher-coarse-expansion} by \(T-t\), and using \(\|z\|_2^3/(T-t)\le C\sqrt{T-t}\to0\) and \((w_2-v_2)/(T-t)\to-F_b(v_2)\), we obtain \(\Psi_{w_2}(\bar w_1)/(T-t)\to G_0^\top F_b(v_2)=-\nu_\alpha\). Substituting \(t'=t+\lambda(T-t)\) in \eqref{eq:missing-teacher-terminal-radial-identity} gives
\[
    \frac{r_1(t)}{(T-t)^2} =-\int_0^1e^{\lambda(T-t)/2}\, \frac{\Psi_{w_2(t')}(\bar w_1(t'))}{T-t}\,\diff\lambda.
\]
For fixed \(\lambda\in[0,1)\), the quotient in the integrand equals \((1-\lambda)\Psi_{w_2(t')}(\bar w_1(t'))/(T-t')\), which tends to \(-(1-\lambda)\nu_\alpha\), and these quotients are bounded. Dominated convergence gives \(r_1(t)/(T-t)^2\to\nu_\alpha\int_0^1(1-\lambda)\,\diff\lambda =\nu_\alpha/2\). Finally, \(y(t)=-\int_t^T\bigl(R_y(s,y)+r_1B_y(s,y,z)\bigr)\,\diff t'\). The integral of \(R_y\) is \(O((T-t)^4)\) by the bound on \(R\). In the integral of \(r_1B_y\), the substitution \(t'=t+\lambda(T-t)\), the limit of \(r_1/(T-t)^2\), the limit \(B_y\to\beta_0\), and dominated convergence give \(y(t)/(T-t)^3\to-\frac{\nu_\alpha}{2}\beta_0\int_0^1(1-\lambda)^2\,\diff\lambda =-\frac{\nu_\alpha}{6}\beta_0\).

It remains to treat \(\alpha=\pi/2\), where \(\nabla h(0)=0\). Write \(y=s^3\hat y\); by the bounds above, \(\hat y\) is bounded. By \eqref{eq:cusp-scalar-vector-analytic-decomposition} and Taylor's formula for \(\nabla_zg\) at \((0,0,\hat y)\),
\[
    \nabla_z\Psi_{w_2}(\exp_{-v_1}z) =\|z\|_2z\,a_1(\|z\|_2^2) +s\nabla^2h(0)z+\frac{s^2}{16\pi^2}v_2 +O\bigl(s\|z\|_2^2+s^2\|z\|_2+s^3\bigr),
\]
uniformly in \(\hat y\). Since \(\nabla^2h(0)\) is positive definite and \(\|z\|_2\le M_1\sqrt{s}\), this gives \(D^+_\tau\|z\|_2\ge c_3\|z\|_2^2-C_3s^2\) for some \(c_3,C_3>0\). Let \(m_2=\|z\|_2/s\). As before, \(D^+_\tau m_2\ge s(c_3m_2^2-C_3)\), and \(\int_{t_1}^ts(t')/r_1(t')\,\diff t'\to\infty\) because \(s/r_1\ge C^{-1}(T-t)^{-1}\). The argument used for \(m_1\) gives \(\|z\|_2\le M_2s\le2M_2(T-t)\). The lemma holds with \(M\) the largest of the constants obtained.
\end{proof}

\begin{proposition}[Trajectories reaching \((0,-v_1,v_2)\)]\label{prop:missing_teacher_cusp_collision_null}
There is a null set \(\mathcal B\subset\Omega\) with the following property. If \(w(0)\in\Omega\), \(T<\infty\), and \(r_1(t)\to0\), \(\bar w_1(t)\to-v_1\) and \(w_2(t)\to v_2\) as \(t\uparrow T\), then \(w(t)\in\mathcal B\) for all \(t\) sufficiently close to \(T\).
\end{proposition}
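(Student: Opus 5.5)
We will show that, in the blow-up coordinates of \cref{lem:cusp-weighted-c1-extension}, every trajectory that reaches $(0,-v_1,v_2)$ in finite time ends up on a $C^1$ invariant manifold of dimension less than $2d$. We then pull this manifold back to $\Omega$, where it becomes a null set. The case $0<\alpha<\pi/2$ and the case $\alpha=\pi/2$ follow the same scheme; only the weights differ.

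\textbf{Step 1: the blown-up trajectory converges to an equilibrium.} Take a trajectory as in the statement. \Cref{lem:missing_teacher_cusp_rates} gives $s\sim T-t$, $r_1/(T-t)^2\to\nu_\alpha/2$, and $y/(T-t)^3\to-\nu_\alpha\beta_0/6$. In the acute case, $\zeta^2=s$, so these rates give
\[
\hat r=\frac{r_1}{s^2}\to\frac{\nu_\alpha}{2},\qquad \hat y=\frac{y}{s^3}\to-\frac{\nu_\alpha\beta_0}{6}.
\]
The bound $\|z\|_2\le M\sqrt{T-t}$ shows that $\hat z=z/\zeta$ stays bounded. The rescaled time also diverges: $\diff\sigma=\zeta\,\diff\tau=\zeta\,\diff t/r_1\asymp(T-t)^{-3/2}\,\diff t$, so $\sigma\to\infty$.

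The $\hat z$ equation in \eqref{eq:acute-cusp-weighted-system} is, to leading order, $\hat z'=q_1(\hat z)$, the gradient flow of $\|\hat z\|_2^3/(6\pi)+\langle\nabla h(0),\hat z\rangle$. This function is strictly convex and coercive, so its critical point $\hat z_*$ is unique and hyperbolically repelling. A bounded $\hat z$ must therefore converge to $\hat z_*$; the perturbation $\zeta E_2$ is integrable because $\zeta\,\diff\sigma$ is comparable to $\diff s$. Hence the blown-up state converges to
\[
p_*=\Bigl(\zeta=0,\ \hat z_*,\ \hat r=\tfrac{\nu_\alpha}{2},\ \hat y_*=-\tfrac{\nu_\alpha\beta_0}{6}\Bigr).
\]
In the right-angle case, the state converges in the same way to $p_*=(s=0,\hat z_*,\nu_\alpha/2,\hat y_*)$, where $\hat z_*$ is the zero of $q_2(\cdot,\nu_\alpha/2)$. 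This zero is unique because $q_2$ is the gradient of a strictly convex function: $\nabla^2h(0)+\hat rI$ is positive definite.

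\textbf{Step 2: centre-unstable reduction.} We linearize at $p_*$. In the right-angle system \eqref{eq:right-angle-cusp-weighted-system}:
\begin{itemize}
\item the $s$ direction has eigenvalue $-\hat r_*<0$;
\item the $\hat z$ block, with $d-1$ directions, is positive definite;
\item the $\hat r$ direction has eigenvalue $2\hat r_*-\nu_\alpha+\hat r_*=\nu_\alpha/2>0$;
\item the $\hat y$ block, with $d-1$ directions, has eigenvalue $3\hat r_*>0$.
\end{itemize}
The only stable direction is therefore $s$. The trajectory converges to $p_*$ and lies in $\{s>0\}$, so it lies on the one-dimensional local stable manifold of $p_*$. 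We extend the field $C^1$ across $s\le0$ using \cref{lem:cusp-weighted-c1-extension}, which makes the stable manifold theorem available.

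The acute case is degenerate: in $\zeta,\hat r,\hat y$ the linear part is zero at $\zeta=0$, and the flow is slow, since $\diff\zeta/\diff\sigma\approx-\tfrac{\nu_\alpha}{4}\zeta^2$. Here we use that the $\hat z$ block is strictly unstable. The centre-stable manifold $W^{cs}$ of $p_*$ is tangent to $\{\hat z=\hat z_*\}$ and has dimension $1+1+(d-1)=d+1$, counting $\zeta$, $\hat r$ and $\hat y$. Any trajectory converging to $p_*$ lies on $W^{cs}$. So in both cases the trajectory lies, from some time on, on a $C^1$ manifold of dimension at most $d+1<2d$ because $d\ge2$. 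Each manifold is only local, but we cover by countably many such neighbourhoods.

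\textbf{Step 3: pulling back to $\Omega$.} For $\zeta>0$, the map from blow-up variables to $(r_1,\bar w_1,w_2)$ is a smooth diffeomorphism. Composed with the smooth map $(r_1,\bar w_1,w_2)\mapsto(r_1\bar w_1,w_2)$, it sends a $C^1$ submanifold of dimension less than $2d$ to a null subset of $\Omega$. We let $\mathcal B$ be this image together with the image obtained by the symmetric construction at $(0,-v_2,v_1)$. A trajectory as in the statement lies on $W^{cs}\cap\{\zeta>0\}$ for all $t$ close to $T$, so $w(t)\in\mathcal B$.

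\textbf{Main obstacle.} The hardest part is the acute case. There the centre directions are not normally hyperbolic in the naive sense, so the usual centre-stable manifold theorem has to be applied with care. Its statement that all trajectories converging to $p_*$ lie on $W^{cs}$ depends only on the strict instability of the $\hat z$ block. I would first try to prove this via a cone or graph-transform argument restricted to that block. If the centre-manifold statement proves delicate, the fallback is a volume-contraction argument in the spirit of \cref{lem:nondegenerate_boundary_volume_expansion}, applied in the $\hat z$ directions.
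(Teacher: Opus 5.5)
Your outline matches the paper's proof. You blow up with the weights of \cref{lem:cusp-weighted-c1-extension}, show that the blown-up trajectory converges to the equilibrium $p_*$, and place it on a $C^1$ invariant manifold: the center-stable one of dimension $d+1$ in the acute case, or the one-dimensional stable curve at $\alpha=\pi/2$. You then push that manifold down by the $C^1$ blow-down map to obtain a null set.

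One justification in Step~1 is false. In the acute case the perturbation $\zeta E_2$ is \emph{not} integrable in $\sigma$. Indeed $\zeta\,d\sigma=\zeta^2\,dt/r_1=s\,dt/r_1\sim\frac{2}{\nu_\alpha}\frac{dt}{T-t}$, which is comparable to $ds/s$, not to $ds$. Equivalently, $d\zeta/d\sigma\approx-\frac{\nu_\alpha}{4}\zeta^2$ gives $\zeta\,d\sigma\approx-\frac{4}{\nu_\alpha}\,d\zeta/\zeta$, so $\int\zeta\,d\sigma=\infty$. This also contradicts your own estimate $d\sigma\asymp(T-t)^{-3/2}dt$. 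Integrability does hold at $\alpha=\pi/2$, where $s\,d\sigma\sim\frac{2}{\nu_\alpha}dt$.

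The conclusion $\hat z\to\hat z_*$ still holds, because all you need is $\zeta E_2\to0$. Let $A$ bound $\|\hat z-\hat z_*\|_2$ along the trajectory. Since $q_1(\hat z_*)=0$ and $V_1$ is strictly convex, $\langle\hat z-\hat z_*,q_1(\hat z)\rangle\ge c_a>0$ on the annulus $\{a\le\|\hat z-\hat z_*\|_2\le A\}$. So for large $\sigma$, once $\|\hat z-\hat z_*\|_2\ge a$, the quantity $\|\hat z-\hat z_*\|_2^2$ grows at a rate bounded below. That contradicts boundedness unless $\hat z$ eventually stays within $a$ of $\hat z_*$. The paper reaches the same conclusion differently: limits of time-translates are bounded entire solutions of $\hat z'=q_1(\hat z)$, hence constant. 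The same argument covers $\alpha=\pi/2$, where the perturbation also contains $(\hat r-\nu_\alpha/2)\hat z$.

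A few smaller points.
\begin{itemize}
\item At $\alpha=\pi/2$ the $\hat r$-eigenvalue is $\partial_{\hat r}[\hat r(2\hat r-\nu_\alpha)]=4\hat r_*-\nu_\alpha=\nu_\alpha$, not $\nu_\alpha/2$; only the sign matters.
\item You may read the spectrum off the diagonal only because the Jacobian is block lower triangular in the order $(s,\hat r,\hat z,\hat y)$. You should state this.
\item In the acute case the center subspace is a graph over $(\zeta,\hat r,\hat y)$, tilted in the $\zeta$-direction by $E_2$. It is not necessarily tangent to $\{\hat z=\hat z_*\}$, though this does not affect the dimension count.
\end{itemize}

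The obstacle you anticipate is not there. The $C^1$ center-stable manifold theorem for an equilibrium only needs the spectrum to split into the $(d+1)$-fold eigenvalue $0$ and the eigenvalues of $Dq_1(\hat z_*)$. These are positive because $\hat z_*\neq0$, which follows from $\nabla h(0)\neq0$. The resulting local manifold contains every forward orbit that stays near $p_*$, and the paper simply cites the theorem. Since $p_*$ is a single point, you need neither a countable cover nor the fallback volume argument.
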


\begin{proof}[Proof of \cref{prop:missing_teacher_cusp_collision_null}]
Suppose first that \(0<\alpha<\pi/2\), and consider the blow-up variables of \eqref{eq:acute-cusp-weighted-system}. By \cref{lem:missing_teacher_cusp_rates}, as \(t\uparrow T\) we have \(\zeta=\sqrt{s}\to0\), \(\hat r=r_1/s^2\to\nu_\alpha/2\), \(\hat y=y/s^3\to-\nu_\alpha\beta_0/6\), and \(\hat z=z/\sqrt{s}\) stays bounded. Moreover, \(\diff\sigma/\diff t=\sqrt{s}/r_1\) behaves like \((2/\nu_\alpha)(T-t)^{-3/2}\), so \(\sigma\to\infty\). We apply \cref{lem:cusp-weighted-c1-extension} with \(\Lambda\) larger than the bounds on \(\|\hat z\|_2\), \(|\hat r|\) and \(\|\hat y\|_2\) along the trajectory. Then \(Y=(\zeta,\hat z,\hat r,\hat y)\) is a bounded solution of \eqref{eq:acute-cusp-weighted-system} for all large \(\sigma\), with \(\zeta(\sigma)\to0\).

Since \((\pi-2\alpha)\sin\alpha>0\), we have \(\nabla h(0)\ne0\), and \(q_1=\nabla V_1\) for the strictly convex function \(V_1(\hat z)=\|\hat z\|_2^3/(6\pi)+\langle\nabla h(0),\hat z\rangle\), whose only critical point is \(\hat z_1^*=-(2\pi/\|\nabla h(0)\|_2)^{1/2}\,\nabla h(0)\). Along a solution of \(\diff\hat z/\diff\sigma=q_1(\hat z)\), the derivative of \(\frac12\|\hat z-\hat z_1^*\|_2^2\) is \(\langle\hat z-\hat z_1^*,\nabla V_1(\hat z)-\nabla V_1(\hat z_1^*)\rangle\). By the monotonicity of \(\nabla V_1\) along rays from \(\hat z_1^*\) and the compactness of the unit sphere, this derivative has a positive lower bound on \(\{\|\hat z-\hat z_1^*\|_2\ge a\}\) for each \(a>0\). Hence no solution with \(\hat z(\sigma')\ne\hat z_1^*\) at some \(\sigma'\) is defined and bounded on \([\sigma',\infty)\), so a solution that is defined and bounded on \(\RR\) is constant, equal to \(\hat z_1^*\).

Let \(\sigma_n\to\infty\). The translates \(Y(\sigma_n+\cdot)\) are bounded and have bounded derivatives, so a subsequence converges uniformly on compact intervals to a solution of \eqref{eq:acute-cusp-weighted-system} on \(\RR\) with \(\zeta\equiv0\). Its \(\hat z\)-component is a bounded solution of \(\diff\hat z/\diff\sigma=q_1(\hat z)\) on \(\RR\), so it equals \(\hat z_1^*\). Therefore \(\hat z(\sigma)\to\hat z_1^*\), and \(Y(\sigma)\) converges to \(Y_1^*=(0,\hat z_1^*,\nu_\alpha/2,-\nu_\alpha\beta_0/6)\). At \(Y_1^*\), the partial derivatives of the right-hand sides of the \(\zeta\)-, \(\hat r\)- and \(\hat y\)-equations vanish: the first is \(O(\zeta^2)\), and the other two are \(\zeta\) times functions that vanish at \(Y_1^*\), plus \(O(\zeta^2)\). In the order \((\zeta,\hat r,\hat y,\hat z)\), the Jacobian at \(Y_1^*\) is therefore block lower triangular, with a zero block of size \(d+1\) and the block \(Dq_1(\hat z_1^*)=(2\pi)^{-1}(\|\hat z_1^*\|_2I+\hat z_1^*(\hat z_1^*)^\top/\|\hat z_1^*\|_2)\), which is positive definite. By the \(C^1\) center-stable manifold theorem \citep{hirsch1977invariant}, there is a \(C^1\) manifold \(\mathcal M_1\) of dimension \(d+1\) through \(Y_1^*\) that contains every forward orbit staying in a small neighborhood of \(Y_1^*\). Hence \(Y(\sigma)\in\mathcal M_1\) for all large \(\sigma\).

Now let \(\alpha=\pi/2\), so that \(\nabla h(0)=0\), and consider the blow-up variables of \eqref{eq:right-angle-cusp-weighted-system}. By \cref{lem:missing_teacher_cusp_rates}, \(s\to0\), \(\hat r=r_1/s^2\to\nu_\alpha/2\), \(\hat y=y/s^3\to-\nu_\alpha\beta_0/6\), and \(\hat z=z/s\) stays bounded. Moreover, \(\diff\sigma/\diff t=s/r_1\) behaves like \((2/\nu_\alpha)(T-t)^{-1}\), so \(\sigma\to\infty\). At \(s=0\) and \(\hat r=\nu_\alpha/2\), the \(\hat z\)-equation is \(\diff\hat z/\diff\sigma=\nabla V_2(\hat z)\) for the strongly convex function
\[
    V_2(\hat z)=\frac{\|\hat z\|_2^3}{6\pi} +\frac12\Bigl\langle\hat z,\Bigl(\nabla^2h(0)+\frac{\nu_\alpha}{2}I\Bigr)\hat z\Bigr\rangle +\frac{\langle v_2,\hat z\rangle}{16\pi^2},
\]
whose only critical point \(\hat z_2^*\) is nonzero because \(\nabla V_2(0)=v_2/(16\pi^2)\). The argument of the acute case shows that \(Y=(s,\hat z,\hat r,\hat y)\) converges to \(Y_2^*=(0,\hat z_2^*,\nu_\alpha/2,-\nu_\alpha\beta_0/6)\). In the order \((s,\hat r,\hat z,\hat y)\), the Jacobian at \(Y_2^*\) is block lower triangular with diagonal blocks \(-\nu_\alpha/2\), \(\nu_\alpha\), \(D_{\hat z}q_2(\hat z_2^*,\nu_\alpha/2)\) and \((3\nu_\alpha/2)I\), where
\[
    D_{\hat z}q_2(\hat z_2^*,\nu_\alpha/2) =\frac1{2\pi}\Bigl(\|\hat z_2^*\|_2I+\frac{\hat z_2^*(\hat z_2^*)^\top}{\|\hat z_2^*\|_2}\Bigr) +\nabla^2h(0)+\frac{\nu_\alpha}{2}I
\]
is positive definite. Hence \(Y_2^*\) has one stable direction and \(2d-1\) unstable directions, and its stable manifold \(\mathcal M_2\), a \(C^1\) curve \citep{hirsch1977invariant}, contains \(Y(\sigma)\) for all large \(\sigma\).

Finally, we return to \(w\). In the two cases, the maps
\[
    (\zeta,\hat z,\hat r,\hat y)\mapsto \bigl(\zeta^4\hat r\exp_{-v_1}(\zeta\hat z), \chi(\zeta^2,\zeta^6\hat y)\bigr), \qquad (s,\hat z,\hat r,\hat y)\mapsto \bigl(s^2\hat r\exp_{-v_1}(s\hat z),\chi(s,s^3\hat y)\bigr),
\]
defined for \(\zeta>0\) and \(s>0\) respectively, are \(C^1\) and send \(Y(\sigma)\) to \(w(t)\). The parts of \(\mathcal M_1\) with \(\zeta>0\) and of \(\mathcal M_2\) with \(s>0\) are \(C^1\) manifolds of dimensions \(d+1\) and \(1\), both less than \(2d\) since \(d\ge2\). Their images under these maps are therefore null in \((\RR^d)^2\). Let \(\mathcal B\) be the intersection of the union of the two images with \(\Omega\). By the above, \(w(t)\in\mathcal B\) for all \(t\) sufficiently close to \(T\).
\end{proof}

\subsubsection{Avoidance of \texorpdfstring{\(\partial\Omega\)}{the boundary}}

The results above give null sets that contain \(w(t)\) for all \(t\) close to \(T\). The next lemma turns them into null sets of initial points.

\begin{lemma}[Preimages of null sets under the flow]\label{lem:null-pullback}
Let \(\mathcal A\subset\Omega\) be a null set. The set of \(w(0)\in\Omega\) for which \(w(t)\in\mathcal A\) for all \(t\) in some nonempty open subinterval of \([0,T)\) is null.
\end{lemma}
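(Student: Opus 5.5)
The plan is to use that the gradient-flow map is a local $C^1$ diffeomorphism and therefore pulls back null sets to null sets. Let $\Phi_t$ be the flow of $-\nabla L$ on $\Omega$, defined on the open set $\mathcal D_t=\{w(0)\in\Omega: t<T(w(0))\}$. By \cref{lem:relu-kernel-regularity}, $-\nabla L$ is $C^1$ on $\Omega$. Hence $\mathcal D_t$ is open, and $\Phi_t:\mathcal D_t\to\Omega$ is a $C^1$ diffeomorphism onto its open image, with inverse $\Phi_{-t}$ given by the backward flow. Its Jacobian determinant is $\exp\int_0^t\operatorname{div}(-\nabla L)\circ\Phi_{t'}\,\diff t'>0$, by Liouville's formula.

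Write $\mathcal E$ for the set of initial points described in the lemma. Any nonempty open subinterval of $[0,T)$ contains a rational time $q$. So if $w(t)\in\mathcal A$ for all $t$ in such an interval, then $w(0)\in\Phi_q^{-1}(\mathcal A)$ for some rational $q\in[0,T(w(0)))$. This gives
\[
    \mathcal E\subset\bigcup_{q\in\mathbb Q_{\ge0}}\Phi_q^{-1}\bigl(\mathcal A\cap\Phi_q(\mathcal D_q)\bigr),
\]
which is a countable union. Each set in this union equals $\Phi_{-q}(\mathcal A\cap\Phi_q(\mathcal D_q))$, the image of a null set under a $C^1$ map between open subsets of $\RR^{2d}$. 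Such a map is locally Lipschitz, and locally Lipschitz maps send null sets to null sets. This uses a countable cover of the domain by compact sets, on each of which the map is Lipschitz. A countable union of null sets is null, so $\mathcal E$ is contained in a null set.

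Measurability is not needed, since the claim only asks that $\mathcal E$ lie in a null set; this is automatic if null means of outer measure zero. If measurability is wanted, one can note that $\mathcal A$ may be enlarged to a Borel null set, and then each preimage is Borel.

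The main point to check is that the backward flow $\Phi_{-q}$ is well defined and $C^1$ on the open set $\Phi_q(\mathcal D_q)$. This follows from uniqueness of solutions for the $C^1$ field $-\nabla L$ on $\Omega$, and from smooth dependence on initial data (Picard–Lindelöf together with the standard differentiability theorem). The rest is routine measure theory.
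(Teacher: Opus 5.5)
Your proposal is correct and follows the paper's proof: you pick a rational time $t_*$ inside the interval and use that the time-$t_*$ gradient-flow map is a $C^1$ diffeomorphism onto an open set, with the backward flow as its $C^1$ inverse, so it pulls back null sets to null sets. You then take the countable union over rational $t_*$; the Liouville-formula remark and the measurability discussion are harmless extras the argument does not need, since Lebesgue measure is complete.
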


\begin{proof}[Proof of \cref{lem:null-pullback}]
Such a trajectory lies in \(\mathcal A\) at some rational time \(t_*\ge0\). Since \(L\in C^2(\Omega)\) by \cref{lem:relu-kernel-regularity}, the time-\(t_*\) map of gradient flow is a \(C^1\) diffeomorphism from the open set of initial points with \(T>t_*\) onto an open subset of \(\Omega\), with the backward flow as its inverse. Hence the set of \(w(0)\) with \(w(t_*)\in\mathcal A\) is null, and the union over rational \(t_*\) is null.
\end{proof}

\begin{proposition}[Avoidance of \(\partial\Omega\) in finite time]\label{prop:finite_zero_weight_collision_basin_null}
The set of \(w(0)\in\Omega\) with \(T<\infty\) is null.
\end{proposition}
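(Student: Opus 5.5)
The plan is to combine the structural results already established for a trajectory that leaves \(\Omega\) in finite time, and to convert each resulting ``eventually in a null set'' statement into a statement about initial conditions using \cref{lem:null-pullback}. Suppose \(w(0)\in\Omega\) has \(T<\infty\). By \cref{lem:relu-kernel-regularity}, if the trajectory stayed in a compact subset of \(\Omega\) it would extend past \(T\). By \eqref{eq:coercive-loss-bound} it stays bounded, so it must approach \(\partial\Omega\). By \cref{lem:coercive_bound_and_no_double_zero}, \(r_1+r_2\) is bounded below, so the two weights cannot both become small. I would argue that in fact \(\min\{r_1,r_2\}\to0\), and that by continuity of \(r_1-r_2\) (which stays bounded away from zero near the boundary) one fixed index satisfies \(r_i(t)\to0\). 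Indeed, with \(T<\infty\) and bounded velocity, \(w(t)\) has a limit, which must lie in \(\partial\Omega\). By symmetry, take \(i=1\).

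Next I apply \cref{lem:finite_collision_limit_constraints}; its integrability hypothesis is needed only when \(T=\infty\). This gives \(w_2(t)\to w_{2,*}\ne0\) and shows that the limit points of \(\bar w_1\) satisfy the equilibrium constraints defining \(\mathcal Z_0\). I then split into two cases. In the first case, no limit point of \((\bar w_1,w_2)\) is one of the two cusp points in \eqref{eq:missing-teacher-cusps}. Then \cref{lem:nondegenerate-approach-null} gives an \(n\) with \(w(t)\in\mathcal B_n\) for all \(t\) near \(T\). In the second case, some limit point is a cusp, say \((-v_1,v_2)\). Since \(w_2\) converges, this forces \(w_{2,*}=v_2\).

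To apply \cref{prop:missing_teacher_cusp_collision_null}, I need the full convergence \(\bar w_1(t)\to-v_1\), not merely convergence along a subsequence; this is the delicate step. The approach is to argue that the limit set of \(\bar w_1\) is connected (as the limit set of a continuous curve in the compact sphere) and consists of points \(u\) with \(\nabla_{\mathbb S^{d-1}}\Psi_{v_2}(u)=0\) and \(\Psi_{v_2}(u)=\psi_*\). We have \(\Psi_{v_2}(u)=l(\angle(u,v_1))\), which is nonnegative and vanishes only at \(u=-v_1\), and \(\psi_*\le0\) by \cref{lem:finite_collision_limit_constraints}. Hence every limit point equals \(-v_1\), so \(\bar w_1\to-v_1\). \cref{prop:missing_teacher_cusp_collision_null} then yields \(w(t)\in\mathcal B\) near \(T\). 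The symmetric cusp and the case \(i=2\) give analogous null sets \(\mathcal B'\), together with swapped versions of the \(\mathcal B_n\).

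Finally, let \(\mathcal A\) be the countable union of all these null sets, which is null. Every initialization with \(T<\infty\) has \(w(t)\in\mathcal A\) on a nonempty interval \((T-\epsilon,T)\subset[0,T)\). \cref{lem:null-pullback} then shows that the set of such initializations is null. I expect the main obstacle to be the bookkeeping of the reduction to a single index with a converging \(w_2\) together with the full convergence \(\bar w_1\to-v_1\) in the cusp case. The latter is handled by the explicit form of \(\Psi_{v_2}\) as above; the rest consists of invoking earlier results.
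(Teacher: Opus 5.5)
Your proposal is correct and follows the paper's proof essentially step for step: the limit \(w(T)\in\partial\Omega\setminus\{(0,0)\}\), the case split via \cref{lem:nondegenerate-approach-null} and \cref{prop:missing_teacher_cusp_collision_null}, full convergence \(\bar w_1\to-v_1\) from \(\Psi_{v_2}(u)=l(\angle(u,v_1))\), and a countable union of null sets pulled back by \cref{lem:null-pullback}. Two small points: the ``bounded velocity'' you invoke needs a justification (the paper uses \(\|\nabla_{w_i}L\|_2\le\sqrt{2dL(w(0))}\) along the trajectory), and connectedness of the limit set is not needed, since \(\psi_*\le0\le\Psi_{v_2}\) already forces every limit point to equal \(-v_1\).
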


\begin{proof}[Proof of \cref{prop:finite_zero_weight_collision_basin_null}]
Let \(w(0)\in\Omega\) with \(T<\infty\). For \(i=1,2\),
\[
    \|\nabla_{w_i}L(w)\|_2 =\bigl\|\EE\bigl[(f(x)-f^*(x))\ind(\langle w_i,x\rangle>0)\,x\bigr]\bigr\|_2 \le\bigl(\EE(f-f^*)^2\bigr)^{1/2}\bigl(\EE\|x\|_2^2\bigr)^{1/2} \le\sqrt{2dL(w(0))}
\]
along the trajectory, so \(w(t)\) is Lipschitz on \([0,T)\) and has a limit \(w(T)\). By \cref{lem:relu-kernel-regularity}, \(w(T)\notin\Omega\), since otherwise the solution would extend beyond \(T\); and \(w(T)\ne(0,0)\) by \cref{lem:coercive_bound_and_no_double_zero}. Exchanging the two students if necessary, we may assume that \(w_1(T)=0\ne w_2(T)\). Then \cref{lem:finite_collision_limit_constraints} applies.

If no limit point of \((\bar w_1(t),w_2(t))\) is one of the two points in \eqref{eq:missing-teacher-cusps}, then by \cref{lem:nondegenerate-approach-null}, \(w(t)\in\mathcal B_n\) for some \(n\) and all \(t\) close to \(T\). Otherwise, say, \((-v_1,v_2)\) is a limit point. Then \(w_2(T)=v_2\), and the limit \(\psi_*\) in \cref{lem:finite_collision_limit_constraints} equals \(\Psi_{v_2}(-v_1)=0\). Since \(\Psi_{v_2}(u)=l(\angle(u,v_1))\) vanishes only at \(u=-v_1\) by \cref{lem:l-monotone-bounds}, every limit point of \(\bar w_1(t)\) equals \(-v_1\). Hence \(\bar w_1(t)\to-v_1\), and by \cref{prop:missing_teacher_cusp_collision_null}, \(w(t)\) lies in its null set \(\mathcal B\) for all \(t\) close to \(T\). The point \((-v_2,v_1)\) is handled in the same way, with the teachers exchanged.

In all cases, \(w(t)\) lies, for all \(t\) close to \(T\), in one of countably many null sets: the sets \(\mathcal B_n\), the set \(\mathcal B\) of \cref{prop:missing_teacher_cusp_collision_null} and its analogue with the teachers exchanged, and the images of all these sets under the exchange of the two students. \Cref{lem:null-pullback} completes the proof.
\end{proof}

A trajectory that stays in \(\Omega\) for all time can still approach \(\partial\Omega\) as \(t\to\infty\). We first show that every such trajectory converges, and then that its limit lies on \(\partial\Omega\) only for a null set of initializations.

\begin{proposition}[Convergence of global solutions]\label{prop:bounded_global_interior_kl_limit}
Let \(w(0)\in\Omega\) with \(T=\infty\). Then the trajectory has finite length, it converges to some \(w_\infty\in(\RR^d)^2\), and \(0\in\partial L(w_\infty)\), where \(\partial L\) is the limiting (Mordukhovich) subdifferential. In particular, \(\nabla L(w_\infty)=0\) if \(w_\infty\in\Omega\).
\end{proposition}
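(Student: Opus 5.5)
The plan is to run a Kurdyka--\L{}ojasiewicz argument for the extended function $L$ on all of $(\RR^d)^2$, using the closed form \eqref{eq:closed-form-loss} with its convention on $\partial\Omega$. First, boundedness. By \eqref{eq:coercive-loss-bound} and monotonicity of $L$ along the flow, the trajectory stays in a compact set $\mathcal Q$, so its $\omega$-limit set $\omega(w)$ is nonempty, compact and connected, and $L$ is constant on it, equal to $L_\infty=\lim_{t\to\infty}L(w(t))$. Since $\int_0^\infty\|\nabla L(w(t))\|_2^2\,\diff t=L(w(0))-L_\infty<\infty$, it remains to upgrade square-integrability of $\|\dot w\|_2$ to integrability.

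Second, the KL property. We check that $L$ is globally subanalytic on bounded sets. The kernel $K(u,v)=\|u\|_2\|v\|_2\,l(\angle(u,v))$ is built from norms, $\arccos$ of a ratio, $\sin$, and $\cos$, all restricted to compact domains, so it is subanalytic; the convention at $w_i=0$ only multiplies by $\|w_i\|_2$, which preserves this. $L$ is also continuous on $(\RR^d)^2$. By \citet{bolte2007lojasiewicz}, $L$ then satisfies the nonsmooth \L{}ojasiewicz inequality at every point $\bar w$ of $\omega(w)$. Concretely, there are $\theta\in[0,1)$, $c>0$ and a neighborhood $U$ of $\bar w$ such that $|L(u)-L(\bar w)|^{\theta}\le c\,\|\xi\|_2$ for all $u\in U$ and $\xi\in\partial L(u)$. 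On $\Omega$, $\partial L(u)=\{\nabla L(u)\}$ because $L\in C^2(\Omega)$ by \cref{lem:relu-kernel-regularity}. Since $\omega(w)$ is compact and $L$ is constant there, a standard covering argument gives a uniform \L{}ojasiewicz inequality on a neighborhood of $\omega(w)$, which the trajectory enters and stays in for large $t$.

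Third, finite length. The usual computation $-\frac{\diff}{\diff t}(L(w(t))-L_\infty)^{1-\theta}=(1-\theta)(L-L_\infty)^{-\theta}\|\nabla L\|_2^2\ge\frac{1-\theta}{c}\|\dot w\|_2$ is valid because the trajectory lives in $\Omega$, where the flow is a genuine gradient flow. It bounds $\int_{t_1}^\infty\|\dot w\|_2\,\diff t$. If $L(w(t))=L_\infty$ at some finite time, the flow is stationary from then on and the claim is trivial. Finite length gives convergence to some $w_\infty$. Finally, pick $t_n\to\infty$ with $\|\nabla L(w(t_n))\|_2\to0$, possible by square-integrability. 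Since $L(w(t_n))\to L(w_\infty)$ by continuity, the closedness of the limiting subdifferential under $L$-attentive convergence gives $0\in\partial L(w_\infty)$. If $w_\infty\in\Omega$, continuity of $\nabla L$ directly gives $\nabla L(w_\infty)=0$.

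The main obstacle is the subanalyticity, or more precisely the applicability of the nonsmooth KL inequality, near $\partial\Omega$. There $L$ is only continuous, and the $\omega$-limit point may lie on $\partial\Omega$. I would argue this via the semialgebraic-analytic description of $K$: $\|u\|_2\|v\|_2\arccos(\langle u,v\rangle/(\|u\|_2\|v\|_2))$ is bounded and its graph is subanalytic, since it is definable in the o-minimal structure $\mathbb R_{\mathrm{an}}$, using restricted $\arccos$ on $[-1,1]$ and extension by zero on the closed set $\{u=0\}\cup\{v=0\}$. Once this is in place, the Bolte--Daniilidis--Lewis result for continuous subanalytic functions covers boundary points as well.
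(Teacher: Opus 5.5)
Your proof is correct and follows the paper's route. Both use continuity and subanalyticity of the zero-extended $L$, the Bolte--Daniilidis--Lewis inequality, the finite-length computation inside $\Omega$, and closedness of the limiting subdifferential along times with $\nabla L(w(t_n))\to0$.

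The only difference is how the inequality is localized. The paper applies it at a single cluster point $w_\infty$, already shown to be critical, and keeps the trajectory in a ball around $w_\infty$ by a length bound. You instead uniformize it over the compact $\omega$-limit set. That requires each $\omega$-limit point to be critical, since the cited theorem is stated at critical points. This holds, because $\int_{s_n}^{s_n+1}\|\nabla L\|_2^2\,\diff t\to0$ produces nearby times at which the gradient vanishes in the limit. Alternatively, at any noncritical point the inequality holds trivially with exponent $0$.
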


\begin{proof}[Proof of \cref{prop:bounded_global_interior_kl_limit}]
Since \(L(w(t))\) is nonincreasing and nonnegative, it has a limit \(L_\infty\), and since
\begin{equation}\label{eq:global-flow-dissipation}
    \frac{\diff}{\diff t}L(w(t))=-\|\nabla L(w(t))\|_2^2,
\end{equation}
we have \(\int_0^\infty\|\nabla L(w(t))\|_2^2\,\diff t<\infty\). By \eqref{eq:coercive-loss-bound}, the trajectory is bounded. Hence there are \(t_n\to\infty\) with \(\nabla L(w(t_n))\to0\) and \(w(t_n)\to w_\infty\) for some \(w_\infty\).

For nonzero arguments, the kernel \(K\) is built from norms, inner products, algebraic operations and \(\arccos\) on \([-1,1]\), whose graph is the inverse graph of \(\cos\) on \([0,\pi]\). Extending it by zero where an argument vanishes preserves subanalyticity, and the bound \(0\le K(u,v)\le\|u\|_2\|v\|_2/2\), which follows from \cref{lem:l-monotone-bounds}, gives continuity there. Hence \(L\) is continuous and globally subanalytic on \((\RR^d)^2\). On \(\Omega\), \(\partial L(w)=\{\nabla L(w)\}\), because \(L\) is \(C^1\) there. By continuity, \(L(w_\infty)=L_\infty\), and since \(\nabla L(w(t_n))\in\partial L(w(t_n))\) tends to zero while \(L(w(t_n))\to L(w_\infty)\), the closedness of the limiting subdifferential gives \(0\in\partial L(w_\infty)\).

Because \(w_\infty\) may lie on \(\partial\Omega\), where \(L\) is not differentiable, we use the nonsmooth \L ojasiewicz inequality for continuous subanalytic functions \citep[Theorem~3.1]{bolte2007lojasiewicz}. It gives \(r_0,C,\delta_0>0\) and \(\mu\in[0,1)\) such that
\[
    |L(w)-L_\infty|^\mu\le C\operatorname{dist}\bigl(0,\partial L(w)\bigr) \qquad\text{whenever }\|w-w_\infty\|_2<r_0 \text{ and }0<|L(w)-L_\infty|<\delta_0.
\]
If \(L(w(t))=L_\infty\) at some finite time, then by \eqref{eq:global-flow-dissipation} the trajectory is constant from that time on. Otherwise, whenever \(\|w(t)-w_\infty\|_2<r_0\) and \(L(w(t))-L_\infty<\delta_0\),
\[
    -\frac{\diff}{\diff t}\bigl(L(w(t))-L_\infty\bigr)^{1-\mu} =(1-\mu)\frac{\|\nabla L(w(t))\|_2^2}{(L(w(t))-L_\infty)^\mu} \ge\frac{1-\mu}{C}\|\dot w(t)\|_2.
\]
Choose \(n\) so large that \(\|w(t_n)-w_\infty\|_2<r_0/2\), \(L(w(t_n))-L_\infty<\delta_0\), and \(\frac{C}{1-\mu}(L(w(t_n))-L_\infty)^{1-\mu}<r_0/2\). If the trajectory left the ball \(\{\|w-w_\infty\|_2<r_0\}\) after \(t_n\), integrating the last inequality up to the first exit time would show that its length after \(t_n\) is less than \(r_0/2\), which is impossible. Hence it stays in the ball, and the same inequality gives \(\int_{t_n}^\infty\|\dot w(t)\|_2\,\diff t<\infty\). The trajectory therefore converges, and its limit is \(w_\infty\).
\end{proof}

\begin{proposition}[Avoidance of \(\partial\Omega\) as \(t\to\infty\)]\label{prop:asymptotic_zero_face_basin_null}
The set of \(w(0)\in\Omega\) with \(T=\infty\) whose trajectory converges to a point of \(\partial\Omega\) is null.
\end{proposition}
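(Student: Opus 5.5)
Let \(w(0)\in\Omega\) with \(T=\infty\), and suppose \(w(t)\to w_\infty\in\partial\Omega\). By \cref{prop:bounded_global_interior_kl_limit} the trajectory has finite length, and by \cref{lem:coercive_bound_and_no_double_zero} \(w_\infty\neq(0,0)\). After exchanging the students we may therefore assume \(w_{1,\infty}=0\neq w_{2,\infty}\). So \(r_1(t)\to0\) with \(\int_0^\infty\|\dot w\|_2\,\diff t<\infty\), and \cref{lem:finite_collision_limit_constraints} applies with \(T=\infty\). It gives \(w_2(t)\to w_{2,*}\neq0\) and \(\tau(t)\to\infty\). It also gives \(\Psi_{w_2(t)}(\bar w_1(t))\to\psi_*=0\), and every limit point \(u_*\) of \(\bar w_1\) satisfies \(\nabla_{\mathbb S^{d-1}}\Psi_{w_{2,*}}(u_*)=0\) and \(\Psi_{w_{2,*}}(u_*)=0\). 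Hence the \(\tau\)-limit set of \((r_1,\bar w_1,w_2)\) is a compact subset of \(\mathcal Z_0\). The plan then copies the finite-time argument of \cref{prop:finite_zero_weight_collision_basin_null}, with the finite-time cusp analysis replaced by a separate treatment of the two cusp points.

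\textbf{Case 1: no limit point of \((\bar w_1,w_2)\) is in \eqref{eq:missing-teacher-cusps}.} Here \cref{lem:nondegenerate-approach-null} applies verbatim. Its proof used only the conclusions of \cref{lem:finite_collision_limit_constraints}, which hold for \(T=\infty\) as well. So \(w(t)\in\mathcal B_n\) for some \(n\) and all large \(t\).

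\textbf{Case 2: some limit point is, say, \((-v_1,v_2)\).} Then \(w_{2,*}=v_2\). Since \(\Psi_{v_2}(u)=l(\angle(u,v_1))\) vanishes only at \(u=-v_1\), we get \(\bar w_1(t)\to-v_1\). This is the step I expect to be the main obstacle. \Cref{prop:missing_teacher_cusp_collision_null} treats only finite-time arrival, so I must show that infinite-time arrival at the cusp also forces the trajectory onto a lower-dimensional set.

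My first attempt is to show that infinite-time arrival is impossible. Near the cusp, \eqref{eq:zero-face-polar-flow} gives \(\dot w_2=F_b(w_2)+O(r_1)\) with \(F_b(v_2)\neq0\). Once \(r_1\) is small, \(w_2\) therefore moves with speed bounded below in the fixed direction \(F_b(v_2)\) (on the time scale \(t\)). Convergence \(w_2(t)\to v_2\) as \(t\to\infty\) would then fail, because \(\|\dot w_2-F_b(v_2)\|_2\to0\) forces \(w_2(t)-w_2(t_0)\approx(t-t_0)F_b(v_2)\), which is unbounded. So Case 2 cannot occur with \(T=\infty\). If this argument had a gap, I would fall back on the blow-up of \cref{lem:cusp-weighted-c1-extension} and a center-stable manifold argument as in \cref{prop:missing_teacher_cusp_collision_null}. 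The straightening argument above looks clean, though, since \(r_1\to0\) and \(\nabla_{w_2}\Psi\) is bounded near \((-v_1,v_2)\).

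To conclude, in every admissible case \(w(t)\) lies, for all large \(t\), in one of countably many null sets: the sets \(\mathcal B_n\) and their images under the exchange of the students. Applying \cref{lem:null-pullback} to each of these sets and taking the countable union shows that the set of such initializations is null.
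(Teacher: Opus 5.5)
Your proposal is correct and follows essentially the paper's proof: finite length and convergence of the trajectory, reduction to $w_\infty=(0,w_{2,*})$, exclusion of the cusp points because $\dot w_2\to F_b(w_{2,*})$ must vanish for a convergent $w_2$ while $F_b(v_1),F_b(v_2)\neq0$, and then the nondegenerate-approach lemma combined with the null-pullback lemma. The only difference is organizational. The paper rules out the cusps at the start by noting that $w_{2,*}$ is a zero of $F_b$, so the step $\bar w_1\to-v_1$ and your fallback blow-up argument are not needed.
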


\begin{proof}[Proof of \cref{prop:asymptotic_zero_face_basin_null}]
Let \(w\) be such a trajectory, with limit \(w_\infty\in\partial\Omega\). By \cref{lem:coercive_bound_and_no_double_zero}, \(w_\infty\ne(0,0)\); exchanging the two students if necessary, we may assume that \(w_\infty=(0,w_{2,*})\) with \(w_{2,*}\ne0\). By \cref{prop:bounded_global_interior_kl_limit}, the trajectory has finite length, so \cref{lem:finite_collision_limit_constraints} applies with \(T=\infty\). Moreover, by the third equation in \eqref{eq:zero-face-polar-flow}, \(\dot w_2(t)\to F_b(w_{2,*})\); since \(w_2(t)\) converges, this limit is zero. On the other hand, \(F_b(v_2)\ne0\) by the formula in \cref{subsubsec:boundary-cusps}, and \(F_b(v_1)\ne0\) by symmetry. Hence \(w_{2,*}\notin\{v_1,v_2\}\), no limit point of \((\bar w_1(t),w_2(t))\) is one of the two points in \eqref{eq:missing-teacher-cusps}, and by \cref{lem:nondegenerate-approach-null}, \(w(t)\in\mathcal B_n\) for some \(n\) and all large \(t\). Together with the images of the sets \(\mathcal B_n\) under the exchange of the two students, \cref{lem:null-pullback} completes the proof.
\end{proof}

\subsection{Saddle avoidance and global convergence}\label{proof:thm-generic-global-convergence}

It remains to show that trajectories converge to strict saddles in \(\Omega\) only for a null set of initializations.

\begin{proposition}[Avoidance of strict saddles]\label{prop:nonzero_strict_saddle_basin_null}
The set of \(w(0)\in\Omega\) with \(T=\infty\) whose trajectory converges to a strict saddle of \(L\) in \(\Omega\), that is, a stationary point in \(\Omega\) at which \(\nabla^2L\) has a negative eigenvalue, is null.
\end{proposition}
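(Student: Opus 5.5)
The plan is a center--stable manifold argument combined with the landscape classification of \cref{thm:squared_loss_landscape}, pulled back to initial conditions with \cref{lem:null-pullback}. By \cref{thm:squared_loss_landscape}, every stationary point in \(\Omega\) other than the global minima lies in one of finitely many families: the two segments \(\{(s_1\bar v,s_2\bar v)\}\) and \(\{(-s_1\bar v,-s_2\bar v)\}\), the isolated point \((r_+\bar v,-r_-\bar v)\) and its swap, and the out-of-plane family, which is indexed by the unit sphere of \(\operatorname{span}\{v_1,v_2\}^\perp\). All of these families are strict saddles, so the set \(\mathcal S\) of strict saddles is a finite union of compact \(C^1\) manifolds. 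In particular, \(\mathcal S\) is \(\sigma\)-compact and can be covered by countably many small balls \(B_k\) centered at points \(p_k\in\mathcal S\).

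Fix one strict saddle \(p\in\Omega\). The field \(-\nabla L\) is \(C^1\) near \(p\) by \cref{lem:relu-kernel-regularity}, and its linearization is \(-\nabla^2L(p)\), which has at least one positive eigenvalue because \(p\) is strict. The \(C^1\) center--stable manifold theorem \citep{hirsch1977invariant} then gives a neighborhood \(U_p\) and a \(C^1\) submanifold \(W^{cs}_p\subset U_p\) of dimension at most \(2d-1\). Every forward orbit that stays in \(U_p\) for all \(t\ge0\) lies in \(W^{cs}_p\). Since \(W^{cs}_p\) has positive codimension, it is a null set. The key observation is that we need no control of the kernel directions. Those directions do occur here: \cref{lem:hessian_cal} shows that the bisector saddle has the kernel \((\bar v,-\bar v)\), and the families are non-isolated. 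The center--stable manifold absorbs all of this, because only the existence of one unstable direction is used.

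Next take a trajectory with \(T=\infty\) that converges to some strict saddle \(p_*\). By the compactness of the families, \(p_*\) lies in some ball \(B_k\) whose closure is contained in \(U_{p_k}\), for one of countably many choices of centers. I would pick the centers \(p_k\) from a countable dense subset of \(\mathcal S\) and pick radii so that the balls \(B_k\) with \(\overline B_k\subset U_{p_k}\) still cover \(\mathcal S\). The uniformity of the neighborhood sizes along each compact family is what makes a countable cover possible. Since the trajectory converges to \(p_*\), it eventually stays in \(B_k\subset U_{p_k}\). It therefore lies in \(W^{cs}_{p_k}\) for all \(t\) beyond some time, and in particular on an open time interval.

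Applying \cref{lem:null-pullback} to the null set \(\bigcup_k W^{cs}_{p_k}\cap\Omega\) shows that the set of such initializations is null. The main obstacle is the covering step: one must ensure that a fixed neighborhood \(U_{p_k}\) captures orbits converging to nearby saddles \(p_*\neq p_k\) in the same non-isolated family. The center--stable manifold handles this, since any orbit staying near \(p_k\) lies on it regardless of where it converges. What still needs care is that \(U_{p_k}\) contains a full ball around \(p_k\), which the theorem provides. If local uniqueness of the center--stable manifold causes trouble, I would fall back on the local-chart version that uses only the cutoff-modified vector field, which is the standard approach.
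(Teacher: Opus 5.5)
Your argument is correct, but it is organized differently from the paper's.

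The paper never looks at the structure of the saddle set. It multiplies \(L\) by cutoffs in \(C_c^\infty(\Omega)\) to obtain \(C^2\) functions \(L_n\) on \((\RR^d)^2\) with global gradient flows. It applies Theorem~28 of \citet{bah2022learning} to each \(L_n\) as a black box. It then observes that a trajectory converging to a strict saddle of \(L\) eventually coincides with an \(L_n\)-trajectory converging to a strict saddle of \(L_n\), and finishes with \cref{lem:null-pullback}. You instead unpack that black box into local center--stable manifolds plus a countable cover. Because your manifolds are local and \cref{lem:null-pullback} already works inside \(\Omega\), you need no cutoff. The paper's route buys brevity; yours buys self-containedness.

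Your covering step is misstated and should be repaired. The two bisector families are not compact: the constraint \(s_1,s_2>0\) makes them open segments accumulating at the points \((0,r_+\bar v)\), \((r_+\bar v,0)\), \((0,-r_-\bar v)\), \((-r_-\bar v,0)\) of \(\partial\Omega\). Also, the center--stable manifold theorem supplies no ``uniformity of neighborhood sizes,'' so a countable dense set of centers need not produce a cover. Neither property is needed. For each \(p\in\mathcal S\), choose an open ball \(B_p\subset U_p\) centered at \(p\). Since \((\RR^d)^2\) is second countable, \(\{B_p\}_{p\in\mathcal S}\) has a countable subcover.

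With that fix, \cref{thm:squared_loss_landscape} plays no role in your proof. Like the paper's, your argument then applies to the strict saddles of any \(C^2\) loss on \(\Omega\), with the landscape classification entering only in \cref{thm:generic_global_convergence}.
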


\begin{proof}[Proof of \cref{prop:nonzero_strict_saddle_basin_null}]
For \(n\ge1\), choose a function in \(C_c^\infty(\Omega)\) equal to one on a neighborhood of \(\{w:\|w\|_2\le n,\ \min(r_1,r_2)\ge1/n\}\), and let \(L_n\) be its product with \(L\), extended by zero outside \(\Omega\). Then \(L_n\) is \(C^2\) on \((\RR^d)^2\) by \cref{lem:relu-kernel-regularity}, and its gradient has compact support, so its gradient flow is defined for all time. By \citet[Theorem~28]{bah2022learning}, the set \(\mathcal S_n\) of initial points whose \(L_n\)-gradient flow converges to a strict saddle of \(L_n\) is null.

Suppose that a trajectory converges to a strict saddle \(w_\infty\in\Omega\) of \(L\). Then there are \(n\) and \(t_1\) such that \(w_\infty\) and \(w(t)\) for all \(t\ge t_1\) lie in the open set where \(L_n=L\). Hence \(w_\infty\) is a strict saddle of \(L_n\), and for every \(t\ge t_1\) the \(L_n\)-gradient flow starting at \(w(t)\) follows the original trajectory, so \(w(t)\in\mathcal S_n\). \Cref{lem:null-pullback}, applied to \(\bigcup_n\mathcal S_n\cap\Omega\), completes the proof.
\end{proof}

\begin{proof}[Proof of \cref{thm:generic_global_convergence}]
The initializations with \(w_1(0)=0\) or \(w_2(0)=0\) form a null set, so we consider \(w(0)\in\Omega\). Outside the null set of \cref{prop:finite_zero_weight_collision_basin_null}, \(T=\infty\), that is, the solution exists for all \(t\ge0\) and both weights stay nonzero. By \cref{prop:bounded_global_interior_kl_limit}, it converges to some \(w_\infty\) with \(0\in\partial L(w_\infty)\). Outside the null set of \cref{prop:asymptotic_zero_face_basin_null}, \(w_\infty\in\Omega\), and then \(\nabla L(w_\infty)=0\). By \cref{thm:squared_loss_landscape}, every stationary point in \(\Omega\) outside \(\mathcal G\) has a strictly negative Hessian direction, that is, it is a strict saddle, and by \cref{prop:nonzero_strict_saddle_basin_null} convergence to a strict saddle happens only on a null set. Hence, for almost every initialization, the solution exists for all \(t\ge0\), both weights stay nonzero, and \(w(t)\) converges to \((v_1,v_2)\) or \((v_2,v_1)\).
\end{proof}

\section{Approach to the bisector saddle}

As in \cref{lem:phase1_bisector_saddle}, we consider \(\alpha=\angle(v_1,v_2)\in(0,\pi/2]\). We recall the set \(\Omega\) of pairs \((w_1,w_2)\) with \(w_1,w_2\ne0\) and the notation \(r_i=\|w_i\|_2\) from \cref{sec:global-convergence}. For \(w(0)\in\Omega\), we write \([0,T_{\max})\) for the maximal interval on which the gradient-flow solution exists in \(\Omega\); it is well defined by \cref{lem:relu-kernel-regularity}. By \eqref{eq:def_r} and \eqref{eq:relu-kernel-profile}, \(r_-=4l(\pi-\alpha/2)\) and \(r_+=4l(\alpha/2)\), so \cref{lem:l-monotone-bounds} gives \(0<r_-<2\) and \(r_+\le2\). For unit vectors \(p,q\in\RR^d\), we also use
\begin{equation}\label{eq:phase-one-angle-chord}
    \angle(p,q)=2\arcsin\frac{\|p-q\|_2}{2}\le\frac\pi2\|p-q\|_2,
\end{equation}
which holds because \(\arcsin\) is convex on \([0,1]\), with \(\arcsin0=0\) and \(\arcsin1=\pi/2\).

\subsection{Norm bounds}

\begin{lemma}\label{lem:phase-one-norm-upper}
Let \(w(0)\in\Omega\). For \(0\le t<T_{\max}\) and \(i=1,2\), we have \(r_i(t)\le r_i(0)+t\). If, in addition, \(r_i(0)\le2\), then \(r_i(t)\le2\) for \(0\le t<T_{\max}\).
\end{lemma}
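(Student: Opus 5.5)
The plan is to use the upper bound in \cref{lem:radial-velocity-bounds}: on \([0,T_{\max})\) both weights are nonzero, and the radial velocity satisfies
\[
    \frac{\diff r_i}{\diff t}\le 1-\frac{r_i}{2}\le 1 .
\]
Integrating the cruder inequality \(\dot r_i\le1\) from \(0\) to \(t\) gives \(r_i(t)\le r_i(0)+t\) directly. Since \(r_i(t)=\|w_i(t)\|_2\) is differentiable wherever \(w_i\neq0\), and this holds throughout the interval, the integration is legitimate.

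For the second claim, I would use the sharper inequality \(\dot r_i\le 1-r_i/2=-(r_i-2)/2\). Setting \(u(t)=r_i(t)-2\), this reads \(\dot u\le -u/2\). Then \(\frac{\diff}{\diff t}\bigl(e^{t/2}u(t)\bigr)\le0\), so
\[
    u(t)\le e^{-t/2}u(0)\le0
\]
whenever \(r_i(0)\le2\). This is Grönwall's inequality in differential form, applied on \([0,T_{\max})\), and it gives \(r_i(t)\le2\). As a byproduct it also yields \(r_i(t)\le 2+(r_i(0)-2)e^{-t/2}\) in general.

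There is no real obstacle. The only point to check is that \cref{lem:radial-velocity-bounds} applies at every \(t<T_{\max}\), which holds because both weights are nonzero on that interval by the definition of \(T_{\max}\).
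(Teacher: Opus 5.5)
Your proof is correct and follows the paper's argument exactly. The paper likewise takes the upper bound \(\dot r_i\le 1-r_i/2\le 1\) from \cref{lem:radial-velocity-bounds}, integrates it for the first claim, and applies Gr\"onwall's inequality to \(r_i-2\) to get \(r_i(t)-2\le (r_i(0)-2)e^{-t/2}\le 0\).
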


\begin{proof}[Proof of \cref{lem:phase-one-norm-upper}]
By \cref{lem:radial-velocity-bounds}, \(\dot r_i\le1-r_i/2\le1\), and integration gives the first bound. The same inequality reads \(\frac{\diff}{\diff t}(r_i-2)\le-(r_i-2)/2\), so Gr\"onwall's inequality gives \(r_i(t)-2\le(r_i(0)-2)e^{-t/2}\le0\).
\end{proof}

\begin{lemma}\label{lem:phase-one-short-time-radial}
Let \(w(0)\in\Omega\) and \(T>0\) satisfy \(\max_{i=1,2}r_i(0)+T\le r_-/4\). Then \(T_{\max}>T\), and
\[
    r_i(0)+\frac{r_-}{4}\,t\le r_i(t)\le r_i(0)+t, \qquad 0\le t\le T,\quad i=1,2.
\]
\end{lemma}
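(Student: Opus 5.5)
The plan is to combine the two-sided radial velocity bounds of \cref{lem:radial-velocity-bounds} with a continuation argument. The upper bound \(r_i(t)\le r_i(0)+t\) on \([0,T_{\max})\) is already given by \cref{lem:phase-one-norm-upper}. What remains is to show that the lower bound holds and that \(T_{\max}>T\).

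For the lower bound, \cref{lem:radial-velocity-bounds} gives
\[
    \dot r_i\ \ge\ \frac{r_-}{2}-\frac{r_1+r_2}{2}.
\]
On \([0,\min\{T,T_{\max}\})\), the upper bound yields \(r_j(t)\le \max_k r_k(0)+t\le \max_k r_k(0)+T\le r_-/4\) for \(j=1,2\). Hence \(r_1+r_2\le r_-/2\), and so
\[
    \dot r_i\ \ge\ \frac{r_-}{2}-\frac{r_-}{4}=\frac{r_-}{4}.
\]
Integrating from \(0\) gives \(r_i(t)\ge r_i(0)+r_-t/4\) for all \(t<\min\{T,T_{\max}\}\). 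This step is routine.

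The main point is to exclude \(T_{\max}\le T\). Suppose it holds. On \([0,T_{\max})\) we have, for each \(i\), both \(r_i(t)\ge r_i(0)>0\) (since \(w(0)\in\Omega\)) and \(r_i(t)\le r_-/4\). The trajectory therefore stays in the set
\[
    \mathcal Q=\bigl\{(w_1,w_2):\ \min_k r_k(0)\le \|w_i\|_2\le r_-/4,\ i=1,2\bigr\}.
\]
This set is compact and contained in \(\Omega\). The continuation clause of \cref{lem:relu-kernel-regularity} then extends the solution beyond \(T_{\max}<\infty\), which is a contradiction. Hence \(T_{\max}>T\). Both bounds now hold on \([0,T)\), and they pass to \(t=T\) by continuity.

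The only mild obstacle is arranging the argument so that it is not circular: the upper bound and the lower bound are each derived on the maximal interval intersected with \([0,T]\), and only afterwards is compactness used to conclude \(T_{\max}>T\).
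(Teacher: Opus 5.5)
Your proposal is correct and matches the paper's proof: the upper bound comes from \cref{lem:phase-one-norm-upper}, the lower bound from \cref{lem:radial-velocity-bounds} on \([0,\min\{T,T_{\max}\})\), and the compactness and continuation argument of \cref{lem:relu-kernel-regularity} excludes \(T_{\max}\le T\). The only difference is cosmetic: your compact set uses the common lower bound \(\min_k r_k(0)\), where the paper uses \(r_i(0)\) for each student, and both choices work.
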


\begin{proof}[Proof of \cref{lem:phase-one-short-time-radial}]
Let \(0\le t<\min\{T,T_{\max}\}\). By \cref{lem:phase-one-norm-upper}, \(r_i(t)\le r_i(0)+t\le r_-/4\) for \(i=1,2\), so \cref{lem:radial-velocity-bounds} gives \(\dot r_i(t)\ge r_-/2-(r_1(t)+r_2(t))/2\ge r_-/4\). Integration proves the two bounds for these \(t\). If \(T_{\max}\le T\), the bounds keep \(w(t)\), \(0\le t<T_{\max}\), in the compact subset of \(\Omega\) on which \(r_i(0)\le\|w_i\|_2\le r_-/4\) for \(i=1,2\). By \cref{lem:relu-kernel-regularity}, the solution then extends beyond \(T_{\max}\), which is impossible. Hence \(T_{\max}>T\), and the bounds hold at \(t=T\) by continuity.
\end{proof}

\subsection{Alignment with the bisector and balance of the norms}

The alignment of \(\bar w_i\) with \(\bar v\) is driven by the component along \(\bar v\) of the projected teacher force in \eqref{eq:direction-w1}.

\begin{lemma}\label{lem:phase-one-teacher-force-alignment}
For a unit vector \(q\in\RR^d\), let
\[
    \psi(q)\coloneqq \Bigl\langle \bar v, P_q^\perp\bigl((\pi-\angle(q,v_1))v_1+(\pi-\angle(q,v_2))v_2\bigr) \Bigr\rangle
\]
be the component along \(\bar v\) of the projected teacher force at \(q\). Then \(\psi(q)\ge0\), with equality if and only if \(q=\bar v\) or \(q=-\bar v\).
\end{lemma}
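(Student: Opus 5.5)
The plan is to reduce \cref{lem:phase-one-teacher-force-alignment} to the sign and ratio information already established in \cref{lem:teacher-force-azimuth}, which applies because $\alpha\le\pi/2<\alpha_\star$. Write $T(q)=(\pi-\angle(q,v_1))v_1+(\pi-\angle(q,v_2))v_2$ as in that lemma. Decompose $q=p\bar v+s v_\Delta+q_\perp$ with $q_\perp\perp\operatorname{span}\{v_1,v_2\}$, so that $p^2+s^2+\|q_\perp\|_2^2=1$. Set $T_b=\langle T(q),\bar v\rangle$ and $T_\Delta=\langle T(q),v_\Delta\rangle$. Since $T(q)$ lies in the teacher plane, $\langle q,T(q)\rangle=pT_b+sT_\Delta$. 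Using $P_q^\perp=I-qq^\top$ we then get the identity
\[
    \psi(q)=T_b-p\bigl(pT_b+sT_\Delta\bigr)=T_b(1-p^2)-psT_\Delta .
\]

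\Cref{lem:teacher-force-azimuth} gives $T_b>0$ and $\operatorname{sgn}T_\Delta=\operatorname{sgn}s$, so $sT_\Delta\ge0$ in all cases. I will split into three cases according to the signs of $p$ and $s$.

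\emph{Case $p\le0$.} Both terms $T_b(1-p^2)$ and $-psT_\Delta$ are nonnegative. Hence $\psi(q)\ge T_b(1-p^2)$, which is strictly positive unless $p^2=1$. Since $p\le0$, that exception means $q=-\bar v$.

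\emph{Case $p>0$, $s=0$.} Here $\psi(q)=T_b(1-p^2)\ge0$, with equality iff $p=1$, that is, $q=\bar v$.

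\emph{Case $p>0$, $s\ne0$.} The ratio bound in \cref{lem:teacher-force-azimuth} says $T_\Delta/T_b$ lies strictly between $0$ and $s/p$. Multiplying by $psT_b>0$ when $s>0$, or using the mirrored inequality when $s<0$, gives $psT_\Delta<s^2T_b$ in both subcases. Therefore
\[
    \psi(q)>T_b(1-p^2-s^2)=T_b\|q_\perp\|_2^2\ge0 ,
\]
so $\psi(q)$ is strictly positive.

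Finally, at $q=\pm\bar v$ we have $p^2=1$ and $s=0$, so the identity gives $\psi(q)=0$. Combined with the three cases, this yields the claimed equality characterization.

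I expect no real obstacle, since the substantive geometric input, namely that the teacher force lies strictly closer to the bisector than the in-plane part of $q$, is exactly \cref{lem:teacher-force-azimuth}. The only point needing care is the last case, where the strict ratio inequality must be applied with the correct sign of $s$ so that the out-of-plane mass $\|q_\perp\|_2^2$ absorbs the remainder.
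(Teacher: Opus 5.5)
Your proof is correct, and it takes a different route from the paper's.

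\textbf{The paper's route.} The paper proves the lemma from scratch by trigonometry.
\begin{itemize}
\item It expands $2\cos(\alpha/2)\,\psi(q)$ in the variables $x=\theta_1+\theta_2$ and $y=\theta_1-\theta_2$.
\item It uses the angle triangle inequalities $\alpha\le x\le2\pi-\alpha$ and $0\le y\le\alpha$, together with $\cos^2(x/2)\le\cos^2(\alpha/2)$ and $\tan u\ge u$.
\item These give the explicit lower bound $\frac y2\sin y\,\bigl((2\pi-x)\cos^2(\alpha/2)-\sin x\bigr)$.
\item It signs the last factor using $\cos^2(\alpha/2)\ge1/2$, that is, $\alpha\le\pi/2$.
\item Equality is read off from $y=0$ and $x\in\{\alpha,2\pi-\alpha\}$.
\end{itemize}

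\textbf{Your route.} You split $\psi(q)=T_b(1-p^2)-psT_\Delta$ and import the sign and ratio information of \cref{lem:teacher-force-azimuth}.
\begin{itemize}
\item This is legitimate. That lemma holds for arbitrary unit vectors in $\RR^d$, including out-of-plane ones, whenever $\alpha\le\alpha_\star$, and $\alpha_\star>\pi/2$.
\item There is no circularity: its proof uses only the monotonicity of $H$ and $K$ and \cref{lem:squared-loss-angle-threshold}.
\item Your case split is exhaustive.
\item In the case $p>0$, $s\ne0$, you multiply the ratio bound by $psT_b$, which flips the inequality when $s<0$. You handle this correctly, and in both subcases you obtain $psT_\Delta<s^2T_b$.
\end{itemize}

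\textbf{Comparison.}
\begin{itemize}
\item Your argument is shorter and more structural. It shows that the alignment drift is a direct consequence of the teacher force being tilted toward $\bar v$ relative to the in-plane part of $q$. The out-of-plane mass then contributes the leftover $T_b\|q_\perp\|_2^2$.
\item It proves the statement for every $\alpha\in(0,\alpha_\star]$ at no extra cost.
\item The price is that it leans on the more delicate azimuth lemma from the landscape section. The paper's proof is self-contained within the Phase~1 analysis and gives an explicit lower bound in terms of $\theta_1$ and $\theta_2$, although the subsequent uniform bound (\cref{lem:phase-one-uniform-teacher-drift}) is obtained by compactness anyway.
\end{itemize}
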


\begin{proof}[Proof of \cref{lem:phase-one-teacher-force-alignment}]
We write \(\theta_j=\angle(q,v_j)\) for \(j=1,2\). Since \(\bar v=(v_1+v_2)/(2\cos(\alpha/2))\), we have \(\langle\bar v,v_j\rangle=\cos(\alpha/2)\) and \(\langle\bar v,q\rangle=(\cos\theta_1+\cos\theta_2)/(2\cos(\alpha/2))\). Expanding \(P_q^\perp=I-qq^\top\) therefore gives
\[
    2\cos\frac{\alpha}{2}\,\psi(q) =2\cos^2\frac{\alpha}{2}\,(2\pi-\theta_1-\theta_2) -(\cos\theta_1+\cos\theta_2) \bigl((\pi-\theta_1)\cos\theta_1+(\pi-\theta_2)\cos\theta_2\bigr).
\]
By symmetry we may assume \(\theta_1\ge\theta_2\). We set \(x=\theta_1+\theta_2\) and \(y=\theta_1-\theta_2\). The triangle inequality for angles, applied to \(q,v_1,v_2\) and to \(-q,v_1,v_2\), gives \(\alpha\le x\le2\pi-\alpha\) and \(0\le y\le\alpha\le\pi/2\). Since \(\cos\theta_1+\cos\theta_2=2\cos(x/2)\cos(y/2)\) and
\[
    (\pi-\theta_1)\cos\theta_1+(\pi-\theta_2)\cos\theta_2 =(2\pi-x)\cos\frac x2\cos\frac y2+y\sin\frac x2\sin\frac y2,
\]
the identity above becomes
\begin{equation}\label{eq:phase-one-psi-xy}
    2\cos\frac{\alpha}{2}\,\psi(q) =2(2\pi-x)\Bigl(\cos^2\frac{\alpha}{2}-\cos^2\frac x2\cos^2\frac y2\Bigr) -\frac y2\sin x\sin y .
\end{equation}
As \(x/2\in[\alpha/2,\pi-\alpha/2]\), we have \(\cos^2(x/2)\le\cos^2(\alpha/2)\), so the right-hand side is at least
\[
    2(2\pi-x)\cos^2\frac{\alpha}{2}\sin^2\frac y2-\frac y2\sin x\sin y =2\sin\frac y2\Bigl((2\pi-x)\cos^2\frac{\alpha}{2}\sin\frac y2 -\frac y2\cos\frac y2\sin x\Bigr).
\]
Since \(0\le y/2\le\pi/4\) and \(\tan u\ge u\) for \(0\le u<\pi/2\), we have \(\sin(y/2)\ge(y/2)\cos(y/2)\), and hence
\begin{equation}\label{eq:phase-one-psi-lower}
    2\cos\frac{\alpha}{2}\,\psi(q) \ge\frac y2\sin y\, \Bigl((2\pi-x)\cos^2\frac{\alpha}{2}-\sin x\Bigr).
\end{equation}
For \(x\le\pi\), the last factor is at least \(\pi\cos^2(\alpha/2)-1\ge\pi/2-1>0\), because \(\cos^2(\alpha/2)\ge1/2\). For \(\pi<x\le2\pi-\alpha\), it is positive because \(\sin x<0\) and \(2\pi-x>0\). Since \(\sin y\ge0\), it follows that \(\psi(q)\ge0\).

If \(q=\pm\bar v\), then \(\theta_1=\theta_2\), so the teacher force is a multiple of \(v_1+v_2\) and \(P_q^\perp\) annihilates it; thus \(\psi(q)=0\). Conversely, suppose that \(\psi(q)=0\). Then \eqref{eq:phase-one-psi-lower} forces \(y\sin y=0\), that is, \(y=0\), and \eqref{eq:phase-one-psi-xy} reduces to \(2(2\pi-x)(\cos^2(\alpha/2)-\cos^2(x/2))=0\). Since \(x<2\pi\), this gives \(x=\alpha\) or \(x=2\pi-\alpha\). With \(y=0\), we have \(\langle q,\bar v\rangle=\cos(x/2)/\cos(\alpha/2)\), which equals \(1\) in the first case and \(-1\) in the second. Hence \(q=\bar v\) or \(q=-\bar v\).
\end{proof}

\begin{lemma}\label{lem:phase-one-uniform-teacher-drift}
For every \(\eta\in(0,\pi/2)\) and \(\varepsilon\in(0,1]\), there is \(\delta=\delta(\alpha,\eta,\varepsilon)>0\), independent of \(d\), such that \(\psi(q)\ge\delta\) for every unit vector \(q\in\RR^d\) with \(\|q-\bar v\|_2\ge\varepsilon\) and \(\angle(q,\bar v)\le\pi-\eta\).
\end{lemma}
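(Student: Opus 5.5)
The plan is a dimension-free compactness argument. The function \(\psi\) depends on \(q\) only through finitely many scalars. Writing \(q=p\,\bar v+s\,v_\Delta+\rho\,u\) with \(u\perp\operatorname{span}\{v_1,v_2\}\) a unit vector and \(p^2+s^2+\rho^2=1\), we have \(\langle q,v_j\rangle=p\cos(\alpha/2)\pm s\sin(\alpha/2)\). Hence \(\theta_1,\theta_2\) depend only on \((p,s)\), and
\[
    \psi(q)=\bigl\langle\bar v,T(q)\bigr\rangle-p\,\bigl\langle q,T(q)\bigr\rangle,
\]
where \(T(q)\) is the teacher force in the teacher plane. Both inner products are therefore continuous functions of \((p,s)\) alone. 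So \(\psi(q)=\Phi(p,s)\) for a continuous function \(\Phi\) on the closed unit disk \(D=\{p^2+s^2\le1\}\), independent of \(d\). Continuity holds because \(\arccos\) is continuous on \([-1,1]\).

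Next I translate the constraints into conditions on \((p,s)\). The condition \(\|q-\bar v\|_2\ge\varepsilon\) is equivalent to \(2-2p\ge\varepsilon^2\), that is, \(p\le1-\varepsilon^2/2\). The condition \(\angle(q,\bar v)\le\pi-\eta\) is equivalent to \(p\ge-\cos\eta\). So the admissible pairs lie in the compact set
\[
    D_{\varepsilon,\eta}=\bigl\{(p,s)\in D:\ -\cos\eta\le p\le1-\varepsilon^2/2\bigr\},
\]
and this set is nonempty.

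By \cref{lem:phase-one-teacher-force-alignment}, \(\Phi\ge0\) on \(D\), with equality only where \(q=\pm\bar v\). In coordinates, equality requires \(p=\pm1\), because \(|p|=1\) forces \(s=\rho=0\). Conversely, every \((p,s)\in D\) is realized by some unit \(q\), once \(d\ge3\) or \(p^2+s^2=1\). To cover \(d=2\) and the interior uniformly, I would instead realize \((p,s)\) in \(\RR^3\) and note that the lemma's proof never uses \(d\). On \(D_{\varepsilon,\eta}\) we have \(-1<-\cos\eta\le p\le1-\varepsilon^2/2<1\), so \(\Phi>0\) there. A continuous positive function on a compact set attains a positive minimum \(\delta(\alpha,\eta,\varepsilon)\), and this minimum bounds \(\psi(q)\) from below for every admissible \(q\) in any dimension.

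The only delicate step is the reduction itself. One must check that \(\psi\) genuinely depends only on \((p,s)\), so that the out-of-plane component \(\rho u\) enters only through \(\rho^2=1-p^2-s^2\). One must also apply the equality characterization of \cref{lem:phase-one-teacher-force-alignment} to the whole disk. Its proof works with the angles \(\theta_1,\theta_2\) and so covers every \((p,s)\in D\) via a three-dimensional realization. After that, the argument is pure compactness.
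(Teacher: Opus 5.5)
Your proposal is correct and follows the paper's proof essentially step for step. You reduce \(\psi\) to a continuous function of \((\langle q,\bar v\rangle,\langle q,v_\Delta\rangle)\) on the closed unit disk, and translate the two constraints into \(-\cos\eta\le p\le1-\varepsilon^2/2\). You then apply \cref{lem:phase-one-teacher-force-alignment} through a realization in \(\RR^3\) to see that the only zeros are \((\pm1,0)\), and conclude by compactness, exactly as the paper does.
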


\begin{proof}[Proof of \cref{lem:phase-one-uniform-teacher-drift}]
Let \(a=\langle q,\bar v\rangle\) and \(b=\langle q,v_\Delta\rangle\), so that \(a^2+b^2\le1\). By \eqref{eq:two-directions}, \(v_1=\cos(\alpha/2)\bar v+\sin(\alpha/2)v_\Delta\) and \(v_2=\cos(\alpha/2)\bar v-\sin(\alpha/2)v_\Delta\), so \(\cos\angle(q,v_1)=a\cos(\alpha/2)+b\sin(\alpha/2)\) and \(\cos\angle(q,v_2)=a\cos(\alpha/2)-b\sin(\alpha/2)\). Expanding \(P_q^\perp=I-qq^\top\) in the definition of \(\psi\) shows that \(\psi(q)=\tilde\psi(a,b)\) for a continuous function \(\tilde\psi\) on the closed unit disk \(\mathcal D\subset\RR^2\) that depends only on \(\alpha\). The two conditions on \(q\) read \(a\le1-\varepsilon^2/2\) and \(a\ge-\cos\eta\). They define a compact set \(\mathcal K\subset\mathcal D\), which is nonempty because \(\varepsilon\le1\). Every point of \(\mathcal D\) is the pair \((a,b)\) of some unit vector in \(\RR^3\), so \cref{lem:phase-one-teacher-force-alignment}, applied in \(\RR^3\) to two teachers at angle \(\alpha\), shows that \(\tilde\psi\) is nonnegative on \(\mathcal D\) and vanishes only at \((1,0)\) and \((-1,0)\). Neither point lies in \(\mathcal K\). Hence \(\delta\coloneqq\min_{\mathcal K}\tilde\psi\) is positive, and it depends only on \(\alpha\), \(\eta\) and \(\varepsilon\).
\end{proof}

\begin{lemma}\label{lem:phase-one-direction-error}
Let \(w(0)\in\Omega\), and let \(E_i\coloneqq\frac12\|\bar w_i-\bar v\|_2^2\) for \(i=1,2\). For \(0\le t<T_{\max}\),
\begin{equation}\label{eq:phase-one-direction-error-single}
    \dot E_i \le-\frac{\psi(\bar w_i)}{2\pi r_i}+\frac{r_{3-i}}{2r_i} \le\frac{r_{3-i}}{2r_i}, \qquad i=1,2,
\end{equation}
and
\begin{equation}\label{eq:phase-one-direction-error-sum}
    \frac{\diff}{\diff t}(E_1+E_2) \le2\max_{j=1,2}\frac{r_{3-j}}{r_j}\,(E_1+E_2).
\end{equation}
\end{lemma}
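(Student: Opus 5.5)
The plan is to differentiate $E_i=\frac12\|\bar w_i-\bar v\|_2^2$ along the flow and substitute the direction equation \eqref{eq:direction-w1}--\eqref{eq:direction-w2} from \cref{lem:direction-norm-equations}. Since $\bar w_i$ is a unit vector, $\dot{\bar w}_i\perp\bar w_i$, so $\dot E_i=\langle\bar w_i-\bar v,\dot{\bar w}_i\rangle=-\langle\bar v,\dot{\bar w}_i\rangle$. Inserting \eqref{eq:direction-w1}, the teacher-force part contributes exactly $-\psi(\bar w_i)/(2\pi r_i)$ by the definition of $\psi$ in \cref{lem:phase-one-teacher-force-alignment}. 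The interaction part contributes
\[
\frac{\pi-\gamma}{2\pi r_i}\,\langle\bar v,P_{\bar w_i}^\perp w_{3-i}\rangle .
\]
I will bound this crudely: $|\langle\bar v,P_{\bar w_i}^\perp w_{3-i}\rangle|\le r_{3-i}$ and $(\pi-\gamma)/(2\pi)\le1/2$. Together these give the first inequality in \eqref{eq:phase-one-direction-error-single}. The second inequality then follows from $\psi\ge0$ (\cref{lem:phase-one-teacher-force-alignment}).

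For the summed bound \eqref{eq:phase-one-direction-error-sum}, the crude estimate is not enough, because the right-hand side must vanish when $E_1=E_2=0$. I will drop the nonpositive teacher terms and keep the interaction terms exactly. This requires showing that
\[
\sum_i\frac{\pi-\gamma}{2\pi r_i}\langle\bar v,P_{\bar w_i}^\perp w_{3-i}\rangle
\]
is controlled by $E_1+E_2$. Writing $w_{3-i}=r_{3-i}\bar w_{3-i}$, the term becomes
\[
\frac{(\pi-\gamma)r_{3-i}}{2\pi r_i}\Bigl(\langle\bar v,\bar w_{3-i}\rangle-\langle\bar w_i,\bar w_{3-i}\rangle\langle\bar v,\bar w_i\rangle\Bigr).
\]
The key identity is $P_{\bar w_i}^\perp\bar w_{3-i}=P_{\bar w_i}^\perp(\bar w_{3-i}-\bar v)+P_{\bar w_i}^\perp\bar v$. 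This gives
\[
\langle\bar v,P^\perp_{\bar w_i}\bar w_{3-i}\rangle=\langle P^\perp_{\bar w_i}\bar v,\bar w_{3-i}-\bar v\rangle+\|P^\perp_{\bar w_i}\bar v\|_2^2 .
\]
I will then use $\|P^\perp_{\bar w_i}\bar v\|_2\le\|\bar w_i-\bar v\|_2=\sqrt{2E_i}$ and $\|P^\perp_{\bar w_i}\bar v\|_2^2\le2E_i$. With these, each bracket is at most $\sqrt{2E_i}\sqrt{2E_{3-i}}+2E_i\le E_i+E_{3-i}+2E_i$.

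Finally, I multiply by $(\pi-\gamma)/(2\pi)\le1/2$ and $r_{3-i}/r_i\le\max_j r_{3-j}/r_j$, and then sum over $i$. This gives a constant of $\frac12(2+4)=3$ rather than $2$ times $E_1+E_2$, so the constant needs sharpening. To sharpen it, I will also use the factor $\pi-\gamma$, or else write $P^\perp_{\bar w_i}\bar v$ more cleverly: $\|P^\perp_{\bar w_i}\bar v\|_2^2=1-\langle\bar w_i,\bar v\rangle^2=2E_i(1-E_i/2)\cdot2/2$. A cleaner option is to symmetrize, combining the cross terms $2\sqrt{E_1E_2}\le E_1+E_2$ over both $i$ using the common factor.

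The main obstacle is this constant bookkeeping: getting exactly $2$ with the $\max$ ratio. I expect the statement to have been arranged so that the crude chain works, perhaps by bounding the cross term $\langle P^\perp\bar v,\bar w_{3-i}-\bar v\rangle$ by $\sqrt{2E_i}\sqrt{2E_{3-i}}$ and the quadratic term by $2E_i$. I would then check whether the negative teacher term or the angle factor absorbs the excess before settling on the constant.
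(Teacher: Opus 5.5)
Your argument is the same as the paper's. The paper also derives $\dot E_i=-\psi(\bar w_i)/(2\pi r_i)+\frac{\pi-\gamma}{2\pi}\frac{r_{3-i}}{r_i}\langle\bar v,P_{\bar w_i}^\perp\bar w_{3-i}\rangle$ and gets \eqref{eq:phase-one-direction-error-single} from the same crude bounds. For the sum it writes $P_{\bar w_i}^\perp\bar v=-P_{\bar w_i}^\perp e_i$ and $P_{\bar w_i}^\perp\bar w_{3-i}=P_{\bar w_i}^\perp(e_{3-i}-e_i)$ with $e_j=\bar w_j-\bar v$. This is exactly your split into a cross term and a quadratic term, bounded by $\|e_i\|_2(\|e_1\|_2+\|e_2\|_2)$.

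The obstacle you describe at the end does not exist; it comes from an arithmetic slip. Your per-index bound $3E_i+E_{3-i}$ sums over $i$ to $4(E_1+E_2)$. The factor $\frac12$ from $(\pi-\gamma)/(2\pi)\le\frac12$ then gives exactly $2\max_j(r_{3-j}/r_j)(E_1+E_2)$, not $3$. So you need no sharpening through the teacher term, the angle factor, or a refined formula for $\|P_{\bar w_i}^\perp\bar v\|_2^2$. Simply drop the nonpositive teacher terms, as you already planned.
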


\begin{proof}[Proof of \cref{lem:phase-one-direction-error}]
Since \(E_i=1-\langle\bar w_i,\bar v\rangle\), \eqref{eq:direction-w1} and \eqref{eq:direction-w2} give
\[
    \dot E_i =-\frac{\psi(\bar w_i)}{2\pi r_i} +\frac{\pi-\gamma}{2\pi}\,\frac{r_{3-i}}{r_i}\, \bigl\langle\bar v,P_{\bar w_i}^\perp\bar w_{3-i}\bigr\rangle .
\]
The first inequality in \eqref{eq:phase-one-direction-error-single} follows from \(0\le\pi-\gamma\le\pi\) and \(|\langle\bar v,P_{\bar w_i}^\perp\bar w_{3-i}\rangle|\le1\), and the second from \cref{lem:phase-one-teacher-force-alignment}. For \eqref{eq:phase-one-direction-error-sum}, let \(e_i=\bar w_i-\bar v\). Since \(P_{\bar w_i}^\perp\bar w_i=0\), we have \(P_{\bar w_i}^\perp\bar v=-P_{\bar w_i}^\perp e_i\) and \(P_{\bar w_i}^\perp\bar w_{3-i}=P_{\bar w_i}^\perp(e_{3-i}-e_i)\). Since \(P_{\bar w_i}^\perp\) is an orthogonal projection, the Cauchy--Schwarz inequality gives
\[
    \bigl|\bigl\langle\bar v,P_{\bar w_i}^\perp\bar w_{3-i}\bigr\rangle\bigr| =\bigl|\bigl\langle P_{\bar w_i}^\perp e_i, P_{\bar w_i}^\perp(e_{3-i}-e_i)\bigr\rangle\bigr| \le\|e_i\|_2\bigl(\|e_1\|_2+\|e_2\|_2\bigr).
\]
Dropping the nonpositive first term and summing over \(i\), we obtain
\[
    \frac{\diff}{\diff t}(E_1+E_2) \le\frac12\max_{j=1,2}\frac{r_{3-j}}{r_j}\, \bigl(\|e_1\|_2+\|e_2\|_2\bigr)^2 \le\max_{j=1,2}\frac{r_{3-j}}{r_j}\, \bigl(\|e_1\|_2^2+\|e_2\|_2^2\bigr),
\]
which is \eqref{eq:phase-one-direction-error-sum}.
\end{proof}

While the weights are small, the factor \(1/r_i\) in the first term of \eqref{eq:phase-one-direction-error-single} makes the alignment fast. The next lemma shows that it takes place on a time scale proportional to \(r_{\mathrm{init}}\).

\begin{lemma}\label{lem:phase-one-initial-alignment}
Let \(\eta\in(0,\pi/2)\), \(0<m\le M\) and \(\varepsilon\in(0,1]\). There are constants \(\Lambda,r_0>0\), depending only on \(\alpha,\eta,m,M,\varepsilon\) and not on \(d\), with the following property. If \(w(0)\in\Omega\) satisfies
\[
    r_{\mathrm{init}}\le r_0, \qquad mr_{\mathrm{init}}\le r_i(0)\le Mr_{\mathrm{init}}, \qquad \angle(\bar w_i(0),\bar v)\le\pi-\eta, \qquad i=1,2,
\]
then \(T_{\max}>\Lambda r_{\mathrm{init}}\) and \(\|\bar w_i(\Lambda r_{\mathrm{init}})-\bar v\|_2<\varepsilon\) for \(i=1,2\).
\end{lemma}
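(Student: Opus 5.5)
The plan is to work on the time window $[0,\Lambda r_{\mathrm{init}}]$ and choose $r_0$ so small that \cref{lem:phase-one-short-time-radial} applies with $T=\Lambda r_{\mathrm{init}}$. Concretely, we require $(M+\Lambda)r_0\le r_-/4$. This gives $T_{\max}>\Lambda r_{\mathrm{init}}$ and the two-sided norm bounds
\[
    mr_{\mathrm{init}}\le r_i(t)\le (M+\Lambda)r_{\mathrm{init}}, \qquad 0\le t\le\Lambda r_{\mathrm{init}},\quad i=1,2.
\]
Consequently the ratio $r_{3-i}/r_i$ is bounded by $\kappa\coloneqq(M+\Lambda)/m$ throughout the window, and each $r_i$ is of order $r_{\mathrm{init}}$. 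The point of working on this window is that the drift term in \eqref{eq:phase-one-direction-error-single} is of order $1/r_{\mathrm{init}}$, while the interaction term is $O(1)$.

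Next, I would control the backward cone condition. I would show that $\angle(\bar w_i(t),\bar v)\le\pi-\eta/2$ persists on the window. By \eqref{eq:phase-one-direction-error-single}, we have $\dot E_i\le r_{3-i}/(2r_i)\le\kappa/2$, so $E_i$ increases by at most $\kappa\Lambda r_{\mathrm{init}}/2$ over the window. Since $E_i=1-\cos\angle(\bar w_i,\bar v)$, and $\cos$ has nonzero slope near $\pi-\eta$, choosing $r_0$ small (depending on $\Lambda,\kappa,\eta$) keeps the angle below $\pi-\eta/2$.

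Then I would apply \cref{lem:phase-one-uniform-teacher-drift} with margin $\eta/2$ and accuracy $\varepsilon$ to obtain $\delta>0$. On any subinterval where $\|\bar w_i-\bar v\|_2\ge\varepsilon$, \eqref{eq:phase-one-direction-error-single} gives
\[
    \dot E_i\le-\frac{\delta}{2\pi(M+\Lambda)r_{\mathrm{init}}}+\frac{\kappa}{2}.
\]
This is where the circularity appears: $\Lambda$ enters both the upper norm bound and the required time. To handle it, I would fix $\Lambda$ first and then shrink $r_0$. Since $E_i\le2$, it suffices to have $\Lambda\delta/(4\pi(M+\Lambda))>2$ once $r_0$ is small enough that the $\kappa/2$ term is negligible. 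This fails for large $\Lambda$ because the bound saturates at $\delta/(4\pi)$, so the crude upper norm bound must be replaced. I would instead use $r_i(t)\le r_i(0)+t\le(M+1)r_{\mathrm{init}}$ for $t\le r_{\mathrm{init}}$. More generally, I would integrate $\int_0^{\Lambda r_{\mathrm{init}}}\diff t/(Mr_{\mathrm{init}}+t)=\log(1+\Lambda/M)$, which diverges in $\Lambda$. So I would choose $\Lambda$ with $\delta\log(1+\Lambda/M)/(2\pi)>4$, and then choose $r_0$ small relative to $\Lambda$ and $\kappa$.

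Finally, I would rule out re-exit. Once $E_i<\varepsilon^2/2$, it could exceed this threshold again only by crossing the level set $\|\bar w_i-\bar v\|_2=\varepsilon$, where $\dot E_i<0$ for $r_0$ small. Hence the set where $\|\bar w_i-\bar v\|_2<\varepsilon$ is forward invariant within the window. This gives the conclusion at $t=\Lambda r_{\mathrm{init}}$ for both $i=1,2$, and all constants are independent of $d$ because $\delta$ is. The main obstacle is the bookkeeping just described: making the drift integral exceed $2$ using the logarithmic integral while keeping the interaction term $\kappa\Lambda r_{\mathrm{init}}$ and the cone-margin drift small.
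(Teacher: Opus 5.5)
Your proposal is correct and follows essentially the paper's argument: norm bounds from \cref{lem:phase-one-short-time-radial}, the uniform drift constant $\delta$ from \cref{lem:phase-one-uniform-teacher-drift}, and the divergent integral $\int_0^{\Lambda r_{\mathrm{init}}}\diff t/(Mr_{\mathrm{init}}+t)=\log(1+\Lambda/M)$, with $\Lambda$ fixed before $r_0$. The differences are only technical. The paper keeps the full margin $\eta$ by noting that $E_i$ is strictly decreasing while the trajectory stays outside the $\varepsilon/2$-ball, whereas you give up half the margin using $\dot E_i\le\kappa/2$. The paper also finishes by entering the $\varepsilon/2$-ball and applying $\dot E_i\le B$ over the remaining window, instead of your forward-invariance argument at the level set $\|\bar w_i-\bar v\|_2=\varepsilon$.
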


\begin{proof}[Proof of \cref{lem:phase-one-initial-alignment}]
Let \(\delta=\delta(\alpha,\eta,\varepsilon/2)\) be the constant of \cref{lem:phase-one-uniform-teacher-drift}, and set
\[
    \Lambda\coloneqq M\bigl(e^{6\pi/\delta}-1\bigr), \qquad B\coloneqq\frac{M+\Lambda}{2m}, \qquad r_0\coloneqq\min\left\{\frac{r_-}{4(M+\Lambda)}, \frac{\delta}{4\pi B(M+\Lambda)},\frac{\varepsilon^2}{8B\Lambda}\right\}.
\]
Since \(\max_ir_i(0)+\Lambda r_{\mathrm{init}}\le(M+\Lambda)r_{\mathrm{init}}\le r_-/4\), \cref{lem:phase-one-short-time-radial} gives \(T_{\max}>\Lambda r_{\mathrm{init}}\) and
\[
    mr_{\mathrm{init}}\le r_i(t)\le Mr_{\mathrm{init}}+t \le(M+\Lambda)r_{\mathrm{init}}, \qquad 0\le t\le \Lambda r_{\mathrm{init}},\quad i=1,2.
\]
In particular, \(r_{3-i}/r_i\le2B\) on this interval, and \cref{lem:phase-one-direction-error} gives
\begin{equation}\label{eq:phase-one-initial-alignment-energy}
    \dot E_i\le-\frac{\psi(\bar w_i)}{2\pi r_i}+B\le B, \qquad 0\le t\le \Lambda r_{\mathrm{init}},\quad i=1,2.
\end{equation}

Fix \(i\), and suppose that \(\|\bar w_i(t)-\bar v\|_2\ge\varepsilon/2\) for all \(t\in[0,\Lambda r_{\mathrm{init}}]\). At every such \(t\) with \(\angle(\bar w_i(t),\bar v)\le\pi-\eta\), \cref{lem:phase-one-uniform-teacher-drift}, the bound \(r_i\le(M+\Lambda)r_{\mathrm{init}}\) and the choice of \(r_0\) give
\[
    \dot E_i \le-\frac{\delta}{2\pi(M+\Lambda)r_{\mathrm{init}}}+B\le-B<0 .
\]
Let \(t_*\) be the supremum of the times \(t\le \Lambda r_{\mathrm{init}}\) such that \(\angle(\bar w_i,\bar v)\le\pi-\eta\) on \([0,t]\). By continuity, the angle condition holds on \([0,t_*]\), so \(E_i\) is decreasing there and \(\dot E_i(t_*)<0\). The angle \(\angle(\bar w_i,\bar v)=\arccos(1-E_i)\) increases with \(E_i\). If \(t_*<\Lambda r_{\mathrm{init}}\), then \(E_i(t)<E_i(t_*)\le E_i(0)\) for \(t\) slightly larger than \(t_*\), and the angle condition holds beyond \(t_*\), contradicting the definition of \(t_*\). Hence \(t_*=\Lambda r_{\mathrm{init}}\), and \(\psi(\bar w_i)\ge\delta\) on the whole interval. The first bound in \eqref{eq:phase-one-initial-alignment-energy}, together with \(r_i(t)\le Mr_{\mathrm{init}}+t\), now gives
\[
    E_i(0)-E_i(\Lambda r_{\mathrm{init}}) \ge\int_0^{\Lambda r_{\mathrm{init}}} \left(\frac{\delta}{2\pi(Mr_{\mathrm{init}}+t)}-B\right)\diff t =\frac{\delta}{2\pi}\log\left(1+\frac\Lambda M\right)-B\Lambda r_{\mathrm{init}} \ge3-\frac{\varepsilon^2}{8}>2 .
\]
This is impossible, since \(E_i\ge0\) and \(E_i(0)=1-\cos\angle(\bar w_i(0),\bar v)\le1+\cos\eta<2\). Hence there is \(t_i\in[0,\Lambda r_{\mathrm{init}}]\) with \(\|\bar w_i(t_i)-\bar v\|_2<\varepsilon/2\), that is, \(E_i(t_i)<\varepsilon^2/8\). The second bound in \eqref{eq:phase-one-initial-alignment-energy} and the choice of \(r_0\) then give \(E_i(\Lambda r_{\mathrm{init}})\le E_i(t_i)+B\Lambda r_{\mathrm{init}}<\varepsilon^2/4\), so \(\|\bar w_i(\Lambda r_{\mathrm{init}})-\bar v\|_2 =\sqrt{2E_i(\Lambda r_{\mathrm{init}})}<\varepsilon\).
\end{proof}

After this interval, the norms need a time of order one to grow. As long as the ratio \(r_{3-i}/r_i\) stays bounded, \eqref{eq:phase-one-direction-error-sum} allows the direction errors to grow only by a bounded factor over such a time.

\begin{lemma}\label{lem:phase-one-alignment-persistence}
Let \(C\ge1\), \(\eta\in(0,\pi/2)\) and \(T_\alpha\coloneqq r_-/8\). For every \(\varepsilon\in(0,1]\) and \(T>T_\alpha\), there is \(r_0=r_0(\alpha,C,\eta,\varepsilon,T)>0\), independent of \(d\), with the following property. If the initialization is \(C\)-balanced, \(0<r_{\mathrm{init}}\le r_0\), and \(\angle(\bar w_i(0),-\bar v)\ge\eta\) for \(i=1,2\), then \(T_{\max}>T\) and
\[
    \|\bar w_i(t)-\bar v\|_2<\varepsilon, \qquad |r_1(t)-r_2(t)|<\varepsilon, \qquad T_\alpha\le t\le T,\quad i=1,2.
\]
\end{lemma}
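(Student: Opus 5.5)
The plan is to concatenate \cref{lem:phase-one-initial-alignment} with a bounded-time propagation argument based on \eqref{eq:phase-one-direction-error-sum}. First I will translate the hypotheses. $C$-balance together with $r_1(0)+r_2(0)=2r_{\mathrm{init}}$ gives $mr_{\mathrm{init}}\le r_i(0)\le Mr_{\mathrm{init}}$ with $m=2/(1+C)$ and $M=2C/(1+C)$. Also, $\angle(\bar w_i(0),-\bar v)\ge\eta$ is the same as $\angle(\bar w_i(0),\bar v)\le\pi-\eta$. I then apply \cref{lem:phase-one-initial-alignment} with a small accuracy $\varepsilon_1\le\varepsilon$, to be chosen later. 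This yields $\Lambda$ and $r_0'$ such that, at $t_1=\Lambda r_{\mathrm{init}}$, both directions satisfy $\|\bar w_i(t_1)-\bar v\|_2<\varepsilon_1$, that is, $E_1+E_2\le\varepsilon_1^2$. I also shrink $r_{\mathrm{init}}$ so that $t_1<T_\alpha$ and $(M+\Lambda)r_{\mathrm{init}}\le r_-/4$. With this choice \cref{lem:phase-one-short-time-radial} applies on $[0,T_\alpha]$, since $T_\alpha=r_-/8$, and it gives $r_i(t)\ge r_i(0)+(r_-/4)t$ there.

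Next come norm ratios and the imbalance on $[t_1,T]$. On $[t_1,T_\alpha]$ the lower bound $r_i\ge (r_-/4)t$ and the upper bound $r_i\le r_i(0)+t$ give $r_{3-i}/r_i\le (Mr_{\mathrm{init}}+t)/(mr_{\mathrm{init}}+r_-t/4)$. This ratio is bounded by a constant depending on $\alpha,C,\Lambda$, uniformly in $r_{\mathrm{init}}$. Consequently $\int_{t_1}^{T_\alpha}\max_j r_{3-j}/r_j\,\diff t$ is bounded; the part near $t_1$ has length $O(r_{\mathrm{init}})$ with a bounded integrand. On $[T_\alpha,T]$ I need a positive lower bound on both norms. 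By \cref{lem:radial-velocity-bounds}, $\dot r_i\ge (r_--r_1-r_2)/2$. Together with $r_i\le 2$ (\cref{lem:phase-one-norm-upper}), this shows that $r_1+r_2\ge\min\{r_1(T_\alpha)+r_2(T_\alpha),r_-\}$, but I need each norm separately. For the imbalance I will use \eqref{eq:norm-w1}–\eqref{eq:norm-w2}: $\frac{\diff}{\diff t}(r_1-r_2)$ equals the difference of the teacher terms $l(\theta_{11})+l(\theta_{12})-l(\theta_{21})-l(\theta_{22})$, minus $(\tfrac12-l(\gamma))(r_1-r_2)$. The teacher-term difference is Lipschitz in the directions, so it is $O(\|\bar w_1-\bar v\|_2+\|\bar w_2-\bar v\|_2)$. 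Since $l(\gamma)\le 1/2$, the imbalance satisfies $|r_1-r_2|(t)\le |r_1-r_2|(t_1)+c\int_{t_1}^t(\sqrt{E_1}+\sqrt{E_2})$. At $t_1$ the imbalance is at most $2(M+\Lambda)r_{\mathrm{init}}$, which is small. Once the direction errors and the imbalance are small, the sum $r_1+r_2$ grows, and the pair stays above $r_-\cdot T_\alpha/8$ say. This gives a ratio bound $\max_j r_{3-j}/r_j\le K$ on $[T_\alpha,T]$.

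The two estimates are closed by a bootstrap. I define $t_2$ as the first time in $[t_1,T]$ at which either $E_1+E_2\ge\varepsilon^2/4$ or $|r_1-r_2|\ge\varepsilon$. On $[t_1,t_2]$ the ratio bounds above hold, so Grönwall applied to \eqref{eq:phase-one-direction-error-sum} gives $E_1+E_2\le\varepsilon_1^2\exp(2\int\max_j r_{3-j}/r_j)\le\varepsilon_1^2 e^{A}$, where $A$ depends only on $\alpha,C,\eta,\varepsilon,T$. Choosing $\varepsilon_1$ and $r_{\mathrm{init}}$ small makes both quantities strictly below their thresholds, which contradicts $t_2<T$. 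Existence up to $T$ then follows from the continuation part of \cref{lem:relu-kernel-regularity}, since the norms stay in a compact subset of $\Omega$. The main obstacle is the circularity between the ratio bound and the imbalance and direction bounds on $[T_\alpha,T]$. The bootstrap resolves it, but the lower bound on $\min_i r_i$ must be derived carefully from the radial equation using only the bootstrap hypotheses, not assumed.
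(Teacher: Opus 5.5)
Your proposal is correct and is essentially the paper's proof. The shared steps are: alignment at time $\Lambda r_{\mathrm{init}}$ via \cref{lem:phase-one-initial-alignment}; the ratio bound $\max\{C,4/r_-\}$ on $[0,T_\alpha]$ from \cref{lem:phase-one-short-time-radial}; Gr\"onwall on $E_1+E_2$ through \eqref{eq:phase-one-direction-error-sum}; the linear equation for $r_1-r_2$ with forcing bounded by $\gamma$; a bootstrap keeping both norms above $r_-^2/64$; and continuation via \cref{lem:relu-kernel-regularity}. The only variation is how you bound each norm from below: you combine the sum bound $\frac{\diff}{\diff t}(r_1+r_2)\ge r_--(r_1+r_2)$ with a small imbalance, whereas the paper checks $\dot r_i>0$ at the first time some $r_i$ reaches $r_-^2/64$.

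For the argument to close, three details need adjusting.
\begin{enumerate}
\item Take the imbalance threshold in your bootstrap to be $\min\{\varepsilon,r_-^2/32\}$ rather than $\varepsilon$. Otherwise the identity $\min_i r_i=\frac12(r_1+r_2-|r_1-r_2|)$ gives no positive lower bound when $\varepsilon\gtrsim r_-^2$.
\item Require $Mr_{\mathrm{init}}\le r_-/8$, not only $(M+\Lambda)r_{\mathrm{init}}\le r_-/4$, so that \cref{lem:phase-one-short-time-radial} actually covers $[0,T_\alpha]$.
\item State the ratio bound on $[t_1,T_\alpha]$ as the $\Lambda$-free constant $\max\{C,4/r_-\}$, not a constant depending on $\Lambda$. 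This matters because $\Lambda$ depends on $\varepsilon_1$, and $\varepsilon_1$ must be chosen after $A$.
\end{enumerate}
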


\begin{proof}[Proof of \cref{lem:phase-one-alignment-persistence}]
Let
\[
    K\coloneqq\exp\left(C+\frac{256\,T}{r_-^2}\right), \qquad \varepsilon_*\coloneqq\frac{\min\{\varepsilon,r_-/2\}}{1+2\pi T}.
\]
Let \(\Lambda\) and \(r_{\mathrm a}\) be the constants of \cref{lem:phase-one-initial-alignment} for \(m=2/(C+1)\), \(M=2\), the margin \(\eta\) and the tolerance \(\varepsilon_*/(2\sqrt K)\), and set
\[
    r_0\coloneqq\min\left\{r_{\mathrm a},\frac{r_-}{32},\frac{T_\alpha}{\Lambda}, \frac{\min\{\varepsilon,r_-/2\}}{2(2+\Lambda)}\right\}.
\]

Let \(0<r_{\mathrm{init}}\le r_0\). Since \(r_1(0)+r_2(0)=2r_{\mathrm{init}}\) and \(\max_ir_i(0)\le C\min_ir_i(0)\), we have \(2r_{\mathrm{init}}/(C+1)\le r_i(0)\le2r_{\mathrm{init}}\). Since \(r_{\mathrm{init}}\le r_-/32\), we get \(\max_ir_i(0)+T_\alpha\le r_-/16+r_-/8<r_-/4\), so \cref{lem:phase-one-short-time-radial} gives \(T_{\max}>T_\alpha\) and \(r_i(0)+r_-t/4\le r_i(t)\le r_i(0)+t\) for \(0\le t\le T_\alpha\). Hence \(r_i(T_\alpha)\ge r_-^2/32\). Moreover, \(r_{3-i}(t)/r_i(t)\le(r_{3-i}(0)+t)/(r_i(0)+r_-t/4)\). The right-hand side is a monotone function of \(t\) that equals \(r_{3-i}(0)/r_i(0)\le C\) at \(t=0\) and tends to \(4/r_-\) as \(t\to\infty\). Hence
\begin{equation}\label{eq:phase-one-early-ratio}
    \frac{r_{3-i}(t)}{r_i(t)}\le\max\left\{C,\frac4{r_-}\right\}, \qquad 0\le t\le T_\alpha,\quad i=1,2.
\end{equation}
The angular hypothesis is equivalent to \(\angle(\bar w_i(0),\bar v)\le\pi-\eta\), so \cref{lem:phase-one-initial-alignment} applies. At \(\tau_{\mathrm a}\coloneqq \Lambda r_{\mathrm{init}}\le T_\alpha\), it gives \(\|\bar w_i(\tau_{\mathrm a})-\bar v\|_2<\varepsilon_*/(2\sqrt K)\), and hence \(E_1(\tau_{\mathrm a})+E_2(\tau_{\mathrm a})<\varepsilon_*^2/(4K)\), with \(E_i\) as in \cref{lem:phase-one-direction-error}.

Let \(T'\coloneqq\min\{T,T_{\max}\}\), and let \(\tau\) be the supremum of the times \(t\in[T_\alpha,T')\) such that \(r_1,r_2>r_-^2/64\) on \([T_\alpha,t]\). Since \(r_i(T_\alpha)\ge r_-^2/32\), we have \(\tau>T_\alpha\). Since \(r_i(0)\le2r_{\mathrm{init}}\le2\), \cref{lem:phase-one-norm-upper} gives \(r_i\le2\) on \([0,T_{\max})\), so \(r_{3-i}/r_i\le128/r_-^2\) on \([T_\alpha,\tau)\). Since \(r_-<2\) and \(C\ge1\), we have \(2T_\alpha\max\{C,4/r_-\}=\max\{Cr_-/4,1\}\le C\), and therefore
\[
    \int_{\tau_{\mathrm a}}^{t}2\max_{j=1,2}\frac{r_{3-j}}{r_j}\diff u \le2T_\alpha\max\Bigl\{C,\frac4{r_-}\Bigr\}+\frac{256\,T}{r_-^2} \le\log K, \qquad \tau_{\mathrm a}\le t<\tau .
\]
Hence \eqref{eq:phase-one-direction-error-sum}, \eqref{eq:phase-one-early-ratio} and Gr\"onwall's inequality give \(E_1(t)+E_2(t)\le K(E_1(\tau_{\mathrm a})+E_2(\tau_{\mathrm a})) <\varepsilon_*^2/4\) for \(\tau_{\mathrm a}\le t<\tau\), and therefore
\begin{equation}\label{eq:phase-one-persistence-directions}
    \|\bar w_i(t)-\bar v\|_2<\varepsilon_*, \qquad \tau_{\mathrm a}\le t<\tau,\quad i=1,2.
\end{equation}

We next bound \(r_1-r_2\). By \eqref{eq:norm-w1}, \eqref{eq:norm-w2} and \eqref{eq:relu-kernel-profile},
\[
    \frac{\diff}{\diff t}(r_1-r_2) =\sum_{j=1}^2\bigl(l(\theta_{1j})-l(\theta_{2j})\bigr) -\Bigl(\frac12-l(\gamma)\Bigr)(r_1-r_2).
\]
Since \(l(\gamma)\le1/2\) by \cref{lem:l-monotone-bounds}, solving this linear equation gives
\[
    |r_1(t)-r_2(t)|\le|r_1(0)-r_2(0)| +\int_0^t\Bigl|\sum_{j=1}^2\bigl(l(\theta_{1j})-l(\theta_{2j})\bigr)\Bigr| \diff u .
\]
The integrand is at most \(1\), because \(0\le l\le1/2\). It is also at most \(\gamma\), because \(|l'|\le1/2\) by \eqref{eq:l-first-derivative} and \(|\theta_{1j}-\theta_{2j}|\le\gamma\) by the triangle inequality for angles. On \([\tau_{\mathrm a},\tau)\), \eqref{eq:phase-one-angle-chord} and \eqref{eq:phase-one-persistence-directions} give \(\gamma\le\angle(\bar w_1,\bar v)+\angle(\bar w_2,\bar v)<\pi\varepsilon_*\). Using the bound \(1\) on \([0,\tau_{\mathrm a}]\), \(|r_1(0)-r_2(0)|\le2r_{\mathrm{init}}\) and the choice of \(\varepsilon_*\) and \(r_0\), we obtain
\begin{equation}\label{eq:phase-one-persistence-balance}
    |r_1(t)-r_2(t)|\le(2+\Lambda)r_{\mathrm{init}}+\pi\varepsilon_*T <\min\left\{\varepsilon,\frac{r_-}2\right\}, \qquad 0\le t<\tau .
\end{equation}

Suppose that \(\tau<T'\). Then \(r_i(\tau)=r_-^2/64\) for some \(i\), while \(r_i>r_-^2/64\) on \([T_\alpha,\tau)\). By \eqref{eq:phase-one-persistence-balance} and continuity, \(r_{3-i}(\tau)\le r_-^2/64+r_-/2\), and \cref{lem:radial-velocity-bounds} gives \(\dot r_i(\tau)\ge r_-/2-r_-^2/64-r_-/4>0\), because \(r_-<2\). This contradicts \(r_i>r_i(\tau)\) on \([T_\alpha,\tau)\), so \(\tau=T'\). If \(T_{\max}\le T\), then \(\min\{r_i(0),r_-^2/64\}\le r_i\le2\) on \([0,T_{\max})\) for \(i=1,2\), so the trajectory stays in a compact subset of \(\Omega\); \cref{lem:relu-kernel-regularity} then extends the solution beyond \(T_{\max}\), which is impossible. Hence \(T_{\max}>T\) and \(\tau=T\). By continuity, \eqref{eq:phase-one-persistence-balance} also holds at \(t=T\), and \(\|\bar w_i(T)-\bar v\|_2\le\varepsilon_*\). Since \(\varepsilon_*<\varepsilon\) and \(\tau_{\mathrm a}\le T_\alpha\), this proves the lemma.
\end{proof}

\subsection{Proofs of \texorpdfstring{\cref{lem:phase1_bisector_saddle,cor:phase-one-gaussian-initialization}}{the Phase 1 results}}\label{proof:phase-one}

\begin{proof}[Proof of \cref{lem:phase1_bisector_saddle}]
If the conclusion holds for some \(\rho\), it holds with the same \(t_0\) and \(r_*\) for every larger \(\rho\), so we may assume \(\rho\le1\). Let \(T_\alpha=r_-/8\) be as in \cref{lem:phase-one-alignment-persistence}, and set
\[
    t_0\coloneqq T_\alpha+\log\frac{16}{\rho}, \qquad \varepsilon\coloneqq\frac{\rho}{20}, \qquad r_*\coloneqq\min\{r_0(\alpha,C,\eta,\varepsilon,t_0),1\},
\]
where \(r_0\) is given by \cref{lem:phase-one-alignment-persistence}. Then \(t_0\) depends only on \(\alpha\) and \(\rho\), and \(r_*\) only on \(\alpha,C,\eta,\rho\). Suppose that \(0<r_{\mathrm{init}}\le r_*\). By \cref{lem:phase-one-alignment-persistence}, the solution exists on \([0,t_0]\), both weights are nonzero there, and the bounds of that lemma hold on \([T_\alpha,t_0]\). Moreover, \(r_i(0)\le2r_{\mathrm{init}}\le2\), so \(r_i\le2\) on \([0,t_0]\) by \cref{lem:phase-one-norm-upper}.

Adding \eqref{eq:norm-w1} and \eqref{eq:norm-w2}, and using \(r_+=4l(\alpha/2)\) and \eqref{eq:relu-kernel-profile}, we obtain
\[
    \frac{\diff}{\diff t}(r_1+r_2)+(r_1+r_2)-r_+ =\sum_{i,j=1}^2\Bigl(l(\theta_{ij})-l\Bigl(\frac\alpha2\Bigr)\Bigr) +\Bigl(\frac12-l(\gamma)\Bigr)(r_1+r_2).
\]
We bound the right-hand side on \([T_\alpha,t_0]\). Since \(|l'|\le1/2\) by \eqref{eq:l-first-derivative} and \(|\theta_{ij}-\alpha/2|\le\angle(\bar w_i,\bar v)\) by the triangle inequality for angles, \eqref{eq:phase-one-angle-chord} gives \(|l(\theta_{ij})-l(\alpha/2)|<\pi\varepsilon/4\). In the same way, \(\gamma<\pi\varepsilon\), and hence \(0\le\frac12-l(\gamma)=l(0)-l(\gamma)<\pi\varepsilon/2\). Since \(r_1+r_2\le4\), the right-hand side is less than \(3\pi\varepsilon\) in absolute value. Solving the linear equation for \(r_1+r_2-r_+\) on \([T_\alpha,t_0]\), and using \(|r_1(T_\alpha)+r_2(T_\alpha)-r_+|\le4\), we get \(|r_1(t_0)+r_2(t_0)-r_+|\le4e^{-(t_0-T_\alpha)}+3\pi\varepsilon =\rho/4+3\pi\varepsilon\). Recall from \cref{sec:training_dynamics} that \(r_s=r_+/2\) and \(w_s=r_s\bar v\). Hence \(r_i-r_s=\frac12(r_1+r_2-r_+)\pm\frac12(r_1-r_2)\), and therefore \(|r_i(t_0)-r_s|<\rho/8+3\pi\varepsilon/2+\varepsilon/2\). Since \(w_i-w_s=(r_i-r_s)\bar w_i+r_s(\bar w_i-\bar v)\) and \(r_s\le1\), it follows that
\[
    \|w(t_0)-w_{\mathrm{saddle}}\|_2 \le\sum_{i=1}^2\|w_i(t_0)-w_s\|_2 <\frac\rho4+(3\pi+3)\varepsilon<\rho .
\]
Thus \(t_{\rm in}=t_0\) has the required property.
\end{proof}

\begin{proof}[Proof of \cref{cor:phase-one-gaussian-initialization}]
Let \(\lambda\coloneqq\min\{1/2,(C^2-1)/(C^2+1)\}>0\). For each \(i\), \(\|w_i(0)\|_2^2/\sigma^2\) has the \(\chi^2\) distribution with \(d\) degrees of freedom, whose moment generating function is \((1-2s)^{-d/2}\) for \(s<1/2\). Markov's inequality applied to \(\exp(s\|w_i(0)\|_2^2/\sigma^2)\), with \(s=\lambda/(2(1+\lambda))\) and with \(s=-\lambda/(2(1-\lambda))\), gives
\[
\begin{aligned}
    \PP\bigl\{\|w_i(0)\|_2^2\ge(1+\lambda)\sigma^2d\bigr\} &\le\exp\Bigl(-\frac d2\bigl(\lambda-\log(1+\lambda)\bigr)\Bigr) \le e^{-d\lambda^2/8},\\
    \PP\bigl\{\|w_i(0)\|_2^2\le(1-\lambda)\sigma^2d\bigr\} &\le\exp\Bigl(-\frac d2\bigl(-\lambda-\log(1-\lambda)\bigr)\Bigr) \le e^{-d\lambda^2/4}.
\end{aligned}
\]
The last inequalities hold because, for \(0<\lambda\le1/2\),
\[
    \lambda-\log(1+\lambda)=\int_0^\lambda\frac{u}{1+u}\diff u \ge\frac{\lambda^2}4, \qquad -\lambda-\log(1-\lambda)=\int_0^\lambda\frac{u}{1-u}\diff u \ge\frac{\lambda^2}2 .
\]
Let \(\mathcal E\) be the event that \((1-\lambda)\sigma^2d<\|w_i(0)\|_2^2<(1+\lambda)\sigma^2d\) and \(\angle(\bar w_i(0),-\bar v)\ge\eta\) for \(i=1,2\). If the norm condition holds but the angular condition fails for some \(i\), then \(\langle w_i(0),\bar v\rangle<-\|w_i(0)\|_2\cos\eta <-\sigma\sqrt{(1-\lambda)d}\,\cos\eta\). Here \(\langle w_i(0),\bar v\rangle/\sigma\) is standard normal, and a standard normal \(Z\) satisfies \(\PP\{Z\le-a\}\le e^{-a^2/2}\) for \(a\ge0\). A union bound therefore gives
\[
    \PP(\mathcal E^{c}) \le4e^{-d\lambda^2/8}+2e^{-(1-\lambda)d\cos^2\eta/2} \le6e^{-cd}, \qquad c\coloneqq\min\left\{\frac{\lambda^2}8, \frac{(1-\lambda)\cos^2\eta}2\right\}.
\]
On \(\mathcal E\), both initial weights are nonzero, and the choice of \(\lambda\) gives
\[
    \max_i\|w_i(0)\|_2 \le\sqrt{\frac{1+\lambda}{1-\lambda}}\,\min_i\|w_i(0)\|_2 \le C\min_i\|w_i(0)\|_2, \qquad r_{\mathrm{init}}<\sigma\sqrt{(1+\lambda)d}\le2\sigma\sqrt d\le r_* .
\]
Hence \cref{lem:phase1_bisector_saddle} applies on \(\mathcal E\) and gives the conclusion.
\end{proof}

\section{Exclusion of saddles and global convergence}\label{sec:signed-transverse-exclusion}

As in \cref{thm:squared_loss_landscape}, we consider \(\alpha=\angle(v_1,v_2)\in(0,\pi/2]\). We recall the set \(\Omega\) of pairs \((w_1,w_2)\) with \(w_1,w_2\ne0\) and the notation \(r_i=\|w_i\|_2\) from \cref{sec:global-convergence}. For \(w(0)\in\Omega\), we write \([0,T_{\max})\) for the maximal interval on which the gradient-flow solution exists in \(\Omega\); it is well defined by \cref{lem:relu-kernel-regularity}. We use the coordinates \(z_{i,j}\) of \cref{subsubsec:homo_init} with respect to the orthonormal basis \(\{\bar v,v_\Delta,u_3,\ldots,u_d\}\).

Since the teachers are unit vectors at angle \(\alpha\), \eqref{eq:two-directions} gives \(v_1=\cos(\alpha/2)\,\bar v+\sin(\alpha/2)\,v_\Delta\) and \(v_2=\cos(\alpha/2)\,\bar v-\sin(\alpha/2)\,v_\Delta\). Hence
\begin{equation}\label{eq:signed-transverse-angle-difference}
    \cos\theta_{i1}-\cos\theta_{i2} =\langle\bar w_i,v_1-v_2\rangle =\frac{2\sin(\alpha/2)}{r_i}\,z_{i,2}, \qquad i=1,2.
\end{equation}
Since \(\arccos\) is strictly decreasing, \(\theta_{i2}-\theta_{i1}\) has the sign of \(z_{i,2}\) and vanishes with it. Taking inner products of \eqref{eq:gf-w1} and \eqref{eq:gf-w2} with \(\bar v\), \(v_\Delta\) and \(u_j\), and noting that the terms in \(\bar w_i\) and \(w_i\) there combine to \(a_iw_i/(2\pi r_i)\), with \(a_i\) as in \cref{lem:critical_eq_from_flow}, we obtain, on \(\Omega\),
\begin{equation}\label{eq:structured-escape-component-ode}
\begin{aligned}
    \dot z_{1,1} &= \frac1{2\pi}\left[ (2\pi-\theta_{11}-\theta_{12})\cos\frac{\alpha}{2} +\frac{a_1}{r_1}z_{1,1}-(\pi-\gamma)z_{2,1} \right],\\
    \dot z_{2,1} &= \frac1{2\pi}\left[ (2\pi-\theta_{21}-\theta_{22})\cos\frac{\alpha}{2} +\frac{a_2}{r_2}z_{2,1}-(\pi-\gamma)z_{1,1} \right],\\
    \dot z_{1,2} &= \frac1{2\pi}\left[ (\theta_{12}-\theta_{11})\sin\frac{\alpha}{2} +\frac{a_1}{r_1}z_{1,2}-(\pi-\gamma)z_{2,2} \right],\\
    \dot z_{2,2} &= \frac1{2\pi}\left[ (\theta_{22}-\theta_{21})\sin\frac{\alpha}{2} +\frac{a_2}{r_2}z_{2,2}-(\pi-\gamma)z_{1,2} \right],\\
    \dot z_{1,j} &= \frac1{2\pi}\left[ \frac{a_1}{r_1}z_{1,j}-(\pi-\gamma)z_{2,j} \right], \qquad 3\le j\le d,\\
    \dot z_{2,j} &= \frac1{2\pi}\left[ \frac{a_2}{r_2}z_{2,j}-(\pi-\gamma)z_{1,j} \right], \qquad 3\le j\le d.
\end{aligned}
\end{equation}

By \eqref{eq:signed-transverse-angle-difference}, the first term in the equation for \(z_{i,2}\), which comes from the teachers, vanishes together with \(z_{i,2}\). At a zero of one planar splitting coordinate, the sign of its derivative is therefore determined by the planar splitting coordinate of the other student.

\begin{lemma}[Signs of the planar splitting coordinates]\label{lem:planar-split-sign-preservation}
Let \(w(0)\in\Omega\) with \(z_{1,2}(0)z_{2,2}(0)<0\). Then \(z_{1,2}(t)z_{2,2}(t)<0\) for \(0\le t<T_{\max}\).
\end{lemma}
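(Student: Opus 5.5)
The argument is a first-exit (barrier) argument based on the planar equations in \eqref{eq:structured-escape-component-ode}. Suppose the conclusion fails. By continuity of \(z_{1,2},z_{2,2}\) on \([0,T_{\max})\), there is a first time \(t_*\in(0,T_{\max})\) at which \(z_{1,2}(t_*)z_{2,2}(t_*)=0\). On \([0,t_*)\) the two coordinates keep their initial strict opposite signs. After exchanging the students and reflecting \(v_\Delta\mapsto-v_\Delta\) if necessary, we may assume \(z_{1,2}>0>z_{2,2}\) on \([0,t_*)\). The goal is to rule out \(t_*\).

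The cleanest route is to show that each coordinate satisfies a linear differential inequality with a nonnegative forcing term of the correct sign. By \eqref{eq:signed-transverse-angle-difference}, \(\theta_{12}-\theta_{11}\) has the sign of \(z_{1,2}\), so the teacher term \((\theta_{12}-\theta_{11})\sin(\alpha/2)\) in \(\dot z_{1,2}\) is \(\ge0\) on \([0,t_*]\). The coupling term \(-(\pi-\gamma)z_{2,2}\) is also \(\ge0\) there, since \(\pi-\gamma\ge0\) and \(z_{2,2}\le0\). Hence
\[
\dot z_{1,2}\ \ge\ \frac{a_1}{2\pi r_1}\,z_{1,2}\qquad\text{on }[0,t_*].
\]
The coefficient \(a_1/r_1\) is continuous and therefore bounded on the compact interval \([0,t_*]\subset[0,T_{\max})\), because \(r_1\) stays bounded away from zero there and the angles are bounded. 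Gr\"onwall's inequality then gives
\[
z_{1,2}(t_*)\ \ge\ z_{1,2}(0)\exp\Bigl(\int_0^{t_*}\frac{a_1}{2\pi r_1}\Bigr)\ >\ 0 .
\]
The symmetric computation applies to \(z_{2,2}\). The teacher term \((\theta_{22}-\theta_{21})\sin(\alpha/2)\) has the sign of \(z_{2,2}\), so it is \(\le0\). The coupling \(-(\pi-\gamma)z_{1,2}\) is also \(\le0\). Hence \(\dot z_{2,2}\le\frac{a_2}{2\pi r_2}z_{2,2}\), and by the same Gr\"onwall argument \(z_{2,2}(t_*)<0\).

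These two strict inequalities contradict the choice of \(t_*\), so no such time exists and the product stays negative on \([0,T_{\max})\).

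The only delicate point is the sign bookkeeping in the teacher terms. One must check that \(\theta_{i2}-\theta_{i1}\) indeed carries the sign of \(z_{i,2}\), via \eqref{eq:signed-transverse-angle-difference} and the strict monotonicity of \(\arccos\). One must also confirm that the Gr\"onwall comparison is legitimate on the closed interval up to \(t_*\). This holds because \(t_*<T_{\max}\) keeps both weights nonzero, so \(a_i/r_i\) is bounded there.
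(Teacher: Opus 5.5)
Your proof is correct, but the mechanism differs from the paper's. The paper argues pointwise at the first hitting time $t_1$, in three steps:
\begin{enumerate}
\item Using uniqueness of solutions in both time directions, it shows the trajectory never enters the invariant subspace $\{z_{1,2}=z_{2,2}=0\}$, so exactly one coordinate vanishes at $t_1$.
\item It excludes $\gamma(t_1)=\pi$.
\item It reads off $\dot z_{1,2}(t_1)=-(\pi-\gamma)z_{2,2}/(2\pi)>0$, which contradicts $z_{1,2}>0$ on $[0,t_1)$ together with $z_{1,2}(t_1)=0$.
\end{enumerate}

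You instead drop both nonnegative forcing terms on the whole closed interval $[0,t_*]$. You then integrate the linear inequality $\dot z_{1,2}\ge\frac{a_1}{2\pi r_1}z_{1,2}$ with the integrating factor $\exp\bigl(-\int_0^t a_1/(2\pi r_1)\bigr)$, which is legitimate since the coefficient is continuous on $[0,t_*]$.

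This yields strict signs for both coordinates at $t_*$ at once. You therefore need neither the backward-uniqueness/invariance step nor the separate exclusion of $\gamma=\pi$. You also get the explicit bound $|z_{i,2}(t)|\ge|z_{i,2}(0)|\exp\bigl(\int_0^t a_i/(2\pi r_i)\bigr)$ for free.

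The paper's version is purely local, and it isolates the restoring mechanism: the coupling $-(\pi-\gamma)z_{3-i,2}$ pushes a vanishing splitting coordinate back.

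A minor point: the reflection $v_\Delta\mapsto-v_\Delta$ in your normalization is unnecessary, since exchanging the students already gives $z_{1,2}>0>z_{2,2}$.
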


\begin{proof}[Proof of \cref{lem:planar-split-sign-preservation}]
Exchanging the two students if necessary, we may assume that \(z_{1,2}(0)>0>z_{2,2}(0)\). By \eqref{eq:signed-transverse-angle-difference}, the right-hand sides of the third and fourth lines of \eqref{eq:structured-escape-component-ode} vanish on the subspace where \(z_{1,2}=z_{2,2}=0\), so the vector field of the flow is tangent to this subspace. The field is locally Lipschitz on \(\Omega\) by \cref{lem:relu-kernel-regularity}, and uniqueness of solutions in both time directions shows that a trajectory that meets the subspace lies in it at all times. Since \(z_{1,2}(0)\ne0\), the trajectory never meets it.

Suppose now that \(z_{1,2}(t)z_{2,2}(t)=0\) for some \(t<T_{\max}\), and let \(t_1\) be the first such time. By the previous paragraph, only one of the two coordinates vanishes at \(t_1\). We treat the case \(z_{1,2}(t_1)=0>z_{2,2}(t_1)\); the other is symmetric. Then \(\gamma(t_1)<\pi\), since otherwise \(w_2(t_1)\) would be a negative multiple of \(w_1(t_1)\) and \(z_{2,2}(t_1)\) would vanish as well. As \(\theta_{11}=\theta_{12}\) at \(t_1\), the third line of \eqref{eq:structured-escape-component-ode} gives \(\dot z_{1,2}(t_1)=-(\pi-\gamma(t_1))z_{2,2}(t_1)/(2\pi)>0\). This is impossible: \(z_{1,2}\) is positive on \([0,t_1)\) and vanishes at \(t_1\), so \(\dot z_{1,2}(t_1)\le0\).
\end{proof}

Under the hypothesis of \cref{lem:planar-split-sign-preservation}, \(z_{1,2}\) and \(z_{2,2}\) keep opposite signs, and \(\theta_{i2}-\theta_{i1}\) has the sign of \(z_{i,2}\). The third and fourth lines of \eqref{eq:structured-escape-component-ode} then give, for \(0\le t<T_{\max}\),
\begin{equation}\label{eq:signed-transverse-planar-abs}
    \frac{\diff}{\diff t}|z_{i,2}| =\frac1{2\pi}\left[ |\theta_{i2}-\theta_{i1}|\sin\frac{\alpha}{2} +\frac{a_i}{r_i}|z_{i,2}| +(\pi-\gamma)|z_{3-i,2}| \right], \qquad i=1,2.
\end{equation}
The equations for the out-of-plane coordinates in \eqref{eq:structured-escape-component-ode} involve the same factors \(a_i/r_i\) and \(\pi-\gamma\) as \eqref{eq:signed-transverse-planar-abs}, but no term coming from the teachers.

\begin{lemma}[Comparison of out-of-plane and planar coordinates]\label{lem:signed-transverse-cone-invariance}
Let \(w(0)\) satisfy \cref{assumption:homogeneous_initialization}. Then \(|z_{i,j}(t)|\le C_\perp|z_{i,2}(t)|\) and \(|z_{1,j}(t)-z_{2,j}(t)|\le C_\perp|z_{1,2}(t)-z_{2,2}(t)|\) for \(0\le t<T_{\max}\), \(i\in\{1,2\}\) and \(3\le j\le d\).
\end{lemma}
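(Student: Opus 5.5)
We prove both inequalities with one scalar comparison argument applied to two linear systems with the same coefficients. Write \(\lambda_i(t)\coloneqq a_i/r_i\) and \(\beta(t)\coloneqq(\pi-\gamma)\ge0\). Both are continuous on \([0,T_{\max})\). By \cref{lem:planar-split-sign-preservation}, \(z_{1,2}\) and \(z_{2,2}\) never vanish, so \(|z_{i,2}|\) is \(C^1\) and satisfies \eqref{eq:signed-transverse-planar-abs}. For the out-of-plane coordinates, \eqref{eq:structured-escape-component-ode} shows that \(x_i\coloneqq|z_{i,j}|\) satisfies, in the Dini sense,
\[
    D^+x_i\le\frac1{2\pi}\bigl(\lambda_ix_i+\beta x_{3-i}\bigr).
\]
The planar quantities \(y_i\coloneqq C_\perp|z_{i,2}|\) satisfy the same linear system with an extra nonnegative forcing \(\frac{C_\perp}{2\pi}|\theta_{i2}-\theta_{i1}|\sin(\alpha/2)\ge0\). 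This is a cooperative (Metzler) linear system, since the off-diagonal coefficient \(\beta\) is nonnegative. The comparison principle for cooperative systems then gives \(x_i\le y_i\) for all time, because \(x_i(0)\le y_i(0)\) by \cref{assumption:homogeneous_initialization}.

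To make the comparison principle self-contained, we will set \(e_i\coloneqq x_i-y_i\), which satisfies \(D^+e_i\le\frac1{2\pi}(\lambda_ie_i+\beta e_{3-i})\). We then introduce \(m(t)\coloneqq\max\{e_1,e_2,0\}\) and show \(D^+m\le\frac1{2\pi}(|\lambda_1|+|\lambda_2|+\pi)m\) on compact subintervals, where \(\lambda_i\) is bounded. At an index attaining the max, \(\beta e_{3-i}\le\beta m\), and \(\lambda_ie_i\le|\lambda_i|m\). Gronwall's inequality then gives \(m\equiv0\), since \(m(0)=0\). A cleaner alternative is the integrating factor \(e_i\exp(-\int\lambda_i/2\pi)\), but the max-Gronwall version avoids dealing with the sign of \(\lambda_i\).

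For the difference inequality, we define \(\delta_j\coloneqq z_{1,j}-z_{2,j}\) for \(j\ge3\). Subtracting the out-of-plane lines of \eqref{eq:structured-escape-component-ode} gives
\[
    \dot\delta_j=\frac1{2\pi}\Bigl(\tfrac{a_1}{r_1}z_{1,j}-\tfrac{a_2}{r_2}z_{2,j}+\beta\delta_j\Bigr).
\]
This equation is not closed in \(\delta_j\) when \(\lambda_1\ne\lambda_2\), so a direct comparison fails. Instead we use that \(z_{1,2}\) and \(z_{2,2}\) have opposite signs, which gives \(|z_{1,2}-z_{2,2}|=|z_{1,2}|+|z_{2,2}|\). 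From the first part, \(|\delta_j|\le|z_{1,j}|+|z_{2,j}|\le C_\perp(|z_{1,2}|+|z_{2,2}|)=C_\perp|z_{1,2}-z_{2,2}|\). So the second claim follows immediately from the first together with \cref{lem:planar-split-sign-preservation}.

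The main technical point is rigor in the first step. The functions \(|z_{i,j}|\) are only Lipschitz, and the coefficients \(\lambda_i=a_i/r_i\) are unbounded near \(\partial\Omega\). We handle this by working on \([0,T]\) for each \(T<T_{\max}\), where the trajectory stays in a compact subset of \(\Omega\), so all coefficients are bounded. We also use the Dini-derivative inequality \(D^+|z|\le\operatorname{sgn}(z)\dot z\), with the value \(|\dot z|\) at \(z=0\). When \(z_{i,j}=0\), the bound becomes \(D^+x_i=\frac{1}{2\pi}\beta|z_{3-i,j}|\), which is still of the required form.
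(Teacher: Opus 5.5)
Your proof is correct and follows the paper's argument. The paper also compares the out-of-plane coordinates with \(C_\perp|z_{i,2}|\) using the cooperative linear structure (coefficients \(a_i/r_i\) and \(\pi-\gamma\ge0\)) with the teacher term as a nonnegative source, and it derives the difference bound from the opposite signs of \(z_{1,2},z_{2,2}\) exactly as you do. The only difference is technical: the paper avoids Dini derivatives by working with the four \(C^1\) functions \(C_\perp|z_{i,2}|\mp z_{i,j}\) and a first-violation-time argument that uses strict positivity of the teacher forcing (\(\theta_{i2}\ne\theta_{i1}\), \(C_\perp>0\)), whereas your max--Gr\"onwall version needs only its nonnegativity.
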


\begin{proof}[Proof of \cref{lem:signed-transverse-cone-invariance}]
By \cref{lem:planar-split-sign-preservation}, \eqref{eq:signed-transverse-planar-abs} holds on \([0,T_{\max})\). Fix \(j\) with \(3\le j\le d\). For \(i=1,2\), we set \(p_i^\pm\coloneqq C_\perp|z_{i,2}|\mp z_{i,j}\). The first claim says that all four functions \(p_i^\pm\) are nonnegative, and by \cref{assumption:homogeneous_initialization} this holds at \(t=0\). By \eqref{eq:signed-transverse-planar-abs} and the last two lines of \eqref{eq:structured-escape-component-ode},
\[
    \dot p_i^\pm =\frac1{2\pi}\left[ C_\perp|\theta_{i2}-\theta_{i1}|\sin\frac{\alpha}{2} +\frac{a_i}{r_i}p_i^\pm +(\pi-\gamma)p_{3-i}^\mp \right].
\]
By \cref{lem:planar-split-sign-preservation} and \eqref{eq:signed-transverse-angle-difference}, \(\theta_{i2}\ne\theta_{i1}\). Hence, if all four functions are nonnegative and one of them vanishes, its derivative is positive. Let \(t_2\) be the supremum of the times \(t<T_{\max}\) such that all four functions are nonnegative on \([0,t]\), and suppose that \(t_2<T_{\max}\). Then one of them is negative at times arbitrarily close to \(t_2\) from above. By continuity, it vanishes at \(t_2\) while the others are nonnegative there, so its derivative at \(t_2\) is positive and it is positive just after \(t_2\), which is a contradiction. This proves the first claim. Since \(z_{1,2}\) and \(z_{2,2}\) have opposite signs, \(|z_{1,2}|+|z_{2,2}|=|z_{1,2}-z_{2,2}|\), and the second claim follows from \(|z_{1,j}-z_{2,j}|\le|z_{1,j}|+|z_{2,j}|\).
\end{proof}

\begin{proof}[Proof of \cref{thm:homogeneous_excludes_out_of_plane}]
The first claim is the second estimate of \cref{lem:signed-transverse-cone-invariance}. If \(d=2\), the family \(\mathcal S_\perp\) is empty by \cref{thm:squared_loss_landscape}, so we may assume that \(d\ge3\). As in \cref{prop:3D}, we write \(w_i^\perp=\sum_{j=3}^dz_{i,j}u_j\) for the component of \(w_i\) orthogonal to the teacher plane. Since \(u_3,\ldots,u_d\) are orthonormal, \cref{lem:signed-transverse-cone-invariance} gives
\begin{equation}\label{eq:signed-transverse-cone}
    \|w_1^\perp(t)-w_2^\perp(t)\|_2 \le C_\perp\sqrt{d-2}\, |z_{1,2}(t)-z_{2,2}(t)|.
\end{equation}

Let \(\widehat w\in\mathcal S_\perp\). By \eqref{eq:out-of-plane-saddle-form}, \(\widehat w_1\) and \(\widehat w_2\) are orthogonal to \(v_\Delta\), and \(\|\widehat w_1^\perp-\widehat w_2^\perp\|_2 =2r_\alpha(\kappa_\alpha)\sin\kappa_\alpha>0\). By the Cauchy--Schwarz inequality,
\[
\begin{aligned}
    |z_{1,2}(t)-z_{2,2}(t)| &=\bigl|\bigl\langle(w_1(t)-\widehat w_1)-(w_2(t)-\widehat w_2), v_\Delta\bigr\rangle\bigr|\\
    &\le\|w_1(t)-\widehat w_1\|_2+\|w_2(t)-\widehat w_2\|_2 \le\sqrt2\,\|w(t)-\widehat w\|_2,
\end{aligned}
\]
and since orthogonal projection does not increase norms, the same bound holds with \(w_i(t)-\widehat w_i\) replaced by \(w_i^\perp(t)-\widehat w_i^\perp\). The triangle inequality and \eqref{eq:signed-transverse-cone} now give
\[
    2r_\alpha(\kappa_\alpha)\sin\kappa_\alpha \le\|w_1^\perp(t)-w_2^\perp(t)\|_2+\sqrt2\,\|w(t)-\widehat w\|_2 \le\sqrt2\bigl(1+C_\perp\sqrt{d-2}\bigr)\|w(t)-\widehat w\|_2.
\]
Hence the theorem holds with \(c=\sqrt2\,r_\alpha(\kappa_\alpha)\sin\kappa_\alpha/(1+C_\perp\sqrt{d-2})\).
\end{proof}

Every in-plane saddle in \eqref{eq:in-plane-saddle-form} has zero planar splitting coordinates, while near each of them \eqref{eq:signed-transverse-planar-abs} makes at least one of these coordinates grow.

\begin{lemma}[Avoidance of in-plane saddles]\label{lem:signed-in-plane-saddle-exclusion}
Let \(w(0)\in\Omega\) with \(z_{1,2}(0)z_{2,2}(0)<0\) and \(T_{\max}=\infty\). Then \(w(t)\) does not converge to an in-plane saddle in \eqref{eq:in-plane-saddle-form}.
\end{lemma}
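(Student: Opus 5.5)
Suppose, for contradiction, that \(w(t)\to\widehat w\) with \(\widehat w\) one of the in-plane saddles in \eqref{eq:in-plane-saddle-form}. In every such configuration both student weights are nonzero multiples of \(\bar v\), so \(\widehat z_{1,2}=\widehat z_{2,2}=0\). By \cref{lem:planar-split-sign-preservation} the two planar splitting coordinates keep opposite signs on \([0,\infty)\), and hence \eqref{eq:signed-transverse-planar-abs} holds for all \(t\ge0\). The plan is to show that the quantity \(S(t)\coloneqq|z_{1,2}(t)|+|z_{2,2}(t)|\), which is positive for every \(t\), is eventually nondecreasing. This contradicts \(S(t)\to0\).

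Summing \eqref{eq:signed-transverse-planar-abs} over \(i\) gives
\[
    2\pi\dot S=\sin\tfrac{\alpha}{2}\sum_i|\theta_{i2}-\theta_{i1}|+\sum_i\frac{a_i}{r_i}|z_{i,2}|+(\pi-\gamma)S .
\]
The main step is to bound the teacher term from below near \(\widehat w\). By \eqref{eq:signed-transverse-angle-difference}, \(|\cos\theta_{i1}-\cos\theta_{i2}|=2\sin(\alpha/2)|z_{i,2}|/r_i\). Since \(|\cos a-\cos b|\le|a-b|\), this yields \(|\theta_{i2}-\theta_{i1}|\ge 2\sin(\alpha/2)|z_{i,2}|/r_i\). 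Each of the two terms involving \(|z_{i,2}|\) therefore carries the coefficient
\[
    c_i(t)\coloneqq\frac{2\sin^2(\alpha/2)+a_i}{r_i}+(\pi-\gamma),
\]
so that \(2\pi\dot S\ge\min_i c_i(t)\,S\). It then suffices to show that each \(c_i\) converges to a strictly positive limit at each saddle family.

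The limits are computed from the definition of \(a_i\) in \cref{lem:critical_eq_from_flow}, and the three families are treated separately.
\begin{itemize}
\item \emph{Positive bisector family} \(s_1\bar v,s_2\bar v\). Here \(\gamma\to0\) and \(\theta_{ij}\to\alpha/2\), so \(a_i\to 2\sin(\alpha/2)-\pi s_i\). Hence
\[
    c_i\to\frac{2\sin^2(\alpha/2)+2\sin(\alpha/2)}{s_i}>0 .
\]
\item \emph{Negative family}. Here \(\theta_{ij}\to\pi-\alpha/2\), which gives a positive limit of the same form.
\item \emph{Mixed saddle} \((r_+\bar v,-r_-\bar v)\). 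Here \(\gamma\to\pi\), \(\sin\gamma\to0\), and
\[
    c_i\to\frac{2\sin^2(\alpha/2)+2\sin(\alpha/2)-\pi r_i}{r_i}+0 .
\]
This requires the check \(\pi r_\pm<2\sin(\alpha/2)(1+\sin(\alpha/2))\), which appears doubtful for \(r_+\).
\end{itemize}

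\emph{Fallback for the mixed saddle (the main obstacle).} If the check fails, I would instead use only the weaker student's coordinate. With \(w_2\to -r_-\bar v\), the term \(|\theta_{22}-\theta_{21}|\) has derivative \(\approx 2\sin(\alpha/2)/(r_-\sin(\alpha/2))\) in \(z_{2,2}\), because \(\theta\to\pi-\alpha/2\) and \(\sin\theta\to\sin(\alpha/2)\) there. This is larger than the crude bound used above. The coefficient for \(|z_{2,2}|\) then becomes \(2\sin(\alpha/2)/r_--\pi+o(1)\), which is positive iff \(\pi r_-<2\sin(\alpha/2)\). That inequality holds, since \(\pi r_-=2\sin(\alpha/2)-\alpha\cos(\alpha/2)\). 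The analogous refined slope, namely \(|\theta_{i2}-\theta_{i1}|\approx 2|z_{i,2}|/r_i\) near \(\bar v\), also handles the other families. This refined local expansion of the angle difference, \(|\theta_{i2}-\theta_{i1}|\ge(2\sin(\alpha/2)/\sin\theta_{i1}-o(1))|z_{i,2}|/r_i\), is the key estimate.

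For the mixed case I would apply it to a weighted sum \(|z_{1,2}|+\lambda|z_{2,2}|\), choosing \(\lambda\) large so that the positive cross term \((\pi-\gamma)\) is not needed. Alternatively, one can argue with whichever coordinate is eventually dominant.
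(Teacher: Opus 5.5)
Your argument is correct and is essentially the paper's: for the two collinear families the paper also sums \eqref{eq:signed-transverse-planar-abs}, using that $a_i/r_i+\pi-\gamma\to2\sin(\alpha/2)/s_i>0$ (so your lower bound on the teacher term is harmless but unnecessary), and for $(r_+\bar v,-r_-\bar v)$ it uses exactly your fallback, keeping only the student that tends to $-r_-\bar v$ and dropping its nonnegative teacher and cross terms, because $a_i/r_i\to2\sin(\alpha/2)/r_--\pi>0$ --- which is precisely the value you computed, so no refined angle estimate is needed. Your doubt about $r_+$ is well founded, and you should discard the closing weighted-sum idea: the single-coordinate bound already finishes the proof, whereas in the derivative of $|z_{1,2}|+\lambda|z_{2,2}|$ the effective coefficient of $|z_{1,2}|$ tends to $(4\sin(\alpha/2)-\pi r_+)/(2\pi r_+)<0$ for every fixed $\lambda$, since $\lambda(\pi-\gamma)\to0$.
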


\begin{proof}[Proof of \cref{lem:signed-in-plane-saddle-exclusion}]
Suppose that \(w(t)\) converges to such a saddle \(w_\infty\). Both components of \(w_\infty\) are multiples of \(\bar v\), so \(z_{1,2}(t)\) and \(z_{2,2}(t)\) tend to zero. By \cref{lem:planar-split-sign-preservation}, \eqref{eq:signed-transverse-planar-abs} holds for all \(t\ge0\).

First suppose that \(w_\infty\) has one of the first two forms in \eqref{eq:in-plane-saddle-form}, that is, \(w_\infty=(s_1\bar v,s_2\bar v)\) or \(w_\infty=(-s_1\bar v,-s_2\bar v)\) with \(s_1,s_2>0\). At \(w_\infty\), we have \(\gamma=0\) and \(\sin\theta_{ij}=\sin(\alpha/2)\) for all \(i,j\), so \(a_i/r_i+\pi-\gamma=2\sin(\alpha/2)/s_i\). By continuity, \(a_i/r_i+\pi-\gamma\ge\sin(\alpha/2)/s_i\) for all sufficiently large \(t\). Adding the two equations in \eqref{eq:signed-transverse-planar-abs} and dropping the nonnegative terms that contain \(|\theta_{i2}-\theta_{i1}|\), we obtain, for such \(t\),
\[
    \frac{\diff}{\diff t}\bigl(|z_{1,2}|+|z_{2,2}|\bigr) \ge\frac1{2\pi}\sum_{i=1}^2 \left(\frac{a_i}{r_i}+\pi-\gamma\right)|z_{i,2}| \ge\frac{\sin(\alpha/2)}{2\pi\max\{s_1,s_2\}} \bigl(|z_{1,2}|+|z_{2,2}|\bigr).
\]
Thus \(|z_{1,2}|+|z_{2,2}|\) grows exponentially for large \(t\), contradicting its convergence to zero.

Otherwise, \(w_\infty\) has the third form in \eqref{eq:in-plane-saddle-form}. Let \(i\) be the index of the student whose limit is \(-r_-\bar v\). At \(w_\infty\), we have \(\gamma=\pi\) and \(\sin\theta_{i1}=\sin\theta_{i2}=\sin(\alpha/2)\), so \(a_i(w_\infty)=2\sin(\alpha/2)-\pi r_-\), which is positive because \(r_-<(2/\pi)\sin(\alpha/2)\) by \eqref{eq:def_r}. By continuity, \(a_i/r_i\ge a_i(w_\infty)/(2r_-)\) for all sufficiently large \(t\), and \eqref{eq:signed-transverse-planar-abs} gives \(\frac{\diff}{\diff t}|z_{i,2}|\ge a_i(w_\infty)|z_{i,2}|/(4\pi r_-)\). Again \(|z_{i,2}|\) grows exponentially, contradicting its convergence to zero.
\end{proof}

The conclusion of the next corollary is contained in \cref{thm:generic_global_convergence}. What is new is the proof, in which \cref{thm:homogeneous_excludes_out_of_plane} and \cref{lem:signed-in-plane-saddle-exclusion} take the place of the stable-manifold argument of \cref{prop:nonzero_strict_saddle_basin_null}.

\begin{corollary}[Global convergence under \cref{assumption:homogeneous_initialization}]\label{cor:signed-transverse-global-convergence}
For Lebesgue-almost every initialization satisfying \cref{assumption:homogeneous_initialization}, the solution exists for all \(t\ge0\), both student weights remain nonzero, and \(w(t)\) converges to \((v_1,v_2)\) or \((v_2,v_1)\).
\end{corollary}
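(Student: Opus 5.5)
The argument reuses the first half of the proof of \cref{thm:generic_global_convergence}. The only change is that the saddle-avoidance step via \cref{prop:nonzero_strict_saddle_basin_null} is replaced by the deterministic exclusions of \cref{thm:homogeneous_excludes_out_of_plane} and \cref{lem:signed-in-plane-saddle-exclusion}.

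First, the set of initializations satisfying \cref{assumption:homogeneous_initialization} is contained in \(\Omega\), because \(z_{1,2}(0)z_{2,2}(0)<0\) forces both weights to be nonzero. Discard the null sets from \cref{prop:finite_zero_weight_collision_basin_null} and \cref{prop:asymptotic_zero_face_basin_null}. These are null in \((\RR^d)^2\), so their intersection with the admissible set is also null. For the remaining initializations we get \(T_{\max}=\infty\), and by \cref{prop:bounded_global_interior_kl_limit} the trajectory converges to some \(w_\infty\in\Omega\) with \(\nabla L(w_\infty)=0\).

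Second, we classify \(w_\infty\) using \cref{thm:squared_loss_landscape}. It is either a global minimum, one of the in-plane saddles in \eqref{eq:in-plane-saddle-form}, or a point of the out-of-plane family \(\mathcal S_\perp\).

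\begin{itemize}
\item The in-plane saddles are excluded by \cref{lem:signed-in-plane-saddle-exclusion}, whose hypotheses (opposite signs of the planar splitting coordinates and \(T_{\max}=\infty\)) hold here.
\item The out-of-plane family is excluded by \cref{thm:homogeneous_excludes_out_of_plane}. It gives a constant \(c>0\) with \(\|w(t)-\widehat w\|_2\ge c\) for every \(\widehat w\in\mathcal S_\perp\) and every \(t\ge0\). Letting \(t\to\infty\) gives \(\operatorname{dist}(w_\infty,\mathcal S_\perp)\ge c\), so \(w_\infty\notin\mathcal S_\perp\).
\end{itemize}

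The only remaining possibility is \(w_\infty\in\{(v_1,v_2),(v_2,v_1)\}\), and both weights stay nonzero for all finite times because \(T_{\max}=\infty\).

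The main point needing care is the first step. It is the only place where ``almost every'' enters, and we must check that the boundary-avoidance null sets are null as subsets of \((\RR^d)^2\), not merely relative to some lower-dimensional set. This holds because the admissible set is an open cone-like region of positive measure (or at least a measurable set), and subsets of null sets are null. No stable-manifold argument is needed, since the saddle exclusions above are deterministic for every admissible initialization.
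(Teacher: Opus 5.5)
Your proposal is correct and follows the paper's proof essentially step for step. Like the paper, you discard the null sets of the finite-time and asymptotic boundary-avoidance propositions, obtain convergence to a stationary point in \(\Omega\) via the \L ojasiewicz argument, and then replace the stable-manifold step with the deterministic exclusions: the out-of-plane family by letting \(t\to\infty\) in the separation theorem, and the in-plane saddles by the signed-splitting lemma.
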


\begin{proof}[Proof of \cref{cor:signed-transverse-global-convergence}]
As in the proof of \cref{thm:generic_global_convergence}, outside the null set of \cref{prop:finite_zero_weight_collision_basin_null} we have \(T_{\max}=\infty\), and by \cref{prop:bounded_global_interior_kl_limit} the solution converges to some \(w_\infty\). Outside the null set of \cref{prop:asymptotic_zero_face_basin_null}, \(w_\infty\in\Omega\), and then \(\nabla L(w_\infty)=0\). Letting \(t\to\infty\) in \cref{thm:homogeneous_excludes_out_of_plane} shows that \(w_\infty\notin\mathcal S_\perp\), and \cref{lem:signed-in-plane-saddle-exclusion} shows that \(w_\infty\) is not an in-plane saddle. By \cref{thm:squared_loss_landscape}, \(w_\infty\) is \((v_1,v_2)\) or \((v_2,v_1)\).
\end{proof}

\section{Approach to the bisector saddle under structured initialization}

We prove \cref{thm:componentwise_thin_escape} in this appendix and the next. Here we follow the trajectory from the initialization until it comes close to the bisector saddle; \cref{subsubsec:structured-escape-local-passage} studies the escape from the saddle and completes the proof.

\subsection{Coordinates and estimates near the bisector}

As in \cref{thm:componentwise_thin_escape}, we consider \(\alpha\in(0,\pi/2]\). We use the set \(\Omega\), the norms \(r_i=\|w_i\|_2\), the maximal time \(T_{\max}\) and the coordinates \(z_{i,j}\) as in \cref{sec:signed-transverse-exclusion}, and, as in \cref{prop:3D}, we write \(w_i^\perp=\sum_{j=3}^dz_{i,j}u_j\) for the component of \(w_i\) orthogonal to the teacher plane. We describe the pair \((w_1,w_2)\) by
\begin{equation}\label{eq:structured-escape-coordinates}
\begin{aligned}
    R&\coloneqq\frac{z_{1,1}+z_{2,1}}2, &\delta&\coloneqq\frac{z_{1,1}-z_{2,1}}2, &a&\coloneqq z_{1,2}-z_{2,2},\\
    Y&\coloneqq\bigl(\|w_1^\perp\|_2^2+\|w_2^\perp\|_2^2\bigr)^{1/2}, &N_\perp&\coloneqq\bigl(z_{1,2}^2+z_{2,2}^2+Y^2\bigr)^{1/2}.
\end{aligned}
\end{equation}
Here \(R\) and \(\delta\) are the mean and half the difference of the two bisector coordinates, \(|a|=|\langle w_1-w_2,v_\Delta\rangle|\) is the planar splitting amplitude, and \(Y\) is the norm of the out-of-plane part of \((w_1,w_2)\). Since \(N_\perp^2=\sum_{i=1}^2\sum_{j=2}^dz_{i,j}^2\), the time \(T_\varepsilon\) defined before \cref{thm:componentwise_thin_escape} is the first time at which \(N_\perp\) reaches \(\varepsilon\).

If \(z_{1,2}(0)>0>z_{2,2}(0)\), then \cref{lem:planar-split-sign-preservation} gives
\begin{equation}\label{eq:structured-escape-planar-signs-global}
    z_{1,2}(t)>0>z_{2,2}(t), \qquad a(t)=|z_{1,2}(t)|+|z_{2,2}(t)|>0, \qquad 0\le t<T_{\max}.
\end{equation}
In this case \(a(0)=a_{\mathrm{init}}\) by \eqref{eq:initial-planar-splitting-amplitude}, and, since \(a=|z_{1,2}|+|z_{2,2}|\), we have \(|z_{i,2}|\le a\), \(\|w_i^\perp\|_2\le Y\) and \(N_\perp\le a+Y\).

We recall the rate \(\mu_\alpha=2\sin(\alpha/2)/(\pi r_s)\) defined before \cref{thm:componentwise_thin_escape}. By \eqref{eq:def_r} and \(r_s=r_+/2\), we have \(\pi r_s=(\pi-\alpha/2)\cos(\alpha/2)+\sin(\alpha/2)\). Hence \(\mu_\alpha<1\) is equivalent to \(\tan(\alpha/2)<\pi-\alpha/2\), which holds because \(\tan(\alpha/2)\le1<3\pi/4\le\pi-\alpha/2\) for \(\alpha\le\pi/2\). Thus
\begin{equation}\label{eq:structured-escape-mu-range}
    0<\mu_\alpha<1.
\end{equation}

Three identities follow from \eqref{eq:structured-escape-component-ode}. First, by the definition of \(a_i\) in \cref{lem:critical_eq_from_flow},
\[
    \frac{a_i}{r_i}\,z_{i,1} =(\sin\theta_{i1}+\sin\theta_{i2})\frac{z_{i,1}}{r_i} -\pi z_{i,1}-\sin\gamma\,\frac{r_{3-i}}{r_i}\,z_{i,1}.
\]
We substitute this into the first two lines of \eqref{eq:structured-escape-component-ode} and use \((2\pi-\alpha)\cos(\alpha/2)+2\sin(\alpha/2)=2\pi r_s\), which is \eqref{eq:def_r}. Since \(z_{1,1}+z_{2,1}=2R\), we obtain, for \(i=1,2\),
\begin{equation}\label{eq:structured-escape-radial-identity}
\begin{aligned}
    2\pi\bigl(\dot z_{i,1}-(r_s-R)\bigr) &=(\alpha-\theta_{i1}-\theta_{i2})\cos\frac\alpha2 +\Bigl((\sin\theta_{i1}+\sin\theta_{i2})\frac{z_{i,1}}{r_i} -2\sin\frac\alpha2\Bigr)\\
    &\quad+\gamma z_{3-i,1}-\sin\gamma\,\frac{r_{3-i}}{r_i}\,z_{i,1}.
\end{aligned}
\end{equation}
Second, at each time at which \(z_{1,2}>0>z_{2,2}\), the two equations \eqref{eq:signed-transverse-planar-abs} hold, and their sum gives
\begin{equation}\label{eq:structured-escape-a-identity}
    2\pi\dot a =\sum_{i=1}^2\Bigl[ |\theta_{i2}-\theta_{i1}|\sin\frac\alpha2 +\Bigl(\frac{a_i}{r_i}+\pi-\gamma\Bigr)|z_{i,2}| \Bigr].
\end{equation}
Third, the last two lines of \eqref{eq:structured-escape-component-ode} read \(2\pi\dot w_1^\perp=(a_1/r_1)w_1^\perp-(\pi-\gamma)w_2^\perp\) and \(2\pi\dot w_2^\perp=(a_2/r_2)w_2^\perp-(\pi-\gamma)w_1^\perp\). They give
\begin{equation}\label{eq:structured-escape-Y-identity}
\begin{aligned}
    \pi\frac{\diff}{\diff t}Y^2 &=\frac{a_1}{r_1}\|w_1^\perp\|_2^2+\frac{a_2}{r_2}\|w_2^\perp\|_2^2 -2(\pi-\gamma)\langle w_1^\perp,w_2^\perp\rangle\\
    &\le\Bigl(\max_{i=1,2}\frac{a_i}{r_i}+\pi-\gamma\Bigr)Y^2;
\end{aligned}
\end{equation}
the inequality in the second line follows from \(0\le\pi-\gamma\le\pi\) and \(-2\langle w_1^\perp,w_2^\perp\rangle\le Y^2\). The equations for \(w_1^\perp\) and \(w_2^\perp\) are linear with scalar coefficients, so \(w_1^\perp(t)\) and \(w_2^\perp(t)\) remain in \(\operatorname{span}\{w_1^\perp(0),w_2^\perp(0)\}\). In particular, \(Y\equiv0\) if \(Y(0)=0\). The estimates below involve the out-of-plane components only through \(Y\) and the norms \(\|w_i^\perp\|_2\), which is why their constants do not depend on \(d\).

The bisector makes the angle \(\alpha/2\) with both teachers, so a student with \(z_{i,1}>0\) that is close to the bisector axis \(\RR\bar v\) has both teacher angles close to \(\alpha/2\). We write \(\rho_i\coloneqq(z_{i,2}^2+\|w_i^\perp\|_2^2)^{1/2}\) for the distance of \(w_i\) from \(\RR\bar v\). The next lemma quantifies this in terms of the ratio \(\rho_i/z_{i,1}\).

\begin{lemma}[Angles near the bisector]\label{lem:structured-escape-angles}
Fix \(\alpha\in(0,\pi/2]\) and set \(C_{\mathrm{angle}}\coloneqq6/\sin^5(3\alpha/8)\). Let \(d\ge2\) and \(i\in\{1,2\}\), and suppose that \(z_{i,1}>0\) and \(\rho_i\le(\alpha/8)\,z_{i,1}\), with \(\rho_i\) as above. Then \(\angle(\bar w_i,\bar v)\le\rho_i/z_{i,1}\), \(\theta_{i1},\theta_{i2}\in[3\alpha/8,5\alpha/8]\), \(0\le r_i-z_{i,1}\le\rho_i^2/(2z_{i,1})\), and
\begin{align}
    \Bigl|\theta_{i2}-\theta_{i1}-\frac{2z_{i,2}}{z_{i,1}}\Bigr| &\le C_{\mathrm{angle}}\, \frac{|z_{i,2}|\,\rho_i^2}{z_{i,1}^3},
    \label{eq:structured-escape-angle-difference}\\
    |\theta_{i1}+\theta_{i2}-\alpha| +\Bigl|(\sin\theta_{i1}+\sin\theta_{i2})\frac{z_{i,1}}{r_i} -2\sin\frac\alpha2\Bigr| &\le C_{\mathrm{angle}}\, \frac{\rho_i^2}{z_{i,1}^2}.
    \label{eq:structured-escape-angle-sum}
\end{align}
\end{lemma}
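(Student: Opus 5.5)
We work in the orthonormal frame \(\bar v,v_\Delta,u_3,\ldots,u_d\) and write \(\bar w_i=(z_{i,1}\bar v+z_{i,2}v_\Delta+w_i^\perp)/r_i\) with \(r_i^2=z_{i,1}^2+\rho_i^2\). Set \(\epsilon_i\coloneqq\rho_i/z_{i,1}\le\alpha/8\). The plan is to work directly with cosines: \(\cos\theta_{i1}=(z_{i,1}\cos(\alpha/2)+z_{i,2}\sin(\alpha/2))/r_i\) and \(\cos\theta_{i2}=(z_{i,1}\cos(\alpha/2)-z_{i,2}\sin(\alpha/2))/r_i\). We then Taylor-expand \(\arccos\) around \(\cos(\alpha/2)\) in the small quantities \(z_{i,2}/z_{i,1}\) and \(\rho_i^2/z_{i,1}^2\). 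Everything is a one-student, essentially three-dimensional computation, since \(w_i^\perp\) enters only through \(\rho_i\). This is why the constants are independent of \(d\).

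\textbf{The elementary claims.} We take them in order.
\begin{itemize}
\item Since \(\tan\angle(\bar w_i,\bar v)=\rho_i/z_{i,1}\) and \(\arctan x\le x\), we get \(\angle(\bar w_i,\bar v)\le\epsilon_i\).
\item By the triangle inequality for angles, \(|\theta_{ij}-\alpha/2|\le\angle(\bar w_i,\bar v)\le\alpha/8\). This places both teacher angles in \([3\alpha/8,5\alpha/8]\), so \(\sin\theta_{ij}\ge\sin(3\alpha/8)\).
\item From \(r_i=\sqrt{z_{i,1}^2+\rho_i^2}\) we get \(0\le r_i-z_{i,1}=\rho_i^2/(r_i+z_{i,1})\le\rho_i^2/(2z_{i,1})\).
\end{itemize}

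\textbf{The difference estimate \eqref{eq:structured-escape-angle-difference}.} Define the exact function \(\Theta(c_1,c_2)=\arccos c_2-\arccos c_1\). Expand \(1/r_i=(1/z_{i,1})(1-\tfrac12\epsilon_i^2+O(\epsilon_i^4))\). The two cosines are then \(\cos(\alpha/2)\pm s\sin(\alpha/2)+O(\epsilon_i^2)\), where \(s=z_{i,2}/z_{i,1}\). The map \(s\mapsto\theta_{i2}-\theta_{i1}\) is odd, so the even (radial) correction does not contribute at leading order. I would nevertheless do this cleanly with the mean value theorem rather than series.
\begin{itemize}
\item Write \(\theta_{i2}-\theta_{i1}=\int_{c_2}^{c_1}(1-u^2)^{-1/2}\,\diff u\).
\item We have \(c_1-c_2=2z_{i,2}\sin(\alpha/2)/r_i\).
\item On the interval of integration, \((1-u^2)^{-1/2}=1/\sin\vartheta\) with \(\vartheta\in[3\alpha/8,5\alpha/8]\).
\item \(|\vartheta-\alpha/2|\) is bounded by the angular deviation, which is at most of order \(\epsilon_i\).
\end{itemize}
Comparing with the linear term then leaves two pieces.
\begin{itemize}
\item The factor \(z_{i,1}/r_i-1=O(\epsilon_i^2)\) contributes \(O(|z_{i,2}|\rho_i^2/z_{i,1}^3)\) directly.
\item The variation of \(1/\sin\vartheta\) across the interval is the delicate piece. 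A first-order variation of size \(\epsilon_i\) would only give \(|z_{i,2}|\rho_i/z_{i,1}^2\), which is too weak.
\end{itemize}

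To recover the square, I would use the symmetry: the interval \([c_2,c_1]\) has midpoint \(m=\cos(\alpha/2)z_{i,1}/r_i\), and \(m-\cos(\alpha/2)=O(\epsilon_i^2)\). Take a second-order Taylor expansion of \(g(u)=(1-u^2)^{-1/2}\) at \(m\). The odd first-order term integrates to zero over the symmetric interval. The integral is therefore \((c_1-c_2)g(m)+O(|c_1-c_2|^3\sup|g''|)\). Two observations finish it.
\begin{itemize}
\item \(|c_1-c_2|^3\lesssim|z_{i,2}|^3/z_{i,1}^3\le|z_{i,2}|\rho_i^2/z_{i,1}^3\), since \(|z_{i,2}|\le\rho_i\).
\item \(g(m)-1/\sin(\alpha/2)=O(\epsilon_i^2)\).
\end{itemize}
Finally, \((c_1-c_2)g(\cos(\alpha/2))=2z_{i,2}/r_i\), which differs from \(2z_{i,2}/z_{i,1}\) by \(O(|z_{i,2}|\epsilon_i^2/z_{i,1})\).

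\textbf{The sum estimate \eqref{eq:structured-escape-angle-sum}.} The same midpoint expansion of \(\arccos\) applies: \(\arccos c_1+\arccos c_2=2\arccos m+O(|c_1-c_2|^2\sup|\arccos''|)\), and \(\arccos m-\alpha/2=O(\epsilon_i^2)\). This gives \(|\theta_{i1}+\theta_{i2}-\alpha|\lesssim\epsilon_i^2\). For the sine term, expand \(\sin\theta_{i1}+\sin\theta_{i2}\) around \(\alpha/2\). The first-order term equals \(\cos(\alpha/2)(\theta_{i1}+\theta_{i2}-\alpha)\), which is already \(O(\epsilon_i^2)\). The second-order term is \(O((\theta_{ij}-\alpha/2)^2)=O(\epsilon_i^2)\). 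Multiplying by \(z_{i,1}/r_i=1+O(\epsilon_i^2)\) finishes the estimate.

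\textbf{Main obstacle.} The main obstacle is bookkeeping the explicit constant \(C_{\mathrm{angle}}=6/\sin^5(3\alpha/8)\). It arises because every derivative of \(\arccos\) and of \((1-u^2)^{-1/2}\) on the relevant cosine range is bounded by powers of \(1/\sin(3\alpha/8)\), up to a fifth power at third order. I would track these sup-norm bounds crudely, using \(\alpha\le\pi/2\) and \(\epsilon_i\le\alpha/8\le1\) to absorb numerical factors, and accept that the stated constant just needs to dominate the sum of the pieces.
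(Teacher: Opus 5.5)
Your proposal is correct and follows essentially the same route as the paper: the same three elementary claims, then a midpoint-rule expansion of \(\int(1-u^2)^{-1/2}\,du\) about \(m_i=\cos(\alpha/2)\,z_{i,1}/r_i\) for the difference estimate, and a second-order Taylor expansion of \(\arccos\) about \(m_i\) for the sum estimate, with all derivative bounds taken on \([\cos(5\alpha/8),\cos(3\alpha/8)]\). The differences are cosmetic. The paper evaluates the leading term exactly via \(r_i^2(1-m_i^2)=z_{i,1}^2\sin^2(\alpha/2)+\rho_i^2\) and treats the sine sum by a sum-to-product identity. Your crude constant tracking does indeed fit under \(6/\sin^5(3\alpha/8)\).
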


\begin{proof}[Proof of \cref{lem:structured-escape-angles}]
Since \(z_{i,1}>0\), the angle \(\angle(\bar w_i,\bar v)\) is acute. Its tangent is \(\rho_i/z_{i,1}\), which is at most \(\alpha/8\). As \(\arctan y\le y\) for \(y\ge0\), this gives the first claim. Since \(\angle(\bar v,v_1)=\angle(\bar v,v_2)=\alpha/2\), the triangle inequality for angles gives \(|\theta_{ij}-\alpha/2|\le\angle(\bar w_i,\bar v)\le\alpha/8\), which is the second claim. Since \(r_i\ge z_{i,1}\), the identity \(r_i-z_{i,1}=\rho_i^2/(r_i+z_{i,1})\) gives the third claim and
\[
    0\le1-\frac{z_{i,1}}{r_i} \le\frac{\rho_i^2}{2z_{i,1}^2}.
\]

As recalled at the start of \cref{sec:signed-transverse-exclusion}, \(v_1=\cos(\alpha/2)\,\bar v+\sin(\alpha/2)\,v_\Delta\) and \(v_2=\cos(\alpha/2)\,\bar v-\sin(\alpha/2)\,v_\Delta\). Hence \(\cos\theta_{i1}=m_i+\Delta_i\) and \(\cos\theta_{i2}=m_i-\Delta_i\), where
\[
    m_i\coloneqq\cos\frac\alpha2\,\frac{z_{i,1}}{r_i}, \qquad \Delta_i\coloneqq\sin\frac\alpha2\,\frac{z_{i,2}}{r_i}.
\]
By the second claim, the numbers \(m_i\pm\Delta_i\), their midpoint \(m_i\), and \(\cos(\alpha/2)\) all lie in the interval \([\cos(5\alpha/8),\cos(3\alpha/8)]\subset(0,1)\). For \(y\) in this interval, \(1-y^2\ge\sin^2(3\alpha/8)\), and therefore
\[
    |\arccos'(y)|=\frac1{(1-y^2)^{1/2}}\le\frac1{\sin(3\alpha/8)}, \qquad |\arccos''(y)|=\frac{y}{(1-y^2)^{3/2}}\le\frac1{\sin^3(3\alpha/8)},
\]
and \(|\arccos'''(y)|=(1+2y^2)(1-y^2)^{-5/2}\le3/\sin^5(3\alpha/8)\). We also note that \(|\Delta_i|\le|z_{i,2}|/z_{i,1}\), since \(r_i\ge z_{i,1}\).

For the difference, \(\theta_{i2}-\theta_{i1} =\arccos(m_i-\Delta_i)-\arccos(m_i+\Delta_i) =\int_{m_i-\Delta_i}^{m_i+\Delta_i}(-\arccos'(y))\diff y\). The error of the midpoint rule on an interval of length \(2|\Delta_i|\) is at most \((2|\Delta_i|)^3/24\) times the maximum of the second derivative of the integrand, so
\[
    \bigl|\theta_{i2}-\theta_{i1}+2\Delta_i\arccos'(m_i)\bigr| \le\frac{|\Delta_i|^3}{\sin^5(3\alpha/8)} \le\frac{|z_{i,2}|^3}{z_{i,1}^3\sin^5(3\alpha/8)}.
\]
Since \(r_i^2(1-m_i^2) =z_{i,1}^2\sin^2(\alpha/2)+\rho_i^2\), we have
\[
    -2\Delta_i\arccos'(m_i) =\frac{2z_{i,2}}{z_{i,1}} \Bigl(1+\frac{\rho_i^2} {z_{i,1}^2\sin^2(\alpha/2)}\Bigr)^{-1/2}.
\]
As \(0\le1-(1+y)^{-1/2}\le y/2\) for \(y\ge0\) and \(\sin(\alpha/2)\ge\sin(3\alpha/8)\), it follows that
\[
    \Bigl|\theta_{i2}-\theta_{i1}-\frac{2z_{i,2}}{z_{i,1}}\Bigr| \le\Bigl(\frac1{\sin^2(3\alpha/8)}+\frac1{\sin^5(3\alpha/8)}\Bigr) \frac{|z_{i,2}|\,\rho_i^2}{z_{i,1}^3},
\]
which implies \eqref{eq:structured-escape-angle-difference}.

For the sum, Taylor's formula with the Lagrange remainder gives
\[
    \bigl|\arccos(m_i+\Delta_i)+\arccos(m_i-\Delta_i)-2\arccos m_i\bigr| \le\frac{\Delta_i^2}{\sin^3(3\alpha/8)} \le\frac{z_{i,2}^2}{z_{i,1}^2\sin^3(3\alpha/8)}.
\]
Moreover, \(0\le\cos(\alpha/2)-m_i\le1-z_{i,1}/r_i\), so the mean value theorem gives
\[
    |2\arccos m_i-\alpha| =2\Bigl|\arccos m_i-\arccos\cos\frac\alpha2\Bigr| \le\frac{2(1-z_{i,1}/r_i)}{\sin(3\alpha/8)} \le\frac{\rho_i^2}{z_{i,1}^2\sin(3\alpha/8)}.
\]
Together, the last two bounds give \(|\theta_{i1}+\theta_{i2}-\alpha| \le2\rho_i^2/(z_{i,1}^2\sin^3(3\alpha/8))\).

Finally, \(\sin\theta_{i1}+\sin\theta_{i2} =2\sin\bigl((\theta_{i1}+\theta_{i2})/2\bigr) \cos\bigl((\theta_{i2}-\theta_{i1})/2\bigr)\). Since the sine is \(1\)-Lipschitz and \(0\le1-\cos y\le y^2/2\),
\[
    |\sin\theta_{i1}+\sin\theta_{i2}-2\sin(\alpha/2)| \le|\theta_{i1}+\theta_{i2}-\alpha|+\frac{(\theta_{i2}-\theta_{i1})^2}4.
\]
By the mean value theorem, \(|\theta_{i2}-\theta_{i1}|\le2|\Delta_i|/\sin(3\alpha/8) \le2|z_{i,2}|/(z_{i,1}\sin(3\alpha/8))\). The right side above is therefore at most \(3\rho_i^2/(z_{i,1}^2\sin^3(3\alpha/8))\). We multiply by \(z_{i,1}/r_i\le1\) and add \(2\sin(\alpha/2)(1-z_{i,1}/r_i)\le \rho_i^2/z_{i,1}^2\). This bounds the second term on the left of \eqref{eq:structured-escape-angle-sum} by \(4\rho_i^2/(z_{i,1}^2\sin^3(3\alpha/8))\), and \eqref{eq:structured-escape-angle-sum} follows.
\end{proof}

On the bisector axis, where \(N_\perp=0\) and \(z_{1,1},z_{2,1}>0\), both teacher angles equal \(\alpha/2\), \(\gamma=0\) and \(r_i=z_{i,1}\), so \eqref{eq:structured-escape-radial-identity} gives \(\dot z_{1,1}=\dot z_{2,1}=r_s-R\): so \(R\) relaxes to \(r_s\) and \(\delta\) stays constant. The next lemma shows that near the axis this remains true up to quadratic errors, while \(a\) grows at the rate \(\mu_\alpha r_s/R\) and \(Y\) at most at the rate \(\mu_\alpha r_s/(2R)\). At \(R=r_s\) these are the two unstable rates of \cref{lem:hessian_cal}.

\begin{lemma}[Equations of motion near the bisector]\label{lem:structured-escape-near-bisector}
Fix \(\alpha\in(0,\pi/2]\) and set \(\nu_0\coloneqq\min\{1/8,7\alpha/64\}\). There is \(C_{\mathrm{bis}}\ge1\), depending only on \(\alpha\), such that the following holds for every \(d\ge2\). Let \(w(0)\in\Omega\), and let \(t\in[0,T_{\max})\) be a time at which \(z_{1,2}>0>z_{2,2}\), \(0<R\le2r_s\) and \(a+Y+|\delta|\le\nu_0R\). Then \(7R/8\le z_{i,1}\le r_i\le9R/8\) for \(i=1,2\), and
\begin{align*}
    |\dot R-(r_s-R)|+|\dot\delta| &\le C_{\mathrm{bis}}\,\frac{a^2+Y^2}{R^2},\\
    \Bigl|\dot a-\frac{2\sin(\alpha/2)}{\pi R}\,a\Bigr| &\le C_{\mathrm{bis}}\Bigl(\frac{a+Y}R+\frac{|\delta|}{R^2} +\frac{a^2+Y^2}{R^3}\Bigr)a,\\
    \frac{\diff}{\diff t}Y^2 &\le\Bigl[\frac{2\sin(\alpha/2)}{\pi R} +C_{\mathrm{bis}}\Bigl(\frac{a+Y}R+\frac{|\delta|}{R^2} +\frac{a^2+Y^2}{R^3}\Bigr)\Bigr]Y^2.
\end{align*}
\end{lemma}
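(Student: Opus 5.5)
The plan is to feed the three identities \eqref{eq:structured-escape-radial-identity}, \eqref{eq:structured-escape-a-identity} and \eqref{eq:structured-escape-Y-identity} into \cref{lem:structured-escape-angles}, applied to each student separately. First we check the hypotheses of that lemma. We have \(z_{i,1}=R\pm\delta\) with \(|\delta|\le\nu_0R\le R/8\), so \(7R/8\le z_{i,1}\). Also \(\rho_i\le|z_{i,2}|+\|w_i^\perp\|_2\le a+Y\le\nu_0R\), which gives \(\rho_i/z_{i,1}\le (8/7)\nu_0\le\alpha/8\); this is exactly why \(\nu_0\) contains the factor \(7\alpha/64\). \cref{lem:structured-escape-angles} then gives \(r_i-z_{i,1}\le\rho_i^2/(2z_{i,1})\le R/8\cdot\)(small), so \(r_i\le 9R/8\). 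It also gives all the angle expansions, with errors of size \(\rho_i^2/R^2\lesssim(a^2+Y^2)/R^2\). Next we bound \(\gamma\). By the first claim of the lemma, \(\gamma\le\angle(\bar w_1,\bar v)+\angle(\bar w_2,\bar v)\le(\rho_1+\rho_2)/z_{\min}\lesssim(a+Y)/R\). For the same reason \(\sin\gamma\lesssim (a+Y)/R\). Refining this, \(\gamma^2\lesssim(a^2+Y^2)/R^2\).

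For the \(R,\delta\) equation we average and difference \eqref{eq:structured-escape-radial-identity} over \(i\). The first two terms are bounded by \eqref{eq:structured-escape-angle-sum} at order \((a^2+Y^2)/R^2\). For the interaction terms \(\gamma z_{3-i,1}-\sin\gamma\,(r_{3-i}/r_i)z_{i,1}\), I would write them as \(\gamma(z_{3-i,1}-z_{i,1}r_{3-i}/r_i)+(\gamma-\sin\gamma)(\cdot)\). The second piece is \(O(\gamma^3R)\). The first needs care, because a single factor \(\gamma\lesssim (a+Y)/R\) is only first order and we need quadratic order. Note that \(z_{3-i,1}-z_{i,1}r_{3-i}/r_i=(z_{3-i,1}r_i-z_{i,1}r_{3-i})/r_i\). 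Since \(r_j=z_{j,1}+O(\rho_j^2/R)\), this equals \(O((a^2+Y^2)/R)\), and multiplying by \(\gamma\) gives the required bound. This gives \(|\dot R-(r_s-R)|+|\dot\delta|\le C(a^2+Y^2)/R^2\), after also using \(R\le 2r_s\) to absorb stray powers of \(R\).

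For \(\dot a\) we use \eqref{eq:structured-escape-a-identity}. By \eqref{eq:structured-escape-angle-difference}, \(|\theta_{i2}-\theta_{i1}|=2|z_{i,2}|/z_{i,1}+O(|z_{i,2}|\rho_i^2/R^3)\). Moreover \(1/z_{i,1}=1/R+O(|\delta|/R^2)\). Summing over \(i\) with \(\sum|z_{i,2}|=a\) therefore gives \(2\sin(\alpha/2)a/R\) plus errors of size \((|\delta|/R^2+(a^2+Y^2)/R^3)a\). The remaining coefficient is \(a_i/r_i+\pi-\gamma\). Using the definition of \(a_i\), it equals \((\sin\theta_{i1}+\sin\theta_{i2})/r_i-\pi-\sin\gamma\, r_{3-i}/r_i+\pi-\gamma\). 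Here \(-\pi\) cancels \(+\pi\), and the sine sum is close to \(2\sin(\alpha/2)/R\). The trouble is the leftover \(2\sin(\alpha/2)/R\) multiplying \(|z_{i,2}|\). That would double the rate, which conflicts with the claimed coefficient \(2\sin(\alpha/2)/(\pi R)\) unless the bookkeeping is different. Recall that \(\dot a\) carries a factor \(1/(2\pi)\). The teacher term then gives \(\sin(\alpha/2)a/(\pi R)\), and the \(a_i/r_i\) term gives another \(\sin(\alpha/2)a/(\pi R)\), for a total of \(2\sin(\alpha/2)a/(\pi R)\), consistent with \(\mu_\alpha\) at \(R=r_s\). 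So the two contributions add to the stated rate, and the remaining terms \(\sin\gamma\,r_{3-i}/r_i+\gamma=O((a+Y)/R)\) go into the error.

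The \(Y^2\) bound is the same computation applied to \eqref{eq:structured-escape-Y-identity}. We have \(\max_i a_i/r_i+\pi-\gamma\le 2\sin(\alpha/2)/R+C(|\delta|/R^2+(a^2+Y^2)/R^3+(a+Y)/R)\), and dividing by \(\pi\) gives the claim. The main obstacle is the \(R,\delta\) estimate: there the naive first-order bound on \(\gamma\) is not enough, and the cancellation in \(z_{3-i,1}r_i-z_{i,1}r_{3-i}\) has to be exhibited carefully. The remaining steps are Taylor bookkeeping with constants depending only on \(\alpha\), since \(d\) enters only through \(\rho_i\) and \(Y\).
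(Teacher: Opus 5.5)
Your proposal is correct and follows essentially the same route as the paper. It uses the same check of the hypotheses of \cref{lem:structured-escape-angles}, the same bound $\gamma\lesssim(a+Y)/R$, and the same splitting of the interaction term as $\gamma\bigl(z_{j,1}-r_jz_{i,1}/r_i\bigr)+(\gamma-\sin\gamma)(r_j/r_i)z_{i,1}$, with the quadratic cancellation $z_{j,1}(r_i-z_{i,1})-z_{i,1}(r_j-z_{j,1})$. It also uses the same term-by-term estimates in \eqref{eq:structured-escape-a-identity} and \eqref{eq:structured-escape-Y-identity}, where the teacher term and the coefficient $a_i/r_i+\pi-\gamma$ each supply half of the rate $2\sin(\alpha/2)/(\pi R)$. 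The only cosmetic difference is that the paper gets $r_i\le 9R/8$ directly from $r_i\le z_{i,1}+|z_{i,2}|+\|w_i^\perp\|_2\le R+|\delta|+a+Y$; your route via $r_i-z_{i,1}\le\rho_i^2/(2z_{i,1})$ also works once you add the $|\delta|$ term from $z_{i,1}\le R+|\delta|$.
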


\begin{proof}[Proof of \cref{lem:structured-escape-near-bisector}]
In this proof, \(C_\alpha\) denotes a constant depending only on \(\alpha\), whose value may change from line to line. Since \(|\delta|+a+Y\le\nu_0R\le R/8\), \(|z_{i,2}|\le a\) and \(\|w_i^\perp\|_2\le Y\),
\[
    \frac78R\le R-|\delta|\le z_{i,1}\le r_i \le z_{i,1}+|z_{i,2}|+\|w_i^\perp\|_2\le R+|\delta|+a+Y\le\frac98R,
\]
which is the first claim. Moreover, \(\rho_i\le a+Y\le\nu_0R \le(8\nu_0/7)z_{i,1}\le(\alpha/8)z_{i,1}\), so \cref{lem:structured-escape-angles} applies to both students, with \(\rho_i^2\le a^2+Y^2\). By that lemma and the triangle inequality for angles,
\[
    \gamma\le\angle(\bar w_1,\bar v)+\angle(\bar w_2,\bar v) \le\frac{2(a+Y)}{7R/8}=\frac{16(a+Y)}{7R}\le\frac{16\nu_0}7.
\]

We first prove the bound for \(\dot R\) and \(\dot\delta\). Let \(i\in\{1,2\}\) and \(j=3-i\). By \cref{lem:structured-escape-angles}, the first two terms on the right of \eqref{eq:structured-escape-radial-identity} are at most \(C_\alpha(a^2+Y^2)/R^2\) in absolute value. We write the remaining term as
\[
    \gamma z_{j,1}-\sin\gamma\,\frac{r_j}{r_i}\,z_{i,1} =\gamma\Bigl(z_{j,1}-\frac{r_j}{r_i}\,z_{i,1}\Bigr) +(\gamma-\sin\gamma)\frac{r_j}{r_i}\,z_{i,1}.
\]
Since \(z_{j,1}r_i-r_jz_{i,1}=z_{j,1}(r_i-z_{i,1})-z_{i,1}(r_j-z_{j,1})\), \cref{lem:structured-escape-angles} and the first claim give \(|z_{j,1}-r_jz_{i,1}/r_i|\le C_\alpha(a^2+Y^2)/R\). Also \(0\le\gamma-\sin\gamma\le\gamma^3/6\), \(r_jz_{i,1}/r_i\le r_j\le9R/8\) and \((a+Y)^2\le2(a^2+Y^2)\). With the bound for \(\gamma\), we obtain
\[
    \Bigl|\gamma z_{j,1}-\sin\gamma\,\frac{r_j}{r_i}\,z_{i,1}\Bigr| \le C_\alpha\frac{(a+Y)(a^2+Y^2)}{R^2} \le C_\alpha\frac{a^2+Y^2}{R^2},
\]
where the last step uses \(a+Y\le\nu_0R\le2\nu_0r_s\). Hence \(|\dot z_{i,1}-(r_s-R)|\le C_\alpha(a^2+Y^2)/R^2\) for \(i=1,2\), and the first bound follows from \(\dot R=(\dot z_{1,1}+\dot z_{2,1})/2\) and \(\dot\delta=(\dot z_{1,1}-\dot z_{2,1})/2\).

For \(a\), we estimate the two terms in \eqref{eq:structured-escape-a-identity}. By \eqref{eq:structured-escape-angle-difference},
\[
    \Bigl||\theta_{i2}-\theta_{i1}|-\frac{2|z_{i,2}|}{z_{i,1}}\Bigr| \le C_\alpha\frac{|z_{i,2}|\,(a^2+Y^2)}{R^3}.
\]
By the definition of \(a_i\) in \cref{lem:critical_eq_from_flow},
\[
    \frac{a_i}{r_i}+\pi-\gamma =\frac{\sin\theta_{i1}+\sin\theta_{i2}}{r_i} -\sin\gamma\,\frac{r_j}{r_i}-\gamma.
\]
Dividing \eqref{eq:structured-escape-angle-sum} by \(z_{i,1}\) shows that the first term on the right differs from \(2\sin(\alpha/2)/z_{i,1}\) by at most \(C_\alpha(a^2+Y^2)/R^3\), and the other two terms are at most \(\gamma(r_j/r_i+1)\le(16(a+Y)/(7R))(9/7+1)\le6(a+Y)/R\) in absolute value. Finally, \(z_{i,1}=R\pm\delta\) gives \(|1/z_{i,1}-1/R|\le8|\delta|/(7R^2)\). Therefore
\[
\begin{aligned}
    \Bigl|\frac{a_i}{r_i}+\pi-\gamma-\frac{2\sin(\alpha/2)}R\Bigr| &\le C_\alpha\Bigl(\frac{a+Y}R+\frac{|\delta|}{R^2}+\frac{a^2+Y^2}{R^3}\Bigr),\\
    \Bigl||\theta_{i2}-\theta_{i1}|-\frac{2|z_{i,2}|}R\Bigr| &\le C_\alpha\Bigl(\frac{|\delta|}{R^2}+\frac{a^2+Y^2}{R^3}\Bigr) |z_{i,2}|.
\end{aligned}
\]
Substituting these bounds into \eqref{eq:structured-escape-a-identity} and using \(|z_{1,2}|+|z_{2,2}|=a\) proves the second bound.

The bound for \(a_i/r_i+\pi-\gamma\) holds for both \(i\). Together with \eqref{eq:structured-escape-Y-identity}, it proves the third bound.
\end{proof}

\subsection{Alignment with the bisector}

At a small initialization, the ratios \(a/R\), \(Y/R\) and \(|\delta|/R\) need not be small, so \cref{lem:structured-escape-near-bisector} does not yet apply. By \cref{lem:phase-one-initial-alignment}, the directions \(\bar w_i\) first align with \(\bar v\) in a time of order \(r_{\mathrm{init}}\). Afterwards \(R\) grows at a rate close to \(r_s\), while \(a\), \(Y\) and \(\delta\) change little. The next lemma follows the trajectory until \(R\) is a large multiple of \(r_{\mathrm{init}}\), when all three ratios are small.

\begin{lemma}[Alignment with the bisector]\label{lem:structured-escape-alignment}
Fix \(\alpha\in(0,\pi/2]\), \(C\ge1\) and \(\nu>0\). There are \(r_0\in(0,1]\), \(B>1\) and \(C_{\mathrm{align}}\ge1\), depending only on \(\alpha\), \(C\) and \(\nu\), with \(Br_0<r_s/4\), such that the following holds for every \(d\ge2\). Suppose that the initialization is \(C\)-balanced, that \(0<r_{\mathrm{init}}\le r_0\), and that \(z_{1,1}(0),z_{2,1}(0)>0\) and \(z_{1,2}(0)>0>z_{2,2}(0)\). Then there is a first time \(\tau_0\in[0,T_{\max})\) at which \(R(\tau_0)=Br_{\mathrm{init}}\); it satisfies \(\tau_0\le C_{\mathrm{align}}r_{\mathrm{init}}\),
\[
    C_{\mathrm{align}}^{-1}a_{\mathrm{init}}\le a(t) \le C_{\mathrm{align}}a_{\mathrm{init}}, \qquad Y(t)+|\delta(t)|\le C_{\mathrm{align}}r_{\mathrm{init}}, \qquad 0\le t\le\tau_0,
\]
and \(a(\tau_0)+Y(\tau_0)+|\delta(\tau_0)|\le\nu R(\tau_0)\).
\end{lemma}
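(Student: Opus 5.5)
We follow the trajectory on the short interval \([0,\tau_0]\), where all norms are of order \(r_{\mathrm{init}}\). The plan is to combine the fast angular alignment of \cref{lem:phase-one-initial-alignment} with crude growth bounds on \(a\), \(Y\) and \(\delta\).

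\textbf{Radial control.} \(C\)-balance gives \(2r_{\mathrm{init}}/(C+1)\le r_i(0)\le 2r_{\mathrm{init}}\). We fix a constant \(\Lambda'\) (chosen below) and require \(r_0(2+\Lambda')\le r_-/4\). Then \cref{lem:phase-one-short-time-radial} shows that the solution exists on \([0,\Lambda'r_{\mathrm{init}}]\) and that
\[
    r_i(0)+\tfrac{r_-}{4}t\le r_i(t)\le r_i(0)+t
\]
there. Consequently \(r_{3-i}/r_i\le K\) for a constant \(K=K(\alpha,C)\), by the same monotonicity argument as in \eqref{eq:phase-one-early-ratio}.

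\textbf{Growth of \(a\), \(Y\), \(\delta\).} We use the component equations \eqref{eq:structured-escape-component-ode}. Every coefficient is bounded by an absolute constant times \(1+1/r_i\cdot(\text{teacher terms})+r_{3-i}/r_i\). In particular, \(a_i/r_i\) is bounded by \(2/r_i+\pi+K\), and the bound \(r_i\ge r_i(0)+r_-t/4\) makes \(\int_0^t dt'/r_i\) grow only logarithmically in \(1+t/r_{\mathrm{init}}\).

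For \(Y\), the estimate \eqref{eq:structured-escape-Y-identity} integrated by Grönwall over times \(t\le\Lambda' r_{\mathrm{init}}\) gives \(Y(t)\le C'Y(0)\le C' r_{\mathrm{init}}\).

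For \(a\), we use the identity \eqref{eq:structured-escape-a-identity}. It implies \(\dot a\ge -(2/r_{\min}+\pi+K)a/(2\pi)\); moreover the teacher term is nonnegative, and the coefficient \(a_i/r_i+\pi-\gamma\) is bounded below by \(-C/r_i\). Grönwall on \(\log a\) then gives \(a\ge C^{-1}a_{\mathrm{init}}\). For the upper bound we use \(|\theta_{i2}-\theta_{i1}|\le \pi|\Delta_i|\lesssim |z_{i,2}|/r_i\), from \eqref{eq:signed-transverse-angle-difference} and the bound on \(\arccos\) differences near non-degenerate points. This is the delicate spot: the angle difference is Lipschitz in \(\cos\) only away from \(\pm1\). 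We handle it by \(|\theta_{i2}-\theta_{i1}|\le\angle(v_1,v_2)\) combined with concavity-type bounds, or simply by the crude estimate \(\dot a\le C(1+1/r_{\min})a+C\), using \(|\theta_{i2}-\theta_{i1}|\le C|z_{i,2}|/r_i\) via the chord bound \eqref{eq:phase-one-angle-chord}. Either way we obtain \(a\le C a_{\mathrm{init}}\) on the interval. Similarly \(|\dot\delta|\le C\), so \(|\delta(t)|\le |\delta(0)|+Ct\le C r_{\mathrm{init}}\).

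\textbf{Reaching \(R=Br_{\mathrm{init}}\) with small ratios.} By \cref{lem:phase-one-initial-alignment}, with margin \(\eta\) set by \(z_{i,1}(0)>0\) (so \(\angle(\bar w_i(0),\bar v)\le\pi/2\)) and tolerance \(\varepsilon\), the directions satisfy \(\|\bar w_i-\bar v\|<\varepsilon\) at time \(\Lambda r_{\mathrm{init}}\). They remain within \(2\varepsilon\) afterwards on the interval, by \eqref{eq:phase-one-direction-error-sum} and the bounded ratio \(K\). Then \(z_{i,1}\ge(1-2\varepsilon^2)r_i\), and \(\dot R\) is bounded below by a positive constant, by \eqref{eq:structured-escape-radial-identity} with \(R\le r_s/4\). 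Hence \(R\) reaches \(Br_{\mathrm{init}}\) by time \(\tau_0\le(\Lambda+C B)r_{\mathrm{init}}\); this fixes \(\Lambda'\) and \(C_{\mathrm{align}}\).

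At \(\tau_0\) we have \(a+Y+|\delta|\le C_{\mathrm{align}}'(a_{\mathrm{init}}+r_{\mathrm{init}})\le 2C'_{\mathrm{align}} r_{\mathrm{init}}\), since \(a_{\mathrm{init}}\le 2r_{\mathrm{init}}\). We therefore need \(B\ge 2C'_{\mathrm{align}}/\nu\). \textbf{This is the main obstacle:} the constants \(C'_{\mathrm{align}}\) themselves depend on \(B\) through \(\Lambda'\). The dependence is only logarithmic, because each Grönwall factor is \(\exp(C\int dt/r_i)\approx(1+t/r_{\mathrm{init}})^{C}\). To remove the circularity we use the alignment phase: after directions are aligned, \(Y\) and \(a\) grow by the rates in \cref{lem:structured-escape-near-bisector}, which are \(\mu\)-fractions of the rate of \(R\). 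Since \(\mu_\alpha<1\) by \eqref{eq:structured-escape-mu-range}, \(a/R\) and \(Y/R\) decay like \(R^{\mu_\alpha-1}\) once \(\varepsilon\) is small enough that the rates satisfy \(2\sin(\alpha/2)/(\pi R)\cdot\) growth \(<\dot R/R\). Choosing \(B\) large then makes the ratios below \(\nu\). Finally, \(r_0\) is chosen last so that \(Br_0<r_s/4\).
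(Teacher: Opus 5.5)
\textbf{The gap is the \(\delta\)-part of \(a(\tau_0)+Y(\tau_0)+|\delta(\tau_0)|\le\nu R(\tau_0)\).} Your treatment of everything else is essentially the paper's: short-time radial control from \cref{lem:phase-one-short-time-radial}, Gr\"onwall bounds on \(a\), \(Y\), \(\delta\) whose constants may depend on \(B\), then \cref{lem:phase-one-initial-alignment} and \(\dot R\ge r_s/2\) at aligned times. The only estimate you give for \(\delta\) is \(|\dot\delta|\le C\). After alignment, \(R\) needs a time of order \(Br_{\mathrm{init}}/r_s\) to reach \(Br_{\mathrm{init}}\). So your bound on \(|\delta(\tau_0)|/R(\tau_0)\) is only a constant of order \(1/r_s\), which does not shrink as \(B\) grows. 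Your resolution of the ``main obstacle'' through the growth rates of \cref{lem:structured-escape-near-bisector} addresses only \(a\) and \(Y\). Moreover, that lemma cannot be invoked until \(|\delta|\le\nu_0R\) is known, and nothing you have shown makes \(|\delta|/R\) small at the alignment time \(t_{\mathrm a}=\Lambda r_{\mathrm{init}}\).

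\textbf{The missing idea is that alignment itself slows \(\delta\).} Suppose \(\|\bar w_i-\bar v\|_2\le\varepsilon\) for both students and \(R\le r_s/4\). Then \eqref{eq:structured-escape-radial-identity} together with \cref{lem:structured-escape-angles} gives \(|\dot z_{i,1}-(r_s-R)|\le(C_{\mathrm{angle}}+r_s)\varepsilon\) for each \(i\). This is the same estimate you implicitly use to get \(\dot R\ge r_s/2\), and it yields \(|\dot\delta|\le(C_{\mathrm{angle}}+r_s)\varepsilon\). Hence \(|\delta(\tau_0)|\le(1+2\Lambda)r_{\mathrm{init}}+2(C_{\mathrm{angle}}+r_s)\varepsilon Br_{\mathrm{init}}/r_s\), so \(|\delta(\tau_0)|/R(\tau_0)\le(1+2\Lambda)/B+O(\varepsilon/r_s)\).

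For \(a\) and \(Y\) you need no dynamics at all. Alignment within \(\varepsilon\) gives \(|z_{i,2}|+\|w_i^\perp\|_2\le\sqrt2\,\varepsilon r_i\), hence \(a+Y\le4\varepsilon R\) at \(\tau_0\).

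This is how the circularity disappears. Choose \(\varepsilon\) small in terms of \(\nu\). Then \(\Lambda\) is fixed; it depends on \(\varepsilon\) but not on \(B\). Then take \(B\gtrsim(1+2\Lambda)/\nu\). The Gr\"onwall constants that do grow with \(B\) enter only \(C_{\mathrm{align}}\), never the ratio at \(\tau_0\).

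\textbf{Two smaller points.}
\begin{itemize}
\item A bound of the form \(|\theta_{i2}-\theta_{i1}|\le c\,|\cos\theta_{i1}-\cos\theta_{i2}|\) fails near the teachers: at \(\bar w_i=v_1\) the left side is \(\alpha\) and the right side is \(O(\alpha^2)\). The estimate that works is \(|\theta_{i2}-\theta_{i1}|\le\angle(J\bar w_i,\bar w_i)\le\pi|z_{i,2}|/r_i\), where \(J\) is the reflection \(v_\Delta\mapsto-v_\Delta\), combined with \eqref{eq:phase-one-angle-chord}.
\item Your fallback \(\dot a\le C(1+1/r_{\min})a+C\), with an additive constant, would not give \(a\le C_{\mathrm{align}}a_{\mathrm{init}}\). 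In this lemma \(a_{\mathrm{init}}\) may be much smaller than \(r_{\mathrm{init}}\).
\end{itemize}
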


\begin{proof}[Proof of \cref{lem:structured-escape-alignment}]
We first collect bounds that hold on a time interval of length proportional to \(r_{\mathrm{init}}\). Since \(r_1(0)+r_2(0)=2r_{\mathrm{init}}\) and the initialization is \(C\)-balanced, \(2r_{\mathrm{init}}\le(1+C)\min_ir_i(0)\), so \(r_{\mathrm{init}}/C\le r_i(0)\le2r_{\mathrm{init}}\). Also \(a_{\mathrm{init}}\le r_1(0)+r_2(0)=2r_{\mathrm{init}}\), \(Y(0)\le r_1(0)+r_2(0)=2r_{\mathrm{init}}\) and \(|\delta(0)|\le r_{\mathrm{init}}\). Let \(\lambda>0\), and suppose that \(r_{\mathrm{init}}\le r_-/(4(2+\lambda))\), with \(r_-\) as in \eqref{eq:def_r}. Since \(r_-=4l(\pi-\alpha/2)<2\) by \eqref{eq:relu-kernel-profile} and \cref{lem:l-monotone-bounds}, this gives \(r_{\mathrm{init}}<1/4\). Then \(\max_ir_i(0)+\lambda r_{\mathrm{init}}\le r_-/4\), and \cref{lem:phase-one-short-time-radial} gives \(T_{\max}>\lambda r_{\mathrm{init}}\) and \(r_{\mathrm{init}}/C\le r_i(t)\le(2+\lambda)r_{\mathrm{init}}\) for \(0\le t\le\lambda r_{\mathrm{init}}\). On this interval, \(|a_i|\le2+\pi r_i+r_{3-i}\) and \(r_{\mathrm{init}}<1\) give \(|a_i|/r_i\le(2C+\pi+C(2+\lambda))/r_{\mathrm{init}}\). Let \(J\) be the reflection that fixes \(\bar v,u_3,\ldots,u_d\) and maps \(v_\Delta\) to \(-v_\Delta\), so that \(Jv_2=v_1\). The triangle inequality for angles and \eqref{eq:phase-one-angle-chord} give
\[
    |\theta_{i2}-\theta_{i1}| =|\angle(J\bar w_i,v_1)-\angle(\bar w_i,v_1)| \le\angle(J\bar w_i,\bar w_i) \le\frac\pi2\|J\bar w_i-\bar w_i\|_2 =\frac{\pi|z_{i,2}|}{r_i}.
\]
By \eqref{eq:structured-escape-a-identity} and \eqref{eq:structured-escape-Y-identity}, these bounds give \(|\dot a|\le3C(1+\lambda)a/r_{\mathrm{init}}\) and \(\frac{\diff}{\diff t}Y^2\le4C(1+\lambda)Y^2/r_{\mathrm{init}}\). Since \(r_i\le(2+\lambda)r_{\mathrm{init}}\le r_-/4<1/2\), we have \(|a_i|\le2+(\pi+1)/2\), and the first two lines of \eqref{eq:structured-escape-component-ode} give \(2\pi|\dot z_{i,1}|\le2\pi+|a_i|+\pi r_{3-i}<4\pi\). Hence \(|\dot\delta|\le2\). Integrating, we find that, for \(0\le t\le\lambda r_{\mathrm{init}}\),
\begin{equation}\label{eq:structured-escape-short-time}
    e^{-3C(1+\lambda)\lambda}a_{\mathrm{init}}\le a(t) \le e^{3C(1+\lambda)\lambda}a_{\mathrm{init}}, \quad Y(t)\le2e^{2C(1+\lambda)\lambda}r_{\mathrm{init}}, \quad |\delta(t)|\le r_{\mathrm{init}}+2t.
\end{equation}

Next we consider times at which both directions are close to \(\bar v\). Let
\[
    \varepsilon_{\mathrm a}\coloneqq\min\Bigl\{\nu_0,\, \frac{r_s}{4C_{\mathrm{angle}}+4r_s},\, \frac{\nu r_s}{4C_{\mathrm{angle}}+12r_s}\Bigr\},
\]
with \(\nu_0\) as in \cref{lem:structured-escape-near-bisector}, and suppose that at some time \(\|\bar w_i-\bar v\|_2\le\varepsilon_{\mathrm a}\) for \(i=1,2\) and \(R\le r_s/4\). Then \(z_{i,1}/r_i=\langle\bar w_i,\bar v\rangle =1-\|\bar w_i-\bar v\|_2^2/2\ge7/8\) and \(\rho_i/r_i =\|\bar w_i-\langle\bar w_i,\bar v\rangle\bar v\|_2\le\varepsilon_{\mathrm a}\), so \(\rho_i\le(8\varepsilon_{\mathrm a}/7)z_{i,1} \le(\alpha/8)z_{i,1}\). By \cref{lem:structured-escape-angles}, the first two terms on the right of \eqref{eq:structured-escape-radial-identity} are at most \(2C_{\mathrm{angle}}\varepsilon_{\mathrm a}^2\) in absolute value, and the triangle inequality for angles gives \(\gamma\le16\varepsilon_{\mathrm a}/7\). Since \(z_{j,1}\le2R\) and \(r_j\le(8/7)z_{j,1}\le16R/7\) for \(j=1,2\), the last two terms are at most \(\gamma(z_{3-i,1}+r_{3-i})\le11\varepsilon_{\mathrm a} R\). We use \(R\le r_s/4\) and the choice of \(\varepsilon_{\mathrm a}\) for the first bound below, and we sum \(|z_{i,2}|+\|w_i^\perp\|_2\le\sqrt2\,\varepsilon_{\mathrm a} r_i\) over \(i\) for the second:
\begin{equation}\label{eq:structured-escape-aligned}
    |\dot z_{i,1}-(r_s-R)|\le(C_{\mathrm{angle}}+r_s)\varepsilon_{\mathrm a} \le\frac{r_s}4, \qquad a+Y\le\sqrt2\,\varepsilon_{\mathrm a}(r_1+r_2)\le4\varepsilon_{\mathrm a} R.
\end{equation}
Consequently \(\dot R\ge r_s/2\) and \(|\dot\delta|\le(C_{\mathrm{angle}}+r_s)\varepsilon_{\mathrm a}\) at such times.

Let \(\Lambda\) and \(r_{\mathrm a}\) be the constants of \cref{lem:phase-one-initial-alignment} for \(\eta=\pi/3\), \(m=1/C\), \(M=2\) and \(\varepsilon=\varepsilon_{\mathrm a}/\sqrt2\). We set
\[
    B\coloneqq3+\Lambda+\frac{2(1+2\Lambda)}\nu, \qquad \lambda_1\coloneqq\Lambda+\frac{4B}{r_s}, \qquad r_0\coloneqq\min\Bigl\{r_{\mathrm a},\, \frac{r_-}{4(2+\lambda_1)},\, \frac{\varepsilon_{\mathrm a}^2}{4C(2+\lambda_1)\lambda_1}\Bigr\},
\]
and \(C_{\mathrm{align}}\coloneqq(3+2\lambda_1)e^{3C(1+\lambda_1)\lambda_1}\). These depend only on \(\alpha\), \(C\) and \(\nu\), and \(C_{\mathrm{align}}\ge\lambda_1\).

We note that \(Br_0<Br_-/(4\lambda_1)\le r_-r_s/16<r_s/4\), because \(\lambda_1\ge4B/r_s\) and \(r_-<2\). Suppose now that \(r_{\mathrm{init}}\le r_0\). With \(T_1\coloneqq\lambda_1r_{\mathrm{init}}\), \eqref{eq:structured-escape-short-time} holds with \(\lambda=\lambda_1\) on \([0,T_1]\). Since \(\angle(\bar w_i(0),\bar v)<\pi/2\), \cref{lem:phase-one-initial-alignment} applies, and at \(t_{\mathrm a}\coloneqq\Lambda r_{\mathrm{init}}\) we have \(E_i(t_{\mathrm a})<\varepsilon_{\mathrm a}^2/4\), where \(E_i=\frac12\|\bar w_i-\bar v\|_2^2\) as in \cref{lem:phase-one-direction-error}. By \eqref{eq:phase-one-direction-error-single}, \(\dot E_i\le r_{3-i}/(2r_i) \le C(2+\lambda_1)/2\) on \([0,T_1]\), so the choice of \(r_0\) gives \(E_i<\varepsilon_{\mathrm a}^2/4+\varepsilon_{\mathrm a}^2/8<\varepsilon_{\mathrm a}^2/2\) on \([t_{\mathrm a},T_1]\). Thus \(\|\bar w_i-\bar v\|_2<\varepsilon_{\mathrm a}\) on \([t_{\mathrm a},T_1]\).

For \(0\le t\le t_{\mathrm a}\), the bound \(r_i(t)\le r_i(0)+t\) of \cref{lem:phase-one-short-time-radial} gives \(R\le(r_1+r_2)/2\le(2+\Lambda)r_{\mathrm{init}}<Br_{\mathrm{init}}\), so \(R\) does not reach \(Br_{\mathrm{init}}\) before \(t_{\mathrm a}\). As long as \(t_{\mathrm a}\le t\le T_1\) and \(R\le Br_{\mathrm{init}}\), we have \(\|\bar w_i-\bar v\|_2<\varepsilon_{\mathrm a}\) and \(R\le r_s/4\), so \(\dot R\ge r_s/2\). If \(R\) stayed below \(Br_{\mathrm{init}}\) on \([0,T_1]\), we would obtain \(R(T_1)\ge R(t_{\mathrm a})+(r_s/2)(T_1-t_{\mathrm a}) =R(t_{\mathrm a})+2Br_{\mathrm{init}}\), where \(R(t_{\mathrm a})>0\) because \(z_{i,1}\ge(7/8)r_i\) there. This is impossible, so \(\tau_0\) exists, \(t_{\mathrm a}<\tau_0\le T_1\le C_{\mathrm{align}}r_{\mathrm{init}}\), and \(\tau_0-t_{\mathrm a}\le2(Br_{\mathrm{init}}-R(t_{\mathrm a}))/r_s \le2Br_{\mathrm{init}}/r_s\). The bounds for \(a\), \(Y\) and \(\delta\) on \([0,\tau_0]\) follow from \eqref{eq:structured-escape-short-time} with \(\lambda=\lambda_1\).

At \(\tau_0\), \eqref{eq:structured-escape-aligned} gives \(a+Y\le4\varepsilon_{\mathrm a} R(\tau_0)\). Integrating the bound for \(\dot\delta\) from \(t_{\mathrm a}\), and using \(|\delta(t_{\mathrm a})|\le(1+2\Lambda)r_{\mathrm{init}}\), we obtain
\[
    \frac{a(\tau_0)+Y(\tau_0)+|\delta(\tau_0)|}{R(\tau_0)} \le4\varepsilon_{\mathrm a}+\frac{1+2\Lambda}B +\frac{2(C_{\mathrm{angle}}+r_s)\varepsilon_{\mathrm a}}{r_s} \le\frac\nu2+\frac\nu2,
\]
by the choices of \(\varepsilon_{\mathrm a}\) and \(B\).
\end{proof}

\subsection{Growth of the bisector coordinates up to \texorpdfstring{\(r_s/2\)}{rs/2}}

Once the ratios are small, \cref{lem:structured-escape-near-bisector} applies. Then \(R\) grows at a rate close to \(r_s-R\), \(\delta\) changes little, and, since \(\mu_\alpha r_s\int\diff t/R\approx\mu_\alpha\log R\), \(a\) grows like \(R^{\mu_\alpha}\) while \(Y\) grows at most like \(R^{\mu_\alpha/2}\). Since \(\mu_\alpha<1\) by \eqref{eq:structured-escape-mu-range}, the ratios \(a/R\) and \(Y/R\) decrease, and the trajectory stays near the bisector until \(R\) reaches \(r_s/2\).

\begin{lemma}[Growth of the bisector coordinates]\label{lem:structured-escape-growth}
Fix \(\alpha\in(0,\pi/2]\), and let \(\nu_0\) be as in \cref{lem:structured-escape-near-bisector}. There are \(\nu\in(0,\nu_0]\) and \(C_{\mathrm{scale}}\ge1\), depending only on \(\alpha\), such that the following holds for every \(d\ge2\). Let \(w(0)\in\Omega\) satisfy \(z_{1,2}(0)>0>z_{2,2}(0)\), \(0<R_0<r_s/2\) and \(a_0+Y_0+|\delta_0|\le\nu R_0\), where \(R_0\), \(a_0\), \(Y_0\) and \(\delta_0\) denote the values of \(R\), \(a\), \(Y\) and \(\delta\) at \(t=0\). Then there is a first time \(t_*\in[0,T_{\max})\) at which \(R(t_*)=r_s/2\); it satisfies \(0<t_*<2\),
\[
    C_{\mathrm{scale}}^{-1}a_0\Bigl(\frac{R(t)}{R_0}\Bigr)^{\mu_\alpha} \le a(t)\le C_{\mathrm{scale}}a_0\Bigl(\frac{R(t)}{R_0}\Bigr)^{\mu_\alpha}, \qquad Y(t)\le C_{\mathrm{scale}}Y_0\Bigl(\frac{R(t)}{R_0}\Bigr)^{\mu_\alpha/2}, \qquad 0\le t\le t_*,
\]
and
\[
    |\delta(t_*)| \le C_{\mathrm{scale}}\Bigl(|\delta_0|+\frac{a_0^2+Y_0^2}{R_0} +a(t_*)^2\log\frac1{R_0}\Bigr).
\]
\end{lemma}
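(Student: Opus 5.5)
The argument is a bootstrap (continuity) on the maximal interval where the hypotheses of \cref{lem:structured-escape-near-bisector} hold. Let \(\tau\) be the supremum of times \(t<T_{\max}\) such that on \([0,t]\) we have \(R\le r_s/2\), \(z_{1,2}>0>z_{2,2}\) (automatic by \eqref{eq:structured-escape-planar-signs-global}), and \(a+Y+|\delta|\le\nu_0R\). On this interval \cref{lem:structured-escape-near-bisector} gives
\[
\dot R=r_s-R+O\bigl((a^2+Y^2)/R^2\bigr).
\]
Under the bootstrap we also have \(a+Y\le\nu_0R\), so the error is at most \(C\nu_0^2\). Shrinking \(\nu_0\) if needed, this yields \(\dot R\ge r_s/4\) while \(R\le r_s/2\). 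Hence \(R\) is increasing, the first time \(t_*\) with \(R=r_s/2\) satisfies \(t_*\le 2\) (strictly less than 2 from the exact rate), and we may reparametrize by \(R\) using \(\diff t=\diff R/\dot R\). Since \(\dot R=(r_s-R)(1+O(\nu^2 R^2/\ldots))\), it is cleaner to write \(\diff t/\diff R=\frac{1}{r_s-R}+O\bigl((a^2+Y^2)/R^2\bigr)\).

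For \(a\), integrate \(\frac{\diff}{\diff t}\log a=\frac{2\sin(\alpha/2)}{\pi R}+E\), where \(|E|\le C\bigl(\frac{a+Y}{R}+\frac{|\delta|}{R^2}+\frac{a^2+Y^2}{R^3}\bigr)\). In the variable \(R\), the main term is \(\frac{\mu_\alpha r_s}{R(r_s-R)}\). Its integral from \(R_0\) to \(R\) is \(\mu_\alpha\log(R/R_0)\) plus \(\mu_\alpha\log\frac{r_s-R_0}{r_s-R}\), which is bounded by \(\mu_\alpha\log 2\) since \(R\le r_s/2\). So the main term contributes \((R/R_0)^{\mu_\alpha}\) up to a constant. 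For the error terms, use the bootstrap hypotheses in the slightly stronger forms \(a\le 2C a_0(R/R_0)^{\mu_\alpha}\), \(Y\le 2CY_0(R/R_0)^{\mu_\alpha/2}\), and \(|\delta|\le\) (target bound). Then \(\int (a/R)\,\diff R/R\lesssim a_0R_0^{-\mu_\alpha}\int R^{\mu_\alpha-2}\,\diff R\lesssim a_0/R_0\le\nu\), using \(\mu_\alpha<1\) from \eqref{eq:structured-escape-mu-range}. The \(Y/R\) term is similar. The \((a^2+Y^2)/R^3\) term gives \(\int R^{2\mu_\alpha-4}\,\diff R\cdot a_0^2R_0^{-2\mu_\alpha}\lesssim a_0^2/R_0^2\le\nu^2\). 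The \(|\delta|/R^2\) term is \(\lesssim |\delta|_{\max}/R_0\), and we need this to be \(O(\nu)\); see below. The total error integral is therefore \(O(\nu)\le 1\), which gives the two-sided bound on \(a\) with a constant \(C_{\mathrm{scale}}\) independent of \(\nu\). The bound on \(Y^2\) follows identically from the third inequality, giving the \(\mu_\alpha/2\) exponent (one-sided). These estimates improve the bootstrap constants \(2C\) to \(C\), closing the loop on \(a\) and \(Y\).

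For \(\delta\), integrate \(|\dot\delta|\le C(a^2+Y^2)/R^2\) in \(R\). The term in \(Y^2\) gives \(\int Y_0^2R_0^{-\mu_\alpha}R^{\mu_\alpha-2}\,\diff R\lesssim Y_0^2/R_0\). The term in \(a^2\) gives \(a_0^2R_0^{-2\mu_\alpha}\int_{R_0}^{R}s^{2\mu_\alpha-2}\,\diff s\). If \(2\mu_\alpha<1\), this is \(\lesssim a_0^2/R_0\). If \(2\mu_\alpha\ge1\), it is \(\lesssim a(R)^2 R^{-1}\), with a logarithm in the borderline case, and since \(R\ge R_0\) this is bounded by \(C a(R)^2\log(1/R_0)\) or \(a_0^2/R_0\). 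Taking a uniform estimate \(\int s^{2\mu_\alpha-2}\le R_0^{2\mu_\alpha-1}+R^{2\mu_\alpha-1}\log(R/R_0)\) covers all cases and yields exactly the stated \(\delta(t_*)\) bound. The same estimate at intermediate times shows \(|\delta|\le|\delta_0|+C\nu^2R_0+C\nu^2 R\log\ldots\). This needs care: we must verify \(|\delta|\le\nu_0R/3\). We have \(|\delta_0|\le\nu R_0\le\nu R\), and \(a^2\log(1/R_0)\) along the path is \(\le (\nu R)^2(R/R_0)^{2\mu_\alpha-2}\ldots\). I expect this to be the main obstacle: closing the bootstrap for \(\delta/R\) (and hence \(a+Y+|\delta|\le\nu_0R\)) uniformly as \(R_0\to0\), since the logarithmic factor could break smallness. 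I would handle it by noting that \(a/R\) decreases like \(R^{\mu_\alpha-1}\), so \(a^2/R\lesssim\nu^2R\,(R_0/R)^{2-2\mu_\alpha}\). The integral \(\int a^2/R^2\,\diff R\) is then dominated by \(\nu^2R\) times a bounded factor, so it is \(\le C\nu^2 R\). Choosing \(\nu\) small relative to \(\nu_0\) and \(C_{\mathrm{bis}}\) then gives \(a+Y+|\delta|\le\nu_0R/2\), so \(\tau\) cannot end before \(t_*\). Finally, \(T_{\max}>t_*\) follows from \cref{lem:relu-kernel-regularity}, since the trajectory stays in a compact subset of \(\Omega\) (\(r_i\in[7R_0/8,\,9r_s/16]\)).
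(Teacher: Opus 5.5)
The step that does not close is the control of the error term \(C_{\mathrm{bis}}|\delta|/R^2\) in the equation for \(\log a\). Its time integral enters the exponent, so it must be bounded uniformly in \(R_0\). The lemma asserts a \(C_{\mathrm{scale}}\) depending only on \(\alpha\) for all \(0<R_0<r_s/2\), and it is later applied with \(R_0=Br_{\mathrm{init}}\to0\). You bound this integral by \(|\delta|_{\max}/R_0\) and defer to your \(\delta\) paragraph. The only bound you obtain there is of the form \(|\delta(t)|\le|\delta_0|+C\nu^2R(t)\), which gives \(|\delta|_{\max}/R_0\lesssim\nu+\nu^2r_s/R_0\), not \(O(\nu)\). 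Inserting \(|\delta|\lesssim\nu R_0+\nu^2R\) directly into \(\int|\delta|R^{-2}\,\diff R\) still leaves \(\nu+\nu^2\log(R/R_0)\), i.e.\ a factor \(R_0^{-c\nu^2}\) in the bound for \(a\).

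This is not merely a lossy estimate. When \(\mu_\alpha>1/2\) (which happens for \(\alpha\) near \(\pi/2\); at \(\alpha=\pi/2\), \(\mu_\alpha\approx0.6\)), the available bound \(|\dot\delta|\le C_{\mathrm{bis}}(a^2+Y^2)/R^2\) only controls \(|\delta|\) by a quantity of order \(\nu^2R_0(R/R_0)^{2\mu_\alpha-1}\). That is far larger than \(\nu R_0\) once \(R/R_0\) is large, as your own computation of \(\delta(t_*)\) shows. So, as written, the two-sided bound on \(a\) with an \(R_0\)-independent constant is not established. By contrast, the obstacle you flagged is harmless. With \(Q\) your bootstrap constant, the bootstrap gives \(a^2/R^2\le Q^2a_0^2R_0^{-2\mu_\alpha}R^{2\mu_\alpha-2}\le Q^2\nu^2\) pointwise, because \(\mu_\alpha<1\) and \(R\ge R_0\). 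The same holds for \(Y^2/R^2\), so \(|\delta(t)|\le(\nu+CQ^2\nu^2)R(t)\) with no logarithm.

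The missing idea is to keep the time dependence of \(\delta\) inside the integral instead of taking a supremum. Write \(|\delta(s)|\le|\delta_0|+\int_0^s|\dot\delta|\) and exchange the order of integration. Since \(\dot R\ge r_s/4\), one has \(\int_{s'}^tR(s)^{-2}\,\diff s\le4/(r_sR(s'))\), hence
\[
  \int_0^t\frac{|\delta|}{R^2}\,\diff s\le\frac4{r_s}\Bigl(\frac{|\delta_0|}{R_0}+\int_0^t\frac{|\dot\delta|}{R}\,\diff s\Bigr)\le\frac4{r_s}\Bigl(\nu+C_{\mathrm{bis}}\int_0^t\frac{a^2+Y^2}{R^3}\,\diff s\Bigr),
\]
and under the bootstrap the last integral is \(O(Q^2\nu^2)\) uniformly in \(R_0\). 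Equivalently, you could integrate the sharp time-dependent bound \(|\delta(t)|\lesssim|\delta_0|+\int_{R_0}^{R(t)}(a^2+Y^2)R^{-2}\,\diff R\) against \(R^{-2}\) rather than passing through \(|\delta|_{\max}\). This is exactly how the paper closes the argument, and with it the rest of your scheme goes through.

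Two smaller corrections. First, \(\nu_0\) is fixed by \cref{lem:structured-escape-near-bisector}, so rather than shrinking it you should run the bootstrap with a threshold \(\nu_1\le\nu_0\) satisfying \(C_{\mathrm{bis}}\nu_1^2\le r_s/4\). Second, the right change of variables is \(\diff t\le(4/r_s)\,\diff R\). Your intermediate expressions use \(\diff R/R\), which is only a lossy upper bound; for the cubic term, your integrand \(R^{2\mu_\alpha-4}\) would then give \(a_0^2/R_0^3\), not the \(a_0^2/R_0^2\) you state.
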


\begin{proof}[Proof of \cref{lem:structured-escape-growth}]
Let \(C_{\mathrm{bis}}\) be the constant of \cref{lem:structured-escape-near-bisector}, and let \(\nu_1\coloneqq\min\{\nu_0,(r_s/(4C_{\mathrm{bis}}))^{1/2}\}\). We first derive bounds that hold for every \(\nu\le\nu_1/4\); the value of \(\nu\), together with a constant \(Q\ge2\), is fixed afterwards.

Let \(\tau\) be the supremum of the times \(s\in[0,T_{\max})\) such that \(R<r_s/2\) and \(a+Y+|\delta|<\nu_1R\) on \([0,s]\). Both conditions hold at \(t=0\), so \(\tau>0\). On \([0,\tau)\), we have \(R>0\), because \(a>0\) and \(a+Y+|\delta|<\nu_1R\), and the signs \eqref{eq:structured-escape-planar-signs-global} hold, so \cref{lem:structured-escape-near-bisector} applies. Its first bound and \((a^2+Y^2)/R^2\le\nu_1^2\) give \(\dot R\ge r_s-R-r_s/4\ge r_s/4\). Hence \(R\) is increasing, \(R\ge R_0\), and \(t<4(r_s/2-R_0)/r_s<2\) for \(t<\tau\). Moreover \(7R_0/8\le r_i\le9r_s/16\) on \([0,\tau)\). If \(\tau=T_{\max}\), the solution would stay in a compact subset of \(\Omega\) on a bounded interval, which \cref{lem:relu-kernel-regularity} excludes. Thus \(\tau<\min\{2,T_{\max}\}\), the estimates above extend to \([0,\tau]\) by continuity, and at \(\tau\) either \(R=r_s/2\) or \(a+Y+|\delta|=\nu_1R\).

Since \(\dot R\ge r_s/4\), we have \(\diff s\le(4/r_s)\diff R\), so an integral in time of a nonnegative quantity over \([0,t]\subset[0,\tau]\) is at most \(4/r_s\) times the corresponding integral in \(R\) over \([R_0,R(t)]\). Moreover, by the first bound of \cref{lem:structured-escape-near-bisector},
\[
    \Bigl|\frac1{\dot R}-\frac1{r_s-R}\Bigr| =\frac{|\dot R-(r_s-R)|}{\dot R\,(r_s-R)} \le\frac{8C_{\mathrm{bis}}}{r_s^2}\,\frac{a^2+Y^2}{R^2}.
\]
Since \(1/(R(r_s-R))=(1/R+1/(r_s-R))/r_s\) and \(0\le\log((r_s-R_0)/(r_s-R))\le\log2\) for \(R\le r_s/2\), it follows that
\begin{equation}\label{eq:structured-escape-log-R}
    \Bigl|\mu_\alpha r_s\int_0^t\frac{\diff s}{R}-\mu_\alpha\log\frac{R(t)}{R_0}\Bigr| \le\mu_\alpha\log2+\frac{8C_{\mathrm{bis}}}{r_s} \int_{R_0}^{R(t)}\frac{a^2+Y^2}{R^3}\,\diff R.
\end{equation}

Let \(\tau_Q\) be the supremum of the times \(s\in[0,\tau]\) such that, on \([0,s]\),
\begin{equation}\label{eq:structured-escape-growth-bootstrap}
    a\le Qa_0\Bigl(\frac R{R_0}\Bigr)^{\mu_\alpha}, \qquad Y\le QY_0\Bigl(\frac R{R_0}\Bigr)^{\mu_\alpha/2}.
\end{equation}
If \(Y_0>0\), both inequalities are strict at \(t=0\). If \(Y_0=0\), then \(Y\equiv0\) and the second inequality holds throughout, while the first is strict at \(t=0\). In either case \(\tau_Q>0\), and if \(\tau_Q<\tau\), then, by continuity, the first inequality, or the second with \(Y_0>0\), is an equality at \(\tau_Q\). On \([0,\tau_Q]\), since \(R\ge R_0\), \(\mu_\alpha<1\) and \(a_0,Y_0\le\nu R_0\), \eqref{eq:structured-escape-growth-bootstrap} gives
\[
    \int_{R_0}^{R(t)}\frac{a^2+Y^2}{R^3}\,\diff R \le Q^2\Bigl(\frac{a_0^2}{R_0^{2\mu_\alpha}}\int_{R_0}^\infty R^{2\mu_\alpha-3}\diff R +\frac{Y_0^2}{R_0^{\mu_\alpha}}\int_{R_0}^\infty R^{\mu_\alpha-3}\diff R\Bigr) \le\frac{Q^2\nu^2}{1-\mu_\alpha}.
\]
Similarly, \(R^{2\mu_\alpha-2}\le R_0^{2\mu_\alpha-2}\) and \(R^{\mu_\alpha-2}\le R_0^{\mu_\alpha-2}\) for \(R\ge R_0\) give \(\int_{R_0}^{R(t)}(a^2+Y^2)R^{-2}\diff R\le2Q^2\nu^2R(t)\), and the first bound of \cref{lem:structured-escape-near-bisector} yields
\begin{equation}\label{eq:structured-escape-delta-growth}
    |\delta(t)| \le|\delta_0|+\frac{4C_{\mathrm{bis}}}{r_s} \int_{R_0}^{R(t)}\frac{a^2+Y^2}{R^2}\,\diff R \le\Bigl(\nu+\frac{8C_{\mathrm{bis}}Q^2\nu^2}{r_s}\Bigr)R(t).
\end{equation}
For the term with \(\delta\) in \cref{lem:structured-escape-near-bisector}, we use \(|\delta(s)|\le|\delta_0|+\int_0^s|\dot\delta|\) and exchange the order of integration. Since \(\int_{s'}^t\diff s/R(s)^2\le(4/r_s)\int_{R(s')}^\infty\diff R/R^2 =4/(r_sR(s'))\), we find
\[
    \int_0^t\frac{|\delta|}{R^2}\,\diff s \le\frac4{r_s}\Bigl(\frac{|\delta_0|}{R_0} +\int_0^t\frac{|\dot\delta|}{R}\,\diff s\Bigr) \le\frac4{r_s}\Bigl(\nu+C_{\mathrm{bis}} \int_0^t\frac{a^2+Y^2}{R^3}\,\diff s\Bigr).
\]
Combining these bounds with \eqref{eq:structured-escape-log-R}, the second and third bounds of \cref{lem:structured-escape-near-bisector}, \((a+Y)/R<\nu_1\le1\) and \(t<2\), we obtain, for \(0\le t\le\tau_Q\),
\[
    \Bigl|\log\frac{a(t)}{a_0}-\mu_\alpha\log\frac{R(t)}{R_0}\Bigr| \le\Gamma_1+\Gamma_2(\nu+Q^2\nu^2), \qquad Y(t)^2\le Y_0^2\Bigl(\frac{R(t)}{R_0}\Bigr)^{\mu_\alpha} e^{\Gamma_1+\Gamma_2(\nu+Q^2\nu^2)},
\]
where \(\Gamma_1\coloneqq\mu_\alpha\log2+2C_{\mathrm{bis}}\) and \(\Gamma_2\) depends only on \(\alpha\); the second bound follows from Gr\"onwall's inequality for \(Y^2\). We now choose \(Q\coloneqq2e^{\Gamma_1+1}\) and then \(\nu\in(0,\nu_1/4]\) so small that
\[
    \Gamma_2(\nu+Q^2\nu^2)\le1, \qquad (2Q+1)\nu+\frac{8C_{\mathrm{bis}}Q^2\nu^2}{r_s}\le\frac{\nu_1}2.
\]
Then \(a\le(Q/2)a_0(R/R_0)^{\mu_\alpha}\) and \(Y\le(Q/2)Y_0(R/R_0)^{\mu_\alpha/2}\) on \([0,\tau_Q]\), so at \(\tau_Q\) the first inequality in \eqref{eq:structured-escape-growth-bootstrap} is strict, and so is the second when \(Y_0>0\). Hence \(\tau_Q=\tau\). Moreover, \(a_0(R/R_0)^{\mu_\alpha}\le\nu R\) and \(Y_0(R/R_0)^{\mu_\alpha/2}\le\nu R\), so \eqref{eq:structured-escape-delta-growth} gives \(a+Y+|\delta|\le\nu_1R/2\) on \([0,\tau]\). At \(\tau\), therefore, \(R=r_s/2\), and \(t_*=\tau\in(0,2)\).

The bounds for \(a\) and \(Y\) hold with \(C_{\mathrm{scale}}\ge Q\). For \(\delta(t_*)\), we use the first inequality in \eqref{eq:structured-escape-delta-growth} with \eqref{eq:structured-escape-growth-bootstrap}. The contribution of \(Y^2\) to the integral is at most \(Q^2Y_0^2/((1-\mu_\alpha)R_0)\). For the contribution of \(a^2\), if \(\mu_\alpha<1/2\), then \(a_0^2R_0^{-2\mu_\alpha}\int_{R_0}^{\infty}R^{2\mu_\alpha-2}\diff R=a_0^2/((1-2\mu_\alpha)R_0)\). If \(\mu_\alpha\ge1/2\), then \(R^{2\mu_\alpha-2}\le1/R\) for \(R\le r_s/2\), because \(r_s=2l(\alpha/2)\le1\) by \eqref{eq:def_r}, \eqref{eq:relu-kernel-profile} and \cref{lem:l-monotone-bounds}. Hence \(a_0^2R_0^{-2\mu_\alpha}\int_{R_0}^{r_s/2}R^{2\mu_\alpha-2}\diff R \le a_0^2R_0^{-2\mu_\alpha}\log(1/R_0)\). In this case, the lower bound for \(a\) at \(t_*\) gives \(a_0R_0^{-\mu_\alpha}\le(Q/2)(2/r_s)^{\mu_\alpha}a(t_*) \le(Q/r_s)a(t_*)\). Hence the bound for \(\delta(t_*)\) holds with \(C_{\mathrm{scale}}\) large enough, depending only on \(\alpha\).
\end{proof}

\subsection{Entry into a neighborhood of the bisector saddle}

We fix \(c_0\coloneqq r_s/64\) and consider the segment
\[
    \mathcal C_{c_0}\coloneqq\{((r_s+s)\bar v,(r_s-s)\bar v):|s|\le c_0\}
\]
of the first family in \eqref{eq:in-plane-saddle-form}; it contains \(w_{\mathrm{saddle}}\). For \(|s|\le c_0\), \(\|w_1-(r_s+s)\bar v\|_2^2+\|w_2-(r_s-s)\bar v\|_2^2 =(R+\delta-r_s-s)^2+(R-\delta-r_s+s)^2+N_\perp^2\). Hence
\begin{equation}\label{eq:structured-escape-distance}
    \operatorname{dist}\bigl((w_1,w_2),\mathcal C_{c_0}\bigr)^2 =2(R-r_s)^2+2\min_{|s|\le c_0}(\delta-s)^2+N_\perp^2.
\end{equation}

The analysis in \cref{subsubsec:structured-escape-local-passage} starts once the trajectory is within a fixed distance \(\varepsilon_{\mathrm{entry}}\) of \(\mathcal C_{c_0}\). By \cref{lem:structured-escape-growth}, the quantities \(a\), \(Y\) and \(\delta\) are small when \(R\) reaches \(r_s/2\). Afterwards \(R\) relaxes exponentially to \(r_s\), and over the bounded time this takes, \(a\) and \(Y\) change at most by bounded factors and \(\delta\) changes by \(O(a^2+Y^2)\). The next lemma combines this with \cref{lem:structured-escape-alignment,lem:structured-escape-growth}.

\begin{lemma}[Entry into a neighborhood of the bisector saddle]\label{lem:structured-escape-entry}
Fix \(\alpha\in(0,\pi/2]\), \(C\ge1\) and \(\varepsilon_{\mathrm{entry}}\in(0,c_0]\). There are \(c_{\mathrm{entry}}\in(0,1]\) and \(C_{\mathrm{entry}}\ge1\), depending only on \(\alpha\), \(C\) and \(\varepsilon_{\mathrm{entry}}\), such that the following holds for every \(d\ge2\). Suppose that the initialization is \(C\)-balanced, that \(z_{1,1}(0),z_{2,1}(0)>0\), \(z_{1,2}(0)>0>z_{2,2}(0)\) and \(a_{\mathrm{init}}\ge r_{\mathrm{init}}/(C\sqrt d)\), and that
\begin{equation}\label{eq:structured-escape-entry-smallness}
    r_{\mathrm{init}}\sqrt d\le c_{\mathrm{entry}}, \qquad r_{\mathrm{init}}\sqrt d\, \Bigl(1+\log\frac{\sqrt d}{r_{\mathrm{init}}}+\log d\Bigr) \le c_{\mathrm{entry}}.
\end{equation}
We write \(\varepsilon_{\mathrm{init}}\) for the left side of the second inequality. Then there is a first time \(t_{\mathrm{entry}}\in[0,T_{\max})\) at which \(\operatorname{dist}((w_1,w_2),\mathcal C_{c_0}) \le\varepsilon_{\mathrm{entry}}\). Moreover \(t_{\mathrm{entry}}\le C_{\mathrm{entry}}\), and \(a_{\mathrm e}\coloneqq a(t_{\mathrm{entry}})\) and \(q_{\mathrm e}\coloneqq r_{\mathrm{init}}\sqrt d +a_{\mathrm e}^2(1+\log(1/r_{\mathrm{init}}))\) satisfy
\begin{equation}\label{eq:structured-escape-entry-data}
\begin{gathered}
    C_{\mathrm{entry}}^{-1}a_{\mathrm{init}}r_{\mathrm{init}}^{-\mu_\alpha} \le a_{\mathrm e} \le C_{\mathrm{entry}}a_{\mathrm{init}}r_{\mathrm{init}}^{-\mu_\alpha}, \qquad Y(t_{\mathrm{entry}})^2\le C_{\mathrm{entry}}r_{\mathrm{init}}\sqrt d\, a_{\mathrm e},\\
    |\delta(t_{\mathrm{entry}})|\le C_{\mathrm{entry}}q_{\mathrm e}, \qquad \sup_{0\le t\le t_{\mathrm{entry}}}N_\perp(t) +q_{\mathrm e}\log\frac1{a_{\mathrm e}} \le C_{\mathrm{entry}}\varepsilon_{\mathrm{init}}^{1-\mu_\alpha}.
\end{gathered}
\end{equation}
\end{lemma}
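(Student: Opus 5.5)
The proof chains three stages: \cref{lem:structured-escape-alignment}, then \cref{lem:structured-escape-growth}, then a bounded-time relaxation phase. Let \(\nu\) and \(C_{\mathrm{scale}}\) be the constants of \cref{lem:structured-escape-growth}. We apply \cref{lem:structured-escape-alignment} with this \(\nu\). For \(c_{\mathrm{entry}}\le r_0\) this yields \(\tau_0\le C_{\mathrm{align}}r_{\mathrm{init}}\) with \(R(\tau_0)=Br_{\mathrm{init}}\), \(a(\tau_0)\asymp a_{\mathrm{init}}\), \(Y(\tau_0)+|\delta(\tau_0)|\le C_{\mathrm{align}}r_{\mathrm{init}}\), and the smallness \(a+Y+|\delta|\le\nu R\) at \(\tau_0\). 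We then restart the flow at \(\tau_0\) and apply \cref{lem:structured-escape-growth} with \(R_0=Br_{\mathrm{init}}<r_s/2\). This gives a time \(t_*<2\) after \(\tau_0\) with \(R=r_s/2\) and
\[
a(t_*)\asymp a_{\mathrm{init}}(r_s/(2Br_{\mathrm{init}}))^{\mu_\alpha}\asymp a_{\mathrm{init}}r_{\mathrm{init}}^{-\mu_\alpha},
\qquad Y(t_*)\lesssim r_{\mathrm{init}}^{1-\mu_\alpha/2}.
\]
It also gives \(|\delta(t_*)|\lesssim r_{\mathrm{init}}+a(t_*)^2\log(1/r_{\mathrm{init}})\).

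For the Y bound in \eqref{eq:structured-escape-entry-data}, we write \(Y^2\lesssim r_{\mathrm{init}}^{2-\mu_\alpha}=r_{\mathrm{init}}\cdot r_{\mathrm{init}}^{1-\mu_\alpha}\). Since \(a_{\mathrm{init}}\ge r_{\mathrm{init}}/(C\sqrt d)\), we have \(a(t_*)\gtrsim r_{\mathrm{init}}^{1-\mu_\alpha}/\sqrt d\), and hence \(Y^2\lesssim r_{\mathrm{init}}\sqrt d\,a(t_*)\). The upper bound on \(N_\perp\) uses \(a_{\mathrm{init}}\le2r_{\mathrm{init}}\), so \(a\lesssim r_{\mathrm{init}}^{1-\mu_\alpha}\le\varepsilon_{\mathrm{init}}^{1-\mu_\alpha}\). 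The term \(Y\lesssim(r_{\mathrm{init}}\sqrt d\,a)^{1/2}\) is bounded similarly.

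The third stage runs from \(t_*\) until \(R\) is within \(\varepsilon_{\mathrm{entry}}/4\) of \(r_s\). Here \cref{lem:structured-escape-near-bisector} still applies. We have \(R\in[r_s/2,2r_s]\), and the ratios stay small because \(a,Y\) grow at most exponentially at a bounded rate, by the second and third bounds of that lemma with \(R\ge r_s/2\). Thus \(\dot R=r_s-R+O(a^2+Y^2)\), and \(|R-r_s|\) falls below \(\varepsilon_{\mathrm{entry}}/4\) within a time \(T_e\) depending only on \(\alpha,\varepsilon_{\mathrm{entry}}\). On this interval, \(a\) and \(Y\) change by factors \(e^{O(T_e)}\), and \(\delta\) changes by \(O(T_e\,\sup(a^2+Y^2))\). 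We then use \eqref{eq:structured-escape-distance}. Since \(|\delta|\) is small, \(\min_{|s|\le c_0}(\delta-s)^2=0\), so the distance is at most \(\sqrt2|R-r_s|+N_\perp\). This is below \(\varepsilon_{\mathrm{entry}}\) once \(c_{\mathrm{entry}}\) is small enough that \(N_\perp\lesssim\varepsilon_{\mathrm{init}}^{1-\mu_\alpha}\le\varepsilon_{\mathrm{entry}}/4\). Hence the first entry time exists and is at most \(C_{\mathrm{align}}r_{\mathrm{init}}+2+T_e\). The bounds at \(t_{\mathrm{entry}}\) then follow from those at \(t_*\) up to constants. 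For \(\delta\), we get \(|\delta|\lesssim r_{\mathrm{init}}+a_{\mathrm e}^2(1+\log(1/r_{\mathrm{init}}))+Y^2\), and \(Y^2\lesssim r_{\mathrm{init}}\sqrt d\), so \(|\delta|\lesssim q_{\mathrm e}\). Also \(N_\perp\le a+Y\) over all times, including the earlier stages, where the lemma bounds already control \(a\) and \(Y\).

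The main obstacle is the final bound \(q_{\mathrm e}\log(1/a_{\mathrm e})\lesssim\varepsilon_{\mathrm{init}}^{1-\mu_\alpha}\). We bound \(\log(1/a_{\mathrm e})\) using \(a_{\mathrm e}\gtrsim r_{\mathrm{init}}^{1-\mu_\alpha}/\sqrt d\), which gives \(\log(1/a_{\mathrm e})\lesssim1+\log(\sqrt d/r_{\mathrm{init}})+\log d\). The term \(r_{\mathrm{init}}\sqrt d\log(1/a_{\mathrm e})\) is then at most \(\varepsilon_{\mathrm{init}}\le\varepsilon_{\mathrm{init}}^{1-\mu_\alpha}\). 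The term \(a_{\mathrm e}^2(1+\log(1/r_{\mathrm{init}}))\log(1/a_{\mathrm e})\) uses \(a_{\mathrm e}\lesssim r_{\mathrm{init}}^{1-\mu_\alpha}\). One factor \(a_{\mathrm e}\) together with the two logarithms is absorbed into \(\varepsilon_{\mathrm{init}}\) through \(r_{\mathrm{init}}^{1-\mu_\alpha}\log^2\lesssim\) a constant. The remaining factor is at most \(\varepsilon_{\mathrm{init}}^{1-\mu_\alpha}\). The delicate point is handling these logarithms via the elementary bound \(x^{\epsilon}\log^2(1/x)\le C_\epsilon\), with \(\epsilon\) tied to \(\mu_\alpha<1\) from \eqref{eq:structured-escape-mu-range}.
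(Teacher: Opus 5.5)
Your three-stage plan follows the paper's proof. You apply \cref{lem:structured-escape-alignment} with the $\nu$ of \cref{lem:structured-escape-growth}, restart the growth lemma at $\tau_0$ with $R_0=Br_{\mathrm{init}}$, and finish with a relaxation phase of fixed length controlled by \cref{lem:structured-escape-near-bisector}. The paper takes this length to be $T_2=\log(2r_s/\varepsilon_{\mathrm{entry}})$, and the closing logarithmic bookkeeping is the same. Your derivation of $Y^2\lesssim r_{\mathrm{init}}\sqrt d\,a$, from $Y^2\lesssim r_{\mathrm{init}}^{2-\mu_\alpha}$ and $a(t_*)\gtrsim r_{\mathrm{init}}^{1-\mu_\alpha}/\sqrt d$, is an equivalent variant of the paper's ratio bound $Y^2/a\le C_{\mathrm{scale}}^3Y_0^2/a_0$.

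There is, however, one missing step. You show only that the trajectory is within $\varepsilon_{\mathrm{entry}}$ of $\mathcal C_{c_0}$ at some time of the third stage, so $t_{\mathrm{entry}}\le\tau_0+t_*+T_e$. You then read off the data at $t_{\mathrm{entry}}$ ``from those at $t_*$'', and that transfer needs $t_{\mathrm{entry}}\ge\tau_0+t_*$. If the first entry happened earlier, $a_{\mathrm e}$ would only be of order $a_{\mathrm{init}}$. Then the lower bound $a_{\mathrm e}\gtrsim a_{\mathrm{init}}r_{\mathrm{init}}^{-\mu_\alpha}$, which the escape-time theorem relies on, would be unproved.

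The fix is short. Since $R(0)\le r_{\mathrm{init}}<Br_{\mathrm{init}}$, the first-hitting definitions give $R<Br_{\mathrm{init}}<r_s/4$ on $[0,\tau_0)$ and $R<r_s/2$ on $[\tau_0,\tau_0+t_*)$. Then \eqref{eq:structured-escape-distance} gives $\operatorname{dist}\ge\sqrt2\,(r_s-R)>r_s/\sqrt2>\varepsilon_{\mathrm{entry}}$ on that interval, because $\varepsilon_{\mathrm{entry}}\le c_0=r_s/64$.

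Two smaller points should also be made explicit. First, in the third stage you need a bootstrap keeping $a+Y+|\delta|$ small, together with the continuation argument of \cref{lem:relu-kernel-regularity}, to know that $\tau_0+t_*+T_e<T_{\max}$. That argument uses that the norms stay bounded away from $0$ and $\infty$ by the first claim of \cref{lem:structured-escape-near-bisector}. Second, absorbing $\log(1/a_{\mathrm e})$ into powers of $\log(1/r_{\mathrm{init}})$ uses $d\le r_{\mathrm{init}}^{-2}$, which follows from $r_{\mathrm{init}}\sqrt d\le1$. This is how the paper gets $1+\log(\sqrt d/r_{\mathrm{init}})+\log d\le4(1+\log(1/r_{\mathrm{init}}))$ before applying $\sup_{0<r\le1}r^{1-\mu_\alpha}(1+\log(1/r))^2\le4/(1-\mu_\alpha)^2$.
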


\begin{proof}[Proof of \cref{lem:structured-escape-entry}]
Let \(\nu\) and \(C_{\mathrm{scale}}\) be the constants of \cref{lem:structured-escape-growth}, let \(r_0\), \(B\) and \(C_{\mathrm{align}}\) be the constants of \cref{lem:structured-escape-alignment} for \(\alpha\), \(C\) and this \(\nu\), and let \(C_{\mathrm{bis}}\) be the constant of \cref{lem:structured-escape-near-bisector}. The constants \(C_1,\ldots,C_6\) below depend only on \(\alpha\), \(C\) and \(\varepsilon_{\mathrm{entry}}\). We require \(c_{\mathrm{entry}}\le r_0\); a further condition is imposed below. Since \(c_{\mathrm{entry}}\le1\), the first inequality in \eqref{eq:structured-escape-entry-smallness} gives \(r_{\mathrm{init}}\le r_{\mathrm{init}}\sqrt d\le1\); hence \(\log(\sqrt d/r_{\mathrm{init}})\ge0\) and \(r_{\mathrm{init}}\sqrt d\le\varepsilon_{\mathrm{init}}\le c_{\mathrm{entry}}\).

We first follow the trajectory until \(R\) reaches \(r_s/2\). By \cref{lem:structured-escape-alignment}, the time \(\tau_0\) exists, and \(R(\tau_0)=Br_{\mathrm{init}}\le Br_0<r_s/4\). We apply \cref{lem:structured-escape-growth} to the solution \(t\mapsto w(\tau_0+t)\); its hypotheses hold by \eqref{eq:structured-escape-planar-signs-global} and \cref{lem:structured-escape-alignment}. Let \(t_1\coloneqq\tau_0+t_*\) be the first time after \(\tau_0\) at which \(R=r_s/2\), and let \(R_0=Br_{\mathrm{init}}\), \(a_0\), \(Y_0\), \(\delta_0\) be the values of \(R\), \(a\), \(Y\), \(\delta\) at \(\tau_0\). Then \(C_{\mathrm{align}}^{-1}a_{\mathrm{init}}\le a_0\le C_{\mathrm{align}}a_{\mathrm{init}}\) and \(Y_0+|\delta_0|\le C_{\mathrm{align}}r_{\mathrm{init}}\). We also use \(a_{\mathrm{init}}\le r_1(0)+r_2(0)=2r_{\mathrm{init}}\), \(a_{\mathrm{init}}\ge r_{\mathrm{init}}/(C\sqrt d)\), \(r_s/(2R_0)=r_s/(2Br_{\mathrm{init}})\) and \(r_s/(2B)<1\). On \([0,\tau_0]\) we bound \(a\) and \(Y\) by \cref{lem:structured-escape-alignment}, and on \([\tau_0,t_1]\) we use \((R/R_0)^{\mu_\alpha}\le(r_s/(2Br_{\mathrm{init}}))^{\mu_\alpha} \le r_{\mathrm{init}}^{-\mu_\alpha}\). For \(\delta(t_1)\), we use \((a_0^2+Y_0^2)/R_0\le5C_{\mathrm{align}}^2r_{\mathrm{init}}/B\) and \(\log(1/R_0)\le\log(1/r_{\mathrm{init}})\). With these facts, \cref{lem:structured-escape-growth} gives
\begin{equation}\label{eq:structured-escape-at-rs-half}
\begin{gathered}
    C_1^{-1}a_{\mathrm{init}}r_{\mathrm{init}}^{-\mu_\alpha}\le a(t_1) \le C_1a_{\mathrm{init}}r_{\mathrm{init}}^{-\mu_\alpha}, \qquad \frac{Y(t_1)^2}{a(t_1)}\le C_{\mathrm{scale}}^3\frac{Y_0^2}{a_0} \le C_1r_{\mathrm{init}}\sqrt d,\\
    a(t)+Y(t)\le C_1r_{\mathrm{init}}^{1-\mu_\alpha}\quad(0\le t\le t_1), \qquad |\delta(t_1)|\le C_1\Bigl(r_{\mathrm{init}} +a(t_1)^2\Bigl(1+\log\frac1{r_{\mathrm{init}}}\Bigr)\Bigr).
\end{gathered}
\end{equation}
Since \(\sup_{0<r\le1}r^{1-\mu_\alpha}(1+\log(1/r))\le1/(1-\mu_\alpha)\), these bounds give
\begin{equation}\label{eq:structured-escape-small-at-rs-half}
    a(t_1)+Y(t_1)+|\delta(t_1)|\le C_2r_{\mathrm{init}}^{1-\mu_\alpha}.
\end{equation}

Next we follow the trajectory from \(t_1\) to \(t_{\mathrm{entry}}\). We consider the region where \(r_s/4<R<2r_s\) and \(a+Y+|\delta|<\nu_0r_s/4\). There \cref{lem:structured-escape-near-bisector} applies, and it gives
\[
    |\dot R-(r_s-R)|+|\dot\delta|\le\frac{16C_{\mathrm{bis}}}{r_s^2}(a^2+Y^2), \qquad |\dot a|\le C_3a, \qquad \frac{\diff}{\diff t}Y^2\le C_3Y^2,
\]
with \(C_3\coloneqq(3+C_{\mathrm{bis}}(1+r_s))/r_s\), because \(2\sin(\alpha/2)/(\pi R)\le3/r_s\), \((a+Y)/R\le1\), \(|\delta|/R^2\le4\nu_0/r_s\) and \((a^2+Y^2)/R^3\le4\nu_0^2/r_s\) in this region. Let \(T_2\coloneqq\log(2r_s/\varepsilon_{\mathrm{entry}})>0\), \(C_4\coloneqq e^{C_3T_2}(1+16C_{\mathrm{bis}}r_s^{-2}(1+T_2)e^{C_3T_2})\) and \(S\coloneqq a(t_1)+Y(t_1)+|\delta(t_1)|\). We now impose the remaining condition on \(c_{\mathrm{entry}}\): \(C_2c_{\mathrm{entry}}^{1-\mu_\alpha}\le\min\{1,\nu_0r_s/(8C_4), \varepsilon_{\mathrm{entry}}/(8C_4)\}\). By \eqref{eq:structured-escape-small-at-rs-half} and \(r_{\mathrm{init}}\le c_{\mathrm{entry}}\), \(S\) satisfies the same bound, so \(w(t_1)\) lies in the region.

Let \(\tau_1\) be the supremum of the times \(t\in[t_1,T_{\max})\) with \(t\le t_1+T_2\) such that the trajectory lies in the region on \([t_1,t]\). For \(t_1\le t<\tau_1\), Gr\"onwall's inequality gives \(a(t)+Y(t)\le e^{C_3T_2}(a(t_1)+Y(t_1))\) and \(e^{-C_3T_2}a(t_1)\le a(t)\le e^{C_3T_2}a(t_1)\). Integrating the bound for \(\dot\delta\), and applying the variation-of-constants formula to \(\frac{\diff}{\diff t}(R-r_s)=-(R-r_s)+(\dot R-(r_s-R))\) with \(R(t_1)-r_s=-r_s/2\), we obtain
\[
    |\delta(t)-\delta(t_1)|+ \Bigl|R(t)-r_s+\frac{r_s}2e^{-(t-t_1)}\Bigr| \le\frac{16C_{\mathrm{bis}}}{r_s^2}(1+T_2)e^{2C_3T_2} \bigl(a(t_1)+Y(t_1)\bigr)^2.
\]
Since \(S\le1\), the last two bounds give, for \(t_1\le t<\tau_1\),
\begin{equation}\label{eq:structured-escape-transfer}
    a(t)+Y(t)+|\delta(t)|\le C_4S, \qquad \Bigl|R(t)-r_s+\frac{r_s}2e^{-(t-t_1)}\Bigr|\le C_4S^2.
\end{equation}
Hence, on \([t_1,\tau_1)\), we have \(a+Y+|\delta|\le\nu_0r_s/8\) and \(3r_s/8\le R\le9r_s/8\), since \(C_4S^2\le C_4S\le r_s/8\). By \cref{lem:structured-escape-near-bisector}, these bounds also give \(21r_s/64\le r_i\le81r_s/64\). If \(\tau_1=T_{\max}\), the solution would therefore stay in a compact subset of \(\Omega\) on the bounded interval \([t_1,T_{\max})\), which \cref{lem:relu-kernel-regularity} excludes. Hence \(\tau_1<T_{\max}\), the bounds extend to \(\tau_1\) by continuity, and they keep the trajectory inside the region at \(\tau_1\). Thus \(\tau_1=t_2\coloneqq t_1+T_2\), and \eqref{eq:structured-escape-transfer} holds on \([t_1,t_2]\).

At \(t_2\), \(\frac{r_s}2e^{-T_2}=\varepsilon_{\mathrm{entry}}/4\), so \(|R-r_s|\le\varepsilon_{\mathrm{entry}}/4+\varepsilon_{\mathrm{entry}}/8\). By \eqref{eq:structured-escape-distance} with \(s=0\) and \(N_\perp\le a+Y\),
\[
    \operatorname{dist}\bigl((w_1,w_2),\mathcal C_{c_0}\bigr)^2 \le2(R-r_s)^2+2\delta^2+(a+Y)^2 \le2\Bigl(\frac{3\varepsilon_{\mathrm{entry}}}8\Bigr)^2 +3\Bigl(\frac{\varepsilon_{\mathrm{entry}}}8\Bigr)^2 <\varepsilon_{\mathrm{entry}}^2.
\]
Before \(t_1\), we have \(R<r_s/2\), by the definitions of \(\tau_0\) and \(t_*\), and \eqref{eq:structured-escape-distance} gives \(\operatorname{dist}((w_1,w_2),\mathcal C_{c_0})>r_s/\sqrt2 >\varepsilon_{\mathrm{entry}}\). Hence \(t_{\mathrm{entry}}\) exists and \(t_1\le t_{\mathrm{entry}}\le t_2\le C_{\mathrm{align}}+2+T_2\).

It remains to prove \eqref{eq:structured-escape-entry-data}. Let \(C_5\coloneqq C_1C_4^2\). On \([t_1,t_{\mathrm{entry}}]\), \(a\) changes at most by the factor \(e^{C_3T_2}\le C_4\), \(Y^2\) increases at most by this factor, and \(a+Y\le C_4(a(t_1)+Y(t_1))\). With \eqref{eq:structured-escape-at-rs-half}, this gives the bounds for \(a_{\mathrm e}\) and \(Y(t_{\mathrm{entry}})^2\) with \(C_5\) in place of \(C_{\mathrm{entry}}\), and the bound \(N_\perp\le a+Y\le C_5r_{\mathrm{init}}^{1-\mu_\alpha}\) on \([0,t_{\mathrm{entry}}]\). By the bound for \(\dot\delta\) above, \(|\delta(t_{\mathrm{entry}})-\delta(t_1)|\le C_4(a(t_1)^2+Y(t_1)^2)\), and \(Y(t_1)^2\le C_1r_{\mathrm{init}}\sqrt d\,a(t_1)\le C_1r_{\mathrm{init}}\sqrt d\) because \(a(t_1)\le S\le1\). Together with \eqref{eq:structured-escape-at-rs-half} and \(a(t_1)\le e^{C_3T_2}a_{\mathrm e}\le C_4a_{\mathrm e}\), this gives the bound for \(\delta(t_{\mathrm{entry}})\).

For the last bound in \eqref{eq:structured-escape-entry-data}, we note that \(a_{\mathrm e}\le C_4S\le1\) and that \(r_{\mathrm{init}}^{-\mu_\alpha}\ge1\), so \(a_{\mathrm e}\ge C_5^{-1}a_{\mathrm{init}} \ge r_{\mathrm{init}}/(CC_5\sqrt d)\). Hence, with \(C_6\coloneqq1+\log(CC_5)\), \(\log(1/a_{\mathrm e})\le\log(CC_5)+\log(\sqrt d/r_{\mathrm{init}}) \le C_6(1+\log(\sqrt d/r_{\mathrm{init}})+\log d)\), and \(r_{\mathrm{init}}\sqrt d\log(1/a_{\mathrm e})\le C_6\varepsilon_{\mathrm{init}}\). Since \(r_{\mathrm{init}}\sqrt d\le1\), we have \(d\le r_{\mathrm{init}}^{-2}\) and \(1+\log(\sqrt d/r_{\mathrm{init}})+\log d\le4(1+\log(1/r_{\mathrm{init}}))\). With \(a_{\mathrm e}\le2C_5r_{\mathrm{init}}^{1-\mu_\alpha}\), it follows that
\[
    a_{\mathrm e}^2\Bigl(1+\log\frac1{r_{\mathrm{init}}}\Bigr) \log\frac1{a_{\mathrm e}} \le16C_5^2C_6\,r_{\mathrm{init}}^{2-2\mu_\alpha} \Bigl(1+\log\frac1{r_{\mathrm{init}}}\Bigr)^2 \le\frac{64C_5^2C_6}{(1-\mu_\alpha)^2}\,r_{\mathrm{init}}^{1-\mu_\alpha},
\]
where we used \(\sup_{0<r\le1}r^{1-\mu_\alpha}(1+\log(1/r))^2\le4/(1-\mu_\alpha)^2\). Since \(r_{\mathrm{init}}\le\varepsilon_{\mathrm{init}}\le1\), the terms \(C_5r_{\mathrm{init}}^{1-\mu_\alpha}\), \(C_6\varepsilon_{\mathrm{init}}\) and the last bound are all at most a constant times \(\varepsilon_{\mathrm{init}}^{1-\mu_\alpha}\). This proves \eqref{eq:structured-escape-entry-data} with \(C_{\mathrm{entry}}\) large enough.
\end{proof}

\section{Logarithmic escape from the bisector saddle}

\subsection{Hessian at the bisector saddle}\label{subsec:bisector-saddle-hessian}

\begin{proof}[Proof of \cref{lem:hessian_cal}]
At \(w_{\mathrm{saddle}}\), both students equal \(w_s=r_s\bar v\). By the continuous extension in \cref{lem:relu-two-neuron-hessian}, the off-diagonal blocks \(Q_2(w_s,w_s)\) of the Hessian equal \(\frac12I_d\), and the term \(Q_1(w_s,w_s)\) in the diagonal blocks vanishes. As recalled at the start of \cref{sec:signed-transverse-exclusion}, \(v_1=\cos(\alpha/2)\bar v+\sin(\alpha/2)v_\Delta\) and \(v_2=\cos(\alpha/2)\bar v-\sin(\alpha/2)v_\Delta\). Hence the angle between \(w_s\) and each teacher is \(\alpha/2\), \(\bar n_{v_1,w_s}=v_\Delta\) and \(\bar n_{v_2,w_s}=-v_\Delta\), so \(Q_1(w_s,v_1)=Q_1(w_s,v_2)=(\mu_\alpha/4)(I_d-\bar v\bar v^\top +v_\Delta v_\Delta^\top)\), where \(\mu_\alpha=2\sin(\alpha/2)/(\pi r_s)\) is the rate defined before \cref{thm:componentwise_thin_escape}. Both diagonal blocks therefore equal
\[
    A\coloneqq\frac12I_d-\frac{\mu_\alpha}2 \bigl(I_d-\bar v\bar v^\top+v_\Delta v_\Delta^\top\bigr).
\]
The vectors \(\bar v\), \(v_\Delta\) and \(u_3,\ldots,u_d\) are eigenvectors of \(A\), with eigenvalues \(\frac12\), \(\frac12-\mu_\alpha\) and \(\frac12-\mu_\alpha/2\), respectively, because \(I_d-\bar v\bar v^\top+v_\Delta v_\Delta^\top\) maps them to \(0\), \(2v_\Delta\) and \(u_j\). The Hessian maps \((x,x)\) to \(((A+\frac12I_d)x,(A+\frac12I_d)x)\) and \((x,-x)\) to \(((A-\frac12I_d)x,-(A-\frac12I_d)x)\). Hence the vectors \((x,\pm x)\) with \(x\in\{\bar v,v_\Delta,u_3,\ldots,u_d\}\) are eigenvectors of the Hessian, with the following eigenvalues:
\[
\begin{array}{c|c|c}
    x&\text{eigenvalue of }(x,x)&\text{eigenvalue of }(x,-x)\\ \hline
    \bar v&1&0\\
    v_\Delta&1-\mu_\alpha&-\mu_\alpha\\
    u_j,\ 3\le j\le d&1-\mu_\alpha/2&-\mu_\alpha/2
\end{array}
\]
By \eqref{eq:structured-escape-mu-range}, \(\mu_\alpha<1\), so the eigenvalues in the middle column are positive. The \(2d\) vectors \((x,\pm x)\) are orthogonal, so they form a basis of \((\RR^d)^2\), and the table gives the negative, positive and zero eigenspaces stated in \cref{lem:hessian_cal}.
\end{proof}

\subsection{Proof of \texorpdfstring{\cref{thm:componentwise_thin_escape}}{the logarithmic escape theorem}}\label{subsubsec:structured-escape-local-passage}

By \cref{lem:structured-escape-entry}, the trajectory comes within a small fixed distance of the segment \(\mathcal C_{c_0}\) after a bounded time, with a small planar splitting amplitude \(a_{\mathrm e}\). Afterwards \(a\) grows at a rate close to \(\mu_\alpha\) until \(N_\perp\) reaches \(\varepsilon\), which takes a time of order \(\mu_\alpha^{-1}\log(1/a_{\mathrm e})\). Over so long a time, a constant error in the growth rate would change the escape time by an amount that grows with \(\log(1/a_{\mathrm e})\), so we need errors that vanish at the saddle. Once the rate \(2\sin(\alpha/2)/(\pi R)\) in \cref{lem:structured-escape-near-bisector} is compared with \(\mu_\alpha\), the bounds of that lemma take this form near \(w_{\mathrm{saddle}}\).

\begin{lemma}[Equations of motion near the bisector saddle]\label{lem:structured-escape-local-saddle-estimates}
Fix \(\alpha\in(0,\pi/2]\), and let \(\nu_0\) be as in \cref{lem:structured-escape-near-bisector}. There is \(C_{\mathrm{local}}\ge1\), depending only on \(\alpha\), such that the following holds for every \(d\ge2\). Let \(w(0)\in\Omega\), and let \(t\in[0,T_{\max})\) be a time at which \(z_{1,2}>0>z_{2,2}\) and
\begin{equation}\label{eq:structured-escape-local-domain}
    |R-r_s|+|\delta|+a+Y\le\nu_0r_s.
\end{equation}
Then \(r_s/2\le r_i\le2r_s\) for \(i=1,2\), and
\begin{align}
    |\dot a-\mu_\alpha a| &\le C_{\mathrm{local}}\bigl(|R-r_s|+|\delta|+a+Y\bigr)a,
    \label{eq:structured-escape-local-a}\\
    \frac{\diff}{\diff t}Y^2 &\le\bigl(\mu_\alpha+C_{\mathrm{local}}(|R-r_s|+|\delta|+a+Y)\bigr)Y^2,
    \label{eq:structured-escape-local-Y}\\
    |\dot R-(r_s-R)|+|\dot\delta| &\le C_{\mathrm{local}}(a^2+Y^2).
    \label{eq:structured-escape-local-radial}
\end{align}
\end{lemma}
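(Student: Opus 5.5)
The plan is to reduce everything to \cref{lem:structured-escape-near-bisector} and then replace the $R$-dependent rate by $\mu_\alpha$. First I check that the hypotheses of that lemma hold. Since $r_s=2l(\alpha/2)\le1$ and \eqref{eq:structured-escape-local-domain} gives $|R-r_s|\le\nu_0r_s\le r_s/8$, we have $7r_s/8\le R\le9r_s/8\le2r_s$, so $0<R\le2r_s$. Moreover
\[
    a+Y+|\delta|\le\nu_0r_s\le\frac{8}{7}\nu_0R.
\]
This is slightly larger than $\nu_0R$, so the lemma does not apply verbatim. I would handle this by applying \cref{lem:structured-escape-near-bisector} with a smaller constant: its proof works with any $\nu\le\nu_0$ after adjusting $C_{\mathrm{bis}}$. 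Alternatively, I would tighten the domain, using $\nu_0r_s\le\nu_0\cdot\frac87R$, and observe that the only places where $\nu_0$ enters that proof are $\rho_i\le(\alpha/8)z_{i,1}$ and $|\delta|+a+Y\le R/8$. The former holds with $\nu_0\cdot\frac87\cdot\frac87\le\alpha/8\cdot(64/49)(7/64)\cdot\ldots$, which needs checking. Failing either route, I would simply repeat the short argument of that lemma under \eqref{eq:structured-escape-local-domain}. Once this is settled, the lemma gives $7R/8\le r_i\le9R/8$, which yields $r_s/2\le r_i\le2r_s$.

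Next, in the conclusions of \cref{lem:structured-escape-near-bisector}, I bound every power of $1/R$ by a constant depending on $\alpha$, using $R\ge7r_s/8$. The error bracket $\frac{a+Y}{R}+\frac{|\delta|}{R^2}+\frac{a^2+Y^2}{R^3}$ is then at most $C_\alpha(a+Y+|\delta|)$, where I use $a^2+Y^2\le(a+Y)\nu_0r_s$. The radial estimate gives \eqref{eq:structured-escape-local-radial} directly, since $(a^2+Y^2)/R^2\le C_\alpha(a^2+Y^2)$.

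It remains to compare the main rate $2\sin(\alpha/2)/(\pi R)$ with $\mu_\alpha=2\sin(\alpha/2)/(\pi r_s)$. The difference is
\[
    \frac{2\sin(\alpha/2)}{\pi}\cdot\frac{|R-r_s|}{Rr_s}\le C_\alpha|R-r_s|.
\]
Multiplying by $a$ and inserting this into the second estimate of \cref{lem:structured-escape-near-bisector} gives \eqref{eq:structured-escape-local-a}. The same insertion into the third estimate, for $Y^2$, gives \eqref{eq:structured-escape-local-Y}. Taking $C_{\mathrm{local}}$ to be the maximum of the resulting constants, and noting that none depends on $d$, completes the proof. The only real obstacle is the constant mismatch in the first step, and it is purely bookkeeping.
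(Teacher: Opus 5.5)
Your approach is the paper's approach: apply \cref{lem:structured-escape-near-bisector}, bound the inverse powers of $R$ using $R\ge 7r_s/8$ and $a^2+Y^2\le(a+Y)\nu_0r_s$, and compare the rate $2\sin(\alpha/2)/(\pi R)=\mu_\alpha r_s/R$ with $\mu_\alpha$ at a cost of $O(|R-r_s|)$. All of those steps are correct, and the bounds $r_s/2\le r_i\le 2r_s$ follow as you say.

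The one step you leave open is checking the hypothesis $a+Y+|\delta|\le\nu_0R$. This is not a real obstacle. The factor $8/7$ appears only because you dropped the $|R-r_s|$ term from \eqref{eq:structured-escape-local-domain}. Keep that term. If $R\ge r_s$, then $a+Y+|\delta|\le\nu_0r_s\le\nu_0R$. If $R<r_s$, then $a+Y+|\delta|\le\nu_0r_s-(r_s-R)\le\nu_0R$, since this last inequality is equivalent to $(1-\nu_0)(r_s-R)\ge0$. So \cref{lem:structured-escape-near-bisector} applies verbatim, and this is exactly how the paper checks it.

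Your first proposed workaround would not work. Replacing $\nu_0$ by a smaller $\nu$ makes the hypothesis of \cref{lem:structured-escape-near-bisector} harder to satisfy, so it cannot absorb the factor $8/7$. What you would actually need is that lemma with the larger constant $\tfrac87\nu_0$. That version does hold: one gets $\rho_i/z_{i,1}\le\tfrac87\nu_0\le\alpha/8$ and $z_{i,1}\ge 6R/7$. But it means re-running the whole proof with new numerical constants, which the observation above makes unnecessary.
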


\begin{proof}[Proof of \cref{lem:structured-escape-local-saddle-estimates}]
Let \(C_{\mathrm{bis}}\) be the constant of \cref{lem:structured-escape-near-bisector}. We first check the hypotheses of that lemma. Since \(\nu_0\le1/8\), \eqref{eq:structured-escape-local-domain} gives \(7r_s/8\le R\le9r_s/8\). If \(R\ge r_s\), then \(a+Y+|\delta|\le\nu_0r_s\le\nu_0R\); if \(R<r_s\), then \(a+Y+|\delta|\le\nu_0r_s-(r_s-R)\le\nu_0R\), because \(\nu_0\le1\). Hence the lemma applies. Its first claim gives \(r_s/2\le7R/8\le r_i\le9R/8\le2r_s\), and its first bound, together with \(R\ge7r_s/8\), gives \eqref{eq:structured-escape-local-radial} whenever \(C_{\mathrm{local}}\ge2C_{\mathrm{bis}}/r_s^2\).

For the other two bounds, we compare the error terms of \cref{lem:structured-escape-near-bisector} with \(|R-r_s|+|\delta|+a+Y\). Since \(R\ge7r_s/8\) and \(a+Y\le r_s/8\),
\[
    \frac{a+Y}R+\frac{|\delta|}{R^2}+\frac{a^2+Y^2}{R^3} \le\frac{2(a+Y)}{r_s}+\frac{2|\delta|}{r_s^2}+\frac{a+Y}{4r_s^2} \le3\bigl(1+r_s^{-2}\bigr)\bigl(|\delta|+a+Y\bigr),
\]
where the last step uses \(1/r_s\le1+r_s^{-2}\). Moreover, \(2\sin(\alpha/2)/(\pi R)=\mu_\alpha r_s/R\), and \(\mu_\alpha<1\) by \eqref{eq:structured-escape-mu-range}, so
\[
    \Bigl|\frac{2\sin(\alpha/2)}{\pi R}-\mu_\alpha\Bigr| =\frac{\mu_\alpha|R-r_s|}R \le\frac{2|R-r_s|}{r_s} \le2\bigl(1+r_s^{-2}\bigr)|R-r_s|.
\]
Substituting these two bounds into the second and third bounds of \cref{lem:structured-escape-near-bisector} gives \eqref{eq:structured-escape-local-a} and \eqref{eq:structured-escape-local-Y} with \(C_{\mathrm{local}}\coloneqq4C_{\mathrm{bis}}(1+r_s^{-2})\), since \(C_{\mathrm{bis}}\ge1\). This constant also satisfies the requirement above.
\end{proof}

For each \(\alpha\in(0,\pi/2]\), with \(\nu_0\) and \(C_{\mathrm{local}}\) as in \cref{lem:structured-escape-near-bisector} and \cref{lem:structured-escape-local-saddle-estimates}, we set \(\zeta\coloneqq\min\{\nu_0r_s/8,\mu_\alpha/(16C_{\mathrm{local}})\}\) and \(\varepsilon_{\mathrm{entry}}\coloneqq\sqrt2\,\zeta/100\), which depend only on \(\alpha\). This choice is made so that, whenever \(|R-r_s|+|\delta|+a+Y\le4\zeta\), the inequality \eqref{eq:structured-escape-local-domain} holds and \eqref{eq:structured-escape-local-a} gives \(\dot a\ge\mu_\alpha a/2\). Since \(\nu_0\le1/8\), we have \(\varepsilon_{\mathrm{entry}}<\zeta\le r_s/64=c_0\), so \cref{lem:structured-escape-entry} applies with this \(\varepsilon_{\mathrm{entry}}\). Below, \(t_{\mathrm{entry}}\), \(a_{\mathrm e}\) and \(q_{\mathrm e}\) are as in that lemma.

\begin{lemma}[Escape from a neighborhood of the bisector saddle]\label{lem:structured-escape-local-passage}
Fix \(\alpha\in(0,\pi/2]\) and \(C\ge1\), and let \(c_{\mathrm{entry}}\) and \(C_{\mathrm{entry}}\) be the constants of \cref{lem:structured-escape-entry} for \(\alpha\), \(C\) and \(\varepsilon_{\mathrm{entry}}\). There is \(\varepsilon_0>0\), depending only on \(\alpha\) and \(C\), such that for every \(\varepsilon\in(0,\varepsilon_0]\) there is \(c_{\mathrm{pass}}\in(0,c_{\mathrm{entry}}]\), depending only on \(\alpha\), \(C\) and \(\varepsilon\), with the following property. For every \(d\ge2\) and every initialization that satisfies the hypotheses of \cref{lem:structured-escape-entry} with \(c_{\mathrm{pass}}\) in place of \(c_{\mathrm{entry}}\), the time \(T_\varepsilon\) is finite and
\begin{equation}\label{eq:structured-escape-local-passage-time}
    \Bigl|T_\varepsilon-t_{\mathrm{entry}} -\frac1{\mu_\alpha}\log\frac1{a_{\mathrm e}}\Bigr| \le\frac{C_{\mathrm{local}}+\log(4/\varepsilon)}{\mu_\alpha}.
\end{equation}
\end{lemma}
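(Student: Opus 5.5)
Starting at \(t_{\mathrm{entry}}\), I would run a bootstrap argument inside the box \(\mathcal U\coloneqq\{|R-r_s|+|\delta|+a+Y\le4\zeta\}\). On \(\mathcal U\), \cref{lem:structured-escape-local-saddle-estimates} applies, and by the choice of \(\zeta\) it gives \(\dot a\ge\mu_\alpha a/2\). At entry, \eqref{eq:structured-escape-distance} gives \(|R-r_s|\le\varepsilon_{\mathrm{entry}}/\sqrt2\) and \(|\delta|\le c_0+\varepsilon_{\mathrm{entry}}\). The \(\delta\)-bound in \eqref{eq:structured-escape-entry-data} is the useful one: \(|\delta(t_{\mathrm{entry}})|\le C_{\mathrm{entry}}q_{\mathrm e}\), which is small. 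By \eqref{eq:structured-escape-entry-data}, \(N_\perp\), \(a\) and \(Y\) are also small at entry once \(c_{\mathrm{pass}}\) is small. Choose \(\varepsilon_0\le\zeta\).

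Let \(\tau\) be the first time after \(t_{\mathrm{entry}}\) at which either \(a+Y\) reaches \(2\varepsilon\) or the trajectory leaves \(\mathcal U\). On \([t_{\mathrm{entry}},\tau]\) I will derive three controls.
\begin{itemize}
\item \emph{Growth of \(a\).} Since \(\dot a\ge\mu_\alpha a/2\), the integral \(\int a\,\diff t\) is at most \(2(a(\tau)-a_{\mathrm e})/\mu_\alpha\le4\varepsilon/\mu_\alpha\). More generally, \(\int a^2\,\diff t\lesssim\varepsilon^2\).
\item \emph{Control of \(Y\).} By \eqref{eq:structured-escape-local-Y}, \(Y^2\) grows at most at rate \(\mu_\alpha+\mathrm{err}\). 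Comparing with \(a^2\), which grows at rate at least \(2\mu_\alpha-\mathrm{err}\), the ratio \(Y^2/a\) grows at rate at most about \(O(\mathrm{err})\). Hence \(Y^2\lesssim r_{\mathrm{init}}\sqrt d\,a\,e^{\int\mathrm{err}}\), and \(\int Y^2\,\diff t\) is small.
\item \emph{Radial and \(\delta\) variables.} By \eqref{eq:structured-escape-local-radial}, \(|\dot\delta|\lesssim a^2+Y^2\), so \(\delta\) changes by \(O(\varepsilon^2)\) plus a small amount. Also \(R-r_s\) decays like \(e^{-t}\) up to an additive \(O(\varepsilon^2)\).
\end{itemize}
Consequently \(\int(|R-r_s|+|\delta|+a+Y)\,\diff t\le C(1+\varepsilon/\mu_\alpha)+|\delta|\cdot(\text{duration})\).

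The main obstacle is the \(\delta\) term. The duration is about \(\mu_\alpha^{-1}\log(1/a_{\mathrm e})\), so the product \(|\delta|\log(1/a_{\mathrm e})\) must be \(O(1)\), and in fact small. This is exactly what the last bound in \eqref{eq:structured-escape-entry-data} supplies: \(q_{\mathrm e}\log(1/a_{\mathrm e})\le C_{\mathrm{entry}}\varepsilon_{\mathrm{init}}^{1-\mu_\alpha}\). To get it, I would bound \(\delta\) on the whole interval by \(C_{\mathrm{entry}}q_{\mathrm e}+C\varepsilon^2+\int Y^2\). The \(Y^2\) integral is \(\lesssim r_{\mathrm{init}}\sqrt d\cdot\varepsilon/\mu_\alpha\), which is harmless but not obviously multiplied by the log. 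That needs care: use \(\int|\delta|\,\diff t\le\int\bigl(|\delta(t_{\mathrm{entry}})|+\int_{t_{\mathrm{entry}}}^t|\dot\delta|\bigr)\diff t\). The inner integral \(\int_{t_{\mathrm{entry}}}^t a^2\) is itself \(\lesssim a(t)^2/\mu_\alpha\), and integrating that over time again gives \(O(\varepsilon^2)\). A cruder handling of this term would cost a factor of the log.

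With these bounds, \eqref{eq:structured-escape-local-a} gives
\[
\Bigl|\log\frac{a(t)}{a_{\mathrm e}}-\mu_\alpha(t-t_{\mathrm{entry}})\Bigr|\le C_{\mathrm{local}}\int(\cdots)\le C_{\mathrm{local}}\Bigl(\text{small}+\tfrac{1}{2}\Bigr)
\]
when \(\varepsilon_0\) and \(c_{\mathrm{pass}}\) are small. Here the \(e^{-t}\) decay of \(|R-r_s|\le\varepsilon_{\mathrm{entry}}\) integrates to at most \(\zeta\), and \(C_{\mathrm{local}}\zeta\le\mu_\alpha/16\). The \(\int a\) term is also small: its contribution is at most \(4C_{\mathrm{local}}\varepsilon/\mu_\alpha\le1/4\) if \(\varepsilon_0\le\mu_\alpha/(16C_{\mathrm{local}})\). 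This closes the bootstrap: the trajectory stays in \(\mathcal U\), since all box quantities stay at most \(2\varepsilon+\varepsilon_{\mathrm{entry}}+\text{small}<4\zeta\). The solution also stays in a compact set, so \(T_{\max}>\tau\) by \cref{lem:relu-kernel-regularity}.

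Finally, \(N_\perp\) is sandwiched as \(a/\sqrt2\le N_\perp\le a+Y\le a(1+o(1))\), because \(Y^2/a\) is small. Hence \(T_\varepsilon\) lies between the times at which \(a\) reaches \(\varepsilon/2\) and \(\sqrt2\varepsilon\), and both occur before \(\tau\). Inverting the log estimate gives
\[
T_\varepsilon-t_{\mathrm{entry}}=\mu_\alpha^{-1}\bigl(\log(1/a_{\mathrm e})+\log\varepsilon+O(C_{\mathrm{local}})\bigr),
\]
which is within the bound in \eqref{eq:structured-escape-local-passage-time}.
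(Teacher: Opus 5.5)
Your outline follows the paper's argument. It runs a bootstrap in a $4\zeta$-neighborhood of the saddle where $\dot a\ge\mu_\alpha a/2$, converts time integrals into $a$-integrals via $\diff t\le2\,\diff a/(\mu_\alpha a)$, applies Gr\"onwall to $Y^2/a$, uses the entry bound $q_{\mathrm e}\log(1/a_{\mathrm e})\ll1$ to pay for $\delta(t_{\mathrm{entry}})$ over the passage, and finally integrates $|\dot a/a-\mu_\alpha|$ and sandwiches $T_\varepsilon$. Your double integration for $\delta$ is a valid substitute for the paper's pointwise bound $|\delta|\le C_\delta(q_{\mathrm e}+a^2)$, which absorbs the $Y^2$ contribution via $r_{\mathrm{init}}\sqrt d\,a\le q_{\mathrm e}$.

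\textbf{The $\int|R-r_s|\,\diff t$ step does not go through as written.} You say $R-r_s$ relaxes like $e^{-(t-t_{\mathrm{entry}})}$ ``up to an additive $O(\varepsilon^2)$'', but you then integrate only the exponential part. The passage lasts $\asymp\mu_\alpha^{-1}\log(1/a_{\mathrm e})$. For fixed $\varepsilon$ this is unbounded as $r_{\mathrm{init}}\to0$ or $d\to\infty$. A uniform additive $O(\varepsilon^2)$ therefore contributes $O(\varepsilon^2\mu_\alpha^{-1}\log(1/a_{\mathrm e}))$ to the error $C_{\mathrm{local}}\int(|R-r_s|+|\delta|+a+Y)$, and \eqref{eq:structured-escape-local-passage-time} with a constant independent of $a_{\mathrm e}$ is lost. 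This is exactly the log-factor trap you identified for $\delta$. The fix is the paper's: apply variation of constants to $\frac{\diff}{\diff t}(R-r_s)=-(R-r_s)+(\dot R-(r_s-R))$ using \eqref{eq:structured-escape-local-radial}, then exchange the order of integration. This gives
\[
\int_{t_{\mathrm{entry}}}^t|R(u)-r_s|\,\diff u\le|R(t_{\mathrm{entry}})-r_s|+C_{\mathrm{local}}\int_{t_{\mathrm{entry}}}^t\bigl(a^2+Y^2\bigr)\,\diff u,
\]
and by the exponential growth of $a$ the last integral is $O\bigl((a(t)^2+r_{\mathrm{init}}\sqrt d\,a(t))/\mu_\alpha\bigr)$. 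In short, the forcing must be measured by $a(t)^2$ plus the $Y^2$ part, not by $\varepsilon^2$.

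\textbf{The bootstrap as you set it up is circular.} Your stopping time tracks only $a+Y<2\varepsilon$ and membership in $\mathcal U$. That bounds $\int Y$, $\int|\delta|$ and $\int|R-r_s|$ only by $4\zeta$ times the log-long duration. The Gr\"onwall bound $Y^2\lesssim r_{\mathrm{init}}\sqrt d\,a$ needs $\int(|R-r_s|+|\delta|+a+Y)=O(1)$, which itself uses the $Y^2/a$ bound; the $\delta$ and $R$ estimates in turn use $\int Y^2$. These bounds must therefore be bootstrap hypotheses that you show improve strictly. The paper carries $Y^2<C_Yr_{\mathrm{init}}\sqrt d\,a$, $|\delta|<C_\delta(q_{\mathrm e}+a^2)$, $|R-r_s|<\zeta$ and $\int|R-r_s|<1$ alongside $a<2\varepsilon$.

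For the lower bound on $T_\varepsilon$ you also need $N_\perp<\varepsilon$ on $[0,t_{\mathrm{entry}}]$, which comes from the last bound in \eqref{eq:structured-escape-entry-data}. With these repairs your argument closes.
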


\begin{proof}[Proof of \cref{lem:structured-escape-local-passage}]
We set \(C_Y\coloneqq4C_{\mathrm{entry}}e^{4C_{\mathrm{local}}}\), \(C_\delta\coloneqq8(C_{\mathrm{entry}} +C_{\mathrm{local}}(1+2C_Y)/\mu_\alpha)\) and \(\varepsilon_0\coloneqq\zeta/(1+2C_\delta\zeta)\). Then \(C_\delta\ge16C_{\mathrm{local}}\), since \(C_Y\ge4\) and \(\mu_\alpha<1\), and, for \(0<\varepsilon\le\varepsilon_0\),
\begin{equation}\label{eq:structured-escape-local-eps}
    2C_\delta\varepsilon^2+\varepsilon\le\zeta.
\end{equation}
We fix \(\varepsilon\in(0,\varepsilon_0]\). Since \(\mu_\alpha<1\), we have \(C_{\mathrm{local}}\zeta\le\mu_\alpha/16<1/16\); as \(C_{\mathrm{local}}\ge1\), also \(\varepsilon\le\zeta<1/16\). We choose \(c_{\mathrm{pass}}\in(0,c_{\mathrm{entry}}]\) so small that
\[
    \beta\coloneqq2C_{\mathrm{entry}}c_{\mathrm{pass}}^{1-\mu_\alpha} \le\min\Bigl\{\frac{\varepsilon}{8C_Y},\frac{\zeta}{2C_\delta}, \frac{\zeta^2}{8C_Y}\Bigr\}.
\]

Now let \(d\ge2\), and consider an initialization that satisfies the hypotheses of \cref{lem:structured-escape-entry} with \(c_{\mathrm{pass}}\) in place of \(c_{\mathrm{entry}}\). Since \(c_{\mathrm{pass}}\le c_{\mathrm{entry}}\), that lemma applies, and \(\varepsilon_{\mathrm{init}}\le c_{\mathrm{pass}}\). The first inequality in \eqref{eq:structured-escape-entry-smallness} gives \(r_{\mathrm{init}}\le r_{\mathrm{init}}\sqrt d\le c_{\mathrm{pass}}\le1\), and \(c_{\mathrm{pass}}\le c_{\mathrm{pass}}^{1-\mu_\alpha}\le\beta/2\). Since \(a_{\mathrm{init}}\le r_1(0)+r_2(0)=2r_{\mathrm{init}}\), \eqref{eq:structured-escape-entry-data} gives \(a_{\mathrm e}\le2C_{\mathrm{entry}}r_{\mathrm{init}}^{1-\mu_\alpha}\le\beta\) and \(\sup_{0\le t\le t_{\mathrm{entry}}}N_\perp(t) +q_{\mathrm e}\log(1/a_{\mathrm e})\le C_{\mathrm{entry}}\varepsilon_{\mathrm{init}}^{1-\mu_\alpha} \le\beta/2\). Since \(\beta\le\varepsilon/(8C_Y)<1/e\), we have \(\log(1/a_{\mathrm e})\ge1\). Hence
\begin{equation}\label{eq:structured-escape-local-entry-seed}
    a_{\mathrm e}\le\beta, \qquad r_{\mathrm{init}}\sqrt d\le\frac\beta2, \qquad q_{\mathrm e}\le q_{\mathrm e}\log\frac1{a_{\mathrm e}}\le\frac\beta2, \qquad \sup_{0\le t\le t_{\mathrm{entry}}}N_\perp(t)\le\frac\beta2.
\end{equation}
By \eqref{eq:structured-escape-entry-data}, \(Y(t_{\mathrm{entry}})^2\le C_{\mathrm{entry}}r_{\mathrm{init}}\sqrt d\, a_{\mathrm e}\) and \(|\delta(t_{\mathrm{entry}})|\le C_{\mathrm{entry}}q_{\mathrm e}\). Since the distance from \((w_1,w_2)\) to \(\mathcal C_{c_0}\) is at most \(\varepsilon_{\mathrm{entry}}\) at \(t_{\mathrm{entry}}\), \eqref{eq:structured-escape-distance} gives \(|R(t_{\mathrm{entry}})-r_s|\le\varepsilon_{\mathrm{entry}}/\sqrt2 =\zeta/100\).

By \eqref{eq:structured-escape-planar-signs-global}, \(z_{1,2}>0>z_{2,2}\) on \([0,T_{\max})\), so \cref{lem:structured-escape-local-saddle-estimates} applies wherever \eqref{eq:structured-escape-local-domain} holds. Let \(\tau\) be the supremum of the times \(t\in[t_{\mathrm{entry}},T_{\max})\) such that, on \([t_{\mathrm{entry}},t]\), we have \(a<2\varepsilon\), the inequality \eqref{eq:structured-escape-local-domain} holds strictly, and
\begin{equation}\label{eq:structured-escape-local-comparison-bootstrap}
    Y^2<C_Yr_{\mathrm{init}}\sqrt d\,a, \qquad |\delta|<C_\delta(q_{\mathrm e}+a^2), \qquad |R-r_s|<\zeta, \qquad \int_{t_{\mathrm{entry}}}^t|R(u)-r_s|\,\diff u<1.
\end{equation}
These conditions hold at \(t_{\mathrm{entry}}\): \(a_{\mathrm e}\le\beta<2\varepsilon\); the first two inequalities follow from the bounds at \(t_{\mathrm{entry}}\) above, because \(C_Y,C_\delta>C_{\mathrm{entry}}\); and \(|R(t_{\mathrm{entry}})-r_s|\le\zeta/100\). Moreover, by \eqref{eq:structured-escape-local-entry-seed} and the choice of \(\beta\), we have \(C_{\mathrm{entry}}q_{\mathrm e}\le C_\delta\beta/16\le\zeta/32\), \(a_{\mathrm e}\le\zeta/32\) and \(Y(t_{\mathrm{entry}})^2\le C_{\mathrm{entry}}\beta^2/2\le C_Y\beta/8 \le\zeta^2/64\), so the left side of \eqref{eq:structured-escape-local-domain} is less than \(\zeta<\nu_0r_s\) at \(t_{\mathrm{entry}}\). Hence \(\tau>t_{\mathrm{entry}}\).

We now show that, on \([t_{\mathrm{entry}},\tau)\), each of these conditions other than \(a<2\varepsilon\) holds with room to spare. On this interval, \(a<2\varepsilon\le1\), so \eqref{eq:structured-escape-local-entry-seed} and \eqref{eq:structured-escape-local-comparison-bootstrap} give \(Y^2\le C_Y\beta\le\zeta^2/8\) and \(|\delta|\le C_\delta\beta/2+4C_\delta\varepsilon^2\). Since \(C_\delta\beta/2\le\zeta/4\) and, by \eqref{eq:structured-escape-local-eps}, \(4C_\delta\varepsilon^2+2\varepsilon\le2\zeta\), it follows that \(|R-r_s|+|\delta|+a+Y\le\zeta+(\zeta/4+2\zeta)+\zeta/2\le4\zeta\). Since \(16C_{\mathrm{local}}\zeta\le\mu_\alpha\), \eqref{eq:structured-escape-local-a} gives
\begin{equation}\label{eq:structured-escape-local-a-growth}
    \dot a\ge\frac{\mu_\alpha}2a \qquad\text{on }[t_{\mathrm{entry}},\tau).
\end{equation}
Hence \(a\) is strictly increasing, and \(\diff t\le2\diff a/(\mu_\alpha a)\). Changing the variable of integration from \(t\) to \(a\), and using \(a\le1\) and \(Y^2\le C_Yr_{\mathrm{init}}\sqrt d\,a\), we obtain, for \(t_{\mathrm{entry}}\le t<\tau\),
\begin{equation}\label{eq:structured-escape-local-integral-control}
\begin{gathered}
    t-t_{\mathrm{entry}}\le\frac2{\mu_\alpha}\log\frac1{a_{\mathrm e}}, \qquad \int_{t_{\mathrm{entry}}}^ta\,\diff u\le\frac{2a(t)}{\mu_\alpha}, \qquad \int_{t_{\mathrm{entry}}}^ta^2\,\diff u\le\frac{a(t)^2}{\mu_\alpha},\\
    \int_{t_{\mathrm{entry}}}^tY^2\,\diff u \le\frac{2C_Y}{\mu_\alpha}r_{\mathrm{init}}\sqrt d\,a(t), \qquad \int_{t_{\mathrm{entry}}}^tY\,\diff u \le\frac4{\mu_\alpha}\bigl(C_Yr_{\mathrm{init}}\sqrt d\,a(t)\bigr)^{1/2}.
\end{gathered}
\end{equation}

Integrating \eqref{eq:structured-escape-local-radial} and using \eqref{eq:structured-escape-local-integral-control} together with \(r_{\mathrm{init}}\sqrt d\,a\le r_{\mathrm{init}}\sqrt d\le q_{\mathrm e}\), we obtain
\[
    |\delta(t)| \le C_{\mathrm{entry}}q_{\mathrm e} +\frac{C_{\mathrm{local}}}{\mu_\alpha} \bigl(a(t)^2+2C_Yr_{\mathrm{init}}\sqrt d\,a(t)\bigr) \le\frac{C_\delta}8\bigl(q_{\mathrm e}+a(t)^2\bigr).
\]
By \eqref{eq:structured-escape-local-comparison-bootstrap}, \eqref{eq:structured-escape-local-integral-control} and \eqref{eq:structured-escape-local-entry-seed},
\[
    \int_{t_{\mathrm{entry}}}^t(|\delta|+a+Y)\,\diff u \le\frac1{\mu_\alpha}\Bigl(2C_\delta q_{\mathrm e}\log\frac1{a_{\mathrm e}} +C_\delta a(t)^2+2a(t)+4\bigl(C_Y\beta\bigr)^{1/2}\Bigr) \le\frac{13\zeta}{2\mu_\alpha} <\frac12,
\]
where we used \(C_\delta\beta\le\zeta/2\), \(4C_\delta\varepsilon^2+4\varepsilon\le4\zeta\), \((C_Y\beta)^{1/2}\le\zeta/2\) and \(16\zeta\le\mu_\alpha\). Since \(a>0\), \(Y^2/a\) is continuously differentiable, and \eqref{eq:structured-escape-local-a} and \eqref{eq:structured-escape-local-Y} give
\[
    \frac{\diff}{\diff t}\frac{Y^2}a \le2C_{\mathrm{local}}\bigl(|R-r_s|+|\delta|+a+Y\bigr)\frac{Y^2}a.
\]
By Gr\"onwall's inequality, the last integral bound and the bound \(\int_{t_{\mathrm{entry}}}^t|R-r_s|\,\diff u<1\), we obtain
\[
    \frac{Y(t)^2}{a(t)} \le\frac{Y(t_{\mathrm{entry}})^2}{a_{\mathrm e}}e^{3C_{\mathrm{local}}} \le C_{\mathrm{entry}}e^{3C_{\mathrm{local}}}r_{\mathrm{init}}\sqrt d <\frac{C_Y}4r_{\mathrm{init}}\sqrt d.
\]
Applying the variation-of-constants formula to \(\frac{\diff}{\diff t}(R-r_s)=-(R-r_s)+(\dot R-(r_s-R))\) and using \eqref{eq:structured-escape-local-radial}, we obtain
\[
    |R(t)-r_s| \le e^{-(t-t_{\mathrm{entry}})}\frac\zeta{100} +C_{\mathrm{local}}\int_{t_{\mathrm{entry}}}^te^{-(t-u)} \bigl(a(u)^2+Y(u)^2\bigr)\,\diff u \le\frac\zeta{100}+C_{\mathrm{local}}\bigl(4\varepsilon^2+C_Y\beta\bigr) <\frac\zeta2,
\]
because \(32C_{\mathrm{local}}\varepsilon^2\le2C_\delta\varepsilon^2\le\zeta\) and \(C_{\mathrm{local}}C_Y\beta\le C_{\mathrm{local}}\zeta^2/8\le\zeta/128\). Integrating the middle expression over \([t_{\mathrm{entry}},t]\) and exchanging the order of integration, we find in the same way
\[
    \int_{t_{\mathrm{entry}}}^t|R(u)-r_s|\,\diff u \le\frac\zeta{100} +\frac{C_{\mathrm{local}}}{\mu_\alpha} \bigl(a(t)^2+2C_Yr_{\mathrm{init}}\sqrt d\,a(t)\bigr) \le\frac\zeta{100}+\frac{C_{\mathrm{local}}}{\mu_\alpha} \bigl(4\varepsilon^2+2C_Y\beta\bigr) <\frac12,
\]
by the same two bounds and \(16\zeta\le\mu_\alpha\). Finally, with the bounds just proved,
\[
    |R-r_s|+|\delta|+a+Y \le\frac\zeta2 +\Bigl(\frac{C_\delta\beta}{16}+\frac{C_\delta}2\varepsilon^2+2\varepsilon\Bigr) +\frac\zeta2 \le\frac\zeta2+\Bigl(\frac\zeta{32}+2\zeta\Bigr)+\frac\zeta2 <\frac{\nu_0r_s}2.
\]

These bounds are stronger than the conditions defining \(\tau\), other than \(a<2\varepsilon\). By \cref{lem:structured-escape-local-saddle-estimates}, also \(r_s/2\le r_i\le2r_s\) on \([t_{\mathrm{entry}},\tau)\), and \eqref{eq:structured-escape-local-a-growth} gives \(\tau-t_{\mathrm{entry}}\le(2/\mu_\alpha)\log(2\varepsilon/a_{\mathrm e})\). If \(\tau=T_{\max}\), the solution would stay in a compact subset of \(\Omega\) on a bounded interval, which \cref{lem:relu-kernel-regularity} excludes. Hence \(\tau<T_{\max}\). By continuity the stronger bounds hold at \(\tau\), so \(a(\tau)=2\varepsilon\).

Since \(a_{\mathrm e}\le\beta<\varepsilon/4\), there is a first time \(t_-\in(t_{\mathrm{entry}},\tau)\) at which \(a=\varepsilon/4\). On \([t_{\mathrm{entry}},t_-]\), the bound for \(Y^2/a\) and \(C_Y\beta\le\varepsilon/8\) give \(Y^2\le C_Y\beta\varepsilon/32\le(\varepsilon/16)^2\), so \(N_\perp\le a+Y\le5\varepsilon/16\). Together with \eqref{eq:structured-escape-local-entry-seed}, this shows that \(N_\perp<\varepsilon\) on \([0,t_-]\), so \(T_\varepsilon\ge t_-\). At \(\tau\), we have \(N_\perp^2\ge z_{1,2}^2+z_{2,2}^2\ge a^2/2=2\varepsilon^2\), so \(T_\varepsilon\le\tau<\infty\).

It remains to estimate \(t_-\) and \(\tau\). By \eqref{eq:structured-escape-local-a} and the integral bounds above, \(\int_{t_{\mathrm{entry}}}^\tau|\dot a/a-\mu_\alpha|\,\diff u \le C_{\mathrm{local}}\). Since \(a(t_-)=\varepsilon/4\) and \(a(\tau)=2\varepsilon\), it follows that
\[
    \frac1{\mu_\alpha}\Bigl(\log\frac\varepsilon4-C_{\mathrm{local}}\Bigr) \le t_--t_{\mathrm{entry}}-\frac1{\mu_\alpha}\log\frac1{a_{\mathrm e}} \le\tau-t_{\mathrm{entry}}-\frac1{\mu_\alpha}\log\frac1{a_{\mathrm e}} \le\frac1{\mu_\alpha}\bigl(\log(2\varepsilon)+C_{\mathrm{local}}\bigr).
\]
As \(t_-\le T_\varepsilon\le\tau\) and \(\log(2\varepsilon)<0\), this proves \eqref{eq:structured-escape-local-passage-time}.
\end{proof}

\begin{proof}[Proof of \cref{thm:componentwise_thin_escape}]
Let \(\varepsilon_0\) be as in \cref{lem:structured-escape-local-passage} for the given \(\alpha\) and \(C\), fix \(\varepsilon\in(0,\varepsilon_0]\), and let \(c_{\mathrm{pass}}\) be as in that lemma. We show that the theorem holds with \(c\coloneqq c_{\mathrm{pass}}/2\) (so that \(c<1\)) and
\[
    K\coloneqq C_{\mathrm{entry}} +\frac{C_{\mathrm{local}}+\log(4/\varepsilon)+\log C_{\mathrm{entry}}} {\mu_\alpha},
\]
which depend only on \(\alpha\), \(C\) and \(\varepsilon\). Let \(d\ge2\), and consider an initialization as in the theorem. Exchanging the two students if necessary, we may assume that \(z_{1,2}(0)>0>z_{2,2}(0)\); this changes none of \(T_\varepsilon\), \(r_{\mathrm{init}}\) and \(a_{\mathrm{init}}\). Since \(c\le c_{\mathrm{pass}}\), the initialization then satisfies the hypotheses of \cref{lem:structured-escape-entry} with \(c_{\mathrm{pass}}\) in place of \(c_{\mathrm{entry}}\). By \cref{lem:structured-escape-local-passage}, \(T_\varepsilon\) is finite and satisfies \eqref{eq:structured-escape-local-passage-time}. By \cref{lem:structured-escape-entry}, \(0\le t_{\mathrm{entry}}\le C_{\mathrm{entry}}\), and \eqref{eq:structured-escape-entry-data} gives \(|\log(1/a_{\mathrm e})-\log(r_{\mathrm{init}}^{\mu_\alpha}/a_{\mathrm{init}})| \le\log C_{\mathrm{entry}}\). These three bounds give \(|T_\varepsilon-\mu_\alpha^{-1}\log(r_{\mathrm{init}}^{\mu_\alpha}/a_{\mathrm{init}})| \le K\).
\end{proof}

\end{document}